\def\RSthmtxt{theorem~}\newref{thm}{name = \RSthmtxt}}
\def\RSlemtxt{lemma~}\newref{lem}{name = \RSlemtxt}}
\theoremstyle{plain}
\newtheorem{thm}{\protect\theoremname}
\theoremstyle{plain}
\newtheorem{res}[thm]{\protect\resultname}
\theoremstyle{remark}
\newtheorem{rem}[thm]{\protect\remarkname}
\theoremstyle{plain}
\newtheorem{lem}[thm]{\protect\lemmaname}
\theoremstyle{plain}
\newtheorem{prop}[thm]{\protect\propositionname}
\providecommand{\lemmaname}{Lemma}
\providecommand{\propositionname}{Proposition}
\providecommand{\remarkname}{Remark}
\providecommand{\resultname}{Result}
\providecommand{\theoremname}{Theorem}
\begin{document}
\title{Analysis of feature learning in weight-tied autoencoders\\via the
mean field lens}
\author{Phan-Minh Nguyen\thanks{The Voleon Group. The work was done while the author was a Ph.D. candidate
at the department of Electrical Engineering, Stanford University.}}
\maketitle
\begin{abstract}
Autoencoders are among the earliest introduced nonlinear models for
unsupervised learning. Although they are widely adopted beyond research,
it has been a longstanding open problem to understand mathematically
the feature extraction mechanism that trained nonlinear autoencoders
provide.

In this work, we make progress in this problem by analyzing a class
of two-layer weight-tied nonlinear autoencoders in the mean field
framework. Upon a suitable scaling, in the regime of a large number
of neurons, the models trained with stochastic gradient descent are
shown to admit a mean field limiting dynamics. This limiting description
reveals an asymptotically precise picture of feature learning by these
models: their training dynamics exhibit different phases that correspond
to the learning of different principal subspaces of the data, with
varying degrees of nonlinear shrinkage dependent on the $\ell_{2}$-regularization
and stopping time. While we prove these results under an idealized
assumption of (correlated) Gaussian data, experiments on real-life
data demonstrate an interesting match with the theory.

The autoencoder setup of interests poses a nontrivial mathematical
challenge to proving these results. In this setup, the ``Lipschitz''
constants of the models grow with the data dimension $d$. Consequently
an adaptation of previous analyses requires a number of neurons $N$
that is at least exponential in $d$. Our main technical contribution
is a new argument which proves that the required $N$ is only polynomial
in $d$. We conjecture that $N\gg d$ is sufficient and that $N$
is necessarily larger than a data-dependent intrinsic dimension, a
behavior that is fundamentally different from previously studied setups.
\end{abstract}
\tableofcontents{}

\section{Introduction}

The recent surging interest in neural networks and the field deep
learning arguably started with the creation of a training technique
\cite{hinton2006fast,hinton2006reducing,ranzato2007efficient,bengio2007greedy}.
Underlying this technique was a class of nonlinear unsupervised learning
models, known as autoencoders \cite{rumelhart1985feature,ackley1985learning,rumelhart1985learning}.
During those early days of deep learning, this class of models again
played a key role in another major milestone, the famous ``Google
cat'' result \cite{le2012building}, where autoencoders were shown
to be able to ``detect'' high-level concepts such as cat faces from
a large unlabeled data set of images downloaded from the Internet.
As the field has become more mature, autoencoders are still found
to be useful in applications such as image processing \cite{mousavi2015deep}
and channel coding \cite{jiang2019turbo}. The models are also found
to display biological plausibility: when applied to natural movies,
they show certain resemblances with monkeys' retina after training
\cite{ocko2018emergence}. Yet despite more than a decade of progresses,
a solid mathematical foundation to understand the behavior during
training of these models is still missing. How do their training dynamics
look like? What data representation is being captured over the course
of training? These questions are challenging due to the complex, highly
non-convex nature of the training process, but an answer may give
a hint at how deep learning works and beyond.

In this paper, we study one such model in an analytically tractable
setting, while maintaining several important features of these models.
Namely, we consider a weight-tied two-layer autoencoder of the following
form:
\[
\hat{\boldsymbol{x}}\left(\boldsymbol{x};\boldsymbol{W}\right)=\frac{1}{N}\boldsymbol{W}^{\top}\sigma\left(\boldsymbol{W}\boldsymbol{x}\right),
\]
where $\boldsymbol{x}$ is the input, $\boldsymbol{W}\in\mathbb{R}^{N\times d}$
is the weight matrix, and $\sigma$ is the entry-wise nonlinear activation
function. Here $N$ is known as the width, or the number of neurons.
The weight-tying constraint is enforced by making the second layer's
weight the transposition of $\boldsymbol{W}$ the first layer's weight.
The model is trained by a stochastic gradient descent rule on the
$\ell_{2}$-regularized autoencoding problem of the following form:
\[
\min_{\boldsymbol{W}}\sum_{\boldsymbol{x}\in\text{ training set}}\left\Vert \boldsymbol{x}-\hat{\boldsymbol{x}}\left(\boldsymbol{x};\boldsymbol{W}\right)\right\Vert _{2}^{2}+\lambda_{{\rm reg}}\left\Vert \boldsymbol{W}\right\Vert _{{\rm F}}^{2},
\]
i.e. minimization of the squared loss with $\ell_{2}$-regularization,
where $\left\Vert \cdot\right\Vert _{{\rm F}}$ denotes the Frobenius
norm. We refer to Section \ref{sec:Overview} for the exact forms
of the model and its training algorithm. The training process learns
$\boldsymbol{W}$ and forms an encoding mapping $\boldsymbol{x}\mapsto\sigma\left(\boldsymbol{W}\boldsymbol{x}\right)$,
which gives a representation for each data point $\boldsymbol{x}$.
It is easy to see that the above autoencoding problem is non-convex.
In the special case where $\lambda_{{\rm reg}}=0$, one potential
solution is the identity mapping $\hat{\boldsymbol{x}}\left(\boldsymbol{x}\right)=\boldsymbol{x}$.
However even in that case, it is unclear from the optimization point
of view whether the training dynamics can find this solution. More
generally, from a representation learning point of view, there is
an interest to understand what $\boldsymbol{W}$ is learned in the
process.

To analyze the training dynamics of this model, we draw insights from
a recent theoretical advance, namely the mean field theory \cite{mei2018mean,mei2019mean,nguyen2019mean,nguyen2020rigorous}.
In particular, we consider over-complete autoencoders, which are ones
with very large $N$. It is crucial to note that the class of over-complete
weight-tied autoencoders is a standard architecture and has been found
to learn interesting features with appropriate training \cite{vincent2010stacked}.
When $N\to\infty$, under suitable scaling, the training dynamics
is shown to be precisely captured by a meaningful limit, known as
the mean field limit. This limit reveals interesting insights into
the inner-workings of the model. Indeed we shall see that the trained
autoencoder can exhibit a spectrum of behaviors: with suitable regularization,
the learned mapping $\boldsymbol{x}\mapsto\boldsymbol{W}\boldsymbol{x}$
performs a form of principal subspace selection via shrinkage with
a cut-off effect, whereas an unregularized autoencoder learns \textit{almost}
the identity mapping without any representation learning. Furthermore
the training dynamics exhibits a separation in time: the model progressively
learns from subspaces with higher importance -- relative to regularization
-- to less important ones. These are shown to hold for various nonlinear
activations $\sigma$, including the popular rectified linear unit
(ReLU) activation. While our theory builds up on an idealized setting
where the data $\boldsymbol{x}$ is drawn from a correlated zero-mean
Gaussian source, experiments on real-life data demonstrate a striking
agreement between the theory and empirical results. We conjecture
that a universality phenomenon takes place: in our autoencoder setup,
several properties of the learning dynamics are asymptotically the
same across a wide array of data distributions that have zero mean
and share the same covariance structure.

The mean field limit, roughly speaking, is an infinite-$N$ approximation
of the model. An important question is: how large should the number
of neurons $N$ be? It is known that under certain assumptions, one
only requires $N\gg O\left(1\right)$ independent of the data dimension
$d$ \cite{mei2019mean}. Unfortunately those assumptions fail to
hold in the present setting. A key fact is, unlike previous works,
here the ``Lipschitz'' constant of the model\footnote{Strictly speaking, our autoencoder model is non-Lipschitz in the parameter,
and neither is its initialization chosen to make the model effectively
Lipschitz over any finite training period as done in \cite{mei2019mean}.
This adds more complications to the analysis. The statement may be
interpreted as that the model is locally Lipschitz with a constant
that grows with $d$. Without taking the statement in the strict sense,
we stress on the underlying difficulty dealing with the dependency
on $d$.} grows with $d$. This not only poses a major mathematical challenge
but also leads to a fundamentally different result. A naive adaption
of previous analyses would lead to $N\gg\exp\left(d\right)$ undesirably.
A major technical feat of the paper is to show that one only requires
$N\gg{\rm poly}\left(d\right)$. Proving this result necessitates
a new argument which, unlike previous analyses, crucially exploits
the structure of the gradient flow learning dynamics. In fact, we
prove so in a more general framework of a broader class of two-layer
neural networks. Furthermore we believe that on one hand, $N\gg d$
is generally sufficient, and under special circumstances, so is $N\gg d_{{\rm eff}}$,
where the quantity $d_{{\rm eff}}$ is characteristic of the data
distribution. In general, $d_{{\rm eff}}$ can be on the same order
of or much smaller than $d$. On the other hand, we also conjecture
that $N\gg d_{{\rm eff}}$ is necessary, and hence unlike previous
settings \cite{mei2018mean,mei2019mean}, here it is generally insufficient
to have $N\gg O\left(1\right)$.

It has been known for a long time that under-complete weight-untied
autoencoders with a linear activation essentially perform principal
component analysis, if optimized with the squared loss \cite{bourlard1988auto,baldi1989neural}.
In our setting, at a high level, the autoencoder after training has
a similar effect with nonlinear shrinkage. Furthermore when the activation
function is the ReLU, the model also tends to learn a linear mapping,
and in the absence of regularization, this linear mapping is precisely
the identity mapping. This latter point may seem at odds with the
expectation that the weight-tying constraint will force the autoencoder
to learn a nonlinear mapping by discouraging it from ``\textit{stay(ing)
in the linear regime of its nonlinearity without paying a high price
in reconstruction error}'' -- quoted from the influential work \cite{vincent2010stacked}.
In fact, the role of the training dynamics, typically missing from
the discussions in those works, is important in our case. A key lesson
from our analysis is the following: the over-complete weight-tied
autoencoder, trained with the random weight initialization as in the
usual practice and the $\ell_{2}$-regularized squared loss, has the
tendency to maintain rotational invariance along its gradient descent
trajectory. Even though the pre-activation values of individual neurons
substantially occupy the nonlinear region of the activation function
$\sigma$, due to rotational invariance, the resultant model nevertheless
tends to favor less complex mappings. When the activation is the ReLU
which is a homogeneous function, the result is then a linear mapping.
When a generic nonlinear activation is used, the result is in general
a mildly nonlinear one. This situation is to be contrasted with under-complete
linear autoencoders, in which case the optimization landscape is benign
with essentially one unique (local and also global) minimizer \cite{baldi1989neural}
and therefore the training dynamics is not a crucial factor. In short,
while the resultant unsupervised learning effects are similar, the
causes are drastically different in nature. Of course, even this relatively
simple story has not been shown before for nonlinear over-complete
autoencoders. We note that the more challenging bulk of the work is
actually to prove that rotational invariance is maintained under the
requirement $N\gg{\rm poly}\left(d\right)$.

Finally let us mention two important directions for future studies:
(i) The effect of regularization methods beyond $\ell_{2}$-regularization.
We have focused on $\ell_{2}$-regularization, given the amount of
technical works that go into proving the results. Technical ideas
in this work should be applicable to setups with more sophisticated
regularizations. (ii) The learning dynamics of over-complete autoencoders
with more than two layers. New ideas and advances in the mean field
theory for multilayer networks \cite{nguyen2020rigorous,pham2020note}
could be useful in this direction.

\subsection{Relation with the literature}

\paragraph{Theoretical studies of autoencoders.}

Autoencoders and related architectures have been studied from a variety
of angles: representational power \cite{le2008representational,montufar2011refinements},
optimal autoencoding mappings in vanishing regularization \cite{alain2014regularized},
sparsity properties \cite{arpit2015regularized}, landscape properties
\cite{rangamani2018sparse,kunin2019loss}, initialization with random
weights \cite{li2018on}, memorization \cite{radhakrishnan2018memorization,zhang2019identity,radhakrishnan2020overparameterized}.
Closely related to our work are the recent works on the training dynamics
of autoencoders \cite{nguyen2019dynamics,nguyen2019benefits,gidel2019implicit,bao2020regularized}.
In particular, \cite{nguyen2019dynamics} studies the gradient descent
dynamics of weight-tied shallow under-complete autoencoders that are
initialized in a local neighborhood of certain assumed ground truth
models; \cite{nguyen2019benefits} studies weight-untied shallow over-complete
autoencoders in the lazy training regime \cite{chizat2019lazy} in
which the weights hardly evolve during training; \cite{gidel2019implicit}
establishes the exact solution to the gradient descent dynamics of
unregularized shallow autoencoders with a linear activation; \cite{bao2020regularized}
studies the task of recovering the underlying data structure with
suitably regularized shallow under-complete linear autoencoders and
gradient-based algorithms. Unlike these works, our work studies the
stochastic gradient descent training of weight-tied over-complete
autoencoders with random initializations and nonlinear activations
in a regime where the weights evolve nonlinearly. Our theoretical
finding, that the autoencoder can perform from some to zero degree
of representation learning depending on how it is regularized, complements
the recent literature on memorization in autoencoders \cite{radhakrishnan2018memorization,zhang2019identity,radhakrishnan2020overparameterized}.

Several features of the learning dynamics that we show for our autoencoder
setups resemble the behaviors of linear neural networks \cite{saxe2013exact,advani2017high,saxe2019mathematical,gidel2019implicit}
and nonlinear networks under very strong assumptions \cite{combes2018learning}.
Given the strong recent interest in analyses of the learning trajectory
of neural networks, our work solidifies and furthers understanding
in this research area.

\paragraph{Mean field theory of neural networks.}

The mean field view on the training dynamics of neural networks has
enjoyed numerous efforts from multiple groups of authors, firstly
with two-layer networks \cite{nitanda2017stochastic,mei2018mean,chizat2018,rotskoff2018neural,sirignano2018mean}
and more recently with multilayer ones \cite{nguyen2019mean,araujo2019mean,nguyen2020rigorous}.
This view has found successes in proving global convergence guarantees
\cite{mei2018mean,chizat2018,rotskoff2018neural,javanmard2019analysis,nguyen2020rigorous,pham2020note,wojtowytsch2020convergence,fang2020modeling},
inspiring new training algorithms \cite{wei2019regularization,rotskoff2019global},
studying stability properties of the trained networks \cite{shevchenko2019landscape},
other architectures which are compositions of multiple mean field
neural networks \cite{ma2019machine,lu2020mean} and other machine
learning contexts \cite{agazzi2020global}. It is associated with
a particular choice of scaling as one allows the number of neurons
to tend to infinity. The matter of scaling turns out to be important,
as found by several recent works \cite{chizat2019lazy,geiger2019disentangling,ghorbani2020neural,ma2020quenching}.
A key feature of the mean field scaling is that the parameters are
able to evolve in a nonlinear non-degenerate fashion and the network
is expected to enjoy meaningful learning. On the other hand, the analysis
of the mean field limit is typically challenging.

Our work follows this long line of works with two new contributions.
Firstly in these previous works, the mean field limit is typically
described as the solution of a certain differential equation, and
no specific high-dimensional setup has been found with an explicit
closed-form solution. The weight-tied ReLU autoencoder we study provides
one such example: its completely explicit solution allows to demonstrate
properties that are previously unproven for nonlinear neural networks
in the mean field limit. Secondly we provide a framework for a class
of two-layer networks with structural assumptions that are not covered
by previous works. These assumptions pose a highly nontrivial technical
challenge. We overcome it with a new argument on top of the usual
propagation of chaos argument \cite{sznitman1991topics} that has
been routinely used in previous analyses \cite{mei2018mean,mei2019mean,nguyen2020rigorous}.
We also differ by answering a different set of questions in unsupervised
learning. For example, previous studies take a keen interest in the
optimization aspects of the training process of neural networks, in
particular global convergence guarantees and convergence rates (see
e.g. \cite{chizat2018,mei2018mean,nguyen2020rigorous,pham2020note,javanmard2019analysis,chizat2019sparse}).
In our specific setting, these questions are straightforwards thanks
to the explicit solution to the mean field limit, but are not the
focus of our study.

\subsection{Organization}

We give an overview of our main contributions and their analyses in
Section \ref{sec:Overview}. This section is the more conceptual part
of the paper. As introduced, our work presents two main contributions:
a mean field limit result for a class of two-layer neural networks,
and its application to the weight-tied autoencoders. We formally state
and prove the first contribution in Section \ref{sec:MF_two-layers}
and the second contribution in Section \ref{sec:Autoencoders}. These
latter two sections are the more technical part of the paper.

\subsection{Notations}

Dimensions play an important role in this work. We shall routinely
mention a dimension vector $\mathfrak{Dim}=\left(D,D_{{\rm in}},D_{{\rm out}}\right)$
in the context of more general two-layer neural networks (Sections
\ref{subsec:contrib-MF} and \ref{sec:MF_two-layers}), in which $D$,
$D_{{\rm in}}$ and $D_{{\rm out}}$ are some dimension quantities.
When specialized to the specific context of autoencoders which involves
only one dimension quantity $d$ (Sections \ref{subsec:contrib-Gaussian-data},
\ref{subsec:contrib-real-data} and \ref{sec:Autoencoders}), $\mathfrak{Dim}=\left(D,D_{{\rm in}},D_{{\rm out}}\right)=\left(d,d,d\right)$.
We reserve the notations $\kappa$, $\kappa_{*}$, $\kappa_{1}$,
$\kappa_{2}$, etc for constant parameters that depend exclusively
on $\mathfrak{Dim}$.

We use $C$ for different constants which may differ at different
instances of use and do not depend on the number of neurons $N$,
the learning rate $\epsilon$, and the dimension vector $\mathfrak{Dim}=\left(D,D_{{\rm in}},D_{{\rm out}}\right)$.
The exact dependency of $C$ shall be clarified in the specific contexts.
We shall also write $a\lesssim b$, $a\simeq b$ and $a\gtrsim b$
as shorthands for $a\leq Cb$, $a=Cb$ and $a\geq Cb$ respectively
for such constants $C$.

For a positive integer $n$, we let $\left[n\right]$ denote the set
$\left\{ 1,2,...,n\right\} $. For a set $S$, we use $\text{Unif}\left(S\right)$
to denote the uniform distribution over $S$. We use $\left\Vert \cdot\right\Vert _{2}$
to denote the usual Euclidean norm for a vector, and $\left\Vert \cdot\right\Vert _{{\rm op}}$
and $\left\Vert \cdot\right\Vert _{{\rm F}}$ for the operator norm
and the Frobenius norm of a matrix. For a matrix $\boldsymbol{A}$,
we let ${\rm Proj}_{\boldsymbol{A}}$ be the projection onto the subspace
spanned by columns of $\boldsymbol{A}$, and ${\rm Proj}_{\boldsymbol{A}}^{\perp}=\boldsymbol{I}-{\rm Proj}_{\boldsymbol{A}}$
its orthogonal projection. For three vectors $\boldsymbol{u}$, $\boldsymbol{a}$
and $\boldsymbol{b}$, We write $\boldsymbol{u}\in\left[\boldsymbol{a},\boldsymbol{b}\right]$
to mean that $\boldsymbol{u}$ lies on the segment between $\boldsymbol{a}$
and $\boldsymbol{b}$, i.e. $\boldsymbol{u}=c\boldsymbol{a}+\left(1-c\right)\boldsymbol{b}$
for some $c\in\left[0,1\right]$. We let ${\cal B}_{d}\left(r\right)$
denote the ball $\left\{ \boldsymbol{u}\in\mathbb{R}^{d}:\;\left\Vert \boldsymbol{u}\right\Vert _{2}\leq r\right\} $.

For a topological space $S$, we use $\mathscr{P}\left(S\right)$
to denote the set of probability measures over $S$ (with its associated
Borel sigma-algebra being implicitly defined). We reserve the letter
$g$ for a standard Gaussian random variable $g\sim\mathsf{N}\left(0,1\right)$.
We use ${\cal P}$ to denote the data distribution, and $\mathbb{E}_{{\cal P}}$
to denote the expectation with respect to (w.r.t.) ${\cal P}$. For
sub-Gaussian and sub-exponential random variables, we use $\left\Vert \cdot\right\Vert _{\psi_{2}}$
and $\left\Vert \cdot\right\Vert _{\psi_{1}}$ to denote their respective
Orlicz norms (see Appendix \ref{subsec:Sub-Gaussian-RV} for definitions).

For a function $f\left(u_{1},...,u_{k}\right)$, we use $\partial_{j}f$
or $\partial_{u_{j}}f$ (respectively, $\nabla_{j}f$ or $\nabla_{u_{j}}f$)
to denote the partial derivative (respectively, gradient) w.r.t. the
$j$-th variable $u_{j}$. For a function $f:\;\mathbb{R}^{n}\times\mathbb{R}^{m}\to\mathbb{R}$
and its partial gradient $\nabla_{1}f$ w.r.t. the first variable,
with an abuse of notations, we let $\nabla_{111}^{3}f$ be the second-order
Fr\textipa{\' e}chet partial derivative of $\nabla_{1}f$ w.r.t.
the first variable, i.e. $\nabla_{111}^{3}f\equiv\nabla_{11}^{2}\left(\nabla_{1}f\right)$.
For each $\boldsymbol{u}_{1}\in\mathbb{R}^{n}$ and $\boldsymbol{u}_{2}\in\mathbb{R}^{m}$,
we define the operator norm of $\nabla_{111}^{3}f\left[\boldsymbol{u}_{1},\boldsymbol{u}_{2}\right]:\;\mathbb{R}^{n}\times\mathbb{R}^{n}\to\mathbb{R}^{n}$
-- which is a linear operator -- as follows:
\begin{align*}
\left\Vert \nabla_{111}^{3}f\left[\boldsymbol{u}_{1},\boldsymbol{u}_{2}\right]\right\Vert _{{\rm op}} & =\sup_{\boldsymbol{a},\boldsymbol{b},\boldsymbol{c}\in\mathbb{S}^{n-1}}\left\langle \boldsymbol{c},\nabla_{111}^{3}f\left[\boldsymbol{u}_{1},\boldsymbol{u}_{2}\right]\left(\boldsymbol{a},\boldsymbol{b}\right)\right\rangle .
\end{align*}
With an abuse of notations, we also use $\nabla_{111}^{3}f\left[\boldsymbol{u}_{1},\boldsymbol{u}_{2}\right]$
to denote a tensor in $\left(\mathbb{R}^{n}\right)^{\otimes3}$ such
that
\[
\left\langle \boldsymbol{c},\nabla_{111}^{3}f\left[\boldsymbol{u}_{1},\boldsymbol{u}_{2}\right]\left(\boldsymbol{a},\boldsymbol{b}\right)\right\rangle =\left\langle \nabla_{111}^{3}f\left[\boldsymbol{u}_{1},\boldsymbol{u}_{2}\right],\boldsymbol{a}\otimes\boldsymbol{b}\otimes\boldsymbol{c}\right\rangle .
\]
We define similarly: $\nabla_{121}^{3}f$ is the Fr\textipa{\' e}chet
cross partial derivative of $\nabla_{1}f$ w.r.t. the second variable
and then the first variable, and $\nabla_{122}^{3}f$ is the second-order
Fr\textipa{\' e}chet partial derivative of $\nabla_{1}f$ w.r.t.
the second variable, i.e. $\nabla_{121}^{3}f\equiv\nabla_{21}^{2}\left(\nabla_{1}f\right)$
and $\nabla_{122}^{3}f\equiv\nabla_{22}^{2}\left(\nabla_{1}f\right)$.

\subsection*{Acknowledgment}

The work was partially supported by grants NSF CCF-1714305 and ONR
N00014-18-1-2729. We would like to thank Andrea Montanari for initiating
the research and many helpful discussions, Marco Mondelli for several
exploratory discussions in the early stage of the project, and Huy
Tuan Pham for brilliant suggestions, one of which is the reference
\cite{goodman1990convergence}. This work forms a part of the author's
Ph.D. dissertation submitted to Stanford University in June 2020.

\section{Main contributions: An overview\label{sec:Overview}}

\subsection{Dynamics of weight-tied autoencoders: Gaussian data\label{subsec:contrib-Gaussian-data}}

We consider a weight-tied autoencoder with the following form:
\begin{equation}
\hat{\boldsymbol{x}}_{N}\left(\boldsymbol{x};\Theta\right)=\frac{1}{N}\sum_{i=1}^{N}\kappa\boldsymbol{\theta}_{i}\sigma\left(\left\langle \kappa\boldsymbol{\theta}_{i},\boldsymbol{x}\right\rangle \right),\qquad\kappa=\sqrt{d},\label{eq:ae_simplified}
\end{equation}
where $\boldsymbol{x}\in\mathbb{R}^{d}$ is the input, $\Theta=\left(\boldsymbol{\theta}_{i}\right)_{i\leq N}$
is the collection of weights $\boldsymbol{\theta}_{i}\in\mathbb{R}^{d}$.
Here $N$ is the number of neurons and $d$ is the dimension. This
is the usual weight-tied autoencoder without the bias. The factor
$\kappa=\sqrt{d}$ represents a scaling w.r.t. the dimension $d$,
which we shall clarify later. The data $\boldsymbol{x}$ is distributed
according to $\boldsymbol{x}\sim{\cal P}$. We train the network with
stochastic gradient descent (SGD). At each SGD iteration $k$, we
draw independently the data $\boldsymbol{x}^{k}\sim{\cal P}$. Let
$\Theta^{k}=\left(\boldsymbol{\theta}_{i}^{k}\right)_{i=1}^{N}$ be
the collection of weights at iteration $k$. Given an initialization
$\Theta^{0}$, we perform the SGD update w.r.t. the squared loss with
$\ell_{2}$-regularization:
\[
\boldsymbol{\theta}_{i}^{k+1}=\boldsymbol{\theta}_{i}^{k}-\epsilon N\nabla_{\boldsymbol{\theta}_{i}}{\rm Loss}\left(\boldsymbol{x}^{k};\Theta^{k}\right),\qquad i=1,...,N,
\]
with the training loss being
\[
{\rm Loss}\left(\boldsymbol{x};\Theta\right)=\frac{1}{2}\left\Vert \hat{\boldsymbol{x}}_{N}\left(\boldsymbol{x};\Theta\right)-\boldsymbol{x}\right\Vert _{2}^{2}+\frac{\lambda}{N}\sum_{i=1}^{N}\left\Vert \boldsymbol{\theta}_{i}\right\Vert _{2}^{2}.
\]
Here $\epsilon>0$ is the learning rate and $\lambda\geq0$ is the
regularization strength. We shall concern with the population squared
loss as a measure of reconstruction quality (which we shall call the
\textit{reconstruction error}):
\[
{\rm RecErr}\left(\Theta\right)=\mathbb{E}_{{\cal P}}\left\{ \frac{1}{2}\left\Vert \hat{\boldsymbol{x}}_{N}\left(\boldsymbol{x};\Theta\right)-\boldsymbol{x}\right\Vert _{2}^{2}\right\} ,
\]
although the training loss additionally includes the $\ell_{2}^{2}$-regularization
penalty.

We note two key differences that set the mean field regime apart from
the usual scalings: the factor $1/N$ in $\hat{\boldsymbol{x}}_{N}\left(\boldsymbol{x};\Theta\right)$,
and the factor $N$ being multiplied to the gradient update of $\boldsymbol{\theta}_{i}^{k+1}$.

\subsubsection{Setting with ReLU activation: SGD dynamics\label{subsec:contrib-AE-ReLU-SGD}}

Our first result concerns with the SGD dynamics in the case of ReLU
activation.
\begin{res}[Autoencoder with ReLU -- Informal and simplified]
\label{res:ReLU_setting_simplified}Consider the autoencoder, as
described in Section \ref{subsec:contrib-Gaussian-data}, in the following
setting. The data $\boldsymbol{x}$ assumes a Gaussian distribution
with the following mean and covariance:
\[
\mathbb{E}\left\{ \boldsymbol{x}\right\} =\boldsymbol{0},\qquad\mathbb{E}\left\{ \boldsymbol{x}\boldsymbol{x}^{\top}\right\} =\frac{1}{d}\boldsymbol{R}{\rm diag}\left(\Sigma_{1}^{2},...,\Sigma_{d}^{2}\right)\boldsymbol{R}^{\top},
\]
where $\boldsymbol{R}$ is an orthogonal matrix, $\Sigma_{1}\geq...\geq\Sigma_{d}>0$,
$\Sigma_{1}\leq C$ and $\Sigma_{d}\geq C\kappa_{*}$ for some $\kappa_{*}=1/{\rm poly}\left(d\right)$.
The activation $\sigma$ is the ReLU: $\sigma\left(a\right)=\max\left(0,a\right)$.
The regularization strength $0\leq\lambda\leq C$. The initialization
$\Theta^{0}=\left(\boldsymbol{\theta}_{i}^{0}\right)_{i\leq N}\sim_{{\rm i.i.d.}}\mathsf{N}\left(\boldsymbol{0},r_{0}^{2}\boldsymbol{I}_{d}/d\right)$
for a non-negative constant $r_{0}\leq C$.

Then for $N\gg{\rm poly}\left(d\right)$, $\epsilon\ll1/{\rm poly}\left(d\right)$
and a finite $t\in\mathbb{N}\epsilon$, $t\leq C$, with high probability,
\begin{align}
\frac{1}{N}\sum_{i=1}^{N}\delta_{\boldsymbol{\theta}_{i}^{t/\epsilon}} & \approx\mathsf{N}\left(\boldsymbol{0},\frac{1}{d}\boldsymbol{R}{\rm diag}\left(r_{1,t}^{2},...,r_{d,t}^{2}\right)\boldsymbol{R}^{\top}\right),\label{eq:thm_ReLU_simplified_1}\\
{\rm RecErr}\left(\Theta^{t/\epsilon}\right) & \approx\frac{1}{2d}\sum_{i=1}^{d}\Sigma_{i}^{2}\left(1-\frac{1}{2}r_{i,t}^{2}\right)^{2}.\label{eq:thm_ReLU_simplified_2}
\end{align}
Here $r_{i,t}\geq0$ satisfies
\begin{equation}
r_{i,t}^{2}=\frac{2r_{0}^{2}\eta_{i}}{r_{0}^{2}\Sigma_{i}^{2}-\left(r_{0}^{2}\Sigma_{i}^{2}-2\eta_{i}\right)e^{-2\eta_{i}t}},\qquad\eta_{i}=\Sigma_{i}^{2}-2\lambda.\label{eq:thm_ReLU_simplified_3}
\end{equation}
In the above, the constants $C$ do not depend on $N$, $\epsilon$
or $d$.
\end{res}

Exact details can be found in the statement of Theorem \ref{thm:ReLU_setting}.

Result \ref{res:ReLU_setting_simplified} describes the behavior of
the weights, as well as the reconstruction error, of the autoencoder
with ReLU activation under Gaussian data (with non-identity covariance).
These are governed by the continuous-time dynamics of the quantities
$\left(r_{i,t}\right)_{i\leq N}$. Observe that $r_{i,t}=O\left(1\right)$,
and hence the right-hand side of Eq. (\ref{eq:thm_ReLU_simplified_1})
suggests that $\left\Vert \boldsymbol{\theta}_{i}^{t/\epsilon}\right\Vert _{2}=O\left(1\right)$.
This is the effect of the scaling by $\kappa$ (see also Section \ref{subsec:Autoencoder-example}).
Likewise Eq. (\ref{eq:thm_ReLU_simplified_2}) suggests that the reconstruction
error remains $O\left(1\right)$ throughout the training dynamics.
Notably the requirement on $N$ and $\epsilon$ is relatively mild:
we only require $N\gg{\rm poly}\left(d\right)$ and $\epsilon\ll1/{\rm poly}\left(d\right)$.
We believe that the requirement $\kappa_{*}=1/{\rm poly}\left(d\right)$
could be relaxed (for instance, $\kappa_{*}$ could decay faster than
a polynomial rate while still allowing $N\gg{\rm poly}\left(d\right)$
and $\epsilon\ll1/{\rm poly}\left(d\right)$), but proving this is
not possible with our current analysis.

Eq. (\ref{eq:thm_ReLU_simplified_1}) further elucidates the role
of the weights: roughly speaking, each $\boldsymbol{\theta}_{i}^{t/\epsilon}$
performs a \textit{random rescaled projection} onto the principal
subspaces of the data distribution ${\cal P}$. Here we recall each
1-dimensional principal subspace aligns with the direction of a column
of $\boldsymbol{R}$, the matrix of eigenvectors of the data covariance.
As such, $r_{i,t}$ indicates the \textit{rescaling factor} at iteration
$t/\epsilon$, corresponding to the $i$-th principal subspace.

We now make several more detailed observations from Result \ref{res:ReLU_setting_simplified}:

\paragraph*{Independent evolution of the rescaling factors.}

We observe from Eq. (\ref{eq:thm_ReLU_simplified_3}) that for each
$i$, the evolution of $r_{i,t}$ does not depend on other indices.
As such, the evolution of one principal subspace is decoupled from
others. This fact is particular to the ReLU and does not hold for
generic nonlinear activations, as discussed in Section \ref{subsec:contrib-bounded}.

\paragraph*{Bad stationary point at the origin.}

If $r_{0}=0$, $r_{i,t}=0$ for all $t$. Hence the origin is a bad
stationary point, which one must initialize away from in order for
meaningful learning to take place. This situation is drastically different
from 1-hidden-layer autoencoders\footnote{More specifically, the work \cite{radhakrishnan2018memorization}
considers an autoencoder of the form $\hat{\boldsymbol{x}}=\sigma\left(\boldsymbol{W}\boldsymbol{x}\right)$,
where $\boldsymbol{x}\in\mathbb{R}^{d}$ is the input, $\boldsymbol{W}\in\mathbb{R}^{d\times d}$
is the weight matrix, $\sigma$ is the activation function and $\hat{\boldsymbol{x}}\in\mathbb{R}^{d}$
is the output.} \cite{radhakrishnan2018memorization}.

\paragraph*{Sigmoidal evolution.}

The evolution curve of $r_{i,t}$ takes a sigmoidal shape, since $r_{i,t}$
changes exponentially with $t$ according to Eq. (\ref{eq:thm_ReLU_simplified_3}).
This suggests that the reconstruction error displays a shape that
superimposes several sigmoidal curves of different changing speeds
and magnitudes. See Fig. \ref{fig:ReLU_twoblks_noreg_loss_theta}
for illustration.

\begin{figure}
\begin{centering}
\subfloat[]{\begin{centering}
\includegraphics[width=0.45\columnwidth]{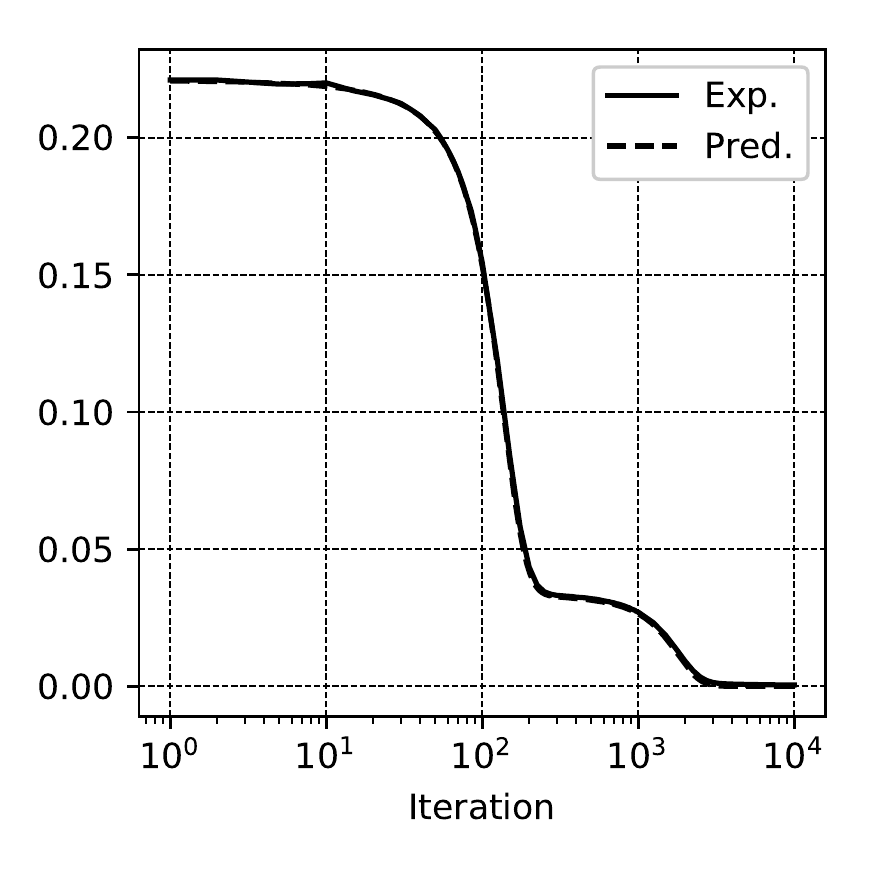}
\par\end{centering}
}\subfloat[]{\begin{centering}
\includegraphics[width=0.45\columnwidth]{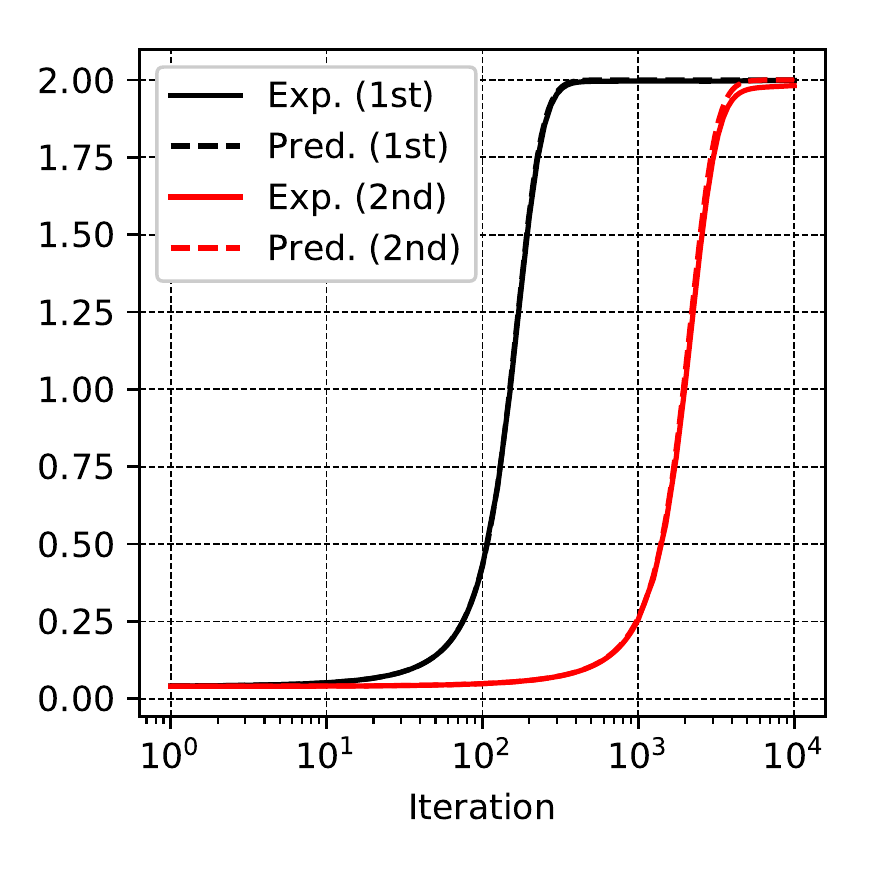}
\par\end{centering}
}
\par\end{centering}
\caption{Autoencoder with ReLU activation and Gaussian data, no regularization
(Result \ref{res:ReLU_setting_simplified}). Setup: $d=200$, $\Sigma_{1}^{2}=...=\Sigma_{60}^{2}=1.3$
and $\Sigma_{61}^{2}=...=\Sigma_{200}^{2}=0.1$, $\boldsymbol{R}=\boldsymbol{I}_{d}$,
$\lambda=0$, $r_{0}=0.2$, $\epsilon=0.01$ and $N=10000$. (a):
the reconstruction error versus the SGD iteration. (b): the normalized
squared norm of the first 60-dimensional subspace's weight (tagged
``1st'') and the second 140-dimensional subspace's weight (tagged
``2nd''). Here ``Exp.'' indicates the simulation results, and
``Pred.'' indicates the theoretical prediction. For more details,
see Appendix \ref{sec:Simulation-details}. We observe that the eventual
reconstruction error is almost zero, and the normalized squared norms
of the two subspaces' weights both tend to $2$ eventually. We also
observe that the reconstruction error, as a function of time, displays
a shape of two sigmoids that are superimposed onto each other, have
different magnitudes, have some time lag between each other and evolve
correspondingly to the normalized squared norm of the subspaces. The
learning speed of the second subspace is slower, since it has smaller
$\Sigma_{i}$. \\Early stopping can perform representation learning
in this example. A reasonable choice for early stopping is to stop
at the iteration $5\times10^{2}$. In particular, the first subspace
would then be reconstructed, whereas the second subspace has its corresponding
weight norm being small and hence is suppressed.}

\label{fig:ReLU_twoblks_noreg_loss_theta}
\end{figure}

\paragraph*{No regularization equals (efficient) learning of the identity.}

In the case $\lambda=0$ (no regularization) and $r_{0}>0$, Result
\ref{res:ReLU_setting_simplified} shows that as $t\to\infty$, we
have $r_{i,t}\to\sqrt{2}$ for any $i\in\left[d\right]$ and the reconstruction
error tending to $0$. In other words, the autoencoder is able to
reconstruct the Gaussian data source ${\cal P}$ to arbitrary precision,
with sufficiently large $N$ and sufficiently small $\epsilon$. This
holds for any finite $d$.

What is the required sample complexity w.r.t. the data dimension $d$?
Assume that $\kappa_{*}=C>0$, which implies we need $t\gg\max_{i}1/\Sigma_{i}^{2}=\Theta\left(1\right)$
in order for $r_{i,t}\approx\sqrt{2}$ for all $i\in\left[N\right]$.
Recall from Result \ref{res:ReLU_setting_simplified} that $\epsilon\ll1/{\rm poly}\left(d\right)$.
As such, the required number of SGD data samples -- which is $t/\epsilon$
-- is then only about ${\rm poly}\left(d\right)$. Note that this
sample complexity is independent of the number of neurons $N$, as
a consequence of the mean field scaling.

Interestingly, since ${\cal P}$ is a non-degenerate Gaussian source
and hence supported on $\mathbb{R}^{d}$, in this case, the fact that
the reconstruction error tends to $0$ implies the autoencoder is
bound to learn the identity function. This is the extreme of perfect
reconstruction but no representation learning. We also note that since
$\lambda=0$, the reconstruction error equals the training loss and
hence is non-increasing with time, as a simple consequence of gradient
flow evolution. See Fig. \ref{fig:ReLU_twoblks_noreg_loss_theta}
for illustration.

\paragraph*{Regularization equals principal subspace selection via shrinkage.}

In the case $\lambda>0$ and $r_{0}>0$, a critical phenomenon takes
place: as $t\to\infty$, $r_{i,t}\to\sqrt{2\left(1-2\lambda/\Sigma_{i}^{2}\right)}$
if $\Sigma_{i}^{2}>2\lambda$, $r_{i,t}\to0$ if $\Sigma_{i}^{2}<2\lambda$
and $r_{i,t}=r_{0}$ otherwise. In other words, $\ell_{2}$-regularization
performs a form of nonlinear shrinkage, controlled by $\lambda$,
and hence induces feature selection: the principal subspace $i$ with
sufficiently small $\Sigma_{i}$ is shrunk to zero and hence eliminated,
whereas the subspace with sufficiently large $\Sigma_{i}$ is selected.
The trade-off is that all selected principal subspaces are also shrunk.
This is one way the autoencoder performs representation learning.
We also note that since $\lambda>0$, the reconstruction error does
not equal the training loss and hence is not necessarily monotonic
with time, unlike the unregularized case; its time dependency is in
general complex. See Fig. \ref{fig:ReLU_twoblks_smallreg_loss_theta}
and \ref{fig:ReLU_twoblks_largereg_loss_theta} for illustration.

\begin{figure}
\begin{centering}
\subfloat[]{\begin{centering}
\includegraphics[width=0.45\columnwidth]{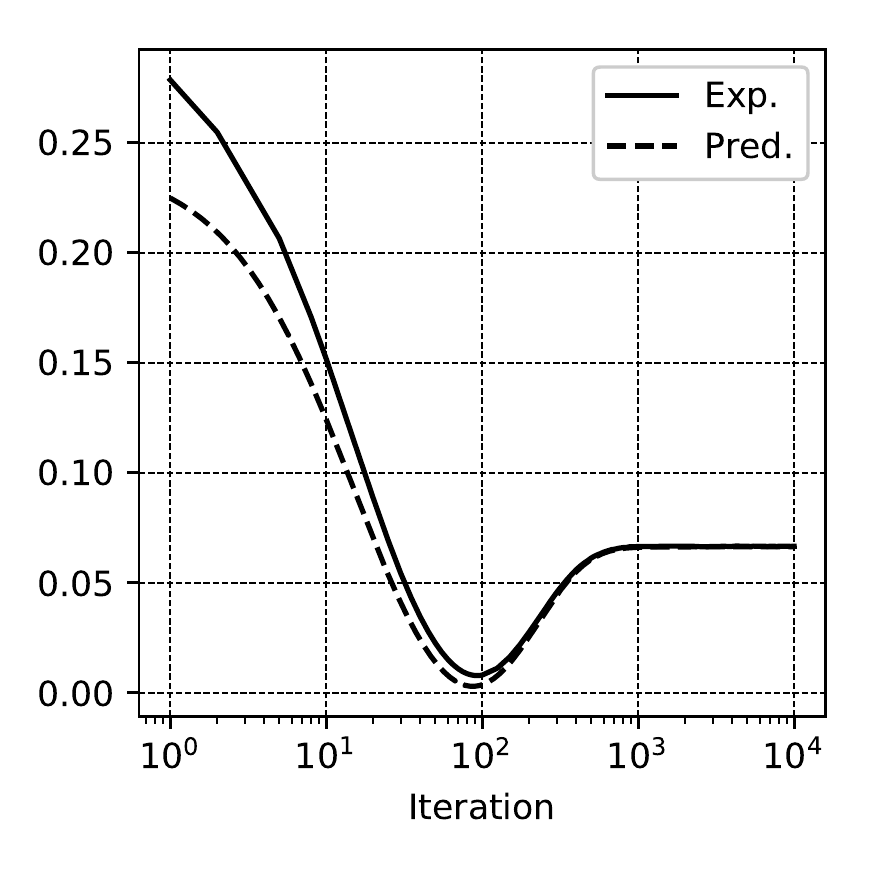}
\par\end{centering}
}\subfloat[]{\begin{centering}
\includegraphics[width=0.45\columnwidth]{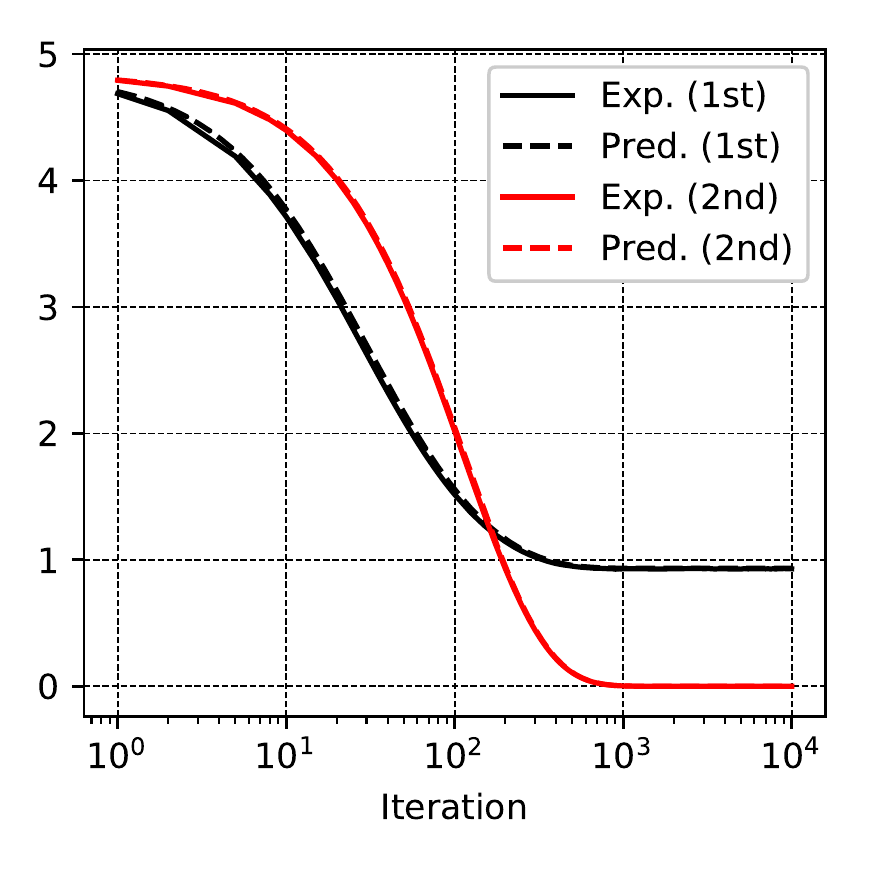}
\par\end{centering}
}
\par\end{centering}
\caption{Autoencoder with ReLU activation and Gaussian data, with moderate
regularization (Result \ref{res:ReLU_setting_simplified}). Setup:
$d=500$, $\Sigma_{1}^{2}=...=\Sigma_{50}^{2}=1.5$ and $\Sigma_{51}^{2}=...=\Sigma_{500}^{2}=0.1$,
$\boldsymbol{R}=\boldsymbol{I}_{d}$, $\lambda=0.4$, $r_{0}=2.2$,
$\epsilon=0.005$ and $N=10000$. (a): the reconstruction error versus
the SGD iteration. (b): the normalized squared norm of the first 50-dimensional
subspace's weight (tagged ``1st'') and the second 450-dimensional
subspace's weight (tagged ``2nd''). Here ``Exp.'' indicates the
simulation results, and ``Pred.'' indicates the theoretical prediction.
For more details, see Appendix \ref{sec:Simulation-details}. We observe
that the first subspace is selected (its weight remains non-zero eventually),
while the second subspace is eliminated (its weight becomes zero eventually).
The first subspace is shrunk owing to the regularization: the normalized
squared norm of its weight converges to a value smaller than $2$.
The learning speed of the second subspace is slower, since it has
smaller $\left|\Sigma_{i}^{2}-2\lambda\right|$. The reconstruction
error is non-monotonic with time, exhibiting a first phase of learning
to reconstruct (where the reconstruction error is decreasing) followed
by a second phase of learning the representation (where the reconstruction
error is increasing). In this second phase, the weight of the first
subspace has almost stopped evolving, whereas the weight of the second
subspace continues to shrink down to zero.}

\label{fig:ReLU_twoblks_smallreg_loss_theta}
\end{figure}

\begin{figure}
\begin{centering}
\subfloat[]{\begin{centering}
\includegraphics[width=0.45\columnwidth]{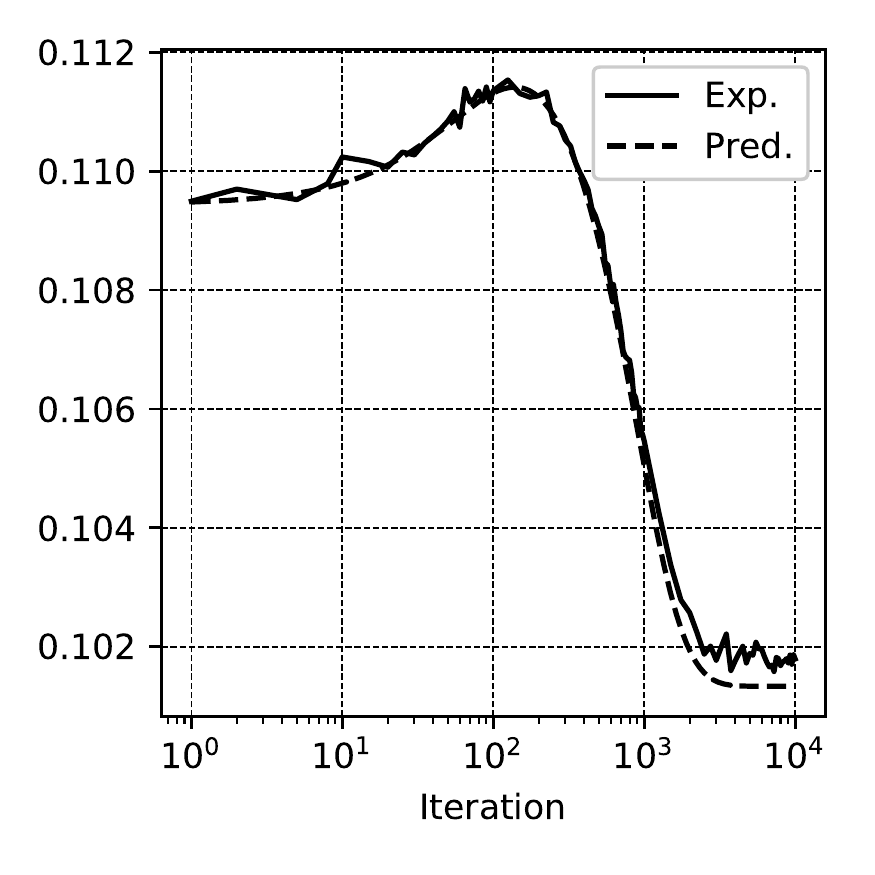}
\par\end{centering}
}\subfloat[]{\begin{centering}
\includegraphics[width=0.45\columnwidth]{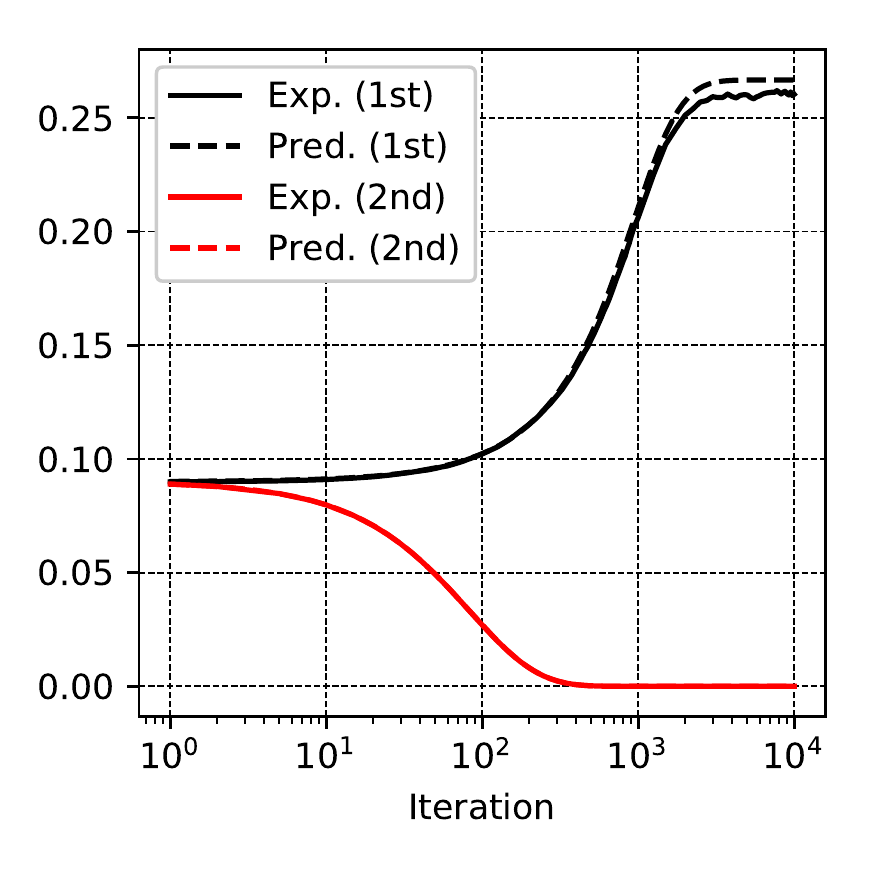}
\par\end{centering}
}
\par\end{centering}
\caption{Autoencoder with ReLU activation and Gaussian data, with large regularization
and small initialization (Result \ref{res:ReLU_setting_simplified}).
Setup: $d=500$, $\Sigma_{1}^{2}=...=\Sigma_{50}^{2}=1.5$ and $\Sigma_{51}^{2}=...=\Sigma_{500}^{2}=0.1$,
$\boldsymbol{R}=\boldsymbol{I}_{d}$, $\lambda=0.65$, $r_{0}=0.3$,
$\epsilon=0.005$ and $N=10000$. (a): the reconstruction error versus
the SGD iteration. (b): the normalized squared norm of the first 50-dimensional
subspace's weight (tagged ``1st'') and the second 450-dimensional
subspace's weight (tagged ``2nd''). Here ``Exp.'' indicates the
simulation results, and ``Pred.'' indicates the theoretical prediction.
For more details, see Appendix \ref{sec:Simulation-details}. The
properties at convergence are similar to Fig. \ref{fig:ReLU_twoblks_smallreg_loss_theta},
but the two phases of learning are different: the phase of learning
the representation (where the reconstruction error is increasing)
is followed by the phase of learning to reconstruct (where the reconstruction
error is decreasing). The learning speed of the first subspace is
slower, since it has smaller $\left|\Sigma_{i}^{2}-2\lambda\right|$.}

\label{fig:ReLU_twoblks_largereg_loss_theta}
\end{figure}

\paragraph*{Early stopping can perform representation learning.}

Instead of the infinite time limit, by considering finite time behaviors,
we observe a separation in time where the subspaces are learned and
selected (or eliminated) at different rates:
\begin{itemize}
\item If $\Sigma_{i}^{2}<2\lambda$, $r_{i,t}$ decreases from $r_{0}$
to $0$ exponentially and monotonically in $t$, at a rate of $\left|\Sigma_{i}^{2}-2\lambda\right|$.
\item Likewise, if $\Sigma_{i}^{2}>2\lambda$, $r_{i,t}$ converges to a
non-zero value exponentially and monotonically in $t$, at a rate
of $\Sigma_{i}^{2}-2\lambda$.
\item If $\Sigma_{i}^{2}=2\lambda$, $r_{i,t}=r_{0}$ unchanged.
\end{itemize}
For $\Sigma_{i}^{2}>2\lambda$, the principal subspaces with higher
$\Sigma_{i}$ are thus learned at a faster rate. On the other hand,
$r_{i,t}\leq r_{0}$ at all $t\geq0$ if $\Sigma_{i}^{2}\leq2\lambda$.
This suggests a second strategy for representation learning: one can
choose small initialization $r_{0}$ and perform early stopping. This
strategy is especially useful when $\lambda=0$. See Fig. \ref{fig:ReLU_twoblks_noreg_loss_theta}
for illustration.

\paragraph*{Maintenance of rotational invariance.}

Eq. (\ref{eq:thm_ReLU_simplified_1}) suggests that the ensemble of
weight vectors, initialized with a rotationally invariant distribution,
maintains a form of rotational invariance throughout the course of
training. To understand this effect, suppose we look at an ``infinite-$N$''
autoencoder whose weight vectors are i.i.d. copies of the random vector
\[
\boldsymbol{\theta}\sim\mathsf{N}\left(\boldsymbol{0},\frac{1}{d}\boldsymbol{R}{\rm diag}\left(b_{1}^{2},...,b_{d}^{2}\right)\boldsymbol{R}^{\top}\right),
\]
for some constants $b_{1},...,b_{d}$. For a given input $\boldsymbol{x}\neq\boldsymbol{0}$,
this idealized autoencoder then outputs the following:
\begin{align*}
\hat{\boldsymbol{x}}_{{\rm inf}}\left(\boldsymbol{x}\right) & =\mathbb{E}_{\boldsymbol{\theta}}\left\{ \kappa\boldsymbol{\theta}\sigma\left(\left\langle \kappa\boldsymbol{\theta},\boldsymbol{x}\right\rangle \right)\right\} =\gamma_{\boldsymbol{x}}\boldsymbol{R}{\rm diag}\left(b_{1}^{2},...,b_{d}^{2}\right)\boldsymbol{R}^{\top}\boldsymbol{x},\\
\gamma_{\boldsymbol{x}} & =\mathbb{E}_{g\sim\mathsf{N}\left(0,1\right)}\left\{ \sigma'\left(\left\Vert {\rm diag}\left(b_{1},...,b_{d}\right)\boldsymbol{R}^{\top}\boldsymbol{x}\right\Vert _{2}g\right)\right\} ,
\end{align*}
as an application of Stein's lemma. For ReLU activation $\sigma$,
$\gamma_{\boldsymbol{x}}=1/2$ a constant. As such, the model tends
to become a linear mapping. This happens despite the fact that the
pre-activation $\left\langle \kappa\boldsymbol{\theta},\boldsymbol{x}\right\rangle $
is a real-valued random variable that typically takes a $\Theta\left(1\right)$
value, has unbounded support and hence does not occupy only a single
linear branch of the ReLU.

\subsubsection{Setting with ReLU activation: Two-staged process\label{subsec:contrib_two_stage}}

Our second result concerns the compression efficiency of the autoencoder
in the setting with ReLU activation via a two-staged process.
\begin{res}[Autoencoder with ReLU, two-staged process -- Informal and simplified]
\label{res:ReLU_setting_2stage_simplified}Consider the same setting
as Result \ref{res:ReLU_setting_simplified}. Form a set of $M$ vectors
$\left(\boldsymbol{w}_{i}^{t}\right)_{i\leq M}$ such that for each
$i\in\left[M\right]$, $\boldsymbol{w}_{i}^{t}=\boldsymbol{w}_{i}^{t}\left(N,t,\epsilon\right)$
is drawn independently at random from the set of $N$ neurons $\left(\boldsymbol{\theta}_{i}^{t/\epsilon}\right)_{i\leq N}$,
trained with SGD. Construct a new autoencoder with $M$ neurons $\left(\boldsymbol{w}_{i}^{t}\right)_{i\leq M}$:
\[
\hat{\boldsymbol{x}}_{M}^{t}\left(\boldsymbol{x}\right)\equiv\hat{\boldsymbol{x}}_{M}^{t}\left(\boldsymbol{x};N,t,\epsilon\right)=\frac{1}{M}\sum_{i=1}^{M}\kappa\boldsymbol{w}_{i}^{t}\sigma\left(\left\langle \kappa\boldsymbol{w}_{i}^{t},\boldsymbol{x}\right\rangle \right).
\]
Suppose that $M=\mu d$ for some fixed $\mu>0$. We then have, for
any $t\geq0$, in the limit $N\to\infty$, $\epsilon\to0$ then $M\to\infty$,
with high probability,
\begin{equation}
{\rm RecErr}\left(\left(\boldsymbol{w}_{i}^{t}\right)_{i\leq M}\right)\approx\underbrace{\frac{1}{2d}\sum_{i=1}^{d}\Sigma_{i}^{2}\left(1-\frac{1}{2}r_{i,t}^{2}\right)^{2}}_{\text{Training}}+\underbrace{\frac{1}{4\mu d^{2}}\sum_{i=1}^{d}r_{i,t}^{2}\sum_{i=1}^{d}r_{i,t}^{2}\Sigma_{i}^{2}}_{\text{Sampling}}.\label{eq:thm_ReLU_2stage_simplified}
\end{equation}
\end{res}

Exact details can be found in the statement of Theorem \ref{thm:ReLU_setting}.
In essence, Result \ref{res:ReLU_setting_2stage_simplified} states
that if we perform a two-staged process where we construct a new autoencoder
by randomly sampling neurons from a trained autoencoder, in the high-dimensional
asymptotic regime (i.e. $M,d\to\infty$ with the sampling ratio $\mu=M/d$
fixed), its reconstruction error is a sum of two components: one is
by the training process of the original autoencoder (comparing the
first term in Eq. (\ref{eq:thm_ReLU_2stage_simplified}) with Eq.
(\ref{eq:thm_ReLU_simplified_2})), and the other is by the sampling
process. The training component is independent of $\mu$, whereas
the sampling component is decreasing and strictly convex in $\mu$.
Note that the reconstruction error of the derived autoencoder tends
to that of the original one as $\mu\to\infty$, while no training
is performed on the derived autoencoder. This is a particular consequence
of the mean field scaling. See Fig. \ref{fig:ReLU_twoblks_subsampled}
for illustration.

To gain further insights, let us analyze Eq. (\ref{eq:thm_ReLU_2stage_simplified})
in a specific scenario:
\[
\Sigma_{d_{0}}^{2}=2,\qquad\Sigma_{d_{0}+1}^{2}=\alpha^{99},\qquad2\lambda=1,
\]
for $d_{0}=\alpha d$ and some positive $\alpha\ll1$. (Here we recall
$C\geq\Sigma_{1}\geq...\geq\Sigma_{d}>0$.) In particular, the power
of the data $\boldsymbol{x}$ highly concentrates in the first $d_{0}$
principal subspaces. We have also chosen $\lambda$ appropriately
such that the trained ReLU-activated autoencoder eliminates the last
$d_{0}+1$ principal subspaces, while maintaining that $1-r_{i,t}^{2}/2\to2\lambda/\Sigma_{i}^{2}=\Theta\left(1\right)$
for all $i\leq d_{0}$ as $t\to\infty$. One easily finds that at
a large learning time $t$,
\[
\text{training component}\sim\frac{d_{0}}{d},\qquad\text{sampling component}\sim\frac{1}{\mu}\left(\frac{d_{0}}{d}\right)^{2}.
\]
Hence in order that eventually the sampling component is much smaller
than the training component, one only requires $\mu\gg d_{0}/d$ (equivalently,
$M\gg d_{0}$), instead of $\mu\gg1$ (equivalently, $M\gg d$). This
highlights the following more general observation: under suitable
circumstances, the number of sampled neurons $M$ only needs to be
larger than some effective dimension $d_{{\rm eff}}$ that is characteristic
of the data distribution, even though it could be the case that $d_{{\rm eff}}\ll d$.
See again Fig. \ref{fig:ReLU_twoblks_subsampled} for illustration.

The above discussion lends us some insight into the compression efficiency
at some large $t$ in a favorable scenario. What if we require good
compression on the whole time horizon $t\in[0,\infty)$? Let us consider
the same scenario but without regularization $\lambda=0$. Let us
further assume an initialization $r_{1,0}^{2}=...=r_{d,0}^{2}=\Theta\left(1\right)>0$.
We know that $r_{i,t}^{2}\to2$ as $t\to\infty$ monotonically for
any $i\in\left[d\right]$, and hence $r_{i,t}^{2}=\Theta\left(1\right)$
for all $t\geq0$. In this case, at any $t\geq0$,
\[
\text{training component}\lesssim\frac{d_{0}}{d},\qquad\text{sampling component}\sim\frac{1}{\mu}\frac{d_{0}}{d}=\frac{d_{0}}{M}.
\]
As $t\to\infty$, the training component tends to zero. In particular,
if $r_{1,0}^{2}=...=r_{d,0}^{2}=2$ and consequently $r_{i,t}^{2}=2$
for all $t\geq0$, then the training component is precisely zero at
all $t\geq0$. We see that on the whole time horizon, the sampling
component cannot be driven to be comparably small unless $M\gg d_{0}$,
and in general, unless $M\gg d$. This simple scenario suggests that
it is unrealistic to expect $M\gg1$ to be sufficient to have a negligible
sampling component. In other words, $M\gg d_{{\rm eff}}$ is necessary.

\begin{figure}
\begin{centering}
\includegraphics[width=0.7\columnwidth]{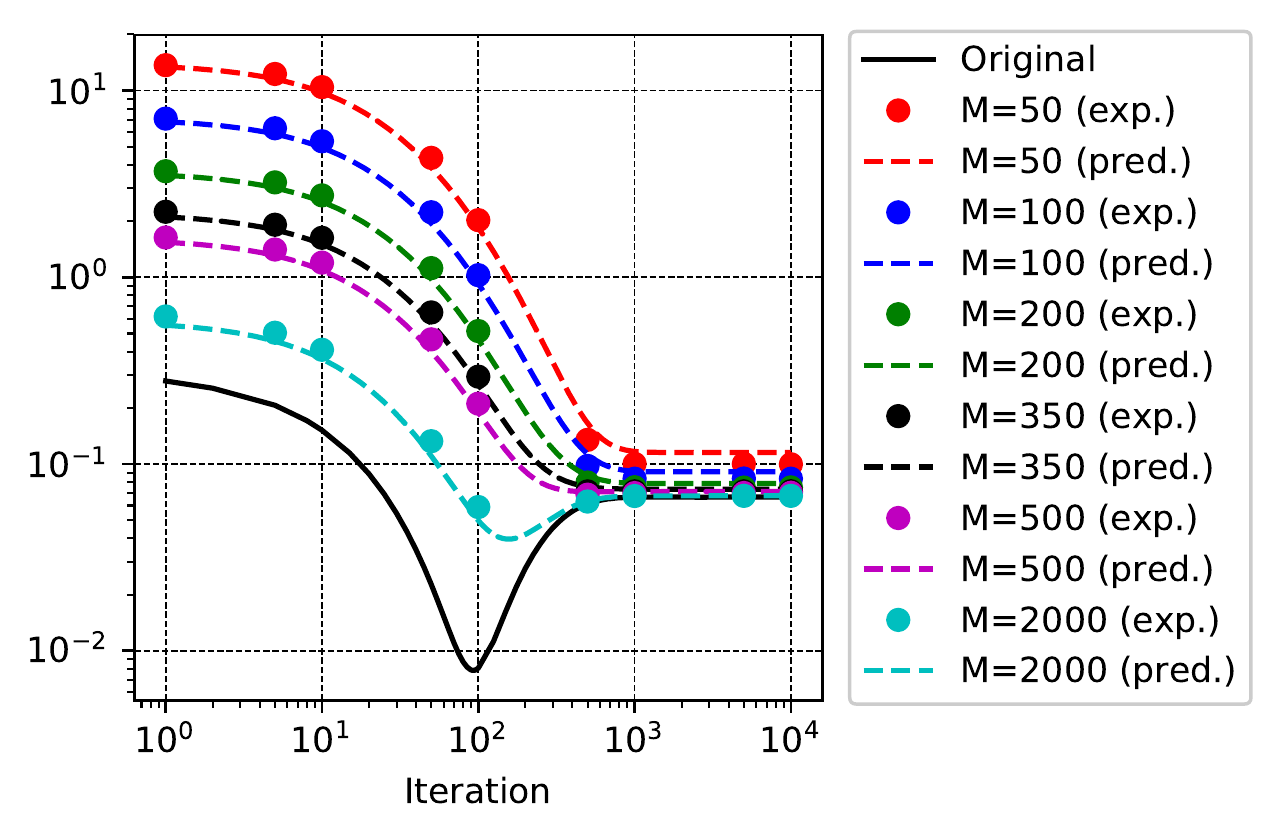}
\par\end{centering}
\caption{Autoencoder with ReLU activation and Gaussian data, with regularization
-- two-staged process (Result \ref{res:ReLU_setting_2stage_simplified}).
The setup is the same as Fig. \ref{fig:ReLU_twoblks_smallreg_loss_theta}.
The reconstruction error is plotted against the SGD iteration, for
the original autoencoder (tagged as ``original''), as well as several
derived autoencoders constructed by the two-staged process with different
numbers of sampled neurons $M$ at different SGD iterations. Here
``exp.'' indicates the simulation results, and ``pred.'' indicates
the theoretical prediction. For more details, see Appendix \ref{sec:Simulation-details}.
Observe that the curve with larger $M$ moves closer to the original
curve. Furthermore at convergence, the performance loss due to sampling
is negligible already for $M=200$, which is a significant reduction
from the data dimension $d=500$. Here we recall that in this setup,
the data $\boldsymbol{x}$ concentrates most of its power in the first
50-dimensional principal subspace.}

\label{fig:ReLU_twoblks_subsampled}
\end{figure}

\subsubsection{Setting with bounded activation\label{subsec:contrib-bounded}}

The previous results apply specifically to the ReLU activation. Our
next result extends to a broad class of bounded activations.
\begin{res}[Autoencoder with bounded activation -- Informal and simplified]
\label{res:Bdd_act_setting_simplified}Consider the autoencoder,
as described in Section \ref{subsec:contrib-Gaussian-data}, in the
following setting. The data $\boldsymbol{x}$ assumes a Gaussian distribution
with the following mean and covariance:
\[
\mathbb{E}\left\{ \boldsymbol{x}\right\} =\boldsymbol{0},\qquad\mathbb{E}\left\{ \boldsymbol{x}\boldsymbol{x}^{\top}\right\} =\frac{1}{d}{\rm diag}(\underbrace{\Sigma_{1}^{2},...,\Sigma_{1}^{2}}_{d_{1}\text{ entries}},\underbrace{\Sigma_{2}^{2},...,\Sigma_{2}^{2}}_{d_{2}\text{ entries}}),
\]
where $0<C\leq\Sigma_{1},\Sigma_{2}\leq C$, and $d_{1}=\alpha d$,
$d_{2}=\left(1-\alpha\right)d$ for some $\alpha\in\left(0,1\right)$
such that $d_{1}$ and $d_{2}$ are positive integers, and $\alpha$
does not depend on $d$. The activation $\sigma$ is bounded and sufficiently
regular. The regularization strength $\lambda\leq C$. The initialization
$\Theta^{0}=\left(\boldsymbol{\theta}_{i}^{0}\right)_{i\leq N}\sim_{{\rm i.i.d.}}\mathsf{N}\left(\boldsymbol{0},r_{0}^{2}\boldsymbol{I}_{d}/d\right)$
for a non-negative constant $r_{0}\leq C$.

Then for $N\gg{\rm poly}\left(d\right)$, $\epsilon\ll1/{\rm poly}\left(d\right)$
and a finite $t\in\mathbb{N}\epsilon$, $t\leq C$, with high probability,
\[
\frac{1}{N}\sum_{i=1}^{N}\delta_{\boldsymbol{\theta}_{i}^{t/\epsilon}}\approx{\rm Law}\left(r_{1,t}\boldsymbol{\omega}_{1},r_{1,t}\boldsymbol{\omega}_{2}\right),\qquad{\rm RecErr}\left(\Theta^{t/\epsilon}\right)\approx{\rm RecErr}_{*}\left(\rho_{r}^{t}\right).
\]
Here $\boldsymbol{\omega}_{1}\sim\text{Unif}\left(\mathbb{S}^{d_{1}-1}\right)$
and $\boldsymbol{\omega}_{2}\sim\text{Unif}\left(\mathbb{S}^{d_{2}-1}\right)$
independently and independent of $\left(r_{1,t},r_{2,t}\right)$,
$\rho_{r}^{t}={\rm Law}\left(r_{1,t},r_{2,t}\right)\in\mathscr{P}\left(\mathbb{R}_{\geq0}^{2}\right)$
is described by a system of two ODEs with random initialization and
${\rm RecErr}_{*}\left(\rho_{r}^{t}\right)$ has an explicit formula.

In the above, the constants $C$ do not depend on $N$, $\epsilon$
or $d$.
\end{res}

Exact details can be found in the statement of Theorem \ref{thm:bdd_act_setting}.
This setting covers the case $\sigma=\tanh$, a common activation.
The result can be extended easily to more general structures of the
covariance; we consider the simple two-blocks diagonal structure mainly
for simplicity. Similar to the ReLU setting, we stress that the requirement
is again mild: $N\gg{\rm poly}\left(d\right)$ and $\epsilon\ll1/{\rm poly}\left(d\right)$.

As suggested by Result \ref{res:Bdd_act_setting_simplified}, $r_{1,t}$
governs the first $d_{1}$ coordinates of $\left(\boldsymbol{\theta}_{i}^{t/\epsilon}\right)_{i\leq N}$,
and $r_{2,t}$ corresponds to the last $d_{2}$ coordinates. In other
words, $r_{1,t}$ and $r_{2,t}$ indicate the rescaling factors of
the first $d_{1}$-dimensional and second $d_{2}$-dimensional principal
subspaces, respectively. See Fig. \ref{fig:tanh_twoblks_loss_theta}
for illustration. We observe several qualitative features similar
to the ReLU setting. We note that some of these features, such as
the sigmoidal learning curve and the different learning speeds for
different principal subspaces, have been previously shown for linear
(weight-untied) neural networks \cite{saxe2013exact,advani2017high,saxe2019mathematical,gidel2019implicit}
and nonlinear networks under very strong assumptions \cite{combes2018learning}.
Our results give a theoretically solid piece of evidence towards the
remarkable observation that these features could continue to hold
more generally for neural networks with nonlinear activations in a
natural setting.

On the other hand, there are also some differences, which arise primarily
from the fact that the activation is not homogenous like the ReLU.
In particular:

\paragraph*{Joint evolution of the rescaling factors.}

In this present setting, $r_{1,t}$ and $r_{2,t}$ evolve jointly,
as seen from Fig. \ref{fig:tanh_twoblks_loss_theta}. This is a stark
contrast with the ReLU setting in Result \ref{res:ReLU_setting_simplified}
where each principal subspace's rescaling factor evolves independently
of each other. Such decoupling effect in the case of ReLU activation
allows for more analytical tractability than the present setting.

\paragraph*{No regularization does not equal learning the identity.}

We observe from Fig. \ref{fig:tanh_twoblks_loss_theta}.(a) that when
$\lambda=0$, with sufficiently large $d$, the reconstruction error
converges to zero, i.e. that the unregularized autoencoder is able
to reconstruct any vector $\boldsymbol{x}$ drawn from the data distribution
${\cal P}$. Note that in high dimension, ${\cal P}$ is almost the
same as the distribution of $\left(\Sigma_{1}\sqrt{\alpha}\boldsymbol{\omega}_{1},\Sigma_{2}\sqrt{1-\alpha}\boldsymbol{\omega}_{2}\right)$
for $\boldsymbol{\omega}_{1}\sim\text{Unif}\left(\mathbb{S}^{d_{1}-1}\right)$
and $\boldsymbol{\omega}_{2}\sim\text{Unif}\left(\mathbb{S}^{d_{2}-1}\right)$
independently. As such, the support of ${\cal P}$ concentrates in
a small region of $\mathbb{R}^{d}$. This suggests that the autoencoder
in this case does not learn the identity, unlike the unregularized
ReLU autoencoder. This is indeed confirmed in Fig. \ref{fig:tanh_twoblks_loss_outsample}.(a),
which shows that the reconstruction error of a vector $\boldsymbol{x}$
drawn from a certain distribution ${\cal Q}\neq{\cal P}$ does not
converge to zero.

On the other hand, Fig. \ref{fig:tanh_twoblks_loss_outsample}.(b)
shows that there are certain other distributions, different from ${\cal P}$,
such that the reconstruction error converges to zero. In fact, in
the next point, we shall argue that the unregularized autoencoder
can nevertheless ``almost'' learn the identity mapping.

\paragraph*{Maintenance of rotational invariance.}

Similar to the ReLU case, here there is also a form of rotational
invariance being preserved throughout training. In particular, let
us consider the effect in high dimension. For large $d$, one can
approximate $\boldsymbol{\omega}_{1}\approx\left(\alpha d\right)^{-1/2}\boldsymbol{z}_{1}$
and $\boldsymbol{\omega}_{2}\approx\left(\left(1-\alpha\right)d\right)^{-1/2}\boldsymbol{z}_{2}$
for $\boldsymbol{z}_{1}\sim\mathsf{N}\left(0,\boldsymbol{I}_{d_{1}}\right)$
and $\boldsymbol{z}_{2}\sim\mathsf{N}\left(0,\boldsymbol{I}_{d_{2}}\right)$
independently. Then similar to the ReLU case, considering Result \ref{res:Bdd_act_setting_simplified},
let us look at an ``infinite-$N$'' autoencoder whose weight vectors
are i.i.d. copies of the random vector
\[
\boldsymbol{\theta}\stackrel{{\rm d}}{=}\left(b_{1}\left(\alpha d\right)^{-1/2}\boldsymbol{z}_{1},\;b_{2}\left(\left(1-\alpha\right)d\right)^{-1/2}\boldsymbol{z}_{2}\right),
\]
for some constants $b_{1}$ and $b_{2}$. For a given input $\boldsymbol{x}\neq\boldsymbol{0}$,
this idealized autoencoder then outputs the following:
\begin{align*}
\hat{\boldsymbol{x}}_{{\rm inf}}\left(\boldsymbol{x}\right) & =\mathbb{E}_{\boldsymbol{\theta}}\left\{ \kappa\boldsymbol{\theta}\sigma\left(\left\langle \kappa\boldsymbol{\theta},\boldsymbol{x}\right\rangle \right)\right\} =\gamma_{\boldsymbol{x}}\left(b_{1}^{2}\alpha^{-1}\boldsymbol{x}_{\left[1\right]},\;b_{2}^{2}\left(1-\alpha\right)^{-1}\boldsymbol{x}_{\left[2\right]}\right),\\
\gamma_{\boldsymbol{x}} & =\mathbb{E}_{g\sim\mathsf{N}\left(0,1\right)}\left\{ \sigma'\left(\sqrt{b_{1}^{2}\alpha^{-1}\left\Vert \boldsymbol{x}_{\left[1\right]}\right\Vert _{2}^{2}+b_{2}^{2}\left(1-\alpha\right)^{-1}\left\Vert \boldsymbol{x}_{\left[2\right]}\right\Vert _{2}^{2}}g\right)\right\} ,
\end{align*}
as an application of Stein's lemma, where $\boldsymbol{x}_{\left[1\right]}$
indicates the vector of the first $d_{1}$ entries of $\boldsymbol{x}$
and $\boldsymbol{x}_{\left[2\right]}$ is the vector of all other
entries. Unlike the ReLU case, with a generic activation, $\gamma_{\boldsymbol{x}}$
is generally not a constant, even though it depends mildly on $\boldsymbol{x}$
via only the norms of the two components $\left\Vert \boldsymbol{x}_{\left[1\right]}\right\Vert _{2}$
and $\left\Vert \boldsymbol{x}_{\left[2\right]}\right\Vert _{2}$.

Motivated by the unregularized case $\lambda=0$ in which Fig. \ref{fig:tanh_twoblks_loss_theta}.(a)
suggests that at convergence $r_{1,t}^{2}\alpha^{-1}\approx r_{2,t}^{2}\left(1-\alpha\right)^{-1}$,
let us consider $b_{1}^{2}\alpha^{-1}=b_{2}^{2}\left(1-\alpha\right)^{-1}=c_{*}$.
In this scenario,
\[
\hat{\boldsymbol{x}}_{{\rm inf}}\left(\boldsymbol{x}\right)=\gamma_{\boldsymbol{x}}c_{*}\boldsymbol{x},\qquad\gamma_{\boldsymbol{x}}=\mathbb{E}_{g\sim\mathsf{N}\left(0,1\right)}\left\{ \sigma'\left(\sqrt{c_{*}}\left\Vert \boldsymbol{x}\right\Vert _{2}g\right)\right\} .
\]
One therefore does not expect $\gamma_{\boldsymbol{x}}$ to be independent
of $\boldsymbol{x}$ unless $\sigma$ is a homogeneous function. This
gives an explanation why the unregularized autoencoder does not learn
the identity and confirms the finding in Fig. \ref{fig:tanh_twoblks_loss_outsample}.(a).
On the other hand, we also see that the model learns a restricted
form of the identity mapping. In particular, $\boldsymbol{x}\mapsto\hat{\boldsymbol{x}}_{{\rm inf}}\left(\boldsymbol{x}\right)$
maps a sphere $S_{{\rm in}}$ to another sphere $S_{{\rm out}}$ by
preserving the direction of the input $\boldsymbol{x}\in S_{{\rm in}}$
and scaling the radius of $S_{{\rm in}}$ to that of $S_{{\rm out}}$.
A consequence is the following. Let $S=\left\{ \boldsymbol{x}\in\mathbb{R}^{d}:\;\left\Vert \boldsymbol{x}\right\Vert _{2}=\Sigma_{1}^{2}\alpha+\Sigma_{2}^{2}\left(1-\alpha\right)\right\} $.
Recall that on the data distribution ${\cal {\cal P}}$ with which
the autoencoder is trained, $\left\Vert \boldsymbol{x}\right\Vert _{2}\approx\Sigma_{1}^{2}\alpha+\Sigma_{2}^{2}\left(1-\alpha\right)$
in high dimension. Hence the support of ${\cal P}$ is essentially
a strict subset of $S$. Let us further assume $b_{1}$ and $b_{2}$
are equal to the values of $r_{1,t}$ and $r_{2,t}$ at convergence,
in which case we have $\gamma_{\boldsymbol{x}}c_{*}=1$ for any $\boldsymbol{x}$
drawn from ${\cal P}$ since the reconstruction error on ${\cal P}$
converges to zero as in Fig. \ref{fig:tanh_twoblks_loss_theta}.(a).
Now since $\gamma_{\boldsymbol{x}}$ only depends on $\left\Vert \boldsymbol{x}\right\Vert _{2}$,
for any $\boldsymbol{x}\in S$ not necessarily drawn from ${\cal P}$,
we also have $\gamma_{\boldsymbol{x}}c_{*}=1$, and equivalently,
$\hat{\boldsymbol{x}}_{{\rm inf}}\left(\boldsymbol{x}\right)=\boldsymbol{x}$.
This confirms the finding in Fig. \ref{fig:tanh_twoblks_loss_outsample}.(b).

In short, we see that rotational invariance results in the mild dependency
of $\gamma_{\boldsymbol{x}}$ on $\boldsymbol{x}$, and the lack of
homogeneity in the activation function results in a mildly nonlinear
mapping that is expressed by the autoencoder.

\paragraph*{Equivalence of activation functions.}

As a first note, we see that $\gamma_{\boldsymbol{x}}=0$ and $\hat{\boldsymbol{x}}_{{\rm inf}}\left(\boldsymbol{x}\right)=\boldsymbol{0}$
if $\sigma$ is an even function, which is therefore a bad design
choice.

Rotational invariance leads to another interesting consequence. From
the previous discussion (as well as Appendix \ref{subsec:Simplifications-for-Setting-bdd-act}),
we see that the influence of the activation $\sigma$ is via its derivative
$\sigma'$. In particular, for two activation functions $\sigma$
and $\tilde{\sigma}$, if
\[
\mathbb{E}_{g\sim\mathsf{N}\left(0,1\right)}\left\{ \sigma'\left(sg\right)\right\} =\mathbb{E}_{g\sim\mathsf{N}\left(0,1\right)}\left\{ \tilde{\sigma}'\left(sg\right)\right\} \qquad\forall s\in\mathbb{R},
\]
then it is expected that in high dimension, the dynamics of the $\sigma$-activated
autoencoder is the same as that of the $\tilde{\sigma}$-activated
one, provided the same data distribution, regularization strength
$\lambda$ and initialization parameter $r_{0}$. That is, $\sigma$
and $\tilde{\sigma}$ then belong to the same equivalence class of
activation functions. Given $\sigma$, one can obtain another activation
function $\tilde{\sigma}$ in its equivalence class by adding an even
function to it. Fig. \ref{fig:tanh_twoblks_equiv} confirms this expectation.
This holds even when the additional even function breaks monotonicity
of $\sigma$.

\begin{figure}
\begin{centering}
\subfloat[]{\begin{centering}
\includegraphics[width=0.45\columnwidth]{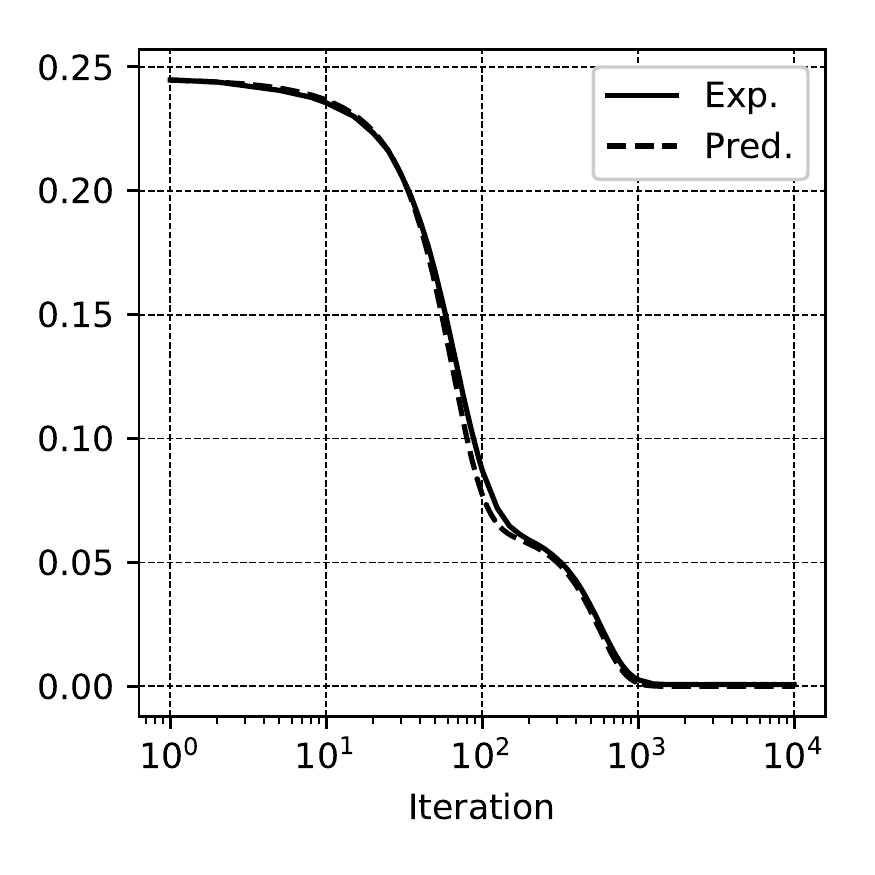}
\par\end{centering}
}\subfloat[]{\begin{centering}
\includegraphics[width=0.45\columnwidth]{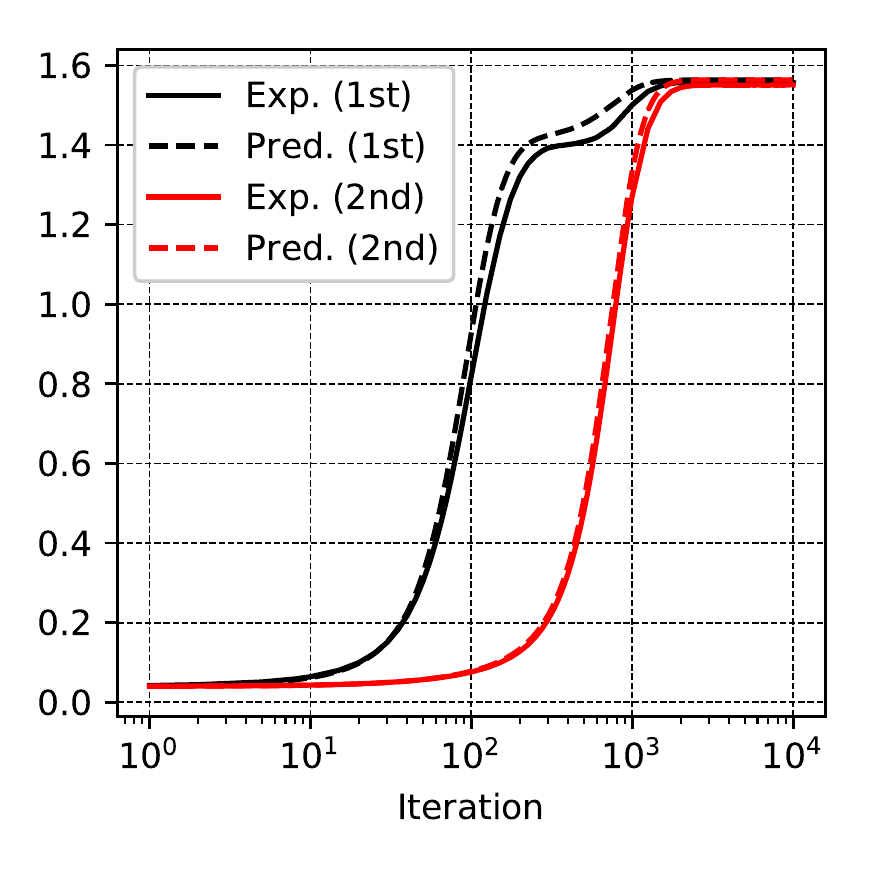}
\par\end{centering}
}
\par\end{centering}
\begin{centering}
\subfloat[]{\begin{centering}
\includegraphics[width=0.45\columnwidth]{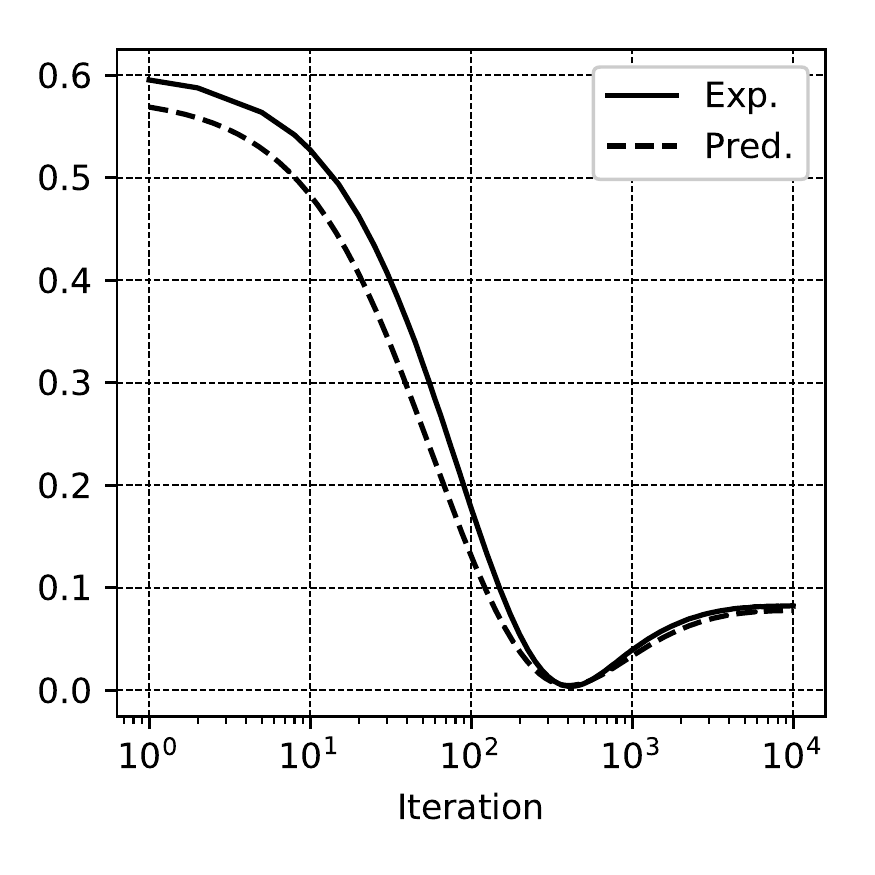}
\par\end{centering}
}\subfloat[]{\begin{centering}
\includegraphics[width=0.45\columnwidth]{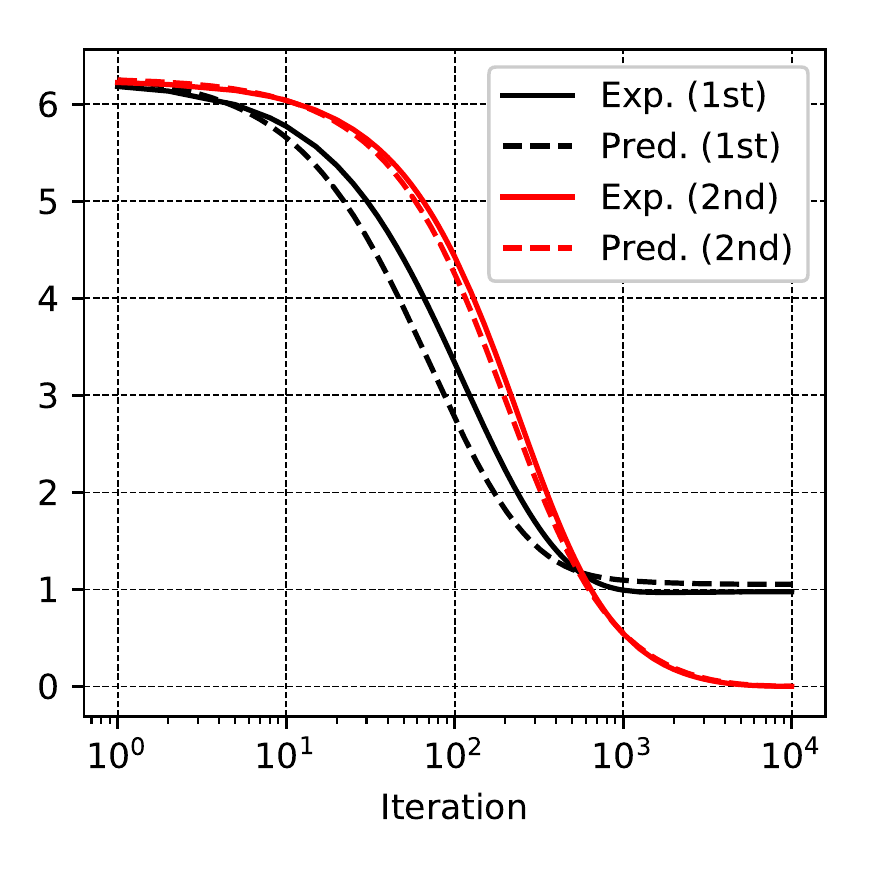}
\par\end{centering}
}
\par\end{centering}
\caption{Autoencoder with $\tanh$ activation and Gaussian data (Result \ref{res:Bdd_act_setting_simplified}).
Setup: $d=200$, $d_{1}=60$, $d_{2}=140$, $\Sigma_{1}^{2}=1.3$,
$\Sigma_{2}^{2}=0.2$, and $N=10000$. In (a) and (b), $\lambda=0$,
$r_{0}=0.2$, $\epsilon=0.01$. In (c) and (d), $\lambda=0.2$, $r_{0}=2.5$,
$\epsilon=0.003$. (a) and (c): the reconstruction error versus the
SGD iteration. (b) and (d): the normalized squared norm of the first
60-dimensional subspace's weight (tagged ``1st'') and the second
140-dimensional subspace's weight (tagged ``2nd''). Here ``Exp.''
indicates the simulation results, and ``Pred.'' indicates the theoretical
prediction. For more details, see Appendix \ref{sec:Simulation-details}.
We observe qualitative similarities between the plots and Fig. \ref{fig:ReLU_twoblks_noreg_loss_theta},
\ref{fig:ReLU_twoblks_smallreg_loss_theta} of the ReLU setting. We
also observe from plot (b) that unlike the ReLU setting, the normalized
squared norm of the first subspace no longer displays a simple sigmoidal
evolution. This indicates that the evolutions of the two subspaces
are coupled.}

\label{fig:tanh_twoblks_loss_theta}
\end{figure}

\begin{figure}
\begin{centering}
\subfloat[]{\begin{centering}
\includegraphics[width=0.45\columnwidth]{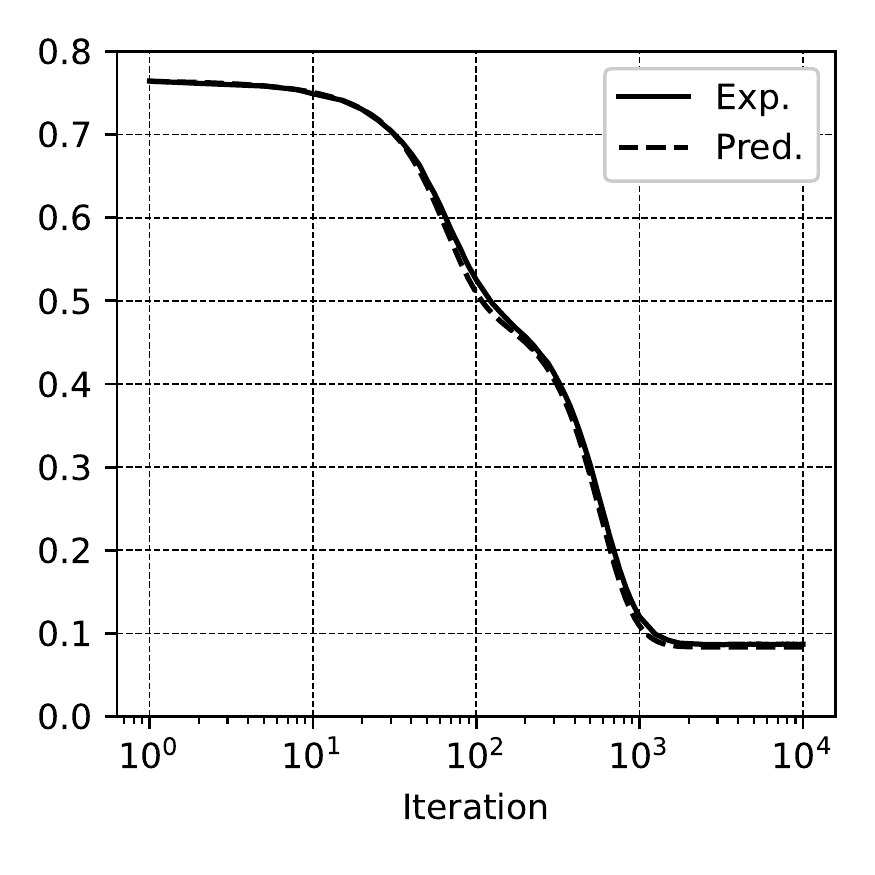}
\par\end{centering}
}\subfloat[]{\begin{centering}
\includegraphics[width=0.45\columnwidth]{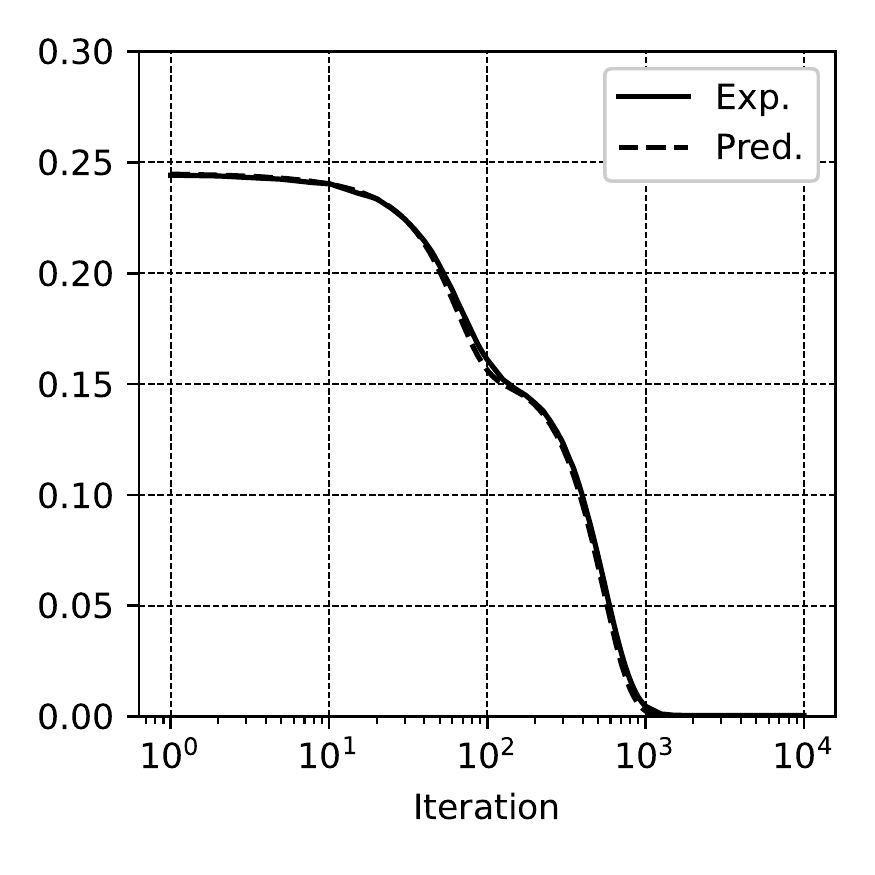}
\par\end{centering}
}
\par\end{centering}
\caption{Autoencoder with $\tanh$ activation and Gaussian data (Result \ref{res:Bdd_act_setting_simplified}),
with the same setup as Fig. \ref{fig:tanh_twoblks_loss_theta}.(a)
(no regularization $\lambda=0$). We plot the reconstruction error
$\mathbb{E}_{\boldsymbol{x}\sim{\cal Q}}\left\{ \frac{1}{2}\left\Vert \hat{\boldsymbol{x}}_{N}\left(\boldsymbol{x};\Theta\right)-\boldsymbol{x}\right\Vert _{2}^{2}\right\} $
of the autoencoder $\hat{\boldsymbol{x}}_{N}\left(\cdot;\Theta\right)$,
trained on the data $\left(\boldsymbol{x}^{k}\right)_{k\protect\geq0}\sim{\cal P}$,
with respect to another distribution ${\cal Q}$. Here ${\cal Q}$
is also a zero-mean Gaussian distribution with the same covariance
structure as ${\cal P}$, but in subfigure (a), it has $\Sigma_{1,{\cal Q}}^{2}=2$
and $\Sigma_{2,{\cal Q}}^{2}=1.5$, and in subfigure (b), it has $\Sigma_{1,{\cal Q}}^{2}=0.6$
and $\Sigma_{2,{\cal Q}}^{2}=0.5$ (whereas $\Sigma_{1,{\cal P}}^{2}=1.3$
and $\Sigma_{2,{\cal P}}^{2}=0.2$ for ${\cal P}$). In this figure,
``Exp.'' indicates the simulation results, and ``Pred.'' indicates
the theoretical prediction. For implementation details, see Appendix
\ref{sec:Simulation-details}. Observe that the reconstruction error
does not converge to zero in subfigure (a), in which case $\Sigma_{1,{\cal Q}}^{2}d_{1}+\Sigma_{2,{\cal Q}}^{2}d_{2}\protect\neq\Sigma_{1,{\cal P}}^{2}d_{1}+\Sigma_{2,{\cal P}}^{2}d_{2}$.
In subfigure (b), we have $\Sigma_{1,{\cal Q}}^{2}d_{1}+\Sigma_{2,{\cal Q}}^{2}d_{2}=\Sigma_{1,{\cal P}}^{2}d_{1}+\Sigma_{2,{\cal P}}^{2}d_{2}$
and the reconstruction error converges to zero.}

\label{fig:tanh_twoblks_loss_outsample}
\end{figure}

\begin{figure}
\begin{centering}
\subfloat[]{\begin{centering}
\includegraphics[width=0.45\columnwidth]{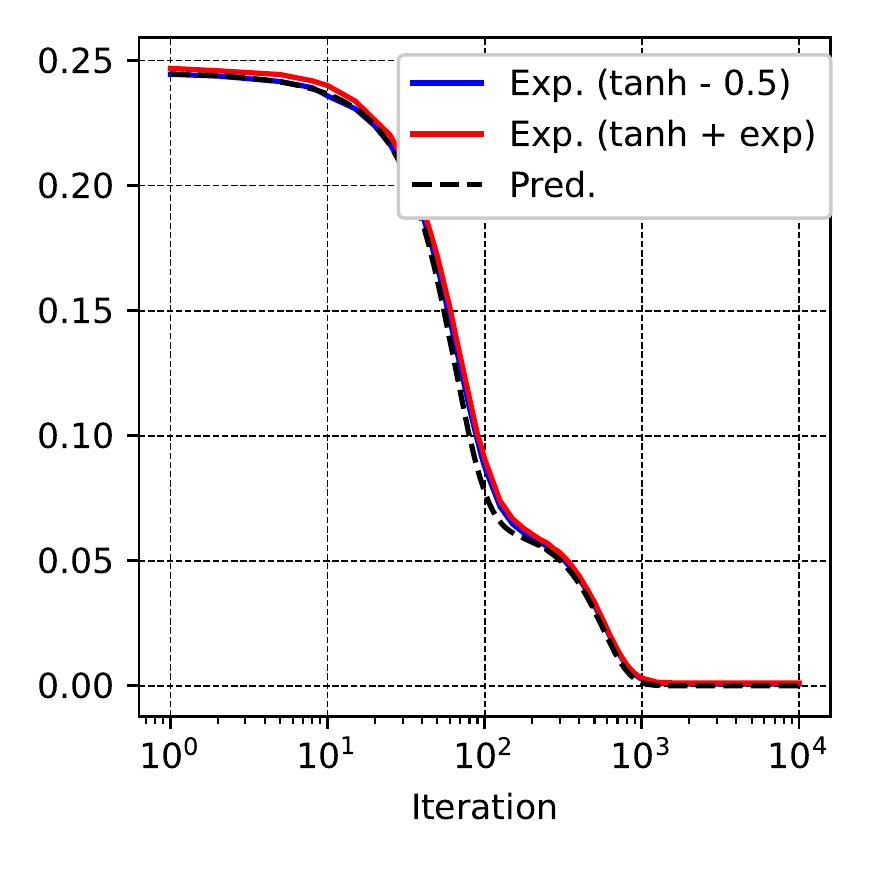}
\par\end{centering}
}\subfloat[]{\begin{centering}
\includegraphics[width=0.45\columnwidth]{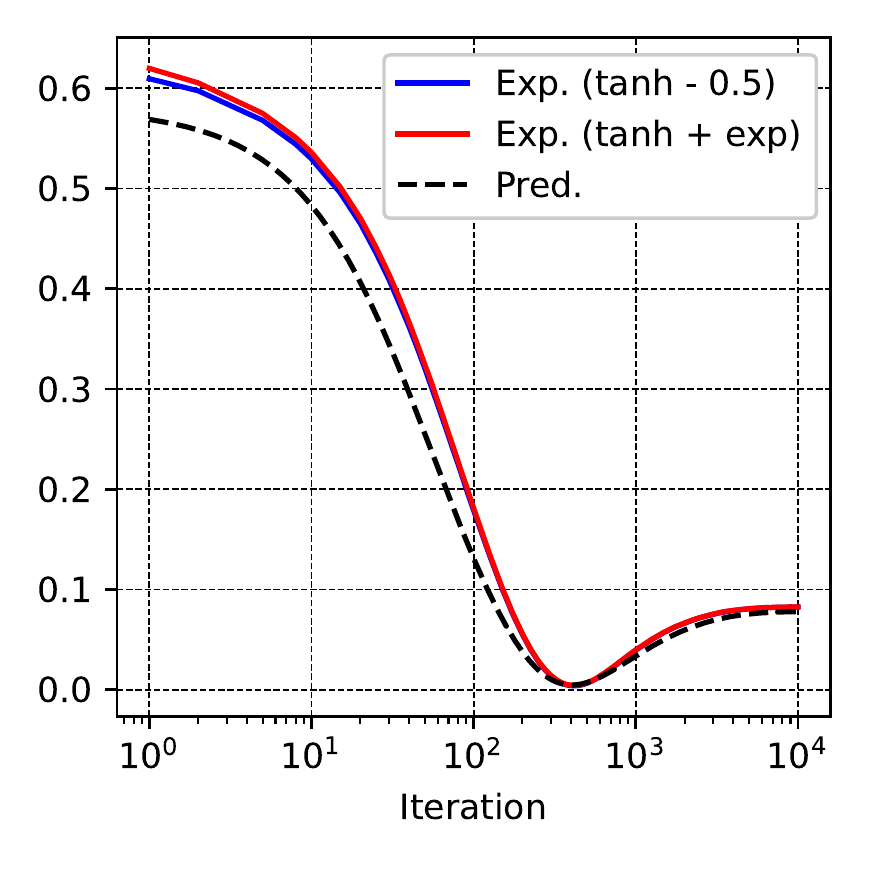}
\par\end{centering}
}
\par\end{centering}
\begin{centering}
\subfloat[]{\begin{centering}
\includegraphics[width=0.51\columnwidth]{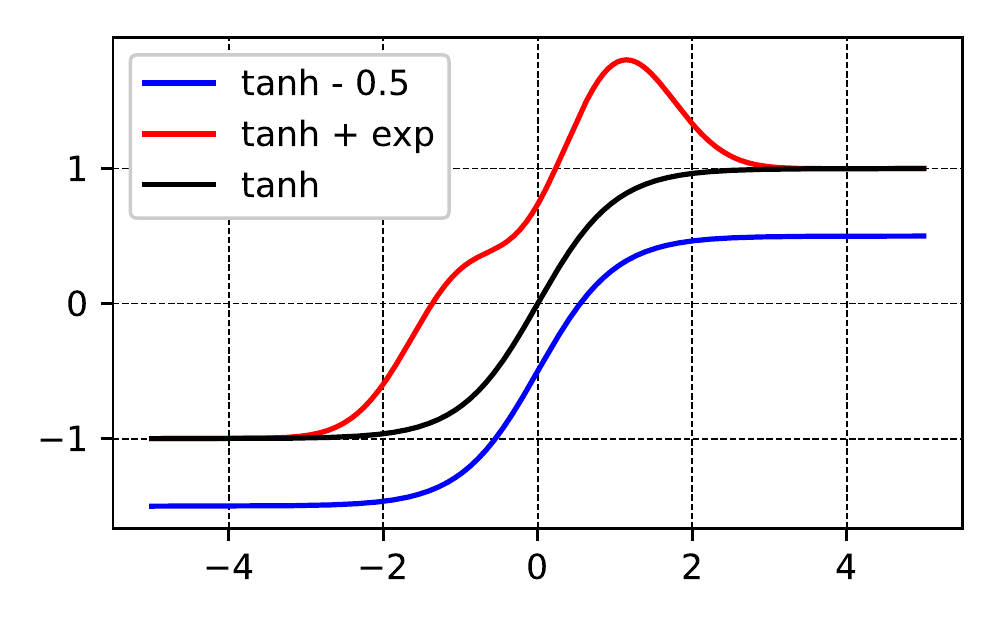}
\par\end{centering}
}
\par\end{centering}
\caption{Autoencoders with Gaussian data and activations in the same equivalence
class as $\tanh$ (Result \ref{res:Bdd_act_setting_simplified}).
In subfigures (a) and (b), we plot the evolution of the reconstruction
error in two different settings. In subfigure (c), we plot the activation
functions. The setup of (a) is the same as Fig. \ref{fig:tanh_twoblks_loss_theta}.(a),
and the setup of (b) is the same as Fig. \ref{fig:tanh_twoblks_loss_theta}.(c).
Here ``Exp.'' indicates the simulation results, ``$\tanh-0.5$''
indicates $\sigma\left(u\right)=\tanh\left(u\right)-0.5$, ``$\tanh+{\rm exp}$''
indicates $\sigma\left(u\right)=\tanh\left(u\right)+\exp(-\left(u-1\right)^{2})+\exp(-\left(u+1\right)^{2})$,
and ``Pred.'' indicates the theoretical prediction computed based
on $\sigma=\tanh$. For more details, see Appendix \ref{sec:Simulation-details}.}

\label{fig:tanh_twoblks_equiv}
\end{figure}

\subsection{Dynamics of weight-tied autoencoders: Real data\label{subsec:contrib-real-data}}

Our theoretical predictions so far have assumed Gaussian data. Here
we show experimentally that these predictions capture surprisingly
well the learning dynamics of the autoencoder on real data, in particular
the MNIST data, despite the fact that it is far from being Gaussian.
We show this for the particular setting with ReLU activation, since
Results \ref{res:ReLU_setting_simplified} and \ref{res:ReLU_setting_2stage_simplified}
allow for almost arbitrary spectrum of the data covariance matrix
and hence we can estimate this matrix and apply the given formulas.
We plot the results in Fig. \ref{fig:MNIST_reg_loss_theta}, \ref{fig:MNIST_reg_subsampled}
and \ref{fig:MNIST_noreg_loss_theta} for simulations on the MNIST
data. See also Appendix \ref{sec:Simulation-details} for the experimental
setups.

In Appendix \ref{sec:Simulation-details}, we plot the spectrum of
the MNIST data set's estimated covariance matrix. Observe the fast
decay of the spectrum, while we recall that Results \ref{res:ReLU_setting_simplified}
and \ref{res:ReLU_setting_2stage_simplified} require a sufficiently
slow decay. It is interesting that we can observe a reasonable fit
of the theoretical predictions with the experimental results in Fig.
\ref{fig:MNIST_reg_loss_theta}, \ref{fig:MNIST_reg_subsampled} and
\ref{fig:MNIST_noreg_loss_theta}.

Remarkably the agreement extends beyond the learning curves: our theory
predicts well what the autoencoder actually learns when it is trained
on MNIST. More specifically, as demonstrated in Fig. \ref{fig:MNIST_reg_loss_theta}
and \ref{fig:MNIST_noreg_loss_theta}, depending on the regularization,
the trained autoencoder exhibits a spectrum of behaviors: it can perform
a certain degree of representation learning when there is regularization,
and it can also learn an identity function and no representation at
the other extreme when there is no regularization. This agrees well
with our theoretical prediction.

This remarkable agreement leads us to the conjecture on a universality
phenomenon: our theory should extend to a broad class of data distributions
that have zero mean and share the same covariance. The work \cite{ng2004feature}
made a relevant observation -- without proof -- that for a variety
of machine learning models, including feedforward neural networks
trained with gradient descent and initialized with independent Gaussian
weights, the model output is generally insensitive w.r.t. rotational
transformations that act on the input. While it does not directly
prove our conjecture, it gives another encouraging piece of evidence
towards the conjecture.

We also refer to Appendix \ref{sec:Simulation-details}, where we
demonstrate that there is little loss in the reconstruction quality
incurred by the two-staged process.

\begin{figure}
\begin{centering}
\subfloat[]{\begin{centering}
\includegraphics[width=0.45\columnwidth]{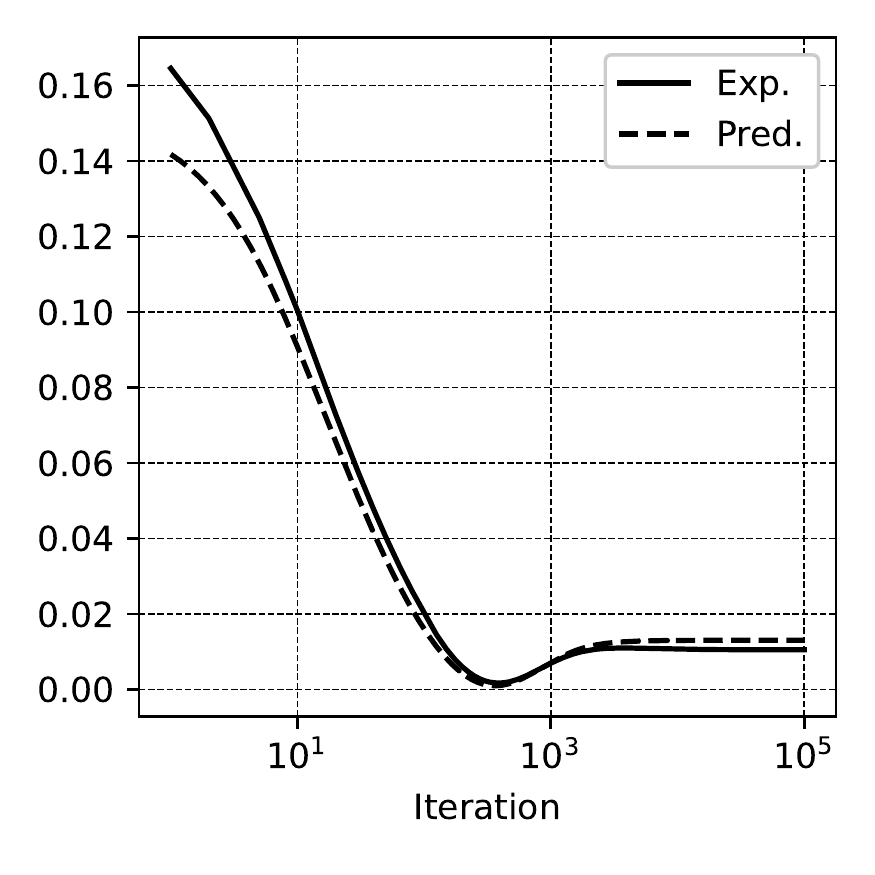}
\par\end{centering}
}\subfloat[]{\begin{centering}
\includegraphics[width=0.45\columnwidth]{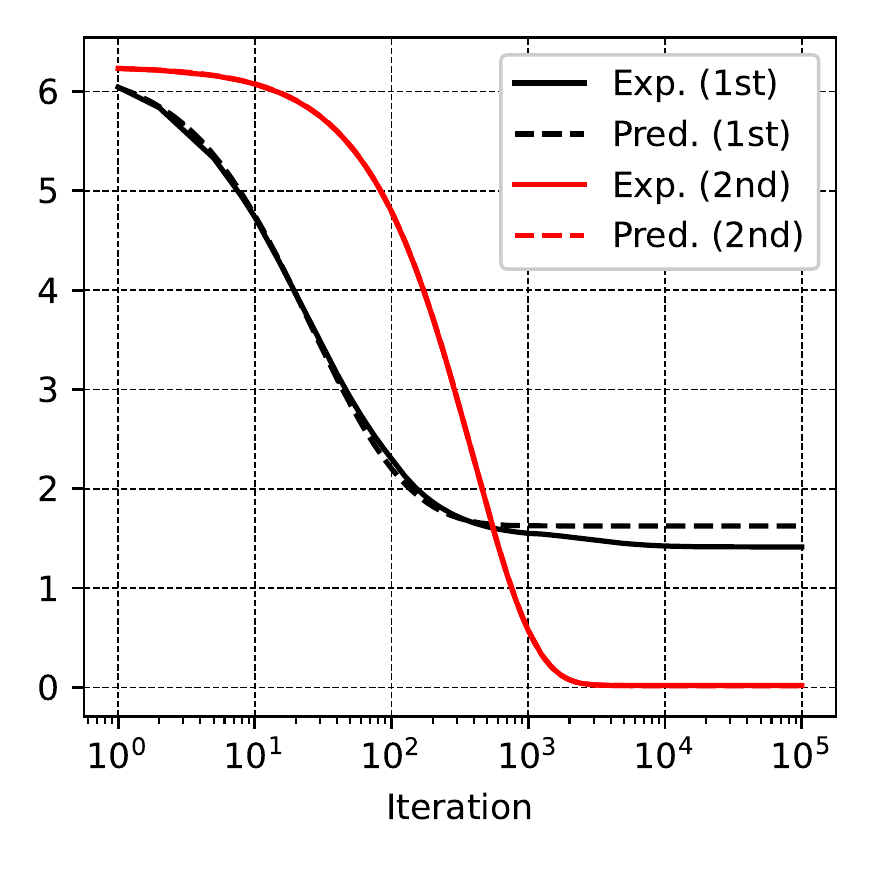}
\par\end{centering}
}
\par\end{centering}
\begin{centering}
\subfloat[]{\begin{centering}
\includegraphics[width=0.9\columnwidth]{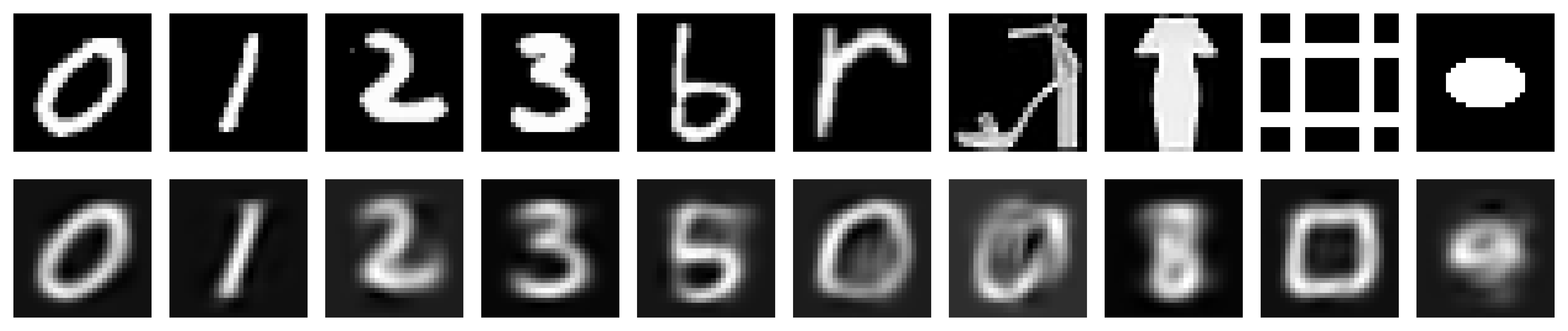}
\par\end{centering}
}
\par\end{centering}
\caption{Autoencoder with ReLU activation and MNIST data, with regularization.
Setup: $\lambda=0.2$, $r_{0}=2.5$, $\epsilon=0.003$ and $N=20000$.\\(a):
the reconstruction error versus the SGD iteration. Here ``Exp.''
indicates the simulation results, and ``Pred.'' indicates the theoretical
prediction computed using the formulas given in Result \ref{res:ReLU_setting_simplified}.
For more details, see Appendix \ref{sec:Simulation-details}.\\(b):
the normalized squared norm of the first 10-dimensional subspace's
weight (tagged ``1st'') and the second 774-dimensional subspace's
weight (tagged ``2nd''). Since the spectrum of MNIST data concentrates
in the first 10 principal subspaces, our theory predicts these subspaces
would not be removed by the regularization. This is reflected by plot
(b), where the normalized squared norm of the weight of these subspaces
converges to a non-zero value, whereas the other converges to zero.\\(c):
the first row shows four MNIST digit test samples and six non-digit
samples, and the second row shows their respective reconstructions
at iteration $10^{5}$. Note that the model is not trained with any
non-digit samples. Since only the projection onto the first few principal
subspaces of the MNIST spectrum is retained, the reconstructions of
the non-digit samples show several features of digits and are hardly
recognizable. The reconstructions of the digit samples are recognizable,
but blurry due to the shrinkage effect of the regularization.}

\label{fig:MNIST_reg_loss_theta}
\end{figure}

\begin{figure}
\begin{centering}
\includegraphics[width=0.7\columnwidth]{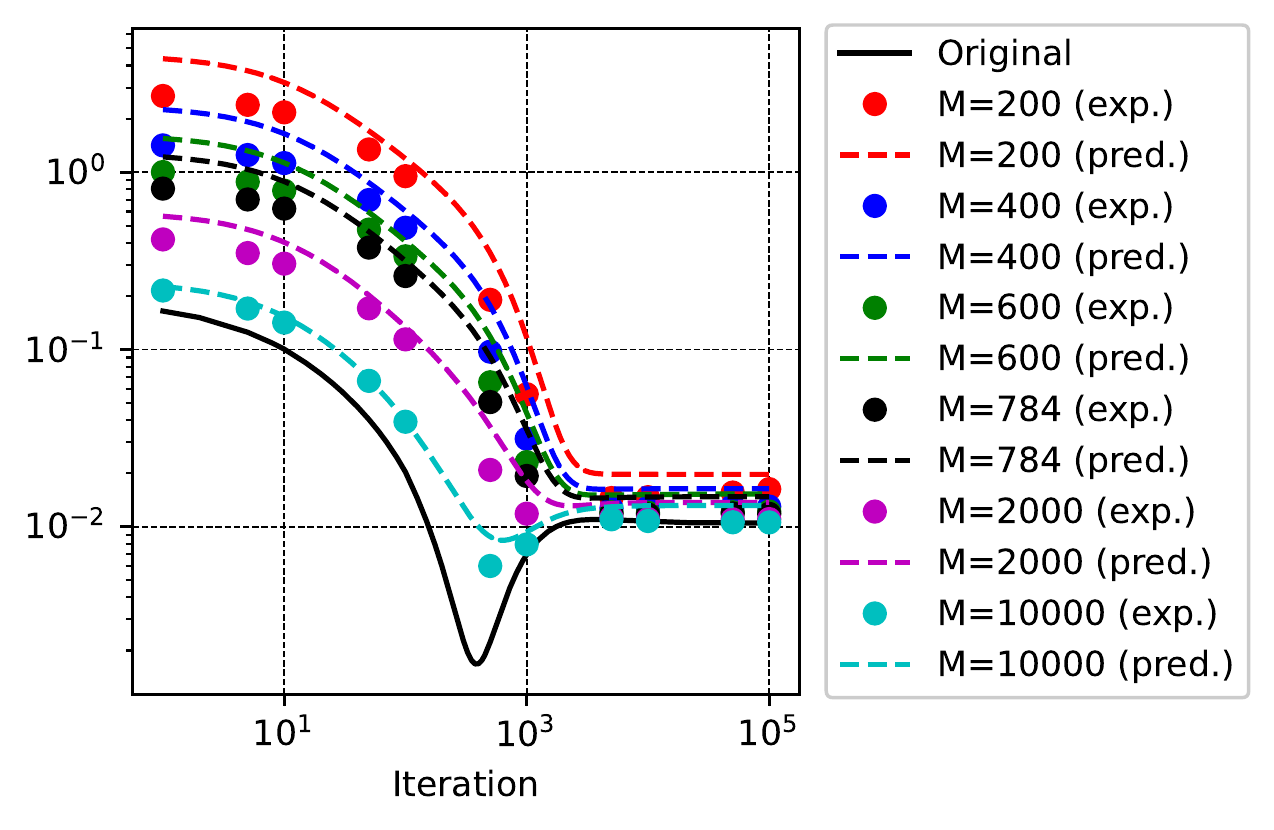}
\par\end{centering}
\caption{Autoencoder with ReLU activation and MNIST data, with regularization.
Same setup as Fig. \ref{fig:MNIST_reg_loss_theta}. The reconstruction
error is plotted against the SGD iteration, for the original autoencoder
(tagged as ``original''), as well as several derived autoencoders
constructed by the two-staged process with different numbers of sampled
neurons $M$ at different SGD iterations. Here ``exp.'' indicates
the simulation results, and ``pred.'' indicates the theoretical
prediction computed using the formulas given in Result \ref{res:ReLU_setting_2stage_simplified}.
For more details, see Appendix \ref{sec:Simulation-details}. At convergence,
the increase in the reconstruction error is negligible already at
$M=400$, which is a significant reduction from the image dimension
of $28\times28=784$.}

\label{fig:MNIST_reg_subsampled}
\end{figure}

\begin{figure}
\begin{centering}
\subfloat[]{\begin{centering}
\includegraphics[width=0.45\columnwidth]{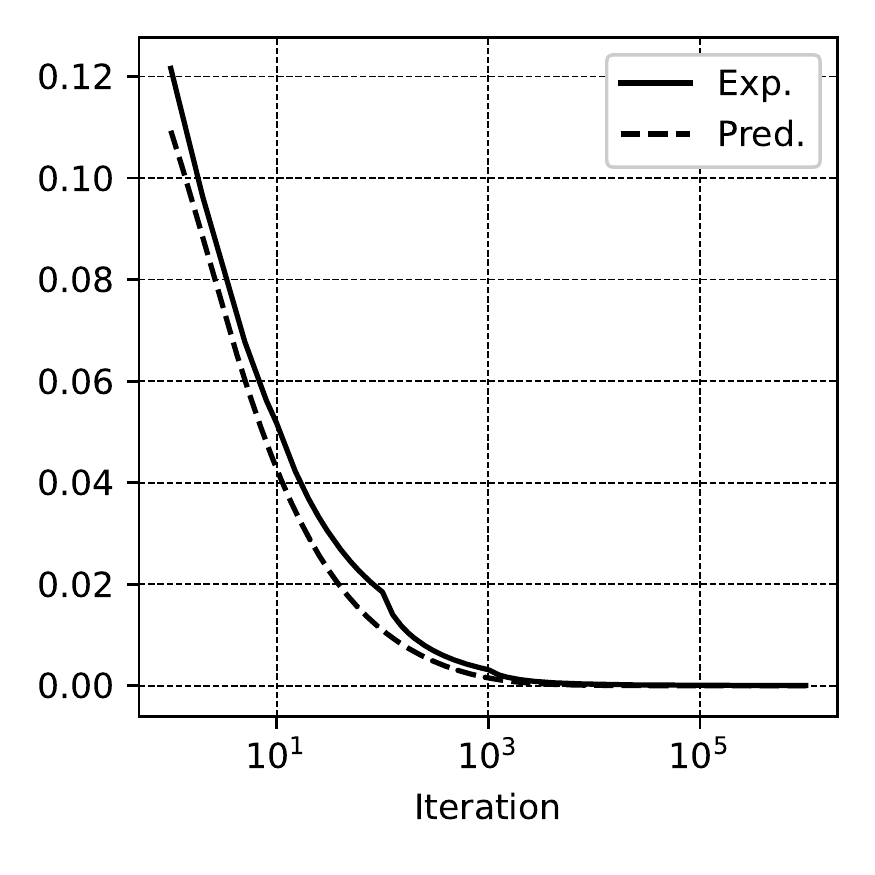}
\par\end{centering}
}\subfloat[]{\begin{centering}
\includegraphics[width=0.45\columnwidth]{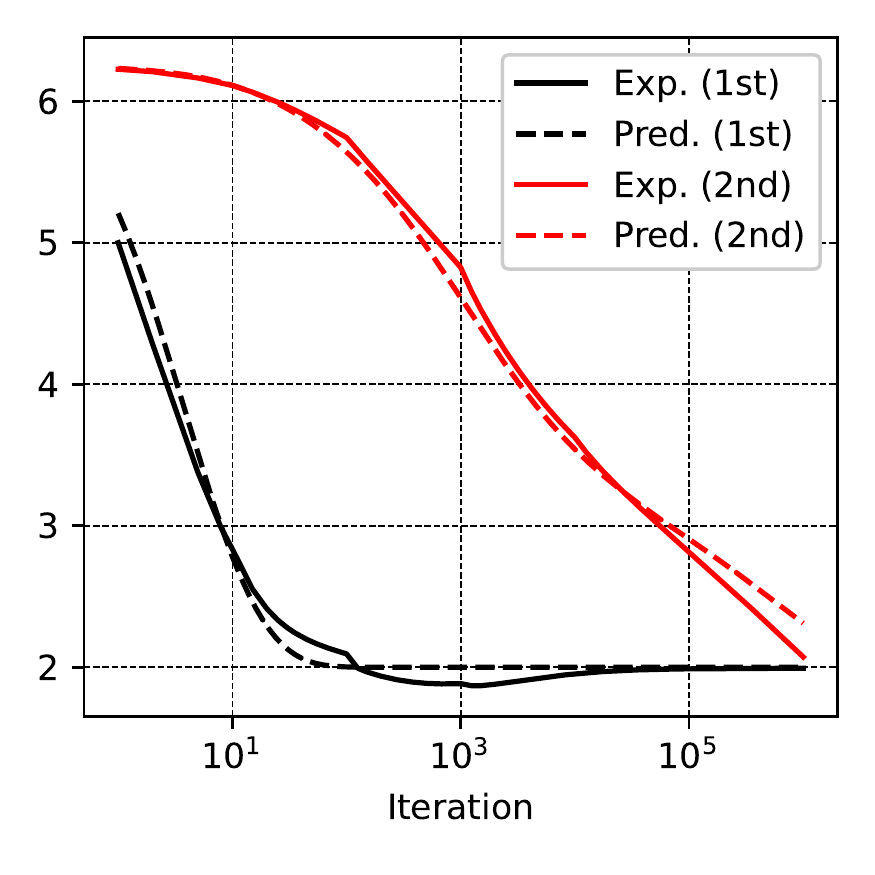}
\par\end{centering}
}
\par\end{centering}
\begin{centering}
\subfloat[]{\begin{centering}
\includegraphics[width=0.9\columnwidth]{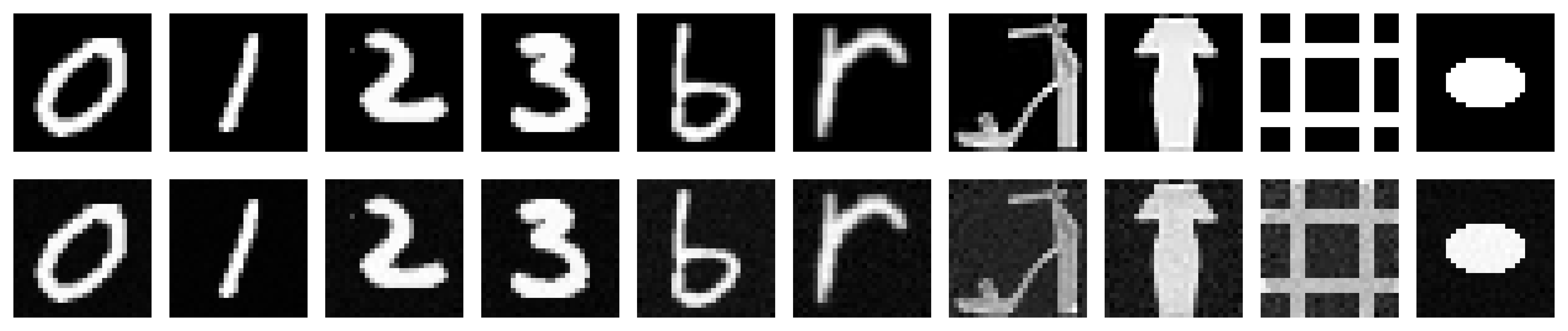}
\par\end{centering}
}
\par\end{centering}
\caption{Autoencoder with ReLU activation and MNIST data, no regularization.
Setup: $\lambda=0$, $r_{0}=2.5$, $\epsilon=0.02$ and $N=20000$.\\(a):
the reconstruction error versus the SGD iteration. Here ``Exp.''
indicates the simulation results, and ``Pred.'' indicates the theoretical
prediction computed using the formulas given in Result \ref{res:ReLU_setting_simplified}.
For more details, see Appendix \ref{sec:Simulation-details}.\\(b):
the normalized squared norm of the first 10-dimensional subspace's
weight (tagged ``1st'') and the second 774-dimensional subspace's
weight (tagged ``2nd''). Since the spectrum of MNIST data concentrates
in the first 10 principal subspaces, the learning speed of the second
subspace would be much slower, as predicted by our theory and demonstrated
by the plot.\\(c): the first row shows four MNIST digit test samples
and six non-digit samples, and the second row shows their respective
reconstructions at iteration $10^{6}$. As predicted by our theory,
the unregularized autoencoder has a tendency to learn an identity
function: the non-digit samples are well reconstructed, even though
the model is not trained with any non-digit samples and we stop training
when the learning of the second subspace has not fully converged.
This is a stark contrast with regularized autoencoders, as demonstrated
in Fig. \ref{fig:MNIST_reg_loss_theta}.}

\label{fig:MNIST_noreg_loss_theta}
\end{figure}

\subsection{Mean field limit for multi-output two-layer networks\label{subsec:contrib-MF}}

All theoretical results stated in Section \ref{subsec:contrib-Gaussian-data}
are, in fact, applications of a result which establishes the mean
field limit for multi-output two-layer neural networks. We first describe
the framework in the following.

\paragraph{Two-layer neural network.}

Given a dimension vector $\mathfrak{Dim}=\left(D,D_{{\rm in}},D_{{\rm out}}\right)$,
we consider the following two-layer network with $N$ neurons:
\begin{equation}
\hat{\boldsymbol{y}}_{N}\left(\boldsymbol{x};\Theta\right)=\frac{1}{N}\sum_{i=1}^{N}\sigma_{*}\left(\boldsymbol{x};\kappa\boldsymbol{\theta}_{i}\right),\label{eq:two_layers_nn}
\end{equation}
where $\Theta=\left(\boldsymbol{\theta}_{i}\right)_{i=1}^{N}$ is
the collection of weights $\theta_{i}\in\mathbb{R}^{D}$, $\boldsymbol{x}\in\mathbb{R}^{D_{{\rm in}}}$
is the input, $\hat{\boldsymbol{y}}_{N}\left(\boldsymbol{x};\Theta\right)\in\mathbb{R}^{D_{{\rm out}}}$
is the output and $\sigma_{*}:\;\mathbb{R}^{D_{{\rm in}}}\times\mathbb{R}^{D}\to\mathbb{R}^{D_{{\rm out}}}$
is the activation function. Let $\mathfrak{Dim}=\left(D,D_{{\rm in}},D_{{\rm out}}\right)$
the dimension vector. Here $\kappa=\kappa\left(\mathfrak{Dim}\right)\geq1$
is a factor that defines the scaling of the weights w.r.t. the dimension.
In order to obtain a non-trivial high-dimensional behavior, this scaling
has to be chosen in a suitable way, as to be discussed later (Section
\ref{subsec:Autoencoder-example}). We assume that the data is distributed
as $\boldsymbol{z}\equiv\left(\boldsymbol{x},\boldsymbol{y}\right)\sim{\cal P}\in\mathscr{P}\left(\mathbb{R}^{D_{{\rm in}}}\times\mathbb{R}^{D_{{\rm out}}}\right)$.
We train the network with stochastic gradient descent (SGD). At each
SGD iteration $k$, we draw independently the data $\boldsymbol{z}^{k}\equiv\left(\boldsymbol{x}^{k},\boldsymbol{y}^{k}\right)\sim{\cal P}$.
Let $\Theta^{k}=\left(\boldsymbol{\theta}_{i}^{k}\right)_{i=1}^{N}$
be the collection of weights at iteration $k$. Given an initialization
$\Theta^{0}$, we perform SGD w.r.t. the squared loss with regularization:
\begin{equation}
\boldsymbol{\theta}_{i}^{k+1}=\boldsymbol{\theta}_{i}^{k}-\epsilon\xi\left(k\epsilon\right)N\nabla_{\boldsymbol{\theta}_{i}}{\rm Loss}\left(\boldsymbol{z}^{k};\Theta^{k}\right),\qquad i=1,...,N,\label{eq:SGD}
\end{equation}
with the training loss being
\[
{\rm Loss}\left(\boldsymbol{z};\Theta\right)=\frac{1}{2}\left\Vert \hat{\boldsymbol{y}}_{N}\left(\boldsymbol{x};\Theta\right)-\boldsymbol{y}\right\Vert _{2}^{2}+\frac{1}{N}\sum_{i=1}^{N}\Lambda\left(\boldsymbol{\theta}_{i};\boldsymbol{z}\right).
\]
Here $\epsilon>0$ is the learning rate, $\xi:\;\mathbb{R}_{\geq0}\to\mathbb{R}_{\geq0}$
is the learning rate schedule, and $\Lambda:\;\mathbb{R}^{D}\times\mathbb{R}^{D_{{\rm in}}}\times\mathbb{R}^{D_{{\rm out}}}\to\mathbb{R}$
is the regularizer. We let $\rho_{N}^{k}$ denote the empirical distribution
of $\Theta^{k}$, i.e.
\[
\rho_{N}^{k}=\frac{1}{N}\sum_{i=1}^{N}\delta_{\boldsymbol{\theta}_{i}^{k}}.
\]

\paragraph{Mean field limit.}

We define the mean field risk, which is a measure of the performance,
as
\begin{align}
{\cal R}\left(\rho\right) & =\mathbb{E}_{{\cal P}}\left\{ \frac{1}{2}\left\Vert \boldsymbol{y}-\int\sigma_{*}\left(\boldsymbol{x};\kappa\boldsymbol{\theta}\right)\rho\left({\rm d}\boldsymbol{\theta}\right)\right\Vert _{2}^{2}\right\} ,\qquad\rho\in\mathscr{P}\left(\mathbb{R}^{D}\right).\label{eq:MF_risk}
\end{align}
We also consider the following continuous-time evolution, for a given
initialization $\rho^{0}\in\mathscr{P}\left(\mathbb{R}^{D}\right)$:
\[
\partial_{t}\rho^{t}\left(\boldsymbol{\theta}\right)=\xi\left(t\right){\rm div}_{\boldsymbol{\theta}}\left(\rho^{t}\left(\boldsymbol{\theta}\right)\nabla_{\boldsymbol{\theta}}\left[V\left(\boldsymbol{\theta}\right)+W\left(\boldsymbol{\theta};\rho^{t}\right)\right]\right),
\]
in which we define:
\begin{align*}
V\left(\boldsymbol{\theta}\right) & =\mathbb{E}_{{\cal P}}\left\{ -\left\langle \sigma_{*}\left(\boldsymbol{x};\kappa\boldsymbol{\theta}\right),\boldsymbol{y}\right\rangle +\Lambda\left(\boldsymbol{\theta},\boldsymbol{z}\right)\right\} ,\\
W\left(\boldsymbol{\theta};\rho\right) & =\int U\left(\boldsymbol{\theta},\boldsymbol{\theta}'\right)\rho\left({\rm d}\boldsymbol{\theta}'\right),\\
U\left(\boldsymbol{\theta},\boldsymbol{\theta}'\right) & =\mathbb{E}_{{\cal P}}\left\{ \left\langle \sigma_{*}\left(\boldsymbol{x};\kappa\boldsymbol{\theta}\right),\sigma_{*}\left(\boldsymbol{x};\kappa\boldsymbol{\theta}'\right)\right\rangle \right\} .
\end{align*}
The above evolution should be interpreted in weak sense, namely $\left(\rho^{t}\right)_{t\geq0}$
is a solution if for any bounded differentiable test function $\phi:\;\mathbb{R}^{D}\to\mathbb{R}$
with bounded gradient:
\[
\frac{{\rm d}}{{\rm d}t}\int\phi\left(\boldsymbol{\theta}\right)\rho^{t}\left({\rm d}\boldsymbol{\theta}\right)=-\xi\left(t\right)\int\left\langle \nabla\phi\left(\boldsymbol{\theta}\right),\nabla_{\boldsymbol{\theta}}\left[V\left(\boldsymbol{\theta}\right)+W\left(\boldsymbol{\theta};\rho^{t}\right)\right]\right\rangle \rho^{t}\left({\rm d}\boldsymbol{\theta}\right).
\]
We shall alternatively work with an equivalent definition of $\left(\rho^{t}\right)_{t\geq0}$,
described by the following nonlinear dynamics:
\begin{equation}
\frac{{\rm d}}{{\rm d}t}\hat{\boldsymbol{\theta}}^{t}=-\xi\left(t\right)\nabla_{\boldsymbol{\theta}}\left[V\left(\hat{\boldsymbol{\theta}}^{t}\right)+W\left(\hat{\boldsymbol{\theta}}^{t};\rho^{t}\right)\right],\qquad\rho^{t}={\rm Law}\left(\hat{\boldsymbol{\theta}}^{t}\right),\qquad\hat{\boldsymbol{\theta}}^{0}\sim\rho^{0}.\label{eq:ODE}
\end{equation}
This dynamics is self-contained, i.e. $\left(\rho^{t}\right)_{t\geq0}$
can be determined from solely Eq. (\ref{eq:ODE}). Observe that given
$\left(\rho^{t}\right)_{t\geq0}$, Eq. (\ref{eq:ODE}) also describes
a (randomly initialized) ODE for the trajectory $\left(\hat{\boldsymbol{\theta}}^{t}\right)_{t\geq0}$,
where $\hat{\boldsymbol{\theta}}^{0}$ is drawn at random according
to $\rho^{0}$. We shall refer to Eq. (\ref{eq:ODE}) as the \textit{nonlinear
dynamics} when discussing $\left(\rho^{t}\right)_{t\geq0}$ and as
the \textit{ODE} when discussing $\left(\hat{\boldsymbol{\theta}}^{t}\right)_{t\geq0}$
on $\left(\rho^{t}\right)_{t\geq0}$.

The basic idea of the mean field limit is that one can track the evolution
of the neural network with its mean field limit. See Section \ref{subsec:Prop-Chaos-result-simplified}
for the result statement. In certain cases, the mean field limit is
analytically tractable, hence aiding the study of the neural network.
This is the case for the autoencoders considered in Section \ref{subsec:contrib-Gaussian-data}.

\subsubsection{The autoencoder example\label{subsec:Autoencoder-example}}

We briefly revisit the $\ell_{2}$-regularized autoencoder described
in Section \ref{subsec:contrib-Gaussian-data}. It is easy to see
that it fits into the framework introduced above. Indeed, the dimensions
$D=D_{{\rm in}}=D_{{\rm out}}=d$ (hence $\mathfrak{Dim}=\left(d,d,d\right)$),
the data $\boldsymbol{y}=\boldsymbol{x}\sim{\cal P}$, the activation
is given by $\sigma_{*}\left(\boldsymbol{x};\kappa\boldsymbol{\theta}\right)=\kappa\boldsymbol{\theta}\sigma\left(\left\langle \kappa\boldsymbol{\theta},\boldsymbol{x}\right\rangle \right)$
with $\kappa=\sqrt{d}$, the regularizer $\Lambda\left(\boldsymbol{\theta};\cdot\right)=\left\Vert \boldsymbol{\theta}\right\Vert _{2}^{2}$
and the learning rate schedule $\xi\left(\cdot\right)=1$.

To make sense of the choice of the factor $\kappa$, we consider $\sigma$
being the ReLU with the following ansatz for the neurons: we generate
the neurons i.i.d. $\boldsymbol{\theta}_{i}\sim\mathsf{N}\left(0,\left(2/d\right)\boldsymbol{I}_{d}\right)$.
With large $N$, we have:
\[
\hat{\boldsymbol{y}}_{N}\left(\boldsymbol{x};\Theta\right)=\frac{1}{N}\sum_{i=1}^{N}\kappa\boldsymbol{\theta}_{i}\sigma\left(\left\langle \kappa\boldsymbol{\theta}_{i},\boldsymbol{x}\right\rangle \right)\approx\mathbb{E}_{\boldsymbol{\theta}_{i}}\left\{ \kappa\boldsymbol{\theta}_{i}\sigma\left(\left\langle \kappa\boldsymbol{\theta}_{i},\boldsymbol{x}\right\rangle \right)\right\} =2\mathbb{E}_{\boldsymbol{\theta}_{i}}\left\{ \sigma'\left(\left\langle \kappa\boldsymbol{\theta}_{i},\boldsymbol{x}\right\rangle \right)\right\} \boldsymbol{x}=\boldsymbol{x}
\]
for any $\boldsymbol{x}\in\mathbb{R}^{d}$, by Stein's lemma. On one
hand, under this ansatz, the autoencoder hence recovers the identity
function -- the same result as a trained unregularized autoencoder
in Section \ref{subsec:contrib-AE-ReLU-SGD}. On the other hand, we
also observe that $\left\Vert \boldsymbol{\theta}_{i}\right\Vert _{2}\leq C$
independent of $\mathfrak{Dim}$. The choice of $\kappa$ thus allows
reasonable functioning of the autoencoder, while maintaining $\left\Vert \boldsymbol{\theta}_{i}\right\Vert _{2}\leq C$.
More generally, this latter ``$\mathfrak{Dim}$-independent'' property
holds for the mean field limit: for $\boldsymbol{\theta}\sim\rho^{t}$,
we have $\left\Vert \boldsymbol{\theta}\right\Vert _{2}\leq C$ in
an appropriate sense.

\subsubsection{Main result\label{subsec:Prop-Chaos-result-simplified}}

We recall the mean field risk ${\cal R}\left(\rho\right)$ in (\ref{eq:MF_risk}),
the empirical distribution $\rho_{N}^{k}$ of the neural network's
collection of weights $\Theta^{k}$ at SGD iteration $k$ and note
that
\[
{\cal R}\left(\rho_{N}^{k}\right)=\mathbb{E}_{{\cal P}}\left\{ \frac{1}{2}\left\Vert \hat{\boldsymbol{y}}_{N}\left(\boldsymbol{x};\Theta^{k}\right)-\boldsymbol{y}\right\Vert _{2}^{2}\right\} .
\]
In general, the above identity holds for any collection of parameters
(replacing $\Theta^{k}$) and its respective empirical distribution
(replacing $\rho_{N}^{k}$). In the setting of the autoencoders (Section
\ref{subsec:contrib-Gaussian-data}), one easily recognizes that ${\rm RecErr}\left(\Theta^{k}\right)={\cal R}\left(\rho_{N}^{k}\right)$.

Our main result connects $\rho^{t}$ of the mean field limit with
$\Theta^{t/\epsilon}$ of the neural network.
\begin{res}[Two-layer network -- Informal and simplified]
\label{res:propChaos-simplified}Consider the two-layer neural network
and its mean field limit as described in Section \ref{subsec:contrib-MF}.
Suppose that we generate the SGD initialization $\Theta^{0}=\left(\boldsymbol{\theta}_{i}^{0}\right)_{i\leq N}\sim_{{\rm i.i.d.}}\rho^{0}$.
Also assume that $\kappa=O\left({\rm poly}\left(\mathfrak{Dim}\right)\right)$.

Under certain regularity conditions, for $N\gg{\rm poly}\left(\mathfrak{Dim}\right)$
and $\epsilon\ll1/{\rm poly}\left(\mathfrak{Dim}\right)$ and a finite
$t\in\mathbb{N}\epsilon$, $t\leq C$, with high probability,
\[
\rho_{N}^{t/\epsilon}\approx\rho^{t},\qquad{\cal R}\left(\rho_{N}^{t/\epsilon}\right)\approx{\cal R}\left(\rho^{t}\right).
\]
Furthermore, given a positive integer $M$, construct a set of indices
$\left(h\left(i\right)\right)_{i\leq M}$ by sampling independently
at random $h\left(i\right)$ from $\left[N\right]$, for each $i\in\left[M\right]$.
Then with high probability,
\[
{\cal R}\left(\nu_{M}^{t/\epsilon}\right)\approx{\cal R}\left(\bar{\nu}_{M}^{t}\right),
\]
where we define $\nu_{M}^{t/\epsilon}=\left(1/M\right)\cdot\sum_{i=1}^{M}\delta_{\boldsymbol{\theta}_{h\left(i\right)}^{t/\epsilon}}$
and $\bar{\nu}_{M}^{t}=\left(1/M\right)\cdot\sum_{i=1}^{M}\delta_{\bar{\boldsymbol{\theta}}_{h\left(i\right)}^{t}}$
for $\left(\bar{\boldsymbol{\theta}}_{i}^{t}\right)_{i\leq N}\sim_{{\rm i.i.d.}}\rho^{t}$.

In the above, the constants $C$ do not depend on $N$, $\epsilon$
or the dimension vector $\mathfrak{Dim}$.
\end{res}

Exact details can be found in the statement of Theorem \ref{thm:propChaos}.
It can be observed that the conclusions of Results \ref{res:ReLU_setting_simplified},
\ref{res:ReLU_setting_2stage_simplified} and \ref{res:Bdd_act_setting_simplified}
are reminiscent of, and indeed consequences of, Result \ref{res:propChaos-simplified}.
It should also be noted that the required regularity conditions of
Result \ref{res:propChaos-simplified} are non-trivial. Indeed a major
technical part of this work is devoted to verifying these conditions
for the autoencoder settings.

This result is in line with the previous works on two-layer networks
\cite{mei2018mean,mei2019mean}. A key difference with respect to
the work \cite{mei2019mean} is that in \cite{mei2019mean}, the number
of neurons $N$ can be independent of $\mathfrak{Dim}$, whereas here
we require $N\gg{\rm poly}\left(\mathfrak{Dim}\right)$. This difference
is due to the differences between the setups and poses an interesting,
yet highly non-trivial technical challenge, which requires a new proof
strategy. We delve into this issue in the next section.

\subsubsection{Technical challenge\label{subsec:Technical-challenge}}

We explain here the key technical challenge in our setting, compared
to the work \cite{mei2019mean}. Both \cite{mei2019mean} and our
work employ a propagation of chaos argument, following \cite{sznitman1991topics}.
To fix ideas, let us give a heuristic treatment of a simplified problem.
Consider the following continuous-time dynamics of $N$ particles
$\left(\boldsymbol{\theta}_{j}^{t}\right)_{j\leq N}$:
\[
\frac{{\rm d}}{{\rm d}t}\boldsymbol{\theta}_{i}^{t}=\boldsymbol{f}\left(\boldsymbol{\theta}_{i}^{t};\rho_{N}^{t}\right),\qquad\rho_{N}^{t}=\frac{1}{N}\sum_{j=1}^{N}\delta_{\boldsymbol{\theta}_{j}^{t}}.
\]
The mean field limit counterpart is given by the following nonlinear
dynamics:
\[
\frac{{\rm d}}{{\rm d}t}\hat{\boldsymbol{\theta}}^{t}=\boldsymbol{f}\left(\hat{\boldsymbol{\theta}}^{t};\rho^{t}\right),\qquad\rho^{t}={\rm Law}\left(\hat{\boldsymbol{\theta}}^{t}\right).
\]
The argument proceeds with the following coupling. We first generate
the initializations of the particles $\left(\boldsymbol{\theta}_{j}^{0}\right)_{j\leq N}\sim_{{\rm i.i.d.}}\rho^{0}$.
Then we obtain $N$ i.i.d. copies of the mean field dynamics:
\[
\frac{{\rm d}}{{\rm d}t}\bar{\boldsymbol{\theta}}_{i}^{t}=\boldsymbol{f}\left(\bar{\boldsymbol{\theta}}_{i}^{t};\rho^{t}\right),\qquad\bar{\boldsymbol{\theta}}_{i}^{0}=\boldsymbol{\theta}_{i}^{0},\qquad i=1,...,N.
\]
Note that $\left(\bar{\boldsymbol{\theta}}_{j}^{t}\right)_{j\leq N}\sim_{{\rm i.i.d.}}\rho^{t}$
for all time $t$. The goal is to approximate $\left(\boldsymbol{\theta}_{j}^{t}\right)_{j\leq N}$
with $\left(\bar{\boldsymbol{\theta}}_{j}^{t}\right)_{j\leq N}$.
The first step is to realize that
\[
\frac{{\rm d}}{{\rm d}t}\bar{\boldsymbol{\theta}}_{i}^{t}=\boldsymbol{f}\left(\bar{\boldsymbol{\theta}}_{i}^{t};\bar{\rho}_{N}^{t}\right)+\Theta\left(N^{-\gamma}\right),\qquad\bar{\rho}_{N}^{t}=\frac{1}{N}\sum_{j=1}^{N}\delta_{\bar{\boldsymbol{\theta}}_{j}^{t}},
\]
as a consequence of concentration of measure, for an absolute constant
$\gamma>0$. Next, the analysis of \cite{mei2018mean,mei2019mean}
compares $\boldsymbol{f}\left(\boldsymbol{\theta}_{i}^{t};\rho_{N}^{t}\right)$
with $\boldsymbol{f}\left(\bar{\boldsymbol{\theta}}_{i}^{t};\bar{\rho}_{N}^{t}\right)$:
\begin{equation}
\max_{i\leq N}\left\Vert \boldsymbol{f}\left(\boldsymbol{\theta}_{i}^{t};\rho_{N}^{t}\right)-\boldsymbol{f}\left(\bar{\boldsymbol{\theta}}_{i}^{t};\bar{\rho}_{N}^{t}\right)\right\Vert _{2}\leq L\max_{i\leq N}\left\Vert \boldsymbol{\theta}_{i}^{t}-\bar{\boldsymbol{\theta}}_{i}^{t}\right\Vert _{2},\label{eq:technical_challenge_L}
\end{equation}
for some constant $L>0$. Gronwall's lemma then yields the desired
approximation:
\[
\max_{i\leq N}\left\Vert \boldsymbol{\theta}_{i}^{t}-\bar{\boldsymbol{\theta}}_{i}^{t}\right\Vert _{2}\leq\Theta\left(N^{-\gamma}\right)\exp\left(Lt\right)\stackrel{N\to\infty}{\longrightarrow}0.
\]
In other words, this argument requires $N\gg\exp\left(CL\right)$.
In \cite{mei2019mean}, several structural assumptions are made so
that $L$ and thus the required $N$ are independent of the dimension
vector $\mathfrak{Dim}$. This is, however, not the case in our setting,
owing to the presence of $\kappa$ in Eq. (\ref{eq:two_layers_nn}).
In particular, a naive adaptation of the approach of \cite{mei2018mean,mei2019mean}
would result in $N\gg\exp\left(\mathfrak{Dim}^{O\left(1\right)}\right)$
even if $\kappa=O\left({\rm poly}\left(\mathfrak{Dim}\right)\right)$,
which is undesirable. Is it necessary that $N\gg\exp\left(\mathfrak{Dim}^{O\left(1\right)}\right)$
in our setting? Is it possible that $N$ can be made independent of
$\mathfrak{Dim}$?

Result \ref{res:propChaos-simplified} achieves the first positive
step in this quest, showing that $N\gg{\rm poly}\left(\mathfrak{Dim}\right)$
is sufficient. To that end, we take a different approach that is inspired
by analyses of vortex methods for Euler equations (see e.g. \cite{goodman1990convergence}).
The specific form of the gradient flow learning dynamics is important
for our analysis to hold. On the other hand, as observed in \cite{nguyen2020rigorous},
the analyses of \cite{mei2018mean,mei2019mean} are applicable to
more general $\boldsymbol{f}$ at the expense of certain stronger
structural assumptions.

We believe the requirement $N\gg{\rm poly}\left(\mathfrak{Dim}\right)$
is not a mere proof artifact. Recall that the collection of neurons
$\Theta^{t/\epsilon}$ is approximated by the measure $\rho^{t}$
of the mean field limit. Result \ref{res:ReLU_setting_2stage_simplified}
and the analysis in Section \ref{subsec:contrib_two_stage} show that,
in our autoencoder example with ReLU activation, already given knowledge
of $\rho^{t}$, we still need to sample $M\gg d$ neurons to guarantee
a good approximation, where we recall $d$ is the data dimension.
Indeed the sampling error component in Eq. (\ref{eq:thm_ReLU_2stage_simplified})
becomes significant if $M\ll d$. We conjecture that under a suitable
set of assumptions (in which $L$ from Eq. (\ref{eq:technical_challenge_L})
is still $\mathfrak{Dim}$-dependent and hence the main difficulty
is not artificially removed), the conclusions of Result \ref{res:propChaos-simplified}
can hold with $N\gg\mathfrak{Dim}$, a milder requirement than $N\gg{\rm poly}\left(\mathfrak{Dim}\right)$.
In fact, our analysis suggests an even bolder conjecture: $N\gg d_{{\rm eff}}$
is necessary, and under special circumstances, it is also sufficient,
where $d_{{\rm eff}}$ is a quantity characteristic of the data distribution
such that $d_{{\rm eff}}=O\left(\mathfrak{Dim}\right)$ generally
and $d_{{\rm eff}}=o\left(\mathfrak{Dim}\right)$ for certain data
distributions. It would be interesting to find a propagation of chaos
argument that proves the conjectures.

\section{Mean field limit of multi-output two-layer networks\label{sec:MF_two-layers}}

We recall the framework as described in Section \ref{subsec:contrib-MF}.
In particular, we recall the neural network (\ref{eq:two_layers_nn}),
its SGD learning dynamics (\ref{eq:SGD}) and its associated mean
field limit that is described via the nonlinear dynamics (\ref{eq:ODE}).

\subsection{Theorem statement}

In the following, we let the parameters $\kappa_{i}\geq1$, $i=1,2,...,6$,
to depend exclusively on $\mathfrak{Dim}=\left(D,D_{{\rm in}},D_{{\rm out}}\right)$.
We consider a finite terminal time $T$, and allow the constants $C$
(hidden in $\lesssim$) to depend on $T$ but not $N$, $\epsilon$
or $\mathfrak{Dim}$, such that $C$ is finite for finite $T$. Recalling
Eq. (\ref{eq:SGD}), we define:
\begin{align*}
\boldsymbol{F}_{i}\left(\Theta;\boldsymbol{z}\right) & =N\nabla_{\boldsymbol{\theta}_{i}}{\rm Loss}\left(\boldsymbol{z};\Theta\right)\\
 & =\kappa\nabla_{2}\sigma_{*}\left(\boldsymbol{x};\kappa\boldsymbol{\theta}_{i}\right)^{\top}\left(\hat{\boldsymbol{y}}_{N}\left(\boldsymbol{x};\Theta\right)-\boldsymbol{y}\right)+\nabla_{1}\Lambda\left(\boldsymbol{\theta}_{i},\boldsymbol{z}\right).
\end{align*}
We list below our assumptions:
\begin{enumerate}[{label=\textbf{[A.\arabic*]},ref=[A.\arabic*]}]
\item \label{enu:Assump_ODE}The initial law $\rho^{0}$ is such that for
$\boldsymbol{\theta}^{0}\sim\rho^{0}$, $\left\Vert \boldsymbol{\theta}^{0}\right\Vert _{2}$
is $C$-sub-Gaussian with $\mathbb{E}\left\{ \left\Vert \boldsymbol{\theta}^{0}\right\Vert _{2}\right\} \leq C$
and $C$ being $\mathfrak{Dim}$-independent constants. By this, we
mean $\mathbb{E}\left\{ \left\Vert \boldsymbol{\theta}^{0}\right\Vert _{2}^{p}\right\} ^{1/p}\leq C\sqrt{p}$
for all $p\geq1$. We assume that the nonlinear dynamics (\ref{eq:ODE})
has a weakly unique solution $\left(\rho^{t}\right)_{t\geq0}$.
\item \label{enu:Assump_lr}The learning rate schedule $\xi:\;\mathbb{R}_{\geq0}\to\mathbb{R}_{\geq0}$
satisfies: $\left|\xi\left(t\right)\right|\lesssim1$ and $\left|\xi\left(t_{1}\right)-\xi\left(t_{2}\right)\right|\lesssim\left|t_{1}-t_{2}\right|$.
\item \label{enu:Assump_growth}Given the solution $\left(\rho^{t}\right)_{t\geq0}$
to the nonlinear dynamics (\ref{eq:ODE}), the functions $V$, $W$
and $U$ satisfy the following growth conditions:
\begin{align*}
\left\Vert \nabla V\left(\boldsymbol{\theta}\right)\right\Vert _{2} & \lesssim\left\Vert \boldsymbol{\theta}\right\Vert _{2}+1,\\
\left\Vert \nabla V\left(\boldsymbol{\theta}_{1}\right)-\nabla V\left(\boldsymbol{\theta}_{2}\right)\right\Vert _{2} & \lesssim\left\Vert \boldsymbol{\theta}_{1}-\boldsymbol{\theta}_{2}\right\Vert _{2},\\
\left\Vert \nabla_{1}W\left(\boldsymbol{\theta};\rho\right)\right\Vert _{2} & \lesssim\left\Vert \boldsymbol{\theta}\right\Vert _{2}+1,\\
\left\Vert \nabla_{1}W\left(\boldsymbol{\theta}_{1};\rho\right)-\nabla_{1}W\left(\boldsymbol{\theta}_{2};\rho\right)\right\Vert _{2} & \lesssim\left\Vert \boldsymbol{\theta}_{1}-\boldsymbol{\theta}_{2}\right\Vert _{2},\\
\left\Vert \nabla_{1}U\left(\boldsymbol{\theta},\boldsymbol{\theta}'\right)\right\Vert _{2} & \lesssim\kappa_{1}\left(\left\Vert \boldsymbol{\theta}\right\Vert _{2}+1\right)\left(\left\Vert \boldsymbol{\theta}'\right\Vert _{2}^{2}+1\right),
\end{align*}
for any $\rho$ on the trajectory $\left(\rho^{t}\right)_{t\in\left[0,T\right]}$.
Furthermore,
\[
\left\Vert \nabla_{1}W\left(\boldsymbol{\theta};\rho^{t_{1}}\right)-\nabla_{1}W\left(\boldsymbol{\theta};\rho^{t_{2}}\right)\right\Vert _{2}\lesssim\left(\left\Vert \boldsymbol{\theta}\right\Vert _{2}+1\right)\left|t_{2}-t_{1}\right|,
\]
for $t_{1},t_{2}\leq T$.
\item \label{enu:Assump_opNorm}The function $U$ satisfies the following
operator norm bounds:
\begin{align*}
\left\Vert \nabla_{12}^{2}U\left(\boldsymbol{\theta},\boldsymbol{\theta}'\right)\right\Vert _{{\rm op}} & \lesssim\kappa_{2}\left(\left\Vert \boldsymbol{\theta}\right\Vert _{2}+1\right)\left(\left\Vert \boldsymbol{\theta}'\right\Vert _{2}+1\right),\\
\left\Vert \nabla_{121}^{3}U\left[\boldsymbol{\zeta},\boldsymbol{\theta}\right]\right\Vert _{{\rm op}} & \lesssim\kappa_{3}\left(\left\Vert \boldsymbol{\theta}\right\Vert _{2}+1\right),\\
\left\Vert \nabla_{122}^{3}U\left[\boldsymbol{\theta},\boldsymbol{\zeta}\right]\right\Vert _{{\rm op}} & \lesssim\kappa_{4}\left(\left\Vert \boldsymbol{\theta}\right\Vert _{2}+1\right).
\end{align*}
\item \label{enu:Assump_F}The SGD update $\boldsymbol{F}_{i}\left(\Theta;\boldsymbol{z}\right)$
is sub-exponential (w.r.t. $\boldsymbol{z}\sim{\cal P}$) with $\psi_{1}$-norm:
\[
\left\Vert \boldsymbol{F}_{i}\left(\Theta;\boldsymbol{z}\right)\right\Vert _{\psi_{1}}\lesssim\kappa_{5}\left(\left\Vert \boldsymbol{\theta}_{i}\right\Vert _{2}+1\right)\left(\frac{1}{N}\sum_{j=1}^{N}\left\Vert \boldsymbol{\theta}_{j}\right\Vert _{2}^{2}+1\right),
\]
where $\Theta=\left(\boldsymbol{\theta}_{i}\right)_{i\leq N}$.
\item \label{enu:Assump_nabla11U}Given the solution $\left(\rho^{t}\right)_{t\geq0}$
to the nonlinear dynamics (\ref{eq:ODE}), let $\left(\hat{\boldsymbol{\theta}}_{j}^{t}\right)_{t\leq T,\;j\leq N}$
be i.i.d. copies of the ODE (\ref{eq:ODE}) with initializations $\left(\hat{\boldsymbol{\theta}}_{j}^{0}\right)_{j\leq N}\sim_{{\rm i.i.d.}}\rho^{0}$.
We have for any $c>0$,
\[
\mathbb{P}\left\{ \sup_{t\leq T}\sup_{\boldsymbol{\zeta}\in{\cal B}_{D}\left(c\sqrt{N}\right)}\left\Vert \frac{1}{N}\sum_{j=1}^{N}\nabla_{11}^{2}U\left(\boldsymbol{\zeta},\hat{\boldsymbol{\theta}}_{j}^{t}\right)\right\Vert _{{\rm op}}\geq c_{\ref{enu:Assump_nabla11U}}\left(T,c\right)\right\} \leq\Xi\left(N;T,\kappa_{6}\right),
\]
for functions $\Xi$ and $c_{\ref{enu:Assump_nabla11U}}$ such that
$\Xi\left(N;T,\kappa_{6}\right)\to0$ as $N\to\infty$, and $c_{\ref{enu:Assump_nabla11U}}\left(T,c\right)$
is finite with finite $c$ and $T$. We emphasize that in the right-hand
side of the above event, $c_{\ref{enu:Assump_nabla11U}}$ is independent
of $\mathfrak{Dim}$, unlike those in Assumption \ref{enu:Assump_opNorm}.
\item \label{enu:Assump_additional_1}The regularizer $\Lambda$ satisfies
the growth condition:
\begin{align*}
\left\Vert \nabla_{\boldsymbol{\theta}}\mathbb{E}_{{\cal P}}\left\{ \Lambda\left(\boldsymbol{\theta},\boldsymbol{z}\right)\right\} \right\Vert _{2} & \lesssim\left\Vert \boldsymbol{\theta}\right\Vert _{2}+1.
\end{align*}
Furthermore, under Assumption \ref{enu:Assump_ODE}, given the solution
$\left(\rho^{t}\right)_{t\geq0}$ to the nonlinear dynamics (\ref{eq:ODE}),
$\left|V\left(\boldsymbol{0}\right)\right|$, $\left|\mathbb{E}_{{\cal P}}\left\{ \Lambda\left(\boldsymbol{0},\boldsymbol{z}\right)\right\} \right|$,
$\left|U\left(\boldsymbol{0},\boldsymbol{0}\right)\right|\leq C$
and $\left|W\left(\boldsymbol{0};\rho\right)\right|\leq C$ for any
$\rho$ on the trajectory $\left(\rho^{t}\right)_{t\in\left[0,T\right]}$.
(In fact, one can alternatively replace vector $\boldsymbol{0}$ in
the last condition with a constant vector $\boldsymbol{u}\in\mathbb{R}^{D}$
with $\left\Vert \boldsymbol{u}\right\Vert _{2}\leq C$.)
\end{enumerate}
\begin{rem}
\label{rem:ODE-existence}Let us remark that under $\left(\rho^{t}\right)_{t\geq0}$
the unique weak solution to the nonlinear dynamics (\ref{eq:ODE})
(Assumption \ref{enu:Assump_ODE}), the ODE (\ref{eq:ODE}) has a
unique solution $\left(\hat{\boldsymbol{\theta}}^{t}\right)_{t\in\left[0,T\right]}$.
Indeed, by Assumption \ref{enu:Assump_growth}, $\nabla V$ and $\nabla_{1}W\left(\cdot;\rho^{t}\right)$
are both $C$-Lipschitz uniformly in $t\in\left[0,T\right]$, and
similarly by Assumption \ref{enu:Assump_lr}, $\xi$ is bounded and
Lipschitz. The existence of a unique solution $\left(\hat{\boldsymbol{\theta}}^{t}\right)_{t\in\left[0,T\right]}$
then follows from a standard argument. In fact, there exists such
unique solution on $t\in[0,\infty)$. This shows that the trajectories
$\left(\hat{\boldsymbol{\theta}}_{j}^{t}\right)_{t\leq T,\;j\leq N}$
in Assumption \ref{enu:Assump_nabla11U} are well-defined.
\end{rem}

\begin{rem}
Although Assumption \ref{enu:Assump_nabla11U} requires the statement
to hold \textit{for all} $c>0$, we note that in fact it suffices
to alternatively assume a weaker condition, in which the same statement
holds for \textsl{some} sufficiently large constant $c$ that is independent
of $\mathfrak{Dim}$, $N$ and $\epsilon$. How large it is depends
on other constants hidden in other assumptions, and as such, we choose
to state Assumption \ref{enu:Assump_nabla11U} in the current form
only for ease of presentation.
\end{rem}

We again emphasize that $\kappa_{1},...,\kappa_{6}$ depend exclusively
on $\mathfrak{Dim}$. Even though we are primarily interested in dependencies
that are at most polynomial in $\mathfrak{Dim}$, the theorem we shall
prove holds for any dependency. We now state the main theorem.
\begin{thm}
\label{thm:propChaos}Suppose that we generate the SGD initialization
$\Theta^{0}=\left(\boldsymbol{\theta}_{i}^{0}\right)_{i\leq N}\sim_{{\rm i.i.d.}}\rho^{0}$.
Assume the conditions \ref{enu:Assump_ODE}-\ref{enu:Assump_nabla11U}
to hold. Given $\delta>1$ and a finite $T\in\mathbb{N}\epsilon$,
further assume that
\[
\epsilon\lesssim\frac{1}{\max\left\{ \kappa_{2}^{2},\;\left(\kappa_{3}+\kappa_{4}\right)^{2}\kappa_{5}^{2}D^{2}\delta^{2}\right\} },\qquad\left(\delta^{2}+\log^{5}\left(\frac{NT}{\epsilon}+1\right)\right)\frac{\kappa_{1}^{2}}{N}\lesssim\frac{1}{\left(\kappa_{3}+\kappa_{4}\right)^{2}}.
\]
Let $\left(\rho^{t}\right)_{t\geq0}$ be the unique weak solution
of the nonlinear dynamics (\ref{eq:ODE}). Also recall that $\rho_{N}^{k}$
denotes the empirical distribution of $\Theta^{k}$, namely $\rho_{N}^{k}=\left(1/N\right)\sum_{i=1}^{N}\delta_{\boldsymbol{\theta}_{i}^{k}}.$
Then:
\begin{enumerate}[{label=\textbf{[B.\arabic*]},ref=[B.\arabic*]}]
\item \label{enu:propChaos_thm_main}For each $i\in\left[N\right]$, let
$\left(\bar{\boldsymbol{\theta}}_{i}^{t}\right)_{t\geq0}$ be the
solution of the ODE (\ref{eq:ODE}) on $\left(\rho^{t}\right)_{t\geq0}$
with the initialization $\bar{\boldsymbol{\theta}}_{i}^{0}=\boldsymbol{\theta}_{i}^{0}$.
Then:
\[
\mathbb{P}\left\{ \max_{k\leq T/\epsilon}\frac{1}{N}\sum_{i=1}^{N}\left\Vert \boldsymbol{\theta}_{i}^{k}-\bar{\boldsymbol{\theta}}_{i}^{k\epsilon}\right\Vert _{2}^{2}\apprge\mathsf{err}\left(N,\epsilon,\delta\right)\right\} \lesssim\mathsf{prob}\left(N,\delta\right),
\]
in which we define
\begin{align*}
\mathsf{err}\left(N,\epsilon,\delta\right) & =\left(\delta^{2}+\log^{5}\left(\frac{NT}{\epsilon}+1\right)\right)\frac{\kappa_{1}^{2}}{N}+\sqrt{\epsilon}\frac{\kappa_{5}}{\kappa_{3}+\kappa_{4}}\delta+\epsilon D^{2}\kappa_{5}^{2}\delta,\\
\mathsf{prob}\left(N,\delta\right) & =\delta^{-2}+\Xi\left(N;T,\kappa_{6}\right)+\exp\left(-N^{1/8}\right).
\end{align*}
\item \label{enu:propChaos_thm_phi}For any $1$-Lipschitz function $\phi:\;\mathbb{R}^{D}\to\mathbb{R}$
and any $\epsilon_{0}>0$,
\[
\max_{t\in\mathbb{N}\epsilon\cap\left[0,T\right]}\left|\frac{1}{N}\sum_{i=1}^{N}\phi\left(\boldsymbol{\theta}_{i}^{t/\epsilon}\right)-\int\phi\left(\boldsymbol{\theta}\right)\rho^{t}\left({\rm d}\boldsymbol{\theta}\right)\right|\lesssim\epsilon_{0}+\sqrt{\mathsf{err}\left(N,\epsilon,\delta\right)},
\]
with probability at least 
\[
1-C\mathsf{prob}\left(N,\delta\right)-\frac{CT}{\epsilon}\exp\left(-CN\epsilon_{0}^{2}\right).
\]
\item \label{enu:propChaos_thm_risk}If we further assume condition \ref{enu:Assump_additional_1},
then
\[
\max_{t\in\mathbb{N}\epsilon\cap\left[0,T\right]}\left|{\cal R}\left(\rho_{N}^{t/\epsilon}\right)-{\cal R}\left(\rho^{t}\right)\right|\lesssim\kappa_{1}\sqrt{\mathsf{err}\left(N,\epsilon,\delta\right)}+\epsilon_{1},
\]
with probability at least 
\[
1-C\mathsf{prob}\left(N,\delta\right)-\frac{CNT}{\epsilon}\exp\left(-C\epsilon_{1}^{1/3}\left(\frac{N}{\kappa_{1}^{2}}\right)^{1/6}\right),
\]
for any $\epsilon_{1}\in\left(0,1\right)$.
\item \label{enu:propChaos_thm_sampling}Given a positive integer $M$,
construct a set of indices $\left(h\left(i\right)\right)_{i\leq M}$
by sampling independently at random $h\left(i\right)$ from $\left[N\right]$,
for each $i\in\left[M\right]$. If we further assume condition \ref{enu:Assump_additional_1},
then for any $\delta_{0}>0$ and $t\in\mathbb{N}\epsilon\cap\left[0,T\right]$,
\[
\mathbb{P}\left\{ \left|{\cal R}\left(\nu_{M}^{t/\epsilon}\right)-{\cal R}\left(\bar{\nu}_{M}^{t}\right)\right|\gtrsim\kappa_{1}\left(\delta_{0}^{2}+1\right)\sqrt{\mathsf{err}\left(N,\epsilon,\delta\right)}\right\} \lesssim\mathsf{prob}\left(N,\delta\right)+\delta_{0}^{-1}+e^{-M},
\]
where we define $\nu_{M}^{t/\epsilon}=\left(1/M\right)\cdot\sum_{i=1}^{M}\delta_{\boldsymbol{\theta}_{h\left(i\right)}^{t/\epsilon}}$
and $\bar{\nu}_{M}^{t}=\left(1/M\right)\cdot\sum_{i=1}^{M}\delta_{\bar{\boldsymbol{\theta}}_{h\left(i\right)}^{t}}$,
recalling the definition of $\left(\bar{\boldsymbol{\theta}}_{i}^{t}\right)_{i\leq N}$
in Claim \ref{enu:propChaos_thm_main}.
\end{enumerate}
In the above, the constants $C$ (hidden in $\lesssim$) depend on
$T$, but not $N$, $\epsilon$, the dimension vector $\mathfrak{Dim}$,
$\delta$, $\delta_{0}$, $\epsilon_{0}$ or $\epsilon_{1}$, such
that $C$ is finite for finite $T$.
\end{thm}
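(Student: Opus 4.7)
\bigskip

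\noindent\textbf{Proof proposal.} The plan is to establish claim \ref{enu:propChaos_thm_main} first, and then deduce \ref{enu:propChaos_thm_phi}, \ref{enu:propChaos_thm_risk}, \ref{enu:propChaos_thm_sampling} from it by relatively standard concentration arguments. For \ref{enu:propChaos_thm_main}, I couple the SGD iterates $(\boldsymbol{\theta}_i^k)_{i\leq N}$ to $N$ i.i.d.\ copies $(\bar{\boldsymbol{\theta}}_i^t)_{i\leq N}$ of the nonlinear ODE~(\ref{eq:ODE}) started at the shared initialization $\bar{\boldsymbol{\theta}}_i^0=\boldsymbol{\theta}_i^0$. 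Two intermediate processes interpolate between them: the continuous-time $N$-particle gradient flow driven by the exact kernels $V,U$, and the piecewise-constant interpolation of SGD. The SGD$\to$gradient-flow step is handled by writing the SGD update as gradient flow plus a martingale increment and an Euler-discretization error. Using the sub-exponential bound in \ref{enu:Assump_F}, a Freedman/Azuma-type maximal inequality yields an $O(\sqrt{\epsilon}\,\kappa_5 D\delta)$ contribution; the Euler error contributes $O(\epsilon D^2\kappa_5^2\delta)$ (and is where the step-size restriction on $\epsilon$ enters). These account for the last two terms of $\mathsf{err}(N,\epsilon,\delta)$.

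The heart of the proof is the comparison between the $N$-particle gradient flow and the coupled i.i.d.\ copies of the nonlinear ODE. Defining $E(t)=\tfrac{1}{N}\sum_{i=1}^N \|\boldsymbol{\theta}_i^t-\bar{\boldsymbol{\theta}}_i^t\|_2^2$ (note: the mean-square, not the maximum), I differentiate and obtain
\[
\tfrac{1}{2}\dot E(t) = -\xi(t)\cdot\tfrac{1}{N}\sum_i \langle \boldsymbol{\theta}_i^t-\bar{\boldsymbol{\theta}}_i^t,\; \Delta\nabla V_i + \Delta\nabla_1W_i\rangle,
\]
where $\Delta\nabla V_i$ is bounded by Lipschitzness (\ref{enu:Assump_growth}), and the interaction difference $\Delta\nabla_1 W_i$ is the critical object. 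I split
\[
\Delta\nabla_1 W_i = \underbrace{\tfrac{1}{N}\sum_j[\nabla_1 U(\boldsymbol{\theta}_i^t,\boldsymbol{\theta}_j^t)-\nabla_1 U(\bar{\boldsymbol{\theta}}_i^t,\bar{\boldsymbol{\theta}}_j^t)]}_{=:A_i} + \underbrace{\bigl[\tfrac{1}{N}\sum_j\nabla_1 U(\bar{\boldsymbol{\theta}}_i^t,\bar{\boldsymbol{\theta}}_j^t)-\nabla_1 W(\bar{\boldsymbol{\theta}}_i^t;\rho^t)\bigr]}_{=:B_i}.
\]
The fluctuation $B_i$ is controlled uniformly in $i$ and $t$ by concentration of the empirical mean against i.i.d.\ samples from $\rho^t$: combining the sub-Gaussian tails from \ref{enu:Assump_ODE} with the polynomial growth $\|\nabla_1 U\|_2\lesssim \kappa_1(\|\boldsymbol{\theta}\|_2+1)(\|\boldsymbol{\theta}'\|_2^2+1)$ from \ref{enu:Assump_growth} and a chaining/epsilon-net bound on the supremum over $t\in[0,T]$ yields $\max_{i,t}\|B_i\|_2^2\lesssim \kappa_1^2 (\delta^2+\log^5(NT/\epsilon+1))/N$, which is exactly the first term of $\mathsf{err}$.

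For the nonlinear term $A_i$, I Taylor-expand the kernel $\nabla_1 U$ to second order around $(\bar{\boldsymbol{\theta}}_i^t,\bar{\boldsymbol{\theta}}_j^t)$. The linear-in-$(\boldsymbol{\theta}_j^t-\bar{\boldsymbol{\theta}}_j^t)$ piece, when multiplied by $\boldsymbol{\theta}_i^t-\bar{\boldsymbol{\theta}}_i^t$, is bounded via \ref{enu:Assump_opNorm} and Cauchy--Schwarz by $\kappa_2\,(\|\bar{\boldsymbol{\theta}}_i^t\|_2+1)E(t)^{1/2}\cdot\bigl(\tfrac{1}{N}\sum_j(\|\bar{\boldsymbol{\theta}}_j^t\|_2+1)^2\bigr)^{1/2}$; summing against $\|\boldsymbol{\theta}_i^t-\bar{\boldsymbol{\theta}}_i^t\|_2$ and applying Cauchy--Schwarz once more gives a term proportional to $E(t)$. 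The linear-in-$(\boldsymbol{\theta}_i^t-\bar{\boldsymbol{\theta}}_i^t)$ piece requires handling $\tfrac{1}{N}\sum_j \nabla_{11}^2 U(\boldsymbol{\zeta},\bar{\boldsymbol{\theta}}_j^t)$ at arbitrary points $\boldsymbol{\zeta}$ on the segment $[\boldsymbol{\theta}_i^t,\bar{\boldsymbol{\theta}}_i^t]$; this is precisely where \ref{enu:Assump_nabla11U} is used to give a $\mathfrak{Dim}$-free uniform bound on an event of probability $1-\Xi(N;T,\kappa_6)$. The second-order Taylor remainders are bounded using \ref{enu:Assump_opNorm} on $\nabla_{121}^3 U$, $\nabla_{122}^3 U$, yielding $(\kappa_3+\kappa_4)(\|\bar{\boldsymbol{\theta}}_i^t\|_2+1)\cdot\tfrac{1}{N}\sum_j(\|\bar{\boldsymbol{\theta}}_j^t\|_2+1)\cdot(\|\Delta_i\|+\|\Delta_j\|)^2$, again a term of order $E(t)$ after two applications of Cauchy--Schwarz. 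In all, on the event of total probability $1-C\,\mathsf{prob}(N,\delta)$, I obtain $\dot E(t)\lesssim E(t)+\mathsf{err}(N,\epsilon,\delta)$ with a \emph{dimension-free} constant, and Gr\"onwall's lemma closes the argument. The hard part is the uniform bound in \ref{enu:Assump_nabla11U}: without it the self-interaction $\nabla_{11}^2 U$ evaluated at the unknown $\boldsymbol{\theta}_i^t$ would force a worst-case $\kappa_2\,\mathrm{poly}(\mathfrak{Dim})$ in the Gr\"onwall constant, degrading the bound back to $N\gg\exp(\mathrm{poly}(\mathfrak{Dim}))$; this is exactly the gap between our approach and a naive adaptation of \cite{mei2018mean,mei2019mean}.

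Once \ref{enu:propChaos_thm_main} is in hand, the other claims follow by concentration. For \ref{enu:propChaos_thm_phi}, I split $\tfrac{1}{N}\sum_i\phi(\boldsymbol{\theta}_i^{t/\epsilon})-\int\phi\,d\rho^t$ into a coupling term bounded by $\sqrt{\mathsf{err}}$ (via 1-Lipschitzness and Cauchy--Schwarz) plus an empirical-mean term $\tfrac{1}{N}\sum_i\phi(\bar{\boldsymbol{\theta}}_i^{t})-\mathbb{E}\phi(\bar{\boldsymbol{\theta}}^{t})$ controlled by McDiarmid/Hoeffding at scale $\epsilon_0$, with a union bound over $|\mathbb{N}\epsilon\cap[0,T]|=T/\epsilon$ time points. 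Claim \ref{enu:propChaos_thm_risk} is obtained by writing $\mathcal{R}(\rho)-\mathcal{R}(\rho')$ as an affine-quadratic functional in $\rho-\rho'$, using Assumption \ref{enu:Assump_additional_1} to bound $\|\nabla_{\boldsymbol{\theta}}\mathbb{E}\Lambda\|_2$ and $|V(\boldsymbol{0})|,|U(\boldsymbol{0},\boldsymbol{0})|$, and applying the $\phi$-bound with $\phi$ being Lipschitzified versions of the relevant kernels (truncated at radius $\epsilon_1^{-1/3}(N/\kappa_1^2)^{1/6}$, which produces the truncation probability in the statement). For \ref{enu:propChaos_thm_sampling}, conditional on the trained network, the subsampled risk $\mathcal{R}(\nu_M^{t/\epsilon})$ concentrates around $\mathcal{R}(\rho_N^{t/\epsilon})$ by Chebyshev in the $M$ sampling draws with deviation $\delta_0$; applying the same concentration to $\mathcal{R}(\bar\nu_M^t)$ around $\mathcal{R}(\rho^t)$ and invoking \ref{enu:propChaos_thm_risk} combines the two sides to yield the claimed bound.
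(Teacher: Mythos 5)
Your high-level architecture matches the paper's: couple SGD to i.i.d.\ copies of the nonlinear ODE~(\ref{eq:ODE}), split the drift difference into an Euler error, a martingale (SGD noise) term, a fluctuation term $B_i$ controlled by concentration, and a nonlinear interaction term $A_i$ controlled by a second-order Taylor expansion of $\nabla_1 U$, then close with Gr\"onwall. Assumptions \ref{enu:Assump_nabla11U} and \ref{enu:Assump_opNorm} enter where you say they do. However, there is a genuine gap in how you handle the cross term $\nabla_{12}^2 U$, and it is precisely the place where the paper's argument is nonstandard.

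You bound the linear-in-$(\boldsymbol{\theta}_j^t-\bar{\boldsymbol{\theta}}_j^t)$ piece of the Taylor expansion by $\|\nabla_{12}^2 U\|_{\rm op}\lesssim\kappa_2(\|\bar{\boldsymbol{\theta}}_i\|_2+1)(\|\bar{\boldsymbol{\theta}}_j\|_2+1)$ followed by Cauchy--Schwarz, producing a contribution to $\dot E(t)$ of order $\kappa_2\,E(t)$. You then assert that Gr\"onwall yields a \emph{dimension-free} constant, but this is inconsistent with your own estimate: the Gr\"onwall exponent would scale as $\kappa_2=\Theta(\kappa^2)$, which in the autoencoder setting is $\Theta(d)$, and you would be back to $N\gg\exp(\mathrm{poly}(\mathfrak{Dim}))$. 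The paper avoids this by \emph{not} taking operator norms on the cross term: since $U(\boldsymbol{\theta},\boldsymbol{\theta}')=\mathbb{E}_{\cal P}\langle\sigma_*(\boldsymbol{x};\kappa\boldsymbol{\theta}),\sigma_*(\boldsymbol{x};\kappa\boldsymbol{\theta}')\rangle$ is an inner-product (PSD) kernel, one has
\[
\sum_{i,j}\left\langle \boldsymbol{\delta}_i,\;\nabla_{12}^2 U\left(\bar{\boldsymbol{\theta}}_i,\bar{\boldsymbol{\theta}}_j\right)\boldsymbol{\delta}_j\right\rangle
=\kappa^2\,\mathbb{E}_{\cal P}\left\{\left\|\sum_i\nabla_2\sigma_*\left(\boldsymbol{x};\kappa\bar{\boldsymbol{\theta}}_i\right)\boldsymbol{\delta}_i\right\|_2^2\right\}\ge 0,
\]
so after Taylor-expanding about $(\bar{\boldsymbol{\theta}}_i,\bar{\boldsymbol{\theta}}_j)$ (crucially, evaluating $\nabla_{12}^2 U$ at the i.i.d.\ points $\bar{\boldsymbol{\theta}}_i,\bar{\boldsymbol{\theta}}_j$ rather than at an intermediate point in one argument), this quantity enters the upper bound on $-\sum_i\langle\boldsymbol{\delta}_i,\boldsymbol{E}_{3,i}\rangle$ with a favorable sign and can be dropped entirely. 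The $\kappa_2$ then survives only in $\|\boldsymbol{E}_{3,i}\|_2^2$, which in the discrete update picks up an extra factor $\epsilon$ and is killed by the step-size restriction $\epsilon\lesssim\kappa_2^{-2}$; this is a second role of that restriction that your proposal (which attributes the $\epsilon$-restriction only to the Euler error) does not account for. This sign observation, tied to the gradient-flow structure of the learning dynamics, is the core new ingredient of the proof; without it your Gr\"onwall constant is dimension-dependent and the conclusion fails.

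A secondary gap: the third-order remainders from $\nabla_{121}^3 U,\nabla_{122}^3 U$ yield an $E(t)^{3/2}$ term scaled by $\kappa_3+\kappa_4$, making the Gr\"onwall differential inequality genuinely nonlinear. The paper tames this via a stopping-time argument with threshold $\gamma_{\rm st}=(\kappa_3+\kappa_4)^{-2}$, chosen precisely so that $(\kappa_3+\kappa_4)\sqrt{\gamma_{\rm st}}=1$ and the nonlinear term is absorbed into a dimension-free constant before $E$ can escape. Your proposal asserts these remainders are ``again a term of order $E(t)$,'' which is not true without the stopping time, and you do not describe any mechanism to bootstrap $E(t)$ small enough for the $3/2$-power term to be subdominant. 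Your treatment of claims \ref{enu:propChaos_thm_phi}--\ref{enu:propChaos_thm_sampling} as consequences of \ref{enu:propChaos_thm_main} via standard concentration is in line with the paper and does not present issues.
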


\subsection{Proof of Theorem \ref{thm:propChaos}\label{subsec:propChaos-main-proof}}

\subsubsection*{Step 0: Preliminaries}

We start with several preliminaries, some of which are restated for
ease of reading. We define $\boldsymbol{G}:\;\mathbb{R}^{D}\times\mathscr{P}\left(\mathbb{R}^{D}\right)\to\mathbb{R}^{D}$,
by
\[
\boldsymbol{G}\left(\boldsymbol{\theta};\rho\right)=\nabla V\left(\boldsymbol{\theta}\right)+\int\nabla_{1}U\left(\boldsymbol{\theta},\boldsymbol{\theta}'\right)\rho\left({\rm d}\boldsymbol{\theta}'\right)=\nabla V\left(\boldsymbol{\theta}\right)+\nabla_{1}W\left(\boldsymbol{\theta};\rho\right).
\]
Given an initial law $\rho^{0}$, we consider $N$ i.i.d. copies $\left(\bar{\boldsymbol{\theta}}_{i}^{t}\right)_{t\leq T,\;i\leq N}$
of the ODE (\ref{eq:ODE}) with initializations $\left(\bar{\boldsymbol{\theta}}_{i}^{0}\right)_{i\leq N}\sim_{{\rm i.i.d.}}\rho^{0}$:
\[
\bar{\boldsymbol{\theta}}_{i}^{t}=\bar{\boldsymbol{\theta}}_{i}^{0}-\int_{0}^{t}\xi\left(s\right)\boldsymbol{G}\left(\bar{\boldsymbol{\theta}}_{i}^{s};\rho^{s}\right){\rm d}s,\qquad\rho^{t}=\text{Law}\left(\bar{\boldsymbol{\theta}}_{i}^{t}\right).
\]
We note that $\left(\bar{\boldsymbol{\theta}}_{i}^{t}\right)_{t\leq T}$
is well-defined by Remark \ref{rem:ODE-existence}. We also remind
of the SGD dynamics $\Theta^{k}=\left(\boldsymbol{\theta}_{i}^{k}\right)_{i\leq N}$
with initialization $\boldsymbol{\theta}_{i}^{0}=\bar{\boldsymbol{\theta}}_{i}^{0}$:
\[
\boldsymbol{\theta}_{i}^{k}=\boldsymbol{\theta}_{i}^{0}-\epsilon\sum_{\ell=0}^{k-1}\xi\left(\ell\epsilon\right)\boldsymbol{F}_{i}\left(\Theta^{\ell};\boldsymbol{z}^{\ell}\right).
\]
Note that for each $i\in\left[N\right]$, the trajectories $\left(\bar{\boldsymbol{\theta}}_{i}^{t}\right)_{t\geq0}$
and $\left(\boldsymbol{\theta}_{i}^{k}\right)_{k\geq0}$ are coupled
since they share the same initialization $\bar{\boldsymbol{\theta}}_{i}^{0}$.
Let us introduce the notations for the empirical distributions:
\[
\bar{\rho}_{N}^{t}=\frac{1}{N}\sum_{i=1}^{N}\delta_{\bar{\boldsymbol{\theta}}_{i}^{t}},\qquad\rho_{N}^{k}=\frac{1}{N}\sum_{i=1}^{N}\delta_{\boldsymbol{\theta}_{i}^{k}}.
\]
For each $i=1,...,N$, we define
\[
\boldsymbol{\delta}_{i}^{k}=\boldsymbol{\theta}_{i}^{k}-\bar{\boldsymbol{\theta}}_{i}^{k\epsilon},\qquad\boldsymbol{\delta}^{k}=\left(\boldsymbol{\delta}_{1}^{k},...,\boldsymbol{\delta}_{N}^{k}\right)\in\mathbb{R}^{DN}.
\]
We note that $\boldsymbol{\delta}^{0}=\boldsymbol{0}$ since the two
trajectories are coupled by the same initialization. We are interested
in bounding the error quantity:
\[
\mathscr{E}_{k}=\frac{1}{N}\left\Vert \boldsymbol{\delta}^{k}\right\Vert _{2}^{2}=\frac{1}{N}\sum_{i=1}^{N}\left\Vert \boldsymbol{\delta}_{i}^{k}\right\Vert _{2}^{2}.
\]
In the proof, we consider a finite constant terminal time $T>0$.
For some threshold $\gamma_{{\rm st}}\in\left[0,1\right]$, we define
the stopping time 
\begin{equation}
T_{{\rm st}}=\inf\left\{ k\epsilon:\;\mathscr{E}_{k}>\gamma_{{\rm st}}\right\} .\label{eq:MF_proof_stopping_time}
\end{equation}
We also define the following event:
\begin{align*}
\mathsf{Ev} & =\left\{ \frac{1}{N}\sum_{i=1}^{N}\left\Vert \bar{\boldsymbol{\theta}}_{i}^{0}\right\Vert _{2}^{2}\leq C,\quad\frac{1}{N}\sum_{i=1}^{N}\left\Vert \bar{\boldsymbol{\theta}}_{i}^{0}\right\Vert _{2}^{6}\leq C\right\} ,
\end{align*}
for some sufficiently large $C$.

Before we proceed, let us prove a few simple facts:
\begin{itemize}
\item We bound $\mathbb{P}\left\{ \mathsf{Ev}\right\} $. Recall that $\left(\left\Vert \bar{\boldsymbol{\theta}}_{i}^{0}\right\Vert _{2}\right)_{i\leq N}$
are i.i.d. $C$-sub-Gaussian by Assumption \enuref{Assump_ODE}. As
such, by Lemma \ref{lem:concen_subgauss_sum_q_power}, $\mathbb{P}\left\{ \neg\mathsf{Ev}\right\} \lesssim\exp\left(-N^{1/8}\right)$.
\item We bound $\sup_{t\in\left[0,T\right]}\left\Vert \bar{\boldsymbol{\theta}}_{i}^{t}\right\Vert _{2}$
as a deterministic function of $\left\Vert \bar{\boldsymbol{\theta}}_{i}^{0}\right\Vert _{2}$,
for each $i\in\left[N\right]$. Using Assumptions \enuref{Assump_lr}
and \enuref{Assump_growth}, we have:
\begin{align*}
\frac{{\rm d}}{{\rm d}t}\left\Vert \bar{\boldsymbol{\theta}}_{i}^{t}\right\Vert _{2}^{2} & =2\left\langle \bar{\boldsymbol{\theta}}_{i}^{t},\frac{{\rm d}}{{\rm d}t}\bar{\boldsymbol{\theta}}_{i}^{t}\right\rangle =-2\xi\left(t\right)\left\langle \bar{\boldsymbol{\theta}}_{i}^{t},\nabla V\left(\bar{\boldsymbol{\theta}}_{i}^{t}\right)+\nabla_{1}W\left(\bar{\boldsymbol{\theta}}_{i}^{t};\rho^{t}\right)\right\rangle \\
 & \lesssim\left\Vert \bar{\boldsymbol{\theta}}_{i}^{t}\right\Vert _{2}\left(\left\Vert \nabla V\left(\bar{\boldsymbol{\theta}}_{i}^{t}\right)\right\Vert _{2}+\left\Vert \nabla_{1}W\left(\bar{\boldsymbol{\theta}}_{i}^{t};\rho^{t}\right)\right\Vert _{2}\right)\lesssim\left\Vert \bar{\boldsymbol{\theta}}_{i}^{t}\right\Vert _{2}\left(\left\Vert \bar{\boldsymbol{\theta}}_{i}^{t}\right\Vert _{2}+1\right),
\end{align*}
which implies $\frac{{\rm d}}{{\rm d}t}\left\Vert \bar{\boldsymbol{\theta}}_{i}^{t}\right\Vert _{2}\lesssim\left\Vert \bar{\boldsymbol{\theta}}_{i}^{t}\right\Vert _{2}+1$.
By Gronwall's lemma, 
\begin{equation}
\sup_{t\in\left[0,T\right]}\left\Vert \bar{\boldsymbol{\theta}}_{i}^{t}\right\Vert _{2}\lesssim\left\Vert \bar{\boldsymbol{\theta}}_{i}^{0}\right\Vert _{2}+1.\label{eq:propChaos_proof_thetaBar_bound}
\end{equation}
\item We bound $\left\Vert \bar{\boldsymbol{\theta}}_{i}^{t}-\bar{\boldsymbol{\theta}}_{i}^{t'}\right\Vert _{2}$
as a deterministic function of $\bar{\boldsymbol{\theta}}_{i}^{0}$
and $\left|t-t'\right|$, for each $i\in\left[N\right]$ and $t,t'\in\left[0,T\right]$.
Using Assumptions \enuref{Assump_lr} and \enuref{Assump_growth}
as well as Eq. (\ref{eq:propChaos_proof_thetaBar_bound}), we have:
\begin{align}
\left\Vert \bar{\boldsymbol{\theta}}_{i}^{t}-\bar{\boldsymbol{\theta}}_{i}^{t'}\right\Vert _{2} & =\left\Vert \int_{t}^{t'}\xi\left(s\right)\left[\nabla V\left(\bar{\boldsymbol{\theta}}_{i}^{s}\right)+\nabla_{1}W\left(\bar{\boldsymbol{\theta}}_{i}^{s};\rho^{s}\right)\right]{\rm d}s\right\Vert _{2}\nonumber \\
 & \lesssim\int_{t}^{t'}\left\Vert \nabla V\left(\bar{\boldsymbol{\theta}}_{i}^{s}\right)\right\Vert _{2}{\rm d}s+\int_{t}^{t'}\left\Vert \nabla_{1}W\left(\bar{\boldsymbol{\theta}}_{i}^{s};\rho^{s}\right)\right\Vert _{2}{\rm d}s\nonumber \\
 & \lesssim\int_{t}^{t'}\left(\left\Vert \bar{\boldsymbol{\theta}}_{i}^{s}\right\Vert _{2}+1\right){\rm d}s\nonumber \\
 & \lesssim\left(\left\Vert \bar{\boldsymbol{\theta}}_{i}^{0}\right\Vert _{2}+1\right)\left|t'-t\right|.\label{eq:propChaos_proof_thetaBar_tDiff_bound}
\end{align}
\item We also have a bound on $\left\Vert \boldsymbol{\theta}_{i}^{k}\right\Vert _{2}$
for each $i\in\left[N\right]$ and $k\leq T/\epsilon$:
\begin{equation}
\left\Vert \boldsymbol{\theta}_{i}^{k}\right\Vert _{2}\leq\left\Vert \boldsymbol{\delta}_{i}^{k}\right\Vert _{2}+\left\Vert \bar{\boldsymbol{\theta}}_{i}^{k\epsilon}\right\Vert _{2}\lesssim\left\Vert \boldsymbol{\delta}_{i}^{k}\right\Vert _{2}+\left\Vert \bar{\boldsymbol{\theta}}_{i}^{0}\right\Vert _{2}+1,\label{eq:propChaos_proof_theta_bound}
\end{equation}
where the second inequality is by Eq. (\ref{eq:propChaos_proof_thetaBar_bound}).
\end{itemize}
The agenda is as follows. We prove Claims \ref{enu:propChaos_thm_main},
\ref{enu:propChaos_thm_phi}, \ref{enu:propChaos_thm_risk} and \ref{enu:propChaos_thm_sampling}
in Steps 1-4 below. In fact, the latter claims are consequences of
Claim \ref{enu:propChaos_thm_main}. We defer the proofs of several
auxiliary lemmas that are used in the proof of Claim \ref{enu:propChaos_thm_main}
to Section \ref{subsec:propChaos-Auxiliary-Lemmas}.

\subsubsection*{Step 1: Claim \ref{enu:propChaos_thm_main}}

Let ${\cal F}^{k}$ be the sigma-algebra generated by $\left(\bar{\boldsymbol{\theta}}_{i}^{0}\right)_{i\leq N}$
and $\left(\boldsymbol{z}^{\ell}\right)_{\ell\leq k-1}$. Observe
that
\[
\mathbb{E}\left\{ \boldsymbol{F}_{i}\left(\Theta^{k};\boldsymbol{z}^{k}\right)\middle|{\cal F}^{k}\right\} =\boldsymbol{G}\left(\boldsymbol{\theta}_{i}^{k};\rho_{N}^{k}\right).
\]
As such, we have the following decomposition:
\[
\boldsymbol{\delta}_{i}^{k+1}-\boldsymbol{\delta}_{i}^{k}=\int_{k\epsilon}^{\left(k+1\right)\epsilon}\xi\left(s\right)\boldsymbol{G}\left(\bar{\boldsymbol{\theta}}_{i}^{s};\rho^{s}\right){\rm d}s-\epsilon\xi\left(k\epsilon\right)\boldsymbol{F}_{i}\left(\Theta^{k};\boldsymbol{z}^{k}\right)\equiv\epsilon\left(\boldsymbol{E}_{1,i}^{k}+\boldsymbol{E}_{2,i}^{k}-\boldsymbol{E}_{3,i}^{k}+\boldsymbol{E}_{4,i}^{k}\right),
\]
where we define the quantities:
\begin{align*}
\boldsymbol{E}_{1,i}^{k} & =\frac{1}{\epsilon}\int_{k\epsilon}^{\left(k+1\right)\epsilon}\left[\xi\left(s\right)\boldsymbol{G}\left(\bar{\boldsymbol{\theta}}_{i}^{s};\rho^{s}\right)-\xi\left(k\epsilon\right)\boldsymbol{G}\left(\bar{\boldsymbol{\theta}}_{i}^{k\epsilon};\rho^{k\epsilon}\right)\right]{\rm d}s,\\
\boldsymbol{E}_{2,i}^{k} & =\xi\left(k\epsilon\right)\left[\boldsymbol{G}\left(\bar{\boldsymbol{\theta}}_{i}^{k\epsilon};\rho^{k\epsilon}\right)-\boldsymbol{G}\left(\bar{\boldsymbol{\theta}}_{i}^{k\epsilon};\bar{\rho}_{N}^{k\epsilon}\right)\right],\\
\boldsymbol{E}_{3,i}^{k} & =\xi\left(k\epsilon\right)\left[\boldsymbol{G}\left(\boldsymbol{\theta}_{i}^{k};\rho_{N}^{k}\right)-\boldsymbol{G}\left(\bar{\boldsymbol{\theta}}_{i}^{k\epsilon};\bar{\rho}_{N}^{k\epsilon}\right)\right],\\
\boldsymbol{E}_{4,i}^{k} & =\xi\left(k\epsilon\right)\left[\mathbb{E}\left\{ \boldsymbol{F}_{i}\left(\Theta^{k};\boldsymbol{z}^{k}\right)\middle|{\cal F}^{k}\right\} -\boldsymbol{F}_{i}\left(\Theta^{k};\boldsymbol{z}^{k}\right)\right].
\end{align*}
Notice that $\boldsymbol{\delta}^{0}=\boldsymbol{0}$ and that
\begin{align*}
\left\Vert \boldsymbol{\delta}^{k+1}\right\Vert _{2}^{2}-\left\Vert \boldsymbol{\delta}^{k}\right\Vert _{2}^{2} & =2\left\langle \boldsymbol{\delta}^{k},\boldsymbol{\delta}^{k+1}-\boldsymbol{\delta}^{k}\right\rangle +\left\Vert \boldsymbol{\delta}^{k+1}-\boldsymbol{\delta}^{k}\right\Vert _{2}^{2}\\
 & \leq2\epsilon\sum_{i=1}^{N}\left(\left\Vert \boldsymbol{E}_{1,i}^{k}\right\Vert _{2}+\left\Vert \boldsymbol{E}_{2,i}^{k}\right\Vert _{2}\right)\left\Vert \boldsymbol{\delta}_{i}^{k}\right\Vert _{2}+2\epsilon\sum_{i=1}^{N}\left(-\left\langle \boldsymbol{\delta}_{i}^{k},\boldsymbol{E}_{3,i}^{k}\right\rangle +\left\langle \boldsymbol{\delta}_{i}^{k},\boldsymbol{E}_{4,i}^{k}\right\rangle \right)\\
 & \qquad+4\epsilon^{2}\sum_{i=1}^{N}\left(\left\Vert \boldsymbol{E}_{1,i}^{k}\right\Vert _{2}^{2}+\left\Vert \boldsymbol{E}_{2,i}^{k}\right\Vert _{2}^{2}+\left\Vert \boldsymbol{E}_{3,i}^{k}\right\Vert _{2}^{2}+\left\Vert \boldsymbol{E}_{4,i}^{k}\right\Vert _{2}^{2}\right).
\end{align*}
Considering $t\in\mathbb{N}\epsilon\cap\left[0,T\right]$, we thus
have:
\begin{align*}
\mathscr{E}_{t/\epsilon} & \leq\frac{2\epsilon}{N}\sum_{k=0}^{t/\epsilon-1}\sum_{i=1}^{N}\left(\left\Vert \boldsymbol{E}_{1,i}^{k}\right\Vert _{2}+\left\Vert \boldsymbol{E}_{2,i}^{k}\right\Vert _{2}\right)\left\Vert \boldsymbol{\delta}_{i}^{k}\right\Vert _{2}+\frac{2\epsilon}{N}\sum_{k=0}^{t/\epsilon-1}\sum_{i=1}^{N}\left(-\left\langle \boldsymbol{\delta}_{i}^{k},\boldsymbol{E}_{3,i}^{k}\right\rangle +\left\langle \boldsymbol{\delta}_{i}^{k},\boldsymbol{E}_{4,i}^{k}\right\rangle \right)\\
 & \qquad+\frac{4\epsilon^{2}}{N}\sum_{k=0}^{t/\epsilon-1}\sum_{i=1}^{N}\left(\left\Vert \boldsymbol{E}_{1,i}^{k}\right\Vert _{2}^{2}+\left\Vert \boldsymbol{E}_{2,i}^{k}\right\Vert _{2}^{2}+\left\Vert \boldsymbol{E}_{3,i}^{k}\right\Vert _{2}^{2}+\left\Vert \boldsymbol{E}_{4,i}^{k}\right\Vert _{2}^{2}\right).
\end{align*}
Hence we need to bound each of the terms.

We list here upper bounds for the terms, which are proven in the indicated
lemmas:
\begin{align*}
 & \text{[Lemma \ref{lem:propChaos-E1}]} & \frac{\epsilon}{N}\sum_{k=0}^{t/\epsilon-1}\sum_{i=1}^{N}\left\Vert \boldsymbol{E}_{1,i}^{k}\right\Vert _{2}\left\Vert \boldsymbol{\delta}_{i}^{k}\right\Vert _{2} & \lesssim\epsilon^{2}\sum_{k=0}^{t/\epsilon-1}\sqrt{\mathscr{E}_{k}},\\
 & \text{[Lemma \ref{lem:propChaos-E1}]} & \frac{\epsilon^{2}}{N}\sum_{k=0}^{t/\epsilon-1}\sum_{i=1}^{N}\left\Vert \boldsymbol{E}_{1,i}^{k}\right\Vert _{2}^{2} & \lesssim\epsilon^{3},\\
 & \text{[Lemma \ref{lem:propChaos-E2}]} & \frac{\epsilon}{N}\sum_{k=0}^{t/\epsilon-1}\sum_{i=1}^{N}\left\Vert \boldsymbol{E}_{2,i}^{k}\right\Vert _{2}\left\Vert \boldsymbol{\delta}_{i}^{k}\right\Vert _{2} & \lesssim\epsilon\mathfrak{E}_{{\rm [\ref{lem:propChaos-E2}]}}\sum_{k=0}^{t/\epsilon-1}\sqrt{\mathscr{E}_{k}},\\
 & \text{[Lemma \ref{lem:propChaos-E2}]} & \frac{\epsilon^{2}}{N}\sum_{k=0}^{t/\epsilon-1}\sum_{i=1}^{N}\left\Vert \boldsymbol{E}_{2,i}^{k}\right\Vert _{2}^{2} & \lesssim\epsilon\mathfrak{E}_{{\rm [\ref{lem:propChaos-E2}]}}^{2},\\
 & \text{[Lemma \ref{lem:propChaos-E4-prod}]} & \max_{k\leq T/\epsilon}\left|\epsilon\underline{Z}_{{\rm st}}^{k}\right| & \lesssim\sqrt{\epsilon}\kappa_{5}\left(\gamma_{{\rm st}}^{2}+\sqrt{\gamma_{{\rm st}}}\right)\delta_{{\rm [\ref{lem:propChaos-E4-prod}]}},\\
 & \text{[Lemma \ref{lem:propChaos-E4-norm}]} & \frac{\epsilon^{2}}{N}\sum_{k=0}^{t/\epsilon-1}\sum_{i=1}^{N}\left\Vert \boldsymbol{E}_{4,i}^{k}\right\Vert _{2}^{2} & \lesssim\epsilon D^{2}\kappa_{5}^{2}\delta_{{\rm [\ref{lem:propChaos-E4-norm}]}},\\
 & \text{[Lemma \ref{lem:propChaos-E3}]} & -\frac{\epsilon}{N}\sum_{k=0}^{t/\epsilon-1}\sum_{i=1}^{N}\left\langle \boldsymbol{\delta}_{i}^{k},\boldsymbol{E}_{3,i}^{k}\right\rangle  & \lesssim\epsilon\sum_{k=0}^{t/\epsilon-1}\left(\mathscr{E}_{k}+\left(\kappa_{3}+\kappa_{4}\right)\mathscr{E}_{k}^{3/2}\right),\\
 & \text{[Lemma \ref{lem:propChaos-E3}]} & \frac{\epsilon^{2}}{N}\sum_{k=0}^{t/\epsilon-1}\sum_{i=1}^{N}\left\Vert \boldsymbol{E}_{3,i}^{k}\right\Vert _{2}^{2} & \lesssim\epsilon^{2}\kappa_{2}^{2}\sum_{k=0}^{t/\epsilon-1}\mathscr{E}_{k},
\end{align*}
in which we define:
\begin{align*}
\mathfrak{E}_{{\rm [\ref{lem:propChaos-E2}]}} & =\frac{\kappa_{1}}{N}+\left(\delta_{{\rm [\ref{lem:propChaos-E2}]}}+\log^{5/2}\left(\frac{NT}{\epsilon}+1\right)\right)\frac{\kappa_{1}}{\sqrt{N}},\\
\underline{Z}_{{\rm st}}^{k} & =\frac{1}{N}\sum_{\ell=0}^{k\land\left(T_{{\rm st}}/\epsilon\right)-1}\sum_{i=1}^{N}\left\langle \boldsymbol{\delta}_{i}^{\ell},\boldsymbol{E}_{4,i}^{\ell}\right\rangle ,
\end{align*}
for some $\delta_{{\rm [\ref{lem:propChaos-E2}]}},\delta_{{\rm [\ref{lem:propChaos-E4-prod}]}},\delta_{{\rm [\ref{lem:propChaos-E4-norm}]}}>0$.
These bounds collectively hold for all $t\in\mathbb{N}\epsilon\cap\left[0,T\wedge T_{{\rm st}}\right]$,
with probability at least $1-C\exp\left(-\delta_{{\rm [\ref{lem:propChaos-E2}]}}^{2/5}\right)-2\exp\left(-\delta_{{\rm [\ref{lem:propChaos-E4-prod}]}}^{2}\right)-\delta_{{\rm [\ref{lem:propChaos-E4-norm}]}}^{-1}-\Xi\left(N;T,\kappa_{6}\right)$
on the event $\mathsf{Ev}$, provided $\delta_{{\rm [\ref{lem:propChaos-E4-prod}]}}\leq c_{{\rm [\ref{lem:propChaos-E4-prod}]}}/\sqrt{\epsilon}$
for some sufficiently small absolute constant $c_{{\rm [\ref{lem:propChaos-E4-prod}]}}>0$.
The proofs of these lemmas are deferred to Section \ref{subsec:propChaos-Auxiliary-Lemmas}.

Assuming these bounds and recalling the definition of $T_{{\rm st}}$,
we obtain for all $t\in\mathbb{N}\epsilon\cap\left[0,T\wedge T_{{\rm st}}\right]$:
\begin{equation}
\mathscr{E}_{t/\epsilon}\lesssim\mathfrak{E}+\left(\epsilon+\mathfrak{E}_{{\rm [\ref{lem:propChaos-E2}]}}\right)\epsilon\sum_{k=0}^{t/\epsilon-1}\sqrt{\mathscr{E}_{k}}+\underbrace{\left(1+\epsilon\kappa_{2}^{2}+\left(\kappa_{3}+\kappa_{4}\right)\sqrt{\gamma_{{\rm st}}}\right)}_{\text{Gronwall's exponent}}\epsilon\sum_{k=0}^{t/\epsilon-1}\mathscr{E}_{k},\label{eq:propChaos_proof_Gronwall_exp}
\end{equation}
in which 
\[
\mathfrak{E}=\epsilon^{3}+\epsilon\mathfrak{E}_{{\rm [\ref{lem:propChaos-E2}]}}^{2}+\sqrt{\epsilon}\kappa_{5}\sqrt{\gamma_{{\rm st}}}\delta_{{\rm [\ref{lem:propChaos-E4-prod}]}}+\epsilon D^{2}\kappa_{5}^{2}\delta_{{\rm [\ref{lem:propChaos-E4-norm}]}}.
\]
By Gronwall's lemma \cite{dragomir2003some}:
\begin{align*}
\mathscr{E}_{t/\epsilon} & \lesssim\left(\mathfrak{E}+\frac{\epsilon^{2}+\mathfrak{E}_{{\rm [\ref{lem:propChaos-E2}]}}^{2}}{\left(1+\epsilon\kappa_{2}^{2}+\left(\kappa_{3}+\kappa_{4}\right)\sqrt{\gamma_{{\rm st}}}\right)^{2}}\right)\exp\left(C\left(1+\epsilon\kappa_{2}^{2}+\left(\kappa_{3}+\kappa_{4}\right)\sqrt{\gamma_{{\rm st}}}\right)\right)\\
 & \lesssim\left(\mathfrak{E}+\epsilon^{2}+\mathfrak{E}_{{\rm [\ref{lem:propChaos-E2}]}}^{2}\right)\exp\left(C\left(1+\epsilon\kappa_{2}^{2}+\left(\kappa_{3}+\kappa_{4}\right)\sqrt{\gamma_{{\rm st}}}\right)\right).
\end{align*}
It is critical to ensure that Gronwall's exponent component in Eq.
(\ref{eq:propChaos_proof_Gronwall_exp}) is independent of the dimension
vector $\mathfrak{Dim}$ and hence $\kappa_{3}+\kappa_{4}$ and $\kappa_{2}$.
We do so by choosing $N$ and $\epsilon$ such that
\begin{align*}
\epsilon & \leq c/\max\left\{ \delta_{{\rm [\ref{lem:propChaos-E4-prod}]}}^{2}/c_{{\rm [\ref{lem:propChaos-E4-prod}]}}^{2},\;\kappa_{2}^{2},\;\left(\kappa_{3}+\kappa_{4}\right)^{2}\kappa_{5}^{2}\delta_{{\rm [\ref{lem:propChaos-E4-prod}]}}^{2},\;\left(\kappa_{3}+\kappa_{4}\right)^{2}D^{2}\kappa_{5}^{2}\delta_{{\rm [\ref{lem:propChaos-E4-norm}]}}\right\} ,\\
\mathfrak{E}_{{\rm [\ref{lem:propChaos-E2}]}} & \leq c'/\left(\kappa_{3}+\kappa_{4}\right),
\end{align*}
for two absolute constants $c$ and $c'$. With these constraints
and sufficiently small $c$ and $c'$, it is easy to see that with
$\gamma_{{\rm st}}=1/\left(\kappa_{3}+\kappa_{4}\right)^{2}\leq1$,
we have $\mathscr{E}_{t/\epsilon}\leq\gamma_{{\rm st}}$, and hence
$T\leq T_{{\rm st}}$. This, in particular, implies that with probability
at least
\[
1-C\exp\left(-\delta_{{\rm [\ref{lem:propChaos-E2}]}}^{2/5}\right)-2\exp\left(-\delta_{{\rm [\ref{lem:propChaos-E4-prod}]}}^{2}\right)-\delta_{{\rm [\ref{lem:propChaos-E4-norm}]}}^{-1}-\Xi\left(N;T,\kappa_{6}\right)-\exp\left(-N^{1/8}\right),
\]
for all $t\in\mathbb{N}\epsilon\cap\left[0,T\right]$, $\mathscr{E}_{t/\epsilon}\lesssim\mathfrak{E}+\epsilon^{2}+\mathfrak{E}_{{\rm [\ref{lem:propChaos-E2}]}}^{2}.$
By substituting $\delta_{{\rm [\ref{lem:propChaos-E2}]}}=\delta_{{\rm [\ref{lem:propChaos-E4-prod}]}}=\delta$
and $\delta_{{\rm [\ref{lem:propChaos-E4-norm}]}}=\delta^{2}$, Claim
\ref{enu:propChaos_thm_main} of the theorem can be established after
some algebraic manipulations, noticing that $\kappa_{1},...,\kappa_{6}\geq1$,
$D\geq1$ and $\delta>1$.

\subsubsection*{Step 2: Claim \ref{enu:propChaos_thm_phi}}

Claim \ref{enu:propChaos_thm_phi} is a corollary of Claim \ref{enu:propChaos_thm_main}
and is proven in the following.

We have from Claim \ref{enu:propChaos_thm_main} that with probability
at least $1-C\mathsf{prob}\left(N,\delta\right)$, for all $t\in\mathbb{N}\epsilon\cap\left[0,T\right]$,
for any $1$-Lipschitz test function $\phi:\;\mathbb{R}^{d}\to\mathbb{R}$,
\[
\left|\frac{1}{N}\sum_{i=1}^{N}\phi\left(\boldsymbol{\theta}_{i}^{t/\epsilon}\right)-\phi\left(\boldsymbol{\bar{\theta}}_{i}^{t}\right)\right|\lesssim\sqrt{\mathsf{err}\left(N,\epsilon,\delta\right)}.
\]
For a fixed $1$-Lipschitz $\phi$, let us define $X_{2,i}^{t}=\phi\left(\boldsymbol{\bar{\theta}}_{i}^{t}\right)-\int\phi\left(\boldsymbol{\theta}\right)\rho^{t}\left({\rm d}\boldsymbol{\theta}\right)$.
We have for any integer $p\geq1$, since $\left\Vert \boldsymbol{\bar{\theta}}_{i}^{0}\right\Vert _{2}$
is $C$-sub-Gaussian by Assumption \ref{enu:Assump_ODE} and by Eq.
(\ref{eq:propChaos_proof_thetaBar_bound}),
\begin{align*}
\mathbb{E}\left\{ \left|X_{2,i}^{t}\right|^{p}\right\}  & \leq2^{p}\mathbb{E}\left\{ \left|\phi\left(\boldsymbol{\bar{\theta}}_{i}^{t}\right)\right|^{p}\right\} \leq C^{p}\left(\mathbb{E}\left\{ \left\Vert \boldsymbol{\bar{\theta}}_{i}^{t}\right\Vert _{2}^{p}\right\} +1\right)\\
 & \leq C^{p}\left(\mathbb{E}\left\{ \left\Vert \boldsymbol{\bar{\theta}}_{i}^{0}\right\Vert _{2}^{p}\right\} +1\right)\leq C^{p}\left(p^{p/2}+1\right),
\end{align*}
which implies that $X_{2,i}^{t}$ is also $C$-sub-Gaussian. Since
$\left(X_{2,i}^{t}\right)_{i\leq N}$ are i.i.d. with zero mean, by
Lemma \ref{lem:subgauss_subexp_properties} and the union bound,
\[
\mathbb{P}\left\{ \max_{t\in\mathbb{N}\epsilon\cap\left[0,T\right]}\left|\frac{1}{N}\sum_{i=1}^{N}X_{2,i}^{t}\right|\geq\delta_{0}\right\} \lesssim\frac{T}{\epsilon}\exp\left(-CN\delta_{0}^{2}\right).
\]
This shows that with probability at least $1-C\left(\mathsf{prob}\left(N,\delta\right)+\left(T/\epsilon\right)\exp\left(-CN\delta_{0}^{2}\right)\right)$,
\[
\max_{t\in\mathbb{N}\epsilon\cap\left[0,T\right]}\left|\frac{1}{N}\sum_{i=1}^{N}\phi\left(\boldsymbol{\theta}_{i}^{t/\epsilon}\right)-\int\phi\left(\boldsymbol{\theta}\right)\rho^{t}\left({\rm d}\boldsymbol{\theta}\right)\right|\lesssim\delta_{0}+\sqrt{\mathsf{err}\left(N,\epsilon,\delta\right)}.
\]
This proves Claim \ref{enu:propChaos_thm_phi}.

\subsubsection*{Step 3: Claim \ref{enu:propChaos_thm_risk}}

Claim \ref{enu:propChaos_thm_risk} is again a corollary of Claim
\ref{enu:propChaos_thm_main} and is proven in the following.

Let us first consider $\left|{\cal R}\left(\rho_{N}^{t/\epsilon}\right)-{\cal R}\left(\bar{\rho}_{N}^{t}\right)\right|$.
Noticing that $\nabla_{1}U\left(\boldsymbol{\theta},\boldsymbol{\theta}'\right)=\nabla_{2}U\left(\boldsymbol{\theta}',\boldsymbol{\theta}\right)$,
we have from the mean value theorem: 
\begin{align*}
{\cal R}\left(\rho_{N}^{t/\epsilon}\right)-{\cal R}\left(\bar{\rho}_{N}^{t}\right) & =\frac{1}{N}\sum_{i=1}^{N}\left[V\left(\boldsymbol{\theta}_{i}^{t/\epsilon}\right)-V\left(\bar{\boldsymbol{\theta}}_{i}^{t}\right)-\mathbb{E}_{{\cal P}}\left\{ \Lambda\left(\boldsymbol{\theta}_{i}^{t/\epsilon},\boldsymbol{z}\right)-\Lambda\left(\bar{\boldsymbol{\theta}}_{i}^{t},\boldsymbol{z}\right)\right\} \right]\\
 & \qquad+\frac{1}{2N^{2}}\sum_{i,j\leq N}\left[U\left(\boldsymbol{\theta}_{i}^{t/\epsilon},\boldsymbol{\theta}_{j}^{t/\epsilon}\right)-U\left(\bar{\boldsymbol{\theta}}_{i}^{t},\bar{\boldsymbol{\theta}}_{j}^{t}\right)\right]\\
 & =\frac{1}{N}\sum_{i=1}^{N}\left\langle \nabla V\left(\boldsymbol{\zeta}_{1,i}^{t}\right)-\nabla_{1}\mathbb{E}_{{\cal P}}\left\{ \Lambda\left(\boldsymbol{\zeta}_{2,i}^{t},\boldsymbol{z}\right)\right\} ,\boldsymbol{\delta}_{i}^{t/\epsilon}\right\rangle \\
 & \qquad+\frac{1}{2N^{2}}\sum_{i,j\leq N}\left\langle \nabla_{1}U\left(\boldsymbol{\zeta}_{3,ij}^{t},\boldsymbol{\zeta}_{4,ij}^{t}\right),\boldsymbol{\delta}_{i}^{t/\epsilon}\right\rangle +\left\langle \nabla_{1}U\left(\boldsymbol{\zeta}_{4,ij}^{t},\boldsymbol{\zeta}_{3,ij}^{t}\right),\boldsymbol{\delta}_{j}^{t/\epsilon}\right\rangle ,
\end{align*}
for some $\boldsymbol{\zeta}_{1,i}^{t},\boldsymbol{\zeta}_{2,i}^{t},\boldsymbol{\zeta}_{3,ij}^{t}\in\left[\bar{\boldsymbol{\theta}}_{i}^{t},\boldsymbol{\theta}_{i}^{t/\epsilon}\right]$
and $\boldsymbol{\zeta}_{4,ij}^{t}\in\left[\bar{\boldsymbol{\theta}}_{j}^{t},\boldsymbol{\theta}_{j}^{t/\epsilon}\right]$.
Note that by Eq. (\ref{eq:propChaos_proof_thetaBar_bound}),
\begin{align*}
\left\Vert \boldsymbol{\zeta}_{r,i}^{t}\right\Vert _{2} & \leq\left\Vert \boldsymbol{\delta}_{i}^{t/\epsilon}\right\Vert _{2}+\left\Vert \bar{\boldsymbol{\theta}}_{i}^{t}\right\Vert _{2}\lesssim\left\Vert \boldsymbol{\delta}_{i}^{t/\epsilon}\right\Vert _{2}+\left\Vert \bar{\boldsymbol{\theta}}_{i}^{0}\right\Vert _{2}+1,\qquad r=1,2,\\
\left\Vert \boldsymbol{\zeta}_{3,ij}^{t}\right\Vert _{2} & \leq\left\Vert \boldsymbol{\delta}_{i}^{t/\epsilon}\right\Vert _{2}+\left\Vert \bar{\boldsymbol{\theta}}_{i}^{t}\right\Vert _{2}\lesssim\left\Vert \boldsymbol{\delta}_{i}^{t/\epsilon}\right\Vert _{2}+\left\Vert \bar{\boldsymbol{\theta}}_{i}^{0}\right\Vert _{2}+1,\\
\left\Vert \boldsymbol{\zeta}_{4,ij}^{t}\right\Vert _{2} & \leq\left\Vert \boldsymbol{\delta}_{j}^{t/\epsilon}\right\Vert _{2}+\left\Vert \bar{\boldsymbol{\theta}}_{j}^{t}\right\Vert _{2}\lesssim\left\Vert \boldsymbol{\delta}_{j}^{t/\epsilon}\right\Vert _{2}+\left\Vert \bar{\boldsymbol{\theta}}_{j}^{0}\right\Vert _{2}+1.
\end{align*}
Then by Assumptions \ref{enu:Assump_growth} and \ref{enu:Assump_additional_1},
under the event $\mathsf{Ev}$,
\begin{align}
\left|{\cal R}\left(\rho_{N}^{t/\epsilon}\right)-{\cal R}\left(\bar{\rho}_{N}^{t}\right)\right| & \lesssim\frac{1}{N}\sum_{i=1}^{N}\left(\left\Vert \nabla V\left(\boldsymbol{\zeta}_{1,i}^{t}\right)\right\Vert _{2}+\left\Vert \nabla_{1}\mathbb{E}_{{\cal P}}\left\{ \Lambda\left(\boldsymbol{\zeta}_{2,i}^{t},\boldsymbol{z}\right)\right\} \right\Vert _{2}\right)\left\Vert \boldsymbol{\delta}_{i}^{t/\epsilon}\right\Vert _{2}\nonumber \\
 & \qquad+\frac{1}{N^{2}}\sum_{i,j\leq N}\left\Vert \nabla_{1}U\left(\boldsymbol{\zeta}_{3,ij}^{t},\boldsymbol{\zeta}_{4,ij}^{t}\right)\right\Vert _{2}\left\Vert \boldsymbol{\delta}_{i}^{t/\epsilon}\right\Vert _{2}+\left\Vert \nabla_{1}U\left(\boldsymbol{\zeta}_{4,ij}^{t},\boldsymbol{\zeta}_{3,ij}^{t}\right)\right\Vert _{2}\left\Vert \boldsymbol{\delta}_{j}^{t/\epsilon}\right\Vert _{2}\nonumber \\
 & \lesssim\frac{1}{N}\sum_{i=1}^{N}\left(\left\Vert \boldsymbol{\zeta}_{1,i}^{t}\right\Vert _{2}+\left\Vert \boldsymbol{\zeta}_{2,i}^{t}\right\Vert _{2}+1\right)\left\Vert \boldsymbol{\delta}_{i}^{t/\epsilon}\right\Vert _{2}\nonumber \\
 & \qquad+\frac{\kappa_{1}}{N^{2}}\sum_{i,j\leq N}\left(\left\Vert \boldsymbol{\zeta}_{3,ij}^{t}\right\Vert _{2}+1\right)\left(\left\Vert \boldsymbol{\zeta}_{4,ij}^{t}\right\Vert _{2}^{2}+1\right)\left\Vert \boldsymbol{\delta}_{i}^{t/\epsilon}\right\Vert _{2}\nonumber \\
 & \qquad+\frac{\kappa_{1}}{N^{2}}\sum_{i,j\leq N}\left(\left\Vert \boldsymbol{\zeta}_{4,ij}^{t}\right\Vert _{2}+1\right)\left(\left\Vert \boldsymbol{\zeta}_{3,ij}^{t}\right\Vert _{2}^{2}+1\right)\left\Vert \boldsymbol{\delta}_{j}^{t/\epsilon}\right\Vert _{2}\nonumber \\
 & \lesssim\frac{1}{N}\sum_{i=1}^{N}\left(\left\Vert \boldsymbol{\delta}_{i}^{t/\epsilon}\right\Vert _{2}+\left\Vert \bar{\boldsymbol{\theta}}_{i}^{0}\right\Vert _{2}+1\right)\left\Vert \boldsymbol{\delta}_{i}^{t/\epsilon}\right\Vert _{2}\nonumber \\
 & \qquad+\frac{\kappa_{1}}{N^{2}}\sum_{i,j\leq N}\left(\left\Vert \boldsymbol{\delta}_{i}^{t/\epsilon}\right\Vert _{2}+\left\Vert \bar{\boldsymbol{\theta}}_{i}^{0}\right\Vert _{2}+1\right)\left(\left\Vert \boldsymbol{\delta}_{j}^{t/\epsilon}\right\Vert _{2}^{2}+\left\Vert \bar{\boldsymbol{\theta}}_{j}^{0}\right\Vert _{2}^{2}+1\right)\left\Vert \boldsymbol{\delta}_{i}^{t/\epsilon}\right\Vert _{2}\nonumber \\
 & \qquad+\frac{\kappa_{1}}{N^{2}}\sum_{i,j\leq N}\left(\left\Vert \boldsymbol{\delta}_{j}^{t/\epsilon}\right\Vert _{2}+\left\Vert \bar{\boldsymbol{\theta}}_{j}^{0}\right\Vert _{2}+1\right)\left(\left\Vert \boldsymbol{\delta}_{i}^{t/\epsilon}\right\Vert _{2}^{2}+\left\Vert \bar{\boldsymbol{\theta}}_{i}^{0}\right\Vert _{2}^{2}+1\right)\left\Vert \boldsymbol{\delta}_{j}^{t/\epsilon}\right\Vert _{2}\nonumber \\
 & \lesssim\mathscr{E}_{t/\epsilon}+\sqrt{\frac{1}{N}\sum_{i=1}^{N}\left\Vert \bar{\boldsymbol{\theta}}_{i}^{0}\right\Vert _{2}^{2}\mathscr{E}_{t/\epsilon}}+\sqrt{\mathscr{E}_{t/\epsilon}}\nonumber \\
 & \qquad+\kappa_{1}\left(\mathscr{E}_{t/\epsilon}+\sqrt{\frac{1}{N}\sum_{i=1}^{N}\left\Vert \bar{\boldsymbol{\theta}}_{i}^{0}\right\Vert _{2}^{2}\mathscr{E}_{t/\epsilon}}+\sqrt{\mathscr{E}_{t/\epsilon}}\right)\left(\mathscr{E}_{t/\epsilon}+\frac{1}{N}\sum_{i=1}^{N}\left\Vert \bar{\boldsymbol{\theta}}_{i}^{0}\right\Vert _{2}^{2}+1\right)\nonumber \\
 & \lesssim\mathscr{E}_{t/\epsilon}+\sqrt{\mathscr{E}_{t/\epsilon}}+\kappa_{1}\left(\mathscr{E}_{t/\epsilon}+\sqrt{\mathscr{E}_{t/\epsilon}}\right)\left(\mathscr{E}_{t/\epsilon}+1\right)\label{eq:propChaos_proof_s1p3}\\
 & \lesssim\kappa_{1}\sqrt{\mathscr{E}_{t/\epsilon}},\nonumber 
\end{align}
where in the last step, we use the fact that $\mathscr{E}_{t/\epsilon}\leq\gamma_{{\rm st}}\leq1$
for all $t\in\mathbb{N}\epsilon\cap\left[0,T\right]$, with probability
at least $1-C\mathsf{prob}\left(N,\delta\right)$.

Next, we consider $\left|{\cal R}\left(\bar{\rho}_{N}^{t}\right)-{\cal R}\left(\rho^{t}\right)\right|$,
for $t\in\left[0,T\right]$:
\begin{align*}
\left|{\cal R}\left(\bar{\rho}_{N}^{t}\right)-{\cal R}\left(\rho^{t}\right)\right| & \lesssim\left|\frac{1}{N}\sum_{i=1}^{N}\left[V\left(\bar{\boldsymbol{\theta}}_{i}^{t}\right)-\int V\left(\boldsymbol{\theta}\right)\rho^{t}\left({\rm d}\boldsymbol{\theta}\right)\right]\right|\\
 & \qquad+\left|\frac{1}{N}\sum_{i=1}^{N}\mathbb{E}_{{\cal P}}\left\{ \Lambda\left(\bar{\boldsymbol{\theta}}_{i}^{t},\boldsymbol{z}\right)\right\} -\int\mathbb{E}_{{\cal P}}\left\{ \Lambda\left(\boldsymbol{\theta},\boldsymbol{z}\right)\right\} \rho^{t}\left({\rm d}\boldsymbol{\theta}\right)\right|\\
 & \qquad+\left|\frac{1}{N^{2}}\sum_{i=1}^{N}\sum_{j\neq i}\left[U\left(\bar{\boldsymbol{\theta}}_{i}^{t},\bar{\boldsymbol{\theta}}_{j}^{t}\right)-\int U\left(\bar{\boldsymbol{\theta}}_{i}^{t},\boldsymbol{\theta}\right)\rho^{t}\left({\rm d}\boldsymbol{\theta}\right)\right]\right|\\
 & \qquad+\left|\frac{1}{N}\sum_{i=1}^{N}\left[W\left(\bar{\boldsymbol{\theta}}_{i}^{t};\rho^{t}\right)-\int W\left(\boldsymbol{\theta};\rho^{t}\right)\rho^{t}\left({\rm d}\boldsymbol{\theta}\right)\right]\right|\\
 & \qquad+\left|\frac{1}{N^{2}}\sum_{i=1}^{N}\left[U\left(\bar{\boldsymbol{\theta}}_{i}^{t},\bar{\boldsymbol{\theta}}_{i}^{t}\right)+W\left(\bar{\boldsymbol{\theta}}_{i}^{t};\rho^{t}\right)\right]\right|\\
 & \equiv A_{3,1}^{t}+A_{3,2}^{t}+A_{3,3}^{t}+A_{3,4}^{t}+A_{3,5}^{t}.
\end{align*}
Let us bound $A_{3,1}^{t}$. Denote $X_{3,i}^{t}=V\left(\bar{\boldsymbol{\theta}}_{i}^{t}\right)-\int V\left(\boldsymbol{\theta}\right)\rho^{t}\left({\rm d}\boldsymbol{\theta}\right)$.
We have from Assumptions \ref{enu:Assump_growth}, \ref{enu:Assump_ODE}
and \ref{enu:Assump_additional_1} and Eq. (\ref{eq:propChaos_proof_thetaBar_bound})
that, for any positive integer $p$,
\begin{align*}
\mathbb{E}\left\{ \left|X_{3,i}^{t}\right|^{p}\right\}  & \leq2^{p}\mathbb{E}\left\{ \left|V\left(\boldsymbol{\bar{\theta}}_{i}^{t}\right)\right|^{p}\right\} \leq C^{p}\mathbb{E}\left\{ \left\Vert \nabla V\left(\boldsymbol{\zeta}_{i}^{t}\right)\right\Vert _{2}^{p}\left\Vert \boldsymbol{\bar{\theta}}_{i}^{t}\right\Vert _{2}^{p}+\left|V\left(\boldsymbol{0}\right)\right|^{p}\right\} ,\\
 & \leq C^{p}\mathbb{E}\left\{ \left(\left\Vert \boldsymbol{\zeta}_{i}^{t}\right\Vert _{2}^{p}+1\right)\left\Vert \boldsymbol{\bar{\theta}}_{i}^{t}\right\Vert _{2}^{p}+\left|V\left(\boldsymbol{0}\right)\right|^{p}\right\} \leq C^{p}\mathbb{E}\left\{ \left(\left\Vert \bar{\boldsymbol{\theta}}_{i}^{t}\right\Vert _{2}^{p}+1\right)\left\Vert \boldsymbol{\bar{\theta}}_{i}^{t}\right\Vert _{2}^{p}+\left|V\left(\boldsymbol{0}\right)\right|^{p}\right\} \\
 & \leq C^{p}\left(\mathbb{E}\left\{ \left\Vert \boldsymbol{\bar{\theta}}_{i}^{0}\right\Vert _{2}^{2p}\right\} +1\right)\leq C^{p}\left(p^{p}+1\right),
\end{align*}
for some $\boldsymbol{\zeta}_{i}^{t}\in\left[\boldsymbol{0},\bar{\boldsymbol{\theta}}_{i}^{t}\right]$,
where we have applied the mean value theorem and we note $\left\Vert \boldsymbol{\zeta}_{i}^{t}\right\Vert _{2}\leq\left\Vert \bar{\boldsymbol{\theta}}_{i}^{t}\right\Vert _{2}$.
This implies that $X_{3,i}^{t}$ is $C$-sub-exponential. Since $\left(X_{3,i}^{t}\right)_{i\leq N}$
are i.i.d. with zero mean, by Lemma \ref{lem:subgauss_subexp_properties}
and the union bound, for $\delta\in\left(0,1\right)$,
\[
\mathbb{P}\left\{ \max_{t\in\mathbb{N}\epsilon\cap\left[0,T\right]}A_{3,1}^{t}\geq\delta\right\} \lesssim\left(T/\epsilon\right)\cdot\exp\left(-CN\delta^{2}\right).
\]
One has similar results for $A_{3,2}^{t}$ and $A_{3,4}^{t}$ by using
Assumptions \ref{enu:Assump_growth}, \ref{enu:Assump_ODE} and \ref{enu:Assump_additional_1}
and Eq. (\ref{eq:propChaos_proof_thetaBar_bound}), for $\delta\in\left(0,1\right)$:
\begin{align*}
\mathbb{P}\left\{ \max_{t\in\mathbb{N}\epsilon\cap\left[0,T\right]}A_{3,2}^{t}\geq\delta\right\}  & \lesssim\left(T/\epsilon\right)\cdot\exp\left(-CN\delta^{2}\right),\\
\mathbb{P}\left\{ \max_{t\in\mathbb{N}\epsilon\cap\left[0,T\right]}A_{3,4}^{t}\geq\delta\right\}  & \lesssim\left(T/\epsilon\right)\cdot\exp\left(-CN\delta^{2}\right).
\end{align*}
Let us bound $A_{3,3}^{t}$. Denote $Y_{3,ij}^{t}=U\left(\bar{\boldsymbol{\theta}}_{i}^{t},\bar{\boldsymbol{\theta}}_{j}^{t}\right)-\int U\left(\bar{\boldsymbol{\theta}}_{i}^{t},\boldsymbol{\theta}\right)\rho^{t}\left({\rm d}\boldsymbol{\theta}\right)$,
and consider $j\neq i$ for a fixed $i\in\left[N\right]$. Recalling
Assumptions \ref{enu:Assump_growth}, \ref{enu:Assump_ODE} and \ref{enu:Assump_additional_1}
and Eq. (\ref{eq:propChaos_proof_thetaBar_bound}), that $\left(\bar{\boldsymbol{\theta}}_{i}^{t}\right)_{i\leq N}$
are i.i.d. and that $\nabla_{1}U\left(\boldsymbol{\theta},\boldsymbol{\theta}'\right)=\nabla_{2}U\left(\boldsymbol{\theta}',\boldsymbol{\theta}\right)$,
we have for any positive integer $p$,
\begin{align*}
 & \mathbb{E}\left\{ \left|Y_{3,ij}^{t}\right|^{2p}\right\} \leq4^{p}\mathbb{E}\left\{ \left|U\left(\bar{\boldsymbol{\theta}}_{i}^{t},\bar{\boldsymbol{\theta}}_{j}^{t}\right)\right|^{2p}\right\} \\
 & \quad\leq4^{p}\mathbb{E}\left\{ \left\Vert \nabla_{1}U\left(\boldsymbol{\zeta}_{1,ij}^{t},\boldsymbol{\zeta}_{2,ij}^{t}\right)\right\Vert _{2}^{2p}\left\Vert \bar{\boldsymbol{\theta}}_{i}^{t}\right\Vert _{2}^{2p}+\left\Vert \nabla_{1}U\left(\boldsymbol{\zeta}_{2,ij}^{t},\boldsymbol{\zeta}_{1,ij}^{t}\right)\right\Vert _{2}^{2p}\left\Vert \bar{\boldsymbol{\theta}}_{j}^{t}\right\Vert _{2}^{2p}+\left|U\left(\boldsymbol{0},\boldsymbol{0}\right)\right|^{2p}\right\} \\
 & \quad\leq4^{p}\mathbb{E}\left\{ \kappa_{1}^{2p}\left(\left\Vert \boldsymbol{\zeta}_{1,ij}^{t}\right\Vert _{2}^{2p}+1\right)\left(\left\Vert \boldsymbol{\zeta}_{2,ij}^{t}\right\Vert _{2}^{4p}+1\right)\left\Vert \bar{\boldsymbol{\theta}}_{i}^{t}\right\Vert _{2}^{2p}+\kappa_{1}^{2p}\left(\left\Vert \boldsymbol{\zeta}_{2,ij}^{t}\right\Vert _{2}^{2p}+1\right)\left(\left\Vert \boldsymbol{\zeta}_{1,ij}^{t}\right\Vert _{2}^{4p}+1\right)\left\Vert \bar{\boldsymbol{\theta}}_{j}^{t}\right\Vert _{2}^{2p}+1\right\} \\
 & \quad\leq4^{p}\mathbb{E}\left\{ \kappa_{1}^{2p}\left(\left\Vert \bar{\boldsymbol{\theta}}_{i}^{t}\right\Vert _{2}^{2p}+1\right)\left(\left\Vert \bar{\boldsymbol{\theta}}_{j}^{t}\right\Vert _{2}^{4p}+1\right)\left\Vert \bar{\boldsymbol{\theta}}_{i}^{t}\right\Vert _{2}^{2p}+\kappa_{1}^{2p}\left(\left\Vert \bar{\boldsymbol{\theta}}_{j}^{t}\right\Vert _{2}^{2p}+1\right)\left(\left\Vert \bar{\boldsymbol{\theta}}_{i}^{t}\right\Vert _{2}^{4p}+1\right)\left\Vert \bar{\boldsymbol{\theta}}_{j}^{t}\right\Vert _{2}^{2p}+1\right\} \\
 & \quad\leq C^{p}\mathbb{E}\left\{ \kappa_{1}^{2p}\left(\left\Vert \bar{\boldsymbol{\theta}}_{i}^{0}\right\Vert _{2}^{4p}+1\right)\left(\left\Vert \bar{\boldsymbol{\theta}}_{j}^{0}\right\Vert _{2}^{4p}+1\right)+\kappa_{1}^{2p}\left(\left\Vert \bar{\boldsymbol{\theta}}_{j}^{0}\right\Vert _{2}^{4p}+1\right)\left(\left\Vert \bar{\boldsymbol{\theta}}_{i}^{0}\right\Vert _{2}^{4p}+1\right)+1\right\} \\
 & \quad\leq C^{p}\left(\kappa_{1}^{2p}\left(p^{2p}+1\right)^{2}+1\right)\\
 & \quad\leq C^{p}\kappa_{1}^{2p}p^{4p},
\end{align*}
for some $\boldsymbol{\zeta}_{1,ij}^{t}\in\left[\boldsymbol{0},\bar{\boldsymbol{\theta}}_{i}^{t}\right]$
and $\boldsymbol{\zeta}_{2,ij}^{t}\in\left[\boldsymbol{0},\bar{\boldsymbol{\theta}}_{j}^{t}\right]$,
where we have used the mean value theorem and we note $\left\Vert \boldsymbol{\zeta}_{1,ij}^{t}\right\Vert _{2}\leq\left\Vert \bar{\boldsymbol{\theta}}_{i}^{t}\right\Vert _{2}$,
$\left\Vert \boldsymbol{\zeta}_{2,ij}^{t}\right\Vert _{2}\leq\left\Vert \bar{\boldsymbol{\theta}}_{j}^{t}\right\Vert _{2}$.
Note that for a fixed $i$, $\left(Y_{3,ij}^{t}\right)_{j\neq i,\;j\leq N}$
are independent with zero mean, conditional on $\bar{\boldsymbol{\theta}}_{i}^{t}$.
By Lemma \ref{lem:bound_moment_sym}, for a fixed $i$,
\[
\mathbb{E}\left\{ \left|\frac{1}{N}\sum_{j\neq i,\;j\leq N}Y_{3,ij}^{t}\right|^{2p}\right\} \leq C^{p}\kappa_{1}^{2p}p^{6p}/N^{p}.
\]
This implies that $\left|\left(1/N\right)\cdot\sum_{j\neq i,\;j\leq N}Y_{3,ij}^{t}\right|^{1/3}$
is $\left(C\kappa_{1}^{1/3}N^{-1/6}\right)$-sub-exponential, and
therefore, by Lemma \ref{lem:subgauss_subexp_properties},
\[
\mathbb{P}\left\{ \left|\frac{1}{N}\sum_{j\neq i,\;j\leq N}Y_{3,ij}^{t}\right|\geq\delta\right\} \lesssim\exp\left(-C\delta^{1/3}\left(\frac{N}{\kappa_{1}^{2}}\right)^{1/6}\right).
\]
By the union bound,
\[
\mathbb{P}\left\{ \max_{t\in\mathbb{N}\epsilon\cap\left[0,T\right]}A_{3,3}^{t}\geq\delta\right\} \leq\mathbb{P}\left\{ \max_{t\in\mathbb{N}\epsilon\cap\left[0,T\right]}\max_{i\in\left[N\right]}\left|\frac{1}{N}\sum_{j\neq i}Y_{3,ij}^{t}\right|\geq\delta\right\} \lesssim\frac{NT}{\epsilon}\exp\left(-C\delta^{1/3}\left(\frac{N}{\kappa_{1}^{2}}\right)^{1/6}\right).
\]
Let us now turn to $A_{3,5}^{t}$. We have from Assumptions \ref{enu:Assump_growth},
\ref{enu:Assump_additional_1} and Eq. (\ref{eq:propChaos_proof_thetaBar_bound}),
and again the mean value theorem, on the event $\mathsf{Ev}$,
\begin{align*}
A_{3,5}^{t} & \lesssim\frac{1}{N^{2}}\sum_{i=1}^{N}\bigg[\left(\left\Vert \nabla_{1}U\left(\boldsymbol{\zeta}_{1,i}^{t},\boldsymbol{\zeta}_{2,i}^{t}\right)\right\Vert _{2}+\left\Vert \nabla_{1}U\left(\boldsymbol{\zeta}_{2,i}^{t},\boldsymbol{\zeta}_{1,i}^{t}\right)\right\Vert _{2}\right)\left\Vert \bar{\boldsymbol{\theta}}_{i}^{t}\right\Vert _{2}\\
 & \qquad\qquad+\left\Vert \nabla_{1}W\left(\boldsymbol{\zeta}_{i}^{t};\rho^{t}\right)\right\Vert _{2}\left\Vert \bar{\boldsymbol{\theta}}_{i}^{t}\right\Vert _{2}+\left|U\left(\boldsymbol{0},\boldsymbol{0}\right)\right|+W\left(\boldsymbol{0};\rho^{t}\right)\bigg]\\
 & \lesssim\frac{1}{N^{2}}\sum_{i=1}^{N}\bigg[\kappa_{1}\left(\left(\left\Vert \boldsymbol{\zeta}_{1,i}^{t}\right\Vert _{2}+1\right)\left(\left\Vert \boldsymbol{\zeta}_{2,i}^{t}\right\Vert _{2}^{2}+1\right)+\left(\left\Vert \boldsymbol{\zeta}_{2,i}^{t}\right\Vert _{2}+1\right)\left(\left\Vert \boldsymbol{\zeta}_{1,i}^{t}\right\Vert _{2}^{2}+1\right)\right)\left\Vert \bar{\boldsymbol{\theta}}_{i}^{t}\right\Vert _{2}\\
 & \qquad\qquad+\left(\left\Vert \boldsymbol{\zeta}_{i}^{t}\right\Vert _{2}+1\right)\left\Vert \bar{\boldsymbol{\theta}}_{i}^{t}\right\Vert _{2}+\left|U\left(\boldsymbol{0},\boldsymbol{0}\right)\right|+W\left(\boldsymbol{0};\rho^{t}\right)\bigg]\\
 & \lesssim\frac{1}{N^{2}}\sum_{i=1}^{N}\kappa_{1}\left(\left\Vert \bar{\boldsymbol{\theta}}_{i}^{t}\right\Vert _{2}^{3}+1\right)\left\Vert \bar{\boldsymbol{\theta}}_{i}^{t}\right\Vert _{2}+\frac{1}{N}\lesssim\frac{1}{N^{2}}\sum_{i=1}^{N}\kappa_{1}\left(\left\Vert \bar{\boldsymbol{\theta}}_{i}^{t}\right\Vert _{2}^{6}+1\right)+\frac{1}{N}\\
 & \lesssim\frac{\kappa_{1}}{N}\left(\frac{1}{N}\sum_{i=1}^{N}\left\Vert \bar{\boldsymbol{\theta}}_{i}^{0}\right\Vert _{2}^{6}+1\right)+\frac{1}{N}\lesssim\frac{\kappa_{1}}{N},
\end{align*}
for some $\boldsymbol{\zeta}_{1,i}^{t},\boldsymbol{\zeta}_{2,i}^{t},\boldsymbol{\zeta}_{i}^{t}\in\left[\boldsymbol{0},\bar{\boldsymbol{\theta}}_{i}^{t}\right]$,
where we note $\left\Vert \boldsymbol{\zeta}_{1,i}^{t}\right\Vert _{2},\left\Vert \boldsymbol{\zeta}_{2,i}^{t}\right\Vert _{2},\left\Vert \boldsymbol{\zeta}_{i}^{t}\right\Vert _{2}\leq\left\Vert \bar{\boldsymbol{\theta}}_{i}^{t}\right\Vert _{2}$.
Combining the bounds, we thus obtain for any $\delta\in\left(0,1\right)$,
\[
\mathbb{P}\left\{ \left\{ \max_{t\in\mathbb{N}\epsilon\cap\left[0,T\right]}\left|{\cal R}\left(\bar{\rho}_{N}^{t}\right)-{\cal R}\left(\rho^{t}\right)\right|\gtrsim\delta+\frac{\kappa_{1}}{N}\right\} \cap\mathsf{Ev}\right\} \lesssim\frac{NT}{\epsilon}\exp\left(-C\delta^{1/3}\left(\frac{N}{\kappa_{1}^{2}}\right)^{1/6}\right).
\]

Finally with the bounds on $\left|{\cal R}\left(\rho_{N}^{t/\epsilon}\right)-{\cal R}\left(\bar{\rho}_{N}^{t}\right)\right|$
and $\left|{\cal R}\left(\bar{\rho}_{N}^{t}\right)-{\cal R}\left(\rho^{t}\right)\right|$,
along with Claim \ref{enu:propChaos_thm_main}, we have:
\[
\max_{t\in\mathbb{N}\epsilon\cap\left[0,T\right]}\left|{\cal R}\left(\rho_{N}^{t/\epsilon}\right)-{\cal R}\left(\rho^{t}\right)\right|\lesssim\kappa_{1}\sqrt{\mathsf{err}\left(N,\epsilon,\delta\right)}+\epsilon_{0}+\frac{\kappa_{1}}{N},
\]
with probability at least 
\[
1-C\mathsf{prob}\left(N,\delta\right)-C\frac{NT}{\epsilon}\exp\left(-C\epsilon_{0}^{1/3}\left(\frac{N}{\kappa_{1}^{2}}\right)^{1/6}\right),
\]
for any $\epsilon_{0}\in\left(0,1\right)$. This completes the proof
of Claim \ref{enu:propChaos_thm_risk}.

\subsubsection*{Step 4: Claim \ref{enu:propChaos_thm_sampling}}

Let ${\cal G}$ denote the sigma-algebra generated by everything but
the random indices $\left(h\left(i\right)\right)_{i\leq M}$. Consider
$t\in\mathbb{N}\epsilon\cap\left[0,T\right]$. We have $\mathbb{E}\left\{ \left\Vert \boldsymbol{\delta}_{h\left(i\right)}^{t/\epsilon}\right\Vert _{2}^{2}\middle|{\cal G}\right\} =\mathscr{E}_{t/\epsilon}$.
Therefore, for any $\delta_{0}>0$,
\[
\mathbb{P}\left\{ \frac{1}{M}\sum_{i=1}^{M}\left\Vert \boldsymbol{\delta}_{h\left(i\right)}^{t/\epsilon}\right\Vert _{2}^{2}\geq\delta_{0}\mathscr{E}_{t/\epsilon}\middle|{\cal G}\right\} \leq\frac{1}{\delta_{0}\mathscr{E}_{t/\epsilon}}\mathbb{E}\left\{ \frac{1}{M}\sum_{i=1}^{M}\left\Vert \boldsymbol{\delta}_{h\left(i\right)}^{t/\epsilon}\right\Vert _{2}^{2}\middle|{\cal G}\right\} =\frac{1}{\delta_{0}}.
\]
We also have, by Assumption \ref{enu:Assump_ODE}, for any positive
integer $p$,
\[
\mathbb{E}\left\{ \left\Vert \bar{\boldsymbol{\theta}}_{h\left(i\right)}^{0}\right\Vert _{2}^{2p}\right\} =\mathbb{E}\left\{ \frac{1}{N}\sum_{j=1}^{N}\left\Vert \bar{\boldsymbol{\theta}}_{j}^{0}\right\Vert _{2}^{2p}\right\} =C^{p}p^{p},
\]
which means $\left\Vert \bar{\boldsymbol{\theta}}_{h\left(i\right)}^{0}\right\Vert _{2}^{2}-\left(1/N\right)\cdot\sum_{j=1}^{N}\left\Vert \bar{\boldsymbol{\theta}}_{j}^{0}\right\Vert _{2}^{2}$
is a zero-mean $C$-sub-exponential random variable. Therefore, by
Lemma \ref{lem:subgauss_subexp_properties},
\[
\mathbb{P}\left\{ \left|\frac{1}{M}\sum_{i=1}^{M}\left\Vert \bar{\boldsymbol{\theta}}_{h\left(i\right)}^{0}\right\Vert _{2}^{2}-\frac{1}{N}\sum_{i=1}^{N}\left\Vert \bar{\boldsymbol{\theta}}_{i}^{0}\right\Vert _{2}^{2}\right|\geq C\right\} \lesssim e^{-M}.
\]
Then proceeding similarly to the steps leading up to Eq. (\ref{eq:propChaos_proof_s1p3})
(proof of Claim \ref{enu:propChaos_thm_risk}), we obtain:
\[
\left|{\cal R}\left(\nu_{M}^{t/\epsilon}\right)-{\cal R}\left(\bar{\nu}_{M}^{t}\right)\right|\lesssim\kappa_{1}\left(\delta_{0}^{3/2}+1\right)\sqrt{\delta_{0}\mathscr{E}_{t/\epsilon}}\lesssim\kappa_{1}\left(\delta_{0}^{3/2}+1\right)\sqrt{\delta_{0}\mathsf{err}\left(N,\epsilon,\delta\right)},
\]
with probability at least $1-C\mathsf{prob}\left(N,\delta\right)-\delta_{0}^{-1}-Ce^{-M}$.

\subsection{Proofs of auxiliary lemmas\label{subsec:propChaos-Auxiliary-Lemmas}}

We state and prove the auxiliary lemmas that are used in the proof
of Claim \ref{enu:propChaos_thm_main} of Theorem \ref{thm:propChaos}
in Section \ref{subsec:propChaos-main-proof}. We reuse the notations
and setups that are introduced in that proof.
\begin{lem}[Control of $\boldsymbol{E}_{1,i}^{k}$]
\label{lem:propChaos-E1}Consider the same setting as Theorem \ref{thm:propChaos}.
We have:
\begin{align*}
\frac{\epsilon}{N}\sum_{k=0}^{t/\epsilon-1}\sum_{i=1}^{N}\left\Vert \boldsymbol{E}_{1,i}^{k}\right\Vert _{2}\left\Vert \boldsymbol{\delta}_{i}^{k}\right\Vert _{2} & \lesssim\epsilon^{2}\sum_{k=0}^{t/\epsilon-1}\sqrt{\mathscr{E}_{k}},\\
\frac{\epsilon^{2}}{N}\sum_{k=0}^{t/\epsilon-1}\sum_{i=1}^{N}\left\Vert \boldsymbol{E}_{1,i}^{k}\right\Vert _{2}^{2} & \lesssim\epsilon^{3},
\end{align*}
for all $t\in\mathbb{N}\epsilon\cap\left[0,T\right]$, under the event
$\mathsf{Ev}$.
\end{lem}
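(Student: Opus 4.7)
The plan is to show the pointwise bound $\|\boldsymbol{E}_{1,i}^{k}\|_{2}\lesssim\epsilon(\|\bar{\boldsymbol{\theta}}_{i}^{0}\|_{2}+1)$, after which both inequalities follow from Cauchy--Schwarz together with the bound $\frac{1}{N}\sum_{i}(\|\bar{\boldsymbol{\theta}}_{i}^{0}\|_{2}^{2}+1)\lesssim 1$ provided by the event $\mathsf{Ev}$.

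To get the pointwise bound, I insert and subtract the two intermediate quantities $\xi(k\epsilon)\boldsymbol{G}(\bar{\boldsymbol{\theta}}_{i}^{s};\rho^{s})$ and $\xi(k\epsilon)\boldsymbol{G}(\bar{\boldsymbol{\theta}}_{i}^{k\epsilon};\rho^{s})$ and handle each of the three resulting pieces separately, for $s\in[k\epsilon,(k+1)\epsilon]$. (i) The learning-rate increment satisfies $|\xi(s)-\xi(k\epsilon)|\lesssim\epsilon$ by Assumption \ref{enu:Assump_lr}, while $\|\boldsymbol{G}(\bar{\boldsymbol{\theta}}_{i}^{s};\rho^{s})\|_{2}\lesssim\|\bar{\boldsymbol{\theta}}_{i}^{s}\|_{2}+1\lesssim\|\bar{\boldsymbol{\theta}}_{i}^{0}\|_{2}+1$ by Assumption \ref{enu:Assump_growth} combined with the a priori estimate \eqref{eq:propChaos_proof_thetaBar_bound}. (ii) The spatial increment is controlled using the Lipschitz bounds on $\nabla V$ and $\nabla_{1}W(\cdot;\rho^{s})$ from Assumption \ref{enu:Assump_growth}, together with the time-regularity $\|\bar{\boldsymbol{\theta}}_{i}^{s}-\bar{\boldsymbol{\theta}}_{i}^{k\epsilon}\|_{2}\lesssim(\|\bar{\boldsymbol{\theta}}_{i}^{0}\|_{2}+1)\epsilon$ from \eqref{eq:propChaos_proof_thetaBar_tDiff_bound}. (iii) The measure increment is handled by the final part of Assumption \ref{enu:Assump_growth}, which gives $\|\nabla_{1}W(\bar{\boldsymbol{\theta}}_{i}^{k\epsilon};\rho^{s})-\nabla_{1}W(\bar{\boldsymbol{\theta}}_{i}^{k\epsilon};\rho^{k\epsilon})\|_{2}\lesssim(\|\bar{\boldsymbol{\theta}}_{i}^{k\epsilon}\|_{2}+1)\epsilon\lesssim(\|\bar{\boldsymbol{\theta}}_{i}^{0}\|_{2}+1)\epsilon$. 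Summing the three contributions, integrating over $s$ and dividing by $\epsilon$, I obtain $\|\boldsymbol{E}_{1,i}^{k}\|_{2}\lesssim\epsilon(\|\bar{\boldsymbol{\theta}}_{i}^{0}\|_{2}+1)$.

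Having this, the second displayed bound follows directly:
\[
\frac{\epsilon^{2}}{N}\sum_{k=0}^{t/\epsilon-1}\sum_{i=1}^{N}\|\boldsymbol{E}_{1,i}^{k}\|_{2}^{2}\lesssim\epsilon^{4}\cdot\frac{t}{\epsilon}\cdot\frac{1}{N}\sum_{i=1}^{N}(\|\bar{\boldsymbol{\theta}}_{i}^{0}\|_{2}^{2}+1)\lesssim\epsilon^{3},
\]
where the last step uses $t\leq T\lesssim 1$ and the event $\mathsf{Ev}$. For the first bound, Cauchy--Schwarz in $i$ gives, for each $k$,
\[
\frac{1}{N}\sum_{i=1}^{N}\|\boldsymbol{E}_{1,i}^{k}\|_{2}\|\boldsymbol{\delta}_{i}^{k}\|_{2}\leq\sqrt{\tfrac{1}{N}\sum_{i}\|\boldsymbol{E}_{1,i}^{k}\|_{2}^{2}}\cdot\sqrt{\mathscr{E}_{k}}\lesssim\epsilon\sqrt{\mathscr{E}_{k}},
\]
and summing over $k$ produces the stated $\epsilon^{2}\sum_{k}\sqrt{\mathscr{E}_{k}}$.

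There is no substantive obstacle here; the only mild point worth flagging is that the last bullet of Assumption \ref{enu:Assump_growth} is essential to control the measure increment (since otherwise one would have to quantify $t\mapsto\rho^{t}$ in Wasserstein distance through the nonlinear dynamics). Everything else is a direct application of the deterministic a priori bounds already established at the beginning of Section \ref{subsec:propChaos-main-proof}.
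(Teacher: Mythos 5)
Your argument is correct and matches the paper's proof: same three-term decomposition of $\boldsymbol{E}_{1,i}^{k}$ (learning-rate increment, spatial increment, measure increment), same use of Assumptions [A.2]--[A.3] together with (\ref{eq:propChaos_proof_thetaBar_bound}) and (\ref{eq:propChaos_proof_thetaBar_tDiff_bound}) to obtain the pointwise bound $\|\boldsymbol{E}_{1,i}^{k}\|_{2}\lesssim\epsilon(\|\bar{\boldsymbol{\theta}}_{i}^{0}\|_{2}+1)$, then Cauchy--Schwarz and the event $\mathsf{Ev}$. The only cosmetic difference is the point at which Cauchy--Schwarz is applied (you first extract $\|\boldsymbol{E}_{1,i}^k\|_2^2$, the paper first extracts $(\|\bar{\boldsymbol{\theta}}_i^0\|_2+1)^2$), which yields the identical estimate.
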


\begin{proof}
All of the following bounds use Assumptions \enuref{Assump_lr} and
\enuref{Assump_growth} and Eq. (\ref{eq:propChaos_proof_thetaBar_bound}).
We have:
\begin{align*}
\left\Vert \boldsymbol{E}_{1,i}^{k}\right\Vert _{2} & \leq\frac{1}{\epsilon}\int_{k\epsilon}^{\left(k+1\right)\epsilon}\left|\xi\left(s\right)-\xi\left(k\epsilon\right)\right|\left\Vert \boldsymbol{G}\left(\bar{\boldsymbol{\theta}}_{i}^{s};\rho^{s}\right)\right\Vert _{2}{\rm d}s\\
 & \qquad+\frac{1}{\epsilon}\xi\left(k\epsilon\right)\int_{k\epsilon}^{\left(k+1\right)\epsilon}\left\Vert \boldsymbol{G}\left(\bar{\boldsymbol{\theta}}_{i}^{s};\rho^{s}\right)-\boldsymbol{G}\left(\bar{\boldsymbol{\theta}}_{i}^{k\epsilon};\rho^{s}\right)\right\Vert _{2}{\rm d}s\\
 & \qquad+\frac{1}{\epsilon}\xi\left(k\epsilon\right)\int_{k\epsilon}^{\left(k+1\right)\epsilon}\left\Vert \boldsymbol{G}\left(\bar{\boldsymbol{\theta}}_{i}^{k\epsilon};\rho^{s}\right)-\boldsymbol{G}\left(\bar{\boldsymbol{\theta}}_{i}^{k\epsilon};\rho^{k\epsilon}\right)\right\Vert _{2}{\rm d}s\\
 & \equiv E_{i,1}^{k}+E_{i,2}^{k}+E_{i,3}^{k}.
\end{align*}
Consider $E_{i,1}^{k}$:
\[
E_{i,1}^{k}\lesssim\epsilon\left(\left\Vert \bar{\boldsymbol{\theta}}_{i}^{s}\right\Vert _{2}+1\right)\lesssim\epsilon\left(\left\Vert \bar{\boldsymbol{\theta}}_{i}^{0}\right\Vert _{2}+1\right).
\]
We also have from Eq. (\ref{eq:propChaos_proof_thetaBar_tDiff_bound}):
\begin{align*}
\left\Vert \boldsymbol{G}\left(\bar{\boldsymbol{\theta}}_{i}^{s};\rho^{s}\right)-\boldsymbol{G}\left(\bar{\boldsymbol{\theta}}_{i}^{k\epsilon};\rho^{s}\right)\right\Vert _{2} & \leq\left\Vert \nabla V\left(\bar{\boldsymbol{\theta}}_{i}^{s}\right)-\nabla V\left(\bar{\boldsymbol{\theta}}_{i}^{k\epsilon}\right)\right\Vert _{2}+\left\Vert \nabla_{1}W\left(\bar{\boldsymbol{\theta}}_{i}^{s};\rho^{s}\right)-\nabla_{1}W\left(\bar{\boldsymbol{\theta}}_{i}^{k\epsilon};\rho^{s}\right)\right\Vert _{2}\\
 & \lesssim\left\Vert \bar{\boldsymbol{\theta}}_{i}^{s}-\bar{\boldsymbol{\theta}}_{i}^{k\epsilon}\right\Vert _{2}\lesssim\epsilon\left(\left\Vert \bar{\boldsymbol{\theta}}_{i}^{0}\right\Vert _{2}+1\right),
\end{align*}
which yields:
\[
E_{i,2}^{k}\lesssim\epsilon\left(\left\Vert \bar{\boldsymbol{\theta}}_{i}^{0}\right\Vert _{2}+1\right).
\]
For the third term $E_{i,3}^{k}$:
\begin{align*}
E_{i,3}^{k} & =\frac{1}{\epsilon}\xi\left(k\epsilon\right)\int_{k\epsilon}^{\left(k+1\right)\epsilon}\left\Vert \nabla_{1}W\left(\bar{\boldsymbol{\theta}}_{i}^{k\epsilon};\rho^{s}\right)-\nabla_{1}W\left(\bar{\boldsymbol{\theta}}_{i}^{k\epsilon};\rho^{k\epsilon}\right)\right\Vert _{2}{\rm d}s\\
 & \lesssim\epsilon\left(\left\Vert \bar{\boldsymbol{\theta}}_{i}^{k\epsilon}\right\Vert _{2}+1\right)\lesssim\epsilon\left(\left\Vert \bar{\boldsymbol{\theta}}_{i}^{0}\right\Vert _{2}+1\right).
\end{align*}
Combining the terms, we then obtain that, under the event $\mathsf{Ev}$,
for all $t\in\mathbb{N}\epsilon\cap\left[0,T\right]$,
\begin{align*}
\frac{\epsilon}{N}\sum_{k=0}^{t/\epsilon-1}\sum_{i=1}^{N}\left\Vert \boldsymbol{E}_{1,i}^{k}\right\Vert _{2}\left\Vert \boldsymbol{\delta}_{i}^{k}\right\Vert _{2} & \lesssim\frac{\epsilon^{2}}{N}\sum_{k=0}^{t/\epsilon-1}\sum_{i=1}^{N}\left(\left\Vert \bar{\boldsymbol{\theta}}_{i}^{0}\right\Vert _{2}+1\right)\left\Vert \boldsymbol{\delta}_{i}^{k}\right\Vert _{2}\\
 & \lesssim\epsilon^{2}\sum_{k=0}^{t/\epsilon-1}\left(\sqrt{\frac{1}{N}\sum_{i=1}^{N}\left\Vert \bar{\boldsymbol{\theta}}_{i}^{0}\right\Vert _{2}^{2}}+1\right)\sqrt{\mathscr{E}_{k}}\lesssim\epsilon^{2}\sum_{k=0}^{t/\epsilon-1}\sqrt{\mathscr{E}_{k}},\\
\frac{\epsilon^{2}}{N}\sum_{k=0}^{t/\epsilon-1}\sum_{i=1}^{N}\left\Vert \boldsymbol{E}_{1,i}^{k}\right\Vert _{2}^{2} & \lesssim\frac{\epsilon^{4}}{N}\sum_{k=0}^{t/\epsilon-1}\sum_{i=1}^{N}\left(\left\Vert \bar{\boldsymbol{\theta}}_{i}^{0}\right\Vert _{2}^{2}+1\right)\lesssim\epsilon^{3}.
\end{align*}
This concludes the proof.
\end{proof}
\begin{lem}[Control of $\boldsymbol{E}_{2,i}^{k}$]
\label{lem:propChaos-E2}Consider the same setting as Theorem \ref{thm:propChaos}.
For any $\delta>0$, on the event $\mathsf{Ev}$, with probability
at least $1-C\exp\left(-\delta^{2/5}\right)$, for all $t\in\mathbb{N}\epsilon\cap\left[0,T\right]$:
\begin{align*}
\frac{\epsilon}{N}\sum_{k=0}^{t/\epsilon-1}\sum_{i=1}^{N}\left\Vert \boldsymbol{E}_{2,i}^{k}\right\Vert _{2}\left\Vert \boldsymbol{\delta}_{i}^{k}\right\Vert _{2} & \lesssim\epsilon\mathfrak{E}\sum_{k=0}^{t/\epsilon-1}\sqrt{\mathscr{E}_{k}},\\
\frac{\epsilon^{2}}{N}\sum_{k=0}^{t/\epsilon-1}\sum_{i=1}^{N}\left\Vert \boldsymbol{E}_{2,i}^{k}\right\Vert _{2}^{2} & \lesssim\epsilon\mathfrak{E}^{2},
\end{align*}
in which we define:
\[
\mathfrak{E}=\frac{\kappa_{1}}{N}+\left(\delta+\log^{5/2}\left(\frac{NT}{\epsilon}+1\right)\right)\frac{\kappa_{1}}{\sqrt{N}}.
\]
\end{lem}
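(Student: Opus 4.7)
The quantity $\boldsymbol{E}_{2,i}^k$ is exactly a difference between an integral against $\rho^{k\epsilon}$ and an empirical average against $\bar{\rho}_N^{k\epsilon}$, so since $\nabla V$ does not depend on the measure,
\[
\boldsymbol{E}_{2,i}^k = \xi(k\epsilon)\left[\mathbb{E}\bigl\{\nabla_1 U(\bar{\boldsymbol{\theta}}_i^{k\epsilon}, \bar{\boldsymbol{\theta}}^{k\epsilon}) \,\big|\, \bar{\boldsymbol{\theta}}_i^{k\epsilon}\bigr\} - \frac{1}{N}\sum_{j=1}^{N}\nabla_1 U(\bar{\boldsymbol{\theta}}_i^{k\epsilon}, \bar{\boldsymbol{\theta}}_j^{k\epsilon})\right],
\]
with $\bar{\boldsymbol{\theta}}^{k\epsilon}$ an independent copy from $\rho^{k\epsilon}$. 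The plan is to split off the diagonal term $j=i$, which is $O(\kappa_1/N)$ deterministically via Assumption \enuref{Assump_growth} and \eqref{eq:propChaos_proof_thetaBar_bound}, and concentrate the remaining sum, which is over $N-1$ independent terms conditional on $\bar{\boldsymbol{\theta}}_i^{k\epsilon}$, around its mean.

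For the concentration step, I would first observe that conditional on $\bar{\boldsymbol{\theta}}_i^{k\epsilon}$, each summand $\nabla_1 U(\bar{\boldsymbol{\theta}}_i^{k\epsilon}, \bar{\boldsymbol{\theta}}_j^{k\epsilon})$ has $\psi_1$-norm bounded by $C\kappa_1(\|\bar{\boldsymbol{\theta}}_i^{k\epsilon}\|_2+1)$ — indeed, by Assumption \enuref{Assump_growth} it is bounded by $C\kappa_1(\|\bar{\boldsymbol{\theta}}_i^{k\epsilon}\|_2+1)(\|\bar{\boldsymbol{\theta}}_j^{k\epsilon}\|_2^2+1)$ and the factor $\|\bar{\boldsymbol{\theta}}_j^{k\epsilon}\|_2^2+1$ is sub-exponential with $\psi_1$-norm $\le C$, by sub-Gaussianity of $\|\bar{\boldsymbol{\theta}}_j^0\|_2$ (Assumption \enuref{Assump_ODE}) and \eqref{eq:propChaos_proof_thetaBar_bound}. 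A Bernstein-type inequality (vector-valued, via an $\epsilon$-net on $\mathbb{S}^{D-1}$, or equivalently via moment estimates of the form $\mathbb{E}\|\cdot\|^p \le (Cp\kappa_1(\|\bar{\boldsymbol{\theta}}_i^0\|_2+1)/\sqrt{N})^p$) yields, conditional on $\bar{\boldsymbol{\theta}}_i^{k\epsilon}$,
\[
\|\boldsymbol{E}_{2,i}^k\|_2 \;\lesssim\; \frac{\kappa_1(\|\bar{\boldsymbol{\theta}}_i^0\|_2+1)^3}{N} + \kappa_1(\|\bar{\boldsymbol{\theta}}_i^0\|_2+1)\left(\sqrt{\frac{s}{N}}+\frac{s}{N}\right)
\]
with conditional failure probability $\lesssim e^{-s}$. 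I would then apply this at $s \asymp \delta^2+\log^{\alpha}(NT/\epsilon+1)$ and take a union bound over $i\in[N]$ and $k\le T/\epsilon$, while separately handling the sub-Gaussian fluctuations of $\max_i \|\bar{\boldsymbol{\theta}}_i^0\|_2$.

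Third, on the event $\mathsf{Ev}$ (which controls $\tfrac{1}{N}\sum_i\|\bar{\boldsymbol{\theta}}_i^0\|_2^2$ and $\tfrac{1}{N}\sum_i\|\bar{\boldsymbol{\theta}}_i^0\|_2^6$), I would square and average the pointwise bound on $\|\boldsymbol{E}_{2,i}^k\|_2$ to obtain $\tfrac{1}{N}\sum_i \|\boldsymbol{E}_{2,i}^k\|_2^2 \lesssim \mathfrak{E}^2$ uniformly in $k \le T/\epsilon$ with the stated probability, which immediately gives the second conclusion upon summing over $k$. The first conclusion then follows from Cauchy--Schwarz,
\[
\frac{1}{N}\sum_{i=1}^{N}\|\boldsymbol{E}_{2,i}^k\|_2\|\boldsymbol{\delta}_i^k\|_2 \;\le\; \sqrt{\frac{1}{N}\sum_{i=1}^{N}\|\boldsymbol{E}_{2,i}^k\|_2^2}\cdot\sqrt{\mathscr{E}_k} \;\lesssim\; \mathfrak{E}\sqrt{\mathscr{E}_k},
\]
after summing over $k$.

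The hard part is calibrating the logarithmic exponent to exactly $5/2$ and the failure probability to $e^{-\delta^{2/5}}$. The summands carry a sub-Gaussian ``row factor'' $\|\bar{\boldsymbol{\theta}}_i^{k\epsilon}\|_2+1$ multiplied by a sub-exponential ``column factor'' $\|\bar{\boldsymbol{\theta}}_j^{k\epsilon}\|_2^2+1$, so the relevant tail is strictly heavier than sub-Gaussian; the Bernstein step alone contributes $\log(NT/\epsilon)$, the $\psi_1$-to-high-moment conversion contributes another $\log$, and the sub-Gaussian maximum of $\|\bar{\boldsymbol{\theta}}_i^0\|_2$ over $i\le N$ contributes $\sqrt{\log N}$; the fractional exponent $\delta^{2/5}$ emerges from optimizing a Markov-type inequality applied to $\|\boldsymbol{E}_{2,i}^k\|_2^p$ with $p$ chosen so that the combined $i,k$ union bound is not wasteful. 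The rest of the argument is routine bookkeeping given Assumptions \enuref{Assump_lr}, \enuref{Assump_growth}, \enuref{Assump_ODE} and the deterministic bounds \eqref{eq:propChaos_proof_thetaBar_bound}, \eqref{eq:propChaos_proof_thetaBar_tDiff_bound}.
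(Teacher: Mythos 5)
Your decomposition of $\boldsymbol{E}_{2,i}^k$ --- pulling out the $j=i$ diagonal term (together with the $\tfrac{1}{N}\nabla_1 W(\bar{\boldsymbol{\theta}}_i^{k\epsilon};\rho^{k\epsilon})$ leftover) and centering the remaining $N-1$ summands around $\int\nabla_1 U(\bar{\boldsymbol{\theta}}_i^{k\epsilon},\cdot)\,\mathrm{d}\rho^{k\epsilon}$ --- is exactly the paper's, and your diagonal estimate $\kappa_1(\|\bar{\boldsymbol{\theta}}_i^0\|_2+1)^3/N$ is right. The concentration step, however, is not calibrated correctly. The paper does not use a Bernstein inequality: it uses the symmetrization Lemma~\ref{lem:bound_moment_sym}, which costs a $(2p)!$ factor and reduces the $2p$-th moment of $E_{i,3}^k=\|\tfrac{1}{N}\sum_{j\ne i}\boldsymbol{a}_{ij}^k\|_2$ to the $2p$-th moment of a single summand. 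Unconditionally $\mathbb{E}\|\boldsymbol{a}_{ij}^k\|_2^{2p}\lesssim C^{2p}\kappa_1^{2p}p^{3p}$ (the $p^p$ from $\|\bar{\boldsymbol{\theta}}_i^0\|_2^{2p}$, the $p^{2p}$ from $\|\bar{\boldsymbol{\theta}}_j^0\|_2^{4p}$), so $\mathbb{E}\{(E_{i,3}^k)^{2p}\}\lesssim C^p\kappa_1^{2p}p^{5p}/N^p$; the upshot is that $(E_{i,3}^k)^{2/5}$ --- not $E_{i,3}^k$ itself --- is sub-exponential with $\psi_1$-norm $\lesssim\kappa_1^{2/5}/N^{1/5}$, which is precisely why the lemma carries $\delta^{2/5}$ and $\log^{5/2}$. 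Your stated moment bound $\mathbb{E}\|\cdot\|^p\le\bigl(Cp\,\kappa_1(\|\bar{\boldsymbol{\theta}}_i^0\|_2+1)/\sqrt{N}\bigr)^p$ is consistent with neither route: it corresponds to moment growth $p$, whereas the symmetrization route gives (conditionally) $p^2$, and a genuine Bernstein argument for a normalized sum of sub-exponentials would give $\sqrt{p}$. And your side remark that the $\epsilon$-net approach is ``equivalent'' to the moment approach is not correct: the $\epsilon$-net union bound over $\mathbb{S}^{D-1}$ adds $O(D)$ to the exponent, turning $\sqrt{\log/N}$ into $\sqrt{D/N}$ and injecting a $D$-dependence into $\mathfrak{E}$ that the paper deliberately avoids.

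Your accounting $\log+\log+\tfrac12\log=\tfrac52\log$ totals correctly but attributes the wrong weights, and the $\max_i\|\bar{\boldsymbol{\theta}}_i^0\|_2$ step it relies on is not actually taken by the paper: the sub-Gaussian fluctuation of $\|\bar{\boldsymbol{\theta}}_i^0\|_2$ is already folded into the unconditional moment of $\boldsymbol{a}_{ij}^k$. The correct breakdown is $\tfrac12+1+1$: $\tfrac12$ from the sub-Gaussian $\|\bar{\boldsymbol{\theta}}_i^0\|_2$, $1$ from the sub-exponential $\|\bar{\boldsymbol{\theta}}_j^0\|_2^2$, and $1$ from the $(2p)!$ price of the symmetrization lemma. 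If instead you pursued a true (dimension-free, Hilbert-valued) Bernstein inequality conditional on $\bar{\boldsymbol{\theta}}_i^{k\epsilon}$, the last contribution would drop from $1$ to $\tfrac12$ and you would overprove the lemma with a smaller log power; the paper does not go that route because the elementary symmetrization estimate already suffices. Once the moment exponents are straightened out, the remaining steps of your plan --- union bound over $i\le N$, $k\le T/\epsilon$, then Cauchy--Schwarz on the event $\mathsf{Ev}$ to pass from $\max_{i,k}\|\boldsymbol{E}_{2,i}^k\|_2$ to the two stated displays --- are routine and agree with the paper.
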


\begin{proof}
We have from Assumption \ref{enu:Assump_lr}:
\begin{align*}
\left\Vert \boldsymbol{E}_{2,i}^{k}\right\Vert _{2} & \lesssim\frac{1}{N}\left\Vert \nabla_{1}W\left(\bar{\boldsymbol{\theta}}_{i}^{k\epsilon};\rho^{k\epsilon}\right)\right\Vert _{2}+\frac{1}{N}\left\Vert \nabla_{1}U\left(\bar{\boldsymbol{\theta}}_{i}^{k\epsilon};\bar{\boldsymbol{\theta}}_{i}^{k\epsilon}\right)\right\Vert _{2}\\
 & \qquad+\left\Vert \frac{1}{N}\sum_{j\neq i}\left[\nabla_{1}U\left(\bar{\boldsymbol{\theta}}_{i}^{k\epsilon};\bar{\boldsymbol{\theta}}_{j}^{k\epsilon}\right)-\int\nabla_{1}U\left(\bar{\boldsymbol{\theta}}_{i}^{k\epsilon};\boldsymbol{\theta}\right)\rho^{k\epsilon}\left({\rm d}\boldsymbol{\theta}\right)\right]\right\Vert _{2}\equiv E_{i,1}^{k}+E_{i,2}^{k}+E_{i,3}^{k}.
\end{align*}
By Assumption \ref{enu:Assump_growth} and Eq. (\ref{eq:propChaos_proof_thetaBar_bound}),
under the event $\mathsf{Ev}$,
\begin{align*}
\sum_{i=1}^{N}E_{i,1}^{k}\left\Vert \boldsymbol{\delta}_{i}^{k}\right\Vert _{2} & \lesssim\frac{1}{N}\sum_{i=1}^{N}\left(\left\Vert \bar{\boldsymbol{\theta}}_{i}^{0}\right\Vert _{2}+1\right)\left\Vert \boldsymbol{\delta}_{i}^{k}\right\Vert _{2}\lesssim\left(\sqrt{\frac{1}{N}\sum_{i=1}^{N}\left\Vert \bar{\boldsymbol{\theta}}_{i}^{0}\right\Vert _{2}}+1\right)\sqrt{\mathscr{E}_{k}}\lesssim\sqrt{\mathscr{E}_{k}},\\
\sum_{i=1}^{N}\left(E_{i,1}^{k}\right)^{2} & \lesssim\frac{1}{N^{2}}\sum_{i=1}^{N}\left(\left\Vert \bar{\boldsymbol{\theta}}_{i}^{0}\right\Vert _{2}^{2}+1\right)\lesssim\frac{1}{N},\\
\sum_{i=1}^{N}E_{i,2}^{k}\left\Vert \boldsymbol{\delta}_{i}^{k}\right\Vert _{2} & \lesssim\frac{\kappa_{1}}{N}\sum_{i=1}^{N}\left(\left\Vert \bar{\boldsymbol{\theta}}_{i}^{0}\right\Vert _{2}^{3}+1\right)\left\Vert \boldsymbol{\delta}_{i}^{k}\right\Vert _{2}\lesssim\kappa_{1}\left(\sqrt{\frac{1}{N}\sum_{i=1}^{N}\left\Vert \bar{\boldsymbol{\theta}}_{i}^{0}\right\Vert _{2}^{6}}+1\right)\sqrt{\mathscr{E}_{k}}\lesssim\kappa_{1}\sqrt{\mathscr{E}_{k}},\\
\sum_{i=1}^{N}\left(E_{i,2}^{k}\right)^{2} & \lesssim\frac{\kappa_{1}^{2}}{N^{2}}\sum_{i=1}^{N}\left(\left\Vert \bar{\boldsymbol{\theta}}_{i}^{0}\right\Vert _{2}^{6}+1\right)\lesssim\frac{\kappa_{1}^{2}}{N}.
\end{align*}
For the third term $E_{i,3}^{k}$, recall that $\left(\bar{\boldsymbol{\theta}}_{j}^{k\epsilon}\right)_{j\leq N}$
are i.i.d. according to $\rho^{k\epsilon}$ and that the randomness
comes from the initialization $\left(\bar{\boldsymbol{\theta}}_{j}^{0}\right)_{j\leq N}$.
For brevity, we define
\[
\boldsymbol{a}_{ij}^{k}=\nabla_{1}U\left(\bar{\boldsymbol{\theta}}_{i}^{k\epsilon};\bar{\boldsymbol{\theta}}_{j}^{k\epsilon}\right)-\int\nabla_{1}U\left(\bar{\boldsymbol{\theta}}_{i}^{k\epsilon};\boldsymbol{\theta}\right)\rho^{k\epsilon}\left({\rm d}\boldsymbol{\theta}\right).
\]
We then have for $j\neq i$ and a positive integer $p$, by Assumptions
\ref{enu:Assump_growth}, \ref{enu:Assump_ODE} and Eq. (\ref{eq:propChaos_proof_thetaBar_bound}):
\begin{align*}
\mathbb{E}\left\{ \left\Vert \boldsymbol{a}_{ij}^{k}\right\Vert _{2}^{2p}\right\}  & \leq2^{2p}\mathbb{E}\left\{ \left\Vert \nabla_{1}U\left(\bar{\boldsymbol{\theta}}_{i}^{k\epsilon};\bar{\boldsymbol{\theta}}_{j}^{k\epsilon}\right)\right\Vert _{2}^{2p}\right\} \leq C^{2p}\kappa_{1}^{2p}\mathbb{E}\left\{ \left(\left\Vert \bar{\boldsymbol{\theta}}_{i}^{0}\right\Vert _{2}^{2p}+1\right)\left(\left\Vert \bar{\boldsymbol{\theta}}_{j}^{0}\right\Vert _{2}^{4p}+1\right)\right\} \\
 & =C^{2p}\kappa_{1}^{2p}\left(p^{p}+1\right)\left(p^{2p}+1\right)\leq C^{2p}\kappa_{1}^{2p}p^{3p}.
\end{align*}
Note that for a fixed $i$, $\left(\boldsymbol{a}_{ij}^{k}\right)_{j\neq i,\;j\leq N}$
are independent with zero mean, conditional on $\bar{\boldsymbol{\theta}}_{i}^{k\epsilon}$.
Hence, by Lemma \ref{lem:bound_moment_sym},
\[
\mathbb{E}\left\{ \left(E_{i,3}^{k}\right)^{2p}\right\} =\mathbb{E}\left\{ \left\Vert \frac{1}{N}\sum_{j\neq i}\boldsymbol{a}_{ij}^{k}\right\Vert _{2}^{2p}\right\} \leq C^{p}\kappa_{1}^{2p}p^{5p}/N^{p}.
\]
It is then easy to see that $\left(E_{i,3}^{k}\right)^{2/5}$ is sub-exponential
with $\psi_{1}$-norm $\left\Vert \left(E_{i,3}^{k}\right)^{2/5}\right\Vert _{\psi_{1}}\lesssim\kappa_{1}^{2/5}/N^{1/5}.$
By Lemma \ref{lem:subgauss_subexp_properties} and the union bound,
on the event $\mathsf{Ev}$, with probability at least $1-C\exp\left(-\delta^{2/5}\right)$:
\[
\max_{k\leq T/\epsilon}\max_{i\leq N}E_{i,3}^{k}\lesssim\left(\delta+\log^{5/2}\left(\frac{NT}{\epsilon}+1\right)\right)\frac{\kappa_{1}}{\sqrt{N}}.
\]
Combining these bounds, we obtain the claim.
\end{proof}
\begin{lem}[Control of $\left\langle \boldsymbol{\delta}_{i}^{k},\boldsymbol{E}_{4,i}^{k}\right\rangle $]
\label{lem:propChaos-E4-prod}Consider the same setting as Theorem
\ref{thm:propChaos}. For a sufficiently small absolute constant $c$
and any $\delta\leq c/\sqrt{\epsilon}$, on the event $\mathsf{Ev}$,
with probability at least $1-2\exp\left(-\delta^{2}\right)$:
\[
\max_{k\leq T/\epsilon}\left|\epsilon\underline{Z}_{{\rm st}}^{k}\right|\lesssim\sqrt{\epsilon}\kappa_{5}\left(\gamma_{{\rm st}}^{2}+\sqrt{\gamma_{{\rm st}}}\right)\delta,
\]
in which we define:
\[
\underline{Z}_{{\rm st}}^{k}=\frac{1}{N}\sum_{\ell=0}^{k\land\left(T_{{\rm st}}/\epsilon\right)-1}\sum_{i=1}^{N}\left\langle \boldsymbol{\delta}_{i}^{\ell},\boldsymbol{E}_{4,i}^{\ell}\right\rangle .
\]
\end{lem}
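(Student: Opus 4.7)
The plan is to recognize $\epsilon\underline{Z}_{\mathrm{st}}^{k}$ as a stopped martingale with respect to the filtration $(\mathcal{F}^{k})$: by construction $\mathbb{E}\{\boldsymbol{E}_{4,i}^{\ell}\mid\mathcal{F}^{\ell}\}=0$, while $\boldsymbol{\delta}_i^{\ell}$ is $\mathcal{F}^{\ell}$-measurable, so each $\frac{\epsilon}{N}\sum_{i}\langle\boldsymbol{\delta}_i^{\ell},\boldsymbol{E}_{4,i}^{\ell}\rangle$ is a martingale difference. A Freedman/Bernstein-type maximal inequality will then deliver the result once I control (i) the predictable quadratic variation and (ii) the conditional $\psi_{1}$-size of the individual increments. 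The stopping-time definition restricts attention to $\ell<T_{\mathrm{st}}/\epsilon$, where the inductive bound $\mathscr{E}_{\ell}\leq\gamma_{\mathrm{st}}$ is available -- this is exactly what makes the scheme close.

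For the conditional second moment I would first apply Cauchy--Schwarz across the sum over $i$, pulling out $\|\boldsymbol{\delta}_i^\ell\|_2$ as $\mathcal{F}^\ell$-measurable, to obtain
\[
\mathbb{E}\Bigl[\Bigl(\tfrac{1}{N}\sum_{i}\langle\boldsymbol{\delta}_i^{\ell},\boldsymbol{E}_{4,i}^{\ell}\rangle\Bigr)^{2}\,\Big|\,\mathcal{F}^{\ell}\Bigr]\leq\frac{\mathscr{E}_{\ell}}{N}\sum_{i}\mathbb{E}\bigl[\|\boldsymbol{E}_{4,i}^{\ell}\|_{2}^{2}\mid\mathcal{F}^{\ell}\bigr].
\]
Assumption \ref{enu:Assump_F} (together with the standard sub-exponential second-moment estimate) bounds each summand by $\kappa_{5}^{2}(\|\boldsymbol{\theta}_i^{\ell}\|_{2}+1)^{2}\bigl(\tfrac{1}{N}\sum_j\|\boldsymbol{\theta}_j^\ell\|_2^2+1\bigr)^{2}$; then (\ref{eq:propChaos_proof_theta_bound}) and the event $\mathsf{Ev}$ give $\tfrac{1}{N}\sum_j\|\boldsymbol{\theta}_j^\ell\|_2^2\lesssim\mathscr{E}_\ell+1$ and $\sum_i(\|\boldsymbol{\theta}_i^\ell\|_2+1)^2\lesssim N(\mathscr{E}_\ell+1)$, so the per-step conditional variance is $\lesssim\epsilon^{2}\kappa_{5}^{2}\mathscr{E}_{\ell}(1+\mathscr{E}_{\ell})^{3}$. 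Summing over $\ell\leq T/\epsilon$ and using $\mathscr{E}_\ell\leq\gamma_{\mathrm{st}}$ yields the predictable quadratic variation $V\lesssim T\epsilon\kappa_{5}^{2}(\gamma_{\mathrm{st}}+\gamma_{\mathrm{st}}^{4})$. An analogous application of the triangle inequality for the Orlicz norm (legitimate because the coefficients $\|\boldsymbol{\delta}_i^\ell\|_2$ are $\mathcal{F}^\ell$-measurable) gives the per-increment conditional $\psi_1$-bound $R\lesssim\epsilon\kappa_{5}\sqrt{\gamma_{\mathrm{st}}}$.

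Combining these via a Freedman-type inequality for martingales with sub-exponential increments, together with Doob's maximal inequality applied to the exponential supermartingale, produces
\[
\max_{k\leq T/\epsilon}|\epsilon\underline{Z}_{\mathrm{st}}^{k}|\lesssim\sqrt{V}\,\delta+R\delta^{2}\lesssim\sqrt{\epsilon}\kappa_{5}\bigl(\sqrt{\gamma_{\mathrm{st}}}+\gamma_{\mathrm{st}}^{2}\bigr)\delta+\epsilon\kappa_{5}\sqrt{\gamma_{\mathrm{st}}}\,\delta^{2},
\]
with probability at least $1-2\exp(-\delta^{2})$, where the first factorization uses $\sqrt{a+b}\leq\sqrt{a}+\sqrt{b}$ on $V$. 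The hypothesis $\delta\leq c/\sqrt{\epsilon}$ converts the Bernstein term into $c\sqrt{\epsilon}\kappa_{5}\sqrt{\gamma_{\mathrm{st}}}\delta$, which is absorbed into the first term for sufficiently small $c$, yielding the claimed bound.

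The main technical obstacle is that the standard Freedman inequality presupposes almost-surely bounded increments, whereas here they are only sub-exponential conditionally. I would handle this by truncating each increment at the level $\sim R\log(T/\epsilon)$, controlling the truncation tail via a union bound on the conditional $\psi_1$-estimate, and applying Freedman to the truncated martingale; the extra logarithmic factor multiplies only $R\delta^{2}$ and is harmless under $\delta\leq c/\sqrt{\epsilon}$. Conceptually, the key mechanism is that $\mathcal{F}^\ell$-measurability of $\boldsymbol{\delta}_i^\ell$ lets the small factor $\mathscr{E}_\ell\leq\gamma_{\mathrm{st}}$ enter \emph{inside} the conditional expectation -- this is precisely how the stopping-time constraint is propagated into the concentration bound, and it is what makes the $\sqrt{\gamma_{\mathrm{st}}}$ (rather than a trivial $O(1)$) factor appear.
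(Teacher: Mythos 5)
Your overall framework — recognizing $\underline{Z}_{\mathrm{st}}^{k}$ as a stopped martingale for the filtration $(\mathcal{F}^{k})$, exploiting the $\mathcal{F}^{\ell}$-measurability of $\boldsymbol{\delta}_{i}^{\ell}$ so that the small factor $\sqrt{\mathscr{E}_{\ell}}\leq\sqrt{\gamma_{\mathrm{st}}}$ can be pulled inside a conditional estimate — is exactly the mechanism the paper uses. But the specific estimate you use for the predictable quadratic variation breaks the argument. The Cauchy--Schwarz step
\[
\mathbb{E}\Bigl[\bigl(\tfrac{1}{N}\textstyle\sum_{i}\langle\boldsymbol{\delta}_{i}^{\ell},\boldsymbol{E}_{4,i}^{\ell}\rangle\bigr)^{2}\,\Big|\,\mathcal{F}^{\ell}\Bigr]\leq\frac{\mathscr{E}_{\ell}}{N}\sum_{i}\mathbb{E}\bigl[\|\boldsymbol{E}_{4,i}^{\ell}\|_{2}^{2}\mid\mathcal{F}^{\ell}\bigr]
\]
trades the scalar $\langle\boldsymbol{\delta}_{i}^{\ell},\boldsymbol{E}_{4,i}^{\ell}\rangle$ for the \emph{full vector norm} $\|\boldsymbol{E}_{4,i}^{\ell}\|_{2}^{2}$, and there is no dimension-free ``standard sub-exponential second-moment estimate'' for a norm-squared in $\mathbb{R}^{D}$: a sub-exponential vector with unit $\psi_{1}$-norm (e.g.\ a standard Gaussian vector) has $\mathbb{E}\{\|\boldsymbol{X}\|_{2}^{2}\}\gtrsim D$, and the paper's own Lemma~\ref{lem:subexp_vector_norm_E} only gives $\mathbb{E}\{\|\boldsymbol{X}\|_{2}^{2}\}\lesssim D^{2}K^{2}$. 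So your $V$ is actually $\lesssim\epsilon D^{2}\kappa_{5}^{2}(\gamma_{\mathrm{st}}+\gamma_{\mathrm{st}}^{4})$, and the variance part $\sqrt{V}\,\delta$ carries a spurious $D$ factor; the stated lemma has none, so this route cannot close. This is precisely why the paper separates $\langle\boldsymbol{\delta}_{i}^{k},\boldsymbol{E}_{4,i}^{k}\rangle$ (this lemma) from $\|\boldsymbol{E}_{4,i}^{k}\|_{2}^{2}$ (Lemma~\ref{lem:propChaos-E4-norm}, which \emph{does} cost a $D^{2}$, but enters the Gronwall bookkeeping with a compensating $\epsilon$).

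The fix is to stay with the inner product. Since $\boldsymbol{\delta}_{i}^{k}$ is $\mathcal{F}^{k}$-measurable, $\langle\boldsymbol{\delta}_{i}^{k},\boldsymbol{E}_{4,i}^{k}\rangle$ is a \emph{scalar} sub-exponential variable with conditional $\psi_{1}$-norm at most $\|\boldsymbol{\delta}_{i}^{k}\|_{2}\,\|\boldsymbol{E}_{4,i}^{k}\|_{\psi_{1}}$; summing with the triangle inequality for $\|\cdot\|_{\psi_{1}}$ and feeding in Assumption~\ref{enu:Assump_F} together with (\ref{eq:propChaos_proof_theta_bound}) gives a dimension-free bound of order $\kappa_{5}(\mathscr{E}_{k}^{2}+\sqrt{\mathscr{E}_{k}})$ — which, on the stopped process, is $\leq C\kappa_{5}(\gamma_{\mathrm{st}}^{2}+\sqrt{\gamma_{\mathrm{st}}})$. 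You actually carry out essentially this computation when deriving your per-increment $\psi_{1}$-bound $R$. Having that, there is no need for Freedman plus truncation: the paper's Lemma~\ref{lem:azuma} is a one-step maximal concentration inequality built directly on the conditional $\psi_{1}$-norm of martingale differences (it needs only the exponential-moment estimate for sub-exponential increments in Lemma~\ref{lem:subgauss_subexp_properties}), and it immediately produces $\max_{k}|\epsilon\underline{Z}_{\mathrm{st}}^{k}|\lesssim\epsilon\,\kappa_{5}(\gamma_{\mathrm{st}}^{2}+\sqrt{\gamma_{\mathrm{st}}})\sqrt{T/\epsilon}\,\delta$ for $\delta\lesssim\sqrt{T/\epsilon}$, i.e.\ the stated bound with the stated range of $\delta$. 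In short: drop the $V$ computation entirely, use only your Orlicz-triangle-inequality estimate, and replace Freedman-plus-truncation by the sub-exponential Azuma bound.
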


\begin{proof}
Let us define:
\begin{align*}
Z_{i}^{k} & =\left\langle \boldsymbol{\delta}_{i}^{k},\boldsymbol{E}_{4,i}^{k}\right\rangle =\xi\left(k\epsilon\right)\left\langle \boldsymbol{\delta}_{i}^{k},\mathbb{E}\left\{ \boldsymbol{F}_{i}\left(\Theta^{k};\boldsymbol{z}^{k}\right)\middle|{\cal F}^{k}\right\} -\boldsymbol{F}_{i}\left(\Theta^{k};\boldsymbol{z}^{k}\right)\right\rangle ,\\
\underline{Z}^{k} & =\frac{1}{N}\sum_{\ell=0}^{k-1}\sum_{i=1}^{N}Z_{i}^{\ell},\qquad\underline{Z}^{0}=0.
\end{align*}
Recall that $\boldsymbol{\delta}_{i}^{k}=\boldsymbol{\theta}_{i}^{k}-\bar{\boldsymbol{\theta}}_{i}^{k\epsilon}$
and ${\cal F}^{k}$ is the sigma-algebra generated by $\left(\bar{\boldsymbol{\theta}}_{i}^{0}\right)_{i\leq N}$
and $\left(\boldsymbol{z}^{\ell}\right)_{\ell\leq k-1}$, and hence
$\boldsymbol{\delta}_{i}^{k}$ is ${\cal F}^{k}$-measurable. Therefore
$\left(\underline{Z}^{k}\right)_{k\geq0}$ is a martingale adapted
to the filtration $\left({\cal F}^{k}\right)_{k\geq0}$. Conditioning
on ${\cal F}^{k}$, on the event $\mathsf{Ev}$, we have by Assumptions
\ref{enu:Assump_lr}, \ref{enu:Assump_F} and Eq. (\ref{eq:propChaos_proof_theta_bound}):
\begin{align*}
\left\Vert \frac{1}{N}\sum_{i=1}^{N}\xi\left(k\epsilon\right)\left\langle \boldsymbol{\delta}_{i}^{k},\boldsymbol{F}_{i}\left(\Theta^{k};\boldsymbol{z}^{k}\right)\right\rangle \right\Vert _{\psi_{1}} & \lesssim\frac{1}{N}\sum_{i=1}^{N}\left\Vert \boldsymbol{\delta}_{i}^{k}\right\Vert _{2}\left\Vert \boldsymbol{F}_{i}\left(\Theta^{k};\boldsymbol{z}^{k}\right)\right\Vert _{\psi_{1}}\\
 & \lesssim\frac{\kappa_{5}}{N}\sum_{i=1}^{N}\left\Vert \boldsymbol{\delta}_{i}^{k}\right\Vert _{2}\left(\left\Vert \boldsymbol{\theta}_{i}^{k}\right\Vert _{2}+1\right)\left(\frac{1}{N}\sum_{j=1}^{N}\left\Vert \boldsymbol{\theta}_{j}^{k}\right\Vert _{2}^{2}+1\right)\\
 & \lesssim\frac{\kappa_{5}}{N}\sum_{i=1}^{N}\left\Vert \boldsymbol{\delta}_{i}^{k}\right\Vert _{2}\left(\left\Vert \bar{\boldsymbol{\theta}}_{i}^{0}\right\Vert _{2}+\left\Vert \boldsymbol{\delta}_{i}^{k}\right\Vert _{2}+1\right)\left(\frac{1}{N}\sum_{j=1}^{N}\left\Vert \bar{\boldsymbol{\theta}}_{j}^{0}\right\Vert _{2}^{2}+\mathscr{E}_{k}+1\right)\\
 & \lesssim\kappa_{5}\sqrt{\mathscr{E}_{k}}\left(\sqrt{\frac{1}{N}\sum_{i=1}^{N}\left\Vert \bar{\boldsymbol{\theta}}_{i}^{0}\right\Vert _{2}^{2}}+\sqrt{\mathscr{E}_{k}}+1\right)\left(\frac{1}{N}\sum_{j=1}^{N}\left\Vert \bar{\boldsymbol{\theta}}_{j}^{0}\right\Vert _{2}^{2}+\mathscr{E}_{k}+1\right)\\
 & \lesssim\kappa_{5}\left(\mathscr{E}_{k}^{2}+\sqrt{\mathscr{E}_{k}}\right),
\end{align*}
which implies
\[
\left\Vert \frac{1}{N}\sum_{i=1}^{N}Z_{i}^{k}\right\Vert _{\psi_{1}}\lesssim\kappa_{5}\left(\mathscr{E}_{k}^{2}+\sqrt{\mathscr{E}_{k}}\right).
\]
We now consider the martingale $\underline{Z}_{{\rm st}}^{k}=\underline{Z}^{k\land\left(T_{{\rm st}}/\epsilon\right)}$,
where we recall the stopping time is $T_{{\rm st}}$ defined in Eq.
(\ref{eq:MF_proof_stopping_time}). Then we have that conditioning
on ${\cal F}^{k}$, on the event $\mathsf{Ev}$, the martingale difference
$\underline{Z}_{{\rm st}}^{k+1}-\underline{Z}_{{\rm st}}^{k}$ is
sub-exponential with zero mean and $\psi_{1}$-norm upper-bounded
by $C\kappa_{5}\left(\gamma_{{\rm st}}^{2}+\sqrt{\gamma_{{\rm st}}}\right)$.
The thesis then follows from Lemma \ref{lem:azuma}.
\end{proof}
\begin{lem}[Control of $\left\Vert \boldsymbol{E}_{4,i}^{k}\right\Vert _{2}^{2}$]
\label{lem:propChaos-E4-norm}Consider the same setting as Theorem
\ref{thm:propChaos}. For any $\delta>0$, on the event $\mathsf{Ev}$,
with probability at least $1-\delta^{-1}$, for any $t\in\mathbb{N}\epsilon\cap\left[0,T\wedge T_{{\rm st}}\right]$,
\[
\frac{\epsilon^{2}}{N}\sum_{k=0}^{t/\epsilon-1}\sum_{i=1}^{N}\left\Vert \boldsymbol{E}_{4,i}^{k}\right\Vert _{2}^{2}\lesssim\epsilon D^{2}\kappa_{5}^{2}\delta.
\]
\end{lem}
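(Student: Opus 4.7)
Because $\boldsymbol{E}_{4,i}^{k}$ is, by construction, the ${\cal F}^{k}$-centered version of $\xi(k\epsilon)\boldsymbol{F}_{i}(\Theta^{k};\boldsymbol{z}^{k})$, and since $\xi$ is bounded by Assumption~\ref{enu:Assump_lr}, one has $\|\boldsymbol{E}_{4,i}^{k}\|_{2}\leq 2|\xi(k\epsilon)|\,\|\boldsymbol{F}_{i}(\Theta^{k};\boldsymbol{z}^{k})\|_{2}$, whence $\mathbb{E}\{\|\boldsymbol{E}_{4,i}^{k}\|_{2}^{2}\mid{\cal F}^{k}\}\lesssim \mathbb{E}\{\|\boldsymbol{F}_{i}(\Theta^{k};\boldsymbol{z}^{k})\|_{2}^{2}\mid{\cal F}^{k}\}$. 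Invoking the sub-exponential bound of Assumption~\ref{enu:Assump_F} together with the standard relation between the $\psi_{1}$-norm and second moments (which picks up a dimension factor from the $D$ coordinates of the vector-valued $\boldsymbol{F}_{i}$) then yields
\[
\mathbb{E}\{\|\boldsymbol{E}_{4,i}^{k}\|_{2}^{2}\mid{\cal F}^{k}\}\lesssim D^{2}\kappa_{5}^{2}\bigl(\|\boldsymbol{\theta}_{i}^{k}\|_{2}+1\bigr)^{2}\Bigl(\tfrac{1}{N}\sum_{j=1}^{N}\|\boldsymbol{\theta}_{j}^{k}\|_{2}^{2}+1\Bigr)^{2}.
\]

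The next step is to tame the right-hand side by exploiting the stopping time and the event $\mathsf{Ev}$. For $k\leq T_{{\rm st}}/\epsilon$, the stopping condition $\mathscr{E}_{k}\leq\gamma_{{\rm st}}\leq 1$ combined with Eq.~(\ref{eq:propChaos_proof_theta_bound}) gives $\|\boldsymbol{\theta}_{i}^{k}\|_{2}\lesssim \|\bar{\boldsymbol{\theta}}_{i}^{0}\|_{2}+\|\boldsymbol{\delta}_{i}^{k}\|_{2}+1$, and on $\mathsf{Ev}$ we have $\tfrac{1}{N}\sum_{j}\|\boldsymbol{\theta}_{j}^{k}\|_{2}^{2}\lesssim \mathscr{E}_{k}+\tfrac{1}{N}\sum_{j}\|\bar{\boldsymbol{\theta}}_{j}^{0}\|_{2}^{2}+1\lesssim 1$. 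Substituting these back and averaging over $i\in[N]$, the bound $\mathscr{E}_{k}\leq 1$ together with $\tfrac{1}{N}\sum_{i}\|\bar{\boldsymbol{\theta}}_{i}^{0}\|_{2}^{2}\lesssim 1$ on $\mathsf{Ev}$ collapses the estimate to $\tfrac{1}{N}\sum_{i=1}^{N}\mathbb{E}\{\|\boldsymbol{E}_{4,i}^{k}\|_{2}^{2}\mid{\cal F}^{k}\}\lesssim D^{2}\kappa_{5}^{2}$ uniformly in $k\leq T_{{\rm st}}/\epsilon$.

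Summing this estimate over $k=0,\ldots,(t/\epsilon)-1$ for $t\leq T\wedge T_{{\rm st}}$ and multiplying by $\epsilon^{2}$ then produces $\mathbb{E}\bigl\{\mathbf{1}_{\mathsf{Ev}}\cdot\tfrac{\epsilon^{2}}{N}\sum_{k=0}^{t/\epsilon-1}\sum_{i=1}^{N}\|\boldsymbol{E}_{4,i}^{k}\|_{2}^{2}\bigr\}\lesssim \epsilon T D^{2}\kappa_{5}^{2}$. Applying Markov's inequality to this non-negative random variable immediately yields the stated bound with probability at least $1-\delta^{-1}$, matching the $\delta^{-1}$ failure probability appearing in the lemma's conclusion.

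The plan is essentially routine; the only mild subtleties are that (i) the use of the stopping time $T_{{\rm st}}$ is essential, because the sub-exponential norm bound in Assumption~\ref{enu:Assump_F} scales with the current weights, so without stopping we could not a priori control $\|\Theta^{k}\|$ uniformly in $k$; and (ii) the convention by which one passes from the $\psi_{1}$-norm of the vector-valued $\boldsymbol{F}_{i}$ to a second-moment bound on $\|\boldsymbol{F}_{i}\|_{2}^{2}$ is what is responsible for the $D^{2}$ factor appearing in the final estimate.
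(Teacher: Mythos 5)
Your proposal is correct and follows essentially the same route as the paper's proof: bound the conditional second moment of $\left\Vert \boldsymbol{E}_{4,i}^{k}\right\Vert _{2}^{2}$ via Assumption~\ref{enu:Assump_F} and the $\psi_{1}$-to-second-moment estimate (the paper packages this as Lemma~\ref{lem:subexp_vector_norm_E}, which is where the $D^{2}$ factor comes from), use $T_{{\rm st}}$ and $\mathsf{Ev}$ to make that bound uniform, sum, and conclude by Markov. Your remarks (i) and (ii) correctly identify the two load-bearing ingredients.

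One small inaccuracy: the pointwise inequality $\left\Vert \boldsymbol{E}_{4,i}^{k}\right\Vert _{2}\leq 2|\xi(k\epsilon)|\left\Vert \boldsymbol{F}_{i}(\Theta^{k};\boldsymbol{z}^{k})\right\Vert _{2}$ is not true in general, because $\boldsymbol{E}_{4,i}^{k}=\xi(k\epsilon)\bigl[\mathbb{E}\{\boldsymbol{F}_{i}\mid{\cal F}^{k}\}-\boldsymbol{F}_{i}\bigr]$ and the conditional mean is an ${\cal F}^{k}$-measurable vector that need not be dominated pointwise by $\left\Vert \boldsymbol{F}_{i}(\Theta^{k};\boldsymbol{z}^{k})\right\Vert _{2}$ for every realization of $\boldsymbol{z}^{k}$. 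What is true — and all you need — is the variance identity
\[
\mathbb{E}\bigl\{\left\Vert \boldsymbol{E}_{4,i}^{k}\right\Vert _{2}^{2}\mid{\cal F}^{k}\bigr\}
=\xi(k\epsilon)^{2}\Bigl(\mathbb{E}\{\left\Vert \boldsymbol{F}_{i}\right\Vert _{2}^{2}\mid{\cal F}^{k}\}-\left\Vert \mathbb{E}\{\boldsymbol{F}_{i}\mid{\cal F}^{k}\}\right\Vert _{2}^{2}\Bigr)
\leq\xi(k\epsilon)^{2}\,\mathbb{E}\{\left\Vert \boldsymbol{F}_{i}\right\Vert _{2}^{2}\mid{\cal F}^{k}\},
\]
or equivalently the observation (used in the paper) that $\left\Vert \boldsymbol{E}_{4,i}^{k}\right\Vert _{\psi_{1}}\lesssim\left\Vert \boldsymbol{F}_{i}\right\Vert _{\psi_{1}}$ since centering at most doubles the Orlicz norm. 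Either fix replaces the false pointwise bound and the rest of your argument goes through unchanged.
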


\begin{proof}
To analyze the term $\left\Vert \boldsymbol{E}_{4,i}^{k}\right\Vert _{2}^{2}$,
recall that $\mathbb{E}\left\{ \boldsymbol{E}_{4,i}^{k}\middle|{\cal F}^{k}\right\} =\boldsymbol{0}$
and ${\cal F}^{k}$ is the sigma-algebra generated by $\left(\bar{\boldsymbol{\theta}}_{i}^{0}\right)_{i\leq N}$
and $\left(\boldsymbol{z}^{\ell}\right)_{\ell\leq k-1}$. Conditioning
on ${\cal F}^{k}$, on the event $\mathsf{Ev}$, we have by Assumptions
\ref{enu:Assump_lr}, \ref{enu:Assump_F} and Eq. (\ref{eq:propChaos_proof_theta_bound}):
\begin{align*}
\left\Vert \boldsymbol{E}_{4,i}^{k}\right\Vert _{\psi_{1}} & \lesssim\left\Vert \boldsymbol{F}_{i}\left(\Theta^{k};\boldsymbol{z}^{k}\right)\right\Vert _{\psi_{1}}\lesssim\kappa_{5}\left(\left\Vert \boldsymbol{\theta}_{i}^{k}\right\Vert _{2}+1\right)\left(\frac{1}{N}\sum_{j=1}^{N}\left\Vert \boldsymbol{\theta}_{j}^{k}\right\Vert _{2}^{2}+1\right)\\
 & \lesssim\kappa_{5}\left(\left\Vert \bar{\boldsymbol{\theta}}_{i}^{0}\right\Vert _{2}+\left\Vert \boldsymbol{\delta}_{i}^{k}\right\Vert _{2}+1\right)\left(\frac{1}{N}\sum_{j=1}^{N}\left\Vert \bar{\boldsymbol{\theta}}_{j}^{0}\right\Vert _{2}^{2}+\mathscr{E}_{k}+1\right)\lesssim\kappa_{5}\left(\left\Vert \bar{\boldsymbol{\theta}}_{i}^{0}\right\Vert _{2}+\left\Vert \boldsymbol{\delta}_{i}^{k}\right\Vert _{2}+1\right)\left(\mathscr{E}_{k}+1\right),
\end{align*}
and therefore, by Lemma \ref{lem:subexp_vector_norm_E}, on the event
$\mathsf{Ev}$,
\begin{align*}
\mathbb{E}\left\{ \frac{1}{N}\sum_{i=1}^{N}\left\Vert \boldsymbol{E}_{4,i}^{k}\right\Vert _{2}^{2}\middle|{\cal F}^{k}\right\}  & \lesssim\frac{1}{N}\sum_{i=1}^{N}D^{2}\kappa_{5}^{2}\left(\left\Vert \bar{\boldsymbol{\theta}}_{i}^{0}\right\Vert _{2}+\left\Vert \boldsymbol{\delta}_{i}^{k}\right\Vert _{2}+1\right)^{2}\left(\mathscr{E}_{k}+1\right)^{2}+1\\
 & \lesssim D^{2}\kappa_{5}^{2}\left(\frac{1}{N}\sum_{i=1}^{N}\left\Vert \bar{\boldsymbol{\theta}}_{i}^{0}\right\Vert _{2}^{2}+\mathscr{E}_{k}+1\right)\left(\mathscr{E}_{k}+1\right)^{2}+1\\
 & \lesssim D^{2}\kappa_{5}^{2}\left(\mathscr{E}_{k}^{3}+1\right)+1.
\end{align*}
The last inequality implies that
\[
\mathbb{E}\left\{ \frac{1}{N}\sum_{i=1}^{N}\left\Vert \boldsymbol{E}_{4,i}^{k\wedge\left(T_{{\rm st}}/\epsilon\right)}\right\Vert _{2}^{2}\middle|{\cal F}^{k}\right\} \mathbb{I}\left(\mathsf{Ev}\right)\lesssim D^{2}\kappa_{5}^{2},
\]
since $\gamma_{{\rm st}}\leq1$. Therefore,
\[
\mathbb{E}\left\{ \max_{t\in\mathbb{N}\epsilon\cap\left[0,T\wedge T_{{\rm st}}\right]}\frac{\epsilon^{2}}{N}\sum_{k=0}^{t/\epsilon-1}\sum_{i=1}^{N}\left\Vert \boldsymbol{E}_{4,i}^{k}\right\Vert _{2}^{2}\mathbb{I}\left(\mathsf{Ev}\right)\right\} \leq\mathbb{E}\left\{ \frac{\epsilon^{2}}{N}\sum_{k=0}^{\left(T\wedge T_{{\rm st}}\right)/\epsilon-1}\sum_{i=1}^{N}\left\Vert \boldsymbol{E}_{4,i}^{k}\right\Vert _{2}^{2}\mathbb{I}\left(\mathsf{Ev}\right)\right\} \lesssim D^{2}\kappa_{5}^{2}\epsilon.
\]
The thesis then follows from Markov's inequality.
\end{proof}
\begin{lem}[Control of $\boldsymbol{E}_{3,i}^{k}$]
\label{lem:propChaos-E3}Consider the same setting as Theorem \ref{thm:propChaos}.
On the event $\mathsf{Ev}$, with probability at least $1-\Xi\left(N;T,\kappa_{6}\right)$,
for all $t\in\mathbb{N}\epsilon\cap\left[0,T\wedge T_{{\rm st}}\right]$,
\begin{align*}
-\frac{\epsilon}{N}\sum_{k=0}^{t/\epsilon-1}\sum_{i=1}^{N}\left\langle \boldsymbol{\delta}_{i}^{k},\boldsymbol{E}_{3,i}^{k}\right\rangle  & \lesssim\epsilon\sum_{k=0}^{t/\epsilon-1}\left(\mathscr{E}_{k}+\left(\kappa_{3}+\kappa_{4}\right)\mathscr{E}_{k}^{3/2}\right),\\
\frac{\epsilon^{2}}{N}\sum_{k=0}^{t/\epsilon-1}\sum_{i=1}^{N}\left\Vert \boldsymbol{E}_{3,i}^{k}\right\Vert _{2}^{2} & \lesssim\epsilon^{2}\kappa_{2}^{2}\sum_{k=0}^{t/\epsilon-1}\mathscr{E}_{k}.
\end{align*}
\end{lem}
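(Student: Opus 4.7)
The plan is to decompose $\boldsymbol{E}_{3,i}^{k}$ into three natural pieces by Taylor expanding $\boldsymbol{G}(\boldsymbol{\theta};\rho)=\nabla V(\boldsymbol{\theta})+\nabla_{1}W(\boldsymbol{\theta};\rho)$ separately in each of its two arguments. Writing $\boldsymbol{E}_{3,i}^{k}/\xi(k\epsilon) = (\nabla V(\boldsymbol{\theta}_{i}^{k})-\nabla V(\bar{\boldsymbol{\theta}}_{i}^{k\epsilon})) + \boldsymbol{A}_{i}^{k} + \boldsymbol{B}_{i}^{k}$, I would take
\begin{align*}
\boldsymbol{A}_{i}^{k} &= \int_{0}^{1}\frac{1}{N}\sum_{j=1}^{N}\nabla_{11}^{2}U\bigl(\bar{\boldsymbol{\theta}}_{i}^{k\epsilon}+s\boldsymbol{\delta}_{i}^{k},\boldsymbol{\theta}_{j}^{k}\bigr)\boldsymbol{\delta}_{i}^{k}\,{\rm d}s,\\
\boldsymbol{B}_{i}^{k} &= \frac{1}{N}\sum_{j=1}^{N}\int_{0}^{1}\nabla_{12}^{2}U\bigl(\bar{\boldsymbol{\theta}}_{i}^{k\epsilon},\bar{\boldsymbol{\theta}}_{j}^{k\epsilon}+s\boldsymbol{\delta}_{j}^{k}\bigr)\boldsymbol{\delta}_{j}^{k}\,{\rm d}s,
\end{align*}
arising respectively from varying the evaluation point of $\nabla_{1}W(\cdot;\rho_{N}^{k})$ between $\bar{\boldsymbol{\theta}}_{i}^{k\epsilon}$ and $\boldsymbol{\theta}_{i}^{k}$, and from changing the empirical measure fed into $\nabla_{1}W(\bar{\boldsymbol{\theta}}_{i}^{k\epsilon};\cdot)$ from $\bar{\rho}_{N}^{k\epsilon}$ to $\rho_{N}^{k}$.

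For the norm-squared bound, I would bound each piece directly. Assumption \ref{enu:Assump_growth} gives $\|\nabla V(\boldsymbol{\theta}_{i}^{k})-\nabla V(\bar{\boldsymbol{\theta}}_{i}^{k\epsilon})\|_{2}\lesssim\|\boldsymbol{\delta}_{i}^{k}\|_{2}$. For $\boldsymbol{A}_{i}^{k}$, I would first replace $\boldsymbol{\theta}_{j}^{k}$ in the argument of $\nabla_{11}^{2}U$ by $\bar{\boldsymbol{\theta}}_{j}^{k\epsilon}$ at the cost of a $\nabla_{121}^{3}U$-correction (Assumption \ref{enu:Assump_opNorm}), and then invoke Assumption \ref{enu:Assump_nabla11U} to bound the operator norm of the resulting empirical average by a $\mathfrak{Dim}$-independent constant, yielding $\|\boldsymbol{A}_{i}^{k}\|_{2}\lesssim\|\boldsymbol{\delta}_{i}^{k}\|_{2}$ on $\mathsf{Ev}$. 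For $\boldsymbol{B}_{i}^{k}$, the $\nabla_{12}^{2}U$ operator-norm estimate in Assumption \ref{enu:Assump_opNorm} together with Cauchy--Schwarz gives $\|\boldsymbol{B}_{i}^{k}\|_{2}\lesssim\kappa_{2}(\|\bar{\boldsymbol{\theta}}_{i}^{0}\|_{2}+1)\sqrt{\mathscr{E}_{k}}$ on $\mathsf{Ev}$, which is exactly what produces the $\kappa_{2}^{2}\mathscr{E}_{k}$ factor upon squaring, summing over $i$, and using $(1/N)\sum_{i}\|\bar{\boldsymbol{\theta}}_{i}^{0}\|_{2}^{2}\lesssim 1$.

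The inner-product bound is the heart of the matter, and the main obstacle: a naive Cauchy--Schwarz bound on $|\langle\boldsymbol{\delta}_{i}^{k},\boldsymbol{B}_{i}^{k}\rangle|$ would reintroduce the forbidden factor $\kappa_{2}$. The critical observation is that the factored form $U(\boldsymbol{\theta},\boldsymbol{\theta}')=\mathbb{E}_{{\cal P}}\langle\sigma_{*}(\boldsymbol{x};\kappa\boldsymbol{\theta}),\sigma_{*}(\boldsymbol{x};\kappa\boldsymbol{\theta}')\rangle$ built into the framework gives $\nabla_{12}^{2}U(\boldsymbol{\theta},\boldsymbol{\theta}')=\kappa^{2}\mathbb{E}_{{\cal P}}\bigl[\nabla_{2}\sigma_{*}(\boldsymbol{x};\kappa\boldsymbol{\theta})^{\top}\nabla_{2}\sigma_{*}(\boldsymbol{x};\kappa\boldsymbol{\theta}')\bigr]$, so that the leading contribution to $\sum_{i}\langle\boldsymbol{\delta}_{i}^{k},\boldsymbol{B}_{i}^{k}\rangle$ equals
\[
\frac{1}{N}\sum_{i,j=1}^{N}\bigl\langle\boldsymbol{\delta}_{i}^{k},\nabla_{12}^{2}U(\bar{\boldsymbol{\theta}}_{i}^{k\epsilon},\bar{\boldsymbol{\theta}}_{j}^{k\epsilon})\boldsymbol{\delta}_{j}^{k}\bigr\rangle = \frac{\kappa^{2}}{N}\,\mathbb{E}_{{\cal P}}\Bigl\|\sum_{i=1}^{N}\nabla_{2}\sigma_{*}(\boldsymbol{x};\kappa\bar{\boldsymbol{\theta}}_{i}^{k\epsilon})\boldsymbol{\delta}_{i}^{k}\Bigr\|_{2}^{2}\geq 0,
\]
so in the upper bound for $-\sum_{i}\langle\boldsymbol{\delta}_{i}^{k},\boldsymbol{B}_{i}^{k}\rangle$ this sign-definite term enters with the favorable sign and can simply be discarded. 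Expanding $\nabla_{12}^{2}U(\bar{\boldsymbol{\theta}}_{i}^{k\epsilon},\bar{\boldsymbol{\theta}}_{j}^{k\epsilon}+s\boldsymbol{\delta}_{j}^{k})$ around its second argument via $\nabla_{122}^{3}U$ (Assumption \ref{enu:Assump_opNorm}) leaves a remainder bounded by $\kappa_{4}\mathscr{E}_{k}^{3/2}$. For the $\boldsymbol{A}$-term, Assumption \ref{enu:Assump_nabla11U} controls the operator norm of $(1/N)\sum_{j}\nabla_{11}^{2}U(\bar{\boldsymbol{\theta}}_{i}^{k\epsilon}+s\boldsymbol{\delta}_{i}^{k},\bar{\boldsymbol{\theta}}_{j}^{k\epsilon})$ by a dimension-independent constant --- the required $\bar{\boldsymbol{\theta}}_{i}^{k\epsilon}+s\boldsymbol{\delta}_{i}^{k}\in{\cal B}_{D}(c\sqrt{N})$ is guaranteed by the sub-Gaussianity of $\|\bar{\boldsymbol{\theta}}_{i}^{0}\|_{2}$ and the crude deterministic bound $\|\boldsymbol{\delta}_{i}^{k}\|_{2}\leq\sqrt{N\mathscr{E}_{k}}\leq\sqrt{N}$ valid on $t\leq T\wedge T_{{\rm st}}$ --- giving the $\mathscr{E}_{k}$ contribution, and the substitution of $\boldsymbol{\theta}_{j}^{k}$ by $\bar{\boldsymbol{\theta}}_{j}^{k\epsilon}$ inside $\nabla_{11}^{2}U$ costs a $\nabla_{121}^{3}U$-correction contributing $\kappa_{3}\mathscr{E}_{k}^{3/2}$. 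The probability factor $\Xi(N;T,\kappa_{6})$ is inherited solely from invoking Assumption \ref{enu:Assump_nabla11U}.
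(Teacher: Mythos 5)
Your overall strategy is sound and matches the paper's: Taylor expand the $\nabla_{1}U$ difference, isolate the positive semi-definite cross term $\nabla_{12}^{2}U(\bar{\boldsymbol{\theta}}_{i}^{k\epsilon},\bar{\boldsymbol{\theta}}_{j}^{k\epsilon})$ to drop it in the upper bound, control the $\nabla_{121}^{3}U$ and $\nabla_{122}^{3}U$ remainders via Assumption [A.4], and use Assumption [A.6] for the $\nabla_{11}^{2}U$ piece. Your inner-product bound goes through, and your observation that the ball condition $\boldsymbol{\zeta}\in{\cal B}_{D}(c\sqrt{N})$ in [A.6] follows from sub-Gaussianity of $\|\bar{\boldsymbol{\theta}}_{i}^{0}\|_{2}$ plus $\|\boldsymbol{\delta}_{i}^{k}\|_{2}\le\sqrt{N\mathscr{E}_{k}}\le\sqrt{N}$ on $\mathsf{Ev}$ and $t\le T_{\mathrm{st}}$ is exactly right.

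However, your specific choice of Taylor split breaks the norm-squared bound. You set $\boldsymbol{A}_{i}^{k}=\int_{0}^{1}(1/N)\sum_{j}\nabla_{11}^{2}U(\bar{\boldsymbol{\theta}}_{i}^{k\epsilon}+s\boldsymbol{\delta}_{i}^{k},\boldsymbol{\theta}_{j}^{k})\boldsymbol{\delta}_{i}^{k}\,\mathrm{d}s$, with the SGD iterate $\boldsymbol{\theta}_{j}^{k}$ in the second slot. Assumption [A.6] controls only $(1/N)\sum_{j}\nabla_{11}^{2}U(\boldsymbol{\zeta},\hat{\boldsymbol{\theta}}_{j}^{t})$, with the i.i.d. ODE copies $\bar{\boldsymbol{\theta}}_{j}^{k\epsilon}$ in that slot, so you must first swap $\boldsymbol{\theta}_{j}^{k}\to\bar{\boldsymbol{\theta}}_{j}^{k\epsilon}$, as you note, via a $\nabla_{121}^{3}U$-correction. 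This correction costs an operator-norm error of order $\kappa_{3}\cdot(1/N)\sum_{j}(\|\bar{\boldsymbol{\theta}}_{j}^{0}\|_{2}+\|\boldsymbol{\delta}_{j}^{k}\|_{2}+1)\|\boldsymbol{\delta}_{j}^{k}\|_{2}\lesssim\kappa_{3}\sqrt{\mathscr{E}_{k}}$ on $\mathsf{Ev}$; it is not negligible for a generic $\gamma_{\mathrm{st}}\in[0,1]$, which is what the stopping time formally allows. Hence your claim $\|\boldsymbol{A}_{i}^{k}\|_{2}\lesssim\|\boldsymbol{\delta}_{i}^{k}\|_{2}$ actually reads $\|\boldsymbol{A}_{i}^{k}\|_{2}\lesssim(1+\kappa_{3}\sqrt{\mathscr{E}_{k}})\|\boldsymbol{\delta}_{i}^{k}\|_{2}$, and squaring and averaging yields $(1/N)\sum_{i}\|\boldsymbol{A}_{i}^{k}\|_{2}^{2}\lesssim(1+\kappa_{3}^{2})\mathscr{E}_{k}$. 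Combined with your (correct) $\boldsymbol{B}$-bound, you obtain $\frac{\epsilon^{2}}{N}\sum_{k}\sum_{i}\|\boldsymbol{E}_{3,i}^{k}\|_{2}^{2}\lesssim\epsilon^{2}(\kappa_{2}^{2}+\kappa_{3}^{2})\sum_{k}\mathscr{E}_{k}$, which is strictly weaker than the stated $\kappa_{2}^{2}\sum_{k}\mathscr{E}_{k}$; there is no a priori relation $\kappa_{3}\lesssim\kappa_{2}$ to absorb it.

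The fix is to use the opposite Taylor split, as the paper does: write $\nabla_{1}U(\boldsymbol{\theta}_{i}^{k},\boldsymbol{\theta}_{j}^{k})-\nabla_{1}U(\bar{\boldsymbol{\theta}}_{i}^{k\epsilon},\bar{\boldsymbol{\theta}}_{j}^{k\epsilon})=[\nabla_{1}U(\boldsymbol{\theta}_{i}^{k},\bar{\boldsymbol{\theta}}_{j}^{k\epsilon})-\nabla_{1}U(\bar{\boldsymbol{\theta}}_{i}^{k\epsilon},\bar{\boldsymbol{\theta}}_{j}^{k\epsilon})]+[\nabla_{1}U(\boldsymbol{\theta}_{i}^{k},\boldsymbol{\theta}_{j}^{k})-\nabla_{1}U(\boldsymbol{\theta}_{i}^{k},\bar{\boldsymbol{\theta}}_{j}^{k\epsilon})]$, so the first bracket produces $\nabla_{11}^{2}U(\cdot,\bar{\boldsymbol{\theta}}_{j}^{k\epsilon})$ with the ODE point already in the second slot — Assumption [A.6] then applies directly with no $\kappa_{3}$ correction — and the second bracket produces $\nabla_{12}^{2}U(\boldsymbol{\theta}_{i}^{k},\boldsymbol{\zeta}_{4,ij}^{k})$, which is handled uniformly by Assumption [A.4]'s $\kappa_{2}$ bound. (For the inner-product bound you may keep either split; the paper pays the $\kappa_{3}$ correction when bringing the first argument of $\nabla_{12}^{2}U(\boldsymbol{\theta}_{i}^{k},\bar{\boldsymbol{\theta}}_{j}^{k\epsilon})$ down to $\bar{\boldsymbol{\theta}}_{i}^{k\epsilon}$, and you pay it in fixing $\boldsymbol{A}$; both land at $(\kappa_{3}+\kappa_{4})\mathscr{E}_{k}^{3/2}$.)
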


\begin{proof}
We decompose the proof into two steps.

\paragraph*{Step 1: Control of $-\left\langle \boldsymbol{\delta}_{i}^{k},\boldsymbol{E}_{3,i}^{k}\right\rangle $.}

We have:
\[
\boldsymbol{E}_{3,i}^{k}=\xi\left(k\epsilon\right)\left[\nabla V\left(\boldsymbol{\theta}_{i}^{k}\right)-\nabla V\left(\bar{\boldsymbol{\theta}}_{i}^{k\epsilon}\right)\right]+\xi\left(k\epsilon\right)\frac{1}{N}\sum_{j=1}^{N}\left[\nabla_{1}U\left(\boldsymbol{\theta}_{i}^{k},\boldsymbol{\theta}_{j}^{k}\right)-\nabla_{1}U\left(\bar{\boldsymbol{\theta}}_{i}^{k\epsilon},\bar{\boldsymbol{\theta}}_{j}^{k\epsilon}\right)\right].
\]
From Assumption \ref{enu:Assump_growth},
\[
\left\Vert \nabla V\left(\boldsymbol{\theta}_{i}^{k}\right)-\nabla V\left(\bar{\boldsymbol{\theta}}_{i}^{k\epsilon}\right)\right\Vert _{2}\lesssim\left\Vert \boldsymbol{\delta}_{i}^{k}\right\Vert _{2},
\]
which, by Assumption \ref{enu:Assump_lr}, gives
\[
-\frac{1}{N}\sum_{i=1}^{N}\left\langle \boldsymbol{\delta}_{i}^{k},\xi\left(k\epsilon\right)\left[\nabla V\left(\boldsymbol{\theta}_{i}^{k}\right)-\nabla V\left(\bar{\boldsymbol{\theta}}_{i}^{k\epsilon}\right)\right]\right\rangle \lesssim\mathscr{E}_{k}.
\]
We have from Taylor's theorem:
\begin{align}
 & \nabla_{1}U\left(\boldsymbol{\theta}_{i}^{k},\boldsymbol{\theta}_{j}^{k}\right)-\nabla_{1}U\left(\bar{\boldsymbol{\theta}}_{i}^{k\epsilon},\bar{\boldsymbol{\theta}}_{j}^{k\epsilon}\right)\nonumber \\
 & \quad=\left[\nabla_{1}U\left(\boldsymbol{\theta}_{i}^{k},\bar{\boldsymbol{\theta}}_{j}^{k\epsilon}\right)-\nabla_{1}U\left(\bar{\boldsymbol{\theta}}_{i}^{k\epsilon},\bar{\boldsymbol{\theta}}_{j}^{k\epsilon}\right)\right]+\left[\nabla_{1}U\left(\boldsymbol{\theta}_{i}^{k},\boldsymbol{\theta}_{j}^{k}\right)-\nabla_{1}U\left(\boldsymbol{\theta}_{i}^{k},\bar{\boldsymbol{\theta}}_{j}^{k\epsilon}\right)\right]\nonumber \\
 & \quad=\nabla_{11}^{2}U\left(\boldsymbol{\zeta}_{1,ij}^{k},\bar{\boldsymbol{\theta}}_{j}^{k\epsilon}\right)\boldsymbol{\delta}_{i}^{k}+\nabla_{12}^{2}U\left(\boldsymbol{\theta}_{i}^{k},\bar{\boldsymbol{\theta}}_{j}^{k\epsilon}\right)\boldsymbol{\delta}_{j}^{k}+\nabla_{122}^{3}U\left[\boldsymbol{\theta}_{i}^{k},\boldsymbol{\zeta}_{2,ij}^{k}\right]\left(\boldsymbol{\delta}_{j}^{k},\boldsymbol{\delta}_{j}^{k}\right)\nonumber \\
 & \quad=\nabla_{11}^{2}U\left(\boldsymbol{\zeta}_{1,ij}^{k},\bar{\boldsymbol{\theta}}_{j}^{k\epsilon}\right)\boldsymbol{\delta}_{i}^{k}+\nabla_{12}^{2}U\left(\bar{\boldsymbol{\theta}}_{i}^{k\epsilon},\bar{\boldsymbol{\theta}}_{j}^{k\epsilon}\right)\boldsymbol{\delta}_{j}^{k}+\nabla_{121}^{3}U\left[\boldsymbol{\zeta}_{3,ij}^{k},\bar{\boldsymbol{\theta}}_{j}^{k\epsilon}\right]\left(\boldsymbol{\delta}_{i}^{k},\boldsymbol{\delta}_{j}^{k}\right)\nonumber \\
 & \quad\qquad+\nabla_{122}^{3}U\left[\boldsymbol{\theta}_{i}^{k},\boldsymbol{\zeta}_{2,ij}^{k}\right]\left(\boldsymbol{\delta}_{j}^{k},\boldsymbol{\delta}_{j}^{k}\right),\label{eq:propChaos_proof_Taylor}
\end{align}
for some appropriate $\boldsymbol{\zeta}_{1,ij}^{k},\boldsymbol{\zeta}_{3,ij}^{k}\in\left[\bar{\boldsymbol{\theta}}_{i}^{k\epsilon},\boldsymbol{\theta}_{i}^{k}\right]$
and $\boldsymbol{\zeta}_{2,ij}^{k}\in\left[\bar{\boldsymbol{\theta}}_{j}^{k\epsilon},\boldsymbol{\theta}_{j}^{k}\right]$.
Notice that
\begin{align}
\sum_{i=1}^{N}\sum_{j=1}^{N}\left\langle \boldsymbol{\delta}_{i}^{k},\nabla_{12}^{2}U\left(\bar{\boldsymbol{\theta}}_{i}^{k\epsilon},\bar{\boldsymbol{\theta}}_{j}^{k\epsilon}\right)\boldsymbol{\delta}_{j}^{k}\right\rangle  & =\kappa^{2}\sum_{i=1}^{N}\sum_{j=1}^{N}\mathbb{E}_{{\cal P}}\left\{ \left\langle \boldsymbol{\delta}_{i}^{k},\nabla_{2}\sigma_{*}\left(\boldsymbol{x};\kappa\bar{\boldsymbol{\theta}}_{i}^{k\epsilon}\right)^{\top}\nabla_{2}\sigma_{*}\left(\boldsymbol{x};\kappa\bar{\boldsymbol{\theta}}_{j}^{k\epsilon}\right)\boldsymbol{\delta}_{j}^{k}\right\rangle \right\} \nonumber \\
 & =\kappa^{2}\mathbb{E}_{{\cal P}}\left\{ \left\Vert \sum_{i=1}^{N}\nabla_{2}\sigma_{*}\left(\boldsymbol{x};\kappa\bar{\boldsymbol{\theta}}_{i}^{k\epsilon}\right)\boldsymbol{\delta}_{i}^{k}\right\Vert _{2}^{2}\right\} \geq0.\label{eq:nabla_12_U}
\end{align}
Also recall $\xi\left(\cdot\right)\geq0$. Therefore we can remove
the quantity containing $\nabla_{12}^{2}U\left(\bar{\boldsymbol{\theta}}_{i}^{k\epsilon},\bar{\boldsymbol{\theta}}_{j}^{k\epsilon}\right)$
from the right-hand side upper bound and obtain the bound:
\begin{align*}
-\frac{1}{N}\sum_{i=1}^{N}\left\langle \boldsymbol{\delta}_{i}^{k},\boldsymbol{E}_{3,i}^{k}\right\rangle  & \lesssim\mathscr{E}_{k}+\frac{1}{N}\sum_{i=1}^{N}\left|\left\langle \boldsymbol{\delta}_{i}^{k},\left[\frac{1}{N}\sum_{j=1}^{N}\nabla_{11}^{2}U\left(\boldsymbol{\zeta}_{1,ij}^{k},\bar{\boldsymbol{\theta}}_{j}^{k\epsilon}\right)\right]\boldsymbol{\delta}_{i}^{k}\right\rangle \right|\\
 & \qquad+\frac{1}{N^{2}}\sum_{i=1}^{N}\sum_{j=1}^{N}\left\Vert \nabla_{121}^{3}U\left[\boldsymbol{\zeta}_{3,ij}^{k},\bar{\boldsymbol{\theta}}_{j}^{k\epsilon}\right]\right\Vert _{{\rm op}}\left\Vert \boldsymbol{\delta}_{i}^{k}\right\Vert _{2}^{2}\left\Vert \boldsymbol{\delta}_{j}^{k}\right\Vert _{2}\\
 & \qquad+\frac{1}{N^{2}}\sum_{i=1}^{N}\sum_{j=1}^{N}\left\Vert \nabla_{122}^{3}U\left[\boldsymbol{\theta}_{i}^{k},\boldsymbol{\zeta}_{2,ij}^{k}\right]\right\Vert _{{\rm op}}\left\Vert \boldsymbol{\delta}_{i}^{k}\right\Vert _{2}\left\Vert \boldsymbol{\delta}_{j}^{k}\right\Vert _{2}^{2}\\
 & \equiv\mathscr{E}_{k}+A_{1}^{k}+A_{2}^{k}+A_{3}^{k}.
\end{align*}
We have, by Assumption \ref{enu:Assump_opNorm} and Eq. (\ref{eq:propChaos_proof_thetaBar_bound}),
on the event $\mathsf{Ev}$,
\begin{align*}
A_{2}^{k} & \lesssim\frac{\kappa_{3}}{N^{2}}\sum_{i=1}^{N}\sum_{j=1}^{N}\left(\left\Vert \bar{\boldsymbol{\theta}}_{j}^{0}\right\Vert _{2}+1\right)\left\Vert \boldsymbol{\delta}_{i}^{k}\right\Vert _{2}^{2}\left\Vert \boldsymbol{\delta}_{j}^{k}\right\Vert _{2}=\frac{\kappa_{3}}{N}\mathscr{E}_{k}\sum_{j=1}^{N}\left(\left\Vert \bar{\boldsymbol{\theta}}_{j}^{0}\right\Vert _{2}+1\right)\left\Vert \boldsymbol{\delta}_{j}^{k}\right\Vert _{2}\\
 & \lesssim\kappa_{3}\mathscr{E}_{k}^{3/2}\left(\sqrt{\frac{1}{N}\sum_{j=1}^{N}\left\Vert \bar{\boldsymbol{\theta}}_{j}^{0}\right\Vert _{2}^{2}}+1\right)\lesssim\kappa_{3}\mathscr{E}_{k}^{3/2}.
\end{align*}
Likewise, by Assumption \ref{enu:Assump_opNorm} and Eq. (\ref{eq:propChaos_proof_theta_bound}),
on the event $\mathsf{Ev}$,
\begin{align*}
A_{3}^{k} & \lesssim\frac{\kappa_{4}}{N^{2}}\sum_{i=1}^{N}\sum_{j=1}^{N}\left(\left\Vert \boldsymbol{\theta}_{i}^{k}\right\Vert _{2}+1\right)\left\Vert \boldsymbol{\delta}_{i}^{k}\right\Vert _{2}\left\Vert \boldsymbol{\delta}_{j}^{k}\right\Vert _{2}^{2}\lesssim\frac{\kappa_{4}}{N^{2}}\sum_{i=1}^{N}\sum_{j=1}^{N}\left(\left\Vert \boldsymbol{\delta}_{i}^{k}\right\Vert _{2}+\left\Vert \bar{\boldsymbol{\theta}}_{i}^{0}\right\Vert _{2}+1\right)\left\Vert \boldsymbol{\delta}_{i}^{k}\right\Vert _{2}\left\Vert \boldsymbol{\delta}_{j}^{k}\right\Vert _{2}^{2}\\
 & =\frac{\kappa_{4}}{N}\mathscr{E}_{k}\sum_{i=1}^{N}\left(\left\Vert \boldsymbol{\delta}_{i}^{k}\right\Vert _{2}+\left\Vert \bar{\boldsymbol{\theta}}_{i}^{0}\right\Vert _{2}+1\right)\left\Vert \boldsymbol{\delta}_{i}^{k}\right\Vert _{2}\lesssim\kappa_{4}\mathscr{E}_{k}\left(\mathscr{E}_{k}+\left(\sqrt{\frac{1}{N}\sum_{i=1}^{N}\left\Vert \bar{\boldsymbol{\theta}}_{i}^{0}\right\Vert _{2}^{2}}+1\right)\sqrt{\mathscr{E}_{k}}\right)\\
 & \lesssim\kappa_{4}\left(\mathscr{E}_{k}^{3/2}+\mathscr{E}_{k}^{2}\right).
\end{align*}
We note that since $\boldsymbol{\zeta}_{1,ij}^{k}\in\left[\bar{\boldsymbol{\theta}}_{i}^{k\epsilon},\boldsymbol{\theta}_{i}^{k}\right]$,
we have $\left\Vert \boldsymbol{\zeta}_{1,ij}^{k}-\bar{\boldsymbol{\theta}}_{i}^{k\epsilon}\right\Vert _{2}\leq\left\Vert \boldsymbol{\delta}_{i}^{k}\right\Vert _{2}$.
Then on the event $\mathsf{Ev}$ and for $k\epsilon\leq T_{{\rm st}}$,
we have for any $i\in\left[N\right]$,
\begin{align*}
\left\Vert \boldsymbol{\zeta}_{1,ij}^{k}\right\Vert _{2} & \leq\left\Vert \bar{\boldsymbol{\theta}}_{i}^{k\epsilon}\right\Vert _{2}+\left\Vert \boldsymbol{\delta}_{i}^{k}\right\Vert _{2}\leq C\left(\left\Vert \bar{\boldsymbol{\theta}}_{i}^{0}\right\Vert _{2}+1\right)+\left\Vert \boldsymbol{\delta}_{i}^{k}\right\Vert _{2}\\
 & \leq C\sqrt{\sum_{i=1}^{N}\left\Vert \bar{\boldsymbol{\theta}}_{i}^{0}\right\Vert _{2}^{2}}+C+\sqrt{N}\mathscr{E}_{k}\leq C\sqrt{N}.
\end{align*}
By Assumption \ref{enu:Assump_nabla11U}, we have with probability
at least $1-\Xi\left(N;T,\kappa_{6}\right)$:
\[
\max_{k\leq T/\epsilon}\sup_{\boldsymbol{\zeta}\in{\cal B}_{D}\left(C\sqrt{N}\right)}\left\Vert \frac{1}{N}\sum_{j=1}^{N}\nabla_{11}^{2}U\left(\boldsymbol{\zeta},\bar{\boldsymbol{\theta}}_{j}^{k\epsilon}\right)\right\Vert _{{\rm op}}\leq c_{\ref{enu:Assump_nabla11U}}\left(T,C\right)\leq C.
\]
These imply that for $k\epsilon\leq T\wedge T_{{\rm st}}$, $A_{1}^{k}\lesssim\mathscr{E}_{k}$.
Combining all the bounds, we have on the event $\mathsf{Ev}$, with
probability at least $1-\Xi\left(N;T,\kappa_{6}\right)$,
\[
-\frac{\epsilon}{N}\sum_{k=0}^{t/\epsilon-1}\sum_{i=1}^{N}\left\langle \boldsymbol{\delta}_{i}^{k},\boldsymbol{E}_{3,i}^{k}\right\rangle \lesssim\epsilon\sum_{k=0}^{t/\epsilon-1}\left(\mathscr{E}_{k}+\left(\kappa_{3}+\kappa_{4}\right)\mathscr{E}_{k}^{3/2}\right),
\]
for all $t\in\mathbb{N}\epsilon\cap\left[0,T\wedge T_{{\rm st}}\right]$,
recalling $\mathscr{E}_{k}\leq\gamma_{{\rm st}}\leq1$ for $k\leq T_{{\rm st}}/\epsilon$.

\paragraph*{Step 2: Control of $\left\Vert \boldsymbol{E}_{3,i}^{k}\right\Vert _{2}^{2}$.}

We have by Assumption \ref{enu:Assump_lr}:
\begin{align*}
\left\Vert \boldsymbol{E}_{3,i}^{k}\right\Vert _{2}^{2} & \lesssim\left\Vert \nabla V\left(\boldsymbol{\theta}_{i}^{k}\right)-\nabla V\left(\bar{\boldsymbol{\theta}}_{i}^{k\epsilon}\right)\right\Vert _{2}^{2}+\left\Vert \frac{1}{N}\sum_{j=1}^{N}\left[\nabla_{1}U\left(\boldsymbol{\theta}_{i}^{k},\boldsymbol{\theta}_{j}^{k}\right)-\nabla_{1}U\left(\bar{\boldsymbol{\theta}}_{i}^{k\epsilon},\bar{\boldsymbol{\theta}}_{j}^{k\epsilon}\right)\right]\right\Vert _{2}^{2}.
\end{align*}
From Assumption \ref{enu:Assump_growth},
\[
\left\Vert \nabla V\left(\boldsymbol{\theta}_{i}^{k}\right)-\nabla V\left(\bar{\boldsymbol{\theta}}_{i}^{k\epsilon}\right)\right\Vert _{2}^{2}\lesssim\left\Vert \boldsymbol{\delta}_{i}^{k}\right\Vert _{2}^{2},
\]
which yields
\[
\frac{1}{N}\sum_{i=1}^{N}\left\Vert \nabla V\left(\boldsymbol{\theta}_{i}^{k}\right)-\nabla V\left(\bar{\boldsymbol{\theta}}_{i}^{k\epsilon}\right)\right\Vert _{2}^{2}\lesssim\mathscr{E}_{k}.
\]
Next, performing a Taylor expansion similar to Eq. (\ref{eq:propChaos_proof_Taylor})
in Step 6, we get:
\begin{align*}
 & \nabla_{1}U\left(\boldsymbol{\theta}_{i}^{k},\boldsymbol{\theta}_{j}^{k}\right)-\nabla_{1}U\left(\bar{\boldsymbol{\theta}}_{i}^{k\epsilon},\bar{\boldsymbol{\theta}}_{j}^{k\epsilon}\right)\\
 & \quad=\left[\nabla_{1}U\left(\boldsymbol{\theta}_{i}^{k},\bar{\boldsymbol{\theta}}_{j}^{k\epsilon}\right)-\nabla_{1}U\left(\bar{\boldsymbol{\theta}}_{i}^{k\epsilon},\bar{\boldsymbol{\theta}}_{j}^{k\epsilon}\right)\right]+\left[\nabla_{1}U\left(\boldsymbol{\theta}_{i}^{k},\boldsymbol{\theta}_{j}^{k}\right)-\nabla_{1}U\left(\boldsymbol{\theta}_{i}^{k},\bar{\boldsymbol{\theta}}_{j}^{k\epsilon}\right)\right]\\
 & \quad=\nabla_{11}^{2}U\left(\boldsymbol{\zeta}_{1,ij}^{k},\bar{\boldsymbol{\theta}}_{j}^{k\epsilon}\right)\boldsymbol{\delta}_{i}^{k}+\nabla_{12}^{2}U\left(\boldsymbol{\theta}_{i}^{k},\boldsymbol{\zeta}_{4,ij}^{k}\right)\boldsymbol{\delta}_{j}^{k},
\end{align*}
for $\boldsymbol{\zeta}_{1,ij}^{k}\in\left[\bar{\boldsymbol{\theta}}_{i}^{k\epsilon},\boldsymbol{\theta}_{i}^{k}\right]$
and $\boldsymbol{\zeta}_{4,ij}^{k}\in\left[\bar{\boldsymbol{\theta}}_{j}^{k\epsilon},\boldsymbol{\theta}_{j}^{k}\right]$.
Notice that, by Eq. (\ref{eq:propChaos_proof_thetaBar_bound}),
\[
\left\Vert \boldsymbol{\zeta}_{4,ij}^{k}\right\Vert _{2}\leq\left\Vert \bar{\boldsymbol{\theta}}_{j}^{k\epsilon}\right\Vert _{2}+\left\Vert \boldsymbol{\delta}_{j}^{k}\right\Vert _{2}\lesssim\left\Vert \bar{\boldsymbol{\theta}}_{j}^{0}\right\Vert _{2}+\left\Vert \boldsymbol{\delta}_{j}^{k}\right\Vert _{2}+1.
\]
On the good events of the previous step, using Assumption \ref{enu:Assump_opNorm}
and Eq. (\ref{eq:propChaos_proof_theta_bound}) with $k\epsilon\leq T\wedge T_{{\rm st}}$,
we have:
\begin{align*}
 & \frac{1}{N}\sum_{i=1}^{N}\left\Vert \frac{1}{N}\sum_{j=1}^{N}\left[\nabla_{1}U\left(\boldsymbol{\theta}_{i}^{k},\boldsymbol{\theta}_{j}^{k}\right)-\nabla_{1}U\left(\bar{\boldsymbol{\theta}}_{i}^{k\epsilon},\bar{\boldsymbol{\theta}}_{j}^{k\epsilon}\right)\right]\right\Vert _{2}^{2}\\
 & \quad\lesssim\frac{1}{N}\sum_{i=1}^{N}\left\Vert \frac{1}{N}\sum_{j=1}^{N}\nabla_{11}^{2}U\left(\boldsymbol{\zeta}_{1,ij}^{k},\bar{\boldsymbol{\theta}}_{j}^{k\epsilon}\right)\right\Vert _{{\rm op}}^{2}\left\Vert \boldsymbol{\delta}_{i}^{k}\right\Vert _{2}^{2}+\frac{1}{N}\sum_{i=1}^{N}\left(\frac{1}{N}\sum_{j=1}^{N}\left\Vert \nabla_{12}^{2}U\left(\boldsymbol{\theta}_{i}^{k},\boldsymbol{\zeta}_{4,ij}^{k}\right)\right\Vert _{{\rm op}}\left\Vert \boldsymbol{\delta}_{j}^{k}\right\Vert _{2}\right)^{2}\\
 & \quad\lesssim\mathscr{E}_{k}+\frac{1}{N}\sum_{i=1}^{N}\left(\frac{1}{N}\sum_{j=1}^{N}\kappa_{2}\left(\left\Vert \boldsymbol{\theta}_{i}^{k}\right\Vert _{2}+1\right)\left(\left\Vert \boldsymbol{\zeta}_{4,ij}^{k}\right\Vert _{2}+1\right)\left\Vert \boldsymbol{\delta}_{j}^{k}\right\Vert _{2}\right)^{2}\\
 & \quad\lesssim\mathscr{E}_{k}+\frac{1}{N}\sum_{i=1}^{N}\left(\frac{1}{N}\sum_{j=1}^{N}\kappa_{2}\left(\left\Vert \bar{\boldsymbol{\theta}}_{i}^{0}\right\Vert _{2}+\left\Vert \boldsymbol{\delta}_{i}^{k}\right\Vert _{2}+1\right)\left(\left\Vert \bar{\boldsymbol{\theta}}_{j}^{0}\right\Vert _{2}+\left\Vert \boldsymbol{\delta}_{j}^{k}\right\Vert _{2}+1\right)\left\Vert \boldsymbol{\delta}_{j}^{k}\right\Vert _{2}\right)^{2}\\
 & \quad\lesssim\mathscr{E}_{k}+\frac{\kappa_{2}^{2}}{N}\sum_{i=1}^{N}\left(\left\Vert \bar{\boldsymbol{\theta}}_{i}^{0}\right\Vert _{2}^{2}+\left\Vert \boldsymbol{\delta}_{i}^{k}\right\Vert _{2}^{2}+1\right)\left(\frac{1}{N}\sum_{j=1}^{N}\left(\left\Vert \bar{\boldsymbol{\theta}}_{j}^{0}\right\Vert _{2}+\left\Vert \boldsymbol{\delta}_{j}^{k}\right\Vert _{2}+1\right)\left\Vert \boldsymbol{\delta}_{j}^{k}\right\Vert _{2}\right)^{2}\\
 & \quad\lesssim\mathscr{E}_{k}+\kappa_{2}^{2}\left(\mathscr{E}_{k}+1\right)\left(\left(\sqrt{\frac{1}{N}\sum_{j=1}^{N}\left\Vert \bar{\boldsymbol{\theta}}_{j}^{0}\right\Vert _{2}^{2}}+1\right)\sqrt{\mathscr{E}_{k}}+\mathscr{E}_{k}\right)^{2}\\
 & \quad\lesssim\kappa_{2}^{2}\mathscr{E}_{k},
\end{align*}
recalling $\mathscr{E}_{k}\leq\gamma_{{\rm st}}\leq1$ for $k\leq T_{{\rm st}}/\epsilon$.
We thus obtain from the bounds that on the event $\mathsf{Ev}$, with
probability at least $1-\Xi\left(N;T,\kappa_{6}\right)$,
\[
\frac{\epsilon^{2}}{N}\sum_{k=0}^{t/\epsilon-1}\sum_{i=1}^{N}\left\Vert \boldsymbol{E}_{3,i}^{k}\right\Vert _{2}^{2}\lesssim\epsilon^{2}\kappa_{2}^{2}\sum_{k=0}^{t/\epsilon-1}\mathscr{E}_{k},
\]
for all $t\in\mathbb{N}\epsilon\cap\left[0,T\wedge T_{{\rm st}}\right]$.
This completes the proof.
\end{proof}

\section{Application to autoencoders\label{sec:Autoencoders}}

We consider a weight-tied autoencoder of the form (\ref{eq:ae_simplified}).
In particular, it fits into our framework of two-layer neural networks
(\ref{eq:two_layers_nn}) by the following choice of activation function:
\begin{equation}
\sigma_{*}\left(\boldsymbol{x};\kappa\boldsymbol{\theta}\right)=\kappa\boldsymbol{\theta}\sigma\left(\left\langle \kappa\boldsymbol{\theta},\boldsymbol{x}\right\rangle \right),\qquad\kappa=\sqrt{d},\label{eq:weight-tied-AE}
\end{equation}
where $\boldsymbol{x},\boldsymbol{\theta}\in\mathbb{R}^{d}$, and
$\mathfrak{Dim}=\left(d,d,d\right)$ in this setting ($D_{{\rm in}}=D_{{\rm out}}=D=d$).
The rationale for the choice $\kappa=\sqrt{d}$ has been discussed
in Section \ref{subsec:Autoencoder-example}. The regularization $\Lambda$
represents a $\ell_{2}$-regularized autoencoder: $\Lambda\left(\boldsymbol{\theta},\boldsymbol{z}\right)=\lambda\left\Vert \boldsymbol{\theta}\right\Vert _{2}^{2}$,
where $\lambda\geq0$. Here we allow $\lambda$ to be dependent on
$\mathfrak{Dim}$, but impose a constraint that $\lambda\leq C$ for
some immaterial constant $C$ that is independent of $\mathfrak{Dim}$.
For simplicity, we have chosen a constant learning rate schedule $\xi\left(\cdot\right)=1$
in our autoencoder application; the extension to bounded Lipschitz
$\xi$ is straightforwards. We consider the following two scenarios:
\begin{enumerate}[{label=\textbf{[S.\arabic*]},ref=[S.\arabic*]}]
\item \label{enu:ReLU_setting}\textsl{(Setting with ReLU activation) }The
data $\boldsymbol{y}=\boldsymbol{x}\in\mathbb{R}^{d}$ follows a Gaussian
distribution with the following mean and covariance:
\[
\mathbb{E}\left\{ \boldsymbol{x}\right\} =\boldsymbol{0},\qquad\mathbb{E}\left\{ \boldsymbol{x}\boldsymbol{x}^{\top}\right\} =\frac{1}{d}\boldsymbol{R}{\rm diag}\left(\Sigma_{1}^{2},...,\Sigma_{d}^{2}\right)\boldsymbol{R}^{\top},
\]
for $\Sigma_{1}\geq...\geq\Sigma_{d}$ and $\boldsymbol{R}$ an orthogonal
matrix. In this case, let us define
\[
\boldsymbol{\Sigma}=\boldsymbol{R}{\rm diag}\left(\Sigma_{1},...,\Sigma_{d}\right)\boldsymbol{R}^{\top}.
\]
We assume $\sigma_{\min}\left(\boldsymbol{\Sigma}\right)=\Sigma_{d}\geq C\kappa_{*}$
and $\left\Vert \boldsymbol{\Sigma}\right\Vert _{2}=\Sigma_{1}\leq C$.
Here $\kappa_{*}>0$ depends uniquely on $d$ (and in general, may
decay with increasing $d$), and of course, $\kappa_{*}\leq C$. The
activation $\sigma$ is the ReLU: $\sigma\left(a\right)=\max\left(0,a\right)$.
\item \label{enu:bdd_act_setting}\textsl{(Setting with bounded activation)
}The data $\boldsymbol{y}=\boldsymbol{x}\in\mathbb{R}^{d}$ follows
a Gaussian distribution with the following mean and covariance:
\[
\mathbb{E}\left\{ \boldsymbol{x}\right\} =\boldsymbol{0},\qquad\mathbb{E}\left\{ \boldsymbol{x}\boldsymbol{x}^{\top}\right\} =\frac{1}{d}{\rm diag}(\underbrace{\Sigma_{1}^{2},...,\Sigma_{1}^{2}}_{d_{1}\text{ entries}},\underbrace{\Sigma_{2}^{2},...,\Sigma_{2}^{2}}_{d_{2}\text{ entries}}),
\]
where $0<C\leq\Sigma_{1},\Sigma_{2}\leq C$, and $d_{1}=\alpha d$,
$d_{2}=\left(1-\alpha\right)d$ for some $\alpha\in\left(0,1\right)$
such that $d_{1}$ and $d_{2}$ are positive integers, and $\alpha$
does not depend on $\mathfrak{Dim}$. In this case, let us define
\[
\boldsymbol{\Sigma}={\rm diag}(\underbrace{\Sigma_{1},...,\Sigma_{1}}_{d_{1}\text{ entries}},\underbrace{\Sigma_{2},...,\Sigma_{2}}_{d_{2}\text{ entries}}).
\]
The activation $\sigma$ is bounded and thrice differentiable with
bounded first two derivatives $\left\Vert \sigma\right\Vert _{\infty},\left\Vert \sigma'\right\Vert _{\infty},\left\Vert \sigma''\right\Vert _{\infty}\leq C$,
such that there exist an anti-derivative $\hat{\sigma}_{2}$ of $\left|\sigma''\right|$
with $\left\Vert \hat{\sigma}_{2}\right\Vert _{\infty}\leq C$ and
an anti-derivative $\hat{\sigma}_{3}$ of $\left|\sigma'''\right|$
with $\left\Vert \hat{\sigma}_{3}\right\Vert _{\infty}\leq C$. For
simplicity, we assume $d_{1},d_{2}>16$. The analysis could be extended
to scenarios where $\boldsymbol{\Sigma}$ is non-diagonal and the
spectrum of $\boldsymbol{\Sigma}$ contains more than two blocks.
\end{enumerate}
In setting \ref{enu:ReLU_setting}, we also recall the two-staged
process as described in Result \ref{res:ReLU_setting_2stage_simplified}:
\begin{enumerate}
\item Train an autoencoder with activation of the form (\ref{eq:weight-tied-AE})
and $N$ neurons for $t/\epsilon$ SGD steps.
\item Form a set of $M$ vectors $\left(\boldsymbol{w}_{i}^{t}\right)_{i\leq M}$
such that for each $i\in\left[M\right]$, $\boldsymbol{w}_{i}^{t}=\boldsymbol{w}_{i}^{t}\left(N,t,\epsilon\right)$
is drawn independently at random from the set of $N$ neurons $\left(\boldsymbol{\theta}_{i}^{t/\epsilon}\right)_{i\leq N}$.
Construct a new autoencoder with $M$ neurons $\left(\boldsymbol{w}_{i}^{t}\right)_{i\leq M}$:
\begin{equation}
\hat{\boldsymbol{x}}_{M}^{t}\left(\boldsymbol{x}\right)\equiv\hat{\boldsymbol{x}}_{M}^{t}\left(\boldsymbol{x};N,t,\epsilon\right)=\frac{1}{M}\sum_{i=1}^{M}\kappa\boldsymbol{w}_{i}^{t}\sigma\left(\left\langle \kappa\boldsymbol{w}_{i}^{t},\boldsymbol{x}\right\rangle \right).\label{eq:two-stage}
\end{equation}
\end{enumerate}
In the following, we shall state the main results for each of the
settings (Theorems \ref{thm:ReLU_setting} and \ref{thm:bdd_act_setting}
in Sections \ref{subsec:relu_main} and \ref{subsec:bdd_main} respectively).
Their proofs, as well as the proofs for auxiliary results, are presented
in Sections \ref{subsec:Proof_ReLU_act_thm}-\ref{subsec:Proof_bdd_act_aux}.

\subsection{Setting with ReLU activation: Main result\label{subsec:relu_main}}

We state the main result for the setting with ReLU activation (setting
\ref{enu:ReLU_setting}).
\begin{thm}
\label{thm:ReLU_setting}Consider setting \ref{enu:ReLU_setting}.
Suppose that the initialization $\rho^{0}=\mathsf{N}\left(\boldsymbol{0},r_{0}^{2}\boldsymbol{I}/d\right)$
for a non-negative constant $r_{0}\leq C$ and we generate the SGD
initialization $\Theta^{0}=\left(\boldsymbol{\theta}_{i}^{0}\right)_{i\leq N}\sim_{{\rm i.i.d.}}\rho^{0}$.
Given $\delta>1$, $\epsilon_{0}\in\left(0,1\right)$ and a finite
$T\in\mathbb{N}\epsilon$, assume
\[
\frac{d^{6}\delta^{2}}{\kappa_{*}^{2}}\epsilon\lesssim1,\qquad\left(\delta^{2}+\log^{5}\left(\frac{NT}{\epsilon}+1\right)\right)\frac{d^{4}}{\kappa_{*}^{2}N}\lesssim1,
\]
and define
\begin{align*}
\mathsf{err}\left(N,\epsilon,\delta\right) & =\left(\delta^{2}+\log^{5}\left(\frac{NT}{\epsilon}+1\right)\right)\frac{d^{2}}{N}+\sqrt{\epsilon}\kappa_{*}\delta+\epsilon d^{4}\delta,\\
\mathsf{prob}\left(N,\delta,\epsilon_{0}\right) & =\frac{1}{\delta^{2}}+\exp\left(Cd\log\left(\frac{\sqrt{d}}{\kappa_{*}}+e\right)-C\frac{N\kappa_{*}^{2}}{d}\right)+\exp\left(-N^{1/8}\right)+\frac{NT}{\epsilon}\exp\left(-C\epsilon_{0}^{1/3}\left(\frac{N}{d^{2}}\right)^{1/6}\right).
\end{align*}
The following statements hold:

\paragraph*{Properties of trained autoencoders.}

For any $1$-Lipschitz function $\phi:\;\mathbb{R}^{d}\to\mathbb{R}$,
with probability at least $1-C\mathsf{prob}\left(N,\delta,\epsilon_{0}\right)$,
the following properties hold:
\[
\max_{t\in\mathbb{N}\epsilon\cap\left[0,T\right]}\left|\frac{1}{N}\sum_{i=1}^{N}\phi\left(\boldsymbol{\theta}_{i}^{t/\epsilon}\right)-\mathbb{E}_{\boldsymbol{z}}\left\{ \phi\left(\boldsymbol{R}{\rm diag}\left(r_{1,t},...,r_{d,t}\right)\boldsymbol{z}\right)\right\} \right|\lesssim\epsilon_{0}+\sqrt{\mathsf{err}\left(N,\epsilon,\delta\right)},
\]
\[
\max_{t\in\mathbb{N}\epsilon\cap\left[0,T\right]}\left|{\cal R}\left(\rho_{N}^{t/\epsilon}\right)-\frac{1}{2d}\sum_{i=1}^{d}\Sigma_{i}^{2}\left(1-\frac{1}{2}r_{i,t}^{2}\right)^{2}\right|\lesssim d\sqrt{\mathsf{err}\left(N,\epsilon,\delta\right)}+\epsilon_{0},
\]
Here $\boldsymbol{z}\sim\mathsf{N}\left(\boldsymbol{0},\boldsymbol{I}_{d}/d\right)$
and we define
\[
r_{i,t}=\sqrt{\frac{\Sigma_{i}^{2}-2\lambda}{0.5r_{0}^{2}\Sigma_{i}^{2}-\left(0.5r_{0}^{2}\Sigma_{i}^{2}-\Sigma_{i}^{2}+2\lambda\right)\exp\left\{ -2\left(\Sigma_{i}^{2}-2\lambda\right)t\right\} }}r_{0}.
\]
(In the above, the immaterial constants $C$ may depend on $T$ and
$r_{0}$, but not $N$, $\epsilon$, $d$, $\delta$ or $\epsilon_{0}$.)

\paragraph*{Two-staged process.}

Given a positive integer $M$, perform the two-staged process in (\ref{eq:two-stage})
to obtain a new autoencoder with $M$ neurons $\left(\boldsymbol{w}_{i}^{t}\right)_{i\leq M}$.
Suppose that $M=\mu d$ for some $\mu>0$. We then have, for $\epsilon_{0}\in\left(0,1\right)$
and $t\geq0$,
\[
\lim_{\epsilon\downarrow0}\lim_{N\to\infty}\mathbb{P}\left\{ \left|{\cal R}\left(\nu_{M}^{t}\right)-{\cal R}_{*}^{t}\right|\geq\epsilon_{0}+\frac{C}{\sqrt{\mu M}}\right\} \leq C\exp\left(-C\epsilon_{0}^{1/6}\left(1+\frac{1}{\mu}\right)^{-1/6}M^{1/12}\right).
\]
where $\nu_{M}^{t}=\left(1/M\right)\cdot\sum_{i=1}^{M}\delta_{\boldsymbol{w}_{i}^{t}}$
and
\[
{\cal R}_{*}^{t}=\frac{1}{2d}\sum_{i=1}^{d}\Sigma_{i}^{2}\left(1-\frac{1}{2}r_{i,t}^{2}\right)^{2}+\frac{1}{4\mu d^{2}}\sum_{i=1}^{d}r_{i,t}^{2}\sum_{i=1}^{d}r_{i,t}^{2}\Sigma_{i}^{2}.
\]
(In the above, the immaterial constants $C$ may depend on $r_{0}$,
but not $M$, $d$, $\delta$, $\epsilon_{0}$, $t$ or $\mu$.)

\end{thm}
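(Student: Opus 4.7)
The plan is to recognize the ReLU autoencoder of setting \ref{enu:ReLU_setting} as an instance of the two-layer framework of Section \ref{subsec:contrib-MF} with $\mathfrak{Dim}=(d,d,d)$ and $\sigma_*(\boldsymbol{x};\kappa\boldsymbol{\theta})=\kappa\boldsymbol{\theta}\sigma(\langle\kappa\boldsymbol{\theta},\boldsymbol{x}\rangle)$, and then invoke Theorem \ref{thm:propChaos}. The first task is to verify Assumptions \ref{enu:Assump_ODE}--\ref{enu:Assump_additional_1}. Although $\sigma$ is not twice differentiable, taking expectation over Gaussian data smooths $V$, $W$, $U$: Stein's lemma gives the closed forms $V(\boldsymbol{\theta}) = -\frac{1}{2}\boldsymbol{\theta}^\top\boldsymbol{\Sigma}^2\boldsymbol{\theta} + \lambda\|\boldsymbol{\theta}\|_2^2$ and $U(\boldsymbol{\theta},\boldsymbol{\theta}') = d\,\langle\boldsymbol{\theta},\boldsymbol{\theta}'\rangle\, K(\boldsymbol{\theta},\boldsymbol{\theta}')$, where $K$ is the Cho--Saul arc-cosine kernel $\mathbb{E}[\sigma(g)\sigma(g')]$, analytic in $(\boldsymbol{\theta},\boldsymbol{\theta}')$ away from the degenerate set. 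From these, one reads off $\kappa_1,\ldots,\kappa_5 = O(\mathrm{poly}(d))$ and verifies the Lipschitz/growth conditions \ref{enu:Assump_lr}--\ref{enu:Assump_F}, \ref{enu:Assump_additional_1} by direct computation.

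The next step is to solve the nonlinear dynamics (\ref{eq:ODE}) in closed form. Using the Gaussian ansatz $\rho^t = \mathsf{N}(\boldsymbol{0},\boldsymbol{R}\,\mathrm{diag}(r_{i,t}^2/d)\boldsymbol{R}^\top)$ and Stein (with $\mathbb{E}[\sigma'(g)]=1/2$ for zero-mean Gaussian $g$), one finds $\hat{\boldsymbol{x}}_{\mathrm{inf}}^t(\boldsymbol{x}) = \tfrac{1}{2}\boldsymbol{R}\,\mathrm{diag}(r_{i,t}^2)\boldsymbol{R}^\top\boldsymbol{x}$ and hence $\nabla_1 W(\boldsymbol{\theta};\rho^t) = \tfrac{1}{2}\boldsymbol{R}\,\mathrm{diag}(r_{i,t}^2\Sigma_i^2)\boldsymbol{R}^\top\boldsymbol{\theta}$. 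Substituting into the ODE and diagonalizing in the $\boldsymbol{R}$-basis, each coordinate $\phi_i^t$ satisfies the linear ODE $\dot{\phi}_i^t = (\Sigma_i^2 - 2\lambda - \tfrac{1}{2}\Sigma_i^2 r_{i,t}^2)\phi_i^t$, so a Gaussian initial law is preserved and its variance obeys the Bernoulli equation $\dot{r}_{i,t}^2 = 2r_{i,t}^2(\Sigma_i^2-2\lambda) - \Sigma_i^2 r_{i,t}^4$, integrating to the displayed closed form. Weak uniqueness (Assumption \ref{enu:Assump_ODE}) then identifies this as the unique solution, and the same Stein computation yields ${\cal R}(\rho^t) = \tfrac{1}{2d}\sum_i \Sigma_i^2(1-r_{i,t}^2/2)^2$.

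The hard part will be verifying Assumption \ref{enu:Assump_nabla11U}, the uniform operator-norm bound on $\frac{1}{N}\sum_j \nabla_{11}^2 U(\boldsymbol{\zeta},\hat{\boldsymbol{\theta}}_j^t)$ over $\boldsymbol{\zeta}\in\mathcal{B}_d(c\sqrt{N})$ with a $\mathfrak{Dim}$-independent constant. For the ReLU arc-cosine kernel, $\nabla_{11}^2 U(\boldsymbol{\zeta},\boldsymbol{\theta}')$ decomposes into a piece polynomial in $\boldsymbol{\theta}'$ against smooth Gaussian moments of $\boldsymbol{x}$ plus a $\sigma''$-induced distributional part that, after integrating $\boldsymbol{x}$ out, becomes a rank-one matrix weighted by a Gaussian density in the pre-activation. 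Against i.i.d.\ $\hat{\boldsymbol{\theta}}_j^t \sim \rho^t$, the empirical average concentrates around an explicit function of $\boldsymbol{\zeta}^\top\boldsymbol{\Sigma}^2\boldsymbol{\zeta}/d$ whose operator norm is controlled by $1/\kappa_*$; an $\epsilon$-net over $\mathcal{B}_d(c\sqrt{N})$ plus a matrix Chernoff bound then yields the failure probability $\Xi(N;T,\kappa_6) \lesssim \exp(Cd\log(\sqrt{d}/\kappa_*+e) - CN\kappa_*^2/d)$ appearing in the theorem's $\mathsf{prob}$. This is where the lower bound on $\Sigma_d \geq C\kappa_*$ and the sample-size requirement $N \gtrsim \mathrm{poly}(d)/\kappa_*^2$ both enter.

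Once all assumptions are verified, Claims \ref{enu:propChaos_thm_phi} and \ref{enu:propChaos_thm_risk} of Theorem \ref{thm:propChaos} directly yield the first two displayed statements (with $\kappa_1 \asymp d$ absorbed into the constants of $\mathsf{err}$; the 1-Lipschitz limit uses the fact that $\bar{\boldsymbol{\theta}}^t \stackrel{d}{=} \boldsymbol{R}\,\mathrm{diag}(r_{i,t})\boldsymbol{R}^\top\boldsymbol{z}$). For the two-staged process, Claim \ref{enu:propChaos_thm_sampling} reduces ${\cal R}(\nu_M^{t/\epsilon})$ to ${\cal R}(\bar{\nu}_M^t)$ with $\bar{\nu}_M^t$ an i.i.d.\ sample from $\rho^t$; taking expectation over the sampling and using $\mathbb{E}_{\boldsymbol{x}}\mathbb{E}_{\boldsymbol{\theta}\sim\rho^t}\|\sigma_*(\boldsymbol{x};\kappa\boldsymbol{\theta})\|_2^2 - \mathbb{E}_{\boldsymbol{x}}\|\hat{\boldsymbol{x}}_{\mathrm{inf}}^t(\boldsymbol{x})\|_2^2$ gives, via a standard Gaussian fourth-moment computation, the leading sampling term $\tfrac{1}{4\mu d^2}\sum_i r_{i,t}^2\sum_i r_{i,t}^2\Sigma_i^2$ up to corrections of order $1/(\mu d)$ that are dominated by the stated $C/\sqrt{\mu M} = C/(\mu\sqrt{d})$ error. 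Sub-exponential concentration of ${\cal R}(\bar{\nu}_M^t)$ around its mean at rate $1/\sqrt{M}$ closes the argument.
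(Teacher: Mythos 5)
Your plan follows the same high-level route as the paper for the first part of the theorem: identify the autoencoder as an instance of the framework in Section \ref{subsec:contrib-MF}, compute $V$, $W$, $U$ in closed form via Stein's lemma, solve the ODE with the Gaussian ansatz (this is Proposition \ref{prop:ReLU_setting_ODE}), verify Assumptions \ref{enu:Assump_ODE}--\ref{enu:Assump_additional_1} with \ref{enu:Assump_nabla11U} being the hard one handled by an $\epsilon$-net (Proposition \ref{prop:1st_setting_U_bound}), and then apply Claims \ref{enu:propChaos_thm_phi} and \ref{enu:propChaos_thm_risk} of Theorem \ref{thm:propChaos}. That part is essentially identical to the paper's proof.

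For the two-staged process, however, your concentration step contains a substantive error. You assert ``sub-exponential concentration of $\mathcal{R}(\bar{\nu}_M^t)$ around its mean at rate $1/\sqrt{M}$,'' which would give a tail $\exp(-C\delta\sqrt{M})$; the theorem statement instead has exponent $M^{1/12}$, and this is not a cosmetic choice. The quadratic component $A_2 = \frac{1}{M^2}\mathbb{E}_{\boldsymbol{u}}\bigl\{\|\boldsymbol{D}_t\boldsymbol{Z}^\top\sigma(\boldsymbol{Z}\boldsymbol{D}_t\boldsymbol{u})\|_2^2\bigr\}$ has heavy polynomial tails: its centered $p$-moments scale like $C^p p^{6p}(1+d/M)^p M^{-(p/2-1)}$ (the product structure in $\boldsymbol{Z}$, $\boldsymbol{u}$, $\boldsymbol{a}$ produces the $p^{6p}$ rather than the $p^p$ a genuinely sub-exponential variable would have), so only $|A_2-\mathbb{E}[A_2]|^{1/6}$ is sub-exponential, with $\psi_1$-norm $\lesssim (1+d/M)^{1/6}M^{-1/12}$. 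Establishing this requires the long moment bookkeeping of Steps 2.1--2.7 in Section \ref{subsec:Proof_ReLU_act_thm} (each of the six sub-terms $A_{2,1,p},\ldots,A_{2,6,p}$ needs its own conditional symmetrization via Lemma \ref{lem:bound_moment_sym}); you cannot replace this with a one-line sub-exponential concentration statement. In the same vein, the bias bound $|\mathbb{E}\{\mathcal{R}(\bar{\nu}_M^t)\}-\mathcal{R}_*^t|\leq C\sqrt{d}/M = C/\sqrt{\mu M}$ relies on a careful Stein-lemma split of $A_{2,1}$ into $A_{2,1,1}+A_{2,1,2}+A_{2,1,3}$ with explicit $d$-dependence in each piece; your ``standard Gaussian fourth-moment computation'' is the right idea but must track $d$ explicitly to produce the correct $1/\sqrt{\mu M}$ and not just some unspecified vanishing quantity.
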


\begin{rem}
In Theorem \ref{thm:ReLU_setting}, a more quantitative statement
for the two-staged process could be made. Here we opt for the limits
$N\to\infty$, $\epsilon\to0$ for ease of presentation.
\end{rem}

\subsection{Setting with bounded activation: Main result\label{subsec:bdd_main}}

Given an activation $\sigma$, we define $q_{1}$ and $q_{2}$ on
the domain $\left(a,b\right)\in[0,\infty)\times[0,\infty)$:
\begin{align}
q_{1}\left(a,b\right) & =\mathbb{E}_{\boldsymbol{\omega}}\left\{ \kappa\omega_{11}\sigma\left(\kappa a\omega_{11}+\kappa b\omega_{21}\right)\right\} ,\label{eq:2nd_setting_q1}\\
q_{2}\left(a,b\right) & =\mathbb{E}_{\boldsymbol{\omega}}\left\{ \kappa\omega_{21}\sigma\left(\kappa a\omega_{11}+\kappa b\omega_{21}\right)\right\} ,\label{eq:2nd_setting_q2}
\end{align}
in which $\boldsymbol{\omega}_{1}\sim\text{Unif}\left(\mathbb{S}^{d_{1}-1}\right)$
and $\boldsymbol{\omega}_{2}\sim\text{Unif}\left(\mathbb{S}^{d_{2}-1}\right)$
independently, and $\omega_{11}$ and $\omega_{21}$ are their respective
first entries. From here onwards, we shall use $\boldsymbol{\omega}_{1}$
and $\boldsymbol{\omega}_{2}$ to indicate these respective random
vectors. For a vector $\boldsymbol{u}\in\mathbb{R}^{d}$, we shall
use $\boldsymbol{u}_{\left[1\right]}$ to denote a $d_{1}$-dimensional
vector of its first $d_{1}$ entries and $\boldsymbol{u}_{\left[2\right]}$
to denote a $d_{2}$-dimensional vector of its last $d_{2}$ entries.

We state the main result for the setting with bounded activation (setting
\ref{enu:bdd_act_setting}).
\begin{thm}
\label{thm:bdd_act_setting}Consider setting \ref{enu:bdd_act_setting}.
Suppose that the initialization $\rho^{0}=\mathsf{N}\left(\boldsymbol{0},r_{0}^{2}\boldsymbol{I}/d\right)$
for a non-negative constant $r_{0}\leq C$ and we generate the SGD
initialization $\Theta^{0}=\left(\boldsymbol{\theta}_{i}^{0}\right)_{i\leq N}\sim_{{\rm i.i.d.}}\rho^{0}$.
Given $\delta>1$, $\epsilon_{0}\in\left(0,1\right)$ and a finite
$T\in\mathbb{N}\epsilon$, assume
\[
d^{6}\delta^{2}\epsilon\lesssim1,\qquad\left(\delta^{2}+\log^{5}\left(\frac{NT}{\epsilon}+1\right)\right)\frac{d^{4}}{N}\lesssim1,
\]
and define
\begin{align*}
\mathsf{err}\left(N,\epsilon,\delta\right) & =\left(\delta^{2}+\log^{5}\left(\frac{NT}{\epsilon}+1\right)\right)\frac{d^{2}}{N}+\sqrt{\epsilon}\delta+\epsilon d^{4}\delta,\\
\mathsf{prob}\left(N,\delta,\epsilon_{0}\right) & =\frac{1}{\delta^{2}}+\exp\left(Cd\log\left(d\sqrt{N}+e\right)-CN/d^{2}\right)+\exp\left(-N^{1/8}\right)+\frac{NT}{\epsilon}\exp\left(-C\epsilon_{0}^{1/3}\left(\frac{N}{d^{2}}\right)^{1/6}\right).
\end{align*}
Let us also define two non-negative (random) processes $\left(r_{1,t}\right)_{t\geq0}$
and $\left(r_{2,t}\right)_{t\geq0}$ which satisfy the following self-contained
(randomly initialized) ODEs:
\begin{align}
\frac{{\rm d}}{{\rm d}t}r_{j,t} & =-\mathbb{E}_{\chi}\left\{ \Delta_{j}\left(\chi,\rho_{r}^{t}\right)\left[q_{j}\left(\chi_{1}r_{1,t},\chi_{2}r_{2,t}\right)+\chi_{j}r_{j,t}\partial_{j}q_{j}\left(\chi_{1}r_{1,t},\chi_{2}r_{2,t}\right)\right]\right\} \nonumber \\
 & \qquad-\mathbb{E}_{\chi}\left\{ \Delta_{\neg j}\left(\chi,\rho_{r}^{t}\right)\chi_{j}r_{\neg j,t}\partial_{j}q_{\neg j}\left(\chi_{1}r_{1,t},\chi_{2}r_{2,t}\right)\right\} -2\lambda r_{j,t},\nonumber \\
\rho_{r}^{t} & ={\rm Law}\left(r_{1,t},r_{2,t}\right),\label{eq:2nd_setting_ODE_r}
\end{align}
for $j=1,2$, and $\neg j=2$ if $j=1$, $\neg j=1$ if $j=2$. In
the above:
\begin{itemize}
\item $q_{1}$ and $q_{2}$ are functions defined in Eq. (\ref{eq:2nd_setting_q1})
and (\ref{eq:2nd_setting_q2}),
\item the initialization is $r_{1,0}\stackrel{{\rm d}}{=}r_{0}d^{-1/2}Z_{1}$
and $r_{2,0}\stackrel{{\rm d}}{=}r_{0}d^{-1/2}Z_{2}$ independently,
with $Z_{1}$ and $Z_{2}$ being respectively $\chi$-random variables
of degrees of freedom $d_{1}$ and $d_{2}$,
\item $\chi_{1}\stackrel{{\rm d}}{=}\Sigma_{1}d^{-1/2}Z_{1}$ and $\chi_{2}\stackrel{{\rm d}}{=}\Sigma_{2}d^{-1/2}Z_{2}$
are two independent random variables, which are also independent of
everything else, and $\chi=\left(\chi_{1},\chi_{2}\right)$,
\item the quantity $\Delta_{j}\left(\chi,\rho_{r}^{t}\right)$ is defined
as:
\[
\Delta_{j}\left(\chi,\rho_{r}^{t}\right)=\int\bar{r}_{j}q_{j}\left(\chi_{1}\bar{r}_{1},\chi_{2}\bar{r}_{2}\right)\rho_{r}^{t}\left({\rm d}\bar{r}_{1},{\rm d}\bar{r}_{2}\right)-\chi_{j},\qquad j=1,2.
\]
\end{itemize}
Then for any $1$-Lipschitz function $\phi:\;\mathbb{R}^{d}\to\mathbb{R}$,
with probability at least $1-C\mathsf{prob}\left(N,\delta,\epsilon_{0}\right)$,
\[
\max_{t\in\mathbb{N}\epsilon\cap\left[0,T\right]}\left|\frac{1}{N}\sum_{i=1}^{N}\phi\left(\boldsymbol{\theta}_{i}^{t/\epsilon}\right)-\int\mathbb{E}_{\boldsymbol{\omega}}\left\{ \phi\left(\left(\bar{r}_{1}\boldsymbol{\omega}_{1},\bar{r}_{2}\boldsymbol{\omega}_{2}\right)\right)\right\} \rho_{r}^{t}\left({\rm d}\bar{r}_{1},{\rm d}\bar{r}_{2}\right)\right|\lesssim\epsilon_{0}+\sqrt{\mathsf{err}\left(N,\epsilon,\delta\right)},
\]
\[
\max_{t\in\mathbb{N}\epsilon\cap\left[0,T\right]}\left|{\cal R}\left(\rho_{N}^{t/\epsilon}\right)-\mathbb{E}_{\chi}\left\{ \frac{1}{2}\sum_{j\in\left\{ 1,2\right\} }\Delta_{j}\left(\chi,\rho_{r}^{t}\right)^{2}\right\} \right|\lesssim d\sqrt{\mathsf{err}\left(N,\epsilon,\delta\right)}+\epsilon_{0}.
\]
(In the above, the immaterial constants $C$ may depend on $T$ and
$r_{0}$, but not $N$, $\epsilon$, $d$, $\delta$ or $\epsilon_{0}$.)
\end{thm}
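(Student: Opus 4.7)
The overall strategy is to apply Theorem \ref{thm:propChaos} with constants $\kappa_1,\ldots,\kappa_6$ polynomial in $d$, then exploit the $O(d_1)\times O(d_2)$ block-rotational symmetry of setting \ref{enu:bdd_act_setting} to reduce the infinite-dimensional mean field limit to the two-dimensional ODE system (\ref{eq:2nd_setting_ODE_r}). For the first task, the potentials $V$ and $U$ of Section \ref{subsec:contrib-MF} admit closed-form Gaussian-integral expressions against ${\cal P}$; boundedness of $\sigma, \sigma', \sigma''$ together with the anti-derivative hypotheses on $|\sigma''|, |\sigma'''|$ will yield the Lipschitz estimates on $\nabla V$, $\nabla_1 W$, $\nabla_1 U$ required by \ref{enu:Assump_growth}, as well as the operator-norm bounds for the higher derivatives appearing in \ref{enu:Assump_opNorm}, all with ${\rm poly}(d)$ prefactors. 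The sub-exponential bound \ref{enu:Assump_F} on $\boldsymbol{F}_i$ follows from Gaussianity of $\boldsymbol{x}$ (with $O(1/d)$ per-coordinate variance) together with $\Vert\sigma\Vert_\infty \leq C$ and the $\ell_2$ regularizer.

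Next I identify $\rho^t$ explicitly. Since both $\rho^0 = \mathsf{N}(\boldsymbol{0}, r_0^2\boldsymbol{I}/d)$ and the data covariance $\boldsymbol{\Sigma}^2/d$ are invariant under $O(d_1)\times O(d_2)$ acting block-wise on $(\boldsymbol{\theta}_{[1]}, \boldsymbol{\theta}_{[2]})$, weak uniqueness of the nonlinear dynamics (\ref{eq:ODE}) forces $\rho^t$ to inherit this invariance for all $t\geq 0$. Hence $\rho^t$ is the law of $(r_{1,t}\boldsymbol{\omega}_1, r_{2,t}\boldsymbol{\omega}_2)$ with $\boldsymbol{\omega}_j\sim\text{Unif}(\mathbb{S}^{d_j-1})$ independent of the radii $(r_{1,t}, r_{2,t})\sim\rho_r^t$. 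Substituting this ansatz into the ODE (\ref{eq:ODE}) and projecting onto the two radial coordinates yields the closed system (\ref{eq:2nd_setting_ODE_r}): the functions $q_j$ arise from averaging $\sigma_*(\boldsymbol{x};\kappa\boldsymbol{\theta}')$ against $\rho^t({\rm d}\boldsymbol{\theta}')$ via sphere integration over $\boldsymbol{\omega}$, the cross-coupling between $r_{1,t}$ and $r_{2,t}$ arises through $\partial_j q_{\neg j}$, and the $-2\lambda r_{j,t}$ contribution comes from $\nabla V$. A standard Picard--Lindel\"of argument on this finite-dimensional system supplies the weak uniqueness required by \ref{enu:Assump_ODE}.

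With $\rho^t$ thus identified, the first conclusion of the theorem follows from Claim \ref{enu:propChaos_thm_phi} applied to the given $1$-Lipschitz $\phi$, using that $\int\phi\,{\rm d}\rho^t = \int \mathbb{E}_{\boldsymbol{\omega}}\{\phi((\bar{r}_1\boldsymbol{\omega}_1,\bar{r}_2\boldsymbol{\omega}_2))\}\,\rho_r^t({\rm d}\bar{r}_1,{\rm d}\bar{r}_2)$ by the rotationally symmetric form of $\rho^t$. The reconstruction-error conclusion follows from Claim \ref{enu:propChaos_thm_risk} after computing ${\cal R}(\rho^t) = \mathbb{E}_\chi\{\frac{1}{2}\sum_{j\in\{1,2\}}\Delta_j(\chi,\rho_r^t)^2\}$ by another Stein/sphere-average calculation; the factor $d$ multiplying $\sqrt{\mathsf{err}(N,\epsilon,\delta)}$ absorbs the $\kappa_1 = O(d)$ prefactor in that claim.

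The hard part will be verifying Assumption \ref{enu:Assump_nabla11U}, namely the uniform-in-$\boldsymbol{\zeta}$ control
\[
\sup_{\boldsymbol{\zeta}\in {\cal B}_d(c\sqrt{N})}\left\Vert\frac{1}{N}\sum_{j=1}^N \nabla_{11}^2 U(\boldsymbol{\zeta},\hat{\boldsymbol{\theta}}_j^t)\right\Vert_{{\rm op}} \leq c_{\ref{enu:Assump_nabla11U}}(T,c)
\]
uniformly in $t\in[0,T]$, with failure probability of the declared form $\Xi(N;T,\kappa_6)\lesssim\exp(Cd\log(d\sqrt{N}+e) - CN/d^2)$ and a $\mathfrak{Dim}$-independent $c_{\ref{enu:Assump_nabla11U}}$. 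A naive differentiation bound scales as $\kappa^2 = d$ times factors unbounded in $\boldsymbol{\zeta}$, which is hopeless for $\Vert\boldsymbol{\zeta}\Vert_2\lesssim\sqrt{N}$. The plan is to integrate by parts against the Gaussian data $\boldsymbol{x}$ inside $U$, transferring derivatives off $\sigma$ onto the Gaussian density; the anti-derivative bounds $\Vert\hat{\sigma}_2\Vert_\infty, \Vert\hat{\sigma}_3\Vert_\infty \leq C$ keep the resulting integrand uniformly bounded up to polynomial factors in $\Vert\boldsymbol{\zeta}\Vert_2/\sqrt{d}$. A covering of ${\cal B}_d(c\sqrt{N})$ by a net of cardinality $\exp(Cd\log(d\sqrt{N}+e))$, combined with sub-Gaussian concentration of the empirical average at each fixed net point (leveraging that $\hat{\boldsymbol{\theta}}_j^t$ inherits sub-Gaussianity from $\bar{\boldsymbol{\theta}}_j^0$ via (\ref{eq:propChaos_proof_thetaBar_bound})), will supply the required uniform control. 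This step, where the full regularity hypothesis on $\sigma$ in setting \ref{enu:bdd_act_setting} is indispensable, is the essential ingredient beyond the general framework of Theorem \ref{thm:propChaos}.
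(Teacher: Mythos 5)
Your overall strategy matches the paper's: verify Assumptions A.1--A.7 of Theorem~\ref{thm:propChaos} with polynomial-in-$d$ constants, identify $\rho^{t}$ via block-rotational invariance and a Picard--Lindel\"of argument in the two radial coordinates, and then invoke Claims~\ref{enu:propChaos_thm_phi} and~\ref{enu:propChaos_thm_risk}. The identifications of $\kappa_{1}\simeq d$ and the Stein/sphere-averaging formula for ${\cal R}(\rho^{t})$ are also correct.

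However, the description of how Assumption~\ref{enu:Assump_nabla11U} is verified contains a gap that would break the argument if taken literally. You write that integration by parts against the Gaussian ``keep[s] the resulting integrand uniformly bounded \emph{up to polynomial factors in $\left\Vert\boldsymbol{\zeta}\right\Vert_{2}/\sqrt{d}$}.'' Since the supremum is taken over $\boldsymbol{\zeta}\in{\cal B}_{d}(c\sqrt{N})$, a bound that grows polynomially in $\left\Vert\boldsymbol{\zeta}\right\Vert_{2}/\sqrt{d}\lesssim\sqrt{N/d}$ would make the deterministic threshold $c_{\ref{enu:Assump_nabla11U}}(T,c)$ diverge as $N\to\infty$, contradicting the $\mathfrak{Dim}$-independence required by Assumption~\ref{enu:Assump_nabla11U}. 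The paper's mechanism (Lemma~\ref{lem:2nd_setting_actGaussian} feeding into Lemma~\ref{lem:Unorm_2ndSetting} and Proposition~\ref{prop:2nd_setting_U_bound}) is different in an essential way: after conditioning on the one-dimensional marginal $w=\langle\boldsymbol{\Sigma}\boldsymbol{\zeta},\boldsymbol{z}\rangle\sim\mathsf{N}(0,\left\Vert\boldsymbol{\Sigma}\boldsymbol{\zeta}\right\Vert_{2}^{2})$ and integrating $\sigma''$ (resp.\ $\sigma'''$) by parts, the Gaussian density normalization produces a factor $1/\left\Vert\boldsymbol{\Sigma}\boldsymbol{\zeta}\right\Vert_{2}$ (the $s^{m-1}$ with $s=\left\Vert\boldsymbol{\Sigma}\boldsymbol{\zeta}\right\Vert_{2}$), which \emph{cancels} the $\left\Vert\boldsymbol{\zeta}\right\Vert_{2}$-growth from the remaining inner products. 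The resulting operator-norm bound is genuinely $\boldsymbol{\zeta}$-uniform, and this cancellation is precisely why the bounded-activation hypothesis (rather than merely boundedness of derivatives) is load-bearing. Without articulating this gain, your covering argument would not close.

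A second, smaller omission: Assumption~\ref{enu:Assump_nabla11U} demands uniformity in $t\in[0,T]$ as well, so the $\epsilon$-net must cover the pair $(t,\boldsymbol{\zeta})$. The continuity-in-$t$ needed to discretize time is exactly the estimate $\left\Vert\partial_{t}\psi_{t}(r_{1},r_{2})\right\Vert_{2}\leq C(1+r_{1}+r_{2}+t)$ established in Proposition~\ref{prop:2nd_setting_ODE_r} for the radial flow map, which is why the paper records this estimate as part of the ODE analysis; your sketch only covers $\boldsymbol{\zeta}$ and implicitly relies on sub-Gaussianity of the trajectories, which does not by itself yield the needed uniform-in-$t$ control.
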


\subsection{Setting with ReLU activation: Proof of Theorem \ref{thm:ReLU_setting}\label{subsec:Proof_ReLU_act_thm}}

We prove Theorem \ref{thm:ReLU_setting}. Our proof uses several auxiliary
results, which are stated and proven in Section \ref{subsec:Proof_relu_act_aux}.
\begin{proof}[Proof of Theorem \ref{thm:ReLU_setting}]
We decompose the proof into several parts.

\subsubsection*{Proof of the first statement: Properties of trained autoencoders.}

The first statement follows from Theorem \ref{thm:propChaos}, Propositions
\ref{prop:1st_setting_grow_bound}, \ref{prop:1st_setting_grow_bound_Wrho},
\ref{prop:1st_setting_op_bound}, \ref{prop:ReLU_setting_ODE}, \ref{prop:1st_setting_F_bound}
and \ref{prop:1st_setting_U_bound}. In particular, we have that
\[
\hat{\boldsymbol{\theta}}^{t}=\boldsymbol{R}{\rm diag}\left(\frac{r_{1,t}}{r_{0}},...,\frac{r_{d,t}}{r_{0}}\right)\boldsymbol{R}^{\top}\hat{\boldsymbol{\theta}}^{0},\qquad\rho^{t}=\mathsf{N}\left(\boldsymbol{0},\boldsymbol{R}{\rm diag}\left(r_{1,t}^{2},...,r_{d,t}^{2}\right)\boldsymbol{R}^{\top}/d\right)
\]
form the (weakly) unique solution to the ODE (\ref{eq:ODE}) with
initialization $\hat{\boldsymbol{\theta}}^{0}\sim\rho^{0}$ and $\rho^{0}$.
We also observe that 
\[
r_{i,t}\leq\max\left\{ r_{0},\sqrt{2\max\left(1-2\lambda/\Sigma_{i}^{2},0\right)}\right\} \leq\max\left\{ r_{0},\sqrt{2}\right\} \leq C,
\]
for all $i\in\left[d\right]$ and all $t\geq0$. Furthermore we have
that
\[
\left|\frac{{\rm d}}{{\rm d}t}r_{i,t}\right|=r_{i,t}\left|0.5\Sigma_{i}^{2}r_{i,t}^{2}-\left(\Sigma_{i}^{2}-2\lambda\right)\right|\leq C,
\]
for all $i\in\left[d\right]$ and all $t\geq0$. These verify Assumption
\ref{enu:Assump_ODE} and allow Propositions \ref{prop:1st_setting_grow_bound},
\ref{prop:1st_setting_grow_bound_Wrho} and \ref{prop:1st_setting_U_bound}
to verify Assumptions \ref{enu:Assump_growth} and \ref{enu:Assump_nabla11U}.
Finally, by Stein's lemma, we have:
\[
\int\kappa\boldsymbol{\theta}\sigma\left(\left\langle \kappa\boldsymbol{\theta},\boldsymbol{x}\right\rangle \right)\rho^{t}\left({\rm d}\boldsymbol{\theta}\right)=\frac{1}{2}\boldsymbol{R}{\rm diag}\left(r_{1,t}^{2},...,r_{d,t}^{2}\right)\boldsymbol{R}^{\top}\boldsymbol{x},
\]
and therefore,
\begin{align*}
{\cal R}\left(\rho^{t}\right) & =\mathbb{E}_{{\cal P}}\left\{ \frac{1}{2}\left\Vert \boldsymbol{x}-\int\kappa\boldsymbol{\theta}\sigma\left(\left\langle \kappa\boldsymbol{\theta},\boldsymbol{x}\right\rangle \right)\rho^{t}\left({\rm d}\boldsymbol{\theta}\right)\right\Vert _{2}^{2}\right\} \\
 & =\mathbb{E}_{{\cal P}}\left\{ \frac{1}{2}\left\Vert \boldsymbol{x}-\frac{1}{2}\boldsymbol{R}{\rm diag}\left(r_{1,t}^{2},...,r_{d,t}^{2}\right)\boldsymbol{R}^{\top}\boldsymbol{x}\right\Vert _{2}^{2}\right\} \\
 & =\frac{1}{2d}\sum_{i=1}^{d}\Sigma_{i}^{2}\left(1-\frac{1}{2}r_{i,t}^{2}\right)^{2}.
\end{align*}
This concludes the proof of the first statement.

\subsubsection*{Proof of the second statement: Two-staged process.}

We let $\boldsymbol{z}_{i}=\left(\sqrt{d}/r_{0}\right)\boldsymbol{R}^{\top}\boldsymbol{\theta}_{i}^{0}$
and hence $\left(\boldsymbol{z}_{i}\right)_{i\leq M}\sim_{{\rm i.i.d.}}\mathsf{N}\left(\boldsymbol{0},\boldsymbol{I}_{d}\right)$.
We let $\bar{\nu}_{M}^{t}$ denote the empirical distribution of $\left(\boldsymbol{R}\boldsymbol{D}_{t}\boldsymbol{z}_{i}/\sqrt{d}\right)_{i\leq M}$
for $\boldsymbol{D}_{t}={\rm diag}\left(r_{1,t},...,r_{d,t}\right)$.
By Theorem \ref{thm:propChaos} and Proposition \ref{prop:ReLU_setting_ODE},
we have that for any $\delta>0$,
\[
\lim_{\epsilon\downarrow0}\lim_{N\to\infty}\mathbb{P}\left\{ \left|{\cal R}\left(\nu_{M}^{t}\right)-{\cal R}\left(\bar{\nu}_{M}^{t}\right)\right|\geq\delta\right\} \leq Ce^{-M}.
\]
We claim that for all $t\geq0$,
\[
\left|\mathbb{E}\left\{ {\cal R}\left(\bar{\nu}_{M}^{t}\right)\right\} -{\cal R}_{*}^{t}\right|\leq C\frac{\sqrt{d}}{M},
\]
and for $\delta\in\left(0,1\right)$,
\[
\mathbb{P}\left\{ \left|{\cal R}\left(\bar{\nu}_{M}^{t}\right)-\mathbb{E}\left\{ {\cal R}\left(\bar{\nu}_{M}^{t}\right)\right\} \right|\geq\delta\right\} \leq C\exp\left(-C\delta^{1/6}\left(1+\sqrt{d/M}\right)^{-1/6}M^{1/12}\right).
\]
Using these claims, we then obtain for $\delta\in\left(0,1\right)$
and all $t\geq0$,
\[
\lim_{\epsilon\downarrow0}\lim_{N\to\infty}\mathbb{P}\left\{ \left|{\cal R}\left(\nu_{M}^{t}\right)-{\cal R}_{*}^{t}\right|\geq\delta+C\frac{\sqrt{d}}{M}\right\} \leq C\exp\left(-C\delta^{1/6}\left(1+\sqrt{d/M}\right)^{-1/6}M^{1/12}\right).
\]
Hence we are left with verifying the claims. Before we proceed, let
$\boldsymbol{Z}=\left(\boldsymbol{z}_{1},...,\boldsymbol{z}_{M}\right)^{\top}\in\mathbb{R}^{M\times d}$.
Then:
\begin{align*}
{\cal R}\left(\bar{\nu}_{M}^{t}\right) & =\mathbb{E}_{{\cal P}}\left\{ \frac{1}{2}\left\Vert \boldsymbol{x}-\frac{1}{M}\boldsymbol{R}\boldsymbol{D}_{t}\boldsymbol{Z}^{\top}\sigma\left(\boldsymbol{Z}\boldsymbol{D}_{t}\boldsymbol{R}^{\top}\boldsymbol{x}\right)\right\Vert _{2}^{2}\right\} \\
 & =\mathbb{E}_{\boldsymbol{u}}\left\{ \frac{1}{2}\left\Vert \boldsymbol{u}-\frac{1}{M}\boldsymbol{D}_{t}\boldsymbol{Z}^{\top}\sigma\left(\boldsymbol{Z}\boldsymbol{D}_{t}\boldsymbol{u}\right)\right\Vert _{2}^{2}\right\} \\
 & =\frac{1}{2}\mathbb{E}_{\boldsymbol{u}}\left\{ \left\Vert \boldsymbol{u}\right\Vert _{2}^{2}\right\} -\frac{1}{M}\mathbb{E}_{\boldsymbol{u}}\left\{ \left\langle \boldsymbol{u},\boldsymbol{D}_{t}\boldsymbol{Z}^{\top}\sigma\left(\boldsymbol{Z}\boldsymbol{D}_{t}\boldsymbol{u}\right)\right\rangle \right\} +\frac{1}{2M^{2}}\mathbb{E}_{\boldsymbol{u}}\left\{ \left\Vert \boldsymbol{D}_{t}\boldsymbol{Z}^{\top}\sigma\left(\boldsymbol{Z}\boldsymbol{D}_{t}\boldsymbol{u}\right)\right\Vert _{2}^{2}\right\} \\
 & \equiv\frac{1}{2d}\left\Vert \boldsymbol{D}_{\boldsymbol{\Sigma}}\right\Vert _{{\rm F}}^{2}-A_{1}+\frac{1}{2}A_{2},
\end{align*}
for $\boldsymbol{u}=\boldsymbol{R}^{\top}\boldsymbol{x}\sim\mathsf{N}\left(\boldsymbol{0},\boldsymbol{D}_{\boldsymbol{\Sigma}}^{2}/d\right)$
and $\boldsymbol{D}_{\boldsymbol{\Sigma}}={\rm diag}\left(\Sigma_{1},...,\Sigma_{d}\right)$.
We recall that $\left\Vert \boldsymbol{D}_{t}\right\Vert _{2}\leq C$
since $r_{i,t}\leq C$ for any $i\in\left[d\right]$ and $t\geq0$.

\paragraph{Step 1 - Calculation of $\mathbb{E}\left\{ {\cal R}\left(\bar{\nu}_{M}^{t}\right)\right\} $.}

We compute $\mathbb{E}\left\{ {\cal R}\left(\bar{\nu}_{M}^{t}\right)\right\} $.
By Stein's lemma, we have:
\[
\mathbb{E}\left\{ A_{1}\right\} =\mathbb{E}_{\boldsymbol{u}}\left\{ \left\langle \boldsymbol{u},\boldsymbol{D}_{t}\mathbb{E}_{\boldsymbol{Z}}\left\{ \frac{1}{M}\boldsymbol{Z}^{\top}\sigma\left(\boldsymbol{Z}\boldsymbol{D}_{t}\boldsymbol{u}\right)\right\} \right\rangle \right\} =\frac{1}{2}\mathbb{E}_{\boldsymbol{u}}\left\{ \left\langle \boldsymbol{u},\boldsymbol{D}_{t}^{2}\boldsymbol{u}\right\rangle \right\} =\frac{1}{2d}\left\Vert \boldsymbol{D}_{t}\boldsymbol{D}_{\boldsymbol{\Sigma}}\right\Vert _{{\rm F}}^{2}.
\]
Next, notice that for a fixed $\boldsymbol{u}$ and $\boldsymbol{a}=\boldsymbol{Z}\boldsymbol{D}_{t}\boldsymbol{u}\sim\mathsf{N}\left(\boldsymbol{0},\left\Vert \boldsymbol{D}_{t}\boldsymbol{u}\right\Vert _{2}^{2}\boldsymbol{I}_{M}\right)$,
\[
\left(\boldsymbol{a},\boldsymbol{Z}\right)\stackrel{{\rm d}}{=}\left(\boldsymbol{a},\tilde{\boldsymbol{Z}}{\rm Proj}_{\boldsymbol{D}_{t}\boldsymbol{u}}^{\perp}+\frac{\boldsymbol{a}\boldsymbol{u}^{\top}\boldsymbol{D}_{t}}{\left\Vert \boldsymbol{D}_{t}\boldsymbol{u}\right\Vert _{2}^{2}}\right),
\]
where $\tilde{\boldsymbol{Z}}\in\mathbb{R}^{M\times d}$ comprises
of i.i.d. $\mathsf{N}\left(0,1\right)$ entries independent of $\boldsymbol{a}$.
We apply this observation:
\begin{align*}
\mathbb{E}\left\{ A_{2}\right\}  & =\frac{1}{M^{2}}\mathbb{E}_{\boldsymbol{u}}\left\{ \mathbb{E}_{\boldsymbol{Z}}\left\{ \left\Vert \boldsymbol{D}_{t}\boldsymbol{Z}^{\top}\sigma\left(\boldsymbol{Z}\boldsymbol{D}_{t}\boldsymbol{u}\right)\right\Vert _{2}^{2}\right\} \right\} \\
 & =\frac{1}{M^{2}}\mathbb{E}_{\boldsymbol{u}}\left\{ \mathbb{E}_{\boldsymbol{a},\tilde{\boldsymbol{Z}}}\left\{ \left\Vert \boldsymbol{D}_{t}\left({\rm Proj}_{\boldsymbol{D}_{t}\boldsymbol{u}}^{\perp}\tilde{\boldsymbol{Z}}^{\top}\sigma\left(\boldsymbol{a}\right)+\frac{\boldsymbol{D}_{t}\boldsymbol{u}}{\left\Vert \boldsymbol{D}_{t}\boldsymbol{u}\right\Vert _{2}^{2}}\left\langle \boldsymbol{a},\sigma\left(\boldsymbol{a}\right)\right\rangle \right)\right\Vert _{2}^{2}\right\} \right\} \\
 & \stackrel{\left(a\right)}{=}\frac{1}{M^{2}}\mathbb{E}_{\boldsymbol{u}}\left\{ \mathbb{E}_{\boldsymbol{a},\tilde{\boldsymbol{Z}}}\left\{ \left\Vert \boldsymbol{D}_{t}{\rm Proj}_{\boldsymbol{D}_{t}\boldsymbol{u}}^{\perp}\tilde{\boldsymbol{Z}}^{\top}\sigma\left(\boldsymbol{a}\right)\right\Vert _{2}^{2}\right\} \right\} +\frac{1}{M^{2}}\mathbb{E}_{\boldsymbol{u}}\left\{ \mathbb{E}_{\boldsymbol{a}}\left\{ \frac{\left\Vert \boldsymbol{D}_{t}^{2}\boldsymbol{u}\right\Vert _{2}^{2}\left\langle \boldsymbol{a},\sigma\left(\boldsymbol{a}\right)\right\rangle ^{2}}{\left\Vert \boldsymbol{D}_{t}\boldsymbol{u}\right\Vert _{2}^{4}}\right\} \right\} \\
 & \equiv A_{2,1}+A_{2,2},
\end{align*}
where step $\left(a\right)$ is because $\mathbb{E}\left\{ \tilde{\boldsymbol{Z}}\right\} =0$.
To compute $A_{2,2}$, recall that $\boldsymbol{a}\sim\mathsf{N}\left(\boldsymbol{0},\left\Vert \boldsymbol{D}_{t}\boldsymbol{u}\right\Vert _{2}^{2}\boldsymbol{I}_{M}\right)$
and that $\sigma$ is homogenous:
\begin{align*}
A_{2,2} & =\mathbb{E}_{\boldsymbol{u}}\left\{ \left\Vert \boldsymbol{D}_{t}^{2}\boldsymbol{u}\right\Vert _{2}^{2}\left(\frac{1}{M}\mathbb{E}_{g}\left\{ g^{2}\sigma\left(g\right)^{2}\right\} +\frac{M\left(M-1\right)}{M^{2}}\mathbb{E}_{g}\left\{ g\sigma\left(g\right)\right\} ^{2}\right)\right\} \\
 & =\left(\frac{1}{4d}+\frac{5}{Md}\right)\left\Vert \boldsymbol{D}_{t}^{2}\boldsymbol{D}_{\boldsymbol{\Sigma}}\right\Vert _{{\rm F}}^{2}=\frac{1}{4d}\left\Vert \boldsymbol{D}_{t}^{2}\boldsymbol{D}_{\boldsymbol{\Sigma}}\right\Vert _{{\rm F}}^{2}+O\left(\frac{1}{M}\right).
\end{align*}
To compute $A_{2,1}$, let $\tilde{\boldsymbol{z}}_{i}$ be the $i$-th
row of $\tilde{\boldsymbol{Z}}$ and $a_{i}$ be the $i$-th entry
of $\boldsymbol{a}$:
\begin{align*}
A_{2,1} & =\frac{1}{M^{2}}\mathbb{E}_{\boldsymbol{u}}\left\{ \mathbb{E}_{\boldsymbol{a},\tilde{\boldsymbol{Z}}}\left\{ \left\Vert \sum_{i=1}^{M}\boldsymbol{D}_{t}{\rm Proj}_{\boldsymbol{D}_{t}\boldsymbol{u}}^{\perp}\tilde{\boldsymbol{z}}_{i}\sigma\left(a_{i}\right)\right\Vert _{2}^{2}\right\} \right\} \\
 & =\frac{1}{M^{2}}\mathbb{E}_{\boldsymbol{u}}\left\{ \sum_{i=1}^{M}\mathbb{E}_{\tilde{\boldsymbol{Z}}}\left\{ \left\Vert \boldsymbol{D}_{t}\left(\boldsymbol{I}_{d}-{\rm Proj}_{\boldsymbol{D}_{t}\boldsymbol{u}}\right)\tilde{\boldsymbol{z}}_{i}\right\Vert _{2}^{2}\right\} \mathbb{E}_{\boldsymbol{a}}\left\{ \sigma\left(a_{i}\right)^{2}\right\} \right\} \\
 & =\frac{1}{M^{2}}\mathbb{E}_{\boldsymbol{u}}\left\{ \sum_{i=1}^{M}\mathbb{E}_{\tilde{\boldsymbol{Z}}}\left\{ \left\Vert \boldsymbol{D}_{t}\tilde{\boldsymbol{z}}_{i}\right\Vert _{2}^{2}-2\left\langle \boldsymbol{D}_{t}\tilde{\boldsymbol{z}}_{i},\boldsymbol{D}_{t}{\rm Proj}_{\boldsymbol{D}_{t}\boldsymbol{u}}\tilde{\boldsymbol{z}}_{i}\right\rangle +\left\Vert \boldsymbol{D}_{t}{\rm Proj}_{\boldsymbol{D}_{t}\boldsymbol{u}}\tilde{\boldsymbol{z}}_{i}\right\Vert _{2}^{2}\right\} \frac{1}{2}\left\Vert \boldsymbol{D}_{t}\boldsymbol{u}\right\Vert _{2}^{2}\right\} \\
 & \equiv A_{2,1,1}+A_{2,1,2}+A_{2,1,3}.
\end{align*}
We compute $A_{2,1,1}$:
\[
A_{2,1,1}=\frac{1}{2M}\left\Vert \boldsymbol{D}_{t}\right\Vert _{{\rm F}}^{2}\mathbb{E}_{\boldsymbol{u}}\left\{ \left\Vert \boldsymbol{D}^{t}\boldsymbol{u}\right\Vert _{2}^{2}\right\} =\frac{1}{2dM}\left\Vert \boldsymbol{D}_{t}\right\Vert _{{\rm F}}^{2}\left\Vert \boldsymbol{D}_{t}\boldsymbol{D}_{\boldsymbol{\Sigma}}\right\Vert _{{\rm F}}^{2}.
\]
We give a bound on $A_{2,1,2}$:
\begin{align*}
\left|A_{2,1,2}\right| & \leq\frac{1}{M^{2}}\mathbb{E}_{\boldsymbol{u}}\left\{ \sum_{i=1}^{M}\left\Vert \boldsymbol{D}_{t}\right\Vert _{{\rm op}}^{4}\mathbb{E}_{\tilde{\boldsymbol{Z}}}\left\{ \left\Vert {\rm Proj}_{\boldsymbol{D}_{t}\boldsymbol{u}}\tilde{\boldsymbol{z}}_{i}\right\Vert _{2}\left\Vert \tilde{\boldsymbol{z}}_{i}\right\Vert _{2}\right\} \left\Vert \boldsymbol{u}\right\Vert _{2}^{2}\right\} \\
 & \leq\frac{C}{M^{2}}\mathbb{E}_{\boldsymbol{u}}\left\{ \sum_{i=1}^{M}\sqrt{\mathbb{E}_{\tilde{\boldsymbol{Z}}}\left\{ \left\Vert {\rm Proj}_{\boldsymbol{D}_{t}\boldsymbol{u}}\tilde{\boldsymbol{z}}_{i}\right\Vert _{2}^{2}\right\} \mathbb{E}_{\tilde{\boldsymbol{Z}}}\left\{ \left\Vert \tilde{\boldsymbol{z}}_{i}\right\Vert _{2}^{2}\right\} }\left\Vert \boldsymbol{u}\right\Vert _{2}^{2}\right\} \\
 & =\frac{C}{M^{2}}\mathbb{E}_{\boldsymbol{u}}\left\{ \sum_{i=1}^{M}\sqrt{\mathbb{E}_{\tilde{\boldsymbol{Z}}}\left\{ \frac{\left\langle \boldsymbol{D}_{t}\boldsymbol{u},\tilde{\boldsymbol{z}}_{i}\right\rangle ^{2}}{\left\Vert \boldsymbol{D}_{t}\boldsymbol{u}\right\Vert _{2}^{2}}\right\} d}\left\Vert \boldsymbol{u}\right\Vert _{2}^{2}\right\} =\frac{C}{M^{2}}\mathbb{E}_{\boldsymbol{u}}\left\{ \sum_{i=1}^{M}\sqrt{d}\left\Vert \boldsymbol{u}\right\Vert _{2}^{2}\right\} \leq C\frac{\sqrt{d}}{M}.
\end{align*}
Likewise, we obtain a bound on $A_{2,1,3}$:
\begin{align*}
\left|A_{2,1,3}\right| & \leq\frac{1}{2M^{2}}\mathbb{E}_{\boldsymbol{u}}\left\{ \sum_{i=1}^{M}\left\Vert \boldsymbol{D}_{t}\right\Vert _{{\rm op}}^{4}\mathbb{E}_{\tilde{\boldsymbol{Z}}}\left\{ \left\Vert {\rm Proj}_{\boldsymbol{D}_{t}\boldsymbol{u}}\tilde{\boldsymbol{z}}_{i}\right\Vert _{2}^{2}\right\} \left\Vert \boldsymbol{u}\right\Vert _{2}^{2}\right\} \\
 & \leq\frac{C}{M^{2}}\mathbb{E}_{\boldsymbol{u}}\left\{ \sum_{i=1}^{M}\mathbb{E}_{\tilde{\boldsymbol{Z}}}\left\{ \frac{\left\langle \boldsymbol{D}_{t}\boldsymbol{u},\tilde{\boldsymbol{z}}_{i}\right\rangle ^{2}}{\left\Vert \boldsymbol{D}_{t}\boldsymbol{u}\right\Vert _{2}^{2}}\right\} \left\Vert \boldsymbol{u}\right\Vert _{2}^{2}\right\} =\frac{C}{M^{2}}\mathbb{E}_{\boldsymbol{u}}\left\{ \sum_{i=1}^{M}\left\Vert \boldsymbol{u}\right\Vert _{2}^{2}\right\} \leq\frac{C}{M}.
\end{align*}
Combining these calculations, we obtain an estimate on $\mathbb{E}\left\{ {\cal R}\left(\bar{\nu}_{M}^{t}\right)\right\} $:
\[
\left|\mathbb{E}\left\{ {\cal R}\left(\bar{\nu}_{M}^{t}\right)\right\} -{\cal R}_{*}^{t}\right|\leq C\frac{\sqrt{d}}{M},
\]
since, recalling the definition of ${\cal R}_{*}^{t}$,
\begin{align*}
 & \frac{1}{d}\left\Vert \boldsymbol{D}_{\boldsymbol{\Sigma}}\right\Vert _{{\rm F}}^{2}-\frac{1}{d}\left\Vert \boldsymbol{D}_{t}\boldsymbol{D}_{\boldsymbol{\Sigma}}\right\Vert _{{\rm F}}^{2}+\frac{1}{4d}\left\Vert \boldsymbol{D}_{t}^{2}\boldsymbol{D}_{\boldsymbol{\Sigma}}\right\Vert _{{\rm F}}^{2}+\frac{1}{2dM}\left\Vert \boldsymbol{D}_{t}\right\Vert _{{\rm F}}^{2}\left\Vert \boldsymbol{D}_{t}\boldsymbol{D}_{\boldsymbol{\Sigma}}\right\Vert _{{\rm F}}^{2}\\
 & \quad=\frac{1}{d}\sum_{i=1}^{d}\Sigma_{i}^{2}\left(1-\frac{1}{2}r_{i,t}^{2}\right)^{2}+\frac{1}{2dM}\sum_{i=1}^{d}r_{i,t}^{2}\sum_{i=1}^{d}r_{i,t}^{2}\Sigma_{i}^{2}=2{\cal R}_{*}^{t}.
\end{align*}

\paragraph*{Step 2 - Concentration.}

We show that ${\cal R}\left(\bar{\nu}_{M}^{t}\right)$ concentrates
around $\mathbb{E}\left\{ {\cal R}\left(\bar{\nu}_{M}^{t}\right)\right\} $.
We first consider $A_{1}$:
\[
A_{1}-\mathbb{E}\left\{ A_{1}\right\} =\frac{1}{M}\sum_{i=1}^{M}X_{1,i}-\mathbb{E}\left\{ X_{1,i}\right\} ,
\]
in which
\[
X_{1,i}=\mathbb{E}_{\boldsymbol{u}}\left\{ \left\langle \boldsymbol{u},\boldsymbol{D}_{t}\boldsymbol{z}_{i}\sigma\left(\left\langle \boldsymbol{z}_{i},\boldsymbol{D}_{t}\boldsymbol{u}\right\rangle \right)\right\rangle \right\} =\frac{1}{d}\left\Vert \boldsymbol{D}_{\boldsymbol{\Sigma}}\boldsymbol{D}_{t}\boldsymbol{z}_{i}\right\Vert _{2}^{2}\mathbb{E}\left\{ g\sigma\left(g\right)\right\} =\frac{1}{2d}\left\Vert \boldsymbol{D}_{\boldsymbol{\Sigma}}\boldsymbol{D}_{t}\boldsymbol{z}_{i}\right\Vert _{2}^{2}.
\]
For any positive integer $p$,
\[
\mathbb{E}\left\{ \left|X_{1,i}\right|^{p}\right\} =\frac{C^{p}}{d^{p}}\mathbb{E}\left\{ \left\Vert \boldsymbol{D}_{\boldsymbol{\Sigma}}\boldsymbol{D}_{t}\boldsymbol{z}_{i}\right\Vert _{2}^{2p}\right\} \leq\frac{C^{p}}{d^{p}}\mathbb{E}\left\{ \left\Vert \boldsymbol{z}_{i}\right\Vert _{2}^{p}\right\} \leq C^{p}p^{p}.
\]
This implies that $X_{1,i}$ is $C$-sub-exponential, and hence, by
Lemma \ref{lem:subgauss_subexp_properties}, for $\delta\in\left(0,1\right)$,
\[
\mathbb{P}\left\{ \left|A_{1}-\mathbb{E}\left\{ A_{1}\right\} \right|\geq\delta\right\} \leq Ce^{-C\delta^{2}M},
\]
which shows concentration for $A_{1}$.

Next we consider concentration of $A_{2}$. To do so, we bound its
``central'' $p$-moment, for an even number $p$, recalling the
random variables $\boldsymbol{a}$ and $\tilde{\boldsymbol{Z}}$ as
defined in the previous step and applying Jensen's inequality:
\begin{align*}
 & \mathbb{E}\left\{ \left|A_{2}-\frac{1}{4}\mathbb{E}_{\boldsymbol{u}}\left\{ \left\Vert \boldsymbol{D}_{t}^{2}\boldsymbol{u}\right\Vert _{2}^{2}\right\} -\frac{1}{2M}\left\Vert \boldsymbol{D}_{t}\right\Vert _{{\rm F}}^{2}\mathbb{E}_{\boldsymbol{u}}\left\{ \left\Vert \boldsymbol{D}_{t}\boldsymbol{u}\right\Vert _{2}^{2}\right\} \right|^{p}\right\} \\
 & \quad\leq\mathbb{E}_{\boldsymbol{u},\boldsymbol{Z}}\left\{ \left|\frac{1}{M^{2}}\left\Vert \boldsymbol{D}_{t}\boldsymbol{Z}^{\top}\sigma\left(\boldsymbol{Z}\boldsymbol{D}_{t}\boldsymbol{u}\right)\right\Vert _{2}^{2}-\frac{1}{4}\left\Vert \boldsymbol{D}_{t}^{2}\boldsymbol{u}\right\Vert _{2}^{2}-\frac{1}{2M}\left\Vert \boldsymbol{D}_{t}\right\Vert _{{\rm F}}^{2}\left\Vert \boldsymbol{D}_{t}\boldsymbol{u}\right\Vert _{2}^{2}\right|^{p}\right\} \\
 & \quad=\mathbb{E}_{\boldsymbol{u},\boldsymbol{a},\tilde{\boldsymbol{Z}}}\left\{ \left|\frac{1}{M^{2}}\left\Vert \boldsymbol{D}_{t}{\rm Proj}_{\boldsymbol{D}_{t}\boldsymbol{u}}^{\perp}\tilde{\boldsymbol{Z}}^{\top}\sigma\left(\boldsymbol{a}\right)+\frac{\boldsymbol{D}_{t}^{2}\boldsymbol{u}}{\left\Vert \boldsymbol{D}_{t}\boldsymbol{u}\right\Vert _{2}^{2}}\left\langle \boldsymbol{a},\sigma\left(\boldsymbol{a}\right)\right\rangle \right\Vert _{2}^{2}-\frac{1}{4}\left\Vert \boldsymbol{D}_{t}^{2}\boldsymbol{u}\right\Vert _{2}^{2}-\frac{1}{2M}\left\Vert \boldsymbol{D}_{t}\right\Vert _{{\rm F}}^{2}\left\Vert \boldsymbol{D}_{t}\boldsymbol{u}\right\Vert _{2}^{2}\right|^{p}\right\} \\
 & \quad\leq C^{p}\left(A_{2,1,p}+A_{2,2,p}+A_{2,3,p}+\sqrt{A_{2,2,p}A_{2,4,p}}+\sqrt{A_{2,2,p}A_{2,5,p}}+A_{2,6,p}\right),
\end{align*}
in which we define:
\begin{align*}
A_{2,1,p} & =\frac{1}{M^{2p}}\mathbb{E}_{\boldsymbol{u},\boldsymbol{a},\tilde{\boldsymbol{Z}}}\left\{ \left|\left\Vert \boldsymbol{D}_{t}\tilde{\boldsymbol{Z}}^{\top}\sigma\left(\boldsymbol{a}\right)\right\Vert _{2}^{2}-\frac{M}{2}\left\Vert \boldsymbol{D}_{t}\right\Vert _{{\rm F}}^{2}\left\Vert \boldsymbol{D}_{t}\boldsymbol{u}\right\Vert _{2}^{2}\right|^{p}\right\} ,\\
A_{2,2,p} & =\frac{1}{M^{2p}}\mathbb{E}_{\boldsymbol{u},\boldsymbol{a},\tilde{\boldsymbol{Z}}}\left\{ \left\Vert \boldsymbol{D}_{t}{\rm Proj}_{\boldsymbol{D}_{t}\boldsymbol{u}}\tilde{\boldsymbol{Z}}^{\top}\sigma\left(\boldsymbol{a}\right)\right\Vert _{2}^{2p}\right\} ,\\
A_{2,3,p} & =\frac{1}{M^{2p}}\mathbb{E}_{\boldsymbol{u},\boldsymbol{a},\tilde{\boldsymbol{Z}}}\left\{ \left|\left\Vert \frac{\boldsymbol{D}_{t}^{2}\boldsymbol{u}}{\left\Vert \boldsymbol{D}_{t}\boldsymbol{u}\right\Vert _{2}^{2}}\left\langle \boldsymbol{a},\sigma\left(\boldsymbol{a}\right)\right\rangle \right\Vert _{2}^{2}-\frac{M^{2}}{4}\left\Vert \boldsymbol{D}_{t}^{2}\boldsymbol{u}\right\Vert _{2}^{2}\right|^{p}\right\} ,\\
A_{2,4,p} & =\frac{1}{M^{2p}}\mathbb{E}_{\boldsymbol{u},\boldsymbol{a},\tilde{\boldsymbol{Z}}}\left\{ \left\Vert \boldsymbol{D}_{t}\tilde{\boldsymbol{Z}}^{\top}\sigma\left(\boldsymbol{a}\right)\right\Vert _{2}^{2p}\right\} ,\\
A_{2,5,p} & =\frac{1}{M^{2p}}\mathbb{E}_{\boldsymbol{u},\boldsymbol{a},\tilde{\boldsymbol{Z}}}\left\{ \left\Vert \frac{\boldsymbol{D}_{t}^{2}\boldsymbol{u}}{\left\Vert \boldsymbol{D}_{t}\boldsymbol{u}\right\Vert _{2}^{2}}\left\langle \boldsymbol{a},\sigma\left(\boldsymbol{a}\right)\right\rangle \right\Vert _{2}^{2p}\right\} ,\\
A_{2,6,p} & =\frac{1}{M^{2p}}\mathbb{E}_{\boldsymbol{u},\boldsymbol{a},\tilde{\boldsymbol{Z}}}\left\{ \left|\left\langle \boldsymbol{D}_{t}\tilde{\boldsymbol{Z}}^{\top}\sigma\left(\boldsymbol{a}\right),\frac{\boldsymbol{D}_{t}^{2}\boldsymbol{u}}{\left\Vert \boldsymbol{D}_{t}\boldsymbol{u}\right\Vert _{2}^{2}}\left\langle \boldsymbol{a},\sigma\left(\boldsymbol{a}\right)\right\rangle \right\rangle \right|^{p}\right\} .
\end{align*}
Here without loss of generality, we have defined $\left(\boldsymbol{u},\boldsymbol{a},\tilde{\boldsymbol{Z}}\right)$
on a joint space such that $\tilde{\boldsymbol{Z}}$ is independent
of $\boldsymbol{u}$ and $\boldsymbol{a}$, and $\boldsymbol{a}|\boldsymbol{u}\sim\mathsf{N}\left(\boldsymbol{0},\left\Vert \boldsymbol{D}_{t}\boldsymbol{u}\right\Vert _{2}^{2}\boldsymbol{I}_{M}\right)$.
For convenience, we shall also take $\boldsymbol{a}=\left\Vert \boldsymbol{D}_{t}\boldsymbol{u}\right\Vert _{2}\boldsymbol{g}$
for some $\boldsymbol{g}\sim\mathsf{N}\left(\boldsymbol{0},\boldsymbol{I}_{M}\right)$,
defined on the same joint space, independent of $\boldsymbol{u}$
and $\tilde{\boldsymbol{Z}}$. Below we shall let the $\left(i,j\right)$-th
entry and $i$-th row of $\tilde{\boldsymbol{Z}}$ be $\tilde{z}_{i,j}$
and $\tilde{\boldsymbol{z}}_{i}$ respectively, the $i$-th entry
of $\boldsymbol{a}$ (respectively, $\boldsymbol{u}$ and $\boldsymbol{g}$)
be $a_{i}$ (respectively, $u_{i}$ and $g_{i}$). We also note and
recall a few useful bounds:
\begin{itemize}
\item $\left\Vert \boldsymbol{D}_{\boldsymbol{\Sigma}}\right\Vert _{{\rm op}}\leq C$
and $\left\Vert \boldsymbol{D}_{t}\right\Vert _{{\rm op}}\leq\max_{i\in\left[d\right]}r_{i,t}\leq C$.
\item $\mathbb{E}\left\{ \left\Vert \boldsymbol{u}\right\Vert _{2}^{2p}\right\} \leq C^{p}d^{-p}\mathbb{E}\left\{ \left\Vert \sqrt{d}\boldsymbol{D}_{\boldsymbol{\Sigma}}^{-1}\boldsymbol{u}\right\Vert _{2}^{2p}\right\} \leq C^{p}\left(1+\left(p/d\right)^{p}\right)$,
since $\left\Vert \boldsymbol{D}_{\boldsymbol{\Sigma}}\right\Vert _{{\rm op}}\leq C$
and $\left\Vert \sqrt{d}\boldsymbol{D}_{\boldsymbol{\Sigma}}^{-1}\boldsymbol{u}\right\Vert _{2}^{2}$
is a $\chi^{2}$ random variable with degree of freedom $d$ and thus
has its $p$-moment bounded by $C^{p}\left(d^{p}+p^{p}\right)$.
\item $\mathbb{E}_{\tilde{\boldsymbol{Z}}}\left\{ \left\Vert \tilde{\boldsymbol{z}}_{i}\right\Vert _{2}^{2p}\right\} \leq C^{p}\left(d^{p}+p^{p}\right)$
and $\mathbb{E}\left\{ \sigma\left(g\right)^{2p}\right\} \leq\mathbb{E}\left\{ g^{2p}\right\} \leq C^{p}p^{p}$
for the same reason.
\item $\mathbb{E}\left\{ \left[g\sigma\left(g\right)\right]^{2p}\right\} \leq\mathbb{E}\left\{ g^{4p}\right\} \leq C^{p}p^{2p}$
by the above.
\end{itemize}
We proceed with several steps.

\paragraph*{Step 2.1 - Bounding $A_{2,1,p}$.}

We have:
\begin{align*}
A_{2,1,p} & =\frac{C^{p}}{M^{2p}}\mathbb{E}_{\boldsymbol{u},\boldsymbol{a},\tilde{\boldsymbol{Z}}}\left\{ \left|\sum_{i=1}^{M}\sum_{j=1}^{M}\left\langle \boldsymbol{D}_{t}\tilde{\boldsymbol{z}}_{i}\sigma\left(a_{i}\right),\boldsymbol{D}_{t}\tilde{\boldsymbol{z}}_{j}\sigma\left(a_{j}\right)\right\rangle -\frac{M}{2}\left\Vert \boldsymbol{D}_{t}\right\Vert _{{\rm F}}^{2}\left\Vert \boldsymbol{D}_{t}\boldsymbol{u}\right\Vert _{2}^{2}\right|^{p}\right\} \\
 & \leq\frac{C^{p}}{M^{2p}}\mathbb{E}_{\boldsymbol{u},\boldsymbol{a},\tilde{\boldsymbol{Z}}}\left\{ \left|\sum_{i=1}^{M}\left[\left\Vert \boldsymbol{D}_{t}\tilde{\boldsymbol{z}}_{i}\sigma\left(a_{i}\right)\right\Vert _{2}^{2}-\left\Vert \boldsymbol{D}_{t}\right\Vert _{{\rm F}}^{2}\sigma\left(a_{i}\right)^{2}\right]\right|^{p}\right\} \\
 & \qquad+\frac{C^{p}}{M^{2p}}\mathbb{E}_{\boldsymbol{u},\boldsymbol{a},\tilde{\boldsymbol{Z}}}\left\{ \left|\sum_{i=1}^{M}\left[\left\Vert \boldsymbol{D}_{t}\right\Vert _{{\rm F}}^{2}\sigma\left(a_{i}\right)^{2}-\frac{1}{2}\left\Vert \boldsymbol{D}_{t}\right\Vert _{{\rm F}}^{2}\left\Vert \boldsymbol{D}_{t}\boldsymbol{u}\right\Vert _{2}^{2}\right]\right|^{p}\right\} \\
 & \qquad+\frac{C^{p}}{M^{2p}}\mathbb{E}_{\boldsymbol{a},\tilde{\boldsymbol{Z}}}\left\{ \left|\sum_{i\neq j}\left\langle \boldsymbol{D}_{t}\tilde{\boldsymbol{z}}_{i}\sigma\left(a_{i}\right),\boldsymbol{D}_{t}\tilde{\boldsymbol{z}}_{j}\sigma\left(a_{j}\right)\right\rangle \right|^{p}\right\} \\
 & \equiv B_{1.1}+B_{1.2}+B_{1.3}.
\end{align*}
We then bound $B_{1.1}$, $B_{1.2}$ and $B_{1.3}$:
\begin{itemize}
\item To bound $B_{1.1}$, we rewrite:
\[
B_{1.1}=\frac{C^{p}}{M^{2p}}\mathbb{E}_{\boldsymbol{u},\boldsymbol{a},\tilde{\boldsymbol{Z}}}\left\{ \left|\sum_{i=1}^{M}\sum_{k=1}^{d}r_{k,t}^{2}\left(\tilde{z}_{i,k}^{2}-1\right)\sigma\left(a_{i}\right)^{2}\right|^{p}\right\} .
\]
Notice that $\left(r_{k,t}^{2}\left(\tilde{z}_{i,k}^{2}-1\right)\sigma\left(a_{i}\right)^{2}\right)_{i\leq M,\;k\leq d}$
are independent conditional on $\boldsymbol{a}$ and $\boldsymbol{u}$.
We also have $\mathbb{E}_{\tilde{\boldsymbol{Z}}}\left\{ r_{k,t}^{2}\left(\tilde{z}_{i,k}^{2}-1\right)\sigma\left(a_{i}\right)^{2}\middle|\boldsymbol{a},\boldsymbol{u}\right\} =0$,
and
\begin{align*}
 & \mathbb{E}_{\boldsymbol{u},\boldsymbol{a},\tilde{\boldsymbol{Z}}}\left\{ \left|r_{k,t}^{2}\left(\tilde{z}_{i,k}^{2}-1\right)\sigma\left(a_{i}\right)^{2}\right|^{p}\right\} =\mathbb{E}_{\boldsymbol{u},\boldsymbol{g},\tilde{\boldsymbol{Z}}}\left\{ r_{k,t}^{2p}\left\Vert \boldsymbol{D}_{t}\boldsymbol{u}\right\Vert _{2}^{2p}\sigma\left(g_{i}\right)^{2p}\left|\tilde{z}_{i,k}^{2}-1\right|^{p}\right\} \\
 & \quad\leq C^{p}\mathbb{E}_{\boldsymbol{u}}\left\{ \left\Vert \boldsymbol{u}\right\Vert _{2}^{2p}\right\} \mathbb{E}_{\boldsymbol{g}}\left\{ \sigma\left(g_{i}\right)^{2p}\right\} \mathbb{E}_{\tilde{\boldsymbol{Z}}}\left\{ \left|\tilde{z}_{i,k}^{2}-1\right|^{p}\right\} \\
 & \quad\leq C^{p}\left(1+\left(p/d\right)^{p}\right)p^{p}\left(p^{p}+1\right).
\end{align*}
By Lemma \ref{lem:bound_moment_sym},
\[
B_{1.1}\leq C^{p}p^{4p}\left(\frac{\sqrt{d}}{M^{3/2}}\right)^{p}.
\]
\item To bound $B_{1.2}$, notice that $\left(\left\Vert \boldsymbol{D}_{t}\right\Vert _{{\rm F}}^{2}\sigma\left(a_{i}\right)^{2}-\frac{1}{2}\left\Vert \boldsymbol{D}_{t}\right\Vert _{{\rm F}}^{2}\left\Vert \boldsymbol{D}_{t}\boldsymbol{u}\right\Vert _{2}^{2}\right)_{i\leq M}$
are independent conditional on $\boldsymbol{u}$, $\mathbb{E}_{\boldsymbol{a}}\left\{ \left\Vert \boldsymbol{D}_{t}\right\Vert _{{\rm F}}^{2}\sigma\left(a_{i}\right)^{2}-\frac{1}{2}\left\Vert \boldsymbol{D}_{t}\right\Vert _{{\rm F}}^{2}\left\Vert \boldsymbol{D}_{t}\boldsymbol{u}\right\Vert _{2}^{2}\middle|\boldsymbol{u}\right\} =0$,
and
\begin{align*}
 & \mathbb{E}_{\boldsymbol{u},\boldsymbol{a}}\left\{ \left|\left\Vert \boldsymbol{D}_{t}\right\Vert _{{\rm F}}^{2}\sigma\left(a_{i}\right)^{2}-\frac{1}{2}\left\Vert \boldsymbol{D}_{t}\right\Vert _{{\rm F}}^{2}\left\Vert \boldsymbol{D}_{t}\boldsymbol{u}\right\Vert _{2}^{2}\right|^{p}\right\} \leq C^{p}d^{p}\mathbb{E}_{\boldsymbol{u},\boldsymbol{a}}\left\{ \left|\sigma\left(a_{i}\right)^{2}-\frac{1}{2}\left\Vert \boldsymbol{D}_{t}\boldsymbol{u}\right\Vert _{2}^{2}\right|^{p}\right\} \\
 & \quad=C^{p}d^{p}\mathbb{E}_{\boldsymbol{g}}\left\{ \left|\sigma\left(g_{i}\right)^{2}-\frac{1}{2}\right|^{p}\right\} \mathbb{E}_{\boldsymbol{u}}\left\{ \left\Vert \boldsymbol{D}_{t}\boldsymbol{u}\right\Vert _{2}^{2p}\right\} \leq C^{p}d^{p}\mathbb{E}_{\boldsymbol{g}}\left\{ \sigma\left(g_{i}\right)^{2p}+1\right\} \mathbb{E}_{\boldsymbol{u}}\left\{ \left\Vert \boldsymbol{u}\right\Vert _{2}^{2p}\right\} \\
 & \quad\leq C^{p}d^{p}\left(p^{p}+1\right)\left(1+\left(p/d\right)^{p}\right).
\end{align*}
By Lemma \ref{lem:bound_moment_sym},
\[
B_{1.2}\leq C^{p}p^{3p}\left(\frac{d}{M^{3/2}}\right)^{p}.
\]
\item To bound $B_{1.3}$, let $\boldsymbol{B}_{1.3,i}=\boldsymbol{D}_{t}\tilde{\boldsymbol{z}}_{i}\sigma\left(a_{i}\right)$.
For any $k\leq d$ and $i\neq j$,
\[
\mathbb{E}_{\tilde{\boldsymbol{Z}}}\left\{ \left|\tilde{z}_{ik}\tilde{z}_{jk}\right|^{p}\right\} =\mathbb{E}_{\tilde{\boldsymbol{Z}}}\left\{ \left|\tilde{z}_{ik}\right|^{p}\right\} \mathbb{E}_{\tilde{\boldsymbol{Z}}}\left\{ \left|\tilde{z}_{jk}\right|^{p}\right\} \leq C^{p}p^{p}.
\]
So Lemma \ref{lem:bound_moment_sym} implies
\[
\mathbb{E}_{\tilde{\boldsymbol{Z}}}\left\{ \left|\left\langle \tilde{\boldsymbol{z}}_{i},\tilde{\boldsymbol{z}}_{j}\right\rangle \right|^{p}\right\} \leq C^{p}p^{2p}d^{p/2}.
\]
As such, we obtain for any $i\neq j$:
\begin{align*}
\mathbb{E}\left\{ \left|\left\langle \boldsymbol{B}_{1.3,i},\boldsymbol{B}_{1.3,j}\right\rangle \right|^{p}\right\}  & \leq C^{p}\mathbb{E}_{\boldsymbol{u},\boldsymbol{g},\tilde{\boldsymbol{Z}}}\left\{ \left\Vert \boldsymbol{D}_{t}\boldsymbol{u}\right\Vert _{2}^{2p}\left|\left\langle \tilde{\boldsymbol{z}}_{i},\tilde{\boldsymbol{z}}_{j}\right\rangle \sigma\left(g_{i}\right)\sigma\left(g_{j}\right)\right|^{p}\right\} \\
 & \leq C^{p}\mathbb{E}_{\boldsymbol{u}}\left\{ \left\Vert \boldsymbol{u}\right\Vert _{2}^{2p}\right\} \mathbb{E}_{\tilde{\boldsymbol{Z}}}\left\{ \left|\left\langle \tilde{\boldsymbol{z}}_{i},\tilde{\boldsymbol{z}}_{j}\right\rangle \right|^{p}\right\} \mathbb{E}_{\boldsymbol{g}}\left\{ \left|g_{i}\right|^{p}\right\} \mathbb{E}_{\boldsymbol{g}}\left\{ \left|g_{j}\right|^{p}\right\} \\
 & \leq C^{p}\left(1+\left(p/d\right)^{p}\right)p^{3p}d^{p/2}.
\end{align*}
To proceed, we follow an argument similar to the proof of Lemma \ref{lem:bound_moment_sym}.
We observe that $\left|\sum_{i\neq j}\left\langle \boldsymbol{B}_{1.3,i},\boldsymbol{B}_{1.3,j}\right\rangle \right|^{p}$
is a sum of terms of the form $H=\prod_{k=1}^{p}\left\langle \boldsymbol{b}_{k},\boldsymbol{b}_{2k}\right\rangle $,
where $\boldsymbol{b}_{k}\in\left\{ \boldsymbol{B}_{1.3,i}\right\} _{i\leq M}$
for $k=1,...,2p$ such that $\boldsymbol{b}_{k}\neq\boldsymbol{b}_{2k}$.
Suppose $H$ has $q_{i}$ repeats of $\boldsymbol{B}_{1.3,i}$, where
$\sum_{i=1}^{M}q_{i}=2p$. By Holder's inequality and the bound on
$\mathbb{E}\left\{ \left|\left\langle \boldsymbol{B}_{1.3,i},\boldsymbol{B}_{1.3,j}\right\rangle \right|^{p}\right\} $,
\begin{align*}
\mathbb{E}\left\{ \left|H\right|\right\}  & \leq\prod_{k=1}^{p}\mathbb{E}\left\{ \left|\left\langle \boldsymbol{b}_{k},\boldsymbol{b}_{2k}\right\rangle \right|^{p}\right\} ^{q_{i}/\left(2p\right)}\leq\prod_{k=1}^{p}\left[C^{p}\left(1+\left(p/d\right)^{p}\right)p^{3p}d^{p/2}\right]^{q_{i}/\left(2p\right)}\\
 & =C^{p}\left(1+\left(p/d\right)^{p}\right)p^{3p}d^{p/2}.
\end{align*}
Observe that $\mathbb{E}\left\{ H\right\} =0$ if there exists some
$i\in\left[M\right]$ such that $q_{i}$ is odd since $\tilde{\boldsymbol{z}}_{i}$
is symmetric. As proven in the proof of Lemma \ref{lem:bound_moment_sym},
the number of terms $H$ such that no $q_{i}$ is odd is upper-bounded
by $\left(2p\right)!M^{p}\leq4^{p}p^{2p}M^{p}$. Hence 
\[
B_{1.3}=\frac{C^{p}}{M^{2p}}\mathbb{E}\left\{ \left|\sum_{i\neq j}\left\langle \boldsymbol{B}_{1.3,i},\boldsymbol{B}_{1.3,j}\right\rangle \right|^{p}\right\} \leq C^{p}\left(1+\left(p/d\right)^{p}\right)p^{5p}\left(\frac{\sqrt{d}}{M}\right)^{p}.
\]
\end{itemize}
These bounds yield
\[
A_{2,1,p}\leq C^{p}p^{6p}\left(1+\frac{d}{M}\right)^{p}\frac{1}{M^{p/2}}.
\]

\paragraph*{Step 2.2 - Bounding $A_{2,2,p}$.}

We bound $A_{2,2,p}$:
\begin{align*}
A_{2,2,p} & \leq\frac{C^{p}}{M^{2p}}\mathbb{E}_{\boldsymbol{u},\boldsymbol{a},\tilde{\boldsymbol{Z}}}\left\{ \left\Vert \frac{\boldsymbol{D}_{t}^{2}\boldsymbol{u}\boldsymbol{u}^{\top}\boldsymbol{D}_{t}}{\left\Vert \boldsymbol{D}_{t}\boldsymbol{u}\right\Vert _{2}^{2}}\tilde{\boldsymbol{Z}}^{\top}\sigma\left(\boldsymbol{a}\right)\right\Vert _{2}^{2p}\right\} \leq\frac{C^{p}}{M^{2p}}\mathbb{E}_{\boldsymbol{u},\boldsymbol{a},\tilde{\boldsymbol{Z}}}\left\{ \left\langle \frac{\tilde{\boldsymbol{Z}}\boldsymbol{D}_{t}\boldsymbol{u}}{\left\Vert \boldsymbol{D}_{t}\boldsymbol{u}\right\Vert _{2}},\sigma\left(\boldsymbol{a}\right)\right\rangle ^{2p}\right\} \\
 & =\frac{C^{p}}{M^{2p}}\mathbb{E}_{\boldsymbol{u},\boldsymbol{g},\tilde{\boldsymbol{Z}}}\left\{ \left|\sum_{i=1}^{M}\left\langle \tilde{\boldsymbol{z}}_{i},\boldsymbol{D}_{t}\boldsymbol{u}\right\rangle \sigma\left(g_{i}\right)\right|^{2p}\right\} =\frac{C^{p}}{M^{2p}}\mathbb{E}_{\boldsymbol{u},\boldsymbol{g},\tilde{\boldsymbol{Z}}}\left\{ \left|\sum_{i=1}^{M}\sum_{k=1}^{d}r_{k,t}\tilde{z}_{i,k}u_{k}\sigma\left(g_{i}\right)\right|^{2p}\right\} .
\end{align*}
We have $\left(r_{k,t}\tilde{z}_{i,k}u_{k}\sigma\left(g_{i}\right)\right)_{i\leq M,\;k\leq d}$
are independent conditional on $\boldsymbol{u}$ and $\boldsymbol{g}$.
Furthermore we also have $\mathbb{E}_{\tilde{\boldsymbol{Z}}}\left\{ r_{k,t}\tilde{z}_{i,k}u_{k}\sigma\left(g_{i}\right)\middle|\boldsymbol{u},\boldsymbol{g}\right\} =0$
and, by recalling $\boldsymbol{u}\sim\mathsf{N}\left(\boldsymbol{0},\boldsymbol{D}_{\boldsymbol{\Sigma}}^{2}/d\right)$
with $\Sigma_{k}\leq C$ for all $k\in\left[d\right]$,
\[
\mathbb{E}_{\boldsymbol{u},\boldsymbol{g},\tilde{\boldsymbol{Z}}}\left\{ \left|r_{k,t}\tilde{z}_{i,k}u_{k}\sigma\left(g_{i}\right)\right|^{2p}\right\} \leq C^{p}\mathbb{E}_{\tilde{\boldsymbol{Z}}}\left\{ \tilde{z}_{i,k}^{2p}\right\} \mathbb{E}_{\boldsymbol{u}}\left\{ u_{k}^{2p}\right\} \mathbb{E}_{\boldsymbol{g}}\left\{ g_{i}^{2p}\right\} \leq\frac{C^{p}p^{3p}}{d^{p}}.
\]
Then by applying Lemma \ref{lem:bound_moment_sym}, we obtain:
\[
A_{2,2,p}\leq\frac{C^{p}}{M^{2p}}\frac{p^{5p}}{d^{p}}\left(Md\right)^{p}=\frac{C^{p}}{M^{p}}p^{5p}.
\]

\paragraph*{Step 2.3 - Bounding $A_{2,3,p}$.}

Note that
\[
\mathbb{E}_{\boldsymbol{u}}\left\{ \left\Vert \boldsymbol{D}_{t}^{2}\boldsymbol{u}\right\Vert _{2}^{2p}\right\} \leq C^{p}\mathbb{E}_{\boldsymbol{u}}\left\{ \left\Vert \boldsymbol{u}\right\Vert _{2}^{2p}\right\} \leq C^{p}\left(1+\left(p/d\right)^{p}\right).
\]
We then have a bound on $A_{2,3,p}$:
\begin{align*}
A_{2,3,p} & =\frac{1}{M^{2p}}\mathbb{E}_{\boldsymbol{u},\boldsymbol{g}}\left\{ \left|\left\Vert \boldsymbol{D}_{t}^{2}\boldsymbol{u}\right\Vert _{2}^{2}\left\langle \boldsymbol{g},\sigma\left(\boldsymbol{g}\right)\right\rangle ^{2}-\frac{M^{2}}{4}\left\Vert \boldsymbol{D}_{t}^{2}\boldsymbol{u}\right\Vert _{2}^{2}\right|^{p}\right\} \\
 & =\frac{1}{M^{2p}}\mathbb{E}_{\boldsymbol{u},\boldsymbol{g}}\left\{ \left\Vert \boldsymbol{D}_{t}^{2}\boldsymbol{u}\right\Vert _{2}^{2p}\left|\sum_{i=1}^{M}g_{i}^{2}\sigma\left(g_{i}\right)^{2}+\sum_{i\neq j\leq M}g_{i}g_{j}\sigma\left(g_{i}\right)\sigma\left(g_{j}\right)-\frac{M^{2}}{4}\right|^{p}\right\} \\
 & \leq\frac{C^{p}}{M^{2p}}\left(1+\left(p/d\right)^{p}\right)\mathbb{E}_{\boldsymbol{g}}\left\{ \left|\sum_{i=1}^{M}g_{i}^{2}\sigma\left(g_{i}\right)^{2}+\sum_{i\neq j\leq M}g_{i}g_{j}\sigma\left(g_{i}\right)\sigma\left(g_{j}\right)-\frac{M^{2}}{4}\right|^{p}\right\} \\
 & \leq\frac{C^{p}}{M^{2p}}\left(1+\left(p/d\right)^{p}\right)\Bigg(\mathbb{E}_{\boldsymbol{g}}\left\{ \left|\sum_{i=1}^{M}\left(g_{i}^{2}\sigma\left(g_{i}\right)^{2}-1.5\right)\right|^{p}\right\} +\mathbb{E}_{\boldsymbol{g}}\left\{ \left|\frac{M-1}{2}\sum_{i=1}^{M}\left(g_{i}\sigma\left(g_{i}\right)-0.5\right)\right|^{p}\right\} \\
 & \qquad+\mathbb{E}_{\boldsymbol{g}}\left\{ \left|\sum_{i=1}^{M}\sum_{j\leq M,\;j\neq i}g_{i}\sigma\left(g_{i}\right)\left(g_{j}\sigma\left(g_{j}\right)-0.5\right)\right|^{p}\right\} +M^{p}\Bigg)\\
 & \equiv C^{p}\left(1+\left(p/d\right)^{p}\right)\left(B_{3.1}+B_{3.2}+B_{3.3}+M^{-p}\right).
\end{align*}
We bound each term:
\begin{itemize}
\item To bound $B_{3.1}$, notice that $\left(g_{i}^{2}\sigma\left(g_{i}\right)^{2}-1.5\right)_{i\leq M}$
are independent, $\mathbb{E}_{\boldsymbol{g}}\left\{ g_{i}^{2}\sigma\left(g_{i}\right)^{2}-1.5\right\} =0$
and
\[
\mathbb{E}_{\boldsymbol{g}}\left\{ \left|g_{i}^{2}\sigma\left(g_{i}\right)^{2}-1.5\right|^{p}\right\} \leq\mathbb{E}_{\boldsymbol{g}}\left\{ g_{i}^{2p}\sigma\left(g_{i}\right)^{2p}+1.5^{p}\right\} \leq C^{p}p^{2p}.
\]
By Lemma \ref{lem:bound_moment_sym},
\[
B_{3.1}\leq\frac{C^{p}p^{3p}}{M^{1.5p}}.
\]
\item To bound $B_{3.2}$, notice that $\left(g_{i}\sigma\left(g_{i}\right)-0.5\right)_{i\leq M}$
are independent, $\mathbb{E}_{\boldsymbol{g}}\left\{ g_{i}\sigma\left(g_{i}\right)-0.5\right\} =0$,
and
\[
\mathbb{E}_{\boldsymbol{g}}\left\{ \left|g_{i}\sigma\left(g_{i}\right)-0.5\right|^{p}\right\} \leq C^{p}\left(\mathbb{E}_{\boldsymbol{g}}\left\{ \left|g_{i}\sigma\left(g_{i}\right)\right|^{p}\right\} +1\right)\leq C^{p}p^{p}.
\]
By Lemma \ref{lem:bound_moment_sym},
\[
B_{3.2}\leq\frac{C^{p}p^{2p}}{M^{p/2}}.
\]
\item To bound $B_{3.3}$, notice that for a fixed $i$, $\left(g_{i}\sigma\left(g_{i}\right)\left(g_{j}\sigma\left(g_{j}\right)-0.5\right)\right)_{j\leq M,\;j\neq i}$
are independent conditional on $g_{i}$, $\mathbb{E}_{\boldsymbol{g}}\left\{ g_{i}\sigma\left(g_{i}\right)\left(g_{j}\sigma\left(g_{j}\right)-0.5\right)\middle|g_{i}\right\} =0$
and
\[
\mathbb{E}_{\boldsymbol{g}}\left\{ \left|g_{i}\sigma\left(g_{i}\right)\left(g_{j}\sigma\left(g_{j}\right)-0.5\right)\right|^{p}\right\} \leq C^{p}\mathbb{E}_{\boldsymbol{g}}\left\{ \left|g_{i}\sigma\left(g_{i}\right)\right|^{p}\right\} \left(\mathbb{E}_{\boldsymbol{g}}\left\{ \left|g_{j}\sigma\left(g_{j}\right)\right|^{p}\right\} +1\right)\leq C^{p}p^{2p}.
\]
By Lemma \ref{lem:bound_moment_sym},
\[
B_{3.3}\leq\frac{C^{p}}{M^{p}}\sum_{i=1}^{M}\mathbb{E}_{\boldsymbol{g}}\left\{ \left|\sum_{j\leq M,\;j\neq i}g_{i}\sigma\left(g_{i}\right)\left(g_{j}\sigma\left(g_{j}\right)-0.5\right)\right|^{p}\right\} \leq\frac{C^{p}p^{3p}}{M^{p/2-1}}.
\]
\end{itemize}
We thus obtain:
\[
A_{2,3,p}\leq\frac{C^{p}p^{4p}}{M^{p/2-1}}.
\]

\paragraph*{Step 2.4 - Bounding $A_{2,4,p}$.}

We bound $A_{2,4,p}$:
\begin{align*}
A_{2,4,p} & \leq\frac{C^{p}}{M^{2p}}\mathbb{E}_{\boldsymbol{u},\boldsymbol{a},\tilde{\boldsymbol{Z}}}\left\{ \left\Vert \tilde{\boldsymbol{Z}}^{\top}\sigma\left(\boldsymbol{a}\right)\right\Vert _{2}^{2p}\right\} =\frac{C^{p}}{M^{2p}}\mathbb{E}_{\boldsymbol{u},\boldsymbol{g},\tilde{\boldsymbol{Z}}}\left\{ \left\Vert \boldsymbol{D}_{t}\boldsymbol{u}\right\Vert _{2}^{2p}\left\Vert \sum_{i=1}^{M}\tilde{\boldsymbol{z}}_{i}\sigma\left(g_{i}\right)\right\Vert _{2}^{2p}\right\} \\
 & \leq\frac{C^{p}}{M^{2p}}\mathbb{E}_{\boldsymbol{u},\boldsymbol{g},\tilde{\boldsymbol{Z}}}\left\{ \left\Vert \boldsymbol{u}\right\Vert _{2}^{2p}\left\Vert \sum_{i=1}^{M}\tilde{\boldsymbol{z}}_{i}\sigma\left(g_{i}\right)\right\Vert _{2}^{2p}\right\} \leq\frac{C^{p}}{M^{2p}}\left(1+\left(p/d\right)^{p}\right)\mathbb{E}_{\boldsymbol{g},\tilde{\boldsymbol{Z}}}\left\{ \left\Vert \sum_{i=1}^{M}\tilde{\boldsymbol{z}}_{i}\sigma\left(g_{i}\right)\right\Vert _{2}^{2p}\right\} .
\end{align*}
Notice that $\left(\tilde{\boldsymbol{z}}_{i}\sigma\left(g_{i}\right)\right)_{i\leq M}$
are independent, $\mathbb{E}_{\boldsymbol{g},\tilde{\boldsymbol{Z}}}\left\{ \tilde{\boldsymbol{z}}_{i}\sigma\left(g_{i}\right)\right\} =\boldsymbol{0}$,
and
\[
\mathbb{E}_{\boldsymbol{g},\tilde{\boldsymbol{Z}}}\left\{ \left\Vert \tilde{\boldsymbol{z}}_{i}\sigma\left(g_{i}\right)\right\Vert _{2}^{2p}\right\} =\mathbb{E}_{\tilde{\boldsymbol{Z}}}\left\{ \left\Vert \tilde{\boldsymbol{z}}_{i}\right\Vert _{2}^{2p}\right\} \mathbb{E}_{\boldsymbol{g}}\left\{ \sigma\left(g_{i}\right)^{2p}\right\} \leq C^{p}\left(d^{p}+p^{p}\right)p^{p},
\]
which yields, by Lemma \ref{lem:bound_moment_sym},
\[
A_{2,4,p}\leq\frac{C^{p}}{M^{p}}\left(d^{p}+p^{p}\right)p^{4p}.
\]

\paragraph*{Step 2.5 - Bounding $A_{2,5,p}$.}

We have:
\begin{align*}
A_{2,5,p} & =\frac{1}{M^{2p}}\mathbb{E}_{\boldsymbol{u},\boldsymbol{g}}\left\{ \left\Vert \boldsymbol{D}_{t}^{2}\boldsymbol{u}\right\Vert _{2}^{2p}\left\langle \boldsymbol{g},\sigma\left(\boldsymbol{g}\right)\right\rangle ^{2p}\right\} \leq\frac{C^{p}}{M^{2p}}\mathbb{E}_{\boldsymbol{u}}\left\{ \left\Vert \boldsymbol{u}\right\Vert _{2}^{2p}\right\} \mathbb{E}_{\boldsymbol{g}}\left\{ \left\Vert \boldsymbol{g}\right\Vert _{2}^{4p}\right\} \\
 & \leq\frac{C^{p}}{M^{2p}}\left(1+\left(p/d\right)^{p}\right)\left(M^{2p}+p^{2p}\right)\leq C^{p}p^{3p}.
\end{align*}

\paragraph*{Step 2.6 - Bounding $A_{2,6,p}$.}

We have:
\begin{align*}
A_{2,6,p} & =\frac{1}{M^{2p}}\mathbb{E}_{\boldsymbol{u},\boldsymbol{g},\tilde{\boldsymbol{Z}}}\left\{ \left|\sum_{i=1}^{M}\left\langle \tilde{\boldsymbol{z}}_{i},\boldsymbol{D}_{t}^{3}\boldsymbol{u}\right\rangle \left\Vert \boldsymbol{D}_{t}\boldsymbol{u}\right\Vert _{2}\sigma\left(g_{i}\right)\left\langle \boldsymbol{g},\sigma\left(\boldsymbol{g}\right)\right\rangle \right|^{p}\right\} \\
 & \leq C^{p}\mathbb{E}_{\boldsymbol{u},\boldsymbol{g},\tilde{\boldsymbol{Z}}}\left\{ \left|\frac{1}{M}\sum_{i=1}^{M}\left\langle \tilde{\boldsymbol{z}}_{i},\boldsymbol{D}_{t}^{3}\boldsymbol{u}\right\rangle \left\Vert \boldsymbol{D}_{t}\boldsymbol{u}\right\Vert _{2}\sigma\left(g_{i}\right)\right|^{p}\right\} \\
 & \qquad+\frac{C^{p}}{M^{2p}}\mathbb{E}_{\boldsymbol{u},\boldsymbol{g},\tilde{\boldsymbol{Z}}}\left\{ \left|\sum_{i=1}^{M}\left\langle \tilde{\boldsymbol{z}}_{i},\boldsymbol{D}_{t}^{3}\boldsymbol{u}\right\rangle \left\Vert \boldsymbol{D}_{t}\boldsymbol{u}\right\Vert _{2}\sigma\left(g_{i}\right)^{2}g_{i}\right|^{p}\right\} \\
 & \qquad+\frac{C^{p}}{M^{2p}}\mathbb{E}_{\boldsymbol{u},\boldsymbol{g},\tilde{\boldsymbol{Z}}}\left\{ \left|\sum_{i=1}^{M}\left\langle \tilde{\boldsymbol{z}}_{i},\boldsymbol{D}_{t}^{3}\boldsymbol{u}\right\rangle \left\Vert \boldsymbol{D}_{t}\boldsymbol{u}\right\Vert _{2}\sigma\left(g_{i}\right)\sum_{j\neq i,\;j\leq M}\left(g_{j}\sigma\left(g_{j}\right)-0.5\right)\right|^{p}\right\} \\
 & \equiv B_{6.1}+B_{6.2}+B_{6.3}.
\end{align*}
We bound each of the terms:
\begin{itemize}
\item To bound $B_{6.1}$, we have for a fixed $i$, $\left(\tilde{z}_{ij}u_{j}\right)_{j\leq d}$
are independent, $\mathbb{E}_{\boldsymbol{u},\tilde{\boldsymbol{Z}}}\left\{ \tilde{z}_{ij}u_{j}\right\} =0$
and $\mathbb{E}_{\boldsymbol{u},\tilde{\boldsymbol{Z}}}\left\{ \left|\sqrt{d}\tilde{z}_{ij}u_{j}\right|^{p}\right\} \leq C^{p}p^{p}$.
We thus get from Lemma \ref{lem:bound_moment_sym}:
\[
\mathbb{E}_{\boldsymbol{u},\tilde{\boldsymbol{Z}}}\left\{ \left|\left\langle \tilde{\boldsymbol{z}}_{i},\boldsymbol{u}\right\rangle \right|^{p}\right\} =d^{p/2}\mathbb{E}_{\tilde{\boldsymbol{Z}}}\left\{ \left|\frac{1}{d}\sum_{j=1}^{d}\sqrt{d}\tilde{z}_{ij}u_{j}\right|^{p}\right\} \leq C^{p}p^{2p}.
\]
Observe that $\left(\left\langle \tilde{\boldsymbol{z}}_{i},\boldsymbol{D}_{t}^{3}\boldsymbol{u}\right\rangle \left\Vert \boldsymbol{D}_{t}\boldsymbol{u}\right\Vert _{2}\sigma\left(g_{i}\right)\right)_{i\leq M}$
are independent conditional on $\boldsymbol{u}$. We also have $\mathbb{E}_{\boldsymbol{g},\tilde{\boldsymbol{Z}}}\left\{ \left\langle \tilde{\boldsymbol{z}}_{i},\boldsymbol{D}_{t}^{3}\boldsymbol{u}\right\rangle \left\Vert \boldsymbol{D}_{t}\boldsymbol{u}\right\Vert _{2}\sigma\left(g_{i}\right)\middle|\boldsymbol{u}\right\} =0$
and
\begin{align*}
\mathbb{E}_{\boldsymbol{u},\boldsymbol{g},\tilde{\boldsymbol{Z}}}\left\{ \left|\left\langle \tilde{\boldsymbol{z}}_{i},\boldsymbol{D}_{t}^{3}\boldsymbol{u}\right\rangle \left\Vert \boldsymbol{D}_{t}\boldsymbol{u}\right\Vert _{2}\sigma\left(g_{i}\right)\right|^{p}\right\}  & \leq C^{p}\sqrt{\mathbb{E}_{\boldsymbol{u},\tilde{\boldsymbol{Z}}}\left\{ \left|\left\langle \tilde{\boldsymbol{z}}_{i},\boldsymbol{D}_{t}^{3}\boldsymbol{u}\right\rangle \right|^{2p}\right\} \mathbb{E}_{\boldsymbol{u}}\left\{ \left\Vert \boldsymbol{u}\right\Vert _{2}^{2p}\right\} }\mathbb{E}_{\boldsymbol{g}}\left\{ \left|\sigma\left(g_{i}\right)\right|^{p}\right\} \\
 & =C^{p}\sqrt{\mathbb{E}_{\boldsymbol{u}}\left\{ \left\Vert \boldsymbol{D}_{t}^{3}\boldsymbol{u}\right\Vert _{2}^{2p}\right\} \mathbb{E}_{g}\left\{ \left|g\right|^{2p}\right\} \mathbb{E}_{\boldsymbol{u}}\left\{ \left\Vert \boldsymbol{u}\right\Vert _{2}^{2p}\right\} }\mathbb{E}_{\boldsymbol{g}}\left\{ \left|\sigma\left(g_{i}\right)\right|^{p}\right\} \\
 & \leq C^{p}\sqrt{\left(1+\left(p/d\right)^{p}\right)^{2}p^{p}}p^{p/2}\leq C^{p}p^{2p}.
\end{align*}
Then by Lemma \ref{lem:bound_moment_sym},
\[
B_{6.1}\leq\frac{C^{p}p^{3p}}{M^{p/2}}.
\]
\item We bound $B_{6.2}$:
\begin{align*}
B_{6.2} & \leq\frac{C^{p}}{M^{p}}\sum_{i=1}^{M}\mathbb{E}_{\boldsymbol{u},\boldsymbol{g},\tilde{\boldsymbol{Z}}}\left\{ \left|\left\langle \tilde{\boldsymbol{z}}_{i},\boldsymbol{D}_{t}^{3}\boldsymbol{u}\right\rangle \left\Vert \boldsymbol{D}_{t}\boldsymbol{u}\right\Vert _{2}\sigma\left(g_{i}\right)^{2}g_{i}\right|^{p}\right\} \\
 & \leq\frac{C^{p}}{M^{p}}\sum_{i=1}^{M}\sqrt{\mathbb{E}_{\boldsymbol{u},\tilde{\boldsymbol{Z}}}\left\{ \left|\left\langle \tilde{\boldsymbol{z}}_{i},\boldsymbol{D}_{t}^{3}\boldsymbol{u}\right\rangle \right|^{2p}\right\} \mathbb{E}_{\boldsymbol{u}}\left\{ \left\Vert \boldsymbol{u}\right\Vert _{2}^{2p}\right\} }\mathbb{E}_{\boldsymbol{g}}\left\{ \left|g_{i}\right|^{3p}\right\} \\
 & \leq\frac{C^{p}}{M^{p-1}}\left(1+\left(p/d\right)^{p}\right)p^{2p}\leq\frac{C^{p}}{M^{p-1}}p^{3p}.
\end{align*}
\item To bound $B_{6.3}$, let $B_{6.3,i,j}=\left\langle \tilde{\boldsymbol{z}}_{i},\boldsymbol{D}_{t}^{3}\boldsymbol{u}\right\rangle \left\Vert \boldsymbol{D}_{t}\boldsymbol{u}\right\Vert _{2}\sigma\left(g_{i}\right)\left(g_{j}\sigma\left(g_{j}\right)-0.5\right)$.
We have, for a fixed $i$, $\left(B_{6.3,i,j}\right)_{j\neq i,\;j\leq M}$
are independent conditional on $\tilde{\boldsymbol{Z}}$, $\boldsymbol{u}$
and $g_{i}$, and $\mathbb{E}\left\{ B_{6.3,i,j}\middle|\tilde{\boldsymbol{Z}},\boldsymbol{u},g_{i}\right\} =0$.
In addition,
\[
\mathbb{E}\left\{ \left|B_{6.3,i,j}\right|^{p}\right\} \leq C^{p}\sqrt{\mathbb{E}_{\boldsymbol{u},\tilde{\boldsymbol{Z}}}\left\{ \left|\left\langle \tilde{\boldsymbol{z}}_{i},\boldsymbol{D}_{t}^{3}\boldsymbol{u}\right\rangle \right|^{2p}\right\} \mathbb{E}_{\boldsymbol{u}}\left\{ \left\Vert \boldsymbol{u}\right\Vert _{2}^{2p}\right\} }\mathbb{E}_{\boldsymbol{g}}\left\{ \left|g_{i}\right|^{p}\right\} \left(\mathbb{E}_{\boldsymbol{g}}\left\{ \left|g_{j}\right|^{2p}\right\} +1\right)\leq C^{p}p^{3p}.
\]
Then by Lemma \ref{lem:bound_moment_sym},
\[
B_{6.3}\leq\frac{C^{p}}{M^{p}}\sum_{i=1}^{M}\mathbb{E}_{\boldsymbol{u},\boldsymbol{g},\tilde{\boldsymbol{Z}}}\left\{ \left|\sum_{j\neq i,\;j\leq M}B_{6.3,i,j}\right|^{p}\right\} \leq\frac{C^{p}p^{4p}}{M^{p/2-1}}.
\]
\end{itemize}
Combining the bounds, recalling $p$ is even, we thus get:
\[
A_{2,6,p}\leq\frac{C^{p}p^{4p}}{M^{p/2-1}}.
\]

\paragraph*{Step 2.7 - Finishing the concentration of ${\cal R}\left(\bar{\nu}_{M}^{t}\right)$.}

Collecting all the bounds in the previous steps, we then obtain:
\[
\mathbb{E}\left\{ \left|A_{2}-\frac{1}{4}\mathbb{E}_{\boldsymbol{u}}\left\{ \left\Vert \boldsymbol{D}_{t}^{2}\boldsymbol{u}\right\Vert _{2}^{2}\right\} -\frac{1}{2M}\left\Vert \boldsymbol{D}_{t}\right\Vert _{{\rm F}}^{2}\mathbb{E}_{\boldsymbol{u}}\left\{ \left\Vert \boldsymbol{D}_{t}\boldsymbol{u}\right\Vert _{2}^{2}\right\} \right|^{p}\right\} \leq\frac{C^{p}p^{6p}\left(1+d/M\right)^{p}}{M^{p/2-1}}.
\]
Recall that 
\begin{align*}
\mathbb{E}\left\{ A_{2}\right\}  & =\frac{1}{4d}\left\Vert \boldsymbol{D}_{t}^{2}\boldsymbol{D}_{\boldsymbol{\Sigma}}\right\Vert _{{\rm F}}^{2}+\frac{1}{2dM}\left\Vert \boldsymbol{D}_{t}\right\Vert _{{\rm F}}^{2}\left\Vert \boldsymbol{D}_{t}\boldsymbol{D}_{\boldsymbol{\Sigma}}\right\Vert _{{\rm F}}^{2}+O\left(\frac{\sqrt{d}}{M}\right)\\
 & =\frac{1}{4}\mathbb{E}_{\boldsymbol{u}}\left\{ \left\Vert \boldsymbol{D}_{t}^{2}\boldsymbol{u}\right\Vert _{2}^{2}\right\} +\frac{1}{2M}\left\Vert \boldsymbol{D}_{t}\right\Vert _{{\rm F}}^{2}\mathbb{E}_{\boldsymbol{u}}\left\{ \left\Vert \boldsymbol{D}_{t}\boldsymbol{u}\right\Vert _{2}^{2}\right\} +O\left(\frac{\sqrt{d}}{M}\right).
\end{align*}
We thus get
\[
\mathbb{E}\left\{ \left|A_{2}-\mathbb{E}\left\{ A_{2}\right\} \right|^{p}\right\} \leq\frac{C^{p}p^{6p}\left(1+d/M\right)^{p}}{M^{p/2-1}}.
\]
This bound applies to even $p$ and consequently odd $p$, since for
odd $p$:
\[
\mathbb{E}\left\{ \left|A_{2}-\mathbb{E}\left\{ A_{2}\right\} \right|^{p}\right\} \leq\mathbb{E}\left\{ \left|A_{2}-\mathbb{E}\left\{ A_{2}\right\} \right|^{p+1}\right\} ^{p/\left(p+1\right)}\leq\frac{C^{p}p^{6p}\left(1+d/M\right)^{p}}{M^{p/2-p/\left(p+1\right)}}\leq\frac{C^{p}p^{6p}\left(1+d/M\right)^{p}}{M^{p/2-1}}.
\]
With the same argument, for an arbitrary integer $m\geq1$, we have
for any $p\leq m$, 
\[
\mathbb{E}\left\{ \left|A_{2}-\mathbb{E}\left\{ A_{2}\right\} \right|^{p}\right\} \leq\mathbb{E}\left\{ \left|A_{2}-\mathbb{E}\left\{ A_{2}\right\} \right|^{m}\right\} ^{p/m}\leq\frac{C^{p}p^{6p}\left(1+d/M\right)^{p}}{M^{p/2-p/m}},
\]
and therefore,
\[
\max_{p\leq m,\;p\in\mathbb{N}_{>0}}\frac{1}{p}\mathbb{E}\left\{ \left|A_{2}-\mathbb{E}\left\{ A_{2}\right\} \right|^{p}\right\} ^{1/\left(6p\right)}\leq\max_{p\leq m,\;p\in\mathbb{N}_{>0}}\frac{C\left(1+d/M\right)^{1/6}}{M^{1/12-1/\left(6mp\right)}}=\frac{C\left(1+d/M\right)^{1/6}}{M^{1/12-1/\left(6m\right)}}.
\]
We also have:
\[
\sup_{p\geq m}\frac{1}{p}\mathbb{E}\left\{ \left|A_{2}-\mathbb{E}\left\{ A_{2}\right\} \right|^{p}\right\} ^{1/\left(6p\right)}\leq\sup_{p\geq m}\frac{C\left(1+d/M\right)^{1/6}}{M^{1/12-1/\left(6p\right)}}\leq\frac{C\left(1+d/M\right)^{1/6}}{M^{1/12-1/\left(6m\right)}}.
\]
Therefore,
\[
\sup_{p\in\mathbb{N}_{>0}}\frac{1}{p}\mathbb{E}\left\{ \left|A_{2}-\mathbb{E}\left\{ A_{2}\right\} \right|^{p}\right\} ^{1/\left(6p\right)}\leq\lim_{m\to\infty}\frac{C\left(1+d/M\right)^{1/6}}{M^{1/12-1/\left(6m\right)}}=\frac{C\left(1+d/M\right)^{1/6}}{M^{1/12}}.
\]
Hence $\left|A_{2}-\mathbb{E}\left\{ A_{2}\right\} \right|^{1/6}$
is sub-exponential with $\left\Vert \left|A_{2}-\mathbb{E}\left\{ A_{2}\right\} \right|^{1/6}\right\Vert _{\psi_{1}}\leq C\left(1+d/M\right)^{1/6}M^{-1/12}$,
which yields the following concentration bound by Lemma \ref{lem:subgauss_subexp_properties}:
\[
\mathbb{P}\left\{ \left|A_{2}-\mathbb{E}\left\{ A_{2}\right\} \right|\geq\delta\right\} \leq C\exp\left(-C\delta^{1/6}\left(1+d/M\right)^{-1/6}M^{1/12}\right).
\]
for any $\delta\in\left(0,1\right)$. Combining with the concentration
of $A_{1}$, we get:
\[
\mathbb{P}\left\{ \left|{\cal R}\left(\bar{\nu}_{M}^{t}\right)-\mathbb{E}\left\{ {\cal R}\left(\bar{\nu}_{M}^{t}\right)\right\} \right|\geq\delta\right\} \leq C\exp\left(-C\delta^{1/6}\left(1+d/M\right)^{-1/6}M^{1/12}\right).
\]
This completes the proof.
\end{proof}

\subsection{Setting with ReLU activation: Proofs of auxiliary results\label{subsec:Proof_relu_act_aux}}
\begin{prop}
\label{prop:1st_setting_grow_bound}Consider setting \ref{enu:ReLU_setting}.
The following hold:
\begin{align*}
\left\Vert \nabla V\left(\boldsymbol{\theta}\right)\right\Vert _{2} & \leq C\left\Vert \boldsymbol{\theta}\right\Vert _{2},\\
\left\Vert \nabla V\left(\boldsymbol{\theta}_{1}\right)-\nabla V\left(\boldsymbol{\theta}_{2}\right)\right\Vert _{2} & \leq C\left\Vert \boldsymbol{\theta}_{1}-\boldsymbol{\theta}_{2}\right\Vert _{2},\\
\left\Vert \nabla_{1}W\left(\boldsymbol{\theta};\rho\right)\right\Vert _{2} & \leq C\left\Vert \boldsymbol{\theta}\right\Vert _{2},\\
\left\Vert \nabla_{1}W\left(\boldsymbol{\theta}_{1};\rho\right)-\nabla_{1}W\left(\boldsymbol{\theta}_{2};\rho\right)\right\Vert _{2} & \leq C\left\Vert \boldsymbol{\theta}_{1}-\boldsymbol{\theta}_{2}\right\Vert _{2},\\
\left\Vert \nabla_{1}U\left(\boldsymbol{\theta},\boldsymbol{\theta}'\right)\right\Vert _{2} & \leq C\kappa^{2}\left\Vert \boldsymbol{\theta}\right\Vert _{2}\left\Vert \boldsymbol{\theta}'\right\Vert _{2}^{2},
\end{align*}
where $\rho=\mathsf{N}\left(0,\boldsymbol{R}{\rm diag}\left(r_{1}^{2},...,r_{d}^{2}\right)\boldsymbol{R}^{\top}/d\right)$
with $\max_{i\leq d}r_{i}^{2}\leq C$. Furthermore, $\left|V\left(\boldsymbol{0}\right)\right|=\left|U\left(\boldsymbol{0},\boldsymbol{0}\right)\right|=\left|W\left(\boldsymbol{0};\rho\right)\right|=0$
for any $\rho$.
\end{prop}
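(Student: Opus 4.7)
The plan is to exploit ReLU homogeneity together with the joint Gaussianity and co-diagonality of the data $\boldsymbol{x}$ and the measure $\rho$ (both covariances share the eigenbasis $\boldsymbol{R}$), so that $V$ and $W$ reduce to quadratic forms in $\boldsymbol{\theta}$, while $U$ is handled directly through the closed-form arc-cosine kernel for ReLU. All five gradient bounds and the three vanishing-at-zero identities will then follow from explicit formulas.

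For $V$, I would use the ReLU identity $u\sigma(u)=\sigma(u)^2$ together with the Gaussian moment $\mathbb{E}\{\sigma(g)^2\}=\frac{1}{2}\mathrm{Var}(g)$, applied to the centered Gaussian $\langle\kappa\boldsymbol{\theta},\boldsymbol{x}\rangle$ of variance $\|\boldsymbol{\Sigma}\boldsymbol{\theta}\|_{2}^{2}$ (the scaling $\kappa^{2}/d=1$ is what makes this dimension-free), to obtain $V(\boldsymbol{\theta})=-\frac{1}{2}\|\boldsymbol{\Sigma}\boldsymbol{\theta}\|_{2}^{2}+\lambda\|\boldsymbol{\theta}\|_{2}^{2}$. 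Then $\nabla V(\boldsymbol{\theta})=(-\boldsymbol{\Sigma}^{2}+2\lambda\boldsymbol{I})\boldsymbol{\theta}$ is linear, and the asserted growth and Lipschitz bounds follow from $\Sigma_{1}\le C$ and $\lambda\le C$.

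For $W$, I would apply Stein's lemma twice. First, for fixed $\boldsymbol{x}$, Stein's applied to $\boldsymbol{\theta}'\sim\rho$ combined with $\mathbb{E}\{\sigma'(g)\}=\frac{1}{2}$ for centered Gaussian $g$ collapses the inner integral to
\[
\int\kappa\boldsymbol{\theta}'\sigma(\langle\kappa\boldsymbol{\theta}',\boldsymbol{x}\rangle)\,\rho(d\boldsymbol{\theta}')=\tfrac{1}{2}\boldsymbol{A}\boldsymbol{x},\qquad\boldsymbol{A}:=\boldsymbol{R}\,\mathrm{diag}(r_{i}^{2})\boldsymbol{R}^{\top}.
\]
Plugging this into $W(\boldsymbol{\theta};\rho)$ and applying Stein's once more on the outer Gaussian $\boldsymbol{x}$ will give the quadratic form $W(\boldsymbol{\theta};\rho)=\frac{1}{4}\boldsymbol{\theta}^{\top}\boldsymbol{\Sigma}^{2}\boldsymbol{A}\boldsymbol{\theta}$, which is well-defined because $\boldsymbol{\Sigma}^{2}$ and $\boldsymbol{A}$ commute (both are diagonalized by $\boldsymbol{R}$). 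Hence $\nabla_{1}W(\boldsymbol{\theta};\rho)=\frac{1}{2}\boldsymbol{\Sigma}^{2}\boldsymbol{A}\boldsymbol{\theta}$ is linear with operator norm at most $\Sigma_{1}^{2}\max_{i}r_{i}^{2}\le C$, which yields both bounds simultaneously.

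For $\nabla_{1}U$, I would factor $U(\boldsymbol{\theta},\boldsymbol{\theta}')=d\,\langle\boldsymbol{\theta},\boldsymbol{\theta}'\rangle\,h(\boldsymbol{\Sigma}\boldsymbol{\theta},\boldsymbol{\Sigma}\boldsymbol{\theta}')$ with $h(\boldsymbol{u},\boldsymbol{u}'):=\mathbb{E}_{\boldsymbol{z}\sim\mathsf{N}(\boldsymbol{0},\boldsymbol{I})}\{\sigma(\langle\boldsymbol{u},\boldsymbol{z}\rangle)\sigma(\langle\boldsymbol{u}',\boldsymbol{z}\rangle)\}$, and invoke the arc-cosine kernel formula $h=\frac{1}{2\pi}\bigl(\sqrt{\|\boldsymbol{u}\|_{2}^{2}\|\boldsymbol{u}'\|_{2}^{2}-\langle\boldsymbol{u},\boldsymbol{u}'\rangle^{2}}+(\pi-\phi)\langle\boldsymbol{u},\boldsymbol{u}'\rangle\bigr)$ with $\phi=\arccos(\langle\boldsymbol{u},\boldsymbol{u}'\rangle/(\|\boldsymbol{u}\|_{2}\|\boldsymbol{u}'\|_{2}))$. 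A direct differentiation, in which the singular contributions from $\arccos$ and the square root cancel, will yield
\[
\nabla_{\boldsymbol{u}}h(\boldsymbol{u},\boldsymbol{u}')=\frac{1}{2\pi}\!\left(\frac{\boldsymbol{u}}{\|\boldsymbol{u}\|_{2}^{2}}\sqrt{\|\boldsymbol{u}\|_{2}^{2}\|\boldsymbol{u}'\|_{2}^{2}-\langle\boldsymbol{u},\boldsymbol{u}'\rangle^{2}}+(\pi-\phi)\,\boldsymbol{u}'\right),
\]
from which $\|\nabla_{\boldsymbol{u}}h\|_{2}\le C\|\boldsymbol{u}'\|_{2}$ and $|h|\le C\|\boldsymbol{u}\|_{2}\|\boldsymbol{u}'\|_{2}$ follow. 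Applying the product and chain rules to $U$ together with $\|\boldsymbol{\Sigma}\|_{\mathrm{op}}\le C$ then gives the desired $\|\nabla_{1}U\|_{2}\le Cd\|\boldsymbol{\theta}\|_{2}\|\boldsymbol{\theta}'\|_{2}^{2}=C\kappa^{2}\|\boldsymbol{\theta}\|_{2}\|\boldsymbol{\theta}'\|_{2}^{2}$. The three vanishing-at-zero identities are immediate from the closed forms above. The main technical point is this last step: $h$ fails to be smooth at $\boldsymbol{u}=\boldsymbol{0}$ and at collinear $(\boldsymbol{u},\boldsymbol{u}')$, so one must verify that the cancellation above is well-defined by working on the open set where $\boldsymbol{u},\boldsymbol{u}'$ are linearly independent and extending by continuity.
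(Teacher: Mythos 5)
Your treatment of $V$ and $W$ is essentially identical to the paper's: Stein's lemma twice for $W$, and the quadratic-form computation for $V$ (the ReLU identity $u\sigma(u)=\sigma(u)^2$ is the same moment calculation the paper performs when it writes $\mathbb{E}_{\mathcal P}\{\langle\kappa\boldsymbol{\theta},\boldsymbol{x}\rangle\sigma(\langle\kappa\boldsymbol{\theta},\boldsymbol{x}\rangle)\}=\tfrac12\|\boldsymbol{\Sigma}\boldsymbol{\theta}\|_2^2$). Your gradient formulas $\nabla V(\boldsymbol{\theta})=(-\boldsymbol{\Sigma}^2+2\lambda\boldsymbol{I})\boldsymbol{\theta}$ and $\nabla_1 W(\boldsymbol{\theta};\rho)=\tfrac12\boldsymbol{\Sigma}^2\boldsymbol{A}\boldsymbol{\theta}$ agree with the paper's, and the conclusions follow.

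For $\nabla_1 U$ you take a genuinely different route. The paper differentiates under the expectation sign, obtaining
\[
\nabla_1 U(\boldsymbol{\theta},\boldsymbol{\theta}')=\mathbb{E}_{\mathcal P}\{\kappa^2\boldsymbol{\theta}'\sigma(\langle\kappa\boldsymbol{\theta},\boldsymbol{x}\rangle)\sigma(\langle\kappa\boldsymbol{\theta}',\boldsymbol{x}\rangle)\}+\mathbb{E}_{\mathcal P}\{\kappa^3\langle\boldsymbol{\theta},\boldsymbol{\theta}'\rangle\sigma'(\langle\kappa\boldsymbol{\theta},\boldsymbol{x}\rangle)\sigma(\langle\kappa\boldsymbol{\theta}',\boldsymbol{x}\rangle)\boldsymbol{x}\},
\]
and bounds the two terms probabilistically: Cauchy--Schwarz/H\"older for the first, and for the second a self-referential trick that bounds $\|\boldsymbol{v}\|_2^2$ by a multiple of $\|\boldsymbol{v}\|_2$. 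You instead factor $U(\boldsymbol{\theta},\boldsymbol{\theta}')=d\langle\boldsymbol{\theta},\boldsymbol{\theta}'\rangle h(\boldsymbol{\Sigma}\boldsymbol{\theta},\boldsymbol{\Sigma}\boldsymbol{\theta}')$ and invoke the degree-one arc-cosine kernel. I verified your gradient formula: the two divergent pieces from $\nabla_{\boldsymbol{u}}\sqrt{A}$ and from $B\nabla_{\boldsymbol{u}}(\pi-\phi)$ do combine to the harmless $\tfrac{\sqrt{A}}{\|\boldsymbol{u}\|_2^2}\boldsymbol{u}$, giving $\|\nabla_{\boldsymbol{u}}h\|_2\le C\|\boldsymbol{u}'\|_2$ and $|h|\le C\|\boldsymbol{u}\|_2\|\boldsymbol{u}'\|_2$, from which the chain rule yields the asserted $\kappa^2$ bound. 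Your approach is more algebraic and yields an exact closed form for $h$, which could be reused elsewhere; the paper's approach avoids the non-smoothness discussion entirely by never materializing a closed form, which is why it does not need the continuity-extension argument you flag. Both are correct; yours carries slightly more bookkeeping at collinear and zero configurations, and you are right that the degeneracy must be handled (though note that in the final product $\nabla_1 U$ the offending piece is multiplied by $\langle\boldsymbol{\theta},\boldsymbol{\theta}'\rangle$, so no genuine singularity survives).

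One small remark on the vanishing-at-zero identities: your justification ``immediate from the closed forms'' is fine for $V$ and $U$, but for $W(\boldsymbol{0};\rho)=0$ for \emph{arbitrary} $\rho$ you should appeal to $U(\boldsymbol{0},\cdot)\equiv 0$ (which your factorization gives directly), not to the Gaussian-specific quadratic form you derived for $W$.
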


\begin{proof}
With the given $\rho$, we have from Stein's lemma:
\[
\int\kappa\boldsymbol{\theta}\sigma\left(\left\langle \kappa\boldsymbol{\theta},\boldsymbol{x}\right\rangle \right)\rho\left({\rm d}\boldsymbol{\theta}\right)=\frac{1}{2}\boldsymbol{R}{\rm diag}\left(r_{1}^{2},...,r_{d}^{2}\right)\boldsymbol{R}^{\top}\boldsymbol{x}.
\]
This yields, again by Stein's lemma,
\begin{align*}
W\left(\boldsymbol{\theta};\rho\right) & =\mathbb{E}_{{\cal P}}\left\{ \left\langle \kappa\boldsymbol{\theta}\sigma\left(\left\langle \kappa\boldsymbol{\theta},\boldsymbol{x}\right\rangle \right),\int\kappa\boldsymbol{\theta}'\sigma\left(\left\langle \kappa\boldsymbol{\theta}',\boldsymbol{x}\right\rangle \right)\rho\left({\rm d}\boldsymbol{\theta}'\right)\right\rangle \right\} \\
 & =\mathbb{E}_{{\cal P}}\left\{ \left\langle \kappa\boldsymbol{\theta}\sigma\left(\left\langle \kappa\boldsymbol{\theta},\boldsymbol{x}\right\rangle \right),\frac{1}{2}\boldsymbol{R}{\rm diag}\left(r_{1}^{2},...,r_{d}^{2}\right)\boldsymbol{R}^{\top}\boldsymbol{x}\right\rangle \right\} \\
 & =\frac{1}{4}\left\Vert {\rm diag}\left(r_{1}\Sigma_{1},...,r_{d}\Sigma_{d}\right)\boldsymbol{R}^{\top}\boldsymbol{\theta}\right\Vert _{2}^{2}.
\end{align*}
One can also compute $V\left(\boldsymbol{\theta}\right)$:
\[
\mathbb{E}_{{\cal P}}\left\{ \left\langle \kappa\boldsymbol{\theta},\boldsymbol{x}\right\rangle \sigma\left(\left\langle \kappa\boldsymbol{\theta},\boldsymbol{x}\right\rangle \right)\right\} =\frac{1}{2}\left\Vert {\rm diag}\left(\Sigma_{1},...,\Sigma_{d}\right)\boldsymbol{R}^{\top}\boldsymbol{\theta}\right\Vert _{2}^{2},
\]
which yields
\[
V\left(\boldsymbol{\theta}\right)=-\frac{1}{2}\left\Vert {\rm diag}\left(\Sigma_{1},...,\Sigma_{d}\right)\boldsymbol{R}^{\top}\boldsymbol{\theta}\right\Vert _{2}^{2}+\lambda\left\Vert \boldsymbol{\theta}\right\Vert _{2}^{2}=-\frac{1}{2}\left\Vert \boldsymbol{\Sigma}\boldsymbol{\theta}\right\Vert _{2}^{2}+\lambda\left\Vert \boldsymbol{\theta}\right\Vert _{2}^{2}.
\]
Therefore:
\begin{align*}
\nabla V\left(\boldsymbol{\theta}\right) & =-\boldsymbol{\Sigma}^{2}\boldsymbol{\theta}+2\lambda\boldsymbol{\theta},\\
\nabla_{1}W\left(\boldsymbol{\theta};\rho\right) & =\frac{1}{2}\boldsymbol{R}{\rm diag}\left(r_{1}^{2}\Sigma_{1}^{2},...,r_{d}^{2}\Sigma_{d}^{2}\right)\boldsymbol{R}^{\top}\boldsymbol{\theta}.
\end{align*}
Since $\left\Vert \boldsymbol{\Sigma}\right\Vert _{{\rm op}}\leq C$,
one easily deduces the claims on $\nabla V$ and $\nabla_{1}W$.

Next we consider $U$:
\[
\nabla_{1}U\left(\boldsymbol{\theta},\boldsymbol{\theta}'\right)=\mathbb{E}_{{\cal P}}\left\{ \kappa^{2}\boldsymbol{\theta}'\sigma\left(\left\langle \kappa\boldsymbol{\theta},\boldsymbol{x}\right\rangle \right)\sigma\left(\left\langle \kappa\boldsymbol{\theta}',\boldsymbol{x}\right\rangle \right)\right\} +\mathbb{E}_{{\cal P}}\left\{ \kappa^{3}\left\langle \boldsymbol{\theta},\boldsymbol{\theta}'\right\rangle \sigma'\left(\left\langle \kappa\boldsymbol{\theta},\boldsymbol{x}\right\rangle \right)\sigma\left(\left\langle \kappa\boldsymbol{\theta}',\boldsymbol{x}\right\rangle \right)\boldsymbol{x}\right\} .
\]
We give a bound on $\left\Vert \nabla_{1}U\left(\boldsymbol{\theta},\boldsymbol{\theta}'\right)\right\Vert _{2}$.
For the first term:
\begin{align*}
\left\Vert \mathbb{E}_{{\cal P}}\left\{ \kappa^{2}\boldsymbol{\theta}'\sigma\left(\left\langle \kappa\boldsymbol{\theta},\boldsymbol{x}\right\rangle \right)\sigma\left(\left\langle \kappa\boldsymbol{\theta}',\boldsymbol{x}\right\rangle \right)\right\} \right\Vert _{2} & \leq\kappa^{2}\sqrt{\mathbb{E}_{{\cal P}}\left\{ \sigma\left(\left\langle \kappa\boldsymbol{\theta},\boldsymbol{x}\right\rangle \right)^{2}\right\} \mathbb{E}_{{\cal P}}\left\{ \sigma\left(\left\langle \kappa\boldsymbol{\theta}',\boldsymbol{x}\right\rangle \right)^{2}\right\} }\left\Vert \boldsymbol{\theta}'\right\Vert \\
 & =\kappa^{2}\sqrt{\mathbb{E}\left\{ \sigma\left(\left\Vert \boldsymbol{\Sigma}\boldsymbol{\theta}\right\Vert _{2}g\right)^{2}\right\} \mathbb{E}\left\{ \sigma\left(\left\Vert \boldsymbol{\Sigma}\boldsymbol{\theta}'\right\Vert _{2}g\right)^{2}\right\} }\left\Vert \boldsymbol{\theta}'\right\Vert \\
 & \leq C\kappa^{2}\left\Vert \boldsymbol{\theta}\right\Vert _{2}\left\Vert \boldsymbol{\theta}'\right\Vert _{2}^{2}.
\end{align*}
Denoting the second term $\boldsymbol{v}$, we have:
\begin{align*}
\left\Vert \boldsymbol{v}\right\Vert _{2}^{2} & =\mathbb{E}_{{\cal P}}\left\{ \kappa^{2}\left\langle \boldsymbol{\theta},\boldsymbol{\theta}'\right\rangle \sigma'\left(\left\langle \kappa\boldsymbol{\theta},\boldsymbol{x}\right\rangle \right)\sigma\left(\left\langle \kappa\boldsymbol{\theta}',\boldsymbol{x}\right\rangle \right)\left\langle \kappa\boldsymbol{v},\boldsymbol{x}\right\rangle \right\} \\
 & \leq\kappa^{2}\left\Vert \boldsymbol{\theta}\right\Vert _{2}\left\Vert \boldsymbol{\theta}'\right\Vert _{2}\left(\mathbb{P}\left\{ \left\langle \kappa\boldsymbol{\theta},\boldsymbol{x}\right\rangle \geq0\right\} \mathbb{E}_{{\cal P}}\left\{ \sigma\left(\left\langle \kappa\boldsymbol{\theta}',\boldsymbol{x}\right\rangle \right)^{3}\right\} \mathbb{E}_{{\cal P}}\left\{ \left|\left\langle \kappa\boldsymbol{v},\boldsymbol{x}\right\rangle \right|^{3}\right\} \right)^{1/3}\\
 & =\kappa^{2}\left\Vert \boldsymbol{\theta}\right\Vert _{2}\left\Vert \boldsymbol{\theta}'\right\Vert _{2}\left(\frac{1}{2}\mathbb{E}\left\{ \sigma\left(\left\Vert \boldsymbol{\Sigma}\boldsymbol{\theta}'\right\Vert _{2}g\right)^{3}\right\} \mathbb{E}\left\{ \left|\left\Vert \boldsymbol{\Sigma}\boldsymbol{v}\right\Vert _{2}g\right|^{3}\right\} \right)^{1/3}\\
 & \leq C\kappa^{2}\left\Vert \boldsymbol{\theta}\right\Vert _{2}\left\Vert \boldsymbol{\theta}'\right\Vert _{2}^{2}\left\Vert \boldsymbol{v}\right\Vert _{2},
\end{align*}
which then yields
\[
\left\Vert \nabla_{1}U\left(\boldsymbol{\theta},\boldsymbol{\theta}'\right)\right\Vert _{2}\leq C\kappa^{2}\left\Vert \boldsymbol{\theta}\right\Vert _{2}\left\Vert \boldsymbol{\theta}'\right\Vert _{2}^{2}.
\]

Lastly, it is easy to see that $V\left(\boldsymbol{0}\right)=U\left(\boldsymbol{0},\boldsymbol{0}\right)=W\left(\boldsymbol{0};\rho\right)=0$
for any $\rho$.
\end{proof}
\begin{prop}
\label{prop:1st_setting_grow_bound_Wrho}Consider setting \ref{enu:ReLU_setting}.
Then:
\[
\left\Vert \nabla_{1}W\left(\boldsymbol{\theta};\rho_{1}\right)-\nabla_{1}W\left(\boldsymbol{\theta};\rho_{2}\right)\right\Vert _{2}\leq C\left\Vert \boldsymbol{\theta}\right\Vert _{2}\max_{i\in\left[d\right]}\left|r_{i,1}-r_{i,2}\right|
\]
where $\rho_{j}=\mathsf{N}\left(0,\boldsymbol{R}{\rm diag}\left(r_{1,j}^{2},...,r_{d,j}^{2}\right)\boldsymbol{R}^{\top}/d\right)$,
$j=1,2$, with $\max_{i\leq d,\;j\in\left\{ 1,2\right\} }r_{i,j}^{2}\leq C$.
\end{prop}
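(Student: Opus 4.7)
\textbf{Proof plan for Proposition \ref{prop:1st_setting_grow_bound_Wrho}.}

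The plan is to leverage the explicit formula for $\nabla_1 W(\boldsymbol{\theta};\rho)$ that was computed in the proof of Proposition \ref{prop:1st_setting_grow_bound}. Recall from there that, for any $\rho = \mathsf{N}(0, \boldsymbol{R}\,\mathrm{diag}(r_1^2,\ldots,r_d^2)\boldsymbol{R}^\top/d)$, Stein's lemma gives
\[
\nabla_1 W(\boldsymbol{\theta};\rho) = \tfrac{1}{2}\boldsymbol{R}\,\mathrm{diag}(r_1^2 \Sigma_1^2,\ldots,r_d^2 \Sigma_d^2)\boldsymbol{R}^\top \boldsymbol{\theta}.
\]
Applying this to $\rho_1$ and $\rho_2$ and subtracting, the difference reduces to multiplication by the diagonal matrix $\tfrac{1}{2}\boldsymbol{R}\,\mathrm{diag}((r_{i,1}^2 - r_{i,2}^2)\Sigma_i^2)_i\boldsymbol{R}^\top$.

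Next I would bound the operator norm of this matrix: since $\boldsymbol{R}$ is orthogonal, the operator norm equals $\tfrac{1}{2}\max_{i\in[d]} |r_{i,1}^2 - r_{i,2}^2|\Sigma_i^2$. Factoring $r_{i,1}^2 - r_{i,2}^2 = (r_{i,1}+r_{i,2})(r_{i,1}-r_{i,2})$ and using the assumed uniform bound $\max_{i,j} r_{i,j}^2 \leq C$ together with $\Sigma_i \leq \Sigma_1 \leq C$ from setting \ref{enu:ReLU_setting}, we get
\[
\tfrac{1}{2}\max_{i\in[d]} |r_{i,1}^2 - r_{i,2}^2|\Sigma_i^2 \;\leq\; C \max_{i\in[d]} |r_{i,1} - r_{i,2}|.
\]
The claim then follows by applying this operator norm bound to $\boldsymbol{\theta}$.

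There is no real obstacle here; the work was already done when deriving the closed form for $\nabla_1 W$ via Stein's lemma in Proposition \ref{prop:1st_setting_grow_bound}. The only small calculation is the elementary algebraic identity $r_{i,1}^2 - r_{i,2}^2 = (r_{i,1}+r_{i,2})(r_{i,1}-r_{i,2})$ combined with the boundedness of $r_{i,j}$ and $\Sigma_i$.
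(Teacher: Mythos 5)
Your proof is correct and takes essentially the same approach as the paper: both rely on the explicit formula $\nabla_1 W(\boldsymbol{\theta};\rho_j) = \tfrac{1}{2}\boldsymbol{R}\,\mathrm{diag}(r_{1,j}^2\Sigma_1^2,\ldots,r_{d,j}^2\Sigma_d^2)\boldsymbol{R}^\top\boldsymbol{\theta}$ derived via Stein's lemma in Proposition \ref{prop:1st_setting_grow_bound}, with the paper leaving the operator-norm bound and the factoring $r_{i,1}^2 - r_{i,2}^2 = (r_{i,1}+r_{i,2})(r_{i,1}-r_{i,2})$ implicit where you spell them out.
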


\begin{proof}
The claim follows easily from the following formula given in the proof
of Proposition \ref{prop:1st_setting_grow_bound}:
\[
\nabla_{1}W\left(\boldsymbol{\theta};\rho_{j}\right)=\frac{1}{2}\boldsymbol{R}{\rm diag}\left(r_{1,j}^{2}\Sigma_{1}^{2},...,r_{d,j}^{2}\Sigma_{d}^{2}\right)\boldsymbol{R}^{\top}\boldsymbol{\theta},\qquad j=1,2,
\]
along with the fact $\left\Vert \boldsymbol{\Sigma}\right\Vert _{{\rm op}}\leq C$.
\end{proof}
\begin{prop}
\label{prop:1st_setting_op_bound}Consider setting \ref{enu:ReLU_setting}.
We have:
\begin{align*}
\left\Vert \nabla_{121}^{3}U\left[\boldsymbol{\zeta},\boldsymbol{\theta}\right]\right\Vert _{{\rm op}},\left\Vert \nabla_{122}^{3}U\left[\boldsymbol{\theta},\boldsymbol{\zeta}\right]\right\Vert _{{\rm op}} & \leq C\frac{\kappa^{2}}{\kappa_{*}}\left\Vert \boldsymbol{\theta}\right\Vert _{2},\\
\left\Vert \nabla_{12}^{2}U\left(\boldsymbol{\theta},\boldsymbol{\theta}'\right)\right\Vert _{{\rm op}} & \leq C\kappa^{2}\left\Vert \boldsymbol{\theta}\right\Vert _{2}\left\Vert \boldsymbol{\theta}'\right\Vert _{2},\\
\left\Vert \nabla_{11}^{2}U\left(\boldsymbol{\theta},\boldsymbol{\theta}'\right)\right\Vert _{{\rm op}} & \leq C\frac{\kappa^{2}}{\kappa_{*}}\left\Vert \boldsymbol{\theta}'\right\Vert _{2}^{2},
\end{align*}
for any $\boldsymbol{\zeta},\boldsymbol{\theta},\boldsymbol{\theta}'\in\mathbb{R}^{d}$.
\end{prop}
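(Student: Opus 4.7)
The plan is to expand the required derivatives by the product rule using the factorization $U(\boldsymbol{\theta},\boldsymbol{\theta}')=\kappa^{2}\langle\boldsymbol{\theta},\boldsymbol{\theta}'\rangle\,A(\boldsymbol{\theta},\boldsymbol{\theta}')$ with $A(\boldsymbol{\theta},\boldsymbol{\theta}')=\mathbb{E}_{\mathcal{P}}\{\sigma(u)\sigma(v)\}$, where $u=\langle\kappa\boldsymbol{\theta},\boldsymbol{x}\rangle$ and $v=\langle\kappa\boldsymbol{\theta}',\boldsymbol{x}\rangle$ are jointly centred Gaussian with $\mathrm{Var}(u)=\|\boldsymbol{\Sigma}\boldsymbol{\theta}\|_{2}^{2}\ge C^{2}\kappa_{*}^{2}\|\boldsymbol{\theta}\|_{2}^{2}$, and analogously for $v$. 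Each differentiation in the first (resp.\ second) variable either hits the prefactor $\kappa^{2}\langle\boldsymbol{\theta},\boldsymbol{\theta}'\rangle$ (contributing $\lesssim\kappa^{2}\|\boldsymbol{\theta}'\|_{2}$, resp.\ $\kappa^{2}\|\boldsymbol{\theta}\|_{2}$) or hits $\sigma(u)$ (resp.\ $\sigma(v)$) and lowers its order by one, introducing a factor $\kappa\boldsymbol{x}$. Since $\sigma(u)$ depends only on $\boldsymbol{\theta}$ and $\sigma(v)$ only on $\boldsymbol{\theta}'$, the cross derivatives $\nabla_{12}^{2}U,\nabla_{121}^{3}U,\nabla_{122}^{3}U$ never force three $\boldsymbol{\theta}$- or $\boldsymbol{\theta}'$-hits onto a single $\sigma$, so $\sigma'''$ never arises and only terms involving $\sigma(u),\sigma'(u),\sigma''(u)$ and $\sigma(v),\sigma'(v),\sigma''(v)$ need be bounded.

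For $\nabla_{12}^{2}U$, no $\sigma''$ ever appears (each $\sigma$ is differentiated at most once), and every term is of the form $\kappa^{k}\mathbb{E}\{\sigma^{(i)}(u)\sigma^{(j)}(v)\langle\boldsymbol{a},\boldsymbol{x}\rangle^{p}\langle\boldsymbol{c},\boldsymbol{x}\rangle^{q}\}$ for unit vectors $\boldsymbol{a},\boldsymbol{c}$ and $p,q\in\{0,1\}$. Using $|\sigma(r)|\le|r|$, $|\sigma'(r)|\le 1$, Cauchy--Schwarz, $\mathbb{E}\{\langle\boldsymbol{a},\boldsymbol{x}\rangle^{2}\}\le\|\boldsymbol{\Sigma}\|_{\mathrm{op}}^{2}/d\le C/d$, and $\kappa^{2}=d$, each such term collapses to at most $C\kappa^{2}\|\boldsymbol{\theta}\|_{2}\|\boldsymbol{\theta}'\|_{2}$, yielding the claim. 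For the other three inequalities, exactly one term in each branch contains $\sigma''(u)$ or $\sigma''(v)$ (the one in which a single $\sigma$ is hit twice). We interpret this via the identity $\mathbb{E}\{\sigma''(u)g(\boldsymbol{x})\}=\phi_{u}(0)\,\mathbb{E}\{g(\boldsymbol{x})\mid u=0\}$, justified by mollifying $\sigma$ and passing to the limit using the bounded Gaussian density. The marginal density satisfies $\phi_{u}(0)=1/(\sqrt{2\pi}\|\boldsymbol{\Sigma}\boldsymbol{\theta}\|_{2})\le C/(\kappa_{*}\|\boldsymbol{\theta}\|_{2})$, and this is the sole source of the $1/\kappa_{*}$ factor. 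Conditional on $u=0$, $\boldsymbol{x}$ remains centred Gaussian with covariance dominated by $\boldsymbol{\Sigma}^{2}/d$, so each $\langle\boldsymbol{a},\boldsymbol{x}\rangle$ still contributes $\lesssim 1/\sqrt{d}$ in $L^{p}$, and $|\sigma(v)|\lesssim\|\boldsymbol{\theta}'\|_{2}$ in $L^{2}$ conditionally.

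The mechanism that makes the third-order bounds come out clean is a cancellation that must be tracked carefully: in every branch of the product rule where $\sigma''(u)$ appears inside $\nabla_{121}^{3}U[\boldsymbol{\zeta},\boldsymbol{\theta}]$, the $\sigma''(u)$ factor is accompanied either by the scalar $\langle\boldsymbol{\zeta},\boldsymbol{\theta}\rangle$ (bounded by $\|\boldsymbol{\zeta}\|_{2}\|\boldsymbol{\theta}\|_{2}$) or by an outer-product prefactor $\boldsymbol{\zeta}$ (of norm $\|\boldsymbol{\zeta}\|_{2}$); in both cases the $\|\boldsymbol{\zeta}\|_{2}$ cancels the $1/\|\boldsymbol{\zeta}\|_{2}$ from $\phi_{u}(0)$, leaving $C\kappa^{2}/\kappa_{*}$ times the correct power of $\|\boldsymbol{\theta}\|_{2}$ and nothing else. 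An analogous cancellation handles $\nabla_{122}^{3}U[\boldsymbol{\theta},\boldsymbol{\zeta}]$ (with the roles of the two variables swapped), and for $\nabla_{11}^{2}U$ the $\sigma''(u)$ term is multiplied by $\kappa^{2}\langle\boldsymbol{\theta},\boldsymbol{\theta}'\rangle$ whose $\|\boldsymbol{\theta}\|_{2}$ absorbs $1/\|\boldsymbol{\theta}\|_{2}$ in $\phi_{u}(0)$, leaving $C\kappa^{2}\|\boldsymbol{\theta}'\|_{2}^{2}/\kappa_{*}$ after an additional Cauchy--Schwarz on the conditional expectation.

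The main obstacle is purely bookkeeping: the product rule has many branches for a third-order derivative of a product of three factors (the scalar $\kappa^{2}\langle\boldsymbol{\theta},\boldsymbol{\theta}'\rangle$, the function $\sigma(u)$, and the function $\sigma(v)$), and every branch must be classified as either \emph{smooth} (no $\sigma''$, bounded by Cauchy--Schwarz exactly as in the $\nabla_{12}^{2}U$ estimate) or \emph{singular} ($\sigma''(u)$ or $\sigma''(v)$ present, bounded via the conditional-density identity with the crucial cancellation described above). A conceptually cleaner alternative, which avoids writing $\sigma''$ altogether, is to apply Stein's lemma to $\boldsymbol{x}$ once, rewriting every Jacobian of $\nabla_{1}U$ directly in terms of the bounded quantities $\mathbb{P}(u\ge 0,v\ge 0)$, $\mathbb{E}\{\sigma'(u)\sigma(v)\}$, $\phi_{u}(0)$, and a conditional moment; this reduces the whole proposition to a finite (and short) list of such atomic building blocks, each estimated as above.
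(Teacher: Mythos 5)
Your proposal is correct and follows essentially the same strategy as the paper's proof: write out the derivatives via the product rule on the factorization $U = \kappa^2\langle\boldsymbol{\theta},\boldsymbol{\theta}'\rangle\,\mathbb{E}\{\sigma(u)\sigma(v)\}$, bound the $\sigma''$-free terms by Cauchy–Schwarz with $|\sigma(r)|\le|r|$ and $|\sigma'|\le 1$, and treat the $\sigma''$-terms via the fact that $\sigma''=\delta$ for ReLU concentrates the Gaussian on $\{u=0\}$ and introduces the factor $1/(\sqrt{2\pi}\|\boldsymbol{\Sigma}\boldsymbol{\zeta}\|_2)\le C/(\kappa_*\|\boldsymbol{\zeta}\|_2)$, whose $1/\|\boldsymbol{\zeta}\|_2$ is cancelled by the $\|\boldsymbol{\zeta}\|_2$ carried by the outer-product or inner-product factors in exactly the way you describe. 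The paper performs the delta-function reduction by splitting $\boldsymbol{z}$ along and orthogonal to $\boldsymbol{\Sigma}\boldsymbol{\zeta}$ rather than phrasing it as a conditional expectation, but these are the same computation; and the paper does spell out every one of the $A_i$, $B_i$, $F_i$, $H_i$ branches that you correctly classify as either smooth or singular, including verifying that $\sigma'''$ never appears in any of $\nabla_{12}^2U$, $\nabla_{11}^2U$, $\nabla_{121}^3U$, $\nabla_{122}^3U$.
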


\begin{proof}
We have:
\begin{align*}
\nabla_{12}^{2}U\left(\boldsymbol{\theta},\boldsymbol{\theta}'\right) & =\kappa^{2}\mathbb{E}_{{\cal P}}\left\{ \sigma\left(\left\langle \kappa\boldsymbol{\theta},\boldsymbol{x}\right\rangle \right)\sigma\left(\left\langle \kappa\boldsymbol{\theta}',\boldsymbol{x}\right\rangle \right)\right\} \boldsymbol{I}_{d}\\
 & \qquad+\kappa^{3}\mathbb{E}_{{\cal P}}\left\{ \sigma\left(\left\langle \kappa\boldsymbol{\theta},\boldsymbol{x}\right\rangle \right)\sigma'\left(\left\langle \kappa\boldsymbol{\theta}',\boldsymbol{x}\right\rangle \right)\boldsymbol{x}\boldsymbol{\theta}'^{\top}\right\} \\
 & \qquad+\kappa^{3}\mathbb{E}_{{\cal P}}\left\{ \sigma'\left(\left\langle \kappa\boldsymbol{\theta},\boldsymbol{x}\right\rangle \right)\sigma\left(\left\langle \kappa\boldsymbol{\theta}',\boldsymbol{x}\right\rangle \right)\boldsymbol{\theta}\boldsymbol{x}^{\top}\right\} \\
 & \qquad+\kappa^{4}\mathbb{E}_{{\cal P}}\left\{ \left\langle \boldsymbol{\theta},\boldsymbol{\theta}'\right\rangle \sigma'\left(\left\langle \kappa\boldsymbol{\theta},\boldsymbol{x}\right\rangle \right)\sigma'\left(\left\langle \kappa\boldsymbol{\theta}',\boldsymbol{x}\right\rangle \right)\boldsymbol{x}\boldsymbol{x}^{\top}\right\} ,\\
\nabla_{11}^{2}U\left(\boldsymbol{\theta},\boldsymbol{\theta}'\right) & =\kappa^{3}\mathbb{E}_{{\cal P}}\left\{ \sigma'\left(\left\langle \kappa\boldsymbol{\theta},\boldsymbol{x}\right\rangle \right)\sigma\left(\left\langle \kappa\boldsymbol{\theta}',\boldsymbol{x}\right\rangle \right)\left(\boldsymbol{\theta}'\boldsymbol{x}^{\top}+\boldsymbol{x}\boldsymbol{\theta}'^{\top}\right)\right\} \\
 & \qquad+\kappa^{4}\mathbb{E}_{{\cal P}}\left\{ \left\langle \boldsymbol{\theta},\boldsymbol{\theta}'\right\rangle \sigma''\left(\left\langle \kappa\boldsymbol{\theta},\boldsymbol{x}\right\rangle \right)\sigma\left(\left\langle \kappa\boldsymbol{\theta}',\boldsymbol{x}\right\rangle \right)\boldsymbol{x}\boldsymbol{x}^{\top}\right\} .
\end{align*}
Therefore, for $\boldsymbol{a},\boldsymbol{b},\boldsymbol{c}\in\mathbb{R}^{d}$,
\begin{align*}
\left\langle \nabla_{121}^{3}U\left[\boldsymbol{\zeta},\boldsymbol{\theta}\right],\boldsymbol{a}\otimes\boldsymbol{b}\otimes\boldsymbol{c}\right\rangle  & =\kappa^{3}\mathbb{E}_{{\cal P}}\left\{ \sigma'\left(\left\langle \kappa\boldsymbol{\zeta},\boldsymbol{x}\right\rangle \right)\sigma\left(\left\langle \kappa\boldsymbol{\theta},\boldsymbol{x}\right\rangle \right)\left\langle \boldsymbol{a},\boldsymbol{x}\right\rangle \left\langle \boldsymbol{b},\boldsymbol{c}\right\rangle \right\} \\
 & \qquad+\kappa^{4}\mathbb{E}_{{\cal P}}\left\{ \sigma'\left(\left\langle \kappa\boldsymbol{\zeta},\boldsymbol{x}\right\rangle \right)\sigma'\left(\left\langle \kappa\boldsymbol{\theta},\boldsymbol{x}\right\rangle \right)\left\langle \boldsymbol{a},\boldsymbol{x}\right\rangle \left\langle \boldsymbol{b},\boldsymbol{x}\right\rangle \left\langle \boldsymbol{c},\boldsymbol{\theta}\right\rangle \right\} \\
 & \qquad+\kappa^{4}\mathbb{E}_{{\cal P}}\left\{ \sigma''\left(\left\langle \kappa\boldsymbol{\zeta},\boldsymbol{x}\right\rangle \right)\sigma\left(\left\langle \kappa\boldsymbol{\theta},\boldsymbol{x}\right\rangle \right)\left\langle \boldsymbol{a},\boldsymbol{x}\right\rangle \left\langle \boldsymbol{b},\boldsymbol{\zeta}\right\rangle \left\langle \boldsymbol{c},\boldsymbol{x}\right\rangle \right\} \\
 & \qquad+\kappa^{3}\mathbb{E}_{{\cal P}}\left\{ \sigma'\left(\left\langle \kappa\boldsymbol{\zeta},\boldsymbol{x}\right\rangle \right)\sigma\left(\left\langle \kappa\boldsymbol{\theta},\boldsymbol{x}\right\rangle \right)\left\langle \boldsymbol{a},\boldsymbol{b}\right\rangle \left\langle \boldsymbol{c},\boldsymbol{x}\right\rangle \right\} \\
 & \qquad+\kappa^{4}\mathbb{E}_{{\cal P}}\left\{ \sigma'\left(\left\langle \kappa\boldsymbol{\zeta},\boldsymbol{x}\right\rangle \right)\sigma'\left(\left\langle \kappa\boldsymbol{\theta},\boldsymbol{x}\right\rangle \right)\left\langle \boldsymbol{a},\boldsymbol{\theta}\right\rangle \left\langle \boldsymbol{b},\boldsymbol{x}\right\rangle \left\langle \boldsymbol{c},\boldsymbol{x}\right\rangle \right\} \\
 & \qquad+\kappa^{5}\mathbb{E}_{{\cal P}}\left\{ \left\langle \boldsymbol{\zeta},\boldsymbol{\theta}\right\rangle \sigma''\left(\left\langle \kappa\boldsymbol{\zeta},\boldsymbol{x}\right\rangle \right)\sigma'\left(\left\langle \kappa\boldsymbol{\theta},\boldsymbol{x}\right\rangle \right)\left\langle \boldsymbol{a},\boldsymbol{x}\right\rangle \left\langle \boldsymbol{b},\boldsymbol{x}\right\rangle \left\langle \boldsymbol{c},\boldsymbol{x}\right\rangle \right\} \\
 & \equiv A_{1}+A_{2}+A_{3}+A_{4}+A_{5}+A_{6},\\
\left\langle \nabla_{122}^{3}U\left[\boldsymbol{\theta},\boldsymbol{\zeta}\right],\boldsymbol{a}\otimes\boldsymbol{b}\otimes\boldsymbol{c}\right\rangle  & =\kappa^{3}\mathbb{E}_{{\cal P}}\left\{ \sigma\left(\left\langle \kappa\boldsymbol{\theta},\boldsymbol{x}\right\rangle \right)\sigma'\left(\left\langle \kappa\boldsymbol{\zeta},\boldsymbol{x}\right\rangle \right)\left\langle \boldsymbol{a},\boldsymbol{x}\right\rangle \left\langle \boldsymbol{b},\boldsymbol{c}\right\rangle \right\} \\
 & \qquad+\kappa^{4}\mathbb{E}_{{\cal P}}\left\{ \sigma\left(\left\langle \kappa\boldsymbol{\theta},\boldsymbol{x}\right\rangle \right)\sigma''\left(\left\langle \kappa\boldsymbol{\zeta},\boldsymbol{x}\right\rangle \right)\left\langle \boldsymbol{a},\boldsymbol{x}\right\rangle \left\langle \boldsymbol{b},\boldsymbol{x}\right\rangle \left\langle \boldsymbol{c},\boldsymbol{\zeta}\right\rangle \right\} \\
 & \qquad+\kappa^{3}\mathbb{E}_{{\cal P}}\left\{ \sigma\left(\left\langle \kappa\boldsymbol{\theta},\boldsymbol{x}\right\rangle \right)\sigma'\left(\left\langle \kappa\boldsymbol{\zeta},\boldsymbol{x}\right\rangle \right)\left\langle \boldsymbol{a},\boldsymbol{c}\right\rangle \left\langle \boldsymbol{b},\boldsymbol{x}\right\rangle \right\} \\
 & \qquad+\kappa^{4}\mathbb{E}_{{\cal P}}\left\{ \sigma'\left(\left\langle \kappa\boldsymbol{\theta},\boldsymbol{x}\right\rangle \right)\sigma'\left(\left\langle \kappa\boldsymbol{\zeta},\boldsymbol{x}\right\rangle \right)\left\langle \boldsymbol{a},\boldsymbol{x}\right\rangle \left\langle \boldsymbol{b},\boldsymbol{\theta}\right\rangle \left\langle \boldsymbol{c},\boldsymbol{x}\right\rangle \right\} \\
 & \qquad+\kappa^{4}\mathbb{E}_{{\cal P}}\left\{ \sigma'\left(\left\langle \kappa\boldsymbol{\theta},\boldsymbol{x}\right\rangle \right)\sigma'\left(\left\langle \kappa\boldsymbol{\zeta},\boldsymbol{x}\right\rangle \right)\left\langle \boldsymbol{a},\boldsymbol{\theta}\right\rangle \left\langle \boldsymbol{b},\boldsymbol{x}\right\rangle \left\langle \boldsymbol{c},\boldsymbol{x}\right\rangle \right\} \\
 & \qquad+\kappa^{5}\mathbb{E}_{{\cal P}}\left\{ \left\langle \boldsymbol{\theta},\boldsymbol{\zeta}\right\rangle \sigma'\left(\left\langle \kappa\boldsymbol{\theta},\boldsymbol{x}\right\rangle \right)\sigma''\left(\left\langle \kappa\boldsymbol{\zeta},\boldsymbol{x}\right\rangle \right)\left\langle \boldsymbol{a},\boldsymbol{x}\right\rangle \left\langle \boldsymbol{b},\boldsymbol{x}\right\rangle \left\langle \boldsymbol{c},\boldsymbol{x}\right\rangle \right\} \\
 & \equiv B_{1}+B_{2}+B_{3}+B_{4}+B_{5}+B_{6},\\
\left\langle \boldsymbol{a},\nabla_{12}^{2}U\left(\boldsymbol{\theta},\boldsymbol{\theta}'\right)\boldsymbol{b}\right\rangle  & =\kappa^{2}\mathbb{E}_{{\cal P}}\left\{ \sigma\left(\left\langle \kappa\boldsymbol{\theta},\boldsymbol{x}\right\rangle \right)\sigma\left(\left\langle \kappa\boldsymbol{\theta}',\boldsymbol{x}\right\rangle \right)\right\} \left\langle \boldsymbol{a},\boldsymbol{b}\right\rangle \\
 & \qquad+\kappa^{3}\mathbb{E}_{{\cal P}}\left\{ \sigma\left(\left\langle \kappa\boldsymbol{\theta},\boldsymbol{x}\right\rangle \right)\sigma'\left(\left\langle \kappa\boldsymbol{\theta}',\boldsymbol{x}\right\rangle \right)\left\langle \boldsymbol{a},\boldsymbol{x}\right\rangle \left\langle \boldsymbol{b},\boldsymbol{\theta}'\right\rangle \right\} \\
 & \qquad+\kappa^{3}\mathbb{E}_{{\cal P}}\left\{ \sigma'\left(\left\langle \kappa\boldsymbol{\theta},\boldsymbol{x}\right\rangle \right)\sigma\left(\left\langle \kappa\boldsymbol{\theta}',\boldsymbol{x}\right\rangle \right)\left\langle \boldsymbol{a},\boldsymbol{\theta}\right\rangle \left\langle \boldsymbol{b},\boldsymbol{x}\right\rangle \right\} \\
 & \qquad+\kappa^{4}\mathbb{E}_{{\cal P}}\left\{ \left\langle \boldsymbol{\theta},\boldsymbol{\theta}'\right\rangle \sigma'\left(\left\langle \kappa\boldsymbol{\theta},\boldsymbol{x}\right\rangle \right)\sigma'\left(\left\langle \kappa\boldsymbol{\theta}',\boldsymbol{x}\right\rangle \right)\left\langle \boldsymbol{a},\boldsymbol{x}\right\rangle \left\langle \boldsymbol{b},\boldsymbol{x}\right\rangle \right\} \\
 & \equiv F_{1}+F_{2}+F_{3}+F_{4},\\
\left\langle \boldsymbol{a},\nabla_{11}^{2}U\left(\boldsymbol{\theta},\boldsymbol{\theta}'\right)\boldsymbol{b}\right\rangle  & =\kappa^{3}\mathbb{E}_{{\cal P}}\left\{ \sigma'\left(\left\langle \kappa\boldsymbol{\theta},\boldsymbol{x}\right\rangle \right)\sigma\left(\left\langle \kappa\boldsymbol{\theta}',\boldsymbol{x}\right\rangle \right)\left(\left\langle \boldsymbol{a},\boldsymbol{\theta}'\right\rangle \left\langle \boldsymbol{b},\boldsymbol{x}\right\rangle +\left\langle \boldsymbol{b},\boldsymbol{\theta}'\right\rangle \left\langle \boldsymbol{a},\boldsymbol{x}\right\rangle \right)\right\} \\
 & \qquad+\kappa^{4}\mathbb{E}_{{\cal P}}\left\{ \left\langle \boldsymbol{\theta},\boldsymbol{\theta}'\right\rangle \sigma''\left(\left\langle \kappa\boldsymbol{\theta},\boldsymbol{x}\right\rangle \right)\sigma\left(\left\langle \kappa\boldsymbol{\theta}',\boldsymbol{x}\right\rangle \right)\left\langle \boldsymbol{a},\boldsymbol{x}\right\rangle \left\langle \boldsymbol{b},\boldsymbol{x}\right\rangle \right\} \\
 & \equiv H_{1}+H_{2}.
\end{align*}
Let us consider $A_{1}$:
\begin{align*}
\left|A_{1}\right| & \leq\kappa^{2}\mathbb{E}_{{\cal P}}\left\{ \left|\sigma'\left(\left\langle \kappa\boldsymbol{\zeta},\boldsymbol{x}\right\rangle \right)\right|^{3}\right\} ^{1/3}\mathbb{E}_{{\cal P}}\left\{ \sigma\left(\left\langle \kappa\boldsymbol{\theta},\boldsymbol{x}\right\rangle \right)^{3}\right\} ^{1/3}\mathbb{E}_{{\cal P}}\left\{ \left|\left\langle \kappa\boldsymbol{a},\boldsymbol{x}\right\rangle \right|^{3}\right\} ^{1/3}\left|\left\langle \boldsymbol{b},\boldsymbol{c}\right\rangle \right|\\
 & \leq C\kappa^{2}\left\Vert \boldsymbol{\Sigma}\boldsymbol{\theta}\right\Vert _{2}\left\Vert \boldsymbol{\Sigma}\boldsymbol{a}\right\Vert _{2}\left\Vert \boldsymbol{b}\right\Vert _{2}\left\Vert \boldsymbol{c}\right\Vert _{2}\\
 & \leq C\kappa^{2}\left\Vert \boldsymbol{\theta}\right\Vert _{2}\left\Vert \boldsymbol{a}\right\Vert _{2}\left\Vert \boldsymbol{b}\right\Vert _{2}\left\Vert \boldsymbol{c}\right\Vert _{2}.
\end{align*}
One can perform similar calculations to obtain:
\begin{align*}
\left|A_{1}\right|,\left|A_{2}\right|,\left|A_{4}\right|,\left|A_{5}\right|,\left|B_{1}\right|,\left|B_{3}\right|,\left|B_{4}\right|,\left|B_{5}\right| & \leq C\kappa^{2}\left\Vert \boldsymbol{\theta}\right\Vert _{2}\left\Vert \boldsymbol{a}\right\Vert _{2}\left\Vert \boldsymbol{b}\right\Vert _{2}\left\Vert \boldsymbol{c}\right\Vert _{2},\\
\left|F_{1}\right|,\left|F_{2}\right|,\left|F_{3}\right|,\left|F_{4}\right| & \leq C\kappa^{2}\left\Vert \boldsymbol{\theta}\right\Vert _{2}\left\Vert \boldsymbol{\theta}'\right\Vert _{2}\left\Vert \boldsymbol{a}\right\Vert _{2}\left\Vert \boldsymbol{b}\right\Vert _{2},\\
\left|H_{1}\right| & \leq C\kappa^{2}\left\Vert \boldsymbol{\theta}'\right\Vert _{2}^{2}\left\Vert \boldsymbol{a}\right\Vert _{2}\left\Vert \boldsymbol{b}\right\Vert _{2},
\end{align*}
for a suitable constant $C$. We are left with $A_{3}$, $A_{6}$,
$B_{2}$, $B_{6}$ and $H_{2}$. Consider $A_{3}$:
\[
A_{3}=\kappa^{2}\mathbb{E}_{\boldsymbol{z}}\left\{ \sigma''\left(\left\langle \boldsymbol{\Sigma}\boldsymbol{\zeta},\boldsymbol{z}\right\rangle \right)\sigma\left(\left\langle \boldsymbol{\Sigma}\boldsymbol{\theta},\boldsymbol{z}\right\rangle \right)\left\langle \boldsymbol{\Sigma}\boldsymbol{a},\boldsymbol{z}\right\rangle \left\langle \boldsymbol{b},\boldsymbol{\zeta}\right\rangle \left\langle \boldsymbol{\Sigma}\boldsymbol{c},\boldsymbol{z}\right\rangle \right\} ,
\]
for $\boldsymbol{z}\sim\mathsf{N}\left(0,\boldsymbol{I}_{d}\right)$.
Notice that for $w=\left\langle \boldsymbol{\Sigma}\boldsymbol{\zeta},\boldsymbol{z}\right\rangle \sim\mathsf{N}\left(0,\left\Vert \boldsymbol{\Sigma}\boldsymbol{\zeta}\right\Vert _{2}^{2}\right)$,
\[
\left(w,\boldsymbol{z}\right)\stackrel{{\rm d}}{=}\left(w,{\rm Proj}_{\boldsymbol{\Sigma}\boldsymbol{\zeta}}^{\perp}\tilde{\boldsymbol{z}}+\frac{w}{\left\Vert \boldsymbol{\Sigma}\boldsymbol{\zeta}\right\Vert _{2}^{2}}\boldsymbol{\Sigma}\boldsymbol{\zeta}\right),
\]
for $\tilde{\boldsymbol{z}}\sim\mathsf{N}\left(0,\boldsymbol{I}_{d}\right)$
independent of $w$. Therefore, using the fact $\sigma''\left(\cdot\right)=\delta\left(\cdot\right)$,
it is easy to see that:
\begin{align*}
A_{3} & =\kappa^{2}\left\langle \boldsymbol{b},\boldsymbol{\zeta}\right\rangle \mathbb{E}_{w,\tilde{\boldsymbol{z}}}\left\{ \sigma''\left(w\right)\sigma\left(\left\langle \boldsymbol{\Sigma}\boldsymbol{\theta},{\rm Proj}_{\boldsymbol{\Sigma}\boldsymbol{\zeta}}^{\perp}\tilde{\boldsymbol{z}}+\frac{w}{\left\Vert \boldsymbol{\Sigma}\boldsymbol{\zeta}\right\Vert _{2}^{2}}\boldsymbol{\Sigma}\boldsymbol{\zeta}\right\rangle \right)\left\langle \boldsymbol{\Sigma}\boldsymbol{a},{\rm Proj}_{\boldsymbol{\Sigma}\boldsymbol{\zeta}}^{\perp}\tilde{\boldsymbol{z}}\right\rangle \left\langle \boldsymbol{\Sigma}\boldsymbol{c},{\rm Proj}_{\boldsymbol{\Sigma}\boldsymbol{\zeta}}^{\perp}\tilde{\boldsymbol{z}}\right\rangle \right\} \\
 & \qquad+\kappa^{2}\left\langle \boldsymbol{b},\boldsymbol{\zeta}\right\rangle \mathbb{E}_{w,\tilde{\boldsymbol{z}}}\left\{ \sigma''\left(w\right)\sigma\left(\left\langle \boldsymbol{\Sigma}\boldsymbol{\theta},{\rm Proj}_{\boldsymbol{\Sigma}\boldsymbol{\zeta}}^{\perp}\tilde{\boldsymbol{z}}+\frac{w}{\left\Vert \boldsymbol{\Sigma}\boldsymbol{\zeta}\right\Vert _{2}^{2}}\boldsymbol{\Sigma}\boldsymbol{\zeta}\right\rangle \right)\left\langle \boldsymbol{\Sigma}\boldsymbol{a},\frac{w}{\left\Vert \boldsymbol{\Sigma}\boldsymbol{\zeta}\right\Vert _{2}^{2}}\boldsymbol{\Sigma}\boldsymbol{\zeta}\right\rangle \left\langle \boldsymbol{\Sigma}\boldsymbol{c},{\rm Proj}_{\boldsymbol{\Sigma}\boldsymbol{\zeta}}^{\perp}\tilde{\boldsymbol{z}}\right\rangle \right\} \\
 & \qquad+\kappa^{2}\left\langle \boldsymbol{b},\boldsymbol{\zeta}\right\rangle \mathbb{E}_{w,\tilde{\boldsymbol{z}}}\left\{ \sigma''\left(w\right)\sigma\left(\left\langle \boldsymbol{\Sigma}\boldsymbol{\theta},{\rm Proj}_{\boldsymbol{\Sigma}\boldsymbol{\zeta}}^{\perp}\tilde{\boldsymbol{z}}+\frac{w}{\left\Vert \boldsymbol{\Sigma}\boldsymbol{\zeta}\right\Vert _{2}^{2}}\boldsymbol{\Sigma}\boldsymbol{\zeta}\right\rangle \right)\left\langle \boldsymbol{\Sigma}\boldsymbol{a},{\rm Proj}_{\boldsymbol{\Sigma}\boldsymbol{\zeta}}^{\perp}\tilde{\boldsymbol{z}}\right\rangle \left\langle \boldsymbol{\Sigma}\boldsymbol{c},\frac{w}{\left\Vert \boldsymbol{\Sigma}\boldsymbol{\zeta}\right\Vert _{2}^{2}}\boldsymbol{\Sigma}\boldsymbol{\zeta}\right\rangle \right\} \\
 & \qquad+\kappa^{2}\left\langle \boldsymbol{b},\boldsymbol{\zeta}\right\rangle \mathbb{E}_{w,\tilde{\boldsymbol{z}}}\left\{ \sigma''\left(w\right)\sigma\left(\left\langle \boldsymbol{\Sigma}\boldsymbol{\theta},{\rm Proj}_{\boldsymbol{\Sigma}\boldsymbol{\zeta}}^{\perp}\tilde{\boldsymbol{z}}+\frac{w}{\left\Vert \boldsymbol{\Sigma}\boldsymbol{\zeta}\right\Vert _{2}^{2}}\boldsymbol{\Sigma}\boldsymbol{\zeta}\right\rangle \right)\left\langle \boldsymbol{\Sigma}\boldsymbol{a},\frac{w}{\left\Vert \boldsymbol{\Sigma}\boldsymbol{\zeta}\right\Vert _{2}^{2}}\boldsymbol{\Sigma}\boldsymbol{\zeta}\right\rangle \left\langle \boldsymbol{\Sigma}\boldsymbol{c},\frac{w}{\left\Vert \boldsymbol{\Sigma}\boldsymbol{\zeta}\right\Vert _{2}^{2}}\boldsymbol{\Sigma}\boldsymbol{\zeta}\right\rangle \right\} \\
 & =\frac{\kappa^{2}\left\langle \boldsymbol{b},\boldsymbol{\zeta}\right\rangle }{\sqrt{2\pi}\left\Vert \boldsymbol{\Sigma}\boldsymbol{\zeta}\right\Vert _{2}}\mathbb{E}_{\tilde{\boldsymbol{z}}}\left\{ \sigma\left(\left\langle \boldsymbol{S}\boldsymbol{\theta},\tilde{\boldsymbol{z}}\right\rangle \right)\left\langle \boldsymbol{S}\boldsymbol{a},\tilde{\boldsymbol{z}}\right\rangle \left\langle \boldsymbol{S}\boldsymbol{c},\tilde{\boldsymbol{z}}\right\rangle \right\} ,
\end{align*}
in which we let $\boldsymbol{S}={\rm Proj}_{\boldsymbol{\Sigma}\boldsymbol{\zeta}}^{\perp}\boldsymbol{\Sigma}$
for brevity. Since $\left\Vert \boldsymbol{\Sigma}\boldsymbol{\zeta}\right\Vert _{2}\geq\kappa_{*}\left\Vert \boldsymbol{\zeta}\right\Vert _{2}$
and $\left\Vert \boldsymbol{S}\right\Vert _{{\rm op}}\leq\left\Vert \boldsymbol{\Sigma}\right\Vert _{{\rm op}}\leq C$,
we have:
\[
\left|A_{3}\right|\leq C\frac{\kappa^{2}}{\kappa_{*}}\left\Vert \boldsymbol{\theta}\right\Vert _{2}\left\Vert \boldsymbol{a}\right\Vert _{2}\left\Vert \boldsymbol{b}\right\Vert _{2}\left\Vert \boldsymbol{c}\right\Vert _{2}.
\]
Similar calculations yield:
\begin{align*}
\left|A_{3}\right|,\left|A_{6}\right|,\left|B_{2}\right|,\left|B_{6}\right| & \leq C\frac{\kappa^{2}}{\kappa_{*}}\left\Vert \boldsymbol{\theta}\right\Vert _{2}\left\Vert \boldsymbol{a}\right\Vert _{2}\left\Vert \boldsymbol{b}\right\Vert _{2}\left\Vert \boldsymbol{c}\right\Vert _{2},\\
\left|H_{2}\right| & \leq C\frac{\kappa^{2}}{\kappa_{*}}\left\Vert \boldsymbol{\theta}'\right\Vert _{2}^{2}\left\Vert \boldsymbol{a}\right\Vert _{2}\left\Vert \boldsymbol{b}\right\Vert _{2}.
\end{align*}
We conclude that
\begin{align*}
\left\Vert \nabla_{121}^{3}U\left[\boldsymbol{\zeta},\boldsymbol{\theta}\right]\right\Vert _{{\rm op}},\left\Vert \nabla_{122}^{3}U\left[\boldsymbol{\theta},\boldsymbol{\zeta}\right]\right\Vert _{{\rm op}} & \leq C\frac{\kappa^{2}}{\kappa_{*}}\left\Vert \boldsymbol{\theta}\right\Vert _{2},\\
\left\Vert \nabla_{12}^{2}U\left(\boldsymbol{\theta},\boldsymbol{\theta}'\right)\right\Vert _{{\rm op}} & \leq C\kappa^{2}\left\Vert \boldsymbol{\theta}\right\Vert _{2}\left\Vert \boldsymbol{\theta}'\right\Vert _{2},\\
\left\Vert \nabla_{11}^{2}U\left(\boldsymbol{\theta},\boldsymbol{\theta}'\right)\right\Vert _{{\rm op}} & \leq C\frac{\kappa^{2}}{\kappa_{*}}\left\Vert \boldsymbol{\theta}'\right\Vert _{2}^{2},
\end{align*}
as claimed.
\end{proof}
\begin{prop}
\label{prop:ReLU_setting_ODE}Consider setting \ref{enu:ReLU_setting}.
Suppose that the initialization $\rho^{0}=\mathsf{N}\left(\boldsymbol{0},r_{0}^{2}\boldsymbol{I}_{d}/d\right)$
for $r_{0}\geq0$. Then the ODE (\ref{eq:ODE}) admits as solution
$\left(\hat{\boldsymbol{\theta}}^{t},\rho^{t}\right)_{t\geq0}$ with
\[
\hat{\boldsymbol{\theta}}^{t}=\boldsymbol{R}{\rm diag}\left(\frac{r_{1,t}}{r_{0}},...,\frac{r_{d,t}}{r_{0}}\right)\boldsymbol{R}^{\top}\hat{\boldsymbol{\theta}}^{0},\qquad\rho^{t}=\mathsf{N}\left(\boldsymbol{0},\boldsymbol{R}{\rm diag}\left(r_{1,t}^{2},...,r_{d,t}^{2}\right)\boldsymbol{R}^{\top}/d\right),
\]
in which $\hat{\boldsymbol{\theta}}^{0}\sim\rho^{0}$ and for each
$i\in\left[d\right]$,
\[
r_{i,t}=\sqrt{\frac{\Sigma_{i}^{2}-2\lambda}{0.5r_{0}^{2}\Sigma_{i}^{2}-\left(0.5r_{0}^{2}\Sigma_{i}^{2}-\Sigma_{i}^{2}+2\lambda\right)\exp\left\{ -2\left(\Sigma_{i}^{2}-2\lambda\right)t\right\} }}r_{0}.
\]
Here we take as a convention that if $r_{i,0}=0$ then $r_{i,t}=0$
and $r_{i,t}/r_{i,0}=1$. In fact, $\left(\rho^{t}\right)_{t\geq0}$
is the unique weak solution, and under $\left(\rho^{t}\right)_{t\geq0}$,
$\left(\hat{\boldsymbol{\theta}}^{t}\right)_{t\geq0}$ is the unique
solution to (\ref{eq:ODE}).
\end{prop}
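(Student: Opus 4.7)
The strategy is to verify the claimed Gaussian ansatz directly, reduce the dynamics to a system of decoupled scalar ODEs via the spectral decomposition of $\boldsymbol{\Sigma}$, solve these in closed form, and then argue uniqueness by a McKean--Vlasov coupling.

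The computations in the proof of Proposition~\ref{prop:1st_setting_grow_bound} yield $V(\boldsymbol{\theta})=-\tfrac{1}{2}\left\Vert \boldsymbol{\Sigma}\boldsymbol{\theta}\right\Vert _{2}^{2}+\lambda\left\Vert \boldsymbol{\theta}\right\Vert _{2}^{2}$, so $\nabla V(\boldsymbol{\theta})=(-\boldsymbol{\Sigma}^{2}+2\lambda\boldsymbol{I})\boldsymbol{\theta}$, and, by Stein's lemma applied to the ReLU, $\nabla_{1}W(\boldsymbol{\theta};\rho^{t})=\tfrac{1}{2}\boldsymbol{R}\,{\rm diag}(r_{i,t}^{2}\Sigma_{i}^{2})\boldsymbol{R}^{\top}\boldsymbol{\theta}$ whenever $\rho^{t}$ has the stated Gaussian form. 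Plugging the ansatz into the ODE~(\ref{eq:ODE}) and switching to the basis $\boldsymbol{u}^{t}:=\boldsymbol{R}^{\top}\hat{\boldsymbol{\theta}}^{t}$ decouples it into the $d$ scalar linear ODEs
\[
\dot{u}_{i}^{t}=\bigl(\Sigma_{i}^{2}-2\lambda-\tfrac{1}{2}\Sigma_{i}^{2}r_{i,t}^{2}\bigr)\,u_{i}^{t},\qquad i=1,\ldots,d,
\]
so $u_{i}^{t}$ is a deterministic time-dependent rescaling of $u_{i}^{0}\sim\mathsf{N}(0,r_{0}^{2}/d)$. Imposing the self-consistency condition ${\rm Var}(u_{i}^{t})=r_{i,t}^{2}/d$ fixes this scaling factor to $r_{i,t}/r_{0}$ and forces the Bernoulli equation
\[
\dot{r}_{i,t}=(\Sigma_{i}^{2}-2\lambda)\,r_{i,t}-\tfrac{1}{2}\Sigma_{i}^{2}\,r_{i,t}^{3},\qquad r_{i,0}=r_{0}.
\]
The substitution $y_{i,t}:=r_{i,t}^{-2}$ linearizes this into $\dot{y}_{i,t}=-2(\Sigma_{i}^{2}-2\lambda)\,y_{i,t}+\Sigma_{i}^{2}$, which is solved by an integrating factor; inverting yields the closed-form expression in the statement. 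The edge cases $r_{0}=0$ (stationary at the origin) and $\Sigma_{i}^{2}=2\lambda$ (whence $\dot{y}_{i,t}=\Sigma_{i}^{2}$) are handled directly and agree with the formula taken as a limit.

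For the uniqueness assertions, given any weak solution $(\rho^{t})_{t\geq 0}$, Propositions~\ref{prop:1st_setting_grow_bound} and~\ref{prop:1st_setting_grow_bound_Wrho} provide the global Lipschitz control on $\nabla V$ and $\nabla_{1}W(\cdot;\rho^{t})$ required by Picard--Lindel\"of (cf.\ Remark~\ref{rem:ODE-existence}), so the ODE admits a unique trajectory given $(\rho^{t})$. The main obstacle is showing uniqueness of $\rho^{t}$ itself, for which I would employ a standard McKean--Vlasov coupling argument in the spirit of \cite{sznitman1991topics}: for two weak solutions $\rho^{t}$ and $\tilde{\rho}^{t}$ with common initialization, couple two associated trajectories $\hat{\boldsymbol{\theta}}^{t}$ and $\tilde{\boldsymbol{\theta}}^{t}$ by the same initial $\hat{\boldsymbol{\theta}}^{0}$ and close a Gronwall estimate on $\mathbb{E}\left\Vert \hat{\boldsymbol{\theta}}^{t}-\tilde{\boldsymbol{\theta}}^{t}\right\Vert _{2}^{2}$. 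The key ingredient is a Wasserstein-Lipschitz bound on $\rho\mapsto\nabla_{1}W(\boldsymbol{\theta};\rho)$, which follows from the mean value theorem together with the estimate $\left\Vert \nabla_{12}^{2}U(\boldsymbol{\theta},\boldsymbol{\xi})\right\Vert _{{\rm op}}\lesssim\kappa^{2}\left\Vert \boldsymbol{\theta}\right\Vert _{2}\left\Vert \boldsymbol{\xi}\right\Vert _{2}$ from Proposition~\ref{prop:1st_setting_op_bound}; the growing $\left\Vert \boldsymbol{\xi}\right\Vert _{2}$ factor is absorbed via Cauchy--Schwarz using an a priori second-moment bound on $\hat{\boldsymbol{\theta}}^{t}$ and $\tilde{\boldsymbol{\theta}}^{t}$ inherited from the linear growth of the drift.
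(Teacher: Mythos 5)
Your proposal is correct and follows essentially the same route as the paper: compute $\nabla V$ and $\nabla_{1}W(\cdot;\rho^{t})$ in closed form, decouple in the eigenbasis of $\boldsymbol{\Sigma}$, identify the scalar ODE $\dot{r}_{i,t}=(\Sigma_{i}^{2}-2\lambda)r_{i,t}-\tfrac{1}{2}\Sigma_{i}^{2}r_{i,t}^{3}$, and establish uniqueness in law by a McKean--Vlasov coupling using the $\nabla_{12}^{2}U$ operator-norm bound from Proposition~\ref{prop:1st_setting_op_bound}. The only presentational differences are that you \emph{derive} the closed form for $r_{i,t}$ via the Bernoulli substitution $y_{i,t}=r_{i,t}^{-2}$ (the paper merely verifies it), and that the paper's uniqueness-in-law argument explicitly restricts to a small time window $T$ (with $1/T\simeq 1+\int\Vert\boldsymbol{\theta}\Vert_{2}^{2}{\rm d}\bar{\rho}$) and then iterates using the uniform-in-time bound $r_{i,s}\leq\max(r_{0},\sqrt{2})$ to cover $t\in[0,\infty)$; when you fill in the Gronwall details, you will need this same two-step structure, since the Lipschitz constant in your coupling estimate is governed by a second-moment bound that you cannot globally control for an arbitrary competing solution without first restricting to a short interval and bootstrapping.
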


\begin{proof}
We decompose the proof into two steps.

\paragraph*{Verification of the proposed solution and trajectorial uniqueness.}

It is easy to see that $\hat{\boldsymbol{\theta}}^{t}$ admits $\rho^{t}$
as the marginal and hence the claimed solution is consistent. We show
that $\left(\hat{\boldsymbol{\theta}}^{t}\right)_{t\geq0}$ is the
unique solution to the ODE under $\left(\rho^{t}\right)_{t\geq0}$,
which also shows $\left(\rho^{t}\right)_{t\geq0}$ is a solution.
As calculated in the proof of Proposition \ref{prop:1st_setting_grow_bound}:
\begin{align*}
W\left(\boldsymbol{\theta};\rho^{t}\right) & =\frac{1}{4}\left\Vert {\rm diag}\left(r_{1,t}\Sigma_{1},...,r_{d,t}\Sigma_{d}\right)\boldsymbol{R}^{\top}\boldsymbol{\theta}\right\Vert _{2}^{2},\\
V\left(\boldsymbol{\theta}\right) & =-\frac{1}{2}\left\Vert {\rm diag}\left(\Sigma_{1},...,\Sigma_{d}\right)\boldsymbol{R}^{\top}\boldsymbol{\theta}\right\Vert _{2}^{2}+\lambda\left\Vert \boldsymbol{\theta}\right\Vert _{2}^{2}.
\end{align*}
Then for any process $\left(\boldsymbol{\theta}^{t}\right)_{t\geq0}$
that satisfies the ODE (\ref{eq:ODE}) under $\left(\rho^{t}\right)_{t\geq0}$,
\[
\frac{{\rm d}}{{\rm d}t}\boldsymbol{\theta}^{t}=-\boldsymbol{R}{\rm diag}\left(\alpha_{1,t},...,\alpha_{d,t}\right)\boldsymbol{R}^{\top}\boldsymbol{\theta}^{t},\qquad\alpha_{i,t}=-\Sigma_{i}^{2}+\frac{1}{2}r_{i,t}^{2}\Sigma_{i}^{2}+2\lambda,
\]
or equivalently,
\[
\frac{{\rm d}}{{\rm d}t}\left(\boldsymbol{R}^{\top}\boldsymbol{\theta}^{t}\right)=-{\rm diag}\left(\alpha_{1,t},...,\alpha_{d,t}\right)\left(\boldsymbol{R}^{\top}\boldsymbol{\theta}^{t}\right).
\]
Noticing that $r_{i,t}\geq0$ obeys the following differential equation
with initialization $r_{i,0}$:
\[
\frac{{\rm d}}{{\rm d}t}r_{i,t}=-r_{i,t}\left(-\Sigma_{i}^{2}+2\lambda+\frac{1}{2}r_{i,t}^{2}\Sigma_{i}^{2}\right),
\]
we have:
\[
\frac{{\rm d}}{{\rm d}t}\left(\boldsymbol{R}^{\top}\hat{\boldsymbol{\theta}}^{t}\right)={\rm diag}\left(\frac{1}{r_{0}}\frac{{\rm d}}{{\rm d}t}r_{1,t},...,\frac{1}{r_{0}}\frac{{\rm d}}{{\rm d}t}r_{d,t}\right)\boldsymbol{R}^{\top}\hat{\boldsymbol{\theta}}^{0}=-{\rm diag}\left(\alpha_{1,t},...,\alpha_{d,t}\right)\boldsymbol{R}^{\top}\hat{\boldsymbol{\theta}}^{t}.
\]
Hence $\left(\hat{\boldsymbol{\theta}}^{t}\right)_{t\geq0}$ is a
solution. We now show that it is the only solution. It suffices to
show that for each $i\in\left[d\right]$, the solution to the ODE
$\left({\rm d}/{\rm d}t\right)u_{t}=-\alpha_{i,t}u_{t}$ is unique.
Note that $r_{i,t}\leq\max\left\{ r_{0},\sqrt{2\max\left(1-2\lambda/\Sigma_{i}^{2},0\right)}\right\} $
and hence $\left|\alpha_{i,t}\right|\leq c$ a constant for all $t\geq0$.
Let $u_{1,t}$ and $u_{2,t}$ be two solutions with $u_{1,0}=u_{2,0}$.
We have:
\[
\frac{{\rm d}}{{\rm d}t}\left(\left(u_{1,t}-u_{2,t}\right)^{2}\right)=-2\alpha_{i,t}\left(u_{1,t}-u_{2,t}\right)^{2}\leq2c\left(u_{1,t}-u_{2,t}\right)^{2}.
\]
Since $u_{1,0}=u_{2,0}$, Gronwall's lemma then implies that $u_{1,t}=u_{2,t}$,
and hence the solution must be unique.

\paragraph*{Uniqueness in law.}

We are left with proving that $\left(\rho^{t}\right)_{t\geq0}$ is
the unique weak solution with the initialization $\rho^{0}$. To that
end, we take a detour here. Let $\left(\bar{\rho}_{1}^{t}\right)_{t\geq0}$
and $\left(\bar{\rho}_{2}^{t}\right)_{t\geq0}$ be two solutions with
the same initialization $\bar{\rho}_{1}^{0}=\bar{\rho}_{2}^{0}=\bar{\rho}$
(with the equalities holding in the weak sense) for a generic $\bar{\rho}\in\mathscr{P}\left(\mathbb{R}^{d}\right)$
with finite second moment $B_{0}\left(\bar{\rho}\right)\equiv\int\left\Vert \boldsymbol{\theta}\right\Vert _{2}^{2}\bar{\rho}\left({\rm d}\boldsymbol{\theta}\right)<\infty$.
We define accordingly two coupled trajectories $\left(\boldsymbol{\theta}_{1}^{t}\right)_{t\geq0}$
and $\left(\boldsymbol{\theta}_{2}^{t}\right)_{t\geq0}$ with the
same initialization $\boldsymbol{\theta}_{1}^{0}=\boldsymbol{\theta}_{2}^{0}=\boldsymbol{\theta}^{0}\sim\bar{\rho}$:
\begin{align*}
\frac{{\rm d}}{{\rm d}t}\boldsymbol{\theta}_{1}^{t} & =-\nabla V\left(\boldsymbol{\theta}_{1}^{t}\right)-\nabla_{1}W\left(\boldsymbol{\theta}_{1}^{t};\bar{\rho}_{1}^{t}\right),\qquad\bar{\rho}_{1}^{t}={\rm Law}\left(\boldsymbol{\theta}_{1}^{t}\right),\\
\frac{{\rm d}}{{\rm d}t}\boldsymbol{\theta}_{2}^{t} & =-\nabla V\left(\boldsymbol{\theta}_{2}^{t}\right)-\nabla_{1}W\left(\boldsymbol{\theta}_{2}^{t};\bar{\rho}_{2}^{t}\right),\qquad\bar{\rho}_{2}^{t}={\rm Law}\left(\boldsymbol{\theta}_{2}^{t}\right).
\end{align*}
In the following, we let $c$ be generic positive constants that may
differ at different instances of use and may depend on the dimension
vector $\mathfrak{Dim}$, but not the time $t$ or the initialization
$\bar{\rho}$. We first obtain an a priori bound on $B_{1,t}\left(\bar{\rho}\right)=\mathbb{E}_{\boldsymbol{\theta}}\left\{ \left\Vert \boldsymbol{\theta}_{1}^{t}\right\Vert _{2}^{2}\right\} $.
By Proposition \ref{prop:1st_setting_grow_bound},
\begin{align*}
\frac{{\rm d}}{{\rm d}t}\left\Vert \boldsymbol{\theta}_{1}^{t}\right\Vert _{2} & \leq\left\Vert \nabla V\left(\boldsymbol{\theta}_{1}^{t}\right)\right\Vert _{2}+\left\Vert \nabla_{1}W\left(\boldsymbol{\theta}_{1}^{t};\bar{\rho}_{1}^{t}\right)\right\Vert _{2}\\
 & \leq\left\Vert \nabla V\left(\boldsymbol{\theta}_{1}^{t}\right)\right\Vert _{2}+\int\left\Vert \nabla_{1}U\left(\boldsymbol{\theta}_{1}^{t},\boldsymbol{\theta}\right)\right\Vert _{2}\bar{\rho}_{1}^{t}\left({\rm d}\boldsymbol{\theta}\right)\\
 & \leq c\left\Vert \boldsymbol{\theta}_{1}^{t}\right\Vert _{2}+c\left\Vert \boldsymbol{\theta}_{1}^{t}\right\Vert _{2}\int\left\Vert \boldsymbol{\theta}\right\Vert _{2}^{2}\bar{\rho}_{1}^{t}\left({\rm d}\boldsymbol{\theta}\right),
\end{align*}
from which we obtain
\[
\frac{{\rm d}}{{\rm d}t}B_{1,t}\left(\bar{\rho}\right)\leq c\left(1+B_{1,t}\left(\bar{\rho}\right)\right)B_{1,t}\left(\bar{\rho}\right)\leq c\left(1+eB_{0}\left(\bar{\rho}\right)\right)B_{1,t}\left(\bar{\rho}\right),
\]
for $t<t_{*}=\inf\left\{ t\geq0:\;B_{1,t}\left(\bar{\rho}\right)>eB_{0}\left(\bar{\rho}\right)\right\} $.
Gronwall's lemma then yields:
\[
B_{1,t}\left(\bar{\rho}\right)\leq B_{0}\left(\bar{\rho}\right)\exp\left\{ c\left(1+eB_{0}\left(\bar{\rho}\right)\right)t\right\} ,
\]
which holds for $t<t_{*}$. Therefore, with $1/T=c\left(1+eB_{0}\left(\bar{\rho}\right)\right)$,
we have $B_{1,t}\left(\bar{\rho}\right)\leq eB_{0}\left(\bar{\rho}\right)$
for all $t\leq T$. By the same procedure, we have the same result
for $B_{2,t}\left(\bar{\rho}\right)=\mathbb{E}_{\boldsymbol{\theta}}\left\{ \left\Vert \boldsymbol{\theta}_{2}^{t}\right\Vert _{2}^{2}\right\} $.
Next we bound the distance between the two trajectories:
\begin{align*}
\frac{{\rm d}}{{\rm d}t}\left\Vert \boldsymbol{\theta}_{1}^{t}-\boldsymbol{\theta}_{2}^{t}\right\Vert _{2} & \leq\left\Vert \nabla V\left(\boldsymbol{\theta}_{2}^{t}\right)-\nabla V\left(\boldsymbol{\theta}_{1}^{t}\right)\right\Vert _{2}+\left\Vert \nabla_{1}W\left(\boldsymbol{\theta}_{2}^{t};\bar{\rho}_{1}^{t}\right)-\nabla_{1}W\left(\boldsymbol{\theta}_{1}^{t};\bar{\rho}_{1}^{t}\right)\right\Vert _{2}\\
 & \qquad+\left\Vert \nabla_{1}W\left(\boldsymbol{\theta}_{2}^{t};\bar{\rho}_{2}^{t}\right)-\nabla_{1}W\left(\boldsymbol{\theta}_{2}^{t};\bar{\rho}_{1}^{t}\right)\right\Vert _{2}.
\end{align*}
Define $M_{t}\left(\bar{\rho}\right)=\mathbb{E}_{\boldsymbol{\theta}}\left\{ \left\Vert \boldsymbol{\theta}_{1}^{t}-\boldsymbol{\theta}_{2}^{t}\right\Vert _{2}^{2}\right\} $.
By Propositions \ref{prop:1st_setting_grow_bound} and \ref{prop:1st_setting_op_bound}
and the mean value theorem, for $t\leq T$:
\begin{align*}
\left\Vert \nabla V\left(\boldsymbol{\theta}_{2}^{t}\right)-\nabla V\left(\boldsymbol{\theta}_{1}^{t}\right)\right\Vert _{2} & \leq c\left\Vert \boldsymbol{\theta}_{2}^{t}-\boldsymbol{\theta}_{1}^{t}\right\Vert _{2},\\
\left\Vert \nabla_{1}W\left(\boldsymbol{\theta}_{2}^{t};\bar{\rho}_{1}^{t}\right)-\nabla_{1}W\left(\boldsymbol{\theta}_{1}^{t};\bar{\rho}_{1}^{t}\right)\right\Vert _{2} & \leq\int\left\Vert \nabla_{1}U\left(\boldsymbol{\theta}_{2}^{t},\boldsymbol{\theta}\right)-\nabla_{1}U\left(\boldsymbol{\theta}_{1}^{t},\boldsymbol{\theta}\right)\right\Vert _{2}\bar{\rho}_{1}^{t}\left({\rm d}\boldsymbol{\theta}\right)\\
 & \stackrel{\left(a\right)}{\leq}\int\left\Vert \nabla_{11}^{2}U\left(\boldsymbol{\zeta}_{1},\boldsymbol{\theta}\right)\right\Vert _{{\rm op}}\left\Vert \boldsymbol{\theta}_{2}^{t}-\boldsymbol{\theta}_{1}^{t}\right\Vert _{2}\bar{\rho}_{1}^{t}\left({\rm d}\boldsymbol{\theta}\right)\\
 & \leq c\left\Vert \boldsymbol{\theta}_{2}^{t}-\boldsymbol{\theta}_{1}^{t}\right\Vert _{2}\int\left\Vert \boldsymbol{\theta}\right\Vert _{2}^{2}\bar{\rho}_{1}^{t}\left({\rm d}\boldsymbol{\theta}\right)\\
 & \leq c\left\Vert \boldsymbol{\theta}_{2}^{t}-\boldsymbol{\theta}_{1}^{t}\right\Vert _{2}B_{0}\left(\bar{\rho}\right),\\
\left\Vert \nabla_{1}W\left(\boldsymbol{\theta}_{2}^{t};\bar{\rho}_{2}^{t}\right)-\nabla_{1}W\left(\boldsymbol{\theta}_{2}^{t};\bar{\rho}_{1}^{t}\right)\right\Vert _{2} & \stackrel{\left(b\right)}{=}\left\Vert \mathbb{E}_{\tilde{\boldsymbol{\theta}}}\left\{ \nabla_{1}U\left(\boldsymbol{\theta}_{2}^{t},\tilde{\boldsymbol{\theta}}_{2}\right)-\nabla_{1}U\left(\boldsymbol{\theta}_{2}^{t},\tilde{\boldsymbol{\theta}}_{1}\right)\right\} \right\Vert _{2}\\
 & \stackrel{\left(c\right)}{\leq}\mathbb{E}_{\tilde{\boldsymbol{\theta}}}\left\{ \left\Vert \nabla_{12}^{2}U\left(\boldsymbol{\theta}_{2}^{t},\boldsymbol{\zeta}_{2}\right)\right\Vert _{{\rm op}}\left\Vert \tilde{\boldsymbol{\theta}}_{2}-\tilde{\boldsymbol{\theta}}_{1}\right\Vert _{2}\right\} \\
 & \leq c\left\Vert \boldsymbol{\theta}_{2}^{t}\right\Vert _{2}\mathbb{E}_{\tilde{\boldsymbol{\theta}}}\left\{ \left\Vert \boldsymbol{\zeta}_{2}\right\Vert _{2}\left\Vert \tilde{\boldsymbol{\theta}}_{2}-\tilde{\boldsymbol{\theta}}_{1}\right\Vert _{2}\right\} \\
 & \leq c\left\Vert \boldsymbol{\theta}_{2}^{t}\right\Vert _{2}\mathbb{E}_{\tilde{\boldsymbol{\theta}}}\left\{ \left\Vert \tilde{\boldsymbol{\theta}}_{1}\right\Vert _{2}\left\Vert \tilde{\boldsymbol{\theta}}_{2}-\tilde{\boldsymbol{\theta}}_{1}\right\Vert _{2}+\left\Vert \tilde{\boldsymbol{\theta}}_{2}-\tilde{\boldsymbol{\theta}}_{1}\right\Vert _{2}^{2}\right\} \\
 & \leq c\left\Vert \boldsymbol{\theta}_{2}^{t}\right\Vert _{2}\left(\sqrt{\mathbb{E}_{\tilde{\boldsymbol{\theta}}}\left\{ \left\Vert \tilde{\boldsymbol{\theta}}_{1}\right\Vert _{2}^{2}\right\} \mathbb{E}_{\tilde{\boldsymbol{\theta}}}\left\{ \left\Vert \tilde{\boldsymbol{\theta}}_{2}-\tilde{\boldsymbol{\theta}}_{1}\right\Vert _{2}^{2}\right\} }+M_{t}\left(\bar{\rho}\right)\right)\\
 & \leq c\left\Vert \boldsymbol{\theta}_{2}^{t}\right\Vert _{2}\left(\sqrt{B_{0}\left(\bar{\rho}\right)M_{t}\left(\bar{\rho}\right)}+M_{t}\left(\bar{\rho}\right)\right),
\end{align*}
where in step $\left(a\right)$, $\boldsymbol{\zeta}_{1}\in\left[\boldsymbol{\theta}_{1}^{t},\boldsymbol{\theta}_{2}^{t}\right]$;
in step $\left(b\right)$, we define $\left(\tilde{\boldsymbol{\theta}}_{1},\tilde{\boldsymbol{\theta}}_{2}\right)\stackrel{{\rm d}}{=}\left(\boldsymbol{\theta}_{1}^{t},\boldsymbol{\theta}_{2}^{t}\right)$
and $\left(\tilde{\boldsymbol{\theta}}_{1},\tilde{\boldsymbol{\theta}}_{2}\right)$
is independent of $\left(\boldsymbol{\theta}_{1}^{t},\boldsymbol{\theta}_{2}^{t}\right)$;
in step $\left(c\right)$, $\boldsymbol{\zeta}_{2}\in\left[\tilde{\boldsymbol{\theta}}_{1},\tilde{\boldsymbol{\theta}}_{2}\right]$
and hence $\left\Vert \boldsymbol{\zeta}_{2}\right\Vert _{2}\leq\left\Vert \tilde{\boldsymbol{\theta}}_{1}\right\Vert _{2}+\left\Vert \tilde{\boldsymbol{\theta}}_{2}-\tilde{\boldsymbol{\theta}}_{1}\right\Vert _{2}$.
These bounds imply that
\[
\frac{{\rm d}}{{\rm d}t}\left\Vert \boldsymbol{\theta}_{1}^{t}-\boldsymbol{\theta}_{2}^{t}\right\Vert _{2}^{2}\leq c\left(1+B_{0}\left(\bar{\rho}\right)\right)\left\Vert \boldsymbol{\theta}_{1}^{t}-\boldsymbol{\theta}_{2}^{t}\right\Vert _{2}^{2}+c\left\Vert \boldsymbol{\theta}_{2}^{t}\right\Vert _{2}\left\Vert \boldsymbol{\theta}_{1}^{t}-\boldsymbol{\theta}_{2}^{t}\right\Vert _{2}\left(\sqrt{B_{0}\left(\bar{\rho}\right)M_{t}\left(\bar{\rho}\right)}+M_{t}\left(\bar{\rho}\right)\right).
\]
Taking expectation, we obtain:
\[
\frac{{\rm d}}{{\rm d}t}M_{t}\left(\bar{\rho}\right)\leq c\left(1+B_{0}\left(\bar{\rho}\right)\right)M_{t}\left(\bar{\rho}\right)+c\sqrt{B_{0}\left(\bar{\rho}\right)M_{t}\left(\bar{\rho}\right)}\left(\sqrt{B_{0}\left(\bar{\rho}\right)M_{t}\left(\bar{\rho}\right)}+M_{t}\left(\bar{\rho}\right)\right)\leq c\left(1+B_{0}\left(\bar{\rho}\right)\right)M_{t}\left(\bar{\rho}\right),
\]
for $t\leq T$ and $t<t_{*}'$ with $t_{*}'=\inf\left\{ t\geq0:\;M_{t}\left(\bar{\rho}\right)>1\right\} $.
Since $M_{0}\left(\bar{\rho}\right)=0$ and $M_{t}\left(\bar{\rho}\right)\geq0$,
Gronwall's lemma then implies that $t_{*}'>T$ and $M_{t}\left(\bar{\rho}\right)=0$
for $t\leq T$. Note that $M_{t}\left(\bar{\rho}\right)=0$ implies,
for any $1$-Lipschitz test function $\phi:\;\mathbb{R}^{d}\to\mathbb{R}$,
\[
\left|\int\phi\left(\boldsymbol{\theta}\right)\bar{\rho}_{1}^{t}\left({\rm d}\boldsymbol{\theta}\right)-\int\phi\left(\boldsymbol{\theta}\right)\bar{\rho}_{2}^{t}\left({\rm d}\boldsymbol{\theta}\right)\right|\leq\inf_{\boldsymbol{\theta}_{a}\sim\bar{\rho}_{1}^{t},\;\boldsymbol{\theta}_{b}\sim\bar{\rho}_{2}^{t}}\mathbb{E}\left\{ \left\Vert \boldsymbol{\theta}_{a}-\boldsymbol{\theta}_{b}\right\Vert _{2}\right\} \leq\mathbb{E}_{\boldsymbol{\theta}}\left\{ \left\Vert \boldsymbol{\theta}_{1}^{t}-\boldsymbol{\theta}_{2}^{t}\right\Vert _{2}\right\} \leq\sqrt{M_{t}\left(\bar{\rho}\right)}=0.
\]
Hence two solutions $\left(\bar{\rho}_{1}^{t}\right)_{t\geq0}$ and
$\left(\bar{\rho}_{2}^{t}\right)_{t\geq0}$ coincide (weakly) up to
time $T$.

Applying this result to our problem, we suppose that, for a fixed
$s\geq0$, two solutions $\left(\rho_{1}^{t}\right)_{t\geq0}$ and
$\left(\rho_{2}^{t}\right)_{t\geq0}$ coincide (weakly) with $\rho^{s}=\mathsf{N}\left(\boldsymbol{0},\boldsymbol{R}{\rm diag}\left(r_{1,s}^{2},...,r_{d,s}^{2}\right)\boldsymbol{R}^{\top}/d\right)$
at time $t=s$. Then the above result shows that they coincide (weakly)
on the time interval $\left[s,s+T_{s}\right]$, in which
\[
\frac{1}{T_{s}}=c\left(1+e\int\left\Vert \boldsymbol{\theta}\right\Vert _{2}^{2}\rho^{s}\left({\rm d}\boldsymbol{\theta}\right)\right)=c\left(1+\frac{e}{d}\sum_{i=1}^{d}r_{i,s}^{2}\right)\leq c\left(1+e\left(r_{0}^{2}+2\right)\right),
\]
using the observation $r_{i,s}\leq\max\left\{ r_{0},\sqrt{2\max\left(1-2\lambda/\Sigma_{i}^{2},0\right)}\right\} \leq\max\left\{ r_{0},\sqrt{2}\right\} \leq C$
which holds for all $i\in\left[d\right]$ and $s\geq0$. Since $T_{s}$
is lower-bounded by a strictly positive constant independent of $s\geq0$,
the solution $\left(\rho^{t}\right)_{t\geq0}$ must be the unique
weak solution on $t\in[0,\infty)$ with initialization $\rho^{0}$.

\end{proof}
\begin{prop}
\label{prop:1st_setting_F_bound}Consider setting \ref{enu:ReLU_setting}.
For a collection of vectors $\Theta=\left(\boldsymbol{\theta}_{i}\right)_{i\leq N}$
where $\boldsymbol{\theta}_{i}\in\mathbb{R}^{d}$, $\boldsymbol{x}\sim{\cal P}$
and $\boldsymbol{z}=\left(\boldsymbol{x},\boldsymbol{x}\right)$,
we have $\boldsymbol{F}_{i}\left(\Theta;\boldsymbol{z}\right)$ is
sub-exponential with $\psi_{1}$-norm: 
\[
\left\Vert \boldsymbol{F}_{i}\left(\Theta;\boldsymbol{z}\right)\right\Vert _{\psi_{1}}\leq C\kappa^{2}\left\Vert \boldsymbol{\theta}_{i}\right\Vert _{2}\left(\frac{1}{N}\sum_{j=1}^{N}\left\Vert \boldsymbol{\theta}_{j}\right\Vert _{2}^{2}+1\right).
\]
\end{prop}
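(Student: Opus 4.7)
The plan is to expand $\boldsymbol{F}_i(\Theta;\boldsymbol{z})$ explicitly via the chain rule, then reduce the vector-valued Orlicz bound to a short list of scalar sub-Gaussian estimates controlled by the Gaussianity of $\boldsymbol{x}$, closing the bound via the fact that the product of two sub-Gaussians is sub-exponential and that Minkowski's inequality applies to $L^p$-norms. Since $\sigma_*(\boldsymbol{x};\boldsymbol{u})=\boldsymbol{u}\sigma(\langle\boldsymbol{u},\boldsymbol{x}\rangle)$, one has $\nabla_2\sigma_*(\boldsymbol{x};\boldsymbol{u})=\sigma(\langle\boldsymbol{u},\boldsymbol{x}\rangle)\boldsymbol{I}+\sigma'(\langle\boldsymbol{u},\boldsymbol{x}\rangle)\boldsymbol{u}\boldsymbol{x}^\top$, so using $\boldsymbol{y}=\boldsymbol{x}$ and $\Lambda=\lambda\|\boldsymbol{\theta}\|_2^2$,
\[
\boldsymbol{F}_i=\underbrace{\kappa\sigma(\langle\kappa\boldsymbol{\theta}_i,\boldsymbol{x}\rangle)(\hat{\boldsymbol{y}}_N-\boldsymbol{x})}_{T_1}+\underbrace{\kappa^2\sigma'(\langle\kappa\boldsymbol{\theta}_i,\boldsymbol{x}\rangle)\langle\boldsymbol{\theta}_i,\hat{\boldsymbol{y}}_N-\boldsymbol{x}\rangle\boldsymbol{x}}_{T_2}+\underbrace{2\lambda\boldsymbol{\theta}_i}_{T_3}.
\]
The term $T_3$ is deterministic with $\|T_3\|_2\lesssim\|\boldsymbol{\theta}_i\|_2$ (since $\lambda\le C$), so it is dominated by the target bound and can be set aside.

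Next I would collect the basic Gaussian Orlicz estimates. Because $\boldsymbol{x}\sim\mathsf{N}(\boldsymbol{0},\boldsymbol{\Sigma}^2/d)$ with $\|\boldsymbol{\Sigma}\|_{\rm op}\le C$: (i) $\langle\kappa\boldsymbol{\theta}_j,\boldsymbol{x}\rangle\sim\mathsf{N}(0,\|\boldsymbol{\Sigma}\boldsymbol{\theta}_j\|_2^2)$ gives $\|\langle\kappa\boldsymbol{\theta}_j,\boldsymbol{x}\rangle\|_{\psi_2}\lesssim\|\boldsymbol{\theta}_j\|_2$, and the $1$-Lipschitzness of $\sigma$ with $\sigma(0)=0$ transfers the same bound to $|\sigma(\langle\kappa\boldsymbol{\theta}_j,\boldsymbol{x}\rangle)|$; (ii) the unscaled inner product satisfies $\|\langle\boldsymbol{\theta}_j,\boldsymbol{x}\rangle\|_{\psi_2}\lesssim\|\boldsymbol{\theta}_j\|_2/\sqrt d=\|\boldsymbol{\theta}_j\|_2/\kappa$, a bound with a critical factor $1/\kappa$; (iii) writing $\boldsymbol{x}=\boldsymbol{\Sigma}\boldsymbol{g}/\sqrt d$ for a standard Gaussian $\boldsymbol{g}\in\mathbb{R}^d$ yields $\|\|\boldsymbol{x}\|_2\|_{\psi_2}\lesssim 1$. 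Using (i) and Minkowski on $L^p$-norms,
\[
\bigl\|\|\hat{\boldsymbol{y}}_N-\boldsymbol{x}\|_2\bigr\|_{\psi_2}\le\frac{1}{N}\sum_{j=1}^N\bigl\|\kappa\|\boldsymbol{\theta}_j\|_2|\sigma(\langle\kappa\boldsymbol{\theta}_j,\boldsymbol{x}\rangle)|\bigr\|_{\psi_2}+\|\|\boldsymbol{x}\|_2\|_{\psi_2}\lesssim\kappa\bar T_2+1,
\]
where $\bar T_2=\frac{1}{N}\sum_j\|\boldsymbol{\theta}_j\|_2^2$. Bounding $T_1$ is then immediate from the product rule $\|XY\|_{\psi_1}\le\|X\|_{\psi_2}\|Y\|_{\psi_2}$: $\|T_1\|_{\psi_1}\lesssim\kappa\|\boldsymbol{\theta}_i\|_2(\kappa\bar T_2+1)\lesssim\kappa^2\|\boldsymbol{\theta}_i\|_2(\bar T_2+1)$, as desired.

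The main obstacle is $T_2$, because a naive Cauchy--Schwarz step $|\langle\boldsymbol{\theta}_i,\hat{\boldsymbol{y}}_N-\boldsymbol{x}\rangle|\le\|\boldsymbol{\theta}_i\|_2\|\hat{\boldsymbol{y}}_N-\boldsymbol{x}\|_2$ inflates the bound by an extra factor of $\kappa$. The right maneuver is to use the ReLU identity $\sigma(a)=\sigma'(a)\,a$ to rewrite the sum defining $\hat{\boldsymbol{y}}_N$ as $\hat{\boldsymbol{y}}_N=\kappa^2\boldsymbol{M}_\Theta(\boldsymbol{x})\boldsymbol{x}$ with $\boldsymbol{M}_\Theta(\boldsymbol{x})=\frac{1}{N}\sum_j\sigma'(\langle\kappa\boldsymbol{\theta}_j,\boldsymbol{x}\rangle)\boldsymbol{\theta}_j\boldsymbol{\theta}_j^\top$, so that
\[
\langle\boldsymbol{\theta}_i,\hat{\boldsymbol{y}}_N-\boldsymbol{x}\rangle=\frac{\kappa^2}{N}\sum_{j=1}^N\sigma'(\langle\kappa\boldsymbol{\theta}_j,\boldsymbol{x}\rangle)\langle\boldsymbol{\theta}_i,\boldsymbol{\theta}_j\rangle\langle\boldsymbol{\theta}_j,\boldsymbol{x}\rangle-\langle\boldsymbol{\theta}_i,\boldsymbol{x}\rangle.
\]
Each summand now has the small scalar factor $\langle\boldsymbol{\theta}_j,\boldsymbol{x}\rangle$ controlled by estimate (ii), and a Minkowski-averaging argument combined with $|\langle\boldsymbol{\theta}_i,\boldsymbol{\theta}_j\rangle|\le\|\boldsymbol{\theta}_i\|_2\|\boldsymbol{\theta}_j\|_2$ yields a $\psi_2$-bound on $\langle\boldsymbol{\theta}_i,\hat{\boldsymbol{y}}_N-\boldsymbol{x}\rangle$ that, when paired with the $\psi_2$-bound on $\|\boldsymbol{x}\|_2$ and multiplied by $\kappa^2$, matches the claimed $\kappa^2\|\boldsymbol{\theta}_i\|_2(\bar T_2+1)$. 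Adding the bounds on $T_1$, $T_2$ and $T_3$ completes the proof.
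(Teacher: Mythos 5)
Your decomposition of $\boldsymbol{F}_i$ into $T_1+T_2+T_3$ and the Gaussian Orlicz estimates (i)--(iii) are all correct, and your handling of $T_1$ and $T_3$ is sound. But the argument for $T_2$ has a genuine gap, and the ReLU identity $\sigma(a)=a\sigma'(a)$ does not repair it.

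First, the identity buys you nothing for the inner sum: whether you write $\langle\boldsymbol{\theta}_i,\hat{\boldsymbol{y}}_N\rangle=\frac{\kappa}{N}\sum_j\langle\boldsymbol{\theta}_i,\boldsymbol{\theta}_j\rangle\sigma(\langle\kappa\boldsymbol{\theta}_j,\boldsymbol{x}\rangle)$ or (via the identity) $\frac{\kappa^2}{N}\sum_j\sigma'(\cdot)\langle\boldsymbol{\theta}_i,\boldsymbol{\theta}_j\rangle\langle\boldsymbol{\theta}_j,\boldsymbol{x}\rangle$, the triangle inequality together with $|\langle\boldsymbol{\theta}_i,\boldsymbol{\theta}_j\rangle|\le\|\boldsymbol{\theta}_i\|_2\|\boldsymbol{\theta}_j\|_2$ gives $\|\langle\boldsymbol{\theta}_i,\hat{\boldsymbol{y}}_N\rangle\|_{\psi_2}\lesssim\kappa\|\boldsymbol{\theta}_i\|_2\bar T_2$ in either case. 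The ``naive Cauchy--Schwarz'' you flag is a non-issue: the termwise bound already does the job, no rewriting needed.

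The real loss is in the trailing $\boldsymbol{x}$ of $T_2$. You pair $\|\langle\boldsymbol{\theta}_i,\hat{\boldsymbol{y}}_N-\boldsymbol{x}\rangle\|_{\psi_2}\lesssim\|\boldsymbol{\theta}_i\|_2(\kappa\bar T_2+\kappa^{-1})$ with your estimate (iii), $\|\,\|\boldsymbol{x}\|_2\|_{\psi_2}\lesssim1$, and multiply by $\kappa^2$, which gives $\kappa^3\|\boldsymbol{\theta}_i\|_2\bar T_2+\kappa\|\boldsymbol{\theta}_i\|_2$ --- a factor of $\kappa$ over the target in the dominant $\bar T_2$ term. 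The claim that this ``matches $\kappa^2\|\boldsymbol{\theta}_i\|_2(\bar T_2+1)$'' is just arithmetically wrong. The underlying mistake is to bound the $\psi_1$-norm of the scalar $\|T_2\|_2$, i.e.\ to pass to the Euclidean norm of the trailing $\boldsymbol{x}$, when the quantity one actually needs to control is $\sup_{\boldsymbol{v}\in\mathbb{S}^{d-1}}\|\langle\boldsymbol{v},\boldsymbol{F}_i\rangle\|_{\psi_1}$ (that is how the vector $\psi_1$-norm is defined in Appendix \ref{subsec:Sub-Gaussian-RV}). For a fixed unit $\boldsymbol{v}$ the trailing $\boldsymbol{x}$ contributes only $\|\langle\boldsymbol{v},\boldsymbol{x}\rangle\|_{\psi_2}\lesssim1/\kappa$, not $\|\,\|\boldsymbol{x}\|_2\|_{\psi_2}\lesssim1$ --- exactly the factor $\kappa$ you are missing. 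The paper fixes $\boldsymbol{v}$ from the start, writes $\langle\boldsymbol{v},\boldsymbol{F}_i\rangle=A_1+A_2+A_3$, and for $A_2$ uses $\|\langle\kappa\boldsymbol{v},\boldsymbol{x}\rangle\|_{\psi_2}\le C$ (after absorbing one factor of $\kappa$ into the projection) together with the termwise estimate of $\langle\boldsymbol{\theta}_i,\hat{\boldsymbol{x}}\rangle$; no ReLU-specific manipulation is required.
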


\begin{proof}
Consider a fixed vector $\boldsymbol{v}\in\mathbb{S}^{d-1}$:
\begin{align*}
\left\langle \boldsymbol{v},\boldsymbol{F}_{i}\left(\Theta;\boldsymbol{z}\right)\right\rangle  & =\kappa\left\langle \boldsymbol{v},\nabla_{2}\sigma_{*}\left(\boldsymbol{x};\kappa\boldsymbol{\theta}_{i}\right)^{\top}\left(\hat{\boldsymbol{y}}_{N}\left(\boldsymbol{x};\Theta\right)-\boldsymbol{x}\right)\right\rangle +\lambda\left\langle \boldsymbol{v},\nabla_{1}\Lambda\left(\boldsymbol{\theta}_{i},\boldsymbol{z}\right)\right\rangle \\
 & =\kappa\sigma\left(\left\langle \kappa\boldsymbol{\theta}_{i},\boldsymbol{x}\right\rangle \right)\left(\left\langle \boldsymbol{v},\hat{\boldsymbol{x}}\right\rangle -\left\langle \boldsymbol{v},\boldsymbol{x}\right\rangle \right)+\kappa^{2}\sigma'\left(\left\langle \kappa\boldsymbol{\theta}_{i},\boldsymbol{x}\right\rangle \right)\left(\left\langle \boldsymbol{\theta}_{i},\hat{\boldsymbol{x}}\right\rangle -\left\langle \boldsymbol{\theta}_{i},\boldsymbol{x}\right\rangle \right)\left\langle \boldsymbol{v},\boldsymbol{x}\right\rangle +2\lambda\left\langle \boldsymbol{v},\boldsymbol{\theta}_{i}\right\rangle \\
 & \equiv A_{1}+A_{2}+A_{3},
\end{align*}
where we denote $\hat{\boldsymbol{x}}=\left(1/N\right)\cdot\sum_{j=1}^{N}\kappa\boldsymbol{\theta}_{j}\sigma\left(\left\langle \kappa\boldsymbol{\theta}_{j},\boldsymbol{x}\right\rangle \right)$
for brevity. We examine each component in the above:
\begin{itemize}
\item For any $i\in\left[N\right]$, since $\sigma\left(\left\langle \kappa\boldsymbol{\theta}_{i},\boldsymbol{x}\right\rangle \right)\leq\left|\left\langle \kappa\boldsymbol{\theta}_{i},\boldsymbol{x}\right\rangle \right|$,
$\left\langle \kappa\boldsymbol{\theta}_{i},\boldsymbol{x}\right\rangle \sim\mathsf{N}\left(0,\left\Vert \boldsymbol{\Sigma}\boldsymbol{\theta}_{i}\right\Vert _{2}^{2}\right)$
and $\left\Vert \boldsymbol{\Sigma}\boldsymbol{\theta}_{i}\right\Vert _{2}\leq C\left\Vert \boldsymbol{\theta}_{i}\right\Vert _{2}$,
we have $\sigma\left(\left\langle \kappa\boldsymbol{\theta}_{i},\boldsymbol{x}\right\rangle \right)$
is sub-Gaussian with $\psi_{2}$-norm $\left\Vert \sigma\left(\left\langle \kappa\boldsymbol{\theta}_{i},\boldsymbol{x}\right\rangle \right)\right\Vert _{\psi_{2}}\leq C\left\Vert \boldsymbol{\theta}_{i}\right\Vert _{2}$.
Therefore for any $\boldsymbol{u}\in\mathbb{R}^{d}$, $\left\langle \boldsymbol{u},\hat{\boldsymbol{x}}\right\rangle $
is sub-Gaussian with $\psi_{2}$-norm 
\[
\left\Vert \left\langle \boldsymbol{u},\hat{\boldsymbol{x}}\right\rangle \right\Vert _{\psi_{2}}\leq\frac{\kappa}{N}\sum_{j=1}^{N}\left|\left\langle \boldsymbol{u},\boldsymbol{\theta}_{j}\right\rangle \right|\left\Vert \sigma\left(\left\langle \kappa\boldsymbol{\theta}_{j},\boldsymbol{x}\right\rangle \right)\right\Vert _{\psi_{2}}\leq C\kappa\mathsf{M}\left\Vert \boldsymbol{u}\right\Vert _{2},
\]
where $\mathsf{M}=\left(1/N\right)\cdot\sum_{j=1}^{N}\left\Vert \boldsymbol{\theta}_{j}\right\Vert _{2}^{2}$.
We have $\left\langle \kappa\boldsymbol{u},\boldsymbol{x}\right\rangle $
is sub-Gaussian with $\psi_{2}$-norm $\left\Vert \left\langle \kappa\boldsymbol{u},\boldsymbol{x}\right\rangle \right\Vert _{\psi_{2}}=\left\Vert \boldsymbol{\Sigma}\boldsymbol{u}\right\Vert _{2}\leq C\left\Vert \boldsymbol{u}\right\Vert _{2}$.
Therefore, $A_{1}$ is sub-exponential:
\begin{align*}
\left\Vert A_{1}\right\Vert _{\psi_{1}} & \leq\kappa\left\Vert \sigma\left(\left\langle \kappa\boldsymbol{\theta}_{i},\boldsymbol{x}\right\rangle \right)\right\Vert _{\psi_{2}}\left(\left\Vert \left\langle \boldsymbol{v},\hat{\boldsymbol{x}}\right\rangle \right\Vert _{\psi_{2}}+\left\Vert \left\langle \boldsymbol{v},\boldsymbol{x}\right\rangle \right\Vert _{\psi_{2}}\right)\\
 & \leq C\kappa\left\Vert \boldsymbol{\theta}_{i}\right\Vert _{2}\left(\kappa\mathsf{M}+\frac{1}{\kappa}\right)\leq C\kappa^{2}\left\Vert \boldsymbol{\theta}_{i}\right\Vert _{2}\left(\mathsf{M}+1\right).
\end{align*}
\item Recall that $\sigma'\left(u\right)=\mathbb{I}\left(u\geq0\right)$
and hence $\left\Vert \sigma'\right\Vert _{\infty}\leq1$. Then $A_{2}$
is sub-exponential:
\begin{align*}
\left\Vert A_{2}\right\Vert _{\psi_{1}} & \leq\kappa\left(\left\Vert \left\langle \boldsymbol{\theta}_{i},\hat{\boldsymbol{x}}\right\rangle \right\Vert _{\psi_{2}}+\left\Vert \left\langle \boldsymbol{\theta}_{i},\boldsymbol{x}\right\rangle \right\Vert _{\psi_{2}}\right)\left\Vert \left\langle \kappa\boldsymbol{v},\boldsymbol{x}\right\rangle \right\Vert _{\psi_{2}}\\
 & \leq C\kappa\left(\kappa\mathsf{M}\left\Vert \boldsymbol{\theta}_{i}\right\Vert _{2}+\frac{1}{\kappa}\left\Vert \boldsymbol{\theta}_{i}\right\Vert _{2}\right)\leq C\kappa^{2}\left\Vert \boldsymbol{\theta}_{i}\right\Vert _{2}\left(\mathsf{M}+1\right).
\end{align*}
\item $A_{3}$ is a constant and so it is also sub-exponential with $\psi_{1}$-norm
$\left\Vert A_{3}\right\Vert _{\psi_{1}}\leq C\left\Vert \boldsymbol{\theta}_{i}\right\Vert _{2}$.
\end{itemize}
We have $\left\langle \boldsymbol{v},\boldsymbol{F}_{i}\left(\Theta;\boldsymbol{z}\right)\right\rangle $
and hence $\boldsymbol{F}_{i}\left(\Theta;\boldsymbol{z}\right)$
are sub-exponential:
\[
\left\Vert \boldsymbol{F}_{i}\left(\Theta;\boldsymbol{z}\right)\right\Vert _{\psi_{1}}=\sup_{\boldsymbol{v}\in\mathbb{S}^{d-1}}\left\Vert \left\langle \boldsymbol{v},\boldsymbol{F}_{i}\left(\Theta;\boldsymbol{z}\right)\right\rangle \right\Vert _{\psi_{1}}\leq C\kappa^{2}\left\Vert \boldsymbol{\theta}_{i}\right\Vert _{2}\left(\mathsf{M}+1\right).
\]
This completes the proof.
\end{proof}
\begin{lem}
\label{lem:Unorm_firstSetting}Consider setting \ref{enu:ReLU_setting}.
We have, for some sufficiently large $C_{*}$, with probability at
least $1-C\exp\left(Cd-CN\kappa_{*}^{2}/\kappa^{2}\right)$,
\[
\left\Vert \frac{1}{N}\sum_{i=1}^{N}\nabla_{11}^{2}U\left(\boldsymbol{\zeta},\boldsymbol{D}\boldsymbol{\theta}_{i}\right)\right\Vert _{{\rm op}}\leq C_{*},
\]
in which $\boldsymbol{\zeta}$ is a fixed vector with $\left\Vert \boldsymbol{\zeta}\right\Vert _{2}<\infty$,
$\left(\boldsymbol{\theta}_{i}\right)_{i\leq N}\sim_{{\rm i.i.d.}}\mathsf{N}\left(0,\boldsymbol{I}_{d}/d\right)$
and $\boldsymbol{D}\in\mathbb{R}^{d\times d}$ with $\left\Vert \boldsymbol{D}\right\Vert _{2}\leq C$.
Here $C_{*}$ does not depend on $d$ or $N$.
\end{lem}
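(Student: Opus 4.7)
The strategy is to combine a Stein-type mean calculation with a Bernstein-plus-$\epsilon$-net argument. Writing $\boldsymbol{\phi}_i = \boldsymbol{D}\boldsymbol{\theta}_i$, the matrices $A_i := \nabla_{11}^{2}U(\boldsymbol{\zeta},\boldsymbol{\phi}_i)$ are i.i.d., symmetric, and split via the formula from the proof of Proposition \ref{prop:1st_setting_op_bound} as $A_i = H_{1,i}+H_{2,i}$, where $H_{1,i}$ is the $\sigma'$-$\sigma$ term and $H_{2,i}$ is the singular $\sigma''$-$\sigma$ term. The naive bound $\|A_i\|_{\mathrm{op}}\lesssim \kappa^{2}/\kappa_{*}$ diverges with $d$, so the whole point is to show that the sample mean concentrates around an expectation whose operator norm is dimension-free.

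The first step is the mean calculation. For $H_{1,i}$, apply Stein's identity to the inner Gaussian $\boldsymbol{\phi}_i\sim\mathsf{N}(\boldsymbol{0},\boldsymbol{D}\boldsymbol{D}^{\top}/d)$ using $\mathbb{E}\{\sigma'(g)\}=1/2$ (ReLU), yielding
\[
\mathbb{E}[H_{1,i}] \;=\; \frac{\kappa^{2}}{2}\,\mathbb{E}_{\boldsymbol{x}}\!\left\{\sigma'(\kappa\langle\boldsymbol{\zeta},\boldsymbol{x}\rangle)\bigl(\boldsymbol{D}\boldsymbol{D}^{\top}\boldsymbol{x}\boldsymbol{x}^{\top}+\boldsymbol{x}\boldsymbol{x}^{\top}\boldsymbol{D}\boldsymbol{D}^{\top}\bigr)\right\}.
\]
The PSD sandwich $\boldsymbol{0}\preceq \mathbb{E}_{\boldsymbol{x}}\{\sigma'(\cdot)\boldsymbol{x}\boldsymbol{x}^{\top}\}\preceq \mathbb{E}_{\boldsymbol{x}}\{\boldsymbol{x}\boldsymbol{x}^{\top}\}=\boldsymbol{\Sigma}^{2}/d$ then gives $\|\mathbb{E}[H_{1,i}]\|_{\mathrm{op}}\leq C\kappa^{2}/d = C$. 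For $H_{2,i}$, I would first integrate the $\sigma''=\delta$ against $\boldsymbol{x}$: writing $\boldsymbol{x}=\boldsymbol{\Sigma}\boldsymbol{z}/\sqrt{d}$ and decomposing $\boldsymbol{z}=w\boldsymbol{e}+\boldsymbol{z}_{\perp}$ with $\boldsymbol{e}=\boldsymbol{\Sigma}\boldsymbol{\zeta}/\|\boldsymbol{\Sigma}\boldsymbol{\zeta}\|_{2}$, this produces a $1/\|\boldsymbol{\Sigma}\boldsymbol{\zeta}\|_{2}$ prefactor and restricts $\boldsymbol{z}_{\perp}$ to a centered Gaussian on the hyperplane $\{w=0\}$. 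Applying Stein's identity to the remaining $\boldsymbol{\phi}_{i}$-integral reduces $\mathbb{E}[H_{2,i}]$ to a constant multiple of $\mathbb{E}_{\boldsymbol{z}_{\perp}}\{\langle\boldsymbol{\Sigma}\boldsymbol{D}\boldsymbol{D}^{\top}\boldsymbol{\zeta},\boldsymbol{z}_{\perp}\rangle\,\boldsymbol{\Sigma}\boldsymbol{z}_{\perp}\boldsymbol{z}_{\perp}^{\top}\boldsymbol{\Sigma}\}$, which is the third moment of a centered Gaussian and therefore vanishes. Hence $\|\mathbb{E}[A_{i}]\|_{\mathrm{op}}\leq C$.

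The second step is scalar concentration on a fixed direction. For $\boldsymbol{v}\in\mathbb{S}^{d-1}$ set $X_{i}(\boldsymbol{v})=\langle\boldsymbol{v},(A_{i}-\mathbb{E}[A_{i}])\boldsymbol{v}\rangle$. From Proposition \ref{prop:1st_setting_op_bound} and the sub-exponential tail of $\|\boldsymbol{\phi}_{i}\|_{2}^{2}$, one has $\|X_{i}(\boldsymbol{v})\|_{\psi_{1}}\lesssim \kappa^{2}/\kappa_{*}$. The variance is strictly smaller: using the explicit form $\langle\boldsymbol{v},H_{2,i}\boldsymbol{v}\rangle\propto \langle\boldsymbol{\zeta},\boldsymbol{\phi}_{i}\rangle\cdot S_{i}(\boldsymbol{\phi}_{i},\boldsymbol{v})/\|\boldsymbol{\Sigma}\boldsymbol{\zeta}\|_{2}$ with $|S_{i}|\lesssim \|\boldsymbol{\phi}_{i}\|_{2}$ and $\mathbb{E}[\langle\boldsymbol{\zeta},\boldsymbol{\phi}_{i}\rangle^{2}]\lesssim 1/d$, Gaussian moment computations yield $\mathrm{Var}(X_{i}(\boldsymbol{v}))\lesssim \kappa^{4}/\kappa_{*}^{2}$. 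Bernstein then gives, for any fixed $t>0$,
\[
\mathbb{P}\!\left(\left|\tfrac{1}{N}\textstyle\sum_{i=1}^{N}X_{i}(\boldsymbol{v})\right|>t\right) \;\leq\; 2\exp\!\left(-c\min\!\left(\tfrac{Nt^{2}\kappa_{*}^{2}}{\kappa^{4}},\ \tfrac{Nt\kappa_{*}}{\kappa^{2}}\right)\right).
\]

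Finally, since $\frac{1}{N}\sum_{i}A_{i}-\mathbb{E}[A_{i}]$ is symmetric, a standard $\epsilon$-net on $\mathbb{S}^{d-1}$ of cardinality at most $(3/\epsilon)^{d}$ and the identity $\|M\|_{\mathrm{op}}\leq (1-2\epsilon)^{-1}\sup_{\boldsymbol{v}\in\mathcal{N}_{\epsilon}}|\langle\boldsymbol{v},M\boldsymbol{v}\rangle|$ convert the pointwise bound into a uniform one. Fixing $\epsilon=1/4$ and a sufficiently large absolute $t$, the union bound gives the claimed failure probability $\lesssim \exp(Cd-CN\kappa_{*}^{2}/\kappa^{2})$, and combining with $\|\mathbb{E}[A_{i}]\|_{\mathrm{op}}\leq C$ yields $\|\frac{1}{N}\sum_{i}A_{i}\|_{\mathrm{op}}\leq C_{*}$ for some dimension-free $C_{*}$. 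The delicate part is the vanishing of $\mathbb{E}[H_{2,i}]$: the $1/\kappa_{*}$ blow-up in the deterministic operator bound comes entirely from the singular $\sigma''$-term, so the lemma only holds because the mean exhibits an exact cancellation (an odd Gaussian moment on the restricted subspace) and averaging over $N$ neurons collapses the per-sample size $\kappa^{2}/\kappa_{*}$ down to $O(1)$; without this cancellation, no net argument can produce a dimension-free bound.
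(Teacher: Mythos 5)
Your strategy mirrors the paper's: split $\nabla_{11}^2 U$ into the $\sigma'\sigma$ piece and the singular $\sigma''\sigma$ piece, compute the mean via Stein, observe that the singular piece has vanishing mean because of an odd Gaussian moment on the hyperplane $\{w=0\}$ (exactly as the paper does with $\mathbb{E}\{M_{2,i}^{\boldsymbol{u},\boldsymbol{v}}\}=0$ by symmetry of $\tilde{\boldsymbol{z}}$), then run a scalar concentration bound plus an $\epsilon$-net. This is the right skeleton, and both the Stein calculation for $\mathbb{E}[H_{1,i}]$ and the cancellation of $\mathbb{E}[H_{2,i}]$ are correct.

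However, there is an arithmetic error in the variance estimate that, as written, fails to deliver the stated probability. You claim $\mathrm{Var}(X_i(\boldsymbol{v}))\lesssim \kappa^4/\kappa_*^2$, but this is inconsistent with your own inputs. After the $\delta$-function reduction, the singular term has the structure
$\langle\boldsymbol{v}, H_{2,i}\boldsymbol{v}\rangle = \frac{\kappa}{\sqrt{2\pi}\,\|\boldsymbol{\Sigma}\boldsymbol{\zeta}\|_2}\,\langle D^{\top}\boldsymbol{\zeta},\kappa\boldsymbol{\theta}_i\rangle\cdot S_i$, with $|S_i|\lesssim\|\boldsymbol{\theta}_i\|_2 = O(1)$ and $\mathbb{E}[\langle D^{\top}\boldsymbol{\zeta},\kappa\boldsymbol{\theta}_i\rangle^{2}]\lesssim\|\boldsymbol{\zeta}\|_2^{2}$. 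Combining with the $\kappa/\|\boldsymbol{\Sigma}\boldsymbol{\zeta}\|_2\lesssim\kappa/(\kappa_*\|\boldsymbol{\zeta}\|_2)$ prefactor yields $\mathrm{Var}\lesssim\kappa^2/\kappa_*^2$, i.e. you dropped the $1/d=1/\kappa^2$ factor you yourself noted from $\mathbb{E}[\langle\boldsymbol{\zeta},\boldsymbol{\phi}_i\rangle^2]\lesssim 1/d$. With the wrong variance, Bernstein over the net gives $\exp(Cd - cN\kappa_*^2/\kappa^4)$, which is weaker than the stated $\exp(Cd - cN\kappa_*^2/\kappa^2)$ by a factor of $d$ in the exponent and hence does not prove the lemma at the claimed rate. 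Once the variance is corrected to $\kappa^2/\kappa_*^2$ (or equivalently, once you do the paper's sharper moment estimate showing $\|M_{2,i}^{\boldsymbol{u},\boldsymbol{v}}\|_{\psi_1}\lesssim\kappa/\kappa_*$ rather than the coarse operator-norm bound $\kappa^2/\kappa_*$), the Bernstein term $Nt^2\kappa_*^2/\kappa^2$ dominates at constant $t$ and the union bound recovers the stated probability. Also, a smaller point: the "PSD sandwich" you invoke for $\|\mathbb{E}[H_{1,i}]\|_{\rm op}$ does not literally apply because $DD^{\top}\boldsymbol{x}\boldsymbol{x}^{\top}$ is not symmetric; the paper's Hölder estimate on $\langle\boldsymbol{v},\mathbb{E}\{\sigma'(\cdot)DD^{\top}\boldsymbol{x}\boldsymbol{x}^{\top}\}\boldsymbol{u}\rangle$ is the clean way to get the same $O(1)$ bound.
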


\begin{proof}
Let us decompose
\[
\frac{1}{N}\sum_{i=1}^{N}\nabla_{11}^{2}U\left(\boldsymbol{\zeta},\boldsymbol{D}\boldsymbol{\theta}_{i}\right)=\boldsymbol{M}_{1}+\boldsymbol{M}_{1}^{\top}+\boldsymbol{M}_{2}\in\mathbb{R}^{d\times d},
\]
for which
\begin{align*}
\boldsymbol{M}_{1} & =\frac{1}{N}\sum_{i=1}^{N}\kappa^{3}\mathbb{E}_{{\cal P}}\left\{ \sigma'\left(\left\langle \kappa\boldsymbol{\zeta},\boldsymbol{x}\right\rangle \right)\sigma\left(\left\langle \kappa\boldsymbol{D}\boldsymbol{\theta}_{i},\boldsymbol{x}\right\rangle \right)\boldsymbol{D}\boldsymbol{\theta}_{i}\boldsymbol{x}^{\top}\right\} ,\\
\boldsymbol{M}_{2} & =\frac{1}{N}\sum_{i=1}^{N}\kappa^{4}\mathbb{E}_{{\cal P}}\left\{ \left\langle \boldsymbol{\zeta},\boldsymbol{D}\boldsymbol{\theta}_{i}\right\rangle \sigma''\left(\left\langle \kappa\boldsymbol{\zeta},\boldsymbol{x}\right\rangle \right)\sigma\left(\left\langle \kappa\boldsymbol{D}\boldsymbol{\theta}_{i},\boldsymbol{x}\right\rangle \right)\boldsymbol{x}\boldsymbol{x}^{\top}\right\} .
\end{align*}
Below we bound $\left\Vert \boldsymbol{M}_{1}\right\Vert _{{\rm op}}$
and $\left\Vert \boldsymbol{M}_{2}\right\Vert _{{\rm op}}$ separately.
We shall use repeatedly the following simple fact: $\mathbb{E}_{{\cal P}}\left\{ \left|\sigma'\left(\left\langle \kappa\boldsymbol{\zeta},\boldsymbol{x}\right\rangle \right)\right|^{m}\right\} =0.5$
for any $m>0$, since $\sigma'\left(u\right)=\mathbb{I}\left(u\geq0\right)$.

\paragraph{Step 1: Bounding $\left\Vert \boldsymbol{M}_{1}\right\Vert _{{\rm op}}$.}

Define the quantity $A_{1}=\frac{1}{2}\kappa^{2}\left\Vert \mathbb{E}_{{\cal P}}\left\{ \sigma'\left(\left\langle \kappa\boldsymbol{\zeta},\boldsymbol{x}\right\rangle \right)\boldsymbol{x}\boldsymbol{x}^{\top}\right\} \right\Vert _{2}$.
Note that for any $\boldsymbol{u},\boldsymbol{v}\in\mathbb{R}^{d}$,
\begin{align*}
 & \left|\left\langle \boldsymbol{v},\kappa^{2}\mathbb{E}_{{\cal P}}\left\{ \sigma'\left(\left\langle \kappa\boldsymbol{\zeta},\boldsymbol{x}\right\rangle \right)\boldsymbol{D}\boldsymbol{D}^{\top}\boldsymbol{x}\boldsymbol{x}^{\top}\right\} \boldsymbol{u}\right\rangle \right|=\left|\kappa^{2}\mathbb{E}_{{\cal P}}\left\{ \sigma'\left(\left\langle \kappa\boldsymbol{\zeta},\boldsymbol{x}\right\rangle \right)\left\langle \boldsymbol{D}\boldsymbol{D}^{\top}\boldsymbol{v},\boldsymbol{x}\right\rangle \left\langle \boldsymbol{u},\boldsymbol{x}\right\rangle \right\} \right|\\
 & \quad\leq\mathbb{E}_{{\cal P}}\left\{ \left|\sigma'\left(\left\langle \kappa\boldsymbol{\zeta},\boldsymbol{x}\right\rangle \right)\right|^{3}\right\} ^{1/3}\mathbb{E}_{{\cal P}}\left\{ \left|\kappa\left\langle \boldsymbol{D}\boldsymbol{D}^{\top}\boldsymbol{v},\boldsymbol{x}\right\rangle \right|^{3}\right\} ^{1/3}\mathbb{E}_{{\cal P}}\left\{ \left|\kappa\left\langle \boldsymbol{u},\boldsymbol{x}\right\rangle \right|^{3}\right\} ^{1/3}\\
 & \quad=C\left\Vert \boldsymbol{\Sigma}\boldsymbol{D}\boldsymbol{D}^{\top}\boldsymbol{v}\right\Vert _{2}\left\Vert \boldsymbol{\Sigma}\boldsymbol{u}\right\Vert _{2}\\
 & \quad\leq C\left\Vert \boldsymbol{v}\right\Vert _{2}\left\Vert \boldsymbol{u}\right\Vert _{2},
\end{align*}
and therefore $A_{1}\leq C$. Furthermore, we have:
\begin{align*}
\left|\left\Vert \boldsymbol{M}_{1}\right\Vert _{{\rm op}}-A_{1}\right| & \leq\left\Vert \boldsymbol{M}_{1}-\frac{1}{2}\kappa^{2}\mathbb{E}_{{\cal P}}\left\{ \sigma'\left(\left\langle \kappa\boldsymbol{\zeta},\boldsymbol{x}\right\rangle \right)\boldsymbol{D}\boldsymbol{D}^{\top}\boldsymbol{x}\boldsymbol{x}^{\top}\right\} \right\Vert _{{\rm op}}\\
 & =\left\Vert \kappa^{2}\mathbb{E}_{{\cal P}}\left\{ \sigma'\left(\left\langle \kappa\boldsymbol{\zeta},\boldsymbol{x}\right\rangle \right)\boldsymbol{D}\left[\frac{1}{N}\sum_{i=1}^{N}\kappa\boldsymbol{\theta}_{i}\sigma\left(\left\langle \kappa\boldsymbol{D}\boldsymbol{\theta}_{i},\boldsymbol{x}\right\rangle \right)-\frac{1}{2}\boldsymbol{D}^{\top}\boldsymbol{x}\right]\boldsymbol{x}^{\top}\right\} \right\Vert _{{\rm op}}\equiv\left\Vert \boldsymbol{M}_{1,1}\right\Vert _{{\rm op}}.
\end{align*}
Here we making the following claim:
\[
\mathbb{P}\left\{ \left\Vert \boldsymbol{M}_{1,1}\right\Vert _{{\rm op}}\geq\delta\right\} \leq C\exp\left(Cd-C\delta^{2}N/\kappa^{2}\right),
\]
for $\delta\geq0$. Assuming this claim, we thus have for $\delta\geq0$
and some sufficiently large $C'$,
\[
\mathbb{P}\left\{ \left\Vert \boldsymbol{M}_{1}\right\Vert _{{\rm op}}\geq C'+\delta\right\} \leq C\exp\left(Cd-C\delta^{2}N/\kappa^{2}\right),
\]
which is the desired result.

We are left with proving the claim on $\left\Vert \boldsymbol{M}_{1,1}\right\Vert _{{\rm op}}$.
Given fixed $\boldsymbol{u},\boldsymbol{v}\in\mathbb{S}^{d-1}$,
\[
\left\langle \boldsymbol{u},\boldsymbol{M}_{1,1}\boldsymbol{v}\right\rangle =\frac{1}{N}\sum_{i=1}^{N}M_{1,1,i}^{\boldsymbol{u},\boldsymbol{v}},\qquad M_{1,1,i}^{\boldsymbol{u},\boldsymbol{v}}=\kappa\mathbb{E}_{{\cal P}}\left\{ \sigma'\left(\left\langle \kappa\boldsymbol{\zeta},\boldsymbol{x}\right\rangle \right)\left\langle \kappa\boldsymbol{\theta}_{i}\sigma\left(\left\langle \kappa\boldsymbol{D}\boldsymbol{\theta}_{i},\boldsymbol{x}\right\rangle \right)-\frac{1}{2}\boldsymbol{D}^{\top}\boldsymbol{x},\boldsymbol{D}^{\top}\boldsymbol{u}\right\rangle \left\langle \boldsymbol{x},\kappa\boldsymbol{v}\right\rangle \right\} .
\]
First notice that $\left(M_{1,1,i}^{\boldsymbol{u},\boldsymbol{v}}\right)_{i\leq N}$
are i.i.d. Furthermore, by Stein's lemma,
\[
\mathbb{E}_{\boldsymbol{\theta}}\left\{ \kappa\boldsymbol{\theta}_{i}\sigma\left(\left\langle \kappa\boldsymbol{D}\boldsymbol{\theta}_{i},\boldsymbol{x}\right\rangle \right)\right\} =\mathbb{E}_{\boldsymbol{\theta}}\left\{ \sigma'\left(\left\langle \kappa\boldsymbol{D}\boldsymbol{\theta}_{i},\boldsymbol{x}\right\rangle \right)\right\} \boldsymbol{D}^{\top}\boldsymbol{x}=\frac{1}{2}\boldsymbol{D}^{\top}\boldsymbol{x}.
\]
Therefore $\mathbb{E}\left\{ M_{1,1,i}^{\boldsymbol{u},\boldsymbol{v}}\right\} =0$.
For any positive integer $p\geq1$, 
\begin{align*}
\mathbb{E}\left\{ \left|M_{1,1,i}^{\boldsymbol{u},\boldsymbol{v}}\right|^{p}\right\}  & =\mathbb{E}\left\{ \left|\mathbb{E}_{{\cal P}}\left\{ \sigma'\left(\left\langle \kappa\boldsymbol{\zeta},\boldsymbol{x}\right\rangle \right)\left\langle \kappa\boldsymbol{\theta}_{i}\sigma\left(\left\langle \kappa\boldsymbol{D}\boldsymbol{\theta}_{i},\boldsymbol{x}\right\rangle \right)-\frac{1}{2}\boldsymbol{D}^{\top}\boldsymbol{x},\kappa\boldsymbol{D}^{\top}\boldsymbol{u}\right\rangle \left\langle \boldsymbol{x},\kappa\boldsymbol{v}\right\rangle \right\} \right|^{p}\right\} \\
 & \leq\mathbb{E}\left\{ \mathbb{E}_{{\cal P}}\left\{ \sigma'\left(\left\langle \kappa\boldsymbol{\zeta},\boldsymbol{x}\right\rangle \right)^{2}\left\langle \kappa\boldsymbol{\theta}_{i}\sigma\left(\left\langle \kappa\boldsymbol{D}\boldsymbol{\theta}_{i},\boldsymbol{x}\right\rangle \right)-\frac{1}{2}\boldsymbol{D}^{\top}\boldsymbol{x},\kappa\boldsymbol{D}^{\top}\boldsymbol{u}\right\rangle ^{2}\right\} ^{p/2}\mathbb{E}_{{\cal P}}\left\{ \left\langle \boldsymbol{x},\kappa\boldsymbol{v}\right\rangle ^{2}\right\} ^{p/2}\right\} \\
 & \leq\mathbb{E}\left\{ \mathbb{E}_{{\cal P}}\left\{ \left\langle \kappa\boldsymbol{\theta}_{i}\sigma\left(\left\langle \kappa\boldsymbol{D}\boldsymbol{\theta}_{i},\boldsymbol{x}\right\rangle \right)-\frac{1}{2}\boldsymbol{D}^{\top}\boldsymbol{x},\kappa\boldsymbol{D}^{\top}\boldsymbol{u}\right\rangle ^{2}\right\} ^{p/2}\mathbb{E}_{{\cal P}}\left\{ \left\langle \boldsymbol{x},\kappa\boldsymbol{v}\right\rangle ^{2}\right\} ^{p/2}\right\} \\
 & \leq C^{p}\mathbb{E}\left\{ \mathbb{E}_{{\cal P}}\left\{ \kappa^{2}\left\langle \kappa\boldsymbol{\theta}_{i},\boldsymbol{D}^{\top}\boldsymbol{u}\right\rangle ^{2}\sigma\left(\left\langle \kappa\boldsymbol{D}^{\top}\boldsymbol{\theta}_{i},\boldsymbol{x}\right\rangle \right)^{2}+\left\langle \boldsymbol{x},\kappa\boldsymbol{D}\boldsymbol{D}^{\top}\boldsymbol{u}\right\rangle ^{2}\right\} ^{p/2}\mathbb{E}_{{\cal P}}\left\{ \left\langle \boldsymbol{x},\kappa\boldsymbol{v}\right\rangle ^{2}\right\} ^{p/2}\right\} \\
 & \leq C^{p}\mathbb{E}\left\{ \left(\kappa^{2}\left\langle \kappa\boldsymbol{\theta}_{i},\boldsymbol{D}^{\top}\boldsymbol{u}\right\rangle ^{2}\left\Vert \boldsymbol{\Sigma}\boldsymbol{D}^{\top}\boldsymbol{\theta}_{i}\right\Vert _{2}^{2}+\left\Vert \boldsymbol{\Sigma}\boldsymbol{D}\boldsymbol{D}^{\top}\boldsymbol{u}\right\Vert _{2}^{2}\right)^{p/2}\left\Vert \boldsymbol{\Sigma}\boldsymbol{v}\right\Vert _{2}^{p}\right\} \\
 & \leq C^{p}\mathbb{E}\left\{ \left|\left\langle \kappa\boldsymbol{\theta}_{i},\boldsymbol{D}^{\top}\boldsymbol{u}\right\rangle \right|^{p}\left\Vert \kappa\boldsymbol{\theta}_{i}\right\Vert _{2}^{p}+1\right\} \\
 & \leq C^{p}\left(\sqrt{\mathbb{E}\left\{ \left\langle \kappa\boldsymbol{\theta}_{i},\boldsymbol{D}^{\top}\boldsymbol{u}\right\rangle ^{2p}\right\} \mathbb{E}\left\{ \left\Vert \kappa\boldsymbol{\theta}_{i}\right\Vert _{2}^{2p}\right\} }+1\right)\\
 & =C^{p}\left(\sqrt{\left\Vert \boldsymbol{D}^{\top}\boldsymbol{u}\right\Vert _{2}^{2p}\mathbb{E}_{g}\left\{ g^{2p}\right\} \mathbb{E}\left\{ \left\Vert \kappa\boldsymbol{\theta}_{i}\right\Vert _{2}^{2p}\right\} }+1\right)\\
 & \leq C^{p}\left(\sqrt{p^{p}\left(\kappa^{2p}+p^{p}\right)}+1\right)\\
 & \leq C^{p}\left(\kappa^{p}p^{p/2}+p^{p}\right),
\end{align*}
recalling that $\left\Vert \sigma'\right\Vert _{\infty}\leq1$ for
$\sigma$ being the ReLU, $\kappa\boldsymbol{\theta}_{i}\sim\mathsf{N}\left(0,\boldsymbol{I}_{d}\right)$,
$\left\Vert \boldsymbol{\Sigma}\right\Vert _{{\rm op}}\leq C$, $\left\Vert \boldsymbol{D}\right\Vert _{{\rm op}}\leq C$
and $\left\Vert \boldsymbol{u}\right\Vert _{2}=\left\Vert \boldsymbol{v}\right\Vert _{2}=1$.
Here we have used the fact that if $X$ is a $\chi^{2}$ random variable
with degree of freedom $\kappa^{2}$, then $\mathbb{E}\left\{ X^{p}\right\} \leq C^{p}\left(\kappa^{2}+2p\right)^{p}$.
It is easy to see that $M_{1,1,i}^{\boldsymbol{u},\boldsymbol{v}}$
is a sub-exponential random variable with $\psi_{1}$-norm $\left\Vert M_{1,1,i}^{\boldsymbol{u},\boldsymbol{v}}\right\Vert _{\psi_{1}}\leq C\kappa$.
Then by Lemma \ref{lem:subgauss_subexp_properties}, for $\delta\in\left(0,1\right)$,
with probability at most $C\exp\left(-C\delta^{2}N/\kappa^{2}\right)$,
\[
\left|\left\langle \boldsymbol{u},\boldsymbol{M}_{1,1}\boldsymbol{v}\right\rangle \right|=\left|\frac{1}{N}\sum_{i=1}^{N}M_{1,1,i}^{\boldsymbol{u},\boldsymbol{v}}\right|\geq\delta.
\]
Now we construct an epsilon-net ${\cal N}\subset\mathbb{S}^{d-1}$
such that for any $\boldsymbol{a}\in\mathbb{S}^{d-1}$, there exists
$\boldsymbol{a}'\in{\cal N}$ with $\left\Vert \boldsymbol{a}-\boldsymbol{a}'\right\Vert _{2}\leq1/3$.
There is such an epsilon-net ${\cal N}$ with size $\left|{\cal N}\right|\leq9^{d}$
\cite{vershynin2010introduction}. A standard argument yields
\[
\left\Vert \boldsymbol{M}_{1,1}\right\Vert _{{\rm op}}\leq3\max_{\boldsymbol{u},\boldsymbol{v}\in{\cal N}}\left\langle \boldsymbol{u},\boldsymbol{M}_{1,1}\boldsymbol{v}\right\rangle .
\]
Therefore, by the union bound, we obtain:
\[
\mathbb{P}\left\{ \left\Vert \boldsymbol{M}_{1,1}\right\Vert _{{\rm op}}\geq\delta\right\} \leq\mathbb{P}\left\{ \max_{\boldsymbol{u},\boldsymbol{v}\in{\cal N}}\left\langle \boldsymbol{u},\boldsymbol{M}_{1,1}\boldsymbol{v}\right\rangle \geq\delta/3\right\} \leq C\exp\left(Cd-C\delta^{2}N/\kappa^{2}\right).
\]
This proves the claim.

\paragraph{Step 2: Bounding $\left\Vert \boldsymbol{M}_{2}\right\Vert _{{\rm op}}$.}

The procedure is similar to the bounding of $\left\Vert \boldsymbol{M}_{1}\right\Vert _{{\rm op}}$,
with some tweaks. In particular, for $\sigma$ being the ReLU, $\sigma''\left(\cdot\right)=\delta\left(\cdot\right)$
the Dirac-delta function, which presents technical challenges that
we circumvent in the following. To lighten notations, define $\boldsymbol{Q}=\kappa\left(\boldsymbol{\theta}_{1},...,\boldsymbol{\theta}_{N}\right)^{\top}\in\mathbb{R}^{N\times d}$.
One can then rewrite:
\[
\frac{1}{N}\sum_{i=1}^{N}\kappa\boldsymbol{D}\boldsymbol{\theta}_{i}\sigma\left(\left\langle \kappa\boldsymbol{D}\boldsymbol{\theta}_{i},\boldsymbol{x}\right\rangle \right)=\frac{1}{N}\boldsymbol{D}\boldsymbol{Q}^{\top}\sigma\left(\boldsymbol{Q}\boldsymbol{D}^{\top}\boldsymbol{x}\right).
\]
We have:
\[
\boldsymbol{M}_{2}=\kappa\mathbb{E}_{\boldsymbol{z}}\left\{ \sigma''\left(\left\langle \boldsymbol{\Sigma}\boldsymbol{\zeta},\boldsymbol{z}\right\rangle \right)\left\langle \boldsymbol{D}^{\top}\boldsymbol{\zeta},\frac{1}{N}\boldsymbol{Q}^{\top}\sigma\left(\frac{1}{\kappa}\boldsymbol{Q}\boldsymbol{D}^{\top}\boldsymbol{\Sigma}\boldsymbol{z}\right)\right\rangle \boldsymbol{\Sigma}\boldsymbol{z}\boldsymbol{z}^{\top}\boldsymbol{\Sigma}\right\} ,
\]
where $\boldsymbol{z}\sim\mathsf{N}\left(0,\boldsymbol{I}_{d}\right)$.
Notice that for $w=\left\langle \boldsymbol{\Sigma}\boldsymbol{\zeta},\boldsymbol{z}\right\rangle \sim\mathsf{N}\left(0,\left\Vert \boldsymbol{\Sigma}\boldsymbol{\zeta}\right\Vert _{2}^{2}\right)$,
\[
\left(w,\boldsymbol{z}\right)\stackrel{{\rm d}}{=}\left(w,{\rm Proj}_{\boldsymbol{\Sigma}\boldsymbol{\zeta}}^{\perp}\tilde{\boldsymbol{z}}+\frac{w}{\left\Vert \boldsymbol{\Sigma}\boldsymbol{\zeta}\right\Vert _{2}^{2}}\boldsymbol{\Sigma}\boldsymbol{\zeta}\right),
\]
for $\tilde{\boldsymbol{z}}\sim\mathsf{N}\left(0,\boldsymbol{I}_{d}\right)$
independent of $w$. Therefore, using the fact $\sigma''\left(\cdot\right)=\delta\left(\cdot\right)$,
it is easy to see that:
\begin{align*}
\boldsymbol{M}_{2} & =\kappa\mathbb{E}_{w,\tilde{\boldsymbol{z}}}\left\{ \sigma''\left(w\right)\left\langle \boldsymbol{D}^{\top}\boldsymbol{\zeta},\frac{1}{N}\boldsymbol{Q}^{\top}\sigma\left(\frac{1}{\kappa}\boldsymbol{Q}\boldsymbol{D}^{\top}\boldsymbol{\Sigma}\left({\rm Proj}_{\boldsymbol{\Sigma}\boldsymbol{\zeta}}^{\perp}\tilde{\boldsymbol{z}}+\frac{w}{\left\Vert \boldsymbol{\Sigma}\boldsymbol{\zeta}\right\Vert _{2}^{2}}\boldsymbol{\Sigma}\boldsymbol{\zeta}\right)\right)\right\rangle \boldsymbol{\Sigma}{\rm Proj}_{\boldsymbol{\Sigma}\boldsymbol{\zeta}}^{\perp}\tilde{\boldsymbol{z}}\tilde{\boldsymbol{z}}^{\top}{\rm Proj}_{\boldsymbol{\Sigma}\boldsymbol{\zeta}}^{\perp}\boldsymbol{\Sigma}\right\} \\
 & \qquad+\kappa\mathbb{E}_{w,\tilde{\boldsymbol{z}}}\left\{ \sigma''\left(w\right)\left\langle \boldsymbol{D}^{\top}\boldsymbol{\zeta},\frac{1}{N}\boldsymbol{Q}^{\top}\sigma\left(\frac{1}{\kappa}\boldsymbol{Q}\boldsymbol{D}^{\top}\boldsymbol{\Sigma}\left({\rm Proj}_{\boldsymbol{\Sigma}\boldsymbol{\zeta}}^{\perp}\tilde{\boldsymbol{z}}+\frac{w}{\left\Vert \boldsymbol{\Sigma}\boldsymbol{\zeta}\right\Vert _{2}^{2}}\boldsymbol{\Sigma}\boldsymbol{\zeta}\right)\right)\right\rangle \frac{w}{\left\Vert \boldsymbol{\Sigma}\boldsymbol{\zeta}\right\Vert _{2}^{2}}\boldsymbol{\Sigma}^{2}\boldsymbol{\zeta}\tilde{\boldsymbol{z}}^{\top}{\rm Proj}_{\boldsymbol{\Sigma}\boldsymbol{\zeta}}^{\perp}\boldsymbol{\Sigma}\right\} \\
 & \qquad+\kappa\mathbb{E}_{w,\tilde{\boldsymbol{z}}}\left\{ \sigma''\left(w\right)\left\langle \boldsymbol{D}^{\top}\boldsymbol{\zeta},\frac{1}{N}\boldsymbol{Q}^{\top}\sigma\left(\frac{1}{\kappa}\boldsymbol{Q}\boldsymbol{D}^{\top}\boldsymbol{\Sigma}\left({\rm Proj}_{\boldsymbol{\Sigma}\boldsymbol{\zeta}}^{\perp}\tilde{\boldsymbol{z}}+\frac{w}{\left\Vert \boldsymbol{\Sigma}\boldsymbol{\zeta}\right\Vert _{2}^{2}}\boldsymbol{\Sigma}\boldsymbol{\zeta}\right)\right)\right\rangle \boldsymbol{\Sigma}{\rm Proj}_{\boldsymbol{\Sigma}\boldsymbol{\zeta}}^{\perp}\tilde{\boldsymbol{z}}\boldsymbol{\zeta}^{\top}\boldsymbol{\Sigma}^{2}\frac{w}{\left\Vert \boldsymbol{\Sigma}\boldsymbol{\zeta}\right\Vert _{2}^{2}}\right\} \\
 & \qquad+\kappa\mathbb{E}_{w,\tilde{\boldsymbol{z}}}\left\{ \sigma''\left(w\right)\left\langle \boldsymbol{D}^{\top}\boldsymbol{\zeta},\frac{1}{N}\boldsymbol{Q}^{\top}\sigma\left(\frac{1}{\kappa}\boldsymbol{Q}\boldsymbol{D}^{\top}\boldsymbol{\Sigma}\left({\rm Proj}_{\boldsymbol{\Sigma}\boldsymbol{\zeta}}^{\perp}\tilde{\boldsymbol{z}}+\frac{w}{\left\Vert \boldsymbol{\Sigma}\boldsymbol{\zeta}\right\Vert _{2}^{2}}\boldsymbol{\Sigma}\boldsymbol{\zeta}\right)\right)\right\rangle \frac{w^{2}}{\left\Vert \boldsymbol{\Sigma}\boldsymbol{\zeta}\right\Vert _{2}^{4}}\boldsymbol{\Sigma}^{2}\boldsymbol{\zeta}\boldsymbol{\zeta}^{\top}\boldsymbol{\Sigma}^{2}\right\} \\
 & =\frac{\kappa}{\sqrt{2\pi}\left\Vert \boldsymbol{\Sigma}\boldsymbol{\zeta}\right\Vert _{2}}\mathbb{E}_{\tilde{\boldsymbol{z}}}\left\{ \left\langle \boldsymbol{D}^{\top}\boldsymbol{\zeta},\frac{1}{N}\boldsymbol{Q}^{\top}\sigma\left(\frac{1}{\kappa}\boldsymbol{Q}\boldsymbol{D}^{\top}\boldsymbol{S}\tilde{\boldsymbol{z}}\right)\right\rangle \boldsymbol{S}\tilde{\boldsymbol{z}}\tilde{\boldsymbol{z}}^{\top}\boldsymbol{S}^{\top}\right\} ,
\end{align*}
in which we let $\boldsymbol{S}=\boldsymbol{\Sigma}{\rm Proj}_{\boldsymbol{\Sigma}\boldsymbol{\zeta}}^{\perp}$
for brevity.

After this simplification, the analysis of $\boldsymbol{M}_{2}$ is
similar to $\boldsymbol{M}_{1,1}$. Given fixed $\boldsymbol{u},\boldsymbol{v}\in\mathbb{S}^{d-1}$,
\[
\left\langle \boldsymbol{u},\boldsymbol{M}_{2}\boldsymbol{v}\right\rangle =\frac{1}{N}\sum_{i=1}^{N}M_{2,i}^{\boldsymbol{u},\boldsymbol{v}},\qquad M_{2,i}^{\boldsymbol{u},\boldsymbol{v}}=\frac{\kappa}{\sqrt{2\pi}\left\Vert \boldsymbol{\Sigma}\boldsymbol{\zeta}\right\Vert _{2}}\mathbb{E}_{\tilde{\boldsymbol{z}}}\left\{ \left\langle \boldsymbol{D}^{\top}\boldsymbol{\zeta},\kappa\boldsymbol{\theta}_{i}\right\rangle \sigma\left(\left\langle \boldsymbol{\theta}_{i},\boldsymbol{D}^{\top}\boldsymbol{S}\tilde{\boldsymbol{z}}\right\rangle \right)\left\langle \boldsymbol{u},\boldsymbol{S}\tilde{\boldsymbol{z}}\right\rangle \left\langle \boldsymbol{v},\boldsymbol{S}\tilde{\boldsymbol{z}}\right\rangle \right\} .
\]
First notice that $\left(M_{2,i}^{\boldsymbol{u},\boldsymbol{v}}\right)_{i\leq N}$
are i.i.d. By Stein's lemma,
\[
\mathbb{E}_{\boldsymbol{\theta}}\left\{ \kappa\boldsymbol{\theta}_{i}\sigma\left(\left\langle \boldsymbol{\theta}_{i},\boldsymbol{D}^{\top}\boldsymbol{S}\tilde{\boldsymbol{z}}\right\rangle \right)\right\} =\mathbb{E}_{\boldsymbol{\theta}}\left\{ \sigma'\left(\left\langle \kappa\boldsymbol{\theta}_{i},\boldsymbol{D}^{\top}\boldsymbol{S}\tilde{\boldsymbol{z}}\right\rangle \right)\right\} \frac{1}{\kappa}\boldsymbol{D}^{\top}\boldsymbol{S}\tilde{\boldsymbol{z}}=\frac{1}{2\kappa}\boldsymbol{D}^{\top}\boldsymbol{S}\tilde{\boldsymbol{z}}.
\]
This yields
\[
\mathbb{E}\left\{ M_{2,i}^{\boldsymbol{u},\boldsymbol{v}}\right\} =\frac{1}{2\sqrt{2\pi}\left\Vert \boldsymbol{\Sigma}\boldsymbol{\zeta}\right\Vert _{2}}\mathbb{E}_{\tilde{\boldsymbol{z}}}\left\{ \left\langle \boldsymbol{D}^{\top}\boldsymbol{\zeta},\boldsymbol{D}^{\top}\boldsymbol{S}\tilde{\boldsymbol{z}}\right\rangle \left\langle \boldsymbol{u},\boldsymbol{S}\tilde{\boldsymbol{z}}\right\rangle \left\langle \boldsymbol{v},\boldsymbol{S}\tilde{\boldsymbol{z}}\right\rangle \right\} =0,
\]
since $\tilde{\boldsymbol{z}}$ is symmetric. Next, for any positive
integer $p\geq1$, 
\begin{align*}
\mathbb{E}\left\{ \left|M_{2,i}^{\boldsymbol{u},\boldsymbol{v}}\right|^{p}\right\}  & =C^{p}\frac{\kappa^{p}}{\left\Vert \boldsymbol{\Sigma}\boldsymbol{\zeta}\right\Vert _{2}^{p}}\mathbb{E}\left\{ \left|\left\langle \boldsymbol{D}^{\top}\boldsymbol{\zeta},\kappa\boldsymbol{\theta}_{i}\right\rangle \right|^{p}\mathbb{E}_{\tilde{\boldsymbol{z}}}\left\{ \sigma\left(\left\langle \boldsymbol{\theta}_{i},\boldsymbol{D}^{\top}\boldsymbol{S}\tilde{\boldsymbol{z}}\right\rangle \right)\left\langle \boldsymbol{u},\boldsymbol{S}\tilde{\boldsymbol{z}}\right\rangle \left\langle \boldsymbol{v},\boldsymbol{S}\tilde{\boldsymbol{z}}\right\rangle \right\} ^{p}\right\} \\
 & \leq C^{p}\frac{\kappa^{p}}{\left\Vert \boldsymbol{\Sigma}\boldsymbol{\zeta}\right\Vert _{2}^{p}}\mathbb{E}\left\{ \left|\left\langle \boldsymbol{D}^{\top}\boldsymbol{\zeta},\kappa\boldsymbol{\theta}_{i}\right\rangle \right|^{p}\mathbb{E}_{\tilde{\boldsymbol{z}}}\left\{ \sigma\left(\left\langle \boldsymbol{\theta}_{i},\boldsymbol{D}^{\top}\boldsymbol{S}\tilde{\boldsymbol{z}}\right\rangle \right)^{3}\right\} ^{p/3}\mathbb{E}_{\tilde{\boldsymbol{z}}}\left\{ \left|\left\langle \boldsymbol{u},\boldsymbol{S}\tilde{\boldsymbol{z}}\right\rangle \right|^{3}\right\} ^{p/3}\mathbb{E}_{\tilde{\boldsymbol{z}}}\left\{ \left|\left\langle \boldsymbol{v},\boldsymbol{S}\tilde{\boldsymbol{z}}\right\rangle \right|^{3}\right\} ^{p/3}\right\} \\
 & \leq C^{p}\frac{\kappa^{p}}{\left\Vert \boldsymbol{\Sigma}\boldsymbol{\zeta}\right\Vert _{2}^{p}}\mathbb{E}\left\{ \left|\left\langle \boldsymbol{D}^{\top}\boldsymbol{\zeta},\kappa\boldsymbol{\theta}_{i}\right\rangle \right|^{p}\left\Vert \boldsymbol{D}^{\top}\boldsymbol{S}\boldsymbol{\theta}_{i}\right\Vert _{2}^{p}\left\Vert \boldsymbol{S}\boldsymbol{u}\right\Vert _{2}^{p}\left\Vert \boldsymbol{S}\boldsymbol{v}\right\Vert _{2}^{p}\right\} \\
 & \leq C^{p}\frac{1}{\left\Vert \boldsymbol{\Sigma}\boldsymbol{\zeta}\right\Vert _{2}^{p}}\mathbb{E}\left\{ \left|\left\langle \boldsymbol{D}^{\top}\boldsymbol{\zeta},\kappa\boldsymbol{\theta}_{i}\right\rangle \right|^{p}\left\Vert \kappa\boldsymbol{\theta}_{i}\right\Vert _{2}^{p}\right\} \\
 & \leq C^{p}\frac{1}{\left\Vert \boldsymbol{\Sigma}\boldsymbol{\zeta}\right\Vert _{2}^{p}}\sqrt{\mathbb{E}\left\{ \left|\left\langle \boldsymbol{D}^{\top}\boldsymbol{\zeta},\kappa\boldsymbol{\theta}_{i}\right\rangle \right|^{2p}\right\} \mathbb{E}\left\{ \left\Vert \kappa\boldsymbol{\theta}_{i}\right\Vert _{2}^{2p}\right\} }\\
 & =C^{p}\frac{\left\Vert \boldsymbol{D}^{\top}\boldsymbol{\zeta}\right\Vert _{2}^{p}}{\left\Vert \boldsymbol{\Sigma}\boldsymbol{\zeta}\right\Vert _{2}^{p}}\sqrt{\mathbb{E}_{g}\left\{ g^{2p}\right\} \mathbb{E}\left\{ \left\Vert \kappa\boldsymbol{\theta}_{i}\right\Vert _{2}^{2p}\right\} }\\
 & \leq C^{p}\kappa_{*}^{-p}\sqrt{p^{p}\left(\kappa^{2p}+p^{p}\right)}\\
 & \leq C^{p}\kappa_{*}^{-p}\left(\kappa^{p}p^{p/2}+p^{p}\right)
\end{align*}
following a reasoning similar to the bounding procedure of $\boldsymbol{M}_{1,1}$,
where we note we have used $\left\Vert \boldsymbol{\Sigma}\boldsymbol{\zeta}\right\Vert _{2}\geq C\kappa_{*}\left\Vert \boldsymbol{\zeta}\right\Vert _{2}$
and $\left\Vert \boldsymbol{D}^{\top}\boldsymbol{\zeta}\right\Vert _{2}\leq C\left\Vert \boldsymbol{\zeta}\right\Vert _{2}$.
We conclude that $M_{2,i}^{\boldsymbol{u},\boldsymbol{v}}$ is a sub-exponential
random variable with $\psi_{1}$-norm $\left\Vert M_{2,i}^{\boldsymbol{u},\boldsymbol{v}}\right\Vert _{\psi_{1}}\leq C\kappa/\kappa_{*}$.
Therefore, by Lemma \ref{lem:subgauss_subexp_properties}, for $\delta\in\left(0,1\right)$,
with probability at most $C\exp\left(-C\delta^{2}N\kappa_{*}^{2}/\kappa^{2}\right)$,
\[
\left|\left\langle \boldsymbol{u},\boldsymbol{M}_{2}\boldsymbol{v}\right\rangle \right|=\left|\frac{1}{N}\sum_{i=1}^{N}M_{2,i}^{\boldsymbol{u},\boldsymbol{v}}\right|\geq\delta.
\]
Now we can reuse the same epsilon-net argument in the analysis of
$\boldsymbol{M}_{1,1}$ to obtain:
\[
\mathbb{P}\left\{ \left\Vert \boldsymbol{M}_{2}\right\Vert _{{\rm op}}\geq\delta\right\} \leq C\exp\left(Cd-C\delta^{2}N\kappa_{*}^{2}/\kappa^{2}\right).
\]

\paragraph{Step 3: Putting all together.}

From the bounds on $\left\Vert \boldsymbol{M}_{1}\right\Vert _{{\rm op}}$
and $\left\Vert \boldsymbol{M}_{2}\right\Vert _{{\rm op}}$, we obtain:
\begin{align*}
\mathbb{P}\left\{ \left\Vert \frac{1}{N}\sum_{i=1}^{N}\nabla_{11}^{2}U\left(\boldsymbol{\zeta},\boldsymbol{\theta}_{i}\right)\right\Vert _{{\rm op}}\geq C_{*}\right\}  & \leq C\exp\left(Cd-C\delta^{2}N/\kappa^{2}\right)+C\exp\left(Cd-CN\kappa_{*}^{2}/\kappa^{2}\right)\\
 & \leq C\exp\left(Cd-CN\kappa_{*}^{2}/\kappa^{2}\right),
\end{align*}
for sufficiently large $C_{*}$, recalling $\kappa_{*}\leq C$ and
choosing suitable $\delta\leq C\kappa_{*}$. This completes the proof.

\end{proof}
\begin{prop}
\label{prop:1st_setting_U_bound}Consider setting \ref{enu:ReLU_setting}.
We have, for some sufficiently large $C_{*}$, with probability at
least $1-\exp\left(Cd\log\left(\kappa/\kappa_{*}+e\right)-CN\kappa_{*}^{2}/\kappa^{2}\right)$,
\[
\sup_{\left\Vert \boldsymbol{r}\right\Vert _{\infty}\leq r_{*}}\sup_{\boldsymbol{\zeta}\in\mathbb{R}^{d}}\left\Vert \frac{1}{N}\sum_{i=1}^{N}\nabla_{11}^{2}U\left(\boldsymbol{\zeta},\boldsymbol{R}{\rm diag}\left(\boldsymbol{r}\right)\boldsymbol{R}^{\top}\boldsymbol{\theta}_{i}\right)\right\Vert _{{\rm op}}\leq r_{*}^{2}C_{*},
\]
in which $\left(\boldsymbol{\theta}_{i}\right)_{i\leq N}\sim_{{\rm i.i.d.}}\mathsf{N}\left(0,\boldsymbol{I}_{d}/d\right)$
and $r_{*}\geq0$. Here $C_{*}$ does not depend on $d$, $N$ or
$r_{*}$.
\end{prop}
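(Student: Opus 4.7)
The plan is to reduce the double supremum to a union bound over a finite net of pairs $(\boldsymbol{\zeta},\boldsymbol{r})$ and then apply Lemma \ref{lem:Unorm_firstSetting} at each net point. The net has size of order $(\kappa/\kappa_{*})^{Cd}$, which combined with the lemma's failure probability $\exp(Cd-CN\kappa_{*}^{2}/\kappa^{2})$ matches the target $\exp(Cd\log(\kappa/\kappa_{*}+e)-CN\kappa_{*}^{2}/\kappa^{2})$.

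\textbf{Step 1 (Reductions).} The case $r_{*}=0$ is trivial. For $r_{*}>0$, write $\boldsymbol{R}{\rm diag}(\boldsymbol{r})\boldsymbol{R}^{\top}=r_{*}\tilde{\boldsymbol{D}}$, where $\tilde{\boldsymbol{D}}=\boldsymbol{R}{\rm diag}(\boldsymbol{r}/r_{*})\boldsymbol{R}^{\top}$ has $\|\tilde{\boldsymbol{D}}\|_{{\rm op}}\leq 1$. The positive homogeneity of ReLU ($\sigma(r_{*}u)=r_{*}\sigma(u)$ for $r_{*}>0$) makes $\nabla_{11}^{2}U(\boldsymbol{\zeta},\cdot)$ quadratically homogeneous in the second argument, so $\nabla_{11}^{2}U(\boldsymbol{\zeta},r_{*}\tilde{\boldsymbol{D}}\boldsymbol{\theta}_{i})=r_{*}^{2}\nabla_{11}^{2}U(\boldsymbol{\zeta},\tilde{\boldsymbol{D}}\boldsymbol{\theta}_{i})$; it therefore suffices to prove the bound with $r_{*}=1$ and $\boldsymbol{r}\in[-1,1]^{d}$. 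Inspecting the formulas in the proofs of Proposition \ref{prop:1st_setting_op_bound} and Lemma \ref{lem:Unorm_firstSetting}, $\nabla_{11}^{2}U(\boldsymbol{\zeta},\cdot)$ is invariant under positive rescaling of $\boldsymbol{\zeta}$ ($\sigma'(\cdot)$ is $0$-homogeneous; in the $\sigma''$-term, the factor $\langle\boldsymbol{\zeta},\boldsymbol{\theta}'\rangle$ cancels the $1/\|\boldsymbol{\Sigma\zeta}\|_{2}$ appearing after the Dirac reduction, and $\text{Proj}_{\boldsymbol{\Sigma\zeta}}^{\perp}$ depends only on the direction of $\boldsymbol{\zeta}$). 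Hence the supremum over $\boldsymbol{\zeta}\in\mathbb{R}^{d}$ reduces to one over $\boldsymbol{\zeta}\in\mathbb{S}^{d-1}$.

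\textbf{Step 2 (Holder/Lipschitz continuity in $(\boldsymbol{\zeta},\boldsymbol{r})$).} Set $f(\boldsymbol{\zeta},\boldsymbol{r})=\frac{1}{N}\sum_{i}\nabla_{11}^{2}U(\boldsymbol{\zeta},\boldsymbol{R}{\rm diag}(\boldsymbol{r})\boldsymbol{R}^{\top}\boldsymbol{\theta}_{i})$. The aim is to show that for $\boldsymbol{\zeta}_{j}\in\mathbb{S}^{d-1}$ and $\boldsymbol{r}_{j}\in[-1,1]^{d}$,
\[
\|f(\boldsymbol{\zeta}_{1},\boldsymbol{r}_{1})-f(\boldsymbol{\zeta}_{2},\boldsymbol{r}_{2})\|_{{\rm op}}\leq L\,M_{N}\,\bigl(\|\boldsymbol{\zeta}_{1}-\boldsymbol{\zeta}_{2}\|_{2}^{1/3}+\|\boldsymbol{r}_{1}-\boldsymbol{r}_{2}\|_{\infty}\bigr)
\]
with $L={\rm poly}(\kappa/\kappa_{*})$ and $M_{N}=\frac{1}{N}\sum_{i}(\|\boldsymbol{\theta}_{i}\|_{2}^{2}+1)$. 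This is the crucial technical piece and the main obstacle, since ReLU is not smooth: $\sigma'$ jumps and $\sigma''$ is a Dirac mass. For the $\boldsymbol{\zeta}$-regularity of the $\boldsymbol{M}_{1}$-part (carrying $\sigma'$), I would write $\sigma'(\langle\kappa\boldsymbol{\zeta}_{1},\boldsymbol{x}\rangle)-\sigma'(\langle\kappa\boldsymbol{\zeta}_{2},\boldsymbol{x}\rangle)$ as an indicator on a spherical wedge of Gaussian mass $O(\|\boldsymbol{\zeta}_{1}-\boldsymbol{\zeta}_{2}\|_{2}/\kappa_{*})$, and then apply a three-term Hölder split as in the proof of Proposition \ref{prop:1st_setting_op_bound}; this produces the $1/3$-Hölder estimate. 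For the $\boldsymbol{M}_{2}$-part I would invoke the post-integration formula derived in the proof of Lemma \ref{lem:Unorm_firstSetting}, whose $\boldsymbol{\zeta}$-dependence is through $1/\|\boldsymbol{\Sigma\zeta}\|_{2}$, $\boldsymbol{D}^{\top}\boldsymbol{\zeta}$, and the projection $\text{Proj}_{\boldsymbol{\Sigma\zeta}}^{\perp}$, each Lipschitz on $\mathbb{S}^{d-1}$ with constant ${\rm poly}(1/\kappa_{*})$, together with the $1$-Lipschitz-ness of $\sigma$. The $\boldsymbol{r}$-regularity is easier: $\|(\boldsymbol{D}_{1}-\boldsymbol{D}_{2})\boldsymbol{\theta}_{i}\|_{2}\leq\|\boldsymbol{r}_{1}-\boldsymbol{r}_{2}\|_{\infty}\|\boldsymbol{\theta}_{i}\|_{2}$, combined with a Lipschitz-in-second-argument estimate for $\nabla_{11}^{2}U(\boldsymbol{\zeta},\cdot)$ obtained by the same delta-function reduction used for $\boldsymbol{M}_{2}$.

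\textbf{Step 3 (Net, pointwise control, union bound).} Pick $\epsilon=c(\kappa_{*}/\kappa)^{C}$ small enough that $L\,M_{N}\,\epsilon^{1/3}\leq C_{*}/2$ on the event $\{M_{N}\leq C\}$; by standard Gaussian concentration of $\|\boldsymbol{\theta}_{i}\|_{2}^{2}$ around $1$, this event has probability at least $1-e^{-cN}$. Construct an $\epsilon$-net ${\cal N}_{\boldsymbol{\zeta}}\subset\mathbb{S}^{d-1}$ with $|{\cal N}_{\boldsymbol{\zeta}}|\leq(C/\epsilon)^{d}$, and an $\ell_{\infty}$ $\epsilon$-net ${\cal N}_{\boldsymbol{r}}\subset[-1,1]^{d}$ with $|{\cal N}_{\boldsymbol{r}}|\leq(C/\epsilon)^{d}$. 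For each $(\boldsymbol{\zeta}_{0},\boldsymbol{r}_{0})\in{\cal N}_{\boldsymbol{\zeta}}\times{\cal N}_{\boldsymbol{r}}$, Lemma \ref{lem:Unorm_firstSetting} (applied to the fixed $\boldsymbol{\zeta}_{0}$ and $\boldsymbol{D}_{0}=\boldsymbol{R}{\rm diag}(\boldsymbol{r}_{0})\boldsymbol{R}^{\top}$, $\|\boldsymbol{D}_{0}\|_{{\rm op}}\leq 1$) yields $\|f(\boldsymbol{\zeta}_{0},\boldsymbol{r}_{0})\|_{{\rm op}}\leq C_{*}/2$ with probability $\geq 1-C\exp(Cd-CN\kappa_{*}^{2}/\kappa^{2})$. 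A union bound over the $(C/\epsilon)^{2d}\leq\exp(Cd\log(\kappa/\kappa_{*}+e))$ pairs, combined with the Hölder transfer from Step 2 (which promotes pointwise to uniform control up to the slack $C_{*}/2$), gives $\sup_{\boldsymbol{\zeta},\boldsymbol{r}}\|f(\boldsymbol{\zeta},\boldsymbol{r})\|_{{\rm op}}\leq C_{*}$ with probability at least $1-\exp(Cd\log(\kappa/\kappa_{*}+e)-CN\kappa_{*}^{2}/\kappa^{2})$. Reinstating the factor $r_{*}^{2}$ from Step 1 concludes the proof.
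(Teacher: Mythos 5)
Your overall plan---reduce the two-parameter supremum by a scaling argument, control the modulus of continuity in $(\boldsymbol{\zeta},\boldsymbol{r})$, run a net-and-union-bound argument on top of Lemma~\ref{lem:Unorm_firstSetting}---is correct and matches the architecture of the paper's proof. The paper also reduces to $\|\boldsymbol{r}\|_{\infty}\leq 1$ and to $\boldsymbol{\zeta}$ in an annulus $\mathcal{B}_d(1)\setminus\mathcal{B}_d(1/2)$ (you use $\mathbb{S}^{d-1}$; either works), then covers with $\varepsilon$-nets of mesh $\gamma\simeq\kappa_*^2/\kappa^2$ and applies Lemma~\ref{lem:Unorm_firstSetting} at each net point, exactly as you describe.

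Where you diverge is in Step~2, the modulus-of-continuity estimate, and I want to flag both the difference and a subtle point in your sketch. The paper applies the mean value theorem directly to $\nabla_{11}^{2}U$, which produces the third Fr\'echet derivatives $\nabla_{111}^{3}U$ and $\nabla_{121}^{3}U$; these involve $\sigma''$ and even $\sigma'''$, which for ReLU are a Dirac mass and its derivative, but after averaging against the Gaussian density of $\boldsymbol{x}$ the map $\boldsymbol{\zeta}\mapsto\nabla_{11}^{2}U(\boldsymbol{\zeta},\cdot)$ is genuinely $C^1$ on $\{\boldsymbol{\zeta}\neq\boldsymbol{0}\}$, and the paper bounds $\|\nabla_{111}^{3}U\|_{\rm op}$ and $\|\nabla_{121}^{3}U\|_{\rm op}$ by $C\kappa^2\kappa_*^{-2}\|\boldsymbol{\theta}_i\|_2^2$ via the Dirac conditioning and an extra integration by parts for $\sigma'''$. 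This yields a genuine \emph{Lipschitz} bound with slope ${\rm poly}(\kappa/\kappa_*)$. Your alternative---a spherical-wedge estimate giving $|\mathbb{E}\{\sigma'(\langle\kappa\boldsymbol{\zeta}_1,\boldsymbol{x}\rangle)-\sigma'(\langle\kappa\boldsymbol{\zeta}_2,\boldsymbol{x}\rangle)|^p\}^{1/p}\lesssim(\|\boldsymbol{\zeta}_1-\boldsymbol{\zeta}_2\|_2/\kappa_*)^{1/p}$ and then a three-factor H\"older split---only produces a $1/3$-H\"older bound. That is enough here, because the loss of exponent is absorbed into a finer net (mesh $\kappa_*^{O(1)}/\kappa^{O(1)}$ instead of $\kappa_*^2/\kappa^2$) and hence into a larger but still ${\rm poly}(\kappa/\kappa_*)^{d}$-sized union bound, which gives the same $\exp(Cd\log(\kappa/\kappa_*+e)-CN\kappa_*^2/\kappa^2)$ failure probability. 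So your route is sound; it is slightly more elementary (no distributional third derivatives for the $\sigma'$ piece) but slightly less sharp. You do still need the post-integration Dirac reduction for the $\sigma''$ piece of $\nabla_{11}^{2}U$, which is essentially the paper's $\boldsymbol{M}_2$ computation from Lemma~\ref{lem:Unorm_firstSetting}; it would be worth writing out the Lipschitz estimate for that piece explicitly (on $\mathbb{S}^{d-1}$ the map $\boldsymbol{\zeta}\mapsto{\rm Proj}_{\boldsymbol{\Sigma}\boldsymbol{\zeta}}^{\perp}$ has Lipschitz constant of order $\kappa_*^{-1}$, not $O(1)$, and this is where the extra $\kappa_*^{-1}$ factors come from), and to verify that your claimed event $\{M_N\leq C\}$, obtained from concentration of $\frac{1}{N}\sum\|\boldsymbol{\theta}_i\|_2^2$, holds with at least the required probability.
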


\begin{proof}
The proof leverages on Lemma \ref{lem:Unorm_firstSetting} and comprises
of several steps. First of all, we note that $\boldsymbol{R}^{\top}\boldsymbol{\theta}_{i}\stackrel{{\rm d}}{=}\boldsymbol{\theta}_{i}$
since $\boldsymbol{R}$ is orthogonal. Hence we can equivalently study
the quantity:
\[
Q=\sup_{\left\Vert \boldsymbol{r}\right\Vert _{\infty}\leq r_{*}}\sup_{\boldsymbol{\zeta}\in\mathbb{R}^{d}}\left\Vert \frac{1}{N}\sum_{i=1}^{N}\nabla_{11}^{2}U\left(\boldsymbol{\zeta},\boldsymbol{R}{\rm diag}\left(\boldsymbol{r}\right)\boldsymbol{\theta}_{i}\right)\right\Vert _{{\rm op}}.
\]

\paragraph*{Step 1: Reduction of the supremization set.}

First recall that
\[
\frac{1}{N}\sum_{i=1}^{N}\nabla_{11}^{2}U\left(\boldsymbol{\zeta},\boldsymbol{R}{\rm diag}\left(\boldsymbol{r}\right)\boldsymbol{\theta}_{i}\right)=\boldsymbol{M}_{1}\left(\boldsymbol{\zeta},\boldsymbol{r}\right)+\boldsymbol{M}_{1}\left(\boldsymbol{\zeta},\boldsymbol{r}\right)^{\top}+\boldsymbol{M}_{2}\left(\boldsymbol{\zeta},\boldsymbol{r}\right)\in\mathbb{R}^{d\times d},
\]
for which
\begin{align*}
\boldsymbol{M}_{1}\left(\boldsymbol{\zeta},\boldsymbol{r}\right) & =\frac{1}{N}\sum_{i=1}^{N}\kappa^{3}\mathbb{E}_{{\cal P}}\left\{ \sigma'\left(\left\langle \kappa\boldsymbol{\zeta},\boldsymbol{x}\right\rangle \right)\sigma\left(\left\langle \kappa\boldsymbol{R}{\rm diag}\left(\boldsymbol{r}\right)\boldsymbol{\theta}_{i},\boldsymbol{x}\right\rangle \right)\boldsymbol{R}{\rm diag}\left(\boldsymbol{r}\right)\boldsymbol{\theta}_{i}\boldsymbol{x}^{\top}\right\} ,\\
\boldsymbol{M}_{2}\left(\boldsymbol{\zeta},\boldsymbol{r}\right) & =\frac{1}{N}\sum_{i=1}^{N}\kappa^{4}\mathbb{E}_{{\cal P}}\left\{ \left\langle \boldsymbol{\zeta},\boldsymbol{R}{\rm diag}\left(\boldsymbol{r}\right)\boldsymbol{\theta}_{i}\right\rangle \sigma''\left(\left\langle \kappa\boldsymbol{\zeta},\boldsymbol{x}\right\rangle \right)\sigma\left(\left\langle \kappa\boldsymbol{R}{\rm diag}\left(\boldsymbol{r}\right)\boldsymbol{\theta}_{i},\boldsymbol{x}\right\rangle \right)\boldsymbol{x}\boldsymbol{x}^{\top}\right\} .
\end{align*}
We make a few observations. Firstly, for any $c>0$, since $\sigma$
is the ReLU, $\boldsymbol{M}_{1}\left(c\boldsymbol{\zeta},\boldsymbol{r}\right)=\boldsymbol{M}_{1}\left(\boldsymbol{\zeta},\boldsymbol{r}\right)$
and $\boldsymbol{M}_{1}\left(\boldsymbol{\zeta},c\boldsymbol{r}\right)=c^{2}\boldsymbol{M}_{1}\left(\boldsymbol{\zeta},\boldsymbol{r}\right)$.
Secondly, as shown in the proof of Lemma \ref{lem:Unorm_firstSetting},
\[
\boldsymbol{M}_{2}\left(\boldsymbol{\zeta},\boldsymbol{r}\right)=\frac{\kappa}{\sqrt{2\pi}\left\Vert \boldsymbol{\Sigma}\boldsymbol{\zeta}\right\Vert _{2}}\mathbb{E}_{\boldsymbol{z}}\left\{ \left\langle {\rm diag}\left(\boldsymbol{r}\right)\boldsymbol{R}^{\top}\boldsymbol{\zeta},\frac{1}{N}\boldsymbol{Q}^{\top}\sigma\left(\frac{1}{\kappa}\boldsymbol{Q}{\rm diag}\left(\boldsymbol{r}\right)\boldsymbol{R}^{\top}\boldsymbol{S}\boldsymbol{z}\right)\right\rangle \boldsymbol{S}\boldsymbol{z}\boldsymbol{z}^{\top}\boldsymbol{S}^{\top}\right\} ,
\]
for $\boldsymbol{z}\sim\mathsf{N}\left(0,\boldsymbol{I}_{d}\right)$
(see the proof of Lemma \ref{lem:Unorm_firstSetting} for the definitions
of $\boldsymbol{Q}$ and $\boldsymbol{S}$, which are unimportant
here). It is then easy to see that $\boldsymbol{M}_{2}\left(c\boldsymbol{\zeta},\boldsymbol{r}\right)=\boldsymbol{M}_{2}\left(\boldsymbol{\zeta},\boldsymbol{r}\right)$
and $\boldsymbol{M}_{2}\left(\boldsymbol{\zeta},c\boldsymbol{r}\right)=c^{2}\boldsymbol{M}_{2}\left(\boldsymbol{\zeta},\boldsymbol{r}\right)$.
Therefore, we obtain the following simplification:
\begin{equation}
Q=r_{*}^{2}\sup_{\left\Vert \boldsymbol{r}\right\Vert _{\infty}\leq1}\sup_{\boldsymbol{\zeta}\in{\cal S}}\left\Vert \frac{1}{N}\sum_{i=1}^{N}\nabla_{11}^{2}U\left(\boldsymbol{\zeta},\boldsymbol{R}{\rm diag}\left(\boldsymbol{r}\right)\boldsymbol{\theta}_{i}\right)\right\Vert _{{\rm op}}\label{eq:prop_firstSetting_supUnorm_set}
\end{equation}
for ${\cal S}={\cal B}_{d}\left(1\right)\backslash{\cal B}_{d}\left(1/2\right)$.
Here the exclusion of $\boldsymbol{\zeta}=\boldsymbol{0}$ from ${\cal S}$
can be easily reasoned by a continuity argument.

\paragraph*{Step 2: Epsilon-net argument.}

From here onwards, we focus on the supremization over $\boldsymbol{\zeta}\in{\cal S}$
and $\left\Vert \boldsymbol{r}\right\Vert _{\infty}\leq1$. Fix $\gamma\in\left(0,1/3\right)$.
Consider an epsilon-net ${\cal N}_{d}^{\infty}\left(\gamma\right)\subset\left\{ \boldsymbol{r}:\;\left\Vert \boldsymbol{r}\right\Vert _{\infty}\leq1\right\} $
such that for any $\boldsymbol{r}$ with $\left\Vert \boldsymbol{r}\right\Vert _{\infty}\leq1$,
there exists $\boldsymbol{r}'\in{\cal N}_{d}^{\infty}\left(\gamma\right)$
with $\left\Vert \boldsymbol{r}-\boldsymbol{r}'\right\Vert _{\infty}\leq\gamma$.
Likewise, consider an epsilon-net ${\cal N}_{d}^{2}\left(\gamma\right)\subset{\cal S}$
in which for any $\boldsymbol{\zeta}\in{\cal S}$, there exists $\boldsymbol{\zeta}'\in{\cal N}_{d}^{2}\left(\gamma\right)$
such that $\left\Vert \boldsymbol{\zeta}-\boldsymbol{\zeta}'\right\Vert _{2}\leq\gamma$.
Note that ${\cal N}_{d}^{2}\left(\gamma\right)\subset{\cal B}_{d}\left(1\right)$.
A standard volumetric argument \cite{vershynin2010introduction} shows
that there exist such epsilon-nets with sizes 
\[
\left|{\cal N}_{d}^{\infty}\left(\gamma\right)\right|,\left|{\cal N}_{d}^{2}\left(\gamma\right)\right|\leq\left(\frac{3}{\gamma}\right)^{d}.
\]
Consider $\boldsymbol{r}$ and $\boldsymbol{r}'\in{\cal N}_{d}^{\infty}\left(\gamma\right)$
such that $\left\Vert \boldsymbol{r}\right\Vert _{\infty}\leq1$ and
$\left\Vert \boldsymbol{r}-\boldsymbol{r}'\right\Vert _{\infty}\leq\gamma$,
and $\boldsymbol{\zeta}\in{\cal S}$ and $\boldsymbol{\zeta}'\in{\cal N}_{d}^{2}\left(\gamma\right)$
such that $\left\Vert \boldsymbol{\zeta}-\boldsymbol{\zeta}'\right\Vert _{2}\leq\gamma$.
We have from the mean value theorem:
\begin{align}
 & \left|\left\Vert \frac{1}{N}\sum_{i=1}^{N}\nabla_{11}^{2}U\left(\boldsymbol{\zeta},\boldsymbol{R}{\rm diag}\left(\boldsymbol{r}\right)\boldsymbol{\theta}_{i}\right)\right\Vert _{{\rm op}}-\left\Vert \frac{1}{N}\sum_{i=1}^{N}\nabla_{11}^{2}U\left(\boldsymbol{\zeta}',\boldsymbol{R}{\rm diag}\left(\boldsymbol{r}'\right)\boldsymbol{\theta}_{i}\right)\right\Vert _{{\rm op}}\right|\nonumber \\
 & \quad\leq\left\Vert \frac{1}{N}\sum_{i=1}^{N}\nabla_{11}^{2}U\left(\boldsymbol{\zeta},\boldsymbol{R}{\rm diag}\left(\boldsymbol{r}\right)\boldsymbol{\theta}_{i}\right)-\nabla_{11}^{2}U\left(\boldsymbol{\zeta}',\boldsymbol{R}{\rm diag}\left(\boldsymbol{r}\right)\boldsymbol{\theta}_{i}\right)\right\Vert _{{\rm op}}\nonumber \\
 & \quad\qquad+\left\Vert \frac{1}{N}\sum_{i=1}^{N}\nabla_{11}^{2}U\left(\boldsymbol{\zeta}',\boldsymbol{R}{\rm diag}\left(\boldsymbol{r}\right)\boldsymbol{\theta}_{i}\right)-\nabla_{11}^{2}U\left(\boldsymbol{\zeta}',\boldsymbol{R}{\rm diag}\left(\boldsymbol{r}'\right)\boldsymbol{\theta}_{i}\right)\right\Vert _{{\rm op}}\nonumber \\
 & \quad\stackrel{\left(a\right)}{\leq}\frac{1}{N}\sum_{i=1}^{N}\left\Vert \nabla_{111}^{3}U\left[\boldsymbol{u}_{i},\boldsymbol{R}{\rm diag}\left(\boldsymbol{r}\right)\boldsymbol{\theta}_{i}\right]\right\Vert _{{\rm op}}\left\Vert \boldsymbol{\zeta}-\boldsymbol{\zeta}'\right\Vert _{2}\nonumber \\
 & \quad\qquad+\frac{1}{N}\sum_{i=1}^{N}\left\Vert \nabla_{121}^{3}U\left[\boldsymbol{\zeta}',\boldsymbol{v}_{i}\right]\right\Vert _{{\rm op}}\left\Vert \boldsymbol{R}{\rm diag}\left(\boldsymbol{r}-\boldsymbol{r}'\right)\boldsymbol{\theta}_{i}\right\Vert _{2}\nonumber \\
 & \quad\stackrel{\left(b\right)}{\leq}\frac{1}{N}\sum_{i=1}^{N}\left\Vert \nabla_{111}^{3}U\left[\boldsymbol{u}_{i},\boldsymbol{R}{\rm diag}\left(\boldsymbol{r}\right)\boldsymbol{\theta}_{i}\right]\right\Vert _{{\rm op}}\gamma+\frac{1}{N}\sum_{i=1}^{N}C\frac{\kappa^{2}}{\kappa_{*}}\left\Vert \boldsymbol{v}_{i}\right\Vert _{2}\left\Vert \boldsymbol{\theta}_{i}\right\Vert _{2}\gamma\nonumber \\
 & \quad\stackrel{\left(c\right)}{\leq}\frac{1}{N}\sum_{i=1}^{N}\left\Vert \nabla_{111}^{3}U\left[\boldsymbol{u}_{i},\boldsymbol{R}{\rm diag}\left(\boldsymbol{r}\right)\boldsymbol{\theta}_{i}\right]\right\Vert _{{\rm op}}\gamma+\frac{1}{N}\sum_{i=1}^{N}C\frac{\kappa^{2}}{\kappa_{*}}\left\Vert \boldsymbol{\theta}_{i}\right\Vert _{2}^{2}\gamma,\label{eq:prop_firstSetting_supUnorm_gap_1}
\end{align}
where in step $\left(a\right)$, we have $\boldsymbol{u}_{i}\in\left[\boldsymbol{\zeta},\boldsymbol{\zeta}'\right]$
and $\boldsymbol{v}_{i}\in\left[\boldsymbol{R}{\rm diag}\left(\boldsymbol{r}\right)\boldsymbol{\theta}_{i},\boldsymbol{R}{\rm diag}\left(\boldsymbol{r}'\right)\boldsymbol{\theta}_{i}\right]$;
in step $\left(b\right)$, we apply Proposition \ref{prop:1st_setting_op_bound};
in step $\left(c\right)$, we use the fact that
\[
\left\Vert \boldsymbol{v}_{i}\right\Vert _{2}\leq\left\Vert \boldsymbol{R}{\rm diag}\left(\boldsymbol{r}\right)\boldsymbol{\theta}_{i}\right\Vert _{2}+\left\Vert \boldsymbol{R}{\rm diag}\left(\boldsymbol{r}-\boldsymbol{r}'\right)\boldsymbol{\theta}_{i}\right\Vert _{2}\leq\left\Vert \boldsymbol{\theta}_{i}\right\Vert _{2}+\gamma\left\Vert \boldsymbol{\theta}_{i}\right\Vert _{2}\leq2\left\Vert \boldsymbol{\theta}_{i}\right\Vert _{2}.
\]
We note that since $\boldsymbol{u}_{i}\in\left[\boldsymbol{\zeta},\boldsymbol{\zeta}'\right]$,
\begin{equation}
\left\Vert \boldsymbol{u}_{i}\right\Vert _{2}\geq\left\Vert \boldsymbol{\zeta}'\right\Vert _{2}-\left\Vert \boldsymbol{\zeta}-\boldsymbol{\zeta}'\right\Vert _{2}\geq1/2-1/3=1/6.\label{eq:prop_firstSetting_supUnorm_u}
\end{equation}
To simplify the notations, let $\tilde{\boldsymbol{\theta}}_{i}=\boldsymbol{R}{\rm diag}\left(\boldsymbol{r}\right)\boldsymbol{\theta}_{i}$,
and note that $\left\Vert \tilde{\boldsymbol{\theta}}_{i}\right\Vert _{2}\leq\left\Vert \boldsymbol{\theta}_{i}\right\Vert _{2}$.
We have:
\[
\nabla_{111}^{3}U\left[\boldsymbol{u}_{i},\tilde{\boldsymbol{\theta}}_{i}\right]=\boldsymbol{M}_{1,i}+\boldsymbol{M}_{2,i}+\boldsymbol{M}_{3,i}+\boldsymbol{M}_{4,i}\in\left(\mathbb{R}^{d}\right)^{\otimes3},
\]
for which
\begin{align*}
\boldsymbol{M}_{1,i} & =\kappa^{4}\mathbb{E}_{{\cal P}}\left\{ \sigma''\left(\left\langle \kappa\boldsymbol{u}_{i},\boldsymbol{x}\right\rangle \right)\sigma\left(\left\langle \kappa\tilde{\boldsymbol{\theta}}_{i},\boldsymbol{x}\right\rangle \right)\boldsymbol{x}\otimes\tilde{\boldsymbol{\theta}}_{i}\otimes\boldsymbol{x}\right\} ,\\
\boldsymbol{M}_{2,i} & =\kappa^{4}\mathbb{E}_{{\cal P}}\left\{ \sigma''\left(\left\langle \kappa\boldsymbol{u}_{i},\boldsymbol{x}\right\rangle \right)\sigma\left(\left\langle \kappa\tilde{\boldsymbol{\theta}}_{i},\boldsymbol{x}\right\rangle \right)\boldsymbol{x}\otimes\boldsymbol{x}\otimes\tilde{\boldsymbol{\theta}}_{i}\right\} ,\\
\boldsymbol{M}_{3,i} & =\kappa^{4}\mathbb{E}_{{\cal P}}\left\{ \sigma''\left(\left\langle \kappa\boldsymbol{u}_{i},\boldsymbol{x}\right\rangle \right)\sigma\left(\left\langle \kappa\tilde{\boldsymbol{\theta}}_{i},\boldsymbol{x}\right\rangle \right)\tilde{\boldsymbol{\theta}}_{i}\otimes\boldsymbol{x}\otimes\boldsymbol{x}\right\} ,\\
\boldsymbol{M}_{4,i} & =\kappa^{5}\mathbb{E}_{{\cal P}}\left\{ \left\langle \boldsymbol{u}_{i},\tilde{\boldsymbol{\theta}}_{i}\right\rangle \sigma'''\left(\left\langle \kappa\boldsymbol{u}_{i},\boldsymbol{x}\right\rangle \right)\sigma\left(\left\langle \kappa\tilde{\boldsymbol{\theta}}_{i},\boldsymbol{x}\right\rangle \right)\boldsymbol{x}\otimes\boldsymbol{x}\otimes\boldsymbol{x}\right\} .
\end{align*}
Note that $\left\Vert \boldsymbol{M}_{1,i}\right\Vert _{{\rm op}}=\left\Vert \boldsymbol{M}_{2,i}\right\Vert _{{\rm op}}=\left\Vert \boldsymbol{M}_{3,i}\right\Vert _{{\rm op}}$.
Then Eq. (\ref{eq:prop_firstSetting_supUnorm_set}) and (\ref{eq:prop_firstSetting_supUnorm_gap_1})
yield
\begin{equation}
\left|Q-Q_{\gamma}\right|\leq r_{*}^{2}\frac{1}{N}\sum_{i=1}^{N}\left(3\left\Vert \boldsymbol{M}_{1,i}\right\Vert _{{\rm op}}+\left\Vert \boldsymbol{M}_{4,i}\right\Vert _{{\rm op}}\right)\gamma+r_{*}^{2}\frac{1}{N}\sum_{i=1}^{N}C\frac{\kappa^{2}}{\kappa_{*}}\left\Vert \boldsymbol{\theta}_{i}\right\Vert _{2}^{2}\gamma,\label{eq:prop_firstSetting_supUnorm_gap_2}
\end{equation}
in which we define:
\[
Q_{\gamma}=r_{*}^{2}\max_{\boldsymbol{r}\in{\cal N}_{d}^{\infty}\left(\gamma\right)}\max_{\boldsymbol{\zeta}\in{\cal N}_{d}^{2}\left(\gamma\right)}\left\Vert \frac{1}{N}\sum_{i=1}^{N}\nabla_{11}^{2}U\left(\boldsymbol{\zeta},\boldsymbol{R}{\rm diag}\left(\boldsymbol{r}\right)\boldsymbol{\theta}_{i}\right)\right\Vert _{{\rm op}}.
\]
The next two steps are devoted to bounding $\left\Vert \boldsymbol{M}_{1,i}\right\Vert _{{\rm op}}$
and $\left\Vert \boldsymbol{M}_{4,i}\right\Vert _{{\rm op}}$.

\paragraph*{Step 3: Bounding $\left\Vert \boldsymbol{M}_{1,i}\right\Vert _{{\rm op}}$.}

To bound $\left\Vert \boldsymbol{M}_{1,i}\right\Vert _{{\rm op}}$,
we have for any $\boldsymbol{a},\boldsymbol{b},\boldsymbol{c}\in\mathbb{R}^{d}$:
\begin{align*}
\left\langle \boldsymbol{M}_{1,i},\boldsymbol{a}\otimes\boldsymbol{b}\otimes\boldsymbol{c}\right\rangle  & =\kappa^{4}\mathbb{E}_{{\cal P}}\left\{ \sigma''\left(\left\langle \kappa\boldsymbol{u}_{i},\boldsymbol{x}\right\rangle \right)\sigma\left(\left\langle \kappa\tilde{\boldsymbol{\theta}}_{i},\boldsymbol{x}\right\rangle \right)\left\langle \boldsymbol{a},\boldsymbol{x}\right\rangle \left\langle \boldsymbol{b},\tilde{\boldsymbol{\theta}}_{i}\right\rangle \left\langle \boldsymbol{c},\boldsymbol{x}\right\rangle \right\} \\
 & =\kappa^{2}\mathbb{E}_{\boldsymbol{z}}\left\{ \sigma''\left(\left\langle \boldsymbol{\Sigma}\boldsymbol{u}_{i},\boldsymbol{z}\right\rangle \right)\sigma\left(\left\langle \boldsymbol{\Sigma}\tilde{\boldsymbol{\theta}}_{i},\boldsymbol{z}\right\rangle \right)\left\langle \boldsymbol{\Sigma}\boldsymbol{a},\boldsymbol{z}\right\rangle \left\langle \boldsymbol{b},\tilde{\boldsymbol{\theta}}_{i}\right\rangle \left\langle \boldsymbol{\Sigma}\boldsymbol{c},\boldsymbol{z}\right\rangle \right\} ,
\end{align*}
where $\boldsymbol{z}\sim\mathsf{N}\left(0,\boldsymbol{I}_{d}\right)$.
Notice that for $w_{i}=\left\langle \boldsymbol{\Sigma}\boldsymbol{u}_{i},\boldsymbol{z}\right\rangle \sim\mathsf{N}\left(0,\left\Vert \boldsymbol{\Sigma}\boldsymbol{u}_{i}\right\Vert _{2}^{2}\right)$,
\[
\left(w_{i},\boldsymbol{z}\right)\stackrel{{\rm d}}{=}\left(w_{i},{\rm Proj}_{\boldsymbol{\Sigma}\boldsymbol{u}_{i}}^{\perp}\tilde{\boldsymbol{z}}+\frac{w_{i}}{\left\Vert \boldsymbol{\Sigma}\boldsymbol{u}_{i}\right\Vert _{2}^{2}}\boldsymbol{\Sigma}\boldsymbol{u}_{i}\right),
\]
in which $\tilde{\boldsymbol{z}}\sim\mathsf{N}\left(0,\boldsymbol{I}_{d}\right)$
independent of $w_{i}$. Therefore,
\begin{align*}
 & \left\langle \boldsymbol{M}_{1,i},\boldsymbol{a}\otimes\boldsymbol{b}\otimes\boldsymbol{c}\right\rangle \\
 & \quad=\kappa^{2}\mathbb{E}_{w_{i},\tilde{\boldsymbol{z}}}\Bigg\{\sigma''\left(w_{i}\right)\sigma\left(\left\langle \boldsymbol{\Sigma}\tilde{\boldsymbol{\theta}}_{i},{\rm Proj}_{\boldsymbol{\Sigma}\boldsymbol{u}_{i}}^{\perp}\tilde{\boldsymbol{z}}+\frac{w_{i}}{\left\Vert \boldsymbol{\Sigma}\boldsymbol{u}_{i}\right\Vert _{2}^{2}}\boldsymbol{\Sigma}\boldsymbol{u}_{i}\right\rangle \right)\left\langle \boldsymbol{b},\tilde{\boldsymbol{\theta}}_{i}\right\rangle \\
 & \qquad\times\Bigg[\left\langle \boldsymbol{\Sigma}\boldsymbol{a},{\rm Proj}_{\boldsymbol{\Sigma}\boldsymbol{u}_{i}}^{\perp}\tilde{\boldsymbol{z}}\right\rangle \left\langle \boldsymbol{\Sigma}\boldsymbol{c},{\rm Proj}_{\boldsymbol{\Sigma}\boldsymbol{u}_{i}}^{\perp}\tilde{\boldsymbol{z}}\right\rangle +\left\langle \boldsymbol{\Sigma}\boldsymbol{a},\frac{w_{i}}{\left\Vert \boldsymbol{\Sigma}\boldsymbol{u}_{i}\right\Vert _{2}^{2}}\boldsymbol{\Sigma}\boldsymbol{u}_{i}\right\rangle \left\langle \boldsymbol{\Sigma}\boldsymbol{c},{\rm Proj}_{\boldsymbol{\Sigma}\boldsymbol{u}_{i}}^{\perp}\tilde{\boldsymbol{z}}\right\rangle \\
 & \qquad\quad+\left\langle \boldsymbol{\Sigma}\boldsymbol{a},{\rm Proj}_{\boldsymbol{\Sigma}\boldsymbol{u}_{i}}^{\perp}\tilde{\boldsymbol{z}}\right\rangle \left\langle \boldsymbol{\Sigma}\boldsymbol{c},\frac{w_{i}}{\left\Vert \boldsymbol{\Sigma}\boldsymbol{u}_{i}\right\Vert _{2}^{2}}\boldsymbol{\Sigma}\boldsymbol{u}_{i}\right\rangle +\left\langle \boldsymbol{\Sigma}\boldsymbol{a},\frac{w_{i}}{\left\Vert \boldsymbol{\Sigma}\boldsymbol{u}_{i}\right\Vert _{2}^{2}}\boldsymbol{\Sigma}\boldsymbol{u}_{i}\right\rangle \left\langle \boldsymbol{\Sigma}\boldsymbol{c},\frac{w_{i}}{\left\Vert \boldsymbol{\Sigma}\boldsymbol{u}_{i}\right\Vert _{2}^{2}}\boldsymbol{\Sigma}\boldsymbol{u}_{i}\right\rangle \Bigg]\Bigg\}\\
 & \quad\stackrel{\left(a\right)}{=}\kappa^{2}\mathbb{E}_{w_{i},\tilde{\boldsymbol{z}}}\Bigg\{\sigma''\left(w_{i}\right)\sigma\left(\left\langle \boldsymbol{\Sigma}\tilde{\boldsymbol{\theta}}_{i},{\rm Proj}_{\boldsymbol{\Sigma}\boldsymbol{u}_{i}}^{\perp}\tilde{\boldsymbol{z}}+\frac{w_{i}}{\left\Vert \boldsymbol{\Sigma}\boldsymbol{u}_{i}\right\Vert _{2}^{2}}\boldsymbol{\Sigma}\boldsymbol{u}_{i}\right\rangle \right)\left\langle \boldsymbol{b},\tilde{\boldsymbol{\theta}}_{i}\right\rangle \\
 & \qquad\times\left\langle \boldsymbol{\Sigma}\boldsymbol{a},{\rm Proj}_{\boldsymbol{\Sigma}\boldsymbol{u}_{i}}^{\perp}\tilde{\boldsymbol{z}}\right\rangle \left\langle \boldsymbol{\Sigma}\boldsymbol{c},{\rm Proj}_{\boldsymbol{\Sigma}\boldsymbol{u}_{i}}^{\perp}\tilde{\boldsymbol{z}}\right\rangle \Bigg\}\\
 & \quad\stackrel{\left(b\right)}{=}\frac{\kappa^{2}}{\sqrt{2\pi}\left\Vert \boldsymbol{\Sigma}\boldsymbol{u}_{i}\right\Vert _{2}}\mathbb{E}_{\tilde{\boldsymbol{z}}}\left\{ \sigma\left(\left\langle \boldsymbol{\Sigma}\tilde{\boldsymbol{\theta}}_{i},{\rm Proj}_{\boldsymbol{\Sigma}\boldsymbol{u}_{i}}^{\perp}\tilde{\boldsymbol{z}}\right\rangle \right)\left\langle \boldsymbol{b},\tilde{\boldsymbol{\theta}}_{i}\right\rangle \left\langle \boldsymbol{\Sigma}\boldsymbol{a},{\rm Proj}_{\boldsymbol{\Sigma}\boldsymbol{u}_{i}}^{\perp}\tilde{\boldsymbol{z}}\right\rangle \left\langle \boldsymbol{\Sigma}\boldsymbol{c},{\rm Proj}_{\boldsymbol{\Sigma}\boldsymbol{u}_{i}}^{\perp}\tilde{\boldsymbol{z}}\right\rangle \right\} \\
 & \quad\stackrel{\left(c\right)}{=}\frac{\kappa^{2}}{\sqrt{2\pi}\left\Vert \boldsymbol{\Sigma}\boldsymbol{u}_{i}\right\Vert _{2}}\mathbb{E}_{\tilde{\boldsymbol{z}}}\left\{ \sigma\left(\left\langle \boldsymbol{S}_{i}\tilde{\boldsymbol{\theta}}_{i},\tilde{\boldsymbol{z}}\right\rangle \right)\left\langle \boldsymbol{b},\tilde{\boldsymbol{\theta}}_{i}\right\rangle \left\langle \boldsymbol{S}_{i}\boldsymbol{a},\tilde{\boldsymbol{z}}\right\rangle \left\langle \boldsymbol{S}_{i}\boldsymbol{c},\tilde{\boldsymbol{z}}\right\rangle \right\} \\
 & \quad\stackrel{\left(d\right)}{\leq}C\frac{\kappa^{2}}{\kappa_{*}}\left\Vert \boldsymbol{b}\right\Vert _{2}\left\Vert \tilde{\boldsymbol{\theta}}_{i}\right\Vert _{2}\mathbb{E}_{\tilde{\boldsymbol{z}}}\left\{ \sigma\left(\left\langle \boldsymbol{S}_{i}\tilde{\boldsymbol{\theta}}_{i},\tilde{\boldsymbol{z}}\right\rangle \right)^{3}\right\} ^{1/3}\mathbb{E}_{\tilde{\boldsymbol{z}}}\left\{ \left|\left\langle \boldsymbol{S}_{i}\boldsymbol{a},\tilde{\boldsymbol{z}}\right\rangle \right|^{3}\right\} ^{1/3}\mathbb{E}_{\tilde{\boldsymbol{z}}}\left\{ \left|\left\langle \boldsymbol{S}_{i}\boldsymbol{c},\tilde{\boldsymbol{z}}\right\rangle \right|^{3}\right\} ^{1/3}\\
 & \quad\stackrel{\left(e\right)}{\leq}C\frac{\kappa^{2}}{\kappa_{*}}\left\Vert \boldsymbol{b}\right\Vert _{2}\left\Vert \boldsymbol{\theta}_{i}\right\Vert _{2}^{2}\left\Vert \boldsymbol{a}\right\Vert _{2}\left\Vert \boldsymbol{c}\right\Vert _{2}.
\end{align*}
where in steps $\left(a\right)$ and $\left(b\right)$, we recall
that $\sigma''\left(\cdot\right)=\delta\left(\cdot\right)$ the Dirac-delta
function and that $w_{i}\sim\mathsf{N}\left(0,\left\Vert \boldsymbol{\Sigma}\boldsymbol{u}_{i}\right\Vert _{2}^{2}\right)$;
in step $\left(c\right)$, we have define $\boldsymbol{S}_{i}={\rm Proj}_{\boldsymbol{\Sigma}\boldsymbol{u}_{i}}^{\perp}\boldsymbol{\Sigma}$
for brevity; in step $\left(d\right)$, we use $\left\Vert \boldsymbol{\Sigma}\boldsymbol{u}_{i}\right\Vert _{2}\geq\kappa_{*}\left\Vert \boldsymbol{u}_{i}\right\Vert _{2}$
and $\left\Vert \boldsymbol{u}_{i}\right\Vert _{2}\geq1/6$ from Eq.
(\ref{eq:prop_firstSetting_supUnorm_u}); in step $\left(e\right)$,
we use $\left\Vert \boldsymbol{S}_{i}\right\Vert _{{\rm op}}\leq\left\Vert \boldsymbol{\Sigma}\right\Vert _{{\rm op}}\leq C$
and $\left\Vert \tilde{\boldsymbol{\theta}}_{i}\right\Vert _{2}\leq\left\Vert \boldsymbol{\theta}_{i}\right\Vert _{2}$.
Consequently we obtain:
\[
\left\Vert \boldsymbol{M}_{1,i}\right\Vert _{{\rm op}}\leq C\frac{\kappa^{2}}{\kappa_{*}}\left\Vert \boldsymbol{\theta}_{i}\right\Vert _{2}^{2}.
\]

\paragraph*{Step 4: Bounding $\left\Vert \boldsymbol{M}_{4,i}\right\Vert _{{\rm op}}$.}

Owing to the presence of $\sigma'''$ for $\sigma$ being the ReLU,
we need to treat the expectation in this term in the distributional
sense:
\begin{align*}
 & \int_{-\infty}^{+\infty}\sigma'''\left(w\right)f\left(w\right)\frac{1}{\sqrt{2\pi}\sigma_{w}}\exp\left(-\frac{w^{2}}{2\sigma_{w}^{2}}\right){\rm d}w=-\int_{-\infty}^{+\infty}\sigma''\left(w\right)\frac{{\rm d}}{{\rm d}w}\left[f\left(w\right)\frac{1}{\sqrt{2\pi}\sigma_{w}}\exp\left(-\frac{w^{2}}{2\sigma_{w}^{2}}\right)\right]{\rm d}w\\
 & \qquad=-\frac{{\rm d}}{{\rm d}w}\left[f\left(w\right)\frac{1}{\sqrt{2\pi}\sigma_{w}}\exp\left(-\frac{w^{2}}{2\sigma_{w}^{2}}\right)\right]_{w=0}=-\frac{1}{\sqrt{2\pi}\sigma_{w}}f'\left(0\right).
\end{align*}
In particular, reusing the same argument in the simplification of
$\boldsymbol{M}_{1,i}$, for $w_{i}=\left\langle \boldsymbol{\Sigma}\boldsymbol{u}_{i},\boldsymbol{z}\right\rangle \sim\mathsf{N}\left(0,\left\Vert \boldsymbol{\Sigma}\boldsymbol{u}_{i}\right\Vert _{2}^{2}\right)$,
we have:
\begin{align*}
 & \left\langle \boldsymbol{M}_{4,i},\boldsymbol{a}\otimes\boldsymbol{b}\otimes\boldsymbol{c}\right\rangle \\
 & \quad=\kappa^{2}\mathbb{E}_{{\cal P}}\left\{ \left\langle \boldsymbol{u}_{i},\tilde{\boldsymbol{\theta}}_{i}\right\rangle \sigma'''\left(\left\langle \kappa\boldsymbol{u}_{i},\boldsymbol{x}\right\rangle \right)\sigma\left(\left\langle \kappa\tilde{\boldsymbol{\theta}}_{i},\boldsymbol{x}\right\rangle \right)\left\langle \kappa\boldsymbol{a},\boldsymbol{x}\right\rangle \left\langle \kappa\boldsymbol{b},\boldsymbol{x}\right\rangle \left\langle \kappa\boldsymbol{c},\boldsymbol{x}\right\rangle \right\} \\
 & \quad=\kappa^{2}\mathbb{E}_{w_{i},\tilde{\boldsymbol{z}}}\Bigg\{\left\langle \boldsymbol{u}_{i},\tilde{\boldsymbol{\theta}}_{i}\right\rangle \sigma'''\left(w_{i}\right)\sigma\left(\left\langle \boldsymbol{\Sigma}\tilde{\boldsymbol{\theta}}_{i},{\rm Proj}_{\boldsymbol{\Sigma}\boldsymbol{u}_{i}}^{\perp}\tilde{\boldsymbol{z}}+\frac{w_{i}}{\left\Vert \boldsymbol{\Sigma}\boldsymbol{u}_{i}\right\Vert _{2}^{2}}\boldsymbol{\Sigma}\boldsymbol{u}_{i}\right\rangle \right)\left\langle \boldsymbol{\Sigma}\boldsymbol{a},{\rm Proj}_{\boldsymbol{\Sigma}\boldsymbol{u}_{i}}^{\perp}\tilde{\boldsymbol{z}}+\frac{w_{i}}{\left\Vert \boldsymbol{\Sigma}\boldsymbol{u}_{i}\right\Vert _{2}^{2}}\boldsymbol{\Sigma}\boldsymbol{u}_{i}\right\rangle \\
 & \quad\qquad\times\left\langle \boldsymbol{\Sigma}\boldsymbol{b},{\rm Proj}_{\boldsymbol{\Sigma}\boldsymbol{u}_{i}}^{\perp}\tilde{\boldsymbol{z}}+\frac{w_{i}}{\left\Vert \boldsymbol{\Sigma}\boldsymbol{u}_{i}\right\Vert _{2}^{2}}\boldsymbol{\Sigma}\boldsymbol{u}_{i}\right\rangle \left\langle \boldsymbol{\Sigma}\boldsymbol{c},{\rm Proj}_{\boldsymbol{\Sigma}\boldsymbol{u}_{i}}^{\perp}\tilde{\boldsymbol{z}}+\frac{w_{i}}{\left\Vert \boldsymbol{\Sigma}\boldsymbol{u}_{i}\right\Vert _{2}^{2}}\boldsymbol{\Sigma}\boldsymbol{u}_{i}\right\rangle \Bigg\}\\
 & \quad=-\frac{\kappa^{2}\left\langle \boldsymbol{u}_{i},\tilde{\boldsymbol{\theta}}_{i}\right\rangle }{\sqrt{2\pi}\left\Vert \boldsymbol{\Sigma}\boldsymbol{u}_{i}\right\Vert _{2}}\mathbb{E}_{\tilde{\boldsymbol{z}}}\Bigg\{\frac{\left\langle \tilde{\boldsymbol{\theta}}_{i},\boldsymbol{\Sigma}^{2}\boldsymbol{u}_{i}\right\rangle }{\left\Vert \boldsymbol{\Sigma}\boldsymbol{u}_{i}\right\Vert _{2}^{2}}\sigma'\left(\left\langle \boldsymbol{S}_{i}\tilde{\boldsymbol{\theta}}_{i},\tilde{\boldsymbol{z}}\right\rangle \right)\left\langle \boldsymbol{S}_{i}\boldsymbol{a},\tilde{\boldsymbol{z}}\right\rangle \left\langle \boldsymbol{S}_{i}\boldsymbol{b},\tilde{\boldsymbol{z}}\right\rangle \left\langle \boldsymbol{S}_{i}\boldsymbol{c},\tilde{\boldsymbol{z}}\right\rangle \\
 & \quad\qquad+\frac{\left\langle \boldsymbol{a},\boldsymbol{\Sigma}^{2}\boldsymbol{u}_{i}\right\rangle }{\left\Vert \boldsymbol{\Sigma}\boldsymbol{u}_{i}\right\Vert _{2}^{2}}\sigma\left(\left\langle \boldsymbol{S}_{i}\tilde{\boldsymbol{\theta}}_{i},\tilde{\boldsymbol{z}}\right\rangle \right)\left\langle \boldsymbol{S}_{i}\boldsymbol{b},\tilde{\boldsymbol{z}}\right\rangle \left\langle \boldsymbol{S}_{i}\boldsymbol{c},\tilde{\boldsymbol{z}}\right\rangle +\frac{\left\langle \boldsymbol{b},\boldsymbol{\Sigma}^{2}\boldsymbol{u}_{i}\right\rangle }{\left\Vert \boldsymbol{\Sigma}\boldsymbol{u}_{i}\right\Vert _{2}^{2}}\sigma\left(\left\langle \boldsymbol{S}_{i}\tilde{\boldsymbol{\theta}}_{i},\tilde{\boldsymbol{z}}\right\rangle \right)\left\langle \boldsymbol{S}_{i}\boldsymbol{a},\tilde{\boldsymbol{z}}\right\rangle \left\langle \boldsymbol{S}_{i}\boldsymbol{c},\tilde{\boldsymbol{z}}\right\rangle \\
 & \quad\qquad+\frac{\left\langle \boldsymbol{c},\boldsymbol{\Sigma}^{2}\boldsymbol{u}_{i}\right\rangle }{\left\Vert \boldsymbol{\Sigma}\boldsymbol{u}_{i}\right\Vert _{2}^{2}}\sigma\left(\left\langle \boldsymbol{S}_{i}\tilde{\boldsymbol{\theta}}_{i},\tilde{\boldsymbol{z}}\right\rangle \right)\left\langle \boldsymbol{S}_{i}\boldsymbol{a},\tilde{\boldsymbol{z}}\right\rangle \left\langle \boldsymbol{S}_{i}\boldsymbol{b},\tilde{\boldsymbol{z}}\right\rangle \Bigg\},
\end{align*}
where we define $\boldsymbol{S}_{i}={\rm Proj}_{\boldsymbol{\Sigma}\boldsymbol{u}_{i}}^{\perp}\boldsymbol{\Sigma}$
for brevity. Then proceeding in a similar fashion to the bounding
of $\boldsymbol{M}_{1}$, one can easily show that
\[
\left\langle \boldsymbol{M}_{4,i},\boldsymbol{a}\otimes\boldsymbol{b}\otimes\boldsymbol{c}\right\rangle \leq C\frac{\kappa^{2}}{\kappa_{*}^{2}}\left\Vert \boldsymbol{\theta}_{i}\right\Vert _{2}^{2}\left\Vert \boldsymbol{a}\right\Vert _{2}\left\Vert \boldsymbol{b}\right\Vert _{2}\left\Vert \boldsymbol{c}\right\Vert _{2}.
\]
In other words,
\[
\left\Vert \boldsymbol{M}_{4,i}\right\Vert _{{\rm op}}\leq C\frac{\kappa^{2}}{\kappa_{*}^{2}}\left\Vert \boldsymbol{\theta}_{i}\right\Vert _{2}^{2}.
\]

\paragraph*{Step 5: Finishing the proof.}

From the bounds on $\left\Vert \boldsymbol{M}_{1,i}\right\Vert _{{\rm op}}$
and $\left\Vert \boldsymbol{M}_{4,i}\right\Vert _{{\rm op}}$ and
Eq. (\ref{eq:prop_firstSetting_supUnorm_gap_2}) , we get:
\[
\left|Q-Q_{\gamma}\right|\leq r_{*}^{2}\frac{1}{N}\sum_{i=1}^{N}C\frac{\kappa^{2}}{\kappa_{*}^{2}}\left\Vert \boldsymbol{\theta}_{i}\right\Vert _{2}^{2}\gamma.
\]
Notice that $\sum_{i=1}^{N}\left\Vert \kappa\boldsymbol{\theta}_{i}\right\Vert _{2}^{2}$
is a $\chi^{2}$ random variable of degree of freedom $Nd=N\kappa^{2}$,
and therefore it is a standard concentration fact that for $\delta\in\left(0,1\right)$,
\[
\mathbb{P}\left\{ \sum_{i=1}^{N}\left\Vert \kappa\boldsymbol{\theta}_{i}\right\Vert _{2}^{2}\geq N\kappa^{2}\left(1+\delta\right)\right\} \leq C\exp\left(-CN\kappa^{2}\delta^{2}\right).
\]
Furthermore, using Lemma \ref{lem:Unorm_firstSetting} and the union
bound, we obtain for sufficiently large $C_{*}$,
\[
\mathbb{P}\left\{ Q_{\gamma}\geq r_{*}^{2}C_{*}\right\} \leq\left|{\cal N}_{d}^{\infty}\left(\gamma\right)\right|\left|{\cal N}_{d}^{2}\left(\gamma\right)\right|C\exp\left(C\left(d-N\kappa_{*}^{2}/\kappa^{2}\right)\right)\leq\left(\frac{3}{\gamma}\right)^{2d}C\exp\left(C\left(d-N\kappa_{*}^{2}/\kappa^{2}\right)\right).
\]
Let us choose $\gamma=\kappa_{*}^{2}/\left(C\kappa^{2}\right)<1/3$
and $\delta=0.5$. Then for sufficiently large $C_{*}$,
\begin{align*}
\mathbb{P}\left\{ Q\geq r_{*}^{2}C_{*}\right\}  & \leq C\exp\left(-CN\kappa^{2}\right)+\left(\frac{C\kappa^{2}}{\kappa_{*}^{2}}\right)^{d}C\exp\left(Cd-CN\kappa_{*}^{2}/\kappa^{2}\right)\\
 & \leq C\exp\left(-CN\kappa^{2}\right)+C\exp\left(Cd\log\left(\kappa/\kappa_{*}+e\right)-CN\kappa_{*}^{2}/\kappa^{2}\right)\\
 & \leq C\exp\left(Cd\log\left(\kappa/\kappa_{*}+e\right)-CN\kappa_{*}^{2}/\kappa^{2}\right),
\end{align*}
where we recall $\kappa_{*}\leq C$. This completes the proof.
\end{proof}

\subsection{Setting with bounded activation: Proof of Theorem \ref{thm:bdd_act_setting}\label{subsec:Proof_bdd_act_thm}}

We prove Theorem \ref{thm:bdd_act_setting}. Our proof uses several
auxiliary results, which are stated and proven in Section \ref{subsec:Proof_bdd_act_aux}.
\begin{proof}[Proof of Theorem \ref{thm:bdd_act_setting}]
The theorem follows from Propositions \ref{prop:2nd_setting_grow_bound},
\ref{prop:2nd_setting_grow_bound_Wrho}, \ref{prop:2nd_setting_op_bound},
\ref{prop:2nd_setting_ODE_r}, \ref{prop:2nd_setting_ODE}, \ref{prop:2nd_setting_F_bound}
and \ref{prop:2nd_setting_U_bound}. In particular, by Proposition
\ref{prop:2nd_setting_ODE_r}, the process $\left(r_{1,t},r_{2,t},\rho_{r}^{t}\right)_{t\geq0}$
as described exists and is (weakly) unique. By Proposition \ref{prop:2nd_setting_ODE},
we have $\left(\hat{\boldsymbol{\theta}}^{t},\rho^{t}\right)_{t\geq0}$
form the (weakly) unique solution to the ODE (\ref{eq:ODE}) with
initialization $\hat{\boldsymbol{\theta}}^{0}\sim\rho^{0}$ and $\rho^{0}$
respectively, where
\[
\hat{\boldsymbol{\theta}}^{t}=\left(r_{1,t}\hat{\boldsymbol{\theta}}_{\left[1\right]}^{0}/\left\Vert \hat{\boldsymbol{\theta}}_{\left[1\right]}^{0}\right\Vert _{2},\quad r_{2,t}\hat{\boldsymbol{\theta}}_{\left[2\right]}^{0}/\left\Vert \hat{\boldsymbol{\theta}}_{\left[2\right]}^{0}\right\Vert _{2}\right),\qquad\rho^{t}={\rm Law}\left(\hat{\boldsymbol{\theta}}^{t}\right),
\]
$\hat{\boldsymbol{\theta}}_{\left[1\right]}^{0}/\left\Vert \hat{\boldsymbol{\theta}}_{\left[1\right]}^{0}\right\Vert _{2}\stackrel{{\rm d}}{=}\boldsymbol{\omega}_{1}$
and $\hat{\boldsymbol{\theta}}_{\left[2\right]}^{0}/\left\Vert \hat{\boldsymbol{\theta}}_{\left[2\right]}^{0}\right\Vert _{2}\stackrel{{\rm d}}{=}\boldsymbol{\omega}_{2}$
are independent of each other and of $\left(r_{1,t},r_{2,t}\right)_{t\geq0}$.
We also have from Proposition \ref{prop:2nd_setting_ODE_r} that $r_{1,t}$
and $r_{2,t}$ are $C$-sub-Gaussian for any $t\leq T$, and $\left(r_{1,t},r_{2,t}\right)$
is a deterministic functions of their initialization $\left(r_{1,0},r_{2,0}\right)$,
i.e. $\left(r_{1,t},r_{2,t}\right)=\psi_{t}\left(r_{1,0},r_{2,0}\right)$,
such that $\left\Vert \partial_{t}\psi_{t}\left(r_{1},r_{2}\right)\right\Vert _{2}\leq C\left(1+t+r_{1}+r_{2}\right)$.
Using these facts and recalling the definition of the Wasserstein
distance $\mathscr{W}_{2}$ in the statement of Proposition \ref{prop:2nd_setting_grow_bound_Wrho},
we have for any $t_{1},t_{2}\leq T$:
\begin{align*}
\mathscr{W}_{2}\left(\rho_{r}^{t_{1}},\rho_{r}^{t_{2}}\right)^{2} & \leq\mathbb{E}_{r}\left\{ \sum_{j\in\left\{ 1,2\right\} }\left|\left(\psi_{t_{1}}\left(r_{1,0},r_{2,0}\right)\right)_{j}-\left(\psi_{t_{2}}\left(r_{1,0},r_{2,0}\right)\right)_{j}\right|^{2}\right\} \\
 & \leq C\mathbb{E}_{r}\left\{ 1+t_{1}^{2}+t_{2}^{2}+r_{1,0}^{2}+r_{2,0}^{2}\right\} \left|t_{2}-t_{1}\right|^{2}\leq C\left|t_{2}-t_{1}\right|^{2},
\end{align*}
where we let $\mathbb{E}_{r}$ denote the expectation over $\left(r_{1,0},r_{2,0}\right)$.
These verify Assumption \ref{enu:Assump_ODE} and allow Propositions
\ref{prop:2nd_setting_grow_bound}, \ref{prop:2nd_setting_grow_bound_Wrho}
and \ref{prop:2nd_setting_U_bound} to verify Assumptions \ref{enu:Assump_growth}
and \ref{enu:Assump_nabla11U}.

By Proposition \ref{prop:2nd_setting_grow_bound}, for any $\boldsymbol{x}\in\mathbb{R}^{d}$,
\begin{align*}
 & \int\kappa\boldsymbol{\theta}\sigma\left(\left\langle \kappa\boldsymbol{\theta},\boldsymbol{x}\right\rangle \right)\rho^{t}\left({\rm d}\boldsymbol{\theta}\right)\\
 & =\int\left(\bar{r}_{1}q_{1}\left(\left\Vert \boldsymbol{x}_{\left[1\right]}\right\Vert _{2}\bar{r}_{1},\left\Vert \boldsymbol{x}_{\left[2\right]}\right\Vert _{2}\bar{r}_{2}\right)\frac{\boldsymbol{x}_{\left[1\right]}}{\left\Vert \boldsymbol{x}_{\left[1\right]}\right\Vert _{2}},\quad\bar{r}_{2}q_{2}\left(\left\Vert \boldsymbol{x}_{\left[1\right]}\right\Vert _{2}\bar{r}_{1},\left\Vert \boldsymbol{x}_{\left[2\right]}\right\Vert _{2}\bar{r}_{2}\right)\frac{\boldsymbol{x}_{\left[2\right]}}{\left\Vert \boldsymbol{x}_{\left[2\right]}\right\Vert _{2}}\right)\rho_{r}^{t}\left({\rm d}\bar{r}_{1},{\rm d}\bar{r}_{2}\right).
\end{align*}
Note that for $\boldsymbol{x}\sim{\cal P}$, $\left\Vert \boldsymbol{x}_{\left[1\right]}\right\Vert _{2}\stackrel{{\rm d}}{=}\chi_{1}$
and $\left\Vert \boldsymbol{x}_{\left[2\right]}\right\Vert _{2}\stackrel{{\rm d}}{=}\chi_{2}$.
Therefore,
\begin{align*}
{\cal R}\left(\rho^{t}\right) & =\mathbb{E}_{{\cal P}}\left\{ \frac{1}{2}\left\Vert \boldsymbol{x}-\int\kappa\boldsymbol{\theta}\sigma\left(\left\langle \kappa\boldsymbol{\theta},\boldsymbol{x}\right\rangle \right)\rho^{t}\left({\rm d}\boldsymbol{\theta}\right)\right\Vert _{2}^{2}\right\} \\
 & =\mathbb{E}_{\chi}\left\{ \frac{1}{2}\sum_{j\in\left\{ 1,2\right\} }\left(\chi_{j}-\int\bar{r}_{j}q_{j}\left(\chi_{1}\bar{r}_{1},\chi_{2}\bar{r}_{2}\right)\rho_{r}^{t}\left({\rm d}\bar{r}_{1},{\rm d}\bar{r}_{2}\right)\right)^{2}\right\} .
\end{align*}
This concludes the proof.
\end{proof}

\subsection{Setting with bounded activation: Proofs of auxiliary results\label{subsec:Proof_bdd_act_aux}}
\begin{lem}
\label{lem:2nd_setting_unifVec_moments}Consider $\boldsymbol{\omega}\sim\text{Unif}\left(\mathbb{S}^{d-1}\right)$
and let $\omega_{1}$ be its first entry, for $d>16$. Then
\[
\mathbb{E}\left\{ \left(\kappa\omega_{1}\right)^{8}\right\} \leq C,\qquad\mathbb{E}\left\{ \left\langle \kappa\boldsymbol{\omega},\boldsymbol{v}\right\rangle ^{8}\right\} \leq C,
\]
for some constant $C$ independent of $d$ and any $\boldsymbol{v}\in\mathbb{S}^{d-1}$.
\end{lem}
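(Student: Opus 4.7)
The plan is to prove both moment bounds simultaneously by reducing to the first, then computing in closed form.

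First, by rotational invariance of the uniform measure on $\mathbb{S}^{d-1}$, for any unit vector $\boldsymbol{v}$ we have $\langle \boldsymbol{\omega}, \boldsymbol{v}\rangle \stackrel{\mathrm{d}}{=} \omega_1$. Hence the two claimed bounds are the same, and it suffices to show $\mathbb{E}\{(\kappa\omega_1)^8\} \leq C$ with $\kappa = \sqrt{d}$.

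Next, I would use the standard representation $\boldsymbol{\omega} \stackrel{\mathrm{d}}{=} \boldsymbol{g}/\|\boldsymbol{g}\|_2$ with $\boldsymbol{g} \sim \mathsf{N}(\boldsymbol{0}, \boldsymbol{I}_d)$, so that $\omega_1 \stackrel{\mathrm{d}}{=} g_1/\|\boldsymbol{g}\|_2$ and therefore
\[
\mathbb{E}\{(\kappa\omega_1)^8\} = d^4\, \mathbb{E}\!\left\{\frac{g_1^8}{\|\boldsymbol{g}\|_2^{8}}\right\}.
\]
By Cauchy--Schwarz,
\[
\mathbb{E}\{(\kappa\omega_1)^8\} \leq d^4 \sqrt{\mathbb{E}\{g_1^{16}\}\, \mathbb{E}\{\|\boldsymbol{g}\|_2^{-16}\}}.
\]
The first factor is an absolute constant (a Gaussian moment). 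For the second factor, $\|\boldsymbol{g}\|_2^2$ is $\chi^2$ with $d$ degrees of freedom, whose negative moments are well known: $\mathbb{E}\{\|\boldsymbol{g}\|_2^{-2k}\} = \Gamma((d-2k)/2)/(2^k\,\Gamma(d/2))$ for $d > 2k$, and a short computation shows $\mathbb{E}\{\|\boldsymbol{g}\|_2^{-16}\} \leq C/d^{8}$ once $d > 16$ (which is exactly the hypothesis $d > 16$ in the statement). Substituting yields $\mathbb{E}\{(\kappa\omega_1)^8\} \leq d^{4} \cdot C \cdot d^{-4} = C$, as required.

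The argument has essentially no obstacle: the only point that needs care is making the bound on negative moments of $\|\boldsymbol{g}\|_2$ quantitative and dimension-free, which is precisely why the condition $d > 16$ is imposed. Alternatively, one can bypass Cauchy--Schwarz entirely and use the exact fact that $\omega_1^2 \sim \mathrm{Beta}(1/2, (d-1)/2)$, giving $\mathbb{E}\{\omega_1^8\} = 105/[d(d+2)(d+4)(d+6)]$, hence $\mathbb{E}\{(\kappa\omega_1)^8\} \leq 105$ for every $d \geq 1$; this would even remove the $d > 16$ assumption, but the Cauchy--Schwarz route matches the hypotheses stated.
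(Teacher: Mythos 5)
Your proposal is correct and follows essentially the same route as the paper: Gaussian representation of the uniform sphere measure, Cauchy--Schwarz, and an explicit bound on the negative moment of $\|\boldsymbol{g}\|_2^2 \sim \chi^2_d$, with the second bound reduced to the first by rotational invariance. The alternative you note in passing is actually cleaner: since $\omega_1^2 \sim \mathrm{Beta}\!\left(\tfrac{1}{2}, \tfrac{d-1}{2}\right)$, one gets $\mathbb{E}\{\omega_1^8\} = 105/[d(d+2)(d+4)(d+6)]$ exactly, hence $\mathbb{E}\{(\kappa\omega_1)^8\} \leq 105$ for every $d\geq 1$, dispensing with both Cauchy--Schwarz and the $d>16$ hypothesis; the paper does not use this shortcut.
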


\begin{proof}
We have, for $\left(g_{i}\right)_{i\leq d}\sim_{{\rm i.i.d.}}\mathsf{N}\left(0,1\right)$,
\[
\mathbb{E}\left\{ \left(\sum_{i=1}^{d}g_{i}^{2}\right)^{-8}\right\} =\frac{\Gamma\left(d/2-8\right)}{256\Gamma\left(d/2\right)}\leq\frac{1}{256}\left(\frac{d}{2}-8\right)^{-8}.
\]
Note that $\omega_{1}\stackrel{{\rm d}}{=}g_{1}/\sqrt{\sum_{i=1}^{d}g_{i}^{2}}$.
By Cauchy-Schwarz's inequality, for $d>16$,
\[
\mathbb{E}\left\{ \left(\kappa\omega_{1}\right)^{8}\right\} \leq d^{4}\sqrt{\mathbb{E}\left\{ g_{i}^{16}\right\} \mathbb{E}\left\{ \left(\sum_{i=1}^{d}g_{i}^{2}\right)^{-8}\right\} }\leq\frac{Cd^{4}}{\left(d/2-8\right)^{4}}\leq C,
\]
uniformly in $d$. Next, for any $\boldsymbol{v}\in\mathbb{S}^{d-1}$,
by choosing an orthogonal $Q$ such that $Q\boldsymbol{v}=\left(1,0,...,0\right)^{\top}$,
we get:
\[
\mathbb{E}\left\{ \left\langle \kappa\boldsymbol{\omega},\boldsymbol{v}\right\rangle ^{8}\right\} =\mathbb{E}\left\{ \left\langle \kappa Q\boldsymbol{\omega},Q\boldsymbol{v}\right\rangle ^{8}\right\} =\mathbb{E}\left\{ \left\langle \kappa\boldsymbol{\omega},Q\boldsymbol{v}\right\rangle ^{8}\right\} =\mathbb{E}\left\{ \left(\kappa\omega_{1}\right)^{8}\right\} \leq C,
\]
where we have used the fact $\boldsymbol{\omega}\stackrel{{\rm d}}{=}Q\boldsymbol{\omega}$
for any orthogonal $Q$.
\end{proof}
\begin{lem}
\label{lem:2nd_setting_qBounds}Consider $q_{1}$ and $q_{2}$ as
defined in (\ref{eq:2nd_setting_q1}) and (\ref{eq:2nd_setting_q2}).
The following quantities
\[
\left|q_{1}\left(a,b\right)\right|,\;\left|q_{2}\left(a,b\right)\right|,\;\left|\frac{1}{a}q_{1}\left(a,b\right)\right|,\;\left|\frac{1}{b}q_{2}\left(a,b\right)\right|,\;\left|\partial_{1}q_{1}\left(a,b\right)\right|,\;\left|\partial_{2}q_{2}\left(a,b\right)\right|,
\]
\[
\left|\partial_{2}q_{1}\left(a,b\right)\right|,\;\left|\partial_{1}q_{2}\left(a,b\right)\right|,\;\left|b\partial_{2}q_{1}\left(a,b\right)\right|,\;\left|a\partial_{1}q_{2}\left(a,b\right)\right|,\;\left|a\partial_{2}q_{1}\left(a,b\right)\right|,\;\left|b\partial_{1}q_{2}\left(a,b\right)\right|,
\]
\[
\left|a\partial_{1}q_{1}\left(a,b\right)\right|,\;\left|b\partial_{2}q_{2}\left(a,b\right)\right|,\;\left|\frac{a}{b}\partial_{2}q_{1}\left(a,b\right)\right|,\;\left|\frac{b}{a}\partial_{1}q_{2}\left(a,b\right)\right|,\;\left|\partial_{11}^{2}q_{1}\left(a,b\right)\right|,\;\left|\partial_{22}^{2}q_{2}\left(a,b\right)\right|,
\]
\[
\left|a\partial_{11}^{2}q_{1}\left(a,b\right)\right|,\;\left|b\partial_{22}^{2}q_{2}\left(a,b\right)\right|,\;\left|a\partial_{22}^{2}q_{1}\left(a,b\right)\right|,\;\left|b\partial_{11}^{2}q_{2}\left(a,b\right)\right|,\;\left|a\partial_{12}^{2}q_{1}\left(a,b\right)\right|,\;\left|b\partial_{12}^{2}q_{2}\left(a,b\right)\right|,
\]
are all bounded by some constant $C$ independent of $\mathfrak{Dim}$,
for any $a,b\geq0$, given that $d_{1},d_{2}>16$. (Here $\left|\left(1/a\right)\cdot f\left(a,b\right)\right|\leq C$
should be interpreted as that $\left|f\left(a,b\right)\right|\leq Ca$,
which holds for any $a\geq0$.)
\end{lem}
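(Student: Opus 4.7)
The plan is to verify each listed bound via a combination of three ingredients: direct differentiation under the expectation, Hölder's inequality with the moment bound of Lemma~\ref{lem:2nd_setting_unifVec_moments}, and one of two structural tools depending on whether a prefactor of $a$ or $b$ must be absorbed.

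\textbf{Step 1 (derivatives and naked bounds).} A direct computation gives
\[
\partial_{1}^{k_{1}}\partial_{2}^{k_{2}}q_{j}(a,b)=\kappa^{k_{3}+1}\,\mathbb{E}\bigl\{\omega_{j1}\,\omega_{11}^{k_{1}}\omega_{21}^{k_{2}}\,\sigma^{(k_{3})}(\kappa a\omega_{11}+\kappa b\omega_{21})\bigr\},\qquad k_{3}=k_{1}+k_{2},
\]
for $j\in\{1,2\}$. Every bound in the lemma without an additional $a$ or $b$ prefactor (namely $|q_{j}|$, $|\partial_{1}q_{1}|$, $|\partial_{2}q_{2}|$, $|\partial_{1}q_{2}|$, $|\partial_{2}q_{1}|$, $|\partial_{11}^{2}q_{1}|$, $|\partial_{22}^{2}q_{2}|$) then follows from $|\sigma^{(k)}|\le C$ for $k\le 2$ together with a Hölder estimate of the $\omega$-monomial in terms of the uniform moment bound $\mathbb{E}(\kappa\omega_{i1})^{8}\le C$ supplied by Lemma~\ref{lem:2nd_setting_unifVec_moments}; the factor $\kappa^{k_3+1}$ is absorbed by turning all powers of $\omega_{i1}$ into powers of $\kappa\omega_{i1}$ (using $d_1/\kappa^2 = \alpha$, $d_2/\kappa^2 = 1-\alpha$).

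\textbf{Step 2 (bounds via vanishing at the axes).} Since $\boldsymbol{\omega}_{1}$ and $\boldsymbol{\omega}_{2}$ are independent and each is symmetric under the sign flip $\omega_{j1}\mapsto -\omega_{j1}$, Step 1 yields $q_{1}(0,b)=0$, $q_{2}(a,0)=0$, $\partial_{2}q_{1}(0,b)=\partial_{2}q_{1}(a,0)=0$, and the analogous identities for $q_{2}$. Combined with the uniform bound $|\partial_{1}q_{1}|\le C$ from Step 1 and the mean value theorem, this yields $|q_{1}(a,b)|\le Ca$, i.e.\ the bound $|(1/a)q_{1}|\le C$; the analogous statement for $q_{2}$ follows symmetrically.

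\textbf{Step 3 (prefactor bounds).} For estimates of the form $|a\,\partial_{1}q_{1}|\le C$, write $\kappa a\omega_{11}=u-\kappa b\omega_{21}$ with $u=\kappa a\omega_{11}+\kappa b\omega_{21}$ to obtain
\[
a\,\partial_{1}q_{1}(a,b)=\kappa\,\mathbb{E}\bigl\{\omega_{11}\,u\,\sigma'(u)\bigr\}-b\,\partial_{2}q_{1}(a,b).
\]
The second term is handled by the companion bound on $b\,\partial_{2}q_{1}$; the first uses $u\sigma'(u)=(u\sigma(u))'-\sigma(u)$ followed by integration by parts in $u$, exploiting boundedness of $\sigma$. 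For the cross prefactor bounds $|b\,\partial_{2}q_{1}|,|a\,\partial_{2}q_{1}|,|(a/b)\,\partial_{2}q_{1}|$ and the second-derivative prefactors $|a\,\partial_{11}^{2}q_{1}|$, $|a\,\partial_{12}^{2}q_{1}|$, etc., the plan is to invoke the sphere integration-by-parts identity
\[
\mathbb{E}_{\omega_{j1}\sim\mathbb{S}^{d_{j}-1}}\{\omega_{j1}\,h(\omega_{j1})\}=\frac{1}{d_{j}}\,\mathbb{E}_{\tilde{\omega}_{j1}\sim\mathbb{S}^{d_{j}+1}}\{h'(\tilde{\omega}_{j1})\},
\]
which converts an $\omega_{j1}$ factor into a derivative producing $\sigma''$ or $\sigma'''$ at a shifted argument on a higher-dimensional sphere. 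The boundedness of the resulting expressions is then obtained via the bounded anti-derivatives $\hat{\sigma}_{2},\hat{\sigma}_{3}$ of $|\sigma''|,|\sigma'''|$, which supply the decay at large argument needed to absorb the $a,b$ prefactors.

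\textbf{Main obstacle.} The technical core is the careful bookkeeping of dimensional prefactors arising from iterated applications of the sphere-to-sphere identity (each contributing a $1/d_{j}$ that must be offset by the $\kappa$'s brought in by $\kappa a,\kappa b$), together with the case analysis across the long list of quantities, so that all emerging constants collapse into an absolute constant $C$ depending only on $\alpha, 1-\alpha$, and the $L^{\infty}$ norms of $\sigma,\sigma',\sigma'',\hat\sigma_2,\hat\sigma_3$. Because the Gaussian Stein identity is replaced here by the sphere analogue, each step of integration by parts changes the ambient dimension by $2$, and one must confirm that the statement (assumed for $d_1,d_2>16$) remains uniform in $d_j$ after such shifts.
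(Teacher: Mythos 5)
Your Steps 1 and 2 are sound and parallel the paper's treatment: the direct differentiation under the expectation, the H\"older estimate via the eighth-moment bound $\mathbb{E}\{(\kappa\omega_{j1})^{8}\}\le C$, and the combination of symmetry (vanishing at the axes) with the mean value theorem for $\left|q_{1}/a\right|$ and $\left|q_{2}/b\right|$ all work.

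Step 3 has a genuine gap. The sphere Stein identity
$\mathbb{E}_{\omega_{j1}}\{\omega_{j1}h(\omega_{j1})\}=\tfrac{1}{d_{j}}\mathbb{E}_{\tilde{\omega}_{j1}}\{h'(\tilde{\omega}_{j1})\}$
is correct as stated, but it moves in the wrong direction: it converts the $\omega_{j1}$ \emph{prefactor} into a derivative of $h$, which raises the order of $\sigma$ and brings in an extra chain-rule factor $\kappa a$ or $\kappa b$ --- exactly what you were trying to kill. Carrying your plan through on, say, $\left|b\partial_{1}q_{2}\right|$ gives
$b\partial_{1}q_{2}=\frac{\kappa^{3}b^{2}}{d_{2}}\,\mathbb{E}\bigl\{\omega_{11}\sigma''(\kappa a\omega_{11}+\kappa b\tilde{\omega})\bigr\}$,
and no matter how you bound the remaining expectation --- via $\|\sigma''\|_{\infty}$, via one more integration by parts producing $\hat{\sigma}_{2}$ and the mass $\int_{-1}^{1}|\tilde{f}'|\asymp\sqrt{d_{2}}$, or via yet another sphere Stein --- each step removes at most one power of $b$ while reinstating a compensating $\kappa/\sqrt{d_{j}}=O(1)$ factor, and the net result is $O(b)$, not $O(1)$. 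The same bookkeeping failure occurs for $|a\partial_{2}q_{1}|$, $|a\partial_{1}q_{1}|$, etc. Your $u$-substitution step for $a\partial_{1}q_{1}$ has the same circularity: after splitting off $-b\partial_{2}q_{1}$, the surviving piece $\kappa\,\mathbb{E}\{\omega_{11}\,u\,\sigma'(u)\}$ cannot be ``integrated by parts in $u$'' (there is no $u$-measure); if you instead IBP in $\omega_{11}$, the factor $\left|u\sigma(u)\right|\le\|\sigma\|_{\infty}\bigl(\kappa a|\omega_{11}|+\kappa b|\omega_{21}|\bigr)$ produces a $b/a$ contribution, so the split never closes.

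The integration by parts you want goes the other way: pair the prefactor $\kappa b$ (resp.\ $\kappa a$) with $\sigma'$ to form $\partial_{\omega}\sigma(\kappa a\omega_{11}+\kappa b\omega)$, and transfer the $\omega$-derivative onto the polynomial-weighted density $\omega f(\omega)$ (or $\omega^{2}f(\omega)$). The key dimension-free input is then
$\int_{-1}^{1}\left|\omega f'(\omega)\right|\,{\rm d}\omega=1$
(the density of $\omega_{j1}$ is even, vanishes at $\pm 1$, and is non-increasing on $[0,1]$, so the integral telescopes), together with the identity $f'/f\propto\omega/(1-\omega^{2})$ and a Cauchy--Schwarz estimate on $\mathbb{E}\{(1-\omega_{j1}^{2})^{-2}\}$ for the $\omega^{2}f(\omega)$ case. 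This lowers the order of $\sigma$ to $\sigma$ itself (bounded) and produces no stray powers of $a$ or $b$. Your approach needs to be reorganized along these lines; as written, the listed prefactor bounds in Step 3 do not follow.
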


\begin{proof}
By Lemma \ref{lem:2nd_setting_unifVec_moments}, $\mathbb{E}\left\{ \left(\kappa\omega_{11}\right)^{8}\right\} ,\;\mathbb{E}\left\{ \left(\kappa\omega_{21}\right)^{8}\right\} \leq C$.
We shall repeatedly use this fact, along with $\left\Vert \sigma\right\Vert _{\infty},\left\Vert \sigma'\right\Vert _{\infty},\left\Vert \sigma''\right\Vert _{\infty}\leq C$,
without stating explicitly. We have $\left|q_{1}\left(a,b\right)\right|\leq\mathbb{E}_{\boldsymbol{\omega}}\left\{ \left|\kappa\omega_{11}\right|\right\} \leq C$.
One can perform similar arguments to deduce the bounds for $q_{2}\left(a,b\right)$,
$\partial_{1}q_{1}\left(a,b\right),$ $\partial_{2}q_{2}\left(a,b\right)$,
$\partial_{2}q_{1}\left(a,b\right)$, $\partial_{1}q_{2}\left(a,b\right)$,
$\partial_{11}^{2}q_{1}\left(a,b\right)$, $\partial_{22}^{2}q_{2}\left(a,b\right)$.

We consider $b\partial_{1}q_{2}\left(a,b\right)$. Let $f\left(\omega\right)$
be the probability density of $\omega_{21}$:
\[
f\left(\omega\right)=\frac{1}{Z}\left(1-\omega^{2}\right)^{\left(d_{2}-3\right)/2}\mathbb{I}\left(\left|\omega\right|\leq1\right),
\]
where $Z$ is a normalization factor. We state a few simple properties:$f$
is continuous and supported on $\left[-1,1\right]$ and differentiable
on $\left(-1,1\right)$, $f\left(1\right)=f\left(-1\right)=0$, $f$
is an even function, and $f$ is non-increasing on $\left[0,1\right]$.
Then by integration by parts,
\[
\int_{-1}^{1}\left|\omega f'\left(\omega\right)\right|{\rm d}\omega=-2\int_{0}^{1}\omega f'\left(\omega\right){\rm d}\omega=2\int_{0}^{1}f\left(\omega\right){\rm d}\omega=\int_{-1}^{1}f\left(\omega\right){\rm d}\omega=1.
\]
We also have, by integration by parts,
\[
\mathbb{E}_{\omega_{21}}\left\{ \kappa b\omega_{21}\sigma'\left(\kappa a\omega_{11}+\kappa b\omega_{21}\right)\right\} =-\int_{-1}^{1}\left(f\left(\omega\right)+\omega f'\left(\omega\right)\right)\sigma\left(\kappa a\omega_{11}+\kappa b\omega\right){\rm d}\omega.
\]
Therefore,
\begin{align*}
\left|b\partial_{1}q_{2}\left(a,b\right)\right| & =\left|\mathbb{E}_{\boldsymbol{\omega}}\left\{ \kappa^{2}b\omega_{21}\omega_{11}\sigma'\left(\kappa a\omega_{11}+\kappa b\omega_{21}\right)\right\} \right|\\
 & =\left|\mathbb{E}_{\boldsymbol{\omega}}\left\{ \kappa\omega_{11}\int_{-1}^{1}\left(f\left(\omega\right)+\omega f'\left(\omega\right)\right)\sigma\left(\kappa a\omega_{11}+\kappa b\omega\right){\rm d}\omega\right\} \right|\\
 & \leq\mathbb{E}_{\boldsymbol{\omega}}\left\{ \left|\kappa\omega_{11}\right|\right\} \int_{-1}^{1}\left(f\left(\omega\right)+\left|\omega f'\left(\omega\right)\right|\right){\rm d}\omega\leq C.
\end{align*}
A similar argument applies to $a\partial_{2}q_{1}\left(a,b\right)$,
$b\partial_{2}q_{1}\left(a,b\right)$, $a\partial_{1}q_{2}\left(a,b\right)$,
$a\partial_{22}^{2}q_{1}\left(a,b\right)$, $b\partial_{11}^{2}q_{2}\left(a,b\right)$.

Next we consider $\left(1/a\right)\cdot q_{1}\left(a,b\right)$:
\begin{align*}
\left|\frac{1}{a}q_{1}\left(a,b\right)\right| & =\left|\mathbb{E}_{\boldsymbol{\omega}}\left\{ \frac{1}{a}\kappa\omega_{11}\sigma\left(\kappa a\omega_{11}+\kappa b\omega_{21}\right)\right\} \right|\\
 & \stackrel{\left(a\right)}{=}\frac{1}{2}\left|\mathbb{E}_{\boldsymbol{\omega}}\left\{ \frac{1}{a}\kappa\omega_{11}\left(\sigma\left(\kappa a\omega_{11}+\kappa b\omega_{21}\right)-\sigma\left(-\kappa a\omega_{11}+\kappa b\omega_{21}\right)\right)\right\} \right|\\
 & \stackrel{\left(b\right)}{=}\left|\mathbb{E}_{\boldsymbol{\omega}}\left\{ \kappa^{2}\omega_{11}^{2}\sigma'\left(\kappa a\zeta+\kappa b\omega_{21}\right)\right\} \right|\leq C,
\end{align*}
where we have used the fact that $\omega_{11}\stackrel{{\rm d}}{=}-\omega_{11}$
independent of $\omega_{21}$ in step $\left(a\right)$ and the mean
value theorem, for some $\zeta$ that lies between $-\omega_{11}$
and $\omega_{11}$, in step $\left(b\right)$. The same argument applies
to $\left(1/b\right)\cdot q_{2}\left(a,b\right)$.

We consider $\left(b/a\right)\cdot\partial_{1}q_{2}\left(a,b\right)$,
whose treatment is a combination of previously used arguments. In
particular,
\begin{align*}
\left|\frac{b}{a}\partial_{1}q_{2}\left(a,b\right)\right| & =\left|\mathbb{E}_{\boldsymbol{\omega}}\left\{ \frac{b}{a}\kappa^{2}\omega_{11}\omega_{21}\sigma'\left(\kappa a\omega_{11}+\kappa b\omega_{21}\right)\right\} \right|\\
 & \stackrel{\left(a\right)}{=}\left|\mathbb{E}_{\boldsymbol{\omega}}\left\{ \frac{1}{a}\kappa\omega_{11}\int_{-1}^{1}\left(f\left(\omega\right)+\omega f'\left(\omega\right)\right)\sigma\left(\kappa a\omega_{11}+\kappa b\omega\right){\rm d}\omega\right\} \right|\\
 & \stackrel{\left(b\right)}{=}\frac{1}{2}\left|\mathbb{E}_{\boldsymbol{\omega}}\left\{ \frac{1}{a}\kappa\omega_{11}\int_{-1}^{1}\left(f\left(\omega\right)+\omega f'\left(\omega\right)\right)\left(\sigma\left(\kappa a\omega_{11}+\kappa b\omega\right)-\sigma\left(-\kappa a\omega_{11}+\kappa b\omega\right)\right){\rm d}\omega\right\} \right|\\
 & \stackrel{\left(c\right)}{=}\left|\mathbb{E}_{\boldsymbol{\omega}}\left\{ \kappa^{2}\omega_{11}^{2}\int_{-1}^{1}\left(f\left(\omega\right)+\omega f'\left(\omega\right)\right)\sigma'\left(\kappa a\zeta+\kappa b\omega\right){\rm d}\omega\right\} \right|\\
 & \stackrel{\left(d\right)}{\leq}C,
\end{align*}
where we use the integration-by-parts formula in step $\left(a\right)$,
the fact that $\omega_{11}\stackrel{{\rm d}}{=}-\omega_{11}$ independent
of $\omega_{21}$ in step $\left(b\right)$, the mean value theorem
in step $\left(c\right)$, and the same argument as in the bounding
of $\left|b\partial_{1}q_{2}\left(a,b\right)\right|$ in step $\left(d\right)$.
The same argument applies to $\left(a/b\right)\cdot\partial_{2}q_{1}\left(a,b\right)$.

Finally we consider $b\partial_{2}q_{2}\left(a,b\right)$. We have:
\begin{align*}
\left|b\partial_{2}q_{2}\left(a,b\right)\right| & =\left|\mathbb{E}_{\boldsymbol{\omega}}\left\{ \kappa^{2}b\omega_{21}^{2}\sigma'\left(\kappa a\omega_{11}+\kappa b\omega_{21}\right)\right\} \right|\\
 & \stackrel{\left(a\right)}{=}\left|-2q_{2}\left(a,b\right)+\mathbb{E}_{\boldsymbol{\omega}}\left\{ \int_{-1}^{1}\kappa\omega^{2}f'\left(\omega\right)\sigma\left(\kappa a\omega_{11}+\kappa b\omega\right)\right\} {\rm d}\omega\right|\\
 & \stackrel{\left(b\right)}{\leq}C+\left|\mathbb{E}_{\boldsymbol{\omega}}\left\{ \kappa^{3}\frac{\omega_{21}^{3}}{1-\omega_{21}^{2}}\sigma\left(\kappa a\omega_{11}+\kappa b\omega_{21}\right)\right\} \right|\\
 & \leq C+C\sqrt{\mathbb{E}_{\boldsymbol{\omega}}\left\{ \kappa^{6}\omega_{21}^{6}\right\} \mathbb{E}_{\boldsymbol{\omega}}\left\{ \left(1-\omega_{21}^{2}\right)^{-2}\right\} }\\
 & \stackrel{\left(c\right)}{\leq}C,
\end{align*}
where in step $\left(a\right)$, we apply integration by parts; in
step $\left(b\right)$, we use $f'\left(\omega\right)/f\left(\omega\right)=\left(d_{2}-3\right)\omega/\left[2\left(1-\omega^{2}\right)\right]$
for $\left|\omega\right|<1$ and that $\kappa=\sqrt{d}$; in step
$\left(c\right)$, we use the bound:
\begin{align*}
\mathbb{E}_{\boldsymbol{\omega}}\left\{ \left(1-\omega_{21}^{2}\right)^{-2}\right\}  & =\mathbb{E}_{\boldsymbol{g}}\left\{ \left(\sum_{i=1}^{d_{2}}g_{i}^{2}\right)^{2}\left(\sum_{i=2}^{d_{2}}g_{i}^{2}\right)^{-2}\right\} \leq\sqrt{\mathbb{E}_{\boldsymbol{g}}\left\{ \left(\sum_{i=1}^{d_{2}}g_{i}^{2}\right)^{4}\right\} \mathbb{E}_{\boldsymbol{g}}\left\{ \left(\sum_{i=2}^{d_{2}}g_{i}^{2}\right)^{-4}\right\} }\\
 & =\sqrt{\frac{\Gamma\left(d_{2}/2+4\right)}{\Gamma\left(d_{2}/2\right)}\times\frac{\Gamma\left(\left(d_{2}-1\right)/2-4\right)}{\Gamma\left(\left(d_{2}-1\right)/2\right)}}\leq C,
\end{align*}
for $\left(g_{i}\right)_{i\leq d_{2}}\sim_{{\rm i.i.d.}}\mathsf{N}\left(0,1\right)$
and $d_{2}>9$. Similar arguments apply to $a\partial_{1}q_{1}\left(a,b\right)$,
$a\partial_{11}^{2}q_{1}\left(a,b\right)$, $b\partial_{22}^{2}q_{2}\left(a,b\right)$,
$a\partial_{12}^{2}q_{1}\left(a,b\right)$ and $b\partial_{12}^{2}q_{2}\left(a,b\right)$
\end{proof}
\begin{prop}
\label{prop:2nd_setting_grow_bound}Consider setting \ref{enu:bdd_act_setting},
and $\rho={\rm Law}\left(r_{1}\boldsymbol{\omega}_{1},r_{2}\boldsymbol{\omega}_{2}\right)$
in which $\left(r_{1},r_{2}\right)$, $\boldsymbol{\omega}_{1}$ and
$\boldsymbol{\omega}_{2}$ are mutually independent, $\left(r_{1},r_{2}\right)\sim\rho_{r}$,
$r_{1},r_{2}\geq0$ and $\int\left(r_{1}+r_{2}\right){\rm d}\rho_{r}\leq C$.
Then:
\begin{itemize}
\item The following growth bounds hold:
\begin{align*}
\left\Vert \nabla V\left(\boldsymbol{\theta}\right)\right\Vert _{2} & \leq C\left\Vert \boldsymbol{\theta}\right\Vert _{2},\\
\left\Vert \nabla V\left(\boldsymbol{\theta}_{1}\right)-\nabla V\left(\boldsymbol{\theta}_{2}\right)\right\Vert _{2} & \leq C\left\Vert \boldsymbol{\theta}_{1}-\boldsymbol{\theta}_{2}\right\Vert _{2},\\
\left\Vert \nabla_{1}W\left(\boldsymbol{\theta};\rho\right)\right\Vert _{2} & \leq C,\\
\left\Vert \nabla_{1}W\left(\boldsymbol{\theta}_{1};\rho\right)-\nabla_{1}W\left(\boldsymbol{\theta}_{2};\rho\right)\right\Vert _{2} & \leq C\left\Vert \boldsymbol{\theta}_{1}-\boldsymbol{\theta}_{2}\right\Vert _{2},\\
\left\Vert \nabla_{1}U\left(\boldsymbol{\theta},\boldsymbol{\theta}'\right)\right\Vert _{2} & \leq C\kappa^{2}\left(1+\left\Vert \boldsymbol{\theta}\right\Vert _{2}\right)\left\Vert \boldsymbol{\theta}'\right\Vert _{2}.
\end{align*}
Furthermore, $\left|V\left(\boldsymbol{0}\right)\right|=\left|U\left(\boldsymbol{0},\boldsymbol{0}\right)\right|=\left|W\left(\boldsymbol{0};\rho'\right)\right|=0$
for any $\rho'$.
\item We also have:
\begin{align*}
 & \hat{\boldsymbol{x}}\left(\boldsymbol{x}\right)\equiv\int\kappa\boldsymbol{\theta}\sigma\left(\left\langle \kappa\boldsymbol{\theta},\boldsymbol{x}\right\rangle \right)\rho\left({\rm d}\boldsymbol{\theta}\right)\\
 & =\int\left(r_{1}q_{1}\left(\left\Vert \boldsymbol{x}_{\left[1\right]}\right\Vert _{2}r_{1},\left\Vert \boldsymbol{x}_{\left[2\right]}\right\Vert _{2}r_{2}\right)\frac{\boldsymbol{x}_{\left[1\right]}}{\left\Vert \boldsymbol{x}_{\left[1\right]}\right\Vert _{2}},\quad r_{2}q_{2}\left(\left\Vert \boldsymbol{x}_{\left[1\right]}\right\Vert _{2}r_{1},\left\Vert \boldsymbol{x}_{\left[2\right]}\right\Vert _{2}r_{2}\right)\frac{\boldsymbol{x}_{\left[2\right]}}{\left\Vert \boldsymbol{x}_{\left[2\right]}\right\Vert _{2}}\right)\rho_{r}\left({\rm d}r_{1},{\rm d}r_{2}\right),
\end{align*}
for any $\boldsymbol{x}=\left(\boldsymbol{x}_{\left[1\right]},\boldsymbol{x}_{\left[2\right]}\right)$,
and $q_{1}$ and $q_{2}$ are as defined in (\ref{eq:2nd_setting_q1})
and (\ref{eq:2nd_setting_q2}). Furthermore, for any $\boldsymbol{v}\in\mathbb{S}^{d-1}$,
$\mathbb{E}_{{\cal P}}\left\{ \left|\kappa\left\langle \hat{\boldsymbol{x}}\left(\boldsymbol{x}\right),\boldsymbol{v}\right\rangle \right|^{8}\right\} \leq C$.
\end{itemize}
\end{prop}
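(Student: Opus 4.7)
The plan is to reduce every quantity to an explicit closed form via Stein's lemma applied to the Gaussian input $\boldsymbol{x}$, and then extract $\mathfrak{Dim}$-independent bounds from the combination of pointwise boundedness of $\sigma,\sigma',\sigma''$ and the integrability of $\sigma'',\sigma'''$ provided by the hypotheses $\|\hat{\sigma}_2\|_\infty,\|\hat{\sigma}_3\|_\infty\le C$. The vanishing claims $V(\boldsymbol{0})=U(\boldsymbol{0},\boldsymbol{0})=W(\boldsymbol{0};\rho')=0$ are immediate from $\sigma_*(\boldsymbol{x};\boldsymbol{0})=\boldsymbol{0}$.

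Set $u=\kappa\langle\boldsymbol{\theta},\boldsymbol{x}\rangle\sim\mathsf{N}(0,s^2)$ with $s=\|\boldsymbol{\Sigma}\boldsymbol{\theta}\|_2$. Stein's identity gives $V(\boldsymbol{\theta})=-s^2\mathbb{E}\{\sigma'(sg)\}+\lambda\|\boldsymbol{\theta}\|_2^2$ and, after a second application, $\nabla V(\boldsymbol{\theta})=-\boldsymbol{\Sigma}^2\boldsymbol{\theta}\cdot J(s)+2\lambda\boldsymbol{\theta}$, where $J(s)=H(s)+\mathbb{E}\{g^2\sigma'(sg)\}$ with $H(s)=\mathbb{E}\{\sigma'(sg)\}$; the $\mathbb{E}\{u\sigma''(u)\}$ term that appears naively is rewritten as $-H(s)+\mathbb{E}\{g^2\sigma'(sg)\}$ via Gaussian integration by parts against the weight $\phi(u/s)/s$. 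Boundedness of $\sigma'$ gives $|J(s)|\le C$, hence $\|\nabla V\|_2\le C\|\boldsymbol{\theta}\|_2$. Differentiating once more yields a Hessian whose rank-one piece carries a factor $sJ'(s)$; the key uniform bound $|sJ'(s)|\le C$ comes from another IBP that shifts the additional $s$ onto the Gaussian weight and uses $\|\sigma''\|_\infty$ together with $\sigma'''\in L^1$ (via $\|\hat{\sigma}_3\|_\infty\le C$), so the boundary terms vanish. The $\nabla_1 U$ bound is analogous: after product-rule differentiation, the dangerous piece $\kappa^3\langle\boldsymbol{\theta},\boldsymbol{\theta}'\rangle\mathbb{E}\{\boldsymbol{x}\sigma'(u)\sigma(u')\}$ is controlled by Cauchy--Schwarz together with $\mathbb{E}\langle\boldsymbol{v},\boldsymbol{x}\rangle_2^2\le C/\kappa^2$, which produces the needed factor of $1/\kappa$ and yields $\|\nabla_1 U\|_2\le C\kappa^2(1+\|\boldsymbol{\theta}\|_2)\|\boldsymbol{\theta}'\|_2$.

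For the $\hat{\boldsymbol{x}}$ formula, decompose $\boldsymbol{\theta}'\sim\rho$ as $(r_1\boldsymbol{\omega}_1,r_2\boldsymbol{\omega}_2)$ and, inside the expectation defining $\hat{\boldsymbol{x}}(\boldsymbol{x})$, conjugate each $\boldsymbol{\omega}_j$ by the rotation of $\mathbb{R}^{d_j}$ that sends $\boldsymbol{x}_{[j]}/\|\boldsymbol{x}_{[j]}\|_2$ to the first basis vector; the integrand then depends on $\boldsymbol{\omega}_j$ only through $\omega_{j1}$ and all other components integrate to $\boldsymbol{0}$ by sign symmetry, giving exactly the claimed $q_1,q_2$ representation. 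For the moment bound, this formula shows $\kappa\langle\hat{\boldsymbol{x}}(\boldsymbol{x}),\boldsymbol{v}\rangle=\sum_{j\in\{1,2\}}\widetilde{F}_j(\boldsymbol{x})\cdot\kappa\langle\boldsymbol{v}_{[j]},\boldsymbol{x}_{[j]}/\|\boldsymbol{x}_{[j]}\|_2\rangle$ with $|\widetilde{F}_j(\boldsymbol{x})|\le C\int r_j\,d\rho_r\le C$ by Lemma~\ref{lem:2nd_setting_qBounds}, and $\widetilde{F}_j$ depends on $\boldsymbol{x}$ only through $(\|\boldsymbol{x}_{[1]}\|_2,\|\boldsymbol{x}_{[2]}\|_2)$. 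Since for Gaussian $\boldsymbol{x}_{[j]}$ the direction $\boldsymbol{x}_{[j]}/\|\boldsymbol{x}_{[j]}\|_2$ is uniform on $\mathbb{S}^{d_j-1}$ and independent of the norm, conditioning on the norms and invoking Lemma~\ref{lem:2nd_setting_unifVec_moments} gives $\mathbb{E}|\kappa\langle\hat{\boldsymbol{x}},\boldsymbol{v}\rangle|^8\le C$. The $\nabla_1 W$ growth and Lipschitz bounds then follow by writing $W(\boldsymbol{\theta};\rho)=\mathbb{E}_{\cal P}\{\kappa\langle\boldsymbol{\theta},\hat{\boldsymbol{x}}(\boldsymbol{x})\rangle\sigma(u)\}$, differentiating, and bounding each resulting piece by Cauchy--Schwarz using the second-moment consequence $\mathbb{E}\langle\boldsymbol{v},\hat{\boldsymbol{x}}\rangle_2^2\le C/\kappa^2$ to absorb powers of $\kappa$.

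The main obstacle is achieving genuine $\mathfrak{Dim}$-independence in all the scalar Gaussian moments that enter $\nabla V$, $\nabla^2 V$ and $\nabla_1 W$: naive bounds such as $\mathbb{E}|u\sigma''(u)|\le Cs$ would force $\nabla V$ to grow quadratically in $\|\boldsymbol{\theta}\|_2$ and would destroy the uniform Lipschitz property. The resolution throughout is a systematic use of Gaussian integration by parts that pushes derivatives of $\sigma$ onto the Gaussian weight $\phi(u/s)/s$, so that pointwise boundedness of $\sigma',\sigma''$ turns into uniform bounds on $J(s)$ and $sJ'(s)$; this is precisely the role of the antiderivative hypotheses $\|\hat{\sigma}_2\|_\infty,\|\hat{\sigma}_3\|_\infty\le C$, which guarantee $\sigma'',\sigma'''\in L^1$ and kill the boundary terms in each IBP.
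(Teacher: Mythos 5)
Your overall plan tracks the paper's quite closely on two of the three fronts: for $V$ you use the same Stein-lemma reduction to cast $\nabla V(\boldsymbol{\theta})=-\boldsymbol{\Sigma}^2\boldsymbol{\theta}\,J(s)+2\lambda\boldsymbol{\theta}$ and bound $|sJ'(s)|$ (though the paper gets this with only $\|\sigma'\|_\infty$ and a second application of Stein's lemma, never needing $\sigma'''\in L^1$; you cite $\hat\sigma_3$ where $\hat\sigma_2$ would at most be relevant); and the sign-symmetry rotation argument for the $\hat{\boldsymbol{x}}$ formula and the eighth-moment bound via Lemma~\ref{lem:2nd_setting_unifVec_moments} is exactly the paper's Eq.~(\ref{eq:2nd_setting_grow_bound_SteinEq}) route. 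The $\nabla_1 U$ bound is fine by either your Cauchy--Schwarz or the paper's $L^1$-moment variant.

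The genuine gap is the $\nabla_1 W$ growth and Lipschitz bounds. Writing $W(\boldsymbol{\theta};\rho)=\mathbb{E}_{\cal P}\{\kappa\langle\boldsymbol{\theta},\hat{\boldsymbol{x}}\rangle\sigma(u)\}$ and applying Cauchy--Schwarz to the derivative terms does \emph{not} deliver the stated $\boldsymbol{\theta}$-uniform constants. Concretely, for the piece $\kappa^2\mathbb{E}_{\cal P}\{\langle\boldsymbol{\theta},\hat{\boldsymbol{x}}\rangle\sigma'(u)\boldsymbol{x}\}$, Cauchy--Schwarz with $\mathbb{E}\{\kappa^2\langle\boldsymbol{v},\hat{\boldsymbol{x}}\rangle^2\}\le C$ yields $\|\nabla_1 W(\boldsymbol{\theta};\rho)\|_2\lesssim 1+\|\boldsymbol{\theta}\|_2$ rather than $\le C$, and the $\sigma''$ piece of $\nabla_{11}^2 W$ similarly inherits a spurious $\|\boldsymbol{\theta}\|_2$, giving a Lipschitz modulus $\lesssim(1+\|\boldsymbol{\theta}\|_2)\|\boldsymbol{\theta}_1-\boldsymbol{\theta}_2\|_2$ instead of $\lesssim\|\boldsymbol{\theta}_1-\boldsymbol{\theta}_2\|_2$. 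The latter is not merely a weaker constant: Assumption~\ref{enu:Assump_growth} demands a $\boldsymbol{\theta}$-independent Lipschitz constant for $\nabla_1 W(\cdot;\rho^t)$, and the Gronwall step in the proof of Theorem~\ref{thm:propChaos} would not close with a growing one. The resolution in the paper is to bypass Cauchy--Schwarz entirely: write $\nabla_1 W$ explicitly in terms of $q_1,q_2$ and their partial derivatives (using the same rotation reduction you used for $\hat{\boldsymbol{x}}$, now for $\boldsymbol{\theta}$), and then exploit the cancellation bounds in Lemma~\ref{lem:2nd_setting_qBounds} --- in particular $|a\partial_1 q_1(a,b)|\le C$, $|\tfrac{a}{b}\partial_2 q_1(a,b)|\le C$, $|a\partial_{11}^2 q_1(a,b)|\le C$, and their symmetric counterparts. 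These encode precisely the $1/s$ decay of $\mathbb{E}\{g^m\sigma'(sg)\}$ for bounded $\sigma$ that annihilates the factor $\|\boldsymbol{\theta}_{[j]}\|_2$; a pure Cauchy--Schwarz bound with $\|\sigma'\|_\infty$ cannot see it. You would need this $q_j$-level structural argument to close the $\nabla_1 W$ estimates.
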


\begin{proof}
The proof comprises of several parts.

\paragraph*{Bounds for $V$.}

We have:
\[
V\left(\boldsymbol{\theta}\right)=\mathbb{E}_{{\cal P}}\left\{ -\left\langle \kappa\boldsymbol{\theta},\boldsymbol{x}\right\rangle \sigma\left(\left\langle \kappa\boldsymbol{\theta},\boldsymbol{x}\right\rangle \right)\right\} +\lambda\left\Vert \boldsymbol{\theta}\right\Vert _{2}^{2}=-\mathbb{E}_{g}\left\{ \left\Vert \boldsymbol{\Sigma}\boldsymbol{\theta}\right\Vert _{2}g\sigma\left(\left\Vert \boldsymbol{\Sigma}\boldsymbol{\theta}\right\Vert _{2}g\right)\right\} +\lambda\left\Vert \boldsymbol{\theta}\right\Vert _{2}^{2}.
\]
We calculate $\nabla V\left(\boldsymbol{\theta}\right)$ and $\nabla^{2}V\left(\boldsymbol{\theta}\right)$:
\begin{align*}
\nabla V\left(\boldsymbol{\theta}\right) & =-\boldsymbol{\Sigma}^{2}\boldsymbol{\theta}\mathbb{E}_{g}\left\{ \sigma'\left(\left\Vert \boldsymbol{\Sigma}\boldsymbol{\theta}\right\Vert _{2}g\right)+g^{2}\sigma'\left(\left\Vert \boldsymbol{\Sigma}\boldsymbol{\theta}\right\Vert _{2}g\right)\right\} +2\lambda\boldsymbol{\theta},\\
\nabla^{2}V\left(\boldsymbol{\theta}\right) & =-\boldsymbol{\Sigma}^{2}\mathbb{E}_{g}\left\{ \left(1+g^{2}\right)\sigma'\left(\left\Vert \boldsymbol{\Sigma}\boldsymbol{\theta}\right\Vert _{2}g\right)\right\} +2\lambda\boldsymbol{I}_{d}\\
 & \qquad-\frac{\boldsymbol{\Sigma}^{2}\boldsymbol{\theta}\boldsymbol{\theta}^{\top}\boldsymbol{\Sigma}^{2}}{\left\Vert \boldsymbol{\Sigma}\boldsymbol{\theta}\right\Vert _{2}^{2}}\mathbb{E}_{g}\left\{ \left(1+g^{2}\right)\left\Vert \boldsymbol{\Sigma}\boldsymbol{\theta}\right\Vert _{2}g\sigma''\left(\left\Vert \boldsymbol{\Sigma}\boldsymbol{\theta}\right\Vert _{2}g\right)\right\} .
\end{align*}
Since $\left\Vert \sigma'\right\Vert _{\infty}\leq C$ and $\left\Vert \boldsymbol{\Sigma}\right\Vert _{{\rm op}}\leq C$,
it is easy to see that $\left\Vert \nabla V\left(\boldsymbol{\theta}\right)\right\Vert _{2}\leq C\left\Vert \boldsymbol{\theta}\right\Vert _{2}$.
We also have from Stein's lemma:
\begin{align*}
 & \mathbb{E}_{g}\left\{ g\left(2g-g^{3}\right)\sigma'\left(\left\Vert \boldsymbol{\Sigma}\boldsymbol{\theta}\right\Vert _{2}g\right)\right\} \\
 & \qquad=\mathbb{E}_{g}\left\{ \left(2-3g^{2}\right)\sigma'\left(\left\Vert \boldsymbol{\Sigma}\boldsymbol{\theta}\right\Vert _{2}g\right)+\left\Vert \boldsymbol{\Sigma}\boldsymbol{\theta}\right\Vert _{2}\left(2g-g^{3}\right)\sigma''\left(\left\Vert \boldsymbol{\Sigma}\boldsymbol{\theta}\right\Vert _{2}g\right)\right\} \\
 & \qquad=\mathbb{E}_{g}\left\{ -\sigma'\left(\left\Vert \boldsymbol{\Sigma}\boldsymbol{\theta}\right\Vert _{2}g\right)-3\left\Vert \boldsymbol{\Sigma}\boldsymbol{\theta}\right\Vert _{2}g\sigma''\left(\left\Vert \boldsymbol{\Sigma}\boldsymbol{\theta}\right\Vert _{2}g\right)+\left\Vert \boldsymbol{\Sigma}\boldsymbol{\theta}\right\Vert _{2}\left(2g-g^{3}\right)\sigma''\left(\left\Vert \boldsymbol{\Sigma}\boldsymbol{\theta}\right\Vert _{2}g\right)\right\} \\
 & \qquad=\mathbb{E}_{g}\left\{ -\sigma'\left(\left\Vert \boldsymbol{\Sigma}\boldsymbol{\theta}\right\Vert _{2}g\right)-\left\Vert \boldsymbol{\Sigma}\boldsymbol{\theta}\right\Vert _{2}g\left(1+g^{2}\right)\sigma''\left(\left\Vert \boldsymbol{\Sigma}\boldsymbol{\theta}\right\Vert _{2}g\right)\right\} ,
\end{align*}
and thus, using the fact $\left\Vert \sigma'\right\Vert _{\infty}\leq C$:
\[
\left|\mathbb{E}_{g}\left\{ \left\Vert \boldsymbol{\Sigma}\boldsymbol{\theta}\right\Vert _{2}g\left(1+g^{2}\right)\sigma''\left(\left\Vert \boldsymbol{\Sigma}\boldsymbol{\theta}\right\Vert _{2}g\right)\right\} \right|=\left|\mathbb{E}_{g}\left\{ \left(g\left(2-g^{3}\right)+1\right)\sigma'\left(\left\Vert \boldsymbol{\Sigma}\boldsymbol{\theta}\right\Vert _{2}g\right)\right\} \right|\leq C.
\]
It is then easy to see that $\left\Vert \nabla^{2}V\left(\boldsymbol{\theta}\right)\right\Vert _{{\rm op}}\leq C$,
since $\left\Vert \boldsymbol{\Sigma}\right\Vert _{{\rm op}}\leq C$
and $\left\Vert \boldsymbol{\Sigma}\boldsymbol{\theta}\right\Vert _{2}\geq C\left\Vert \boldsymbol{\theta}\right\Vert _{2}$.
This in particular implies
\[
\left\Vert \nabla V\left(\boldsymbol{\theta}_{1}\right)-\nabla V\left(\boldsymbol{\theta}_{2}\right)\right\Vert _{2}\leq C\left\Vert \boldsymbol{\theta}_{1}-\boldsymbol{\theta}_{2}\right\Vert _{2},
\]
as desired.

\paragraph*{Bounds for $W$.}

Let us define $\chi_{1}\stackrel{{\rm d}}{=}\Sigma_{1}\sqrt{\alpha/d_{1}}Z_{1}$
and $\chi_{2}\stackrel{{\rm d}}{=}\Sigma_{2}\sqrt{\left(1-\alpha\right)/d_{2}}Z_{2}$
two independent random variables, which are independent of $\boldsymbol{\omega}_{1}$
and $\boldsymbol{\omega}_{2}$, where $Z_{1}$ and $Z_{2}$ are respectively
$\chi$-random variables of degrees of freedom $d_{1}$ and $d_{2}$.
For ease of presentation, let us introduce several notations, for
$j,i,k\in\left\{ 1,2\right\} $:
\begin{align*}
q_{j}^{r} & =q_{j}\left(r_{1}\chi_{1},r_{2}\chi_{2}\right), & q_{j}^{\theta} & =q_{j}\left(\left\Vert \boldsymbol{\theta}_{\left[1\right]}\right\Vert _{2}\chi_{1},\left\Vert \boldsymbol{\theta}_{\left[2\right]}\right\Vert _{2}\chi_{2}\right),\\
\partial_{i}q_{j}^{\theta} & =\partial_{i}q_{j}\left(\left\Vert \boldsymbol{\theta}_{\left[1\right]}\right\Vert _{2}\chi_{1},\left\Vert \boldsymbol{\theta}_{\left[2\right]}\right\Vert _{2}\chi_{2}\right), & \partial_{ik}^{2}q_{j}^{\theta} & =\partial_{ik}^{2}q_{j}\left(\left\Vert \boldsymbol{\theta}_{\left[1\right]}\right\Vert _{2}\chi_{1},\left\Vert \boldsymbol{\theta}_{\left[2\right]}\right\Vert _{2}\chi_{2}\right).
\end{align*}
The meaning of each particular quantity shall be clear in the context
it is used.

We first do a useful calculation. For a fixed vector $\boldsymbol{v}\in\mathbb{R}^{d_{1}}$
and any $a,b\in\mathbb{R}$, $a\geq0$, we have:
\begin{align}
\mathbb{E}_{\boldsymbol{\omega}}\left\{ \boldsymbol{\omega}_{1}\sigma\left(a\left\langle \boldsymbol{v},\boldsymbol{\omega}_{1}\right\rangle +b\right)\right\}  & =\mathbb{E}_{\boldsymbol{\omega}}\left\{ \left(\frac{\left\langle \boldsymbol{v},\boldsymbol{\omega}_{1}\right\rangle }{\left\Vert \boldsymbol{v}\right\Vert _{2}^{2}}\boldsymbol{v}+{\rm Proj}_{\boldsymbol{v}}^{\perp}\boldsymbol{\omega}_{1}\right)\sigma\left(a\left\langle \boldsymbol{v},\boldsymbol{\omega}_{1}\right\rangle +b\right)\right\} \nonumber \\
 & \stackrel{\left(a\right)}{=}\frac{\boldsymbol{v}}{\left\Vert \boldsymbol{v}\right\Vert _{2}^{2}}\mathbb{E}_{\boldsymbol{\omega}}\left\{ \left\langle \boldsymbol{v},\boldsymbol{\omega}_{1}\right\rangle \sigma\left(a\left\langle \boldsymbol{v},\boldsymbol{\omega}_{1}\right\rangle +b\right)\right\} \nonumber \\
 & \stackrel{\left(b\right)}{=}\frac{\boldsymbol{v}}{\left\Vert \boldsymbol{v}\right\Vert _{2}}\mathbb{E}\left\{ \omega_{11}\sigma\left(a\left\Vert \boldsymbol{v}\right\Vert _{2}\omega_{11}+b\right)\right\} ,\label{eq:2nd_setting_grow_bound_SteinEq}
\end{align}
where step $\left(a\right)$ is because conditioning on $\left\langle \boldsymbol{v},\boldsymbol{\omega}_{1}\right\rangle $,
we have ${\rm Proj}_{\boldsymbol{v}}^{\perp}\boldsymbol{\omega}_{1}\stackrel{{\rm d}}{=}-{\rm Proj}_{\boldsymbol{v}}^{\perp}\boldsymbol{\omega}_{1}$;
step $\left(b\right)$ follows from that $\boldsymbol{\omega}_{1}\stackrel{{\rm d}}{=}\boldsymbol{Q}\boldsymbol{\omega}_{1}$
for any orthogonal matrix $\boldsymbol{Q}$, and we choose $\boldsymbol{Q}$
such that $\boldsymbol{Q}^{\top}\boldsymbol{v}=\left(\left\Vert \boldsymbol{v}\right\Vert _{2},0,...,0\right)^{\top}$.
Using this calculation, we have for any $\boldsymbol{x}\in\mathbb{R}^{d}$:
\begin{align*}
 & \int\kappa\bar{\boldsymbol{\theta}}\sigma\left(\left\langle \kappa\bar{\boldsymbol{\theta}},\boldsymbol{x}\right\rangle \right)\rho\left({\rm d}\bar{\boldsymbol{\theta}}\right)\\
 & =\int\left(r_{1}q_{1}\left(\left\Vert \boldsymbol{x}_{\left[1\right]}\right\Vert _{2}r_{1},\left\Vert \boldsymbol{x}_{\left[2\right]}\right\Vert _{2}r_{2}\right)\frac{\boldsymbol{x}_{\left[1\right]}}{\left\Vert \boldsymbol{x}_{\left[1\right]}\right\Vert _{2}},\quad r_{2}q_{2}\left(\left\Vert \boldsymbol{x}_{\left[1\right]}\right\Vert _{2}r_{1},\left\Vert \boldsymbol{x}_{\left[2\right]}\right\Vert _{2}r_{2}\right)\frac{\boldsymbol{x}_{\left[2\right]}}{\left\Vert \boldsymbol{x}_{\left[2\right]}\right\Vert _{2}}\right)\rho_{r}\left({\rm d}r_{1},{\rm d}r_{2}\right).
\end{align*}
We then obtain, again by Eq. (\ref{eq:2nd_setting_grow_bound_SteinEq}),
for $\boldsymbol{\theta}\in\mathbb{R}^{d}$,
\begin{align*}
W\left(\boldsymbol{\theta};\rho\right) & =\mathbb{E}_{{\cal P}}\left\{ \left\langle \kappa\boldsymbol{\theta}\sigma\left(\left\langle \kappa\boldsymbol{\theta},\boldsymbol{x}\right\rangle \right),\int\kappa\boldsymbol{\theta}'\sigma\left(\left\langle \kappa\boldsymbol{\theta}',\boldsymbol{x}\right\rangle \right)\rho\left({\rm d}\boldsymbol{\theta}'\right)\right\rangle \right\} \\
 & =\sum_{j\in\left\{ 1,2\right\} }\int\mathbb{E}_{{\cal P}}\left\{ r_{j}q_{j}\left(\left\Vert \boldsymbol{x}_{\left[1\right]}\right\Vert _{2}r_{1},\left\Vert \boldsymbol{x}_{\left[2\right]}\right\Vert _{2}r_{2}\right)\frac{\left\langle \kappa\boldsymbol{\theta}_{\left[j\right]},\boldsymbol{x}_{\left[j\right]}\right\rangle }{\left\Vert \boldsymbol{x}_{\left[j\right]}\right\Vert _{2}}\sigma\left(\left\langle \kappa\boldsymbol{\theta},\boldsymbol{x}\right\rangle \right)\right\} \rho_{r}\left({\rm d}r_{1},{\rm d}r_{2}\right)\\
 & =\sum_{j\in\left\{ 1,2\right\} }\int\mathbb{E}_{\chi,\boldsymbol{\omega}}\left\{ r_{j}q_{j}^{r}\left\langle \kappa\boldsymbol{\theta}_{\left[j\right]},\boldsymbol{\omega}_{j}\right\rangle \sigma\left(\chi_{j}\left\langle \kappa\boldsymbol{\theta}_{\left[j\right]},\boldsymbol{\omega}_{j}\right\rangle +\chi_{\neg j}\left\langle \kappa\boldsymbol{\theta}_{\left[\neg j\right]},\boldsymbol{\omega}_{\neg j}\right\rangle \right)\right\} \rho_{r}\left({\rm d}r_{1},{\rm d}r_{2}\right)\\
 & =\sum_{j\in\left\{ 1,2\right\} }\int r_{j}\left\Vert \boldsymbol{\theta}_{\left[j\right]}\right\Vert _{2}\mathbb{E}_{\chi}\left\{ q_{j}^{r}q_{j}^{\theta}\right\} \rho_{r}\left({\rm d}r_{1},{\rm d}r_{2}\right),
\end{align*}
where we assume the convention $\neg j=2$ if $j=1$ and $\neg j=1$
if $j=2$. We calculate $\nabla_{1}W\left(\boldsymbol{\theta};\rho\right)$:
\begin{align*}
\nabla_{1}W\left(\boldsymbol{\theta};\rho\right) & =\left(\nabla_{1}W\left(\boldsymbol{\theta};\rho\right)_{\left[1\right]},\quad\nabla_{1}W\left(\boldsymbol{\theta};\rho\right)_{\left[2\right]}\right),\\
\nabla_{1}W\left(\boldsymbol{\theta};\rho\right)_{\left[j\right]} & =\frac{\boldsymbol{\theta}_{\left[j\right]}}{\left\Vert \boldsymbol{\theta}_{\left[j\right]}\right\Vert _{2}}\int r_{j}\mathbb{E}_{\chi}\left\{ q_{j}^{r}q_{j}^{\theta}\right\} \rho_{r}\left({\rm d}r_{1},{\rm d}r_{2}\right)+\boldsymbol{\theta}_{\left[j\right]}\int r_{j}\mathbb{E}_{\chi}\left\{ \chi_{j}q_{j}^{r}\partial_{j}q_{j}^{\theta}\right\} \rho_{r}\left({\rm d}r_{1},{\rm d}r_{2}\right)\\
 & \quad+\frac{\boldsymbol{\theta}_{\left[j\right]}}{\left\Vert \boldsymbol{\theta}_{\left[j\right]}\right\Vert _{2}}\int r_{\neg j}\left\Vert \boldsymbol{\theta}_{\left[\neg j\right]}\right\Vert _{2}\mathbb{E}_{\chi}\left\{ \chi_{j}q_{\neg j}^{r}\partial_{j}q_{\neg j}^{\theta}\right\} \rho_{r}\left({\rm d}r_{1},{\rm d}r_{2}\right),\qquad j=1,2.
\end{align*}
Note that $\mathbb{E}_{\chi}\left\{ \left|\chi_{j}\right|\right\} \leq\sqrt{\mathbb{E}_{\chi}\left\{ \chi_{j}^{2}\right\} }=\sqrt{\Sigma_{j}^{2}d_{j}/d}\leq C$
and 
\[
\mathbb{E}_{\chi}\left\{ \left|\frac{\chi_{j}}{\chi_{\neg j}}\right|\right\} \leq\sqrt{\mathbb{E}_{\chi}\left\{ \chi_{j}^{2}\right\} \mathbb{E}_{\chi}\left\{ \chi_{\neg j}^{-2}\right\} }\leq\sqrt{\Sigma_{j}^{2}d_{j}/d}\sqrt{\Sigma_{\neg j}^{-2}d/\left(d_{\neg j}-2\right)}\leq C.
\]
Then by Lemma \ref{lem:2nd_setting_qBounds}, along with the fact
$\int\left(r_{1}+r_{2}\right){\rm d}\rho_{r}\leq C$, we have:
\[
\left\Vert \nabla_{1}W\left(\boldsymbol{\theta};\rho\right)_{\left[j\right]}\right\Vert _{2}\leq C\int r_{j}\rho_{r}\left({\rm d}r_{1},{\rm d}r_{2}\right)+C\mathbb{E}_{\chi}\left\{ \left|\frac{\chi_{j}}{\chi_{\neg j}}\right|\right\} \int r_{\neg j}\rho_{r}\left({\rm d}r_{1},{\rm d}r_{2}\right)\leq C,
\]
which implies $\left\Vert \nabla_{1}W\left(\boldsymbol{\theta};\rho\right)\right\Vert _{2}\leq C$
as desired. Next we calculate $\nabla_{11}^{2}W\left(\boldsymbol{\theta};\rho\right)$:
\begin{align*}
\nabla_{11}^{2}W\left(\boldsymbol{\theta};\rho\right) & =\left(\begin{array}{cc}
\left[\nabla_{11}^{2}W\left(\boldsymbol{\theta};\rho\right)\right]_{11} & \left[\nabla_{11}^{2}W\left(\boldsymbol{\theta};\rho\right)\right]_{12}\\
\left[\nabla_{11}^{2}W\left(\boldsymbol{\theta};\rho\right)\right]_{12}^{\top} & \left[\nabla_{11}^{2}W\left(\boldsymbol{\theta};\rho\right)\right]_{22}
\end{array}\right),\\
\left[\nabla_{11}^{2}W\left(\boldsymbol{\theta};\rho\right)\right]_{jj} & =\left(\frac{\boldsymbol{I}}{\left\Vert \boldsymbol{\theta}_{\left[j\right]}\right\Vert _{2}}-\frac{\boldsymbol{\theta}_{\left[j\right]}\boldsymbol{\theta}_{\left[j\right]}^{\top}}{\left\Vert \boldsymbol{\theta}_{\left[j\right]}\right\Vert _{2}^{3}}\right)\int\left(r_{j}\mathbb{E}_{\chi}\left\{ q_{j}^{r}q_{j}^{\theta}\right\} +r_{\neg j}\left\Vert \boldsymbol{\theta}_{\left[\neg j\right]}\right\Vert _{2}\mathbb{E}_{\chi}\left\{ \chi_{j}q_{\neg j}^{r}\partial_{j}q_{\neg j}^{\theta}\right\} \right)\rho_{r}\left({\rm d}r_{1},{\rm d}r_{2}\right)\\
 & \qquad+\left(\boldsymbol{I}+\frac{\boldsymbol{\theta}_{\left[j\right]}\boldsymbol{\theta}_{\left[j\right]}^{\top}}{\left\Vert \boldsymbol{\theta}_{\left[j\right]}\right\Vert _{2}^{2}}\right)\int r_{j}\mathbb{E}_{\chi}\left\{ \chi_{j}q_{j}^{r}\partial_{j}q_{j}^{\theta}\right\} \rho_{r}\left({\rm d}r_{1},{\rm d}r_{2}\right)\\
 & \qquad+\frac{\boldsymbol{\theta}_{\left[j\right]}\boldsymbol{\theta}_{\left[j\right]}^{\top}}{\left\Vert \boldsymbol{\theta}_{\left[j\right]}\right\Vert _{2}}\int r_{j}\mathbb{E}_{\chi}\left\{ \chi_{j}^{2}q_{j}^{r}\partial_{jj}^{2}q_{j}^{\theta}\right\} \rho_{r}\left({\rm d}r_{1},{\rm d}r_{2}\right)\\
 & \qquad+\frac{\boldsymbol{\theta}_{\left[j\right]}\boldsymbol{\theta}_{\left[j\right]}^{\top}}{\left\Vert \boldsymbol{\theta}_{\left[j\right]}\right\Vert _{2}^{2}}\int r_{\neg j}\left\Vert \boldsymbol{\theta}_{\left[\neg j\right]}\right\Vert _{2}\mathbb{E}_{\chi}\left\{ \chi_{j}^{2}q_{\neg j}^{r}\partial_{jj}^{2}q_{\neg j}^{\theta}\right\} \rho_{r}\left({\rm d}r_{1},{\rm d}r_{2}\right),\\
\left[\nabla_{11}^{2}W\left(\boldsymbol{\theta};\rho\right)\right]_{12} & =\frac{\boldsymbol{\theta}_{\left[1\right]}\boldsymbol{\theta}_{\left[2\right]}^{\top}}{\left\Vert \boldsymbol{\theta}_{\left[1\right]}\right\Vert _{2}\left\Vert \boldsymbol{\theta}_{\left[2\right]}\right\Vert _{2}}\Bigg(\int\left(r_{1}\mathbb{E}_{\chi}\left\{ \chi_{2}q_{1}^{r}\partial_{2}q_{1}^{\theta}\right\} +r_{2}\mathbb{E}_{\chi}\left\{ \chi_{1}q_{2}^{r}\partial_{1}q_{2}^{\theta}\right\} \right)\rho_{r}\left({\rm d}r_{1},{\rm d}r_{2}\right)\\
 & \qquad+\int\mathbb{E}_{\chi}\left\{ \chi_{1}\chi_{2}\left(\left\Vert \boldsymbol{\theta}_{\left[1\right]}\right\Vert _{2}r_{1}q_{1}^{r}\partial_{12}^{2}q_{1}^{\theta}+\left\Vert \boldsymbol{\theta}_{\left[2\right]}\right\Vert _{2}r_{2}q_{2}^{r}\partial_{12}^{2}q_{2}^{\theta}\right)\right\} \rho_{r}\left({\rm d}r_{1},{\rm d}r_{2}\right)\Bigg).
\end{align*}
Then again by Lemma \ref{lem:2nd_setting_qBounds}, along with the
fact $\int\left(r_{1}+r_{2}\right){\rm d}\rho_{r}\leq C$, we have:
\[
\left|\left\langle \boldsymbol{a},\left[\nabla_{11}^{2}W\left(\boldsymbol{\theta};\rho\right)\right]_{jj}\boldsymbol{b}\right\rangle \right|\leq C\left\Vert \boldsymbol{a}\right\Vert _{2}\left\Vert \boldsymbol{b}\right\Vert _{2},\qquad\left|\left\langle \boldsymbol{a}_{1},\left[\nabla_{11}^{2}W\left(\boldsymbol{\theta};\rho\right)\right]_{12}\boldsymbol{a}_{2}\right\rangle \right|\leq C\left\Vert \boldsymbol{a}_{1}\right\Vert _{2}\left\Vert \boldsymbol{a}_{2}\right\Vert _{2},
\]
for any $\boldsymbol{a},\boldsymbol{b}\in\mathbb{R}^{d_{j}}$ and
$\boldsymbol{a}_{1}\in\mathbb{R}^{d_{1}}$, $\boldsymbol{a}_{2}\in\mathbb{R}^{d_{2}}$.
This implies $\left\Vert \nabla_{11}^{2}W\left(\boldsymbol{\theta};\rho\right)\right\Vert _{2}\leq C$,
which shows that
\[
\left\Vert \nabla_{1}W\left(\boldsymbol{\theta}_{1};\rho\right)-\nabla_{1}W\left(\boldsymbol{\theta}_{2};\rho\right)\right\Vert _{2}\leq C\left\Vert \boldsymbol{\theta}_{1}-\boldsymbol{\theta}_{2}\right\Vert _{2}.
\]

\paragraph*{Bounds for $U$.}

Now we consider $U$:
\begin{align*}
\left|\left\langle \nabla_{1}U\left(\boldsymbol{\theta},\boldsymbol{\theta}'\right),\boldsymbol{v}\right\rangle \right| & =\left|\mathbb{E}_{{\cal P}}\left\{ \kappa^{2}\left\langle \boldsymbol{\theta}',\boldsymbol{v}\right\rangle \sigma\left(\left\langle \kappa\boldsymbol{\theta},\boldsymbol{x}\right\rangle \right)\sigma\left(\left\langle \kappa\boldsymbol{\theta}',\boldsymbol{x}\right\rangle \right)+\kappa^{3}\left\langle \boldsymbol{\theta},\boldsymbol{\theta}'\right\rangle \sigma'\left(\left\langle \kappa\boldsymbol{\theta},\boldsymbol{x}\right\rangle \right)\sigma\left(\left\langle \kappa\boldsymbol{\theta}',\boldsymbol{x}\right\rangle \right)\left\langle \boldsymbol{x},\boldsymbol{v}\right\rangle \right\} \right|\\
 & \leq C\kappa^{2}\left\Vert \boldsymbol{\theta}'\right\Vert _{2}\left\Vert \boldsymbol{v}\right\Vert _{2}+\mathbb{E}_{{\cal P}}\left\{ \kappa^{3}\left|\left\langle \boldsymbol{x},\boldsymbol{v}\right\rangle \right|\right\} \left\Vert \boldsymbol{\theta}\right\Vert _{2}\left\Vert \boldsymbol{\theta}'\right\Vert _{2}\\
 & =C\kappa^{2}\left\Vert \boldsymbol{\theta}'\right\Vert _{2}\left\Vert \boldsymbol{v}\right\Vert _{2}+\kappa^{2}\left\Vert \boldsymbol{\Sigma}\boldsymbol{v}\right\Vert _{2}\mathbb{E}_{g}\left\{ \left|g\right|\right\} \left\Vert \boldsymbol{\theta}\right\Vert _{2}\left\Vert \boldsymbol{\theta}'\right\Vert _{2}\\
 & \leq C\kappa^{2}\left(1+\left\Vert \boldsymbol{\theta}\right\Vert _{2}\right)\left\Vert \boldsymbol{\theta}'\right\Vert _{2}\left\Vert \boldsymbol{v}\right\Vert _{2}.
\end{align*}
This shows that $\left\Vert \nabla_{1}U\left(\boldsymbol{\theta},\boldsymbol{\theta}'\right)\right\Vert _{2}\leq C\kappa^{2}\left(1+\left\Vert \boldsymbol{\theta}\right\Vert _{2}\right)\left\Vert \boldsymbol{\theta}'\right\Vert _{2}$.

\paragraph*{Statement at $\boldsymbol{0}$.}

It is easy to see that $V\left(\boldsymbol{0}\right)=U\left(\boldsymbol{0},\boldsymbol{0}\right)=W\left(\boldsymbol{0};\rho'\right)=0$
for any $\rho'$.

\paragraph*{Statement on $\hat{\boldsymbol{x}}\left(\boldsymbol{x}\right)$.}

The formula for $\hat{\boldsymbol{x}}\left(\boldsymbol{x}\right)$
is shown in the bounding for $W$. Defining
\[
s_{j}=\int r_{j}q_{j}\left(\left\Vert \boldsymbol{x}_{\left[1\right]}\right\Vert _{2}r_{1},\left\Vert \boldsymbol{x}_{\left[2\right]}\right\Vert _{2}r_{2}\right)\rho_{r}\left({\rm d}r_{1},{\rm d}r_{2}\right),\qquad j=1,2,
\]
we have $\hat{\boldsymbol{x}}\left(\boldsymbol{x}\right)=\left(s_{1}\boldsymbol{x}_{\left[1\right]}/\left\Vert \boldsymbol{x}_{\left[1\right]}\right\Vert _{2},\;s_{2}\boldsymbol{x}_{\left[2\right]}/\left\Vert \boldsymbol{x}_{\left[2\right]}\right\Vert _{2}\right)$.
By Lemma \ref{lem:2nd_setting_qBounds}, along with the fact $\int\left(r_{1}+r_{2}\right){\rm d}\rho_{r}\leq C$,
it is easy to see that $\left|s_{1}\right|,\left|s_{2}\right|\leq C$.
Hence for any $\boldsymbol{v}\in\mathbb{S}^{d-1}$,
\begin{align*}
\mathbb{E}_{{\cal P}}\left\{ \left|\kappa\left\langle \hat{\boldsymbol{x}}\left(\boldsymbol{x}\right),\boldsymbol{v}\right\rangle \right|^{8}\right\}  & \leq C\mathbb{E}_{{\cal P}}\left\{ \left|\kappa\left\langle \frac{\boldsymbol{x}_{\left[1\right]}}{\left\Vert \boldsymbol{x}_{\left[1\right]}\right\Vert _{2}},\boldsymbol{v}_{1}\right\rangle \right|^{8}+\left|\kappa\left\langle \frac{\boldsymbol{x}_{\left[2\right]}}{\left\Vert \boldsymbol{x}_{\left[2\right]}\right\Vert _{2}},\boldsymbol{v}_{2}\right\rangle \right|^{8}\right\} \\
 & =C\mathbb{E}_{\boldsymbol{\omega}}\left\{ \left|\kappa\left\langle \boldsymbol{\omega}_{1},\boldsymbol{v}_{1}\right\rangle \right|^{8}+\left|\kappa\left\langle \boldsymbol{\omega}_{2},\boldsymbol{v}_{2}\right\rangle \right|^{8}\right\} \leq C,
\end{align*}
by Lemma \ref{lem:2nd_setting_unifVec_moments}.

\end{proof}
\begin{prop}
\label{prop:2nd_setting_grow_bound_Wrho}Consider setting \ref{enu:bdd_act_setting}
and, for each $k=1,2$, consider $\rho_{k}={\rm Law}\left(r_{1,k}\boldsymbol{\omega}_{1},r_{2,k}\boldsymbol{\omega}_{2}\right)$
in which $\left(r_{1,k},r_{2,k}\right)$, $\boldsymbol{\omega}_{1}$
and $\boldsymbol{\omega}_{2}$ are mutually independent, $\left(r_{1,k},r_{2,k}\right)\sim\rho_{r,k}$,
$r_{1,k},r_{2,k}\geq0$ and $\int\left(r_{1}^{2}+r_{2}^{2}\right)\rho_{r,k}\left({\rm d}r_{1},{\rm d}r_{2}\right)\leq C$.
Then:
\[
\left\Vert \nabla_{1}W\left(\boldsymbol{\theta};\rho_{1}\right)-\nabla_{1}W\left(\boldsymbol{\theta};\rho_{2}\right)\right\Vert _{2}\leq C\mathscr{W}_{2}\left(\rho_{r,1},\rho_{r,2}\right),
\]
where $\mathscr{W}_{2}\left(\rho_{r,1},\rho_{r,2}\right)$ is the
Wasserstein distance given as:
\[
\mathscr{W}_{2}\left(\rho_{r,1},\rho_{r,2}\right)=\inf\left\{ \int\left\Vert \boldsymbol{r}_{1}-\boldsymbol{r}_{2}\right\Vert _{2}^{2}\nu\left({\rm d}\boldsymbol{r}_{1},{\rm d}\boldsymbol{r}_{2}\right):\;\boldsymbol{r}_{k}\sim\rho_{r,k},\;k=1,2,\;\nu\text{ a coupling of }\rho_{r,1}\text{ and }\rho_{r,2}\right\} ^{1/2}.
\]
\end{prop}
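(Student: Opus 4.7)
The plan is to reduce the Wasserstein bound to pointwise Lipschitz estimates on the integrands in the explicit formula for $\nabla_{1}W(\boldsymbol{\theta};\rho)$ already derived inside the proof of Proposition \ref{prop:2nd_setting_grow_bound}. For $j=1,2$, that formula decomposes $\nabla_{1}W(\boldsymbol{\theta};\rho)_{[j]}$ as a sum of three terms of the form $\boldsymbol{c}_{\ell}(\boldsymbol{\theta})\int \Phi_{\ell}^{(j)}(r_{1},r_{2};\boldsymbol{\theta})\,\rho_{r}(\mathrm{d}r_{1},\mathrm{d}r_{2})$, where the vector prefactor $\boldsymbol{c}_{\ell}(\boldsymbol{\theta})$ has norm $1$, $\|\boldsymbol{\theta}_{[j]}\|_{2}$, or $\|\boldsymbol{\theta}_{[\neg j]}\|_{2}$, and the scalar $\Phi_{\ell}^{(j)}$ is an $\mathbb{E}_{\chi}$ of a product of one factor evaluated at $(r_{1}\chi_{1},r_{2}\chi_{2})$ and one factor evaluated at $(\|\boldsymbol{\theta}_{[1]}\|_{2}\chi_{1},\|\boldsymbol{\theta}_{[2]}\|_{2}\chi_{2})$, involving the functions $q_{1},q_{2}$ and their first partials. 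For any coupling $\nu$ of $\rho_{r,1}$ and $\rho_{r,2}$, the difference $\nabla_{1}W(\boldsymbol{\theta};\rho_{1})-\nabla_{1}W(\boldsymbol{\theta};\rho_{2})$ can then be written as $\sum_{\ell}\boldsymbol{c}_{\ell}(\boldsymbol{\theta})\int [\Phi_{\ell}^{(j)}(r^{(1)};\boldsymbol{\theta})-\Phi_{\ell}^{(j)}(r^{(2)};\boldsymbol{\theta})]\,\nu(\mathrm{d}r^{(1)},\mathrm{d}r^{(2)})$.

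The core step is to prove the pointwise Lipschitz bound $|\Phi_{\ell}^{(j)}(r^{(1)};\boldsymbol{\theta})-\Phi_{\ell}^{(j)}(r^{(2)};\boldsymbol{\theta})|\leq L_{\ell}(\boldsymbol{\theta})\|r^{(1)}-r^{(2)}\|_{2}$ with $\|\boldsymbol{c}_{\ell}(\boldsymbol{\theta})\|\,L_{\ell}(\boldsymbol{\theta})\leq C$ uniformly in $\boldsymbol{\theta}$. I would establish this by differentiating $\Phi_{\ell}^{(j)}$ in $(r_{1},r_{2})$ under $\mathbb{E}_{\chi}$ and applying the bounds in Lemma \ref{lem:2nd_setting_qBounds} pointwise, choosing in each case the specific inequality that cancels the $\boldsymbol{\theta}$-dependence of $\boldsymbol{c}_{\ell}(\boldsymbol{\theta})$ (notably the possibly large ratio $\|\boldsymbol{\theta}_{[\neg j]}\|_{2}/\|\boldsymbol{\theta}_{[j]}\|_{2}$ arising in Term 3, which is neutralised by combining $|b\,\partial_{1}q_{2}(a,b)|\leq C$ on the $(r_{1}\chi_{1},r_{2}\chi_{2})$-factor and on the $(\|\boldsymbol{\theta}_{[1]}\|_{2}\chi_{1},\|\boldsymbol{\theta}_{[2]}\|_{2}\chi_{2})$-factor), and absorbing the residual $\chi$-dependence with the moment bounds $\mathbb{E}\{|\chi_{k}|\},\mathbb{E}\{\chi_{k}^{-2}\},\mathbb{E}\{|\chi_{1}|/|\chi_{2}|\},\mathbb{E}\{|\chi_{2}|/|\chi_{1}|\}\leq C$, all of which follow from $d_{1},d_{2}>16$ and $0<C\leq\Sigma_{1},\Sigma_{2}\leq C$.

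With the Lipschitz bounds in hand, Cauchy--Schwarz gives $\|\boldsymbol{c}_{\ell}(\boldsymbol{\theta})\int[\Phi_{\ell}^{(j)}(r^{(1)};\boldsymbol{\theta})-\Phi_{\ell}^{(j)}(r^{(2)};\boldsymbol{\theta})]\,\nu\|_{2}\leq C\int\|r^{(1)}-r^{(2)}\|_{2}\,\nu\leq C\sqrt{\int\|r^{(1)}-r^{(2)}\|_{2}^{2}\,\nu}$, and taking the infimum over couplings $\nu$ produces the $\mathscr{W}_{2}(\rho_{r,1},\rho_{r,2})$ factor on the right, completing the proof. The main obstacle is essentially combinatorial: across the three terms in $\nabla_{1}W$ and the two derivatives in $r_{1}$ and $r_{2}$, there are several integrands to differentiate and estimate, and for each one the correct bound from Lemma \ref{lem:2nd_setting_qBounds} must be matched both to cancel the prefactor's $\boldsymbol{\theta}$-growth and to leave an integrable $\chi$-dependence. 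No single entry of that lemma suffices on its own; essentially its full catalogue is used, which is why the lemma was stated in the form it has.
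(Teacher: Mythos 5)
Your proposal is correct and close in spirit to the paper's argument, but the way you establish the Lipschitz estimate is genuinely different in one respect, and the difference is worth noting. The paper first pulls the $\boldsymbol{\theta}$-dependent factors $q_j^{\theta}$, $\partial_j q_j^{\theta}$ out of the integral (using the $|a\,\partial_j q_j(a,b)|\le C$ and $|b\,\partial_j q_{\neg j}(a,b)|\le C$ estimates to cancel the prefactors $\|\boldsymbol{\theta}_{[j]}\|_2$), reducing the task to bounding $\bigl|\int \bar r_j\, q_j(\chi_1\bar r_1,\chi_2\bar r_2)\,(\rho_{r,1}-\rho_{r,2})\bigr|$. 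For this inner integral it then applies the mean value theorem using only the \emph{plain} bounds $|\partial_i q_j|\le C$, which leaves a residual factor $r_{1,2}$ at an intermediate point; that residual is absorbed by Cauchy--Schwarz against the hypothesized second-moment bound $\int(r_1^2+r_2^2)\,\rho_{r,k}\le C$. You instead keep the $q_j^{r}q_j^{\theta}$ products intact as the integrand $\Phi_\ell^{(j)}(r;\boldsymbol{\theta})$ and differentiate directly in $r$: the derivative evaluates $\partial_i q_j$ exactly at $(\chi_1 r_1,\chi_2 r_2)$, so the $|a\,\partial_i q_j|$-type entries of Lemma \ref{lem:2nd_setting_qBounds} apply \emph{to the $r$-factors as well}, giving a Lipschitz constant $L_\ell(\boldsymbol{\theta})$ uniformly bounded (after taking $\mathbb{E}_\chi$, using $\mathbb{E}\{\chi_1/\chi_2\},\mathbb{E}\{\chi_2/\chi_1\},\mathbb{E}\{(\chi_1/\chi_2)^2\}\le C$). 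This avoids the intermediate-point issue and, notably, makes the second-moment hypothesis on $\rho_{r,k}$ unnecessary: with $\Phi_\ell^{(j)}$ uniformly Lipschitz the bound follows from $\mathscr{W}_1\le\mathscr{W}_2$ alone. The paper's route is a two-stage reduction that trades a weaker pointwise estimate for a moment hypothesis, while yours is a one-shot uniform Lipschitz argument; both land on the same conclusion via the same lemma and the same final step (Cauchy--Schwarz over an arbitrary coupling, then infimum).
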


\begin{proof}
We have the following formula given in the proof of Proposition \ref{prop:2nd_setting_grow_bound},
for $k=1,2$:
\begin{align*}
\nabla_{1}W\left(\boldsymbol{\theta};\rho_{k}\right) & =\left(\nabla_{1}W\left(\boldsymbol{\theta};\rho_{k}\right)_{\left[1\right]},\quad\nabla_{1}W\left(\boldsymbol{\theta};\rho_{k}\right)_{\left[2\right]}\right),\\
\nabla_{1}W\left(\boldsymbol{\theta};\rho_{k}\right)_{\left[j\right]} & =\frac{\boldsymbol{\theta}_{\left[j\right]}}{\left\Vert \boldsymbol{\theta}_{\left[j\right]}\right\Vert _{2}}\int r_{j}\mathbb{E}_{\chi}\left\{ q_{j}^{r}q_{j}^{\theta}\right\} \rho_{r,k}\left({\rm d}r_{1},{\rm d}r_{2}\right)+\boldsymbol{\theta}_{\left[j\right]}\int r_{j}\mathbb{E}_{\chi}\left\{ \chi_{j}q_{j}^{r}\partial_{j}q_{j}^{\theta}\right\} \rho_{r,k}\left({\rm d}r_{1},{\rm d}r_{2}\right)\\
 & \quad+\frac{\boldsymbol{\theta}_{\left[j\right]}}{\left\Vert \boldsymbol{\theta}_{\left[j\right]}\right\Vert _{2}}\int r_{\neg j}\left\Vert \boldsymbol{\theta}_{\left[\neg j\right]}\right\Vert _{2}\mathbb{E}_{\chi}\left\{ \chi_{j}q_{\neg j}^{r}\partial_{j}q_{\neg j}^{\theta}\right\} \rho_{r,k}\left({\rm d}r_{1},{\rm d}r_{2}\right),\qquad j=1,2,
\end{align*}
where we recall the short-hand notations, for $j,i\in\left\{ 1,2\right\} $:
\[
q_{j}^{r}=q_{j}\left(r_{1}\chi_{1},r_{2}\chi_{2}\right),\quad q_{j}^{\theta}=q_{j}\left(\left\Vert \boldsymbol{\theta}_{\left[1\right]}\right\Vert _{2}\chi_{1},\left\Vert \boldsymbol{\theta}_{\left[2\right]}\right\Vert _{2}\chi_{2}\right),\quad\partial_{i}q_{j}^{\theta}=\partial_{i}q_{j}\left(\left\Vert \boldsymbol{\theta}_{\left[1\right]}\right\Vert _{2}\chi_{1},\left\Vert \boldsymbol{\theta}_{\left[2\right]}\right\Vert _{2}\chi_{2}\right).
\]
Here $\chi_{1}\stackrel{{\rm d}}{=}\Sigma_{1}\sqrt{\alpha/d_{1}}Z_{1}$
and $\chi_{2}\stackrel{{\rm d}}{=}\Sigma_{2}\sqrt{\left(1-\alpha\right)/d_{2}}Z_{2}$
are two independent random variables, which are independent of $\boldsymbol{\omega}_{1}$
and $\boldsymbol{\omega}_{2}$, where $Z_{1}$ and $Z_{2}$ are respectively
$\chi$-random variables of degrees of freedom $d_{1}$ and $d_{2}$.
By Lemma \ref{lem:2nd_setting_qBounds},
\begin{align*}
 & \left\Vert \nabla_{1}W\left(\boldsymbol{\theta};\rho_{1}\right)_{\left[1\right]}-\nabla_{1}W\left(\boldsymbol{\theta};\rho_{2}\right)_{\left[1\right]}\right\Vert _{2}\\
 & \quad\leq\left|\int r_{1}\mathbb{E}_{\chi}\left\{ q_{1}^{r}q_{1}^{\theta}\right\} \left(\rho_{r,1}-\rho_{r,2}\right)\left({\rm d}r_{1},{\rm d}r_{2}\right)\right|+\left\Vert \boldsymbol{\theta}_{\left[1\right]}\right\Vert _{2}\left|\int r_{1}\mathbb{E}_{\chi}\left\{ \chi_{1}q_{1}^{r}\partial_{1}q_{1}^{\theta}\right\} \left(\rho_{r,1}-\rho_{r,2}\right)\left({\rm d}r_{1},{\rm d}r_{2}\right)\right|\\
 & \quad\qquad+\left\Vert \boldsymbol{\theta}_{\left[2\right]}\right\Vert _{2}\left|\int r_{2}\mathbb{E}_{\chi}\left\{ \chi_{1}q_{2}^{r}\partial_{1}q_{2}^{\theta}\right\} \left(\rho_{r,1}-\rho_{r,2}\right)\left({\rm d}r_{1},{\rm d}r_{2}\right)\right|\\
 & \quad\leq C\mathbb{E}_{\chi}\left\{ \left|\int r_{1}q_{1}^{r}\left(\rho_{r,1}-\rho_{r,2}\right)\left({\rm d}r_{1},{\rm d}r_{2}\right)\right|+\frac{\chi_{1}}{\chi_{2}}\left|\int r_{2}q_{2}^{r}\left(\rho_{r,1}-\rho_{r,2}\right)\left({\rm d}r_{1},{\rm d}r_{2}\right)\right|\right\} .
\end{align*}
Let us consider $\left|\int r_{1}q_{1}^{r}\left(\rho_{r,1}-\rho_{r,2}\right)\left({\rm d}r_{1},{\rm d}r_{2}\right)\right|$.
Consider any coupling between $\rho_{r,1}$ and $\rho_{r,2}$ so that
we can place $\left(r_{1,1},r_{2,1}\right)\sim\rho_{r,1}$ and $\left(r_{1,2},r_{2,2}\right)\sim\rho_{r,2}$
on the same joint probability space. Let $\mathbb{E}_{\backslash\chi}$
denote the expectation w.r.t. these random variables, excluding $\chi_{1}$
and $\chi_{2}$. We have:
\begin{align*}
 & \left|\int r_{1}q_{1}^{r}\left(\rho_{r,1}-\rho_{r,2}\right)\left({\rm d}r_{1},{\rm d}r_{2}\right)\right|\\
 & \quad=\mathbb{E}_{\backslash\chi}\left\{ \left|r_{1,1}q_{1}\left(\chi_{1}r_{1,1},\chi_{2}r_{2,1}\right)-r_{1,2}q_{1}\left(\chi_{1}r_{1,2},\chi_{2}r_{2,2}\right)\right|\right\} \\
 & \quad\leq\mathbb{E}_{\backslash\chi}\left\{ \left|q_{1}\left(\chi_{1}r_{1,1},\chi_{2}r_{2,1}\right)\right|\left|r_{1,1}-r_{1,2}\right|\right\} +\mathbb{E}_{\backslash\chi}\left\{ r_{1,2}\left|q_{1}\left(\chi_{1}r_{1,1},\chi_{2}r_{2,1}\right)-q_{1}\left(\chi_{1}r_{1,2},\chi_{2}r_{2,1}\right)\right|\right\} \\
 & \quad\qquad+\mathbb{E}_{\backslash\chi}\left\{ r_{1,2}\left|q_{1}\left(\chi_{1}r_{1,2},\chi_{2}r_{2,1}\right)-q_{1}\left(\chi_{1}r_{1,2},\chi_{2}r_{2,2}\right)\right|\right\} \\
 & \quad\stackrel{\left(a\right)}{\leq}\mathbb{E}_{\backslash\chi}\left\{ \left|q_{1}\left(\chi_{1}r_{1,1},\chi_{2}r_{2,1}\right)\right|\left|r_{1,1}-r_{1,2}\right|\right\} +\chi_{1}\mathbb{E}_{\backslash\chi}\left\{ r_{1,2}\left|\partial_{1}q_{1}\left(\zeta_{1},\chi_{2}r_{2,1}\right)\right|\left|r_{1,1}-r_{1,2}\right|\right\} \\
 & \quad\qquad+\chi_{2}\mathbb{E}_{\backslash\chi}\left\{ r_{1,2}\left|\partial_{2}q_{1}\left(\chi_{1}r_{1,2},\zeta_{2}\right)\right|\left|r_{2,1}-r_{2,2}\right|\right\} \\
 & \quad\stackrel{\left(b\right)}{\leq}C\left(1+\left(\chi_{1}+\chi_{2}\right)\sqrt{\mathbb{E}_{\backslash\chi}\left\{ r_{1,2}^{2}\right\} }\right)\sqrt{\mathbb{E}_{\backslash\chi}\left\{ \left|r_{1,1}-r_{1,2}\right|^{2}+\left|r_{2,1}-r_{2,2}\right|^{2}\right\} }\\
 & \quad\stackrel{\left(c\right)}{\leq}C\left(1+\chi_{1}+\chi_{2}\right)\sqrt{\mathbb{E}_{\backslash\chi}\left\{ \left|r_{1,1}-r_{1,2}\right|^{2}+\left|r_{2,1}-r_{2,2}\right|^{2}\right\} }
\end{align*}
where in step $\left(a\right)$, we use the mean value theorem for
some $\zeta_{1}$ between $\chi_{1}r_{1,1}$ and $\chi_{1}r_{1,2}$
and some $\zeta_{2}$ between $\chi_{2}r_{2,1}$ and $\chi_{2}r_{2,2}$;
in step $\left(b\right)$, we apply Lemma \ref{lem:2nd_setting_qBounds};
in step $\left(c\right)$, we recall the assumption $\int\left(r_{1}^{2}+r_{2}^{2}\right)\rho_{r,k}\left({\rm d}r_{1},{\rm d}r_{2}\right)\leq C$
for $k=1,2$. Since the coupling is arbitrary, we have:
\[
\left|\int r_{1}q_{1}^{r}\left(\rho_{r,1}-\rho_{r,2}\right)\left({\rm d}r_{1},{\rm d}r_{2}\right)\right|\leq C\left(1+\chi_{1}+\chi_{2}\right)\mathscr{W}_{2}\left(\rho_{r,1},\rho_{r,2}\right).
\]
We treat $\left|\int r_{2}q_{2}^{r}\left(\rho_{r,1}-\rho_{r,2}\right)\left({\rm d}r_{1},{\rm d}r_{2}\right)\right|$
similarly and then obtain:
\[
\left\Vert \nabla_{1}W\left(\boldsymbol{\theta};\rho_{1}\right)_{\left[1\right]}-\nabla_{1}W\left(\boldsymbol{\theta};\rho_{2}\right)_{\left[1\right]}\right\Vert _{2}\leq C\mathscr{W}_{2}\left(\rho_{r,1},\rho_{r,2}\right).
\]
A similar bound holds for $\left\Vert \nabla_{1}W\left(\boldsymbol{\theta};\rho_{1}\right)_{\left[2\right]}-\nabla_{1}W\left(\boldsymbol{\theta};\rho_{2}\right)_{\left[2\right]}\right\Vert _{2}$.
The thesis then follows.
\end{proof}
\begin{lem}
\label{lem:2nd_setting_actGaussian}Assume an activation $\sigma$
as described in setting \ref{enu:bdd_act_setting}, and a bounded
function $\phi:\;\mathbb{R}\to\mathbb{R}$, $\left\Vert \phi\right\Vert _{\infty}\leq K$.
Let $w\sim\mathsf{N}\left(0,s^{2}\right)$. Then for any integer $m\geq0$
and any $a,b\in\mathbb{R}$,
\begin{align*}
\left|\mathbb{E}_{w}\left\{ w^{m}\sigma''\left(w\right)\phi\left(w\right)\right\} \right| & \leq KC\left(m+1\right)^{\left(m+1\right)/2}s^{m-1},\\
\left|\mathbb{E}_{w}\left\{ w^{m}\sigma'''\left(w\right)\phi\left(w\right)\right\} \right| & \leq KC\left(m+1\right)^{\left(m+1\right)/2}s^{m-1},
\end{align*}
where $C$ is a constant that is independent of $K$, $s$ and $m$.
\end{lem}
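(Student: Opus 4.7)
The plan is to reduce both bounds to a straightforward ``supremum times $L^1$-norm'' estimate, using crucially the bounded anti-derivative assumptions from setting \ref{enu:bdd_act_setting}.

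The first observation is that since $\hat{\sigma}_2$ is an anti-derivative of the non-negative function $|\sigma''|$, it is non-decreasing, so the hypothesis $\|\hat{\sigma}_2\|_\infty \le C$ immediately yields the $L^1$ bound
\[
\int_{-\infty}^{\infty} |\sigma''(w)|\, dw \;\le\; 2\|\hat{\sigma}_2\|_\infty \;\le\; 2C,
\]
and an identical argument gives $\int_{-\infty}^{\infty} |\sigma'''(w)|\, dw \le 2C$. This converts the integrals against $\sigma''$ and $\sigma'''$ into honest integrals against finite (signed) measures on $\mathbb{R}$, which is what makes the whole argument painless.

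Next, let $p(w) = \frac{1}{s\sqrt{2\pi}}\exp(-w^2/(2s^2))$ denote the density of $w$. Using $|\phi(w)| \le K$ and pulling out a pointwise supremum,
\[
\bigl|\mathbb{E}_w\{w^m \sigma''(w)\phi(w)\}\bigr| \;\le\; K \Bigl(\sup_{w \in \mathbb{R}} |w|^m p(w)\Bigr) \int_{-\infty}^{\infty} |\sigma''(w)|\, dw.
\]
A direct calculus computation (differentiate $w \mapsto w^m e^{-w^2/(2s^2)}$ and set the derivative to zero) shows the supremum is attained at $|w| = s\sqrt{m}$, giving
\[
\sup_{w \in \mathbb{R}} |w|^m p(w) \;=\; \frac{s^{m-1}}{\sqrt{2\pi}}\, m^{m/2}\, e^{-m/2} \;\le\; C\,(m+1)^{(m+1)/2}\,s^{m-1},
\]
where the last inequality is trivial (with the convention $0^0 = 1$ handling $m=0$) and $C$ is independent of $m$ and $s$. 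Combining the last two displays with the $L^1$ bound on $|\sigma''|$ proves the first claim; the second claim follows by repeating the argument verbatim with $\sigma'''$ and $\hat{\sigma}_3$ in place of $\sigma''$ and $\hat{\sigma}_2$.

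There is no serious technical obstacle here. The ``hard part'' was already absorbed into the hypotheses of setting \ref{enu:bdd_act_setting}: the bounded anti-derivatives $\hat{\sigma}_2, \hat{\sigma}_3$ are engineered precisely to guarantee that $\sigma''$ and $\sigma'''$ are integrable on $\mathbb{R}$, so that a single pointwise bound on the Gaussian density suffices and no integration by parts against $\phi$ (which is only assumed bounded, not differentiable) is needed.
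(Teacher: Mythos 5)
Your proof is correct, and it takes a genuinely different route from the paper. The paper bounds $\mathbb{E}_w\{|w|^m|\sigma''(w)|\}$ by writing it as an integral against the Gaussian density, integrating by parts against $\hat\sigma_2(su)$ to transfer the derivative onto $|u|^m f(u)$, and then bounding the resulting Gaussian moments $\mathbb{E}_g\{m|g|^{m-1}+|g|^{m+1}\}$. You instead observe upfront that the bounded non-decreasing anti-derivative forces $\|\sigma''\|_{L^1(\mathbb{R})}\leq 2C$, and then apply the trivial $(L^1,L^\infty)$ Hölder bound $\int |w|^m|\sigma''(w)|p(w)\,dw\leq\bigl(\sup_w|w|^m p(w)\bigr)\|\sigma''\|_{L^1}$, computing the supremum by calculus at $|w|=s\sqrt{m}$. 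Your route avoids integration by parts entirely and is more elementary; as a bonus it yields a slightly sharper intermediate constant ($m^{m/2}e^{-m/2}$ versus the paper's $\approx(m+1)^{(m+1)/2}$), though both land within the stated bound. The paper's integration by parts is perhaps more natural if one wants to relax the boundedness of $\phi$ to, say, polynomial growth (where it would fold into the Gaussian-moment bound), but for the stated lemma your shorter argument is entirely adequate.
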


\begin{proof}
By assumption, there exists an anti-derivative $\hat{\sigma}_{2}$
of $\left|\sigma''\right|$ such that $\left\Vert \hat{\sigma}_{2}\right\Vert _{\infty}\leq C$.
Let $f$ be the standard Gaussian probability density function. For
any integer $m\geq0$,
\begin{align*}
 & \left|\mathbb{E}_{w}\left\{ w^{m}\sigma''\left(w\right)\phi\left(w\right)\right\} \right|\leq KC\mathbb{E}_{w}\left\{ \left|w\right|^{m}\left|\sigma''\left(w\right)\right|\right\} =KCs^{m}\int_{-\infty}^{+\infty}\left|u\right|^{m}\left|\sigma''\left(su\right)\right|f\left(u\right){\rm d}u\\
 & \qquad=KCs^{m-1}\left(\left[\left|u\right|^{m}\hat{\sigma}_{2}\left(su\right)f\left(u\right)\right]_{u=-\infty}^{+\infty}-\int_{-\infty}^{+\infty}\hat{\sigma}_{2}\left(su\right)\left(m\left|u\right|^{m-1}{\rm sign}\left(u\right)-u\left|u\right|^{m}\right)f\left(u\right){\rm d}u\right)\\
 & \qquad\leq KCs^{m-1}\mathbb{E}_{g}\left\{ m\left|g\right|^{m-1}+\left|g\right|^{m+1}\right\} \leq KC\left(m+1\right)^{\left(m+1\right)/2}s^{m-1}.
\end{align*}
The proof for the second statement is similar.
\end{proof}
\begin{prop}
\label{prop:2nd_setting_op_bound}Consider setting \ref{enu:bdd_act_setting}.
We have:
\begin{align*}
\left\Vert \nabla_{121}^{3}U\left[\boldsymbol{\zeta},\boldsymbol{\theta}\right]\right\Vert _{{\rm op}},\left\Vert \nabla_{122}^{3}U\left[\boldsymbol{\theta},\boldsymbol{\zeta}\right]\right\Vert _{{\rm op}} & \leq C\kappa^{2}\left(1+\left\Vert \boldsymbol{\theta}\right\Vert _{2}\right),\\
\left\Vert \nabla_{12}^{2}U\left(\boldsymbol{\theta},\boldsymbol{\theta}'\right)\right\Vert _{{\rm op}} & \leq C\kappa^{2}\left(1+\left\Vert \boldsymbol{\theta}\right\Vert _{2}\right)\left(1+\left\Vert \boldsymbol{\theta}'\right\Vert _{2}\right),\\
\left\Vert \nabla_{11}^{2}U\left(\boldsymbol{\theta},\boldsymbol{\theta}'\right)\right\Vert _{{\rm op}} & \leq C\kappa^{2}\left\Vert \boldsymbol{\theta}'\right\Vert _{2},
\end{align*}
for any $\boldsymbol{\zeta},\boldsymbol{\theta},\boldsymbol{\theta}'\in\mathbb{R}^{d}$.
\end{prop}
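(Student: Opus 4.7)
The plan is to differentiate $U(\boldsymbol{\theta},\boldsymbol{\theta}') = \mathbb{E}_{\mathcal{P}}\{\kappa^2\langle\boldsymbol{\theta},\boldsymbol{\theta}'\rangle\sigma(\langle\kappa\boldsymbol{\theta},\boldsymbol{x}\rangle)\sigma(\langle\kappa\boldsymbol{\theta}',\boldsymbol{x}\rangle)\}$ explicitly. The formulas for $\nabla_{12}^{2}U$, $\nabla_{11}^{2}U$, $\nabla_{121}^{3}U$, $\nabla_{122}^{3}U$ are identical in form to those written out at the beginning of the proof of Proposition~\ref{prop:1st_setting_op_bound}. Testing against unit vectors $\boldsymbol{a},\boldsymbol{b}$ (and $\boldsymbol{c}$ for the third-order tensors) decomposes each object into a finite sum of expectations of the generic form
\[
\kappa^{k}\,\mathbb{E}_{\mathcal{P}}\bigl\{\sigma^{(i)}(\langle\kappa\boldsymbol{\theta},\boldsymbol{x}\rangle)\,\sigma^{(j)}(\langle\kappa\boldsymbol{\theta}',\boldsymbol{x}\rangle)\,P(\boldsymbol{x},\boldsymbol{\theta},\boldsymbol{\theta}',\boldsymbol{a},\boldsymbol{b},\boldsymbol{c})\bigr\},
\]
where $P$ is a polynomial of degree at most $3$ in $\boldsymbol{x}$ and $i+j\le 3$, possibly multiplied by a scalar prefactor $\langle\boldsymbol{\theta},\boldsymbol{\theta}'\rangle$ or $\langle\boldsymbol{\zeta},\boldsymbol{\theta}\rangle$.

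The summands with $i,j\le 1$ are handled directly: using $\|\sigma\|_{\infty},\|\sigma'\|_{\infty}\le C$ together with Cauchy--Schwarz and the moment bound $\mathbb{E}_{\mathcal{P}}\{\langle\kappa\boldsymbol{v},\boldsymbol{x}\rangle^{2}\}=\|\boldsymbol{\Sigma}\boldsymbol{v}\|^{2}\le C\|\boldsymbol{v}\|^{2}$, each such term contributes at most $C\kappa^{2}$ times an appropriate product of norms among $\|\boldsymbol{\theta}\|,\|\boldsymbol{\theta}'\|,\|\boldsymbol{a}\|,\|\boldsymbol{b}\|,\|\boldsymbol{c}\|$. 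Summing these yields in particular the full bound for $\nabla_{12}^{2}U$, as well as the portion of $\nabla_{11}^{2}U,\nabla_{121}^{3}U,\nabla_{122}^{3}U$ that does not involve $\sigma''$ or $\sigma'''$.

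The remaining summands involve $\sigma''(\langle\kappa\boldsymbol{\theta},\boldsymbol{x}\rangle)$ or $\sigma'''(\langle\kappa\boldsymbol{\theta},\boldsymbol{x}\rangle)$ (with $\boldsymbol{\zeta}$ in place of $\boldsymbol{\theta}$ for $\nabla_{121}^{3}U$), and always carry an $\langle\boldsymbol{\theta},\boldsymbol{\theta}'\rangle$ or $\langle\boldsymbol{\zeta},\boldsymbol{\theta}\rangle$ prefactor. For these I follow the decomposition strategy of Proposition~\ref{prop:1st_setting_op_bound}: write $\boldsymbol{x}=\boldsymbol{\Sigma}\boldsymbol{z}/\sqrt{d}$ with $\boldsymbol{z}\sim\mathsf{N}(0,\boldsymbol{I}_{d})$, set $w=\langle\boldsymbol{\Sigma}\boldsymbol{\theta},\boldsymbol{z}\rangle\sim\mathsf{N}(0,\|\boldsymbol{\Sigma}\boldsymbol{\theta}\|^{2})$, and decompose $\boldsymbol{z}=(w/\|\boldsymbol{\Sigma}\boldsymbol{\theta}\|^{2})\boldsymbol{\Sigma}\boldsymbol{\theta}+\tilde{\boldsymbol{z}}$ with $\tilde{\boldsymbol{z}}\perp\boldsymbol{\Sigma}\boldsymbol{\theta}$ independent of $w$. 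Expanding $P$ in this decomposition produces an integrand of the form $w^{m}\sigma''(w)\phi(w,\tilde{\boldsymbol{z}})$ with $m\in\{0,1,2,3\}$, where $\|\phi(\cdot,\tilde{\boldsymbol{z}})\|_{\infty}\le C$ uniformly in $\tilde{\boldsymbol{z}}$ since $\|\sigma\|_{\infty}\le C$. Conditioning on $\tilde{\boldsymbol{z}}$ and applying Lemma~\ref{lem:2nd_setting_actGaussian} gives $|\mathbb{E}_{w}\{w^{m}\sigma''(w)\phi\}|\le C(m{+}1)^{(m+1)/2}\|\boldsymbol{\Sigma}\boldsymbol{\theta}\|^{m-1}$, after which the residual $\tilde{\boldsymbol{z}}$-expectation is controlled by $\mathbb{E}\{\|\tilde{\boldsymbol{z}}\|^{r}\}^{1/r}\le C\sqrt{r}$.

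The key mechanism --- and the main obstacle, which is a matter of careful bookkeeping --- is the cancellation in the worst case $m=0$: Lemma~\ref{lem:2nd_setting_actGaussian} then contributes a singular factor $1/\|\boldsymbol{\Sigma}\boldsymbol{\theta}\|$, which is absorbed by the scalar prefactor via $|\langle\boldsymbol{\theta},\boldsymbol{\theta}'\rangle|/\|\boldsymbol{\Sigma}\boldsymbol{\theta}\|\le\|\boldsymbol{\theta}'\|/\Sigma_{2}\le C\|\boldsymbol{\theta}'\|$, using the uniform lower bound $\Sigma_{2}\ge C>0$ from setting~\ref{enu:bdd_act_setting}; this is precisely why no $1/\kappa_{*}$ factor appears, in contrast to the ReLU setting. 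For $m\ge 1$ every additional power of $w$ comes paired with a projection of some test vector onto $\boldsymbol{\Sigma}\boldsymbol{\theta}/\|\boldsymbol{\Sigma}\boldsymbol{\theta}\|^{2}$ from the expansion, so the $\|\boldsymbol{\Sigma}\boldsymbol{\theta}\|^{m-1}$ combines with these projections into a bounded expression. The arithmetic is most tedious for $\nabla_{121}^{3}U$ and $\nabla_{122}^{3}U$, where $\sigma'''$ requires the second estimate in Lemma~\ref{lem:2nd_setting_actGaussian} and six outer products among $\{\boldsymbol{x},\boldsymbol{\theta},\boldsymbol{\zeta}\}$ must be tracked; nevertheless each summand fits the template above and their sum delivers the stated operator-norm bound $C\kappa^{2}(1+\|\boldsymbol{\theta}\|)$.
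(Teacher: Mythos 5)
Your proposal follows the paper's proof essentially step by step: reuse the explicit term decomposition of $\nabla_{12}^{2}U$, $\nabla_{11}^{2}U$, $\nabla_{121}^{3}U$, $\nabla_{122}^{3}U$ already written out in the proof of Proposition~\ref{prop:1st_setting_op_bound}; bound the summands without $\sigma''$ by Cauchy--Schwarz and $\|\sigma\|_\infty,\|\sigma'\|_\infty\le C$; and for the summands carrying a $\sigma''$, Gaussianize, condition on the orthogonal component, invoke Lemma~\ref{lem:2nd_setting_actGaussian}, and absorb the resulting singular factor $1/\|\boldsymbol{\Sigma}\boldsymbol{\zeta}\|_2$ (or $1/\|\boldsymbol{\Sigma}\boldsymbol{\theta}\|_2$, depending on the term) into the accompanying scalar prefactor, using $\sigma_{\min}(\boldsymbol{\Sigma})\ge C>0$. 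That is exactly the paper's argument, and you correctly identify why the $1/\kappa_*$ from the ReLU setting disappears here.

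Two small bookkeeping slips, neither fatal. First, $\sigma'''$ never appears in $\nabla_{121}^{3}U$ or $\nabla_{122}^{3}U$: in $U(\boldsymbol{\theta},\boldsymbol{\theta}')=\mathbb{E}\{\kappa^2\langle\boldsymbol{\theta},\boldsymbol{\theta}'\rangle\sigma(\langle\kappa\boldsymbol{\theta},\boldsymbol{x}\rangle)\sigma(\langle\kappa\boldsymbol{\theta}',\boldsymbol{x}\rangle)\}$ each variable feeds only one $\sigma$, so a mixed third derivative $\nabla_{121}^3$ or $\nabla_{122}^3$ hits a given $\sigma$ at most twice; you only need the $\sigma''$ estimate in Lemma~\ref{lem:2nd_setting_actGaussian}. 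The $\sigma'''$ bound is needed elsewhere, for $\nabla_{111}^3 U$ in Proposition~\ref{prop:2nd_setting_U_bound}. Second, your parenthetical ``with $\boldsymbol{\zeta}$ in place of $\boldsymbol{\theta}$'' applies to $\nabla_{122}^{3}U[\boldsymbol{\theta},\boldsymbol{\zeta}]$ as well, not only to $\nabla_{121}^{3}U[\boldsymbol{\zeta},\boldsymbol{\theta}]$: in both the $\sigma''$ argument is $\langle\kappa\boldsymbol{\zeta},\boldsymbol{x}\rangle$, so the Gaussian decomposition must be taken along $\boldsymbol{\Sigma}\boldsymbol{\zeta}$, and the prefactor $\langle\boldsymbol{b},\boldsymbol{\zeta}\rangle$ (resp.\ $\langle\boldsymbol{c},\boldsymbol{\zeta}\rangle$ or $\langle\boldsymbol{\zeta},\boldsymbol{\theta}\rangle$) is what cancels the $1/\|\boldsymbol{\Sigma}\boldsymbol{\zeta}\|_2$.
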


\begin{proof}
The proof is almost the same as that of Proposition \ref{prop:1st_setting_op_bound},
so we omit several similar calculations and refer to the proof of
Proposition \ref{prop:1st_setting_op_bound} for the definitions of
the quantities. In particular, we obtain:
\begin{align*}
\left|A_{1}\right|,\left|A_{2}\right|,\left|A_{4}\right|,\left|A_{5}\right|,\left|B_{1}\right|,\left|B_{3}\right|,\left|B_{4}\right|,\left|B_{5}\right| & \leq C\kappa^{2}\left(1+\left\Vert \boldsymbol{\theta}\right\Vert _{2}\right)\left\Vert \boldsymbol{a}\right\Vert _{2}\left\Vert \boldsymbol{b}\right\Vert _{2}\left\Vert \boldsymbol{c}\right\Vert _{2},\\
\left|F_{1}\right|,\left|F_{2}\right|,\left|F_{3}\right|,\left|F_{4}\right| & \leq C\kappa^{2}\left(1+\left\Vert \boldsymbol{\theta}\right\Vert _{2}\right)\left(1+\left\Vert \boldsymbol{\theta}'\right\Vert _{2}\right)\left\Vert \boldsymbol{a}\right\Vert _{2}\left\Vert \boldsymbol{b}\right\Vert _{2},\\
\left|H_{1}\right| & \leq C\kappa^{2}\left\Vert \boldsymbol{\theta}'\right\Vert _{2}\left\Vert \boldsymbol{a}\right\Vert _{2}\left\Vert \boldsymbol{b}\right\Vert _{2},
\end{align*}
for a suitable constant $C$. We are left with $A_{3}$, $A_{6}$,
$B_{2}$, $B_{6}$ and $H_{2}$. We consider $A_{3}$. Proceeding
as in the proof of Proposition \ref{prop:1st_setting_op_bound}, we
have:
\begin{align*}
A_{3} & =\kappa^{2}\left\langle \boldsymbol{b},\boldsymbol{\zeta}\right\rangle \mathbb{E}_{w,\tilde{\boldsymbol{z}}}\left\{ \sigma''\left(w\right)\sigma\left(\left\langle \boldsymbol{S}\boldsymbol{\theta},\tilde{\boldsymbol{z}}\right\rangle +\frac{\left\langle \boldsymbol{\Sigma}^{2}\boldsymbol{\theta},\boldsymbol{\zeta}\right\rangle }{\left\Vert \boldsymbol{\Sigma}\boldsymbol{\zeta}\right\Vert _{2}^{2}}w\right)\left\langle \boldsymbol{S}\boldsymbol{a},\tilde{\boldsymbol{z}}\right\rangle \left\langle \boldsymbol{S}\boldsymbol{c},\tilde{\boldsymbol{z}}\right\rangle \right\} \\
 & \qquad+\kappa^{2}\left\langle \boldsymbol{b},\boldsymbol{\zeta}\right\rangle \mathbb{E}_{w,\tilde{\boldsymbol{z}}}\left\{ \sigma''\left(w\right)\sigma\left(\left\langle \boldsymbol{S}\boldsymbol{\theta},\tilde{\boldsymbol{z}}\right\rangle +\frac{\left\langle \boldsymbol{\Sigma}^{2}\boldsymbol{\theta},\boldsymbol{\zeta}\right\rangle }{\left\Vert \boldsymbol{\Sigma}\boldsymbol{\zeta}\right\Vert _{2}^{2}}w\right)\frac{w}{\left\Vert \boldsymbol{\Sigma}\boldsymbol{\zeta}\right\Vert _{2}^{2}}\left\langle \boldsymbol{\Sigma}^{2}\boldsymbol{a},\boldsymbol{\zeta}\right\rangle \left\langle \boldsymbol{S}\boldsymbol{c},\tilde{\boldsymbol{z}}\right\rangle \right\} \\
 & \qquad+\kappa^{2}\left\langle \boldsymbol{b},\boldsymbol{\zeta}\right\rangle \mathbb{E}_{w,\tilde{\boldsymbol{z}}}\left\{ \sigma''\left(w\right)\sigma\left(\left\langle \boldsymbol{S}\boldsymbol{\theta},\tilde{\boldsymbol{z}}\right\rangle +\frac{\left\langle \boldsymbol{\Sigma}^{2}\boldsymbol{\theta},\boldsymbol{\zeta}\right\rangle }{\left\Vert \boldsymbol{\Sigma}\boldsymbol{\zeta}\right\Vert _{2}^{2}}w\right)\frac{w}{\left\Vert \boldsymbol{\Sigma}\boldsymbol{\zeta}\right\Vert _{2}^{2}}\left\langle \boldsymbol{S}\boldsymbol{a},\tilde{\boldsymbol{z}}\right\rangle \left\langle \boldsymbol{\Sigma}^{2}\boldsymbol{c},\boldsymbol{\zeta}\right\rangle \right\} \\
 & \qquad+\kappa^{2}\left\langle \boldsymbol{b},\boldsymbol{\zeta}\right\rangle \mathbb{E}_{w,\tilde{\boldsymbol{z}}}\left\{ \sigma''\left(w\right)\sigma\left(\left\langle \boldsymbol{S}\boldsymbol{\theta},\tilde{\boldsymbol{z}}\right\rangle +\frac{\left\langle \boldsymbol{\Sigma}^{2}\boldsymbol{\theta},\boldsymbol{\zeta}\right\rangle }{\left\Vert \boldsymbol{\Sigma}\boldsymbol{\zeta}\right\Vert _{2}^{2}}w\right)\frac{w^{2}}{\left\Vert \boldsymbol{\Sigma}\boldsymbol{\zeta}\right\Vert _{2}^{4}}\left\langle \boldsymbol{\Sigma}^{2}\boldsymbol{a},\boldsymbol{\zeta}\right\rangle \left\langle \boldsymbol{\Sigma}^{2}\boldsymbol{c},\boldsymbol{\zeta}\right\rangle \right\} ,
\end{align*}
for $\tilde{\boldsymbol{z}}\sim\mathsf{N}\left(0,\boldsymbol{I}_{d}\right)$
and $w\sim\mathsf{N}\left(0,\left\Vert \boldsymbol{\Sigma}\boldsymbol{\zeta}\right\Vert _{2}^{2}\right)$
independently, where $\boldsymbol{S}={\rm Proj}_{\boldsymbol{\Sigma}\boldsymbol{\zeta}}^{\perp}\boldsymbol{\Sigma}$.
Applying Lemma \ref{lem:2nd_setting_actGaussian}, recalling that
$\left\Vert \boldsymbol{\Sigma}\right\Vert _{{\rm op}}\leq C$, $\left\Vert \boldsymbol{\Sigma}\boldsymbol{\zeta}\right\Vert _{2}\geq C\left\Vert \boldsymbol{\zeta}\right\Vert _{2}$
and $\left\Vert \boldsymbol{S}\right\Vert _{{\rm op}}\leq\left\Vert \boldsymbol{\Sigma}\right\Vert _{{\rm op}}\leq C$,
we obtain:
\begin{align*}
\left|A_{3}\right| & \leq C\kappa^{2}\frac{\left|\left\langle \boldsymbol{b},\boldsymbol{\zeta}\right\rangle \right|}{\left\Vert \boldsymbol{\Sigma}\boldsymbol{\zeta}\right\Vert _{2}}\mathbb{E}_{\tilde{\boldsymbol{z}}}\left\{ \left|\left\langle \boldsymbol{S}\boldsymbol{a},\tilde{\boldsymbol{z}}\right\rangle \left\langle \boldsymbol{S}\boldsymbol{c},\tilde{\boldsymbol{z}}\right\rangle \right|\right\} \\
 & \qquad+C\kappa^{2}\frac{\left|\left\langle \boldsymbol{b},\boldsymbol{\zeta}\right\rangle \right|}{\left\Vert \boldsymbol{\Sigma}\boldsymbol{\zeta}\right\Vert _{2}^{2}}\left(\left|\left\langle \boldsymbol{\Sigma}^{2}\boldsymbol{a},\boldsymbol{\zeta}\right\rangle \right|\mathbb{E}_{\tilde{\boldsymbol{z}}}\left\{ \left|\left\langle \boldsymbol{S}\boldsymbol{c},\tilde{\boldsymbol{z}}\right\rangle \right|\right\} +\left|\left\langle \boldsymbol{\Sigma}^{2}\boldsymbol{c},\boldsymbol{\zeta}\right\rangle \right|\mathbb{E}_{\tilde{\boldsymbol{z}}}\left\{ \left|\left\langle \boldsymbol{S}\boldsymbol{a},\tilde{\boldsymbol{z}}\right\rangle \right|\right\} \right)\\
 & \qquad+C\kappa^{2}\frac{\left|\left\langle \boldsymbol{b},\boldsymbol{\zeta}\right\rangle \right|}{\left\Vert \boldsymbol{\Sigma}\boldsymbol{\zeta}\right\Vert _{2}^{3}}\left|\left\langle \boldsymbol{\Sigma}^{2}\boldsymbol{a},\boldsymbol{\zeta}\right\rangle \left\langle \boldsymbol{\Sigma}^{2}\boldsymbol{c},\boldsymbol{\zeta}\right\rangle \right|\\
 & \leq C\kappa^{2}\left\Vert \boldsymbol{a}\right\Vert _{2}\left\Vert \boldsymbol{b}\right\Vert _{2}\left\Vert \boldsymbol{c}\right\Vert _{2}.
\end{align*}
Similar calculations yield:
\begin{align*}
\left|A_{3}\right|,\left|A_{6}\right|,\left|B_{2}\right|,\left|B_{6}\right| & \leq C\kappa^{2}\left(\left\Vert \boldsymbol{\theta}\right\Vert _{2}+1\right)\left\Vert \boldsymbol{a}\right\Vert _{2}\left\Vert \boldsymbol{b}\right\Vert _{2}\left\Vert \boldsymbol{c}\right\Vert _{2},\\
\left|H_{2}\right| & \leq C\kappa^{2}\left\Vert \boldsymbol{\theta}'\right\Vert _{2}\left\Vert \boldsymbol{a}\right\Vert _{2}\left\Vert \boldsymbol{b}\right\Vert _{2}.
\end{align*}
The thesis then follows.
\end{proof}
\begin{prop}
\label{prop:2nd_setting_ODE_r}Consider setting \ref{enu:bdd_act_setting}.
Recall the process $\left(r_{1,t},r_{2,t},\rho_{r}^{t}\right)_{t\geq0}$
that is described as in the statement of Theorem \ref{thm:bdd_act_setting}
via the ODE (\ref{eq:2nd_setting_ODE_r}). This ODE has a (weakly)
unique solution on $t\in[0,\infty)$. Furthermore, this solution satisfies
a sub-Gaussian moment bound:
\[
\int\left(\bar{r}_{1}^{p}+\bar{r}_{2}^{p}\right)\rho_{r}^{t}\left({\rm d}\bar{r}_{1},{\rm d}\bar{r}_{2}\right)\leq C^{p}\left(1+t^{p}\right)p^{p/2},
\]
for any integer $p\geq1$, where the immaterial constant $C$ is independent
of $t$. We also have, $\left(r_{1,t},r_{2,t}\right)$ is a deterministic
function of $\left(r_{1,0},r_{2,0}\right)$, i.e. $\left(r_{1,t},r_{2,t}\right)=\psi_{t}\left(r_{1,0},r_{2,0}\right)$,
such that $\left\Vert \partial_{t}\psi_{t}\left(r_{1},r_{2}\right)\right\Vert _{2}\leq C\left(1+t+r_{1}+r_{2}\right)$.
\end{prop}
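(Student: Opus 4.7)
The plan is to establish existence, uniqueness, and the required bounds via a Picard-type contraction argument for the McKean--Vlasov system \eqref{eq:2nd_setting_ODE_r}, combined with an a priori moment estimate. Writing $F_j(r_1,r_2,\rho)$ for the right-hand side of the drift, the system reads $\dot r_{j,t} = F_j(r_{1,t},r_{2,t},\rho_r^t)$ with $\rho_r^t = \mathrm{Law}(r_{1,t},r_{2,t})$. The first task is to derive three uniform bounds on $F_j$ via Lemma \ref{lem:2nd_setting_qBounds} and uniform moment control of $\chi_1,\chi_2$ (and of $\chi_1/\chi_2$, which is integrable since $d_1,d_2 > 16$): (i) the growth bound $|F_j(r_1,r_2,\rho)| \leq C(1 + r_1 + r_2 + \int(\bar r_1+\bar r_2)\,\mathrm{d}\rho)$; (ii) local Lipschitz continuity of $F_j$ in $(r_1,r_2)$ with a constant of the same form; and (iii) a $\mathscr{W}_2$-Lipschitz estimate $|F_j(r_1,r_2,\rho) - F_j(r_1,r_2,\rho')| \leq C(1+r_1+r_2)\,\mathscr{W}_2(\rho,\rho')$. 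These would follow by expanding $F_j$ term by term, applying the mean value theorem to differences like $\bar r_j q_j(\chi_1\bar r_1,\chi_2\bar r_2)$, and invoking the bounds on $q_j$, $\partial_i q_j$, $a\partial_1 q_j$, $b\partial_2 q_j$, etc.\ collected in Lemma \ref{lem:2nd_setting_qBounds}.

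With these bounds in hand, I would set up a contraction. For fixed $T>0$, let $\mathcal{M}_T$ denote the set of continuous curves $(\mu^t)_{t\in[0,T]}$ in $\mathscr{P}_2(\mathbb{R}^2_{\geq 0})$ whose moments satisfy the target sub-Gaussian bound. For $(\mu^t)\in\mathcal{M}_T$, the pathwise ODE $\dot{\tilde r}_{j,t} = F_j(\tilde r_{1,t},\tilde r_{2,t},\mu^t)$ is classical and, by (ii), admits a unique solution for each initial condition; pushing the initialization forward yields $\Phi[\mu]^t = \mathrm{Law}(\tilde r_{1,t},\tilde r_{2,t})$. Using (iii) and Gronwall, $\mathscr{W}_2(\Phi[\mu]^t,\Phi[\mu']^t)^2 \leq C\int_0^t \mathscr{W}_2(\mu^s,\mu'^s)^2\,\mathrm{d}s$, so $\Phi$ is a contraction on $\mathcal{M}_{T_1}$ in the sup-$\mathscr{W}_2$ metric for $T_1$ sufficiently small, giving a unique local solution. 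The a priori moment bound comes in parallel: from (i) one obtains the pathwise estimate $r_{j,t} \leq r_{j,0} + \int_0^t\!\bigl(C + C(r_{1,s}+r_{2,s}) + C\,\mathbb{E}\{r_{1,s}+r_{2,s}\}\bigr)\mathrm{d}s$, and taking $L^p$-norms gives a linear Gronwall inequality for $\phi_p(t) = \mathbb{E}\{r_{1,t}^p + r_{2,t}^p\}^{1/p}$. Since $r_{j,0} = r_0 d^{-1/2} Z_j$ for a $\chi$-random variable $Z_j$ of degrees of freedom $d_j=\Theta(d)$, a direct calculation shows $\phi_p(0) \leq C\sqrt{p}$, and Gronwall then yields $\phi_p(t) \leq C(1+t)\sqrt{p}$, i.e.\ the claimed bound $\int(\bar r_1^p+\bar r_2^p)\,\mathrm{d}\rho_r^t \leq C^p(1+t^p)\,p^{p/2}$. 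Because this bound prevents blow-up uniformly in $T$, iterating the local contraction yields global existence and uniqueness on $[0,\infty)$. The time regularity $\|\partial_t\psi_t(r_1,r_2)\|_2 \leq C(1+t+r_1+r_2)$ follows immediately from (i) and the first-moment bound $\int(\bar r_1+\bar r_2)\,\mathrm{d}\rho_r^t \leq C(1+t)$.

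The main obstacle is the $\mathscr{W}_2$-Lipschitz estimate (iii): the drift contains cross-coupling terms such as $\mathbb{E}_\chi\{\Delta_{\neg j}(\chi,\rho_r^t)\,\chi_j r_{\neg j,t}\,\partial_j q_{\neg j}(\chi_1 r_{1,t},\chi_2 r_{2,t})\}$ where the dependence on $\rho_r^t$ enters through $\Delta_{\neg j}$, and the integrands $\bar r_j q_j(\chi_1 \bar r_1,\chi_2 \bar r_2)$ are only Lipschitz in $(\bar r_1,\bar r_2)$ after averaging in $\chi$. Controlling the Lipschitz constants uniformly requires a systematic but delicate application of the many bounds in Lemma \ref{lem:2nd_setting_qBounds} — particularly those of mixed homogeneity such as $|a\,\partial_{12}^2 q_1|, |b\,\partial_{12}^2 q_2| \leq C$ and $|(b/a)\,\partial_1 q_2| \leq C$ — combined with finite-moment bounds on $\chi_1,\chi_2,\chi_1/\chi_2$. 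A secondary issue is that the drift is not globally Lipschitz in $(r_1,r_2)$, but the a priori moment bound confines the solution to a region where the local Lipschitz constants of (i)--(ii) are finite and grow only polynomially in $t$, so the contraction argument can be iterated across unit time intervals without losing uniform control.
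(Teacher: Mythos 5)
Your existence-and-uniqueness strategy is essentially the same Picard iteration the paper uses; the bounds (i)--(iii) and the contraction on a short time interval are in the spirit of the paper's Steps~1--3 (the paper parametrizes its contraction space by $\mathbb{E}_\chi\{\Delta_j(\chi,\mu^t)^2\}\leq K$ rather than by moment bounds on $(r_1,r_2)$, but that is a cosmetic difference). The genuine gap is in the derivation of the $\textit{polynomial-in-}t$ moment bound, and it is the nontrivial part of the proposition. Your estimate (i), and hence the pathwise inequality $r_{j,t}\leq r_{j,0}+\int_0^t\bigl(C+C(r_{1,s}+r_{2,s})+C\,\mathbb{E}\{r_{1,s}+r_{2,s}\}\bigr)\,\mathrm{d}s$, involves $r$ and $\int \bar r\,\mathrm{d}\rho$ linearly on the right-hand side, and Gronwall applied to such an inequality yields $\mathbb{E}\{r_{1,t}+r_{2,t}\}\lesssim e^{Ct}$, not the linear-in-$t$ growth you assert. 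The claimed conclusion $\phi_p(t)\leq C(1+t)\sqrt p$ therefore does not follow from this step, and the iteration over unit time intervals would not rescue it since the bound degrades exponentially.

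The missing ingredient is a Lyapunov-type observation: define
\[
E(\rho)=\tfrac12\sum_{j\in\{1,2\}}\mathbb{E}_\chi\bigl\{\Delta_j(\chi,\rho)^2\bigr\}+\lambda\int(\bar r_1^2+\bar r_2^2)\,\rho(\mathrm{d}\bar r_1,\mathrm{d}\bar r_2),
\]
and check that $\tfrac{\mathrm d}{\mathrm dt}E(\rho_r^t)=-\sum_j\int G_j(\bar r_1,\bar r_2,\rho_r^t)^2\,\rho_r^t(\mathrm d\bar r)\leq 0$. Since $E(\rho_r^0)\leq C$, this gives $\mathbb{E}_\chi\{\Delta_j(\chi,\rho_r^t)^2\}\leq C$ uniformly in $t$. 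Combined with the bounds from Lemma~\ref{lem:2nd_setting_qBounds} (in particular $|q_j|,\,|a\partial_1 q_1|,\,|b\partial_2 q_2|,\,|a\partial_2 q_1|,\,|b\partial_1 q_2|\leq C$), one obtains the sharper drift bound $|G_j(r_1,r_2,\rho_r^t)+2\lambda r_j|\leq C$ with $C$ independent of $r_1,r_2$, and $t$. Discarding the dissipative $-2\lambda r_j$ term (legitimate since $r_{j,s}\geq0$) yields the linear pathwise bound $r_{j,t}\leq r_{j,0}+Ct$, from which the $C^p(1+t^p)p^{p/2}$ moment bound and the $\|\partial_t\psi_t\|_2\leq C(1+t+r_1+r_2)$ estimate follow directly. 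Without the energy monotonicity, the dissipative structure of the drift is lost in the Gronwall step, and the moment bound cannot be obtained.
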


\begin{proof}
We decompose the proof into several steps. We first show existence
and uniqueness of the solution, via a Picard-type iteration argument,
by adapting the strategy of \cite{sznitman1991topics}. This is done
from Steps 1-3 below. Then we show the properties of the solution.
Before we proceed, let us define:
\begin{align*}
G_{j}\left(r_{1},r_{2},\rho\right) & =-\mathbb{E}_{\chi}\left\{ \Delta_{j}\left(\chi,\rho\right)\left[q_{j}\left(\chi_{1}r_{1},\chi_{2}r_{2}\right)+\chi_{j}r_{j}\partial_{j}q_{j}\left(\chi_{1}r_{1},\chi_{2}r_{2}\right)\right]\right\} \\
 & \qquad-\mathbb{E}_{\chi}\left\{ \Delta_{\neg j}\left(\chi,\rho\right)\chi_{j}r_{\neg j}\partial_{j}q_{\neg j}\left(\chi_{1}r_{1},\chi_{2}r_{2}\right)\right\} -2\lambda r_{j},\qquad j=1,2,
\end{align*}
where we recall the convention $\neg j=2$ if $j=1$ and $\neg j=1$
if $j=2$.

\paragraph*{Step 1: Setup.}

Fix a terminal time $T\geq0$ that is to be chosen later. Let ${\cal C}={\cal C}\left(\left[0,T\right];\mathbb{R}^{2}\right)$
be the set of continuous mappings from $\left[0,T\right]$ to $\mathbb{R}^{2}$,
and $\mathscr{P}\left({\cal C};K\right)$ the set of probability measures
on ${\cal C}$ such that if $\mu\in\mathscr{P}\left({\cal C};K\right)$,
$\mathbb{E}_{\chi}\left\{ \Delta_{j}\left(\chi,\mu^{t}\right)^{2}\right\} \leq K$
for $j=1,2$ and any $t\in\left[0,T\right]$, for a constant $K\geq0$
that is to be chosen later. We equip this space with the following
Wasserstein metric:
\[
\mathscr{W}_{T}\left(\mu_{1},\mu_{2}\right)=\inf\left\{ \int\sup_{t\leq T}\sum_{j\in\left\{ 1,2\right\} }\left(r_{j,t}^{\left(1\right)}-r_{j,t}^{\left(2\right)}\right)^{2}\nu\left({\rm d}\boldsymbol{r}^{\left(1\right)},{\rm d}\boldsymbol{r}^{\left(2\right)}\right):\;\text{\ensuremath{\nu} is a coupling of \ensuremath{\mu_{1}} and \ensuremath{\mu_{2}}}\right\} ^{1/2}.
\]
Note that this defines a complete metric on $\mathscr{P}\left({\cal C};\infty\right)$.
We also note $\mathscr{P}\left({\cal C};K\right)\subseteq\mathscr{P}\left({\cal C};\infty\right)$
for all $K\geq0$. We prove that $\mathscr{P}\left({\cal C};K\right)$
is still a complete metric space under $\mathscr{W}_{T}$. Observe
that, for any $\mu_{1},\mu_{2}\in\mathscr{P}\left({\cal C};\infty\right)$
and $t\in\left[0,T\right]$, 
\begin{align}
\left|\Delta_{1}\left(\chi,\mu_{1}^{t}\right)-\Delta_{1}\left(\chi,\mu_{2}^{t}\right)\right| & =\left|\mathbb{E}\left\{ r_{1,t}^{\left(1\right)}q_{1}\left(\chi_{1}r_{1,t}^{\left(1\right)},\chi_{2}r_{2,t}^{\left(1\right)}\right)-r_{1,t}^{\left(2\right)}q_{1}\left(\chi_{1}r_{1,t}^{\left(2\right)},\chi_{2}r_{2,t}^{\left(2\right)}\right)\right\} \right|\nonumber \\
 & \leq\sup_{u_{1},u_{2}\geq0}\left|q_{1}\left(\chi_{1}u_{1},\chi_{2}u_{2}\right)+\chi_{1}u_{1}\partial_{1}q_{1}\left(\chi_{1}u_{1},\chi_{2}u_{2}\right)\right|\mathbb{E}\left\{ \left|r_{1,t}^{\left(1\right)}-r_{1,t}^{\left(2\right)}\right|\right\} \nonumber \\
 & \qquad+\left(\chi_{2}/\chi_{1}\right)\sup_{u_{1},u_{2}\geq0}\left|\chi_{1}u_{1}\partial_{2}q_{1}\left(\chi_{1}u_{1},\chi_{2}u_{2}\right)\right|\mathbb{E}\left\{ \left|r_{2,t}^{\left(1\right)}-r_{2,t}^{\left(2\right)}\right|\right\} \nonumber \\
 & \stackrel{\left(a\right)}{\leq}C\left(1+\chi_{2}/\chi_{1}\right)\left(\mathbb{E}\left\{ \left|r_{1,t}^{\left(1\right)}-r_{1,t}^{\left(2\right)}\right|\right\} +\mathbb{E}\left\{ \left|r_{2,t}^{\left(1\right)}-r_{2,t}^{\left(2\right)}\right|\right\} \right)\nonumber \\
 & \leq C\left(1+\chi_{2}/\chi_{1}\right)\sqrt{\mathbb{E}\left\{ \left(r_{1,t}^{\left(1\right)}-r_{1,t}^{\left(2\right)}\right)^{2}+\left(r_{2,t}^{\left(1\right)}-r_{2,t}^{\left(2\right)}\right)^{2}\right\} },\label{eq:2nd_setting_lem_sol_rho_r_1}
\end{align}
where the expectation is taken over an arbitrary coupling between
$\left(r_{1,t}^{\left(1\right)},r_{2,t}^{\left(1\right)}\right)\sim\mu_{1}^{t}$
and $\left(r_{1,t}^{\left(2\right)},r_{2,t}^{\left(2\right)}\right)\sim\mu_{2}^{t}$,
and step $\left(a\right)$ is due to Lemma \ref{lem:2nd_setting_qBounds}.
Therefore,
\begin{align*}
 & \left|\mathbb{E}_{\chi}\left\{ \Delta_{1}\left(\chi,\mu_{1}^{t}\right)^{2}\right\} -\mathbb{E}_{\chi}\left\{ \Delta_{1}\left(\chi,\mu_{2}^{t}\right)^{2}\right\} \right|\\
 & \quad\leq\left(\sqrt{\mathbb{E}_{\chi}\left\{ \Delta_{1}\left(\chi,\mu_{1}^{t}\right)^{2}\right\} }+\sqrt{\mathbb{E}_{\chi}\left\{ \Delta_{1}\left(\chi,\mu_{2}^{t}\right)^{2}\right\} }\right)\sqrt{\mathbb{E}_{\chi}\left\{ \left|\Delta_{1}\left(\chi,\mu_{1}^{t}\right)-\Delta_{1}\left(\chi,\mu_{2}^{t}\right)\right|^{2}\right\} }\\
 & \quad\leq C\left(\sqrt{\mathbb{E}_{\chi}\left\{ \Delta_{1}\left(\chi,\mu_{1}^{t}\right)^{2}\right\} }+\sqrt{\mathbb{E}_{\chi}\left\{ \Delta_{1}\left(\chi,\mu_{2}^{t}\right)^{2}\right\} }\right)\sqrt{\mathbb{E}_{\chi}\left\{ 1+\chi_{2}^{2}/\chi_{1}^{2}\right\} }\mathscr{W}_{T}\left(\mu_{1},\mu_{2}\right)\\
 & \quad\leq C\left(\sqrt{\mathbb{E}_{\chi}\left\{ \Delta_{1}\left(\chi,\mu_{1}^{t}\right)^{2}\right\} }+\sqrt{\mathbb{E}_{\chi}\left\{ \Delta_{1}\left(\chi,\mu_{2}^{t}\right)^{2}\right\} }\right)\mathscr{W}_{T}\left(\mu_{1},\mu_{2}\right).
\end{align*}
Now we take a sequence $\left(\mu_{n}\right)_{n\in\mathbb{N}}$ such
that $\mu_{n}\in\mathscr{P}\left({\cal C};K\right)$ and $\mu_{n}\stackrel{\mathscr{W}_{T}}{\longrightarrow}\mu$,
and apply this result to $\mu_{n}$ and $\mu$:
\begin{align*}
\mathbb{E}_{\chi}\left\{ \Delta_{1}\left(\chi,\mu^{t}\right)^{2}\right\}  & \leq\mathbb{E}_{\chi}\left\{ \Delta_{1}\left(\chi,\mu_{n}^{t}\right)^{2}\right\} +C\left(\sqrt{\mathbb{E}_{\chi}\left\{ \Delta_{1}\left(\chi,\mu_{n}^{t}\right)^{2}\right\} }+\sqrt{\mathbb{E}_{\chi}\left\{ \Delta_{1}\left(\chi,\mu^{t}\right)^{2}\right\} }\right)\mathscr{W}_{T}\left(\mu_{n},\mu\right)\\
 & \leq K+C\left(\sqrt{K}+\sqrt{\mathbb{E}_{\chi}\left\{ \Delta_{1}\left(\chi,\mu^{t}\right)^{2}\right\} }\right)\mathscr{W}_{T}\left(\mu_{n},\mu\right),
\end{align*}
since $\mu_{n}\in\mathscr{P}\left({\cal C};K\right)$. Suppose that
$\mathbb{E}_{\chi}\left\{ \Delta_{1}\left(\chi,\mu^{t}\right)^{2}\right\} \geq K+\epsilon$
for an arbitrary $\epsilon>0$ and some $t\in\left[0,T\right]$. Then
the above implies,
\[
\sqrt{\mathbb{E}_{\chi}\left\{ \Delta_{1}\left(\chi,\mu^{t}\right)^{2}\right\} }\leq\lim_{n\to\infty}\frac{K+C\sqrt{K}\mathscr{W}_{T}\left(\mu_{n},\mu\right)}{\sqrt{K+\epsilon}-C\mathscr{W}_{T}\left(\mu_{n},\mu\right)}=\frac{K}{\sqrt{K+\epsilon}},
\]
which contradicts $\mathbb{E}_{\chi}\left\{ \Delta_{1}\left(\chi,\mu^{t}\right)^{2}\right\} \geq K+\epsilon$.
Hence $\mathbb{E}_{\chi}\left\{ \Delta_{1}\left(\chi,\mu^{t}\right)^{2}\right\} \leq K$.
We also have similarly $\mathbb{E}_{\chi}\left\{ \Delta_{2}\left(\chi,\mu^{t}\right)^{2}\right\} \leq K$.
That is, $\mu\in\mathscr{P}\left({\cal C};K\right)$, and hence $\mathscr{P}\left({\cal C};K\right)$
is closed. Since $\mathscr{P}\left({\cal C};K\right)\subseteq\mathscr{P}\left({\cal C};\infty\right)$
and $\mathscr{P}\left({\cal C};\infty\right)$ is complete, we have
that $\mathscr{P}\left({\cal C};K\right)$ is complete, as desired.

\paragraph*{Step 2: The iterating map $\Phi$.}

We shall depart from the initial law $\rho_{r}^{0}$ as given in the
ODE (\ref{eq:2nd_setting_ODE_r}), and consider a generic initial
law $\tilde{\rho}_{r}^{0}\in\mathscr{P}\left(\mathbb{R}^{2}\right)$
such that $M\left(\tilde{\rho}_{r}^{0}\right)<\infty$, where we define
\[
M\left(\tilde{\rho}_{r}^{0}\right)=\max\left(1,\;\int\left(\bar{r}_{1}^{2}+\bar{r}_{2}^{2}\right)\tilde{\rho}_{r}^{0}\left({\rm d}\bar{r}_{1},{\rm d}\bar{r}_{2}\right)\right).
\]
Define $\Phi:\;\mathscr{P}\left({\cal C};K\right)\to\mathscr{P}\left({\cal C};K\right)$
which associates $\mu\in\mathscr{P}\left({\cal C};K\right)$ to the
law of $\left(\tilde{r}_{1,t},\tilde{r}_{2,t}\right)_{t\in\left[0,T\right]}$,
which is the solution to
\[
\tilde{r}_{j,t}=\tilde{r}_{j,0}+\int_{s=0}^{t}G_{j}\left(\tilde{r}_{1,s},\tilde{r}_{2,s},\mu^{s}\right){\rm d}s,\qquad t\leq T,\quad j=1,2,\quad\left(\tilde{r}_{1,0},\tilde{r}_{2,0}\right)\sim\tilde{\rho}_{r}^{0}.
\]
If $\mu$ is a weak solution of the ODE (\ref{eq:2nd_setting_ODE_r})
with initialization $\tilde{\rho}_{r}^{0}$, then it is a fixed point
of $\Phi$, and vice versa -- assuming that this is well-defined.
That is, we need to check that firstly, the process $\left(\tilde{r}_{1,t},\tilde{r}_{2,t}\right)_{t\in\left[0,T\right]}$
under $\mu\in\mathscr{P}\left({\cal C};K\right)$ exists and is unique
under any initialization $\left(\tilde{r}_{1,0},\tilde{r}_{2,0}\right)\in[0,\infty)\times[0,\infty)$,
and secondly, $\Phi\left(\mu\right)\in\mathscr{P}\left({\cal C};K\right)$
for any $\mu\in\mathscr{P}\left({\cal C};K\right)$, for suitably
chosen $K$ and $T$. We remark that $\Phi\left(\mu\right)\in\mathscr{P}\left({\cal C};K\right)$
already implies $\mathbb{E}_{\chi}\left\{ \Delta_{j}\left(\chi,\tilde{\rho}_{r}^{0}\right)^{2}\right\} \leq K$
for $j\in\left\{ 1,2\right\} $.

We check the first condition. By Lemma \ref{lem:2nd_setting_qBounds},
for $\mu\in\mathscr{P}\left({\cal C};K\right)$ and any $t\leq T$,
\begin{align*}
\left|\partial_{1}G_{1}\left(r_{1},r_{2},\mu^{t}\right)\right| & =\Big|-\mathbb{E}_{\chi}\left\{ \Delta_{1}\left(\chi,\mu^{t}\right)\left[\chi_{1}\partial_{1}q_{1}\left(\chi_{1}r_{1},\chi_{2}r_{2}\right)+\chi_{1}^{2}r_{1}\partial_{11}^{2}q_{1}\left(\chi_{1}r_{1},\chi_{2}r_{2}\right)\right]\right\} \\
 & \qquad-\mathbb{E}_{\chi}\left\{ \Delta_{2}\left(\chi,\mu^{t}\right)\chi_{1}^{2}r_{2}\partial_{11}^{2}q_{2}\left(\chi_{1}r_{1},\chi_{2}r_{2}\right)\right\} -2\lambda\Big|\\
 & \leq C\mathbb{E}_{\chi}\left\{ \chi_{1}\left|\Delta_{1}\left(\chi,\mu^{t}\right)\right|+\frac{\chi_{1}^{2}}{\chi_{2}}\left|\Delta_{2}\left(\chi,\mu^{t}\right)\right|+1\right\} \\
 & \leq C\left(\mathbb{E}_{\chi}\left\{ \chi_{1}^{2}\right\} ^{1/2}\mathbb{E}_{\chi}\left\{ \left|\Delta_{1}\left(\chi,\mu^{t}\right)\right|^{2}\right\} ^{1/2}+\mathbb{E}_{\chi}\left\{ \chi_{1}^{4}/\chi_{2}^{2}\right\} ^{1/2}\mathbb{E}_{\chi}\left\{ \left|\Delta_{2}\left(\chi,\mu^{t}\right)\right|^{2}\right\} ^{1/2}+1\right)\\
 & \leq C\left(\sqrt{K}+1\right),\\
\left|\partial_{2}G_{1}\left(r_{1},r_{2},\mu^{t}\right)\right| & =\Big|-\mathbb{E}_{\chi}\left\{ \Delta_{1}\left(\chi,\mu^{t}\right)\left[\chi_{2}\partial_{2}q_{1}\left(\chi_{1}r_{1},\chi_{2}r_{2}\right)+\chi_{1}\chi_{2}r_{1}\partial_{12}^{2}q_{1}\left(\chi_{1}r_{1},\chi_{2}r_{2}\right)\right]\right\} \\
 & \qquad-\mathbb{E}_{\chi}\left\{ \Delta_{2}\left(\chi,\mu^{t}\right)\chi_{1}\chi_{2}r_{2}\partial_{12}^{2}q_{2}\left(\chi_{1}r_{1},\chi_{2}r_{2}\right)\right\} \Big|\\
 & \leq C\mathbb{E}_{\chi}\left\{ \chi_{2}\left|\Delta_{1}\left(\chi,\mu^{t}\right)\right|+\chi_{1}\left|\Delta_{2}\left(\chi,\mu^{t}\right)\right|\right\} \\
 & \leq C\left(\mathbb{E}_{\chi}\left\{ \chi_{2}^{2}\right\} ^{1/2}\mathbb{E}_{\chi}\left\{ \left|\Delta_{1}\left(\chi,\mu^{t}\right)\right|^{2}\right\} ^{1/2}+\mathbb{E}_{\chi}\left\{ \chi_{1}^{2}\right\} ^{1/2}\mathbb{E}_{\chi}\left\{ \left|\Delta_{2}\left(\chi,\mu^{t}\right)\right|^{2}\right\} ^{1/2}\right)\\
 & \leq C\sqrt{K}.
\end{align*}
Similarly $\left|\partial_{2}G_{2}\left(r_{1},r_{2},\mu^{t}\right)\right|,\;\left|\partial_{1}G_{2}\left(r_{1},r_{2},\mu^{t}\right)\right|\leq C\left(\sqrt{K}+1\right)$,
uniformly in $t\in\left[0,T\right]$. It is easy to see that $t\mapsto G_{j}\left(r_{1},r_{2},\mu^{t}\right)$
is continuous, for $j\in\left\{ 1,2\right\} $ and any $r_{1}$, $r_{2}$,
since $\mu\in\mathscr{P}\left({\cal C};K\right)$. Existence and uniqueness
of $\left(\tilde{r}_{1,t},\tilde{r}_{2,t}\right)_{t\in\left[0,T\right]}$
then follow upon choosing $K<\infty$.

We check the second condition. We have for $\mu\in\mathscr{P}\left({\cal C};K\right)$:
\begin{align}
\tilde{r}_{1,t} & =\tilde{r}_{1,0}+\int_{s=0}^{t}\left(G_{1}\left(\tilde{r}_{1,s},\tilde{r}_{2,s},\mu^{s}\right)+2\lambda\tilde{r}_{1,s}\right){\rm d}s-\int_{s=0}^{t}2\lambda\tilde{r}_{1,s}{\rm d}s\nonumber \\
 & \stackrel{\left(a\right)}{\leq}\tilde{r}_{1,0}+C\int_{s=0}^{t}\mathbb{E}_{\chi}\left\{ \left|\Delta_{1}\left(\chi,\mu^{s}\right)\right|+\left(\chi_{1}/\chi_{2}\right)\left|\Delta_{2}\left(\chi,\mu^{s}\right)\right|\right\} {\rm d}s\nonumber \\
 & \leq\tilde{r}_{1,0}+C\int_{s=0}^{t}\left(\sqrt{\mathbb{E}_{\chi}\left\{ \left|\Delta_{1}\left(\chi,\mu^{s}\right)\right|^{2}\right\} }+\sqrt{\mathbb{E}_{\chi}\left\{ \chi_{1}^{2}/\chi_{2}^{2}\right\} \mathbb{E}_{\chi}\left\{ \left|\Delta_{2}\left(\chi,\mu^{s}\right)\right|^{2}\right\} }\right){\rm d}s\nonumber \\
 & \leq\tilde{r}_{1,0}+C\sqrt{K}t\label{eq:2nd_setting_lem_sol_rho_r_2}
\end{align}
where step $\left(a\right)$ is due to Lemma \ref{lem:2nd_setting_qBounds}
and the fact $\lambda\tilde{r}_{1,s}\geq0$. Using this and recalling
that $\Phi\left(\mu\right)^{t}={\rm Law}\left(\tilde{r}_{1,t},\tilde{r}_{2,t}\right)$,
we get:
\begin{align*}
\mathbb{E}_{\chi}\left\{ \Delta_{1}\left(\chi,\Phi\left(\mu\right)^{t}\right)^{2}\right\}  & \leq C\int\bar{r}_{1}^{2}\Phi\left(\mu\right)^{t}\left({\rm d}\bar{r}_{1},{\rm d}\bar{r}_{2}\right)+2\mathbb{E}_{\chi}\left\{ \chi_{1}^{2}\right\} =C\mathbb{E}\left\{ \tilde{r}_{j,t}^{2}\right\} +C\\
 & \leq C\mathbb{E}\left\{ \tilde{r}_{1,0}^{2}\right\} +CKt^{2}+C\leq C\left(M\left(\tilde{\rho}_{r}^{0}\right)+KT^{2}\right),
\end{align*}
where we have used Lemma \ref{lem:2nd_setting_qBounds} in the first
inequality and the fact $M\left(\tilde{\rho}_{r}^{0}\right)\geq1$
by definition. One can obtain similarly:
\[
\max_{j\in\left\{ 1,2\right\} }\mathbb{E}_{\chi}\left\{ \Delta_{j}\left(\chi,\Phi\left(\mu\right)^{t}\right)^{2}\right\} \leq C_{*}\left(M\left(\tilde{\rho}_{r}^{0}\right)+KT^{2}\right),
\]
for some constant $C_{*}>0$ independent of $M\left(\tilde{\rho}_{r}^{0}\right)$,
$K$ and $T$. By choosing $K=2C_{*}M\left(\tilde{\rho}_{r}^{0}\right)<\infty$
and $T=1/\sqrt{2C_{*}}$, we get $\mathbb{E}_{\chi}\left\{ \Delta_{j}\left(\chi,\Phi\left(\mu\right)^{t}\right)^{2}\right\} \leq K$
for $j=1,2$. That is, $\Phi\left(\mu\right)\in\mathscr{P}\left({\cal C};K\right)$.

\paragraph*{Step 3: Contraction of $\Phi$.}

Now we show a contraction property of $\Phi$. Let us consider $\mu_{1},\mu_{2}\in\mathscr{P}\left({\cal C};K\right)$,
and a coupling:
\[
\tilde{r}_{j,t}^{\left(1\right)}=\tilde{r}_{j,0}+\int_{s=0}^{t}G_{j}\left(\tilde{r}_{1,s}^{\left(1\right)},\tilde{r}_{2,s}^{\left(1\right)},\mu_{1}^{s}\right){\rm d}s,\qquad\tilde{r}_{j,t}^{\left(2\right)}=\tilde{r}_{j,0}+\int_{s=0}^{t}G_{j}\left(\tilde{r}_{1,s}^{\left(2\right)},\tilde{r}_{2,s}^{\left(2\right)},\mu_{2}^{s}\right){\rm d}s,\qquad t\leq T,\quad j=1,2.
\]
We have for $t\leq T$:
\begin{align*}
\sup_{s\leq t}\sum_{j\in\left\{ 1,2\right\} }\left|\tilde{r}_{j,s}^{\left(1\right)}-\tilde{r}_{j,s}^{\left(2\right)}\right| & \leq\sum_{j\in\left\{ 1,2\right\} }\int_{s=0}^{t}\left|G_{j}\left(\tilde{r}_{1,s}^{\left(1\right)},\tilde{r}_{2,s}^{\left(1\right)},\mu_{1}^{s}\right)-G_{j}\left(\tilde{r}_{1,s}^{\left(2\right)},\tilde{r}_{2,s}^{\left(2\right)},\mu_{2}^{s}\right)\right|{\rm d}s\\
 & \leq\sum_{j\in\left\{ 1,2\right\} }\sum_{i\in\left\{ 1,2\right\} }\sup_{r_{1},r_{2}\geq0,\;\mu\in\mathscr{P}\left({\cal C};K\right),\;t\leq T}\left|\partial_{i}G_{j}\left(r_{1},r_{2},\mu^{t}\right)\right|\int_{s=0}^{t}\left|\tilde{r}_{i,s}^{\left(1\right)}-\tilde{r}_{i,s}^{\left(2\right)}\right|{\rm d}s\\
 & \qquad+\sum_{j\in\left\{ 1,2\right\} }\int_{s=0}^{t}\sup_{r_{1},r_{2}\geq0}\left|G_{j}\left(r_{1},r_{2},\mu_{1}^{s}\right)-G_{j}\left(r_{1},r_{2},\mu_{2}^{s}\right)\right|{\rm d}s.
\end{align*}
We recall $\left|\partial_{i}G_{j}\left(r_{1},r_{2},\mu^{t}\right)\right|\leq C\left(\sqrt{K}+1\right)$
for $i,j\in\left\{ 1,2\right\} $, $t\in\left[0,T\right]$ and $\mu\in\mathscr{P}\left({\cal C};K\right)$
as shown in the previous step. We also have from Eq. (\ref{eq:2nd_setting_lem_sol_rho_r_1})
and Lemma \ref{lem:2nd_setting_qBounds} that
\begin{align*}
 & \left|G_{1}\left(r_{1},r_{2},\mu_{1}^{s}\right)-G_{1}\left(r_{1},r_{2},\mu_{2}^{s}\right)\right|\\
 & \qquad\leq\mathbb{E}_{\chi}\left\{ \left|\Delta_{1}\left(\chi,\mu_{1}^{s}\right)-\Delta_{1}\left(\chi,\mu_{2}^{s}\right)\right|\right\} +\mathbb{E}_{\chi}\left\{ \left(\chi_{2}/\chi_{1}\right)\left|\Delta_{2}\left(\chi,\mu_{1}^{s}\right)-\Delta_{2}\left(\chi,\mu_{2}^{s}\right)\right|\right\} \\
 & \qquad\leq C\mathbb{E}_{\chi}\left\{ 1+\chi_{2}/\chi_{1}\right\} \mathscr{W}_{s}\left(\mu_{1},\mu_{2}\right)\leq C{\cal \mathscr{W}}_{s}\left(\mu_{1},\mu_{2}\right).
\end{align*}
Similarly $\left|G_{2}\left(r_{1},r_{2},\mu_{1}^{s}\right)-G_{2}\left(r_{1},r_{2},\mu_{2}^{s}\right)\right|\leq C\mathscr{W}_{s}\left(\mu_{1},\mu_{2}\right)$.
Combining these bounds, we then obtain:
\[
\sup_{s\leq t}\sum_{j\in\left\{ 1,2\right\} }\left|\tilde{r}_{j,s}^{\left(1\right)}-\tilde{r}_{j,s}^{\left(2\right)}\right|\leq C\left(\sqrt{K}+1\right)\int_{s=0}^{t}\sum_{i\in\left\{ 1,2\right\} }\left|\tilde{r}_{i,s}^{\left(1\right)}-\tilde{r}_{i,s}^{\left(2\right)}\right|{\rm d}s+C\int_{s=0}^{t}{\cal \mathscr{W}}_{s}\left(\mu_{1},\mu_{2}\right){\rm d}s.
\]
Using Gronwall's lemma:
\begin{equation}
\sup_{s\leq t}\sum_{j\in\left\{ 1,2\right\} }\left|\tilde{r}_{j,s}^{\left(1\right)}-\tilde{r}_{j,s}^{\left(2\right)}\right|\leq Ce^{C\left(\sqrt{K}+1\right)T}\int_{s=0}^{t}\mathscr{W}_{s}\left(\mu_{1},\mu_{2}\right){\rm d}s,\label{eq:2nd_setting_lem_sol_rho_r_3}
\end{equation}
which implies 
\[
\mathscr{W}_{t}\left(\Phi\left(\mu_{1}\right),\Phi\left(\mu_{2}\right)\right)\leq Ce^{C\left(\sqrt{K}+1\right)T}\int_{s=0}^{t}\mathscr{W}_{s}\left(\mu_{1},\mu_{2}\right){\rm d}s.
\]
Iterating this result, we have for $\mu\in\mathscr{P}\left({\cal C};K\right)$:
\[
\mathscr{W}_{T}\left(\Phi^{k}\left(\mu_{1}\right),\Phi^{k}\left(\mu_{2}\right)\right)\leq C_{T,K}^{k}\frac{T^{k}}{k!}{\cal \mathscr{W}}_{T}\left(\mu_{1},\mu_{2}\right),
\]
for any integer $k\geq1$. Since $\mathscr{P}\left({\cal C};K\right)$
is complete, by substituting $\mu_{2}=\Phi\left(\mu_{1}\right)$,
this shows that $\Phi^{k}\left(\mu_{1}\right)$ converges to a limit
point $\mu_{*}\in\mathscr{P}\left({\cal C};K\right)$ as $k\to\infty$.
This limit point $\mu_{*}$ is a fixed point of $\Phi$ and hence
is a solution up to time $T$. The weak uniqueness of this fixed point
also follows easily. In particular, if $\mu_{1}$ and $\mu_{2}$ are
fixed points, then $\Phi^{k}\left(\mu_{1}\right)=\mu_{1}$ and $\Phi^{k}\left(\mu_{2}\right)=\mu_{2}$.
Hence
\[
\mathscr{W}_{T}\left(\mu_{1},\mu_{2}\right)\leq C_{T,K}^{k}\frac{T^{k}}{k!}\mathscr{W}_{T}\left(\mu_{1},\mu_{2}\right),
\]
for arbitrary $k\geq1$. This implies ${\cal \mathscr{W}}_{T}\left(\mu_{1},\mu_{2}\right)=0$.
Since $\mathscr{W}_{T}$ induces the weak topology on $\mathscr{P}\left({\cal C};K\right)$,
weak uniqueness follows. Uniqueness of the solution $\left(\tilde{r}_{1,t},\tilde{r}_{2,t}\right)_{t\in\left[0,T\right]}$
under $\mu_{*}$ is immediate from Eq. (\ref{eq:2nd_setting_lem_sol_rho_r_3}).

We have shown the solution exists (weakly) uniquely for $t\leq T=1/\sqrt{2C_{*}}$
for $C_{*}>0$ independent of the initial law $\tilde{\rho}_{r}^{0}$.
By Eq. (\ref{eq:2nd_setting_lem_sol_rho_r_2}) and the fact $M\left(\tilde{\rho}_{r}^{0}\right)\geq1$,
substituting the choice of $K$ and $T$, we have:
\[
M\left({\rm Law}\left(\tilde{r}_{1,T},\tilde{r}_{2,T}\right)\right)=\max\left(1,\;\mathbb{E}\left\{ \tilde{r}_{1,T}^{2}+\tilde{r}_{2,T}^{2}\right\} \right)\leq CM\left(\tilde{\rho}_{r}^{0}\right)+CKT^{2}=CM\left(\tilde{\rho}_{r}^{0}\right),
\]
which is finite if $M\left(\tilde{\rho}_{r}^{0}\right)$ is finite.
Hence the existence and (weak) uniqueness of the solution can be extended
to $t\in[0,\infty)$. We now return to the original ODE (\ref{eq:2nd_setting_ODE_r}).
Recall that its initial law $\rho_{r}^{0}$ satisfies $M\left(\rho_{r}^{0}\right)\leq C$.
This proves the existence and (weak) uniqueness of the solution of
the ODE (\ref{eq:2nd_setting_ODE_r}) on $t\in[0,\infty)$.

\paragraph*{Step 4: Properties of $\rho_{r}^{t}$.}

The above existence and uniqueness proof only shows that the law solution
lies in $\mathscr{P}\left({\cal C}\left([0,\infty),\mathbb{R}^{2}\right);\infty\right)$.
To derive its properties, we shall appeal to another approach. Consider
the following energy functional:
\[
E\left(\rho\right)=\frac{1}{2}\sum_{j\in\left\{ 1,2\right\} }\mathbb{E}_{\chi}\left\{ \Delta_{j}\left(\chi,\rho\right)^{2}\right\} +\lambda\int\left(\bar{r}_{1}^{2}+\bar{r}_{2}^{2}\right)\rho\left({\rm d}\bar{r}_{1},{\rm d}\bar{r}_{2}\right).
\]
Recall that $\left({\rm d}/{\rm d}t\right)r_{j,t}=G_{j}\left(r_{1,t},r_{2,t},\rho_{r}^{t}\right)$.
We have:
\begin{align*}
\frac{{\rm d}}{{\rm d}t}E\left(\rho_{r}^{t}\right) & =\sum_{j\in\left\{ 1,2\right\} }\mathbb{E}_{\chi}\left\{ \Delta_{j}\left(\chi,\rho_{r}^{t}\right)\int\left[q_{j}\left(\chi_{1}\bar{r}_{1},\chi_{2}\bar{r}_{2}\right)+\chi_{j}\bar{r}_{j}\partial_{j}q_{j}\left(\chi_{1}\bar{r}_{1},\chi_{2}\bar{r}_{2}\right)\right]G_{j}\left(\bar{r}_{1},\bar{r}_{2},\rho_{r}^{t}\right)\rho_{r}^{t}\left({\rm d}\bar{r}_{1},{\rm d}\bar{r}_{2}\right)\right\} \\
 & \qquad+\sum_{j\in\left\{ 1,2\right\} }\mathbb{E}_{\chi}\left\{ \Delta_{j}\left(\chi,\rho_{r}^{t}\right)\int\chi_{\neg j}\bar{r}_{j}\partial_{\neg j}q_{j}\left(\chi_{1}\bar{r}_{1},\chi_{2}\bar{r}_{2}\right)G_{\neg j}\left(\bar{r}_{1},\bar{r}_{2},\rho_{r}^{t}\right)\rho_{r}^{t}\left({\rm d}\bar{r}_{1},{\rm d}\bar{r}_{2}\right)\right\} \\
 & \qquad+2\lambda\sum_{j\in\left\{ 1,2\right\} }\int\bar{r}_{j}G_{j}\left(\bar{r}_{1},\bar{r}_{2},\rho_{r}^{t}\right)\rho_{r}^{t}\left({\rm d}\bar{r}_{1},{\rm d}\bar{r}_{2}\right)\\
 & =-\sum_{j\in\left\{ 1,2\right\} }\int G_{j}\left(\bar{r}_{1},\bar{r}_{2},\rho_{r}^{t}\right)^{2}\rho_{r}^{t}\left({\rm d}\bar{r}_{1},{\rm d}\bar{r}_{2}\right)\leq0.
\end{align*}
That is, $E\left(\rho_{r}^{t}\right)$ is non-increasing with $t\in[0,\infty)$.
Therefore, $E\left(\rho_{r}^{t}\right)\leq E\left(\rho_{r}^{0}\right)$.
Notice that $\int\left(\bar{r}_{1}^{2}+\bar{r}_{2}^{2}\right){\rm d}\rho_{r}^{0}=r_{0}^{2}\leq C$.
By Lemma \ref{lem:2nd_setting_qBounds}, $\left\Vert q_{j}\right\Vert _{\infty}\leq C$
and hence:
\[
\mathbb{E}_{\chi}\left\{ \Delta_{j}\left(\chi,\rho_{r}^{0}\right)^{2}\right\} \leq2\int\bar{r}_{j}^{2}{\rm d}\rho_{r}^{0}+2\mathbb{E}_{\chi}\left\{ \chi_{j}^{2}\right\} \leq C.
\]
These show that $\mathbb{E}_{\chi}\left\{ \Delta_{j}\left(\chi,\rho_{r}^{t}\right)^{2}\right\} \leq C$
for $j=1,2$. Along with Lemma \ref{lem:2nd_setting_qBounds}, we
then have:
\begin{align*}
\left|G_{j}\left(r_{1},r_{2},\rho_{r}^{t}\right)+2\lambda r_{j}\right| & \leq\sqrt{\mathbb{E}_{\chi}\left\{ \Delta_{j}\left(\chi,\rho_{r}^{t}\right)^{2}\right\} \mathbb{E}_{\chi}\left\{ q_{j}\left(\chi_{1}r_{1},\chi_{2}r_{2}\right)^{2}+\left(\chi_{j}r_{j}\partial_{j}q_{j}\left(\chi_{1}r_{1},\chi_{2}r_{2}\right)\right)^{2}\right\} }\\
 & \quad+\sqrt{\mathbb{E}_{\chi}\left\{ \Delta_{\neg j}\left(\chi,\rho_{r}\right)^{2}\right\} \mathbb{E}_{\chi}\left\{ \chi_{j}^{4}/\chi_{\neg j}^{4}\right\} ^{1/2}\mathbb{E}_{\chi}\left\{ \left(\chi_{\neg j}r_{\neg j}\partial_{j}q_{\neg j}\left(\chi_{1}r_{1},\chi_{2}r_{2}\right)\right)^{4}\right\} ^{1/2}}\\
 & \leq C,
\end{align*}
for any $t\geq0$ and any $r_{1},r_{2}\geq0$.

We now bound $\int\bar{r}_{j}^{p}{\rm d}\rho_{r}^{t}$, for $j=1,2$.
Let $\mathbb{E}_{r}$ denote the expectation w.r.t. $\left(r_{1,0},r_{2,0}\right)\sim\rho_{r}^{0}$,
and notice that $\left(r_{1,t},r_{2,t}\right)$ is a deterministic
function of $\left(r_{1,0},r_{2,0}\right)$. We bound the growth of
$r_{j,t}$:
\[
r_{j,t}=r_{j,0}+\int_{s=0}^{t}\left(G_{j}\left(r_{1,s},r_{2,s},\rho_{r}^{s}\right)+2\lambda r_{j,s}\right){\rm d}s-2\lambda\int_{s=0}^{t}r_{j,s}{\rm d}s\leq r_{j,0}+Ct,
\]
since $r_{j,s}\geq0$. This yields:
\[
\int\bar{r}_{j}^{p}{\rm d}\rho_{r}^{t}\leq\mathbb{E}_{r}\left\{ \left(r_{j,0}+Ct\right)^{p}\right\} \leq C^{p}\left(\mathbb{E}_{r}\left\{ r_{j,0}^{p}\right\} +t^{p}\right)\leq C^{p}\left(p^{p/2}+t^{p}\right)\leq C^{p}\left(1+t^{p}\right)p^{p/2},
\]
giving the desired moment bound.

Next we note that with $\left(r_{1,t},r_{2,t}\right)=\psi_{t}\left(r_{1,0},r_{2,0}\right)$,
\begin{align*}
\left\Vert \partial_{t}\psi_{t}\left(r_{1,0},r_{2,0}\right)\right\Vert _{2}^{2} & =\sum_{j\in\left\{ 1,2\right\} }\left|\frac{{\rm d}}{{\rm d}t}r_{j,t}\right|^{2}=\sum_{j\in\left\{ 1,2\right\} }\left|G_{j}\left(r_{1,t},r_{2,t},\rho_{r}^{t}\right)\right|^{2}\leq C\sum_{j\in\left\{ 1,2\right\} }\left(1+r_{j,t}\right)^{2}\\
 & \leq C\left(1+r_{1,t}+r_{2,t}\right)^{2}\leq C\left(1+r_{1,0}+r_{2,0}+t\right)^{2},
\end{align*}
as desired.

\end{proof}
\begin{prop}
\label{prop:2nd_setting_ODE}Consider setting \ref{enu:bdd_act_setting}.
Suppose that the initialization $\rho^{0}=\mathsf{N}\left(\boldsymbol{0},r_{0}^{2}\boldsymbol{I}_{d}/d\right)$
for a non-negative constant $r_{0}\leq C$. Given a random vector
$\hat{\boldsymbol{\theta}}^{0}\sim\rho^{0}$, define the following:
\[
\hat{\boldsymbol{\theta}}^{t}=\left(r_{1,t}\hat{\boldsymbol{\theta}}_{\left[1\right]}^{0}/\left\Vert \hat{\boldsymbol{\theta}}_{\left[1\right]}^{0}\right\Vert _{2},\quad r_{2,t}\hat{\boldsymbol{\theta}}_{\left[2\right]}^{0}/\left\Vert \hat{\boldsymbol{\theta}}_{\left[2\right]}^{0}\right\Vert _{2}\right),\qquad\rho^{t}={\rm Law}\left(\hat{\boldsymbol{\theta}}^{t}\right),
\]
in which $\left(r_{1,t}\right)_{t\geq0}$ and $\left(r_{2,t}\right)_{t\geq0}$
are two non-negative (random) processes, which are independent of
$\hat{\boldsymbol{\theta}}_{\left[1\right]}^{0}/\left\Vert \hat{\boldsymbol{\theta}}_{\left[1\right]}^{0}\right\Vert _{2}$
and $\hat{\boldsymbol{\theta}}_{\left[2\right]}^{0}/\left\Vert \hat{\boldsymbol{\theta}}_{\left[2\right]}^{0}\right\Vert _{2}$,
that are described as in the statement of Theorem \ref{thm:bdd_act_setting}.
Then the ODE (\ref{eq:ODE}) admits $\left(\hat{\boldsymbol{\theta}}^{t},\rho^{t}\right)_{t\geq0}$
as a solution. In fact, $\left(\rho^{t}\right)_{t\geq0}$ is the unique
weak solution, and under $\left(\rho^{t}\right)_{t\geq0}$, $\left(\hat{\boldsymbol{\theta}}^{t}\right)_{t\geq0}$
is the unique solution to (\ref{eq:ODE}).
\end{prop}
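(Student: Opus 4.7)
The plan is to first verify that the proposed pair $(\hat{\boldsymbol{\theta}}^t,\rho^t)_{t\geq 0}$ solves the ODE (\ref{eq:ODE}), and then to establish weak uniqueness by a coupling argument that parallels the proof of Proposition~\ref{prop:ReLU_setting_ODE}. Consistency at $t=0$ is immediate: since $\hat{\boldsymbol{\theta}}^0\sim\mathsf{N}(\boldsymbol{0},r_0^2\boldsymbol{I}_d/d)$ decomposes into independent blocks, $\hat{\boldsymbol{\theta}}_{[j]}^0/\|\hat{\boldsymbol{\theta}}_{[j]}^0\|_2$ is uniform on $\mathbb{S}^{d_j-1}$ and independent of $\|\hat{\boldsymbol{\theta}}_{[j]}^0\|_2\stackrel{{\rm d}}{=}r_0 d^{-1/2}Z_j$, matching the prescribed initial law for $(r_{1,0},r_{2,0})$ in (\ref{eq:2nd_setting_ODE_r}).

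For the dynamics, the key structural observation is that, by the rotational-invariance calculation (\ref{eq:2nd_setting_grow_bound_SteinEq}) together with the explicit formulas for $\nabla V$ and $\nabla_{1}W$ derived in the proof of Proposition~\ref{prop:2nd_setting_grow_bound}, whenever $\boldsymbol{\theta}=(\|\boldsymbol{\theta}_{[1]}\|_2\boldsymbol{\omega}_1,\|\boldsymbol{\theta}_{[2]}\|_2\boldsymbol{\omega}_2)$ with $\boldsymbol{\omega}_j\in\mathbb{S}^{d_j-1}$, both $\nabla V(\boldsymbol{\theta})_{[j]}$ and $\nabla_{1}W(\boldsymbol{\theta};\rho^t)_{[j]}$ are scalar multiples of $\boldsymbol{\omega}_j$, with the scalars depending only on $(\|\boldsymbol{\theta}_{[1]}\|_2,\|\boldsymbol{\theta}_{[2]}\|_2,\rho_{r}^t)$ through expectations against $\chi=(\chi_1,\chi_2)$. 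Substituting $\|\boldsymbol{\theta}_{[j]}\|_2=r_{j,t}$, the $d$-dimensional ODE (\ref{eq:ODE}) decouples blockwise into scalar equations for $\dot r_{j,t}$, and a direct algebraic identification matches these to the right-hand side of (\ref{eq:2nd_setting_ODE_r}): the diagonal contributions from $\nabla_{1}W$ produce $q_j$ and $\partial_j q_j$ paired with the reconstruction residual $\Delta_j(\chi,\rho_r^t)$, the cross contributions yield the $\Delta_{\neg j}$ terms, and $\nabla V$ together with the $\ell_2$ penalty contributes $-2\lambda r_{j,t}$ along with the $-\chi_j$ piece that closes out the definition of $\Delta_j$.

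For uniqueness we follow the Picard-type coupling strategy from the proof of Proposition~\ref{prop:ReLU_setting_ODE}. Given two weak solutions $(\rho_k^t)_{k=1,2}$ sharing the initial law $\bar{\rho}$ with finite second moment $B_0(\bar{\rho})$, construct coupled trajectories $(\boldsymbol{\theta}_k^t)_{k=1,2}$ with shared initialization $\boldsymbol{\theta}^0\sim\bar{\rho}$. Proposition~\ref{prop:2nd_setting_grow_bound} gives the a priori bound $\mathbb{E}\|\boldsymbol{\theta}_k^t\|_2^2\leq eB_0(\bar{\rho})$ on a short horizon $[0,T]$ with $T\gtrsim 1/(1+B_0(\bar{\rho}))$; Proposition~\ref{prop:2nd_setting_grow_bound_Wrho} supplies a Wasserstein-Lipschitz estimate for $\nabla_{1}W(\boldsymbol{\theta};\cdot)$ in its law argument; and Proposition~\ref{prop:2nd_setting_op_bound} controls the local Lipschitz behavior of $\nabla_{1}U$ through $\nabla_{11}^2 U$ and $\nabla_{12}^2 U$. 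These close a Gronwall inequality on $M_t:=\mathbb{E}\|\boldsymbol{\theta}_1^t-\boldsymbol{\theta}_2^t\|_2^2$ starting from $M_0=0$, forcing $\rho_1^t=\rho_2^t$ weakly on $[0,T]$. The sub-Gaussian moment bound in Proposition~\ref{prop:2nd_setting_ODE_r} keeps $B_0(\rho^t)$ finite along the trajectory, so iterating extends uniqueness to all of $[0,\infty)$. Trajectorial uniqueness of $\hat{\boldsymbol{\theta}}^t$ under the (now unique) $\rho^t$ follows from the standard Lipschitz ODE theory applied to the time-inhomogeneous drift $\boldsymbol{\theta}\mapsto\nabla V(\boldsymbol{\theta})+\nabla_{1}W(\boldsymbol{\theta};\rho^t)$.

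The main obstacle is the bookkeeping in the verification step: one must track how the expectation over the uniform spherical directions inside $\rho^t$ interacts with the Gaussian average over $\boldsymbol{x}$ so as to produce the functions $q_1,q_2$ and their partial derivatives, and in particular ensure that the $\Delta_{\neg j}$ cross terms in (\ref{eq:2nd_setting_ODE_r}) come out of the $\nabla_{1}W$ contribution rather than $\nabla V$ (which is purely blockwise diagonal through $\|\boldsymbol{\Sigma}\boldsymbol{\theta}\|_2^2$). Lemma~\ref{lem:2nd_setting_qBounds} guarantees that every scalar integral appearing in these identifications is finite, and the rotational-invariance identity (\ref{eq:2nd_setting_grow_bound_SteinEq}) collapses the ambient spherical integrals into one-dimensional integrals expressible through $q_1$ and $q_2$, so the matching is mechanical once the correct pieces are lined up.
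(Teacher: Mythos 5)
Your verification and trajectorial-uniqueness steps track the paper's proof closely: you correctly identify that the rotational-invariance identity (\ref{eq:2nd_setting_grow_bound_SteinEq}) plus the explicit formulas for $\nabla V$ and $\nabla_{1}W$ from the proof of Proposition~\ref{prop:2nd_setting_grow_bound} collapse the $d$-dimensional ODE blockwise onto the radial variables $(r_{1,t},r_{2,t})$ and produce precisely the $q_j$, $\partial_j q_j$, $\Delta_j$, $\Delta_{\neg j}$ and $-2\lambda r_{j,t}$ terms of (\ref{eq:2nd_setting_ODE_r}).

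The weak-uniqueness sketch, however, has a structural gap in how it proposes to bound the drift-in-law difference. You invoke Proposition~\ref{prop:2nd_setting_grow_bound_Wrho} to supply the Wasserstein-Lipschitz estimate $\|\nabla_{1}W(\boldsymbol{\theta};\rho_1^t)-\nabla_{1}W(\boldsymbol{\theta};\rho_2^t)\|_2\lesssim\mathscr{W}_2(\rho_{r,1},\rho_{r,2})$, but that proposition is stated only for measures of the form ${\rm Law}(r_1\boldsymbol{\omega}_1,r_2\boldsymbol{\omega}_2)$ with independent spherical directions and bounded radial second moments. An arbitrary competing weak solution $\bar{\rho}^t$ is not a priori of this block-rotationally-invariant form; you would first have to prove that any solution starting from a block-rotationally-invariant initial law stays block-rotationally-invariant, which is a nontrivial equivariance argument you don't make. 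The same objection applies to the a priori moment estimate you attribute to Proposition~\ref{prop:2nd_setting_grow_bound}: its bound $\|\nabla_{1}W(\boldsymbol{\theta};\rho)\|_2\leq C$ is also conditional on $\rho$ having that specific structure.

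The paper's actual proof sidesteps this by running an asymmetric coupling argument: it compares the explicitly constructed $\rho^t$ (whose radial moments are controlled by Proposition~\ref{prop:2nd_setting_ODE_r}) with the unknown $\bar{\rho}^t$, and bounds the law-dependence of the drift via the mean value theorem together with the $\nabla_{12}^2 U$ operator-norm estimate of Proposition~\ref{prop:2nd_setting_op_bound}. That bound holds for \emph{all} $\boldsymbol{\theta},\boldsymbol{\theta}'$ and requires nothing of the competing measure beyond the Cauchy--Schwarz controls that emerge from $M_t\leq 1$; the needed second moments of $\bar{\rho}^t$ are then re-expressed as $\int\|\boldsymbol{\theta}\|_2^2\rho^t + M_t$. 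You do mention $\nabla_{12}^2 U$, so the correct tool is in sight, but the proposal as written leans on the wrong lemma at the key step. Replace the appeal to Proposition~\ref{prop:2nd_setting_grow_bound_Wrho} with the direct $\nabla_{12}^2 U$ coupling estimate, and anchor all moment bounds to the constructed $\rho^t$ rather than to the arbitrary competitor, and the argument closes.
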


\begin{proof}
We decompose the proof into several parts. In the following, we let
$c_{t}$ to be an immaterial positive constant, which may differ at
different instances of use, may depend on time $t$ and $\mathfrak{Dim}$,
and is finite with finite $t$. We shall also reuse several quantities
in the description of $\left(r_{1,t},r_{2,t}\right)_{t\geq0}$ from
the statement of Theorem \ref{thm:bdd_act_setting}. By Proposition
\ref{prop:2nd_setting_ODE_r}, the process $\left(r_{1,t},r_{2,t},\rho_{r}^{t}\right)_{t\geq0}$
exists and is (weakly) unique. Without loss of generality, let us
assume $r_{1,0}=\left\Vert \hat{\boldsymbol{\theta}}_{\left[1\right]}^{0}\right\Vert _{2}$
and $r_{2,0}=\left\Vert \hat{\boldsymbol{\theta}}_{\left[2\right]}^{0}\right\Vert _{2}$.

\paragraph*{Verification of the proposed solution.}

We first check that the constructed $\left(\hat{\boldsymbol{\theta}}^{t},\rho^{t}\right)_{t\geq0}$
is a solution of the ODE (\ref{eq:ODE}). For brevity, let $\boldsymbol{u}_{\left[j\right]}^{t}=\hat{\boldsymbol{\theta}}_{\left[j\right]}^{t}/\left\Vert \hat{\boldsymbol{\theta}}_{\left[j\right]}^{t}\right\Vert _{2}$,
for $j=1,2$. Firstly since $\rho^{0}=\mathsf{N}\left(\boldsymbol{0},r_{0}^{2}\boldsymbol{I}_{d}/d\right)$,
we have $r_{1,0}$, $r_{2,0}$, $\boldsymbol{u}_{\left[1\right]}^{0}$
and $\boldsymbol{u}_{\left[2\right]}^{0}$ are mutually independent.
Furthermore, $\boldsymbol{u}_{\left[1\right]}^{t}=\boldsymbol{u}_{\left[1\right]}^{0}$
and $\boldsymbol{u}_{\left[2\right]}^{t}=\boldsymbol{u}_{\left[2\right]}^{0}$
for all $t\geq0$. It is then easy to see from the dynamics of $r_{1,t}$
and $r_{2,t}$ that $\left(r_{1,t},r_{2,t}\right)_{t\geq0}$, $\left(\boldsymbol{u}_{\left[1\right]}^{t}\right)_{t\geq0}$
and $\left(\boldsymbol{u}_{\left[2\right]}^{t}\right)_{t\geq0}$ are
mutually independent. Note that $\boldsymbol{u}_{\left[1\right]}^{0}\stackrel{{\rm d}}{=}\boldsymbol{\omega}_{1}$
and $\boldsymbol{u}_{\left[2\right]}^{0}\stackrel{{\rm d}}{=}\boldsymbol{\omega}_{2}$
(where we recall $\boldsymbol{\omega}_{1}\sim\text{Unif}\left(\mathbb{S}^{d_{1}-1}\right)$
and $\boldsymbol{\omega}_{2}\sim\text{Unif}\left(\mathbb{S}^{d_{2}-1}\right)$
independently), and $r_{1,t}=\left\Vert \hat{\boldsymbol{\theta}}_{\left[1\right]}^{t}\right\Vert _{2}$,
$r_{2,t}=\left\Vert \hat{\boldsymbol{\theta}}_{\left[2\right]}^{t}\right\Vert _{2}$.
Using these facts, performing a calculation similar to the proof of
Proposition \ref{prop:2nd_setting_grow_bound} (in particular, using
Eq. (\ref{eq:2nd_setting_grow_bound_SteinEq})), we arrive at the
following:
\begin{align*}
\nabla_{1}W\left(\hat{\boldsymbol{\theta}}^{t};\rho^{t}\right) & =\left(\nabla_{1}W\left(\hat{\boldsymbol{\theta}}^{t};\rho^{t}\right)_{\left[1\right]},\quad\nabla_{1}W\left(\hat{\boldsymbol{\theta}}^{t};\rho^{t}\right)_{\left[2\right]}\right),\\
\nabla_{1}W\left(\hat{\boldsymbol{\theta}}^{t};\rho^{t}\right)_{\left[j\right]} & =\boldsymbol{u}_{\left[j\right]}^{0}\int\bar{r}_{j}\mathbb{E}_{\chi}\left\{ \bar{q}_{j}q_{j}^{t}\right\} \rho_{r}^{t}\left({\rm d}\bar{r}_{1},{\rm d}\bar{r}_{2}\right)+\boldsymbol{u}_{\left[j\right]}^{0}r_{j,t}\int\bar{r}_{j}\mathbb{E}_{\chi}\left\{ \chi_{j}\bar{q}_{j}\partial_{j}q_{j}^{t}\right\} \rho_{r}^{t}\left({\rm d}\bar{r}_{1},{\rm d}\bar{r}_{2}\right)\\
 & \quad+\boldsymbol{u}_{\left[j\right]}^{0}r_{\neg j,t}\int\bar{r}_{\neg j}\mathbb{E}_{\chi}\left\{ \chi_{j}\bar{q}_{\neg j}\partial_{j}q_{\neg j}^{t}\right\} \rho_{r}^{t}\left({\rm d}\bar{r}_{1},{\rm d}\bar{r}_{2}\right),\qquad j=1,2,
\end{align*}
Here we have introduced several shortening notations, for $i,j\in\left\{ 1,2\right\} $:
\[
\bar{q}_{j}=\bar{q}_{j}\left(\chi_{1}\bar{r}_{1},\chi_{2}\bar{r}_{2}\right),\qquad q_{j}^{t}=q_{j}\left(\chi_{1}r_{1,t},\chi_{2}r_{2,t}\right),\qquad\partial_{i}q_{j}^{t}=\partial_{i}q_{j}\left(\chi_{1}r_{1,t},\chi_{2}r_{2,t}\right).
\]
Next we derive a compatible form of $\nabla V\left(\boldsymbol{\theta}\right)$.
Notice that $\boldsymbol{x}\stackrel{{\rm d}}{=}\left(\chi_{1}\boldsymbol{\omega}_{1},\chi_{2}\boldsymbol{\omega}_{2}\right)$
where $\boldsymbol{\omega}_{1}$, $\boldsymbol{\omega}_{2}$, $\chi_{1}$
and $\chi_{2}$ are mutually independent. Therefore,
\begin{align*}
V\left(\boldsymbol{\theta}\right) & =\mathbb{E}_{{\cal P}}\left\{ -\left\langle \kappa\boldsymbol{\theta},\boldsymbol{x}\right\rangle \sigma\left(\left\langle \kappa\boldsymbol{\theta},\boldsymbol{x}\right\rangle \right)\right\} +\lambda\left\Vert \boldsymbol{\theta}\right\Vert _{2}^{2}\\
 & =-\mathbb{E}_{\chi,\boldsymbol{\omega}}\left\{ \left(\sum_{j\in\left\{ 1,2\right\} }\kappa\chi_{j}\left\langle \boldsymbol{\theta}_{\left[j\right]},\boldsymbol{\omega}_{j}\right\rangle \right)\sigma\left(\sum_{j\in\left\{ 1,2\right\} }\kappa\chi_{j}\left\langle \boldsymbol{\theta}_{\left[j\right]},\boldsymbol{\omega}_{j}\right\rangle \right)\right\} +\lambda\sum_{j\in\left\{ 1,2\right\} }\left\Vert \boldsymbol{\theta}_{\left[j\right]}\right\Vert _{2}^{2}\\
 & =-\mathbb{E}_{\chi}\left\{ \sum_{j\in\left\{ 1,2\right\} }\chi_{j}\left\Vert \boldsymbol{\theta}_{\left[j\right]}\right\Vert _{2}q_{j}\left(\chi_{1}\left\Vert \boldsymbol{\theta}_{\left[1\right]}\right\Vert _{2},\chi_{2}\left\Vert \boldsymbol{\theta}_{\left[2\right]}\right\Vert _{2}\right)\right\} +\lambda\sum_{j\in\left\{ 1,2\right\} }\left\Vert \boldsymbol{\theta}_{\left[j\right]}\right\Vert _{2}^{2},
\end{align*}
where in the last step, we have performed a calculation similar to
the proof of Proposition \ref{prop:2nd_setting_grow_bound} (in particular,
we use Eq. (\ref{eq:2nd_setting_grow_bound_SteinEq})). This yields:
\begin{align*}
\nabla V\left(\hat{\boldsymbol{\theta}}^{t}\right) & =\left(\nabla V\left(\hat{\boldsymbol{\theta}}^{t}\right)_{\left[1\right]},\quad\nabla V\left(\hat{\boldsymbol{\theta}}^{t}\right)_{\left[2\right]}\right),\\
\nabla V\left(\hat{\boldsymbol{\theta}}^{t}\right)_{\left[j\right]} & =-\mathbb{E}_{\chi}\left\{ \chi_{j}q_{j}^{t}+\chi_{j}^{2}r_{j,t}\partial_{j}q_{j}^{t}+\chi_{j}\chi_{\neg j}r_{\neg j,t}\partial_{j}q_{\neg j}^{t}\right\} \boldsymbol{u}_{\left[j\right]}^{0}+2\lambda r_{j,t}\boldsymbol{u}_{\left[j\right]}^{0},\qquad j=1,2.
\end{align*}
It is then easy to see that:
\[
\nabla V\left(\hat{\boldsymbol{\theta}}^{t}\right)_{\left[j\right]}+\nabla_{1}W\left(\hat{\boldsymbol{\theta}}^{t};\rho^{t}\right)_{\left[j\right]}=-\boldsymbol{u}_{\left[j\right]}^{0}\frac{{\rm d}}{{\rm d}t}r_{j,t}=-\frac{{\rm d}}{{\rm d}t}\hat{\boldsymbol{\theta}}_{\left[j\right]}^{t},\qquad j=1,2.
\]
Therefore $\left(\hat{\boldsymbol{\theta}}^{t},\rho^{t}\right)_{t\geq0}$
is a solution of the ODE (\ref{eq:ODE}).

\paragraph*{Trajectorial uniqueness.}

Next we prove that under the given path $\left(\rho^{t}\right)_{t\geq0}$,
the process $\left(\hat{\boldsymbol{\theta}}^{t}\right)_{t\geq0}$
is the unique trajectorial solution to the ODE (\ref{eq:ODE}) with
initialization $\hat{\boldsymbol{\theta}}^{0}$. By Proposition \ref{prop:2nd_setting_ODE_r},
we have $\int\left(\bar{r}_{1}+\bar{r}_{2}\right)\rho_{r}^{t}\left({\rm d}\bar{r}_{1},{\rm d}\bar{r}_{2}\right)\leq c_{t}$,
and hence by Proposition \ref{prop:2nd_setting_grow_bound}, $\nabla V$
and $\nabla_{1}W\left(\cdot;\rho^{t}\right)$ are both $c_{t}$-Lipschitz.
A standard argument then yields the desired uniqueness.

\paragraph*{Uniqueness in law.}

We now prove that $\left(\rho^{t}\right)_{t\geq0}$ is the unique
weak solution with the initialization $\rho^{0}$. Let $\left(\bar{\rho}^{t}\right)_{t\geq0}$
be another solution with the same initialization $\bar{\rho}^{0}=\rho^{0}$
(with the equalities holding in the weak sense). We define accordingly
two coupled trajectories $\left(\boldsymbol{\theta}^{t}\right)_{t\geq0}$
and $\left(\bar{\boldsymbol{\theta}}^{t}\right)_{t\geq0}$ with the
same initialization $\boldsymbol{\theta}^{0}=\bar{\boldsymbol{\theta}}^{0}\sim\rho^{0}$:
\begin{align*}
\frac{{\rm d}}{{\rm d}t}\boldsymbol{\theta}^{t} & =-\nabla V\left(\boldsymbol{\theta}^{t}\right)-\nabla_{1}W\left(\boldsymbol{\theta}^{t};\rho^{t}\right),\qquad\rho^{t}={\rm Law}\left(\boldsymbol{\theta}^{t}\right),\\
\frac{{\rm d}}{{\rm d}t}\bar{\boldsymbol{\theta}}^{t} & =-\nabla V\left(\bar{\boldsymbol{\theta}}^{t}\right)-\nabla_{1}W\left(\bar{\boldsymbol{\theta}}^{t};\bar{\rho}^{t}\right),\qquad\bar{\rho}^{t}={\rm Law}\left(\bar{\boldsymbol{\theta}}^{t}\right).
\end{align*}
We examine the distance between these two trajectories:
\begin{align*}
\frac{{\rm d}}{{\rm d}t}\left\Vert \boldsymbol{\theta}^{t}-\bar{\boldsymbol{\theta}}^{t}\right\Vert _{2} & \leq\left\Vert \nabla V\left(\boldsymbol{\theta}^{t}\right)-\nabla V\left(\bar{\boldsymbol{\theta}}^{t}\right)\right\Vert _{2}+\left\Vert \nabla_{1}W\left(\boldsymbol{\theta}^{t};\rho^{t}\right)-\nabla_{1}W\left(\bar{\boldsymbol{\theta}}^{t};\rho^{t}\right)\right\Vert _{2}\\
 & \qquad+\left\Vert \nabla_{1}W\left(\bar{\boldsymbol{\theta}}^{t};\rho^{t}\right)-\nabla_{1}W\left(\bar{\boldsymbol{\theta}}^{t};\bar{\rho}^{t}\right)\right\Vert _{2}.
\end{align*}
Define $M_{t}=\mathbb{E}_{\boldsymbol{\theta}}\left\{ \left\Vert \boldsymbol{\theta}^{t}-\bar{\boldsymbol{\theta}}^{t}\right\Vert _{2}^{2}\right\} $,
and note that $M_{0}=0$. By Propositions \ref{prop:2nd_setting_grow_bound},
\ref{prop:2nd_setting_op_bound} and \ref{prop:2nd_setting_ODE_r},
along with the mean value theorem,
\begin{align*}
\left\Vert \nabla V\left(\boldsymbol{\theta}^{t}\right)-\nabla V\left(\bar{\boldsymbol{\theta}}^{t}\right)\right\Vert _{2} & \leq C\left\Vert \boldsymbol{\theta}^{t}-\bar{\boldsymbol{\theta}}^{t}\right\Vert _{2},\\
\left\Vert \nabla_{1}W\left(\boldsymbol{\theta}^{t};\rho^{t}\right)-\nabla_{1}W\left(\bar{\boldsymbol{\theta}}^{t};\rho^{t}\right)\right\Vert _{2} & \leq\int\left\Vert \nabla_{1}U\left(\boldsymbol{\theta}^{t},\boldsymbol{\theta}\right)-\nabla_{1}U\left(\bar{\boldsymbol{\theta}}^{t},\boldsymbol{\theta}\right)\right\Vert _{2}\rho^{t}\left({\rm d}\boldsymbol{\theta}\right)\\
 & \stackrel{\left(a\right)}{\leq}\int\left\Vert \nabla_{11}^{2}U\left(\boldsymbol{\zeta}_{1},\boldsymbol{\theta}\right)\right\Vert _{{\rm op}}\left\Vert \boldsymbol{\theta}^{t}-\bar{\boldsymbol{\theta}}^{t}\right\Vert _{2}\rho^{t}\left({\rm d}\boldsymbol{\theta}\right)\\
 & \leq c_{t}\left\Vert \boldsymbol{\theta}^{t}-\bar{\boldsymbol{\theta}}^{t}\right\Vert _{2}\int\left\Vert \boldsymbol{\theta}\right\Vert _{2}\rho^{t}\left({\rm d}\boldsymbol{\theta}\right)\\
 & \leq c_{t}\left\Vert \boldsymbol{\theta}^{t}-\bar{\boldsymbol{\theta}}^{t}\right\Vert _{2}\int\left(\bar{r}_{1}+\bar{r}_{2}\right)\rho_{r}^{t}\left({\rm d}\bar{r}_{1},{\rm d}\bar{r}_{2}\right)\\
 & \leq c_{t}\left\Vert \boldsymbol{\theta}^{t}-\bar{\boldsymbol{\theta}}^{t}\right\Vert _{2},\\
\left\Vert \nabla_{1}W\left(\bar{\boldsymbol{\theta}}^{t};\rho^{t}\right)-\nabla_{1}W\left(\bar{\boldsymbol{\theta}}^{t};\bar{\rho}^{t}\right)\right\Vert _{2} & \stackrel{\left(b\right)}{=}\left\Vert \mathbb{E}_{\tilde{\boldsymbol{\theta}}}\left\{ \nabla_{1}U\left(\bar{\boldsymbol{\theta}}^{t},\tilde{\boldsymbol{\theta}}_{2}\right)-\nabla_{1}U\left(\bar{\boldsymbol{\theta}}^{t},\tilde{\boldsymbol{\theta}}_{1}\right)\right\} \right\Vert _{2}\\
 & \stackrel{\left(c\right)}{\leq}\mathbb{E}_{\tilde{\boldsymbol{\theta}}}\left\{ \left\Vert \nabla_{12}^{2}U\left(\bar{\boldsymbol{\theta}}^{t},\boldsymbol{\zeta}_{2}\right)\right\Vert _{{\rm op}}\left\Vert \tilde{\boldsymbol{\theta}}_{2}-\tilde{\boldsymbol{\theta}}_{1}\right\Vert _{2}\right\} \\
 & \leq c_{t}\left(1+\left\Vert \bar{\boldsymbol{\theta}}^{t}\right\Vert _{2}\right)\mathbb{E}_{\tilde{\boldsymbol{\theta}}}\left\{ \left(1+\left\Vert \boldsymbol{\zeta}_{2}\right\Vert _{2}\right)\left\Vert \tilde{\boldsymbol{\theta}}_{2}-\tilde{\boldsymbol{\theta}}_{1}\right\Vert _{2}\right\} \\
 & \leq c_{t}\left(1+\left\Vert \bar{\boldsymbol{\theta}}^{t}\right\Vert _{2}\right)\mathbb{E}_{\tilde{\boldsymbol{\theta}}}\left\{ \left(1+\left\Vert \tilde{\boldsymbol{\theta}}_{1}\right\Vert _{2}\right)\left\Vert \tilde{\boldsymbol{\theta}}_{2}-\tilde{\boldsymbol{\theta}}_{1}\right\Vert _{2}+\left\Vert \tilde{\boldsymbol{\theta}}_{2}-\tilde{\boldsymbol{\theta}}_{1}\right\Vert _{2}^{2}\right\} \\
 & \leq c_{t}\left(1+\left\Vert \bar{\boldsymbol{\theta}}^{t}\right\Vert _{2}\right)\left(\sqrt{\mathbb{E}_{\tilde{\boldsymbol{\theta}}}\left\{ 1+\left\Vert \tilde{\boldsymbol{\theta}}_{1}\right\Vert _{2}^{2}\right\} \mathbb{E}_{\tilde{\boldsymbol{\theta}}}\left\{ \left\Vert \tilde{\boldsymbol{\theta}}_{2}-\tilde{\boldsymbol{\theta}}_{1}\right\Vert _{2}^{2}\right\} }+M_{t}\right)\\
 & =c_{t}\left(1+\left\Vert \bar{\boldsymbol{\theta}}^{t}\right\Vert _{2}\right)\left(\sqrt{\left(1+\int\left(\bar{r}_{1}^{2}+\bar{r}_{2}^{2}\right)\rho_{r}^{t}\left({\rm d}\bar{r}_{1},{\rm d}\bar{r}_{2}\right)\right)M_{t}}+M_{t}\right)\\
 & \leq c_{t}\left(1+\left\Vert \bar{\boldsymbol{\theta}}^{t}\right\Vert _{2}\right)\left(\sqrt{M_{t}}+M_{t}\right),
\end{align*}
where in step $\left(a\right)$, $\boldsymbol{\zeta}_{1}\in\left[\boldsymbol{\theta}_{1}^{t},\boldsymbol{\theta}_{2}^{t}\right]$;
in step $\left(b\right)$, we define $\left(\tilde{\boldsymbol{\theta}}_{1},\tilde{\boldsymbol{\theta}}_{2}\right)\stackrel{{\rm d}}{=}\left(\boldsymbol{\theta}^{t},\bar{\boldsymbol{\theta}}^{t}\right)$
and $\left(\tilde{\boldsymbol{\theta}}_{1},\tilde{\boldsymbol{\theta}}_{2}\right)$
is independent of $\left(\boldsymbol{\theta}_{1}^{t},\boldsymbol{\theta}_{2}^{t}\right)$;
in step $\left(c\right)$, $\boldsymbol{\zeta}_{2}\in\left[\tilde{\boldsymbol{\theta}}_{1},\tilde{\boldsymbol{\theta}}_{2}\right]$
and hence $\left\Vert \boldsymbol{\zeta}_{2}\right\Vert _{2}\leq\left\Vert \tilde{\boldsymbol{\theta}}_{1}\right\Vert _{2}+\left\Vert \tilde{\boldsymbol{\theta}}_{2}-\tilde{\boldsymbol{\theta}}_{1}\right\Vert _{2}$.
These bounds imply that
\begin{align*}
\frac{{\rm d}}{{\rm d}t}\left\Vert \boldsymbol{\theta}_{1}^{t}-\boldsymbol{\theta}_{2}^{t}\right\Vert _{2}^{2} & \leq c_{t}\left\Vert \boldsymbol{\theta}^{t}-\bar{\boldsymbol{\theta}}^{t}\right\Vert _{2}^{2}+c_{t}\left(1+\left\Vert \bar{\boldsymbol{\theta}}^{t}\right\Vert _{2}\right)\left\Vert \boldsymbol{\theta}^{t}-\bar{\boldsymbol{\theta}}^{t}\right\Vert _{2}\left(\sqrt{M_{t}}+M_{t}\right)\\
 & \leq c_{t}\left\Vert \boldsymbol{\theta}^{t}-\bar{\boldsymbol{\theta}}^{t}\right\Vert _{2}^{2}+c_{t}\left(1+\left\Vert \boldsymbol{\theta}^{t}\right\Vert _{2}+\left\Vert \boldsymbol{\theta}^{t}-\bar{\boldsymbol{\theta}}^{t}\right\Vert _{2}\right)\left\Vert \boldsymbol{\theta}^{t}-\bar{\boldsymbol{\theta}}^{t}\right\Vert _{2}\left(\sqrt{M_{t}}+M_{t}\right).
\end{align*}
Taking expectation, by Proposition \ref{prop:2nd_setting_ODE_r},
we obtain that for any $T\geq0$,
\begin{align*}
\frac{{\rm d}}{{\rm d}t}M_{t} & \leq c_{t}M_{t}+c_{t}\sqrt{\left(1+\int\left\Vert \boldsymbol{\theta}\right\Vert _{2}^{2}\rho^{t}\left({\rm d}\boldsymbol{\theta}\right)+M_{t}\right)M_{t}}\left(\sqrt{M_{t}}+M_{t}\right)\\
 & =c_{t}M_{t}+c_{t}\sqrt{\left(1+\int\left(\bar{r}_{1}^{2}+\bar{r}_{2}^{2}\right)\rho_{r}^{t}\left({\rm d}\bar{r}_{1},{\rm d}\bar{r}_{2}\right)+M_{t}\right)M_{t}}\left(\sqrt{M_{t}}+M_{t}\right)\\
 & \leq c_{t}M_{t}+c_{t}\sqrt{\left(1+M_{t}\right)M_{t}}\left(\sqrt{M_{t}}+M_{t}\right)\\
 & \leq c_{T}M_{t}
\end{align*}
for $t\leq T$ and $t<t_{*}$ with $t_{*}=\inf\left\{ t\geq0:\;M_{t}>1\right\} $.
Since $M_{0}=0$ and $M_{t}\geq0$, Gronwall's lemma then implies
that $t_{*}>T$ and $M_{t}=0$ for all $t\leq T$. Since this is satisfied
for any $T\geq0$, we have $M_{t}=0$ for all $t\geq0$. Note that
$M_{t}=0$ implies, for any $1$-Lipschitz test function $\phi:\;\mathbb{R}^{d}\to\mathbb{R}$,
\[
\left|\int\phi\left(\boldsymbol{\theta}\right)\bar{\rho}^{t}\left({\rm d}\boldsymbol{\theta}\right)-\int\phi\left(\boldsymbol{\theta}\right)\rho^{t}\left({\rm d}\boldsymbol{\theta}\right)\right|\leq\inf_{\boldsymbol{\theta}_{a}\sim\bar{\rho}^{t},\;\boldsymbol{\theta}_{b}\sim\rho^{t}}\mathbb{E}\left\{ \left\Vert \boldsymbol{\theta}_{a}-\boldsymbol{\theta}_{b}\right\Vert _{2}\right\} \leq\mathbb{E}_{\boldsymbol{\theta}}\left\{ \left\Vert \boldsymbol{\theta}^{t}-\bar{\boldsymbol{\theta}}^{t}\right\Vert _{2}\right\} \leq\sqrt{M_{t}}=0.
\]
This proves weak uniqueness of the solution $\left(\rho^{t}\right)_{t\geq0}$
with initialization $\rho^{0}$.

\end{proof}
\begin{prop}
\label{prop:2nd_setting_F_bound}Consider setting \ref{enu:bdd_act_setting}.
For a collection of vectors $\Theta=\left(\boldsymbol{\theta}_{i}\right)_{i\leq N}$
where $\boldsymbol{\theta}_{i}\in\mathbb{R}^{d}$, $\boldsymbol{x}\sim{\cal P}$
and $\boldsymbol{z}=\left(\boldsymbol{x},\boldsymbol{x}\right)$,
we have $\boldsymbol{F}_{i}\left(\Theta;\boldsymbol{z}\right)$ is
sub-exponential with $\psi_{1}$-norm: 
\[
\left\Vert \boldsymbol{F}_{i}\left(\Theta;\boldsymbol{z}\right)\right\Vert _{\psi_{1}}\leq C\kappa^{2}\left(\left\Vert \boldsymbol{\theta}_{i}\right\Vert _{2}+1\right)\left(\sqrt{\frac{1}{N}\sum_{j=1}^{N}\left\Vert \boldsymbol{\theta}_{j}\right\Vert _{2}^{2}}+1\right).
\]
\end{prop}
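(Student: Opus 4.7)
The plan is to follow the same strategy as Proposition \ref{prop:1st_setting_F_bound}, adapting the key estimates to exploit the boundedness of $\sigma$ and $\sigma'$ in setting \ref{enu:bdd_act_setting} rather than the homogeneity of the ReLU. Recalling $\sigma_{*}(\boldsymbol{x};\kappa\boldsymbol{\theta})=\kappa\boldsymbol{\theta}\sigma(\langle\kappa\boldsymbol{\theta},\boldsymbol{x}\rangle)$, $\Lambda(\boldsymbol{\theta},\boldsymbol{z})=\lambda\|\boldsymbol{\theta}\|_{2}^{2}$, and $\boldsymbol{y}=\boldsymbol{x}$, for a fixed test vector $\boldsymbol{v}\in\mathbb{S}^{d-1}$ the chain rule gives
\[
\langle\boldsymbol{v},\boldsymbol{F}_{i}(\Theta;\boldsymbol{z})\rangle=\underbrace{\kappa\sigma(\langle\kappa\boldsymbol{\theta}_{i},\boldsymbol{x}\rangle)\,(\langle\boldsymbol{v},\hat{\boldsymbol{x}}\rangle-\langle\boldsymbol{v},\boldsymbol{x}\rangle)}_{A_{1}}+\underbrace{\kappa^{2}\sigma'(\langle\kappa\boldsymbol{\theta}_{i},\boldsymbol{x}\rangle)\,\langle\boldsymbol{\theta}_{i},\hat{\boldsymbol{x}}-\boldsymbol{x}\rangle\,\langle\boldsymbol{v},\boldsymbol{x}\rangle}_{A_{2}}+\underbrace{2\lambda\langle\boldsymbol{v},\boldsymbol{\theta}_{i}\rangle}_{A_{3}},
\]
where $\hat{\boldsymbol{x}}=\frac{1}{N}\sum_{j=1}^{N}\kappa\boldsymbol{\theta}_{j}\sigma(\langle\kappa\boldsymbol{\theta}_{j},\boldsymbol{x}\rangle)$. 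I will bound $\|A_{1}\|_{\psi_{1}}$, $\|A_{2}\|_{\psi_{1}}$, $\|A_{3}\|_{\psi_{1}}$ separately and then take the supremum over $\boldsymbol{v}$ using $\|\boldsymbol{F}_{i}\|_{\psi_{1}}=\sup_{\boldsymbol{v}\in\mathbb{S}^{d-1}}\|\langle\boldsymbol{v},\boldsymbol{F}_{i}\rangle\|_{\psi_{1}}$.

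The key elementary inputs, all available in setting \ref{enu:bdd_act_setting}, are: (i) the deterministic bounds $|\kappa\sigma(\cdot)|\leq C\kappa$ and $|\kappa^{2}\sigma'(\cdot)|\leq C\kappa^{2}$; (ii) since $\|\sigma\|_{\infty}\leq C$, for any $\boldsymbol{u}\in\mathbb{R}^{d}$ Cauchy--Schwarz yields the deterministic estimate
\[
|\langle\boldsymbol{u},\hat{\boldsymbol{x}}\rangle|\leq\frac{C\kappa}{N}\sum_{j=1}^{N}|\langle\boldsymbol{u},\boldsymbol{\theta}_{j}\rangle|\leq C\kappa\,\|\boldsymbol{u}\|_{2}\sqrt{\mathsf{M}},\qquad\mathsf{M}:=\frac{1}{N}\sum_{j=1}^{N}\|\boldsymbol{\theta}_{j}\|_{2}^{2};
\]
and (iii) since $\boldsymbol{x}\sim\mathsf{N}(\boldsymbol{0},\boldsymbol{\Sigma}^{2}/d)$ with $\|\boldsymbol{\Sigma}\|_{\rm op}\leq C$, for any $\boldsymbol{u}\in\mathbb{R}^{d}$ the scalar $\langle\boldsymbol{u},\boldsymbol{x}\rangle$ is sub-Gaussian with $\|\langle\boldsymbol{u},\boldsymbol{x}\rangle\|_{\psi_{2}}\leq C\|\boldsymbol{u}\|_{2}/\kappa$. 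Assembling, $A_{1}$ is the product of the bounded $|\kappa\sigma|\lesssim\kappa$ and a sum of a deterministic term (by (ii) with $\boldsymbol{u}=\boldsymbol{v}$) and a sub-Gaussian of $\psi_{2}$-norm $\leq C/\kappa$ (by (iii)), so $\|A_{1}\|_{\psi_{1}}\lesssim\kappa^{2}\sqrt{\mathsf{M}}+C$. For $A_{2}$, multiplying the sub-Gaussian $\langle\boldsymbol{v},\boldsymbol{x}\rangle$ against the deterministic part $\kappa^{2}\sigma'\langle\boldsymbol{\theta}_{i},\hat{\boldsymbol{x}}\rangle$ (bounded by $C\kappa^{3}\|\boldsymbol{\theta}_{i}\|_{2}\sqrt{\mathsf{M}}$ via (ii) with $\boldsymbol{u}=\boldsymbol{\theta}_{i}$) stays sub-Gaussian with $\psi_{2}\lesssim\kappa^{2}\|\boldsymbol{\theta}_{i}\|_{2}\sqrt{\mathsf{M}}$, and multiplying against the other sub-Gaussian $\langle\boldsymbol{\theta}_{i},\boldsymbol{x}\rangle$ (with $\psi_{2}\leq C\|\boldsymbol{\theta}_{i}\|_{2}/\kappa$) is sub-exponential with $\psi_{1}\lesssim\|\boldsymbol{\theta}_{i}\|_{2}$ by Lemma \ref{lem:subgauss_subexp_properties}. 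Finally, $|A_{3}|\leq 2\lambda\|\boldsymbol{\theta}_{i}\|_{2}\leq C\|\boldsymbol{\theta}_{i}\|_{2}$ is deterministic.

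Combining these three bounds (and bounding $\|\cdot\|_{\psi_{1}}\leq C\|\cdot\|_{\psi_{2}}$ where applicable) yields
\[
\|\langle\boldsymbol{v},\boldsymbol{F}_{i}(\Theta;\boldsymbol{z})\rangle\|_{\psi_{1}}\leq C\kappa^{2}(\|\boldsymbol{\theta}_{i}\|_{2}+1)(\sqrt{\mathsf{M}}+1),
\]
uniformly in $\boldsymbol{v}\in\mathbb{S}^{d-1}$, which is the claim. I do not anticipate any serious obstacle here; the only subtlety compared with Proposition \ref{prop:1st_setting_F_bound} is that the sub-Gaussian bound $\|\sigma(\langle\kappa\boldsymbol{\theta}_{j},\boldsymbol{x}\rangle)\|_{\psi_{2}}\leq C\|\boldsymbol{\theta}_{j}\|_{2}$ available for the ReLU is replaced by the uniform bound $|\sigma|\leq C$, which forces the use of Cauchy--Schwarz on $\frac{1}{N}\sum_{j}|\langle\boldsymbol{u},\boldsymbol{\theta}_{j}\rangle|$ rather than on $\frac{1}{N}\sum_{j}\|\boldsymbol{\theta}_{j}\|_{2}^{2}$, and this is precisely what produces $\sqrt{\mathsf{M}}$ in the present bound in place of the $\mathsf{M}$ that appears in Proposition \ref{prop:1st_setting_F_bound}.
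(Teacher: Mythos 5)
Your proof is correct and follows essentially the same approach as the paper's: same decomposition into $A_1$, $A_2$, $A_3$, same use of the deterministic bound on $\hat{\boldsymbol{x}}$ from $\|\sigma\|_\infty\leq C$, and the same sub-Gaussian/sub-exponential bookkeeping. The only cosmetic difference is that you apply Cauchy--Schwarz on $\frac{1}{N}\sum_j|\langle\boldsymbol{u},\boldsymbol{\theta}_j\rangle|$ immediately to produce $\sqrt{\mathsf{M}}$, while the paper carries $\frac{1}{N}\sum_j\|\boldsymbol{\theta}_j\|_2$ through the intermediate estimates and converts it to $\sqrt{\mathsf{M}}$ only in the last line; the two are equivalent.
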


\begin{proof}
Consider a fixed vector $\boldsymbol{v}\in\mathbb{S}^{d-1}$:
\begin{align*}
\left\langle \boldsymbol{v},\boldsymbol{F}_{i}\left(\Theta;\boldsymbol{z}\right)\right\rangle  & =\kappa\left\langle \boldsymbol{v},\nabla_{2}\sigma_{*}\left(\boldsymbol{x};\kappa\boldsymbol{\theta}_{i}\right)^{\top}\left(\hat{\boldsymbol{y}}_{N}\left(\boldsymbol{x};\Theta\right)-\boldsymbol{x}\right)\right\rangle +\lambda\left\langle \boldsymbol{v},\nabla_{1}\Lambda\left(\boldsymbol{\theta}_{i},\boldsymbol{z}\right)\right\rangle \\
 & =\kappa\sigma\left(\left\langle \kappa\boldsymbol{\theta}_{i},\boldsymbol{x}\right\rangle \right)\left(\left\langle \boldsymbol{v},\hat{\boldsymbol{x}}\right\rangle -\left\langle \boldsymbol{v},\boldsymbol{x}\right\rangle \right)+\kappa^{2}\sigma'\left(\left\langle \kappa\boldsymbol{\theta}_{i},\boldsymbol{x}\right\rangle \right)\left(\left\langle \boldsymbol{\theta}_{i},\hat{\boldsymbol{x}}\right\rangle -\left\langle \boldsymbol{\theta}_{i},\boldsymbol{x}\right\rangle \right)\left\langle \boldsymbol{v},\boldsymbol{x}\right\rangle +2\lambda\left\langle \boldsymbol{v},\boldsymbol{\theta}_{i}\right\rangle \\
 & \equiv A_{1}+A_{2}+A_{3},
\end{align*}
where we denote $\hat{\boldsymbol{x}}=\left(1/N\right)\cdot\sum_{j=1}^{N}\kappa\boldsymbol{\theta}_{j}\sigma\left(\left\langle \kappa\boldsymbol{\theta}_{j},\boldsymbol{x}\right\rangle \right)$
for brevity. We examine each component in the above:
\begin{itemize}
\item Since $\left\Vert \sigma\right\Vert _{\infty}\leq C$, for any $\boldsymbol{u}\in\mathbb{R}^{d}$,
$\left\langle \boldsymbol{u},\hat{\boldsymbol{x}}\right\rangle $
is sub-Gaussian with $\psi_{2}$-norm 
\[
\left\Vert \left\langle \boldsymbol{u},\hat{\boldsymbol{x}}\right\rangle \right\Vert _{\psi_{2}}\leq C\frac{\kappa}{N}\sum_{j=1}^{N}\left|\left\langle \boldsymbol{u},\boldsymbol{\theta}_{j}\right\rangle \right|\leq C\kappa\left\Vert \boldsymbol{u}\right\Vert _{2}\frac{1}{N}\sum_{j=1}^{N}\left\Vert \boldsymbol{\theta}_{j}\right\Vert _{2}.
\]
We have $\left\langle \kappa\boldsymbol{u},\boldsymbol{x}\right\rangle $
is sub-Gaussian with $\psi_{2}$-norm $\left\Vert \left\langle \kappa\boldsymbol{u},\boldsymbol{x}\right\rangle \right\Vert _{\psi_{2}}=\left\Vert \boldsymbol{\Sigma}\boldsymbol{u}\right\Vert _{2}\leq C\left\Vert \boldsymbol{u}\right\Vert _{2}$.
Therefore, $A_{1}$ is sub-Gaussian:
\[
\left\Vert A_{1}\right\Vert _{\psi_{2}}\leq C\kappa\left(\left\Vert \left\langle \boldsymbol{v},\hat{\boldsymbol{x}}\right\rangle \right\Vert _{\psi_{2}}+\left\Vert \left\langle \boldsymbol{v},\boldsymbol{x}\right\rangle \right\Vert _{\psi_{2}}\right)\leq C\kappa\left(\frac{1}{N}\sum_{j=1}^{N}\left\Vert \boldsymbol{\theta}_{j}\right\Vert _{2}+1\right).
\]
\item Since $\left\Vert \sigma'\right\Vert _{\infty}\leq C$, $A_{2}$ is
sub-exponential:
\begin{align*}
\left\Vert A_{2}\right\Vert _{\psi_{1}} & \leq C\kappa\left(\left\Vert \left\langle \boldsymbol{\theta}_{i},\hat{\boldsymbol{x}}\right\rangle \right\Vert _{\psi_{2}}+\left\Vert \left\langle \boldsymbol{\theta}_{i},\boldsymbol{x}\right\rangle \right\Vert _{\psi_{2}}\right)\left\Vert \left\langle \kappa\boldsymbol{v},\boldsymbol{x}\right\rangle \right\Vert _{\psi_{2}}\\
 & \leq C\kappa\left(\kappa\left\Vert \boldsymbol{\theta}_{i}\right\Vert _{2}\frac{1}{N}\sum_{j=1}^{N}\left\Vert \boldsymbol{\theta}_{j}\right\Vert _{2}+\frac{1}{\kappa}\left\Vert \boldsymbol{\theta}_{i}\right\Vert _{2}\right)\leq C\kappa^{2}\left\Vert \boldsymbol{\theta}_{i}\right\Vert _{2}\left(\frac{1}{N}\sum_{j=1}^{N}\left\Vert \boldsymbol{\theta}_{j}\right\Vert _{2}+1\right).
\end{align*}
\item $A_{3}$ is a constant and so it is sub-exponential with $\psi_{1}$-norm
$\left\Vert A_{3}\right\Vert _{\psi_{1}}\leq C\left\Vert \boldsymbol{\theta}_{i}\right\Vert _{2}$.
\end{itemize}
We have $\left\langle \boldsymbol{v},\boldsymbol{F}_{i}\left(\Theta;\boldsymbol{z}\right)\right\rangle $
and hence $\boldsymbol{F}_{i}\left(\Theta;\boldsymbol{z}\right)$
are sub-exponential:
\begin{align*}
\left\Vert \boldsymbol{F}_{i}\left(\Theta;\boldsymbol{z}\right)\right\Vert _{\psi_{1}} & =\sup_{\boldsymbol{v}\in\mathbb{S}^{d-1}}\left\Vert \left\langle \boldsymbol{v},\boldsymbol{F}_{i}\left(\Theta;\boldsymbol{z}\right)\right\rangle \right\Vert _{\psi_{1}}\leq C\kappa^{2}\left(\left\Vert \boldsymbol{\theta}_{i}\right\Vert _{2}+1\right)\left(\frac{1}{N}\sum_{j=1}^{N}\left\Vert \boldsymbol{\theta}_{j}\right\Vert _{2}+1\right)\\
 & \leq C\kappa^{2}\left(\left\Vert \boldsymbol{\theta}_{i}\right\Vert _{2}+1\right)\left(\sqrt{\frac{1}{N}\sum_{j=1}^{N}\left\Vert \boldsymbol{\theta}_{j}\right\Vert _{2}^{2}}+1\right).
\end{align*}
This completes the proof.
\end{proof}
\begin{lem}
\label{lem:Unorm_2ndSetting}Consider setting \ref{enu:bdd_act_setting}.
Let $\rho={\rm Law}\left(r_{1}\boldsymbol{\omega}_{1},r_{2}\boldsymbol{\omega}_{2}\right)$
in which $\left(r_{1},r_{2}\right)$, $\boldsymbol{\omega}_{1}$ and
$\boldsymbol{\omega}_{2}$ are mutually independent and $\left(r_{1},r_{2}\right)\sim\rho_{r}$
such that $r_{1}$ and $r_{2}$ are non-negative and marginally $C$-sub-Gaussian.
We have, for some sufficiently large $C_{*}$, with probability at
least $1-C\exp\left(Cd-CN/\kappa^{4}\right)$,
\[
\left\Vert \frac{1}{N}\sum_{i=1}^{N}\nabla_{11}^{2}U\left(\boldsymbol{\zeta},\boldsymbol{\theta}_{i}\right)\right\Vert _{{\rm op}}\leq C_{*},
\]
in which $\boldsymbol{\zeta}$ is a fixed vector with $\left\Vert \boldsymbol{\zeta}\right\Vert _{2}<\infty$,
and $\left(\boldsymbol{\theta}_{i}\right)_{i\leq N}\sim_{{\rm i.i.d.}}\rho$.
Here $C_{*}$ does not depend on $d$ or $N$.
\end{lem}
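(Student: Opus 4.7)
}

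The plan is to decompose the empirical average into its mean plus a fluctuation, control each separately, and combine via an epsilon-net over the unit sphere. Concretely, write
\[
\frac{1}{N}\sum_{i=1}^{N}\nabla_{11}^{2}U\left(\boldsymbol{\zeta},\boldsymbol{\theta}_{i}\right)=\boldsymbol{A}+\boldsymbol{B}_{N},\qquad\boldsymbol{A}=\nabla_{11}^{2}W\left(\boldsymbol{\zeta};\rho\right),\qquad\boldsymbol{B}_{N}=\frac{1}{N}\sum_{i=1}^{N}\left[\nabla_{11}^{2}U\left(\boldsymbol{\zeta},\boldsymbol{\theta}_{i}\right)-\boldsymbol{A}\right],
\]
and bound $\left\Vert \boldsymbol{A}\right\Vert _{{\rm op}}$ and $\left\Vert \boldsymbol{B}_{N}\right\Vert _{{\rm op}}$ separately.

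First, I would handle the deterministic mean term $\boldsymbol{A}$. By the explicit form of $\nabla_{11}^{2}W\left(\boldsymbol{\theta};\rho\right)$ derived in the proof of Proposition \ref{prop:2nd_setting_grow_bound} (writing out its block structure and expanding in terms of the $q_{j}$'s and their derivatives evaluated at the independent radii), and invoking Lemma \ref{lem:2nd_setting_qBounds} together with the sub-Gaussian second-moment bound $\int(r_{1}^{2}+r_{2}^{2})\rho_{r}({\rm d}r_{1},{\rm d}r_{2})\le C$ (which follows from the sub-Gaussianity assumption on $r_{1},r_{2}$), I would conclude $\left\Vert \boldsymbol{A}\right\Vert _{{\rm op}}\le C$ for a $\mathfrak{Dim}$-independent $C$. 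This is essentially a bookkeeping exercise that mirrors the Hessian calculation already carried out for $W$.

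Next, I would control the fluctuation $\boldsymbol{B}_{N}$ by a pointwise-plus-epsilon-net argument. For any fixed $\boldsymbol{u},\boldsymbol{v}\in\mathbb{S}^{d-1}$, set $X_{i}=\langle\boldsymbol{u},\nabla_{11}^{2}U(\boldsymbol{\zeta},\boldsymbol{\theta}_{i})\boldsymbol{v}\rangle$. By Proposition \ref{prop:2nd_setting_op_bound},
\[
\left|X_{i}\right|\le\left\Vert \nabla_{11}^{2}U\left(\boldsymbol{\zeta},\boldsymbol{\theta}_{i}\right)\right\Vert _{{\rm op}}\le C\kappa^{2}\left\Vert \boldsymbol{\theta}_{i}\right\Vert _{2}.
\]
Since $\left\Vert \boldsymbol{\theta}_{i}\right\Vert _{2}^{2}=r_{1}^{2}+r_{2}^{2}$ with $r_{1},r_{2}$ marginally $C$-sub-Gaussian and non-negative, $\left\Vert \boldsymbol{\theta}_{i}\right\Vert _{2}$ is $C$-sub-Gaussian, so $X_{i}$ is sub-Gaussian with $\left\Vert X_{i}\right\Vert _{\psi_{2}}\le C\kappa^{2}$. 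Because the $X_{i}$ are i.i.d., Lemma \ref{lem:subgauss_subexp_properties} (Hoeffding-type concentration) gives, for any $\delta>0$,
\[
\mathbb{P}\left\{ \left|\frac{1}{N}\sum_{i=1}^{N}\left(X_{i}-\mathbb{E}X_{i}\right)\right|\ge\delta\right\} \le2\exp\left(-CN\delta^{2}/\kappa^{4}\right).
\]
A standard discretization (an epsilon-net $\mathcal{N}\subset\mathbb{S}^{d-1}$ with mesh $1/3$ and $|\mathcal{N}|\le9^{d}$, giving $\left\Vert \boldsymbol{B}_{N}\right\Vert _{{\rm op}}\le3\max_{\boldsymbol{u},\boldsymbol{v}\in\mathcal{N}}|\langle\boldsymbol{u},\boldsymbol{B}_{N}\boldsymbol{v}\rangle|$) together with a union bound over $|\mathcal{N}|^{2}\le9^{2d}$ pairs then yields
\[
\mathbb{P}\left\{ \left\Vert \boldsymbol{B}_{N}\right\Vert _{{\rm op}}\ge\delta\right\} \le C\exp\left(Cd-CN\delta^{2}/\kappa^{4}\right).
\]
Combining the two parts and choosing $\delta$ a suitable constant yields the claim with constant $C_{*}=\|\boldsymbol{A}\|_{{\rm op}}+\delta$. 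The main technical obstacle is the first step: carefully grouping the many terms in $\nabla_{11}^{2}W(\boldsymbol{\zeta};\rho)$ (there are blockwise diagonal, cross-block, and $\partial_{i}q_{j}$, $\partial_{ij}^{2}q_{j}$ contributions analogous to those enumerated in the proof of Proposition \ref{prop:2nd_setting_grow_bound}) and verifying that the $\chi_{i}/\chi_{j}$ ratio terms that naturally arise are still $L^{2}$-integrable in $\chi$ so that the uniform $\mathfrak{Dim}$-independent bound on $\|\boldsymbol{A}\|_{{\rm op}}$ actually holds; the concentration half is routine given Proposition \ref{prop:2nd_setting_op_bound}.
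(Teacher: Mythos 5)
Your proposal is correct and reaches the same conclusion by essentially the same mechanism (deterministic mean plus a sub-Gaussian fluctuation, closed by an epsilon-net over $\mathbb{S}^{d-1}\times\mathbb{S}^{d-1}$), but the organization is genuinely cleaner than the paper's. The paper first decomposes $\nabla_{11}^2U(\boldsymbol{\zeta},\boldsymbol{\theta}_i)$ structurally into $\boldsymbol{M}_1+\boldsymbol{M}_1^\top+\boldsymbol{M}_2$ (the first- and second-derivative-of-$\sigma$ contributions), and then re-derives a $\kappa$-free bound for the \emph{mean} of each piece from scratch --- for $\boldsymbol{M}_1$ via the reference matrix $\kappa^2\mathbb{E}_{\mathcal{P}}\{\sigma'(\langle\kappa\boldsymbol{\zeta},\boldsymbol{x}\rangle)\hat{\boldsymbol{x}}\boldsymbol{x}^\top\}$, and for $\boldsymbol{M}_2$ via the Gaussian-conditioning identity and Lemma \ref{lem:2nd_setting_actGaussian}, so as to show that $|\mathbb{E}\{M_{2,i}^{\boldsymbol{u},\boldsymbol{v}}\}|$ loses a factor of $\kappa^{-2}$ relative to the crude bound $|M_{2,i}^{\boldsymbol{u},\boldsymbol{v}}|\lesssim\kappa^2\|\boldsymbol{\theta}_i\|_2$. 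You sidestep all of this bookkeeping by observing that the mean $\boldsymbol{A}=\nabla_{11}^2W(\boldsymbol{\zeta};\rho)$ has already been shown to satisfy $\|\boldsymbol{A}\|_{\rm op}\le C$ in the proof of Proposition \ref{prop:2nd_setting_grow_bound} (the sub-Gaussianity of $r_1,r_2$ gives $\int(r_1+r_2)\,{\rm d}\rho_r\le C$, so that proposition applies), and then you treat the whole centered matrix at once; the $\kappa^2$ from Proposition \ref{prop:2nd_setting_op_bound} only shows up in the fluctuation, where a factor of $N^{-1/2}$ is available to absorb it, yielding the $\exp(Cd-CN/\kappa^4)$ tail just as in the paper. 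The paper's more granular route is probably chosen for parallelism with the ReLU version (Lemma \ref{lem:Unorm_firstSetting}), where $\sigma''$ is a Dirac delta and the per-term Gaussian-conditioning trick is unavoidable; in the bounded-activation setting your shortcut through $\nabla_{11}^2W$ is the simpler correct argument.
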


\begin{proof}
We proceed in a fashion similar to the proof of Lemma \ref{lem:Unorm_firstSetting}.
Let us decompose
\[
\frac{1}{N}\sum_{i=1}^{N}\nabla_{11}^{2}U\left(\boldsymbol{\zeta},\boldsymbol{\theta}_{i}\right)=\boldsymbol{M}_{1}+\boldsymbol{M}_{1}^{\top}+\boldsymbol{M}_{2}\in\mathbb{R}^{d\times d},
\]
for which
\begin{align*}
\boldsymbol{M}_{1} & =\frac{1}{N}\sum_{i=1}^{N}\kappa^{3}\mathbb{E}_{{\cal P}}\left\{ \sigma'\left(\left\langle \kappa\boldsymbol{\zeta},\boldsymbol{x}\right\rangle \right)\sigma\left(\left\langle \kappa\boldsymbol{\theta}_{i},\boldsymbol{x}\right\rangle \right)\boldsymbol{\theta}_{i}\boldsymbol{x}^{\top}\right\} ,\\
\boldsymbol{M}_{2} & =\frac{1}{N}\sum_{i=1}^{N}\kappa^{4}\mathbb{E}_{{\cal P}}\left\{ \left\langle \boldsymbol{\zeta},\boldsymbol{\theta}_{i}\right\rangle \sigma''\left(\left\langle \kappa\boldsymbol{\zeta},\boldsymbol{x}\right\rangle \right)\sigma\left(\left\langle \kappa\boldsymbol{\theta}_{i},\boldsymbol{x}\right\rangle \right)\boldsymbol{x}\boldsymbol{x}^{\top}\right\} .
\end{align*}
Below we bound $\left\Vert \boldsymbol{M}_{1}\right\Vert _{{\rm op}}$
and $\left\Vert \boldsymbol{M}_{2}\right\Vert _{{\rm op}}$ separately.

\paragraph{Step 1: Bounding $\left\Vert \boldsymbol{M}_{1}\right\Vert _{{\rm op}}$.}

For a given $\boldsymbol{x}\in\mathbb{R}^{d}$, let us define $\hat{\boldsymbol{x}}\equiv\hat{\boldsymbol{x}}\left(\boldsymbol{x}\right)$
as in the statement of Proposition \ref{prop:2nd_setting_grow_bound}.
Let us also define the quantity $A_{1}=\kappa^{2}\left\Vert \mathbb{E}_{{\cal P}}\left\{ \sigma'\left(\left\langle \kappa\boldsymbol{\zeta},\boldsymbol{x}\right\rangle \right)\hat{\boldsymbol{x}}\boldsymbol{x}^{\top}\right\} \right\Vert _{2}$.
We observe that for any $\boldsymbol{u},\boldsymbol{v}\in\mathbb{R}^{d}$,
\begin{align*}
 & \left|\left\langle \boldsymbol{v},\kappa^{2}\mathbb{E}_{{\cal P}}\left\{ \sigma'\left(\left\langle \kappa\boldsymbol{\zeta},\boldsymbol{x}\right\rangle \right)\hat{\boldsymbol{x}}\boldsymbol{x}^{\top}\right\} \boldsymbol{u}\right\rangle \right|=\kappa^{2}\left|\mathbb{E}_{{\cal P}}\left\{ \sigma'\left(\left\langle \kappa\boldsymbol{\zeta},\boldsymbol{x}\right\rangle \right)\left\langle \boldsymbol{v},\hat{\boldsymbol{x}}\right\rangle \left\langle \boldsymbol{u},\boldsymbol{x}\right\rangle \right\} \right|\\
 & \quad\leq\sqrt{\mathbb{E}_{{\cal P}}\left\{ \left|\kappa\left\langle \boldsymbol{v},\hat{\boldsymbol{x}}\right\rangle \right|^{2}\right\} \mathbb{E}_{{\cal P}}\left\{ \left|\kappa\left\langle \boldsymbol{u},\boldsymbol{x}\right\rangle \right|^{2}\right\} }\leq C\left\Vert \boldsymbol{v}\right\Vert _{2}\left\Vert \boldsymbol{u}\right\Vert _{2},
\end{align*}
by Proposition \ref{prop:2nd_setting_grow_bound} and the fact $\left\Vert \boldsymbol{\Sigma}\right\Vert _{{\rm op}}\leq C$,
and therefore $A_{1}\leq C$. Furthermore, we have:
\begin{align*}
\left|\left\Vert \boldsymbol{M}_{1}\right\Vert _{{\rm op}}-A_{1}\right| & \leq\left\Vert \boldsymbol{M}_{1}-\kappa^{2}\mathbb{E}_{{\cal P}}\left\{ \sigma'\left(\left\langle \kappa\boldsymbol{\zeta},\boldsymbol{x}\right\rangle \right)\hat{\boldsymbol{x}}\boldsymbol{x}^{\top}\right\} \right\Vert _{{\rm op}}\\
 & =\left\Vert \kappa^{2}\mathbb{E}_{{\cal P}}\left\{ \sigma'\left(\left\langle \kappa\boldsymbol{\zeta},\boldsymbol{x}\right\rangle \right)\left[\frac{1}{N}\sum_{i=1}^{N}\kappa\boldsymbol{\theta}_{i}\sigma\left(\left\langle \kappa\boldsymbol{\theta}_{i},\boldsymbol{x}\right\rangle \right)-\hat{\boldsymbol{x}}\right]\boldsymbol{x}^{\top}\right\} \right\Vert _{{\rm op}}\equiv\left\Vert \boldsymbol{M}_{1,1}\right\Vert _{{\rm op}}.
\end{align*}
Here we making the following claim:
\[
\mathbb{P}\left\{ \left\Vert \boldsymbol{M}_{1,1}\right\Vert _{{\rm op}}\geq\delta\right\} \leq C\exp\left(Cd-C\delta^{2}N/\kappa^{4}\right),
\]
for $\delta\geq0$. Assuming this claim, we thus have for $\delta\geq0$
and some sufficiently large $C'$,
\[
\mathbb{P}\left\{ \left\Vert \boldsymbol{M}_{1}\right\Vert _{{\rm op}}\geq C'+\delta\right\} \leq C\exp\left(Cd-C\delta^{2}N/\kappa^{4}\right),
\]
which is the desired result.

We are left with proving the claim on $\left\Vert \boldsymbol{M}_{1,1}\right\Vert _{{\rm op}}$.
Given fixed $\boldsymbol{u},\boldsymbol{v}\in\mathbb{S}^{d-1}$,
\[
\left\langle \boldsymbol{u},\boldsymbol{M}_{1,1}\boldsymbol{v}\right\rangle =\frac{1}{N}\sum_{i=1}^{N}M_{1,1,i}^{\boldsymbol{u},\boldsymbol{v}},\qquad M_{1,1,i}^{\boldsymbol{u},\boldsymbol{v}}=\kappa\mathbb{E}_{{\cal P}}\left\{ \sigma'\left(\left\langle \kappa\boldsymbol{\zeta},\boldsymbol{x}\right\rangle \right)\left\langle \kappa\boldsymbol{\theta}_{i}\sigma\left(\left\langle \kappa\boldsymbol{\theta}_{i},\boldsymbol{x}\right\rangle \right)-\hat{\boldsymbol{x}},\boldsymbol{u}\right\rangle \left\langle \boldsymbol{x},\kappa\boldsymbol{v}\right\rangle \right\} .
\]
First notice that $\left(M_{1,1,i}^{\boldsymbol{u},\boldsymbol{v}}\right)_{i\leq N}$
are i.i.d. Furthermore $\mathbb{E}_{\boldsymbol{\theta}}\left\{ \kappa\boldsymbol{\theta}_{i}\sigma\left(\left\langle \kappa\boldsymbol{\theta}_{i},\boldsymbol{x}\right\rangle \right)\right\} =\hat{\boldsymbol{x}}$
by Proposition \ref{prop:2nd_setting_grow_bound}. Therefore $\mathbb{E}\left\{ M_{1,1,i}^{\boldsymbol{u},\boldsymbol{v}}\right\} =0$.
For any positive integer $p\geq1$, 
\begin{align*}
\mathbb{E}\left\{ \left|M_{1,1,i}^{\boldsymbol{u},\boldsymbol{v}}\right|^{p}\right\}  & =\mathbb{E}_{\boldsymbol{\theta}}\left\{ \left|\mathbb{E}_{{\cal P}}\left\{ \sigma'\left(\left\langle \kappa\boldsymbol{\zeta},\boldsymbol{x}\right\rangle \right)\left\langle \kappa\boldsymbol{\theta}_{i}\sigma\left(\left\langle \kappa\boldsymbol{\theta}_{i},\boldsymbol{x}\right\rangle \right)-\hat{\boldsymbol{x}},\kappa\boldsymbol{u}\right\rangle \left\langle \boldsymbol{x},\kappa\boldsymbol{v}\right\rangle \right\} \right|^{p}\right\} \\
 & \stackrel{\left(a\right)}{\leq}C^{p}\mathbb{E}_{\boldsymbol{\theta}}\left\{ \mathbb{E}_{{\cal P}}\left\{ \left\langle \kappa\boldsymbol{\theta}_{i}\sigma\left(\left\langle \kappa\boldsymbol{\theta}_{i},\boldsymbol{x}\right\rangle \right)-\hat{\boldsymbol{x}},\kappa\boldsymbol{u}\right\rangle ^{2}\right\} ^{p/2}\mathbb{E}_{{\cal P}}\left\{ \left\langle \boldsymbol{x},\kappa\boldsymbol{v}\right\rangle ^{2}\right\} ^{p/2}\right\} \\
 & \stackrel{\left(b\right)}{\leq}C^{p}\mathbb{E}_{\boldsymbol{\theta}}\left\{ \mathbb{E}_{{\cal P}}\left\{ \kappa^{2}\left\langle \kappa\boldsymbol{\theta}_{i},\boldsymbol{u}\right\rangle ^{2}+\left\langle \hat{\boldsymbol{x}},\kappa\boldsymbol{u}\right\rangle ^{2}\right\} ^{p/2}\mathbb{E}_{{\cal P}}\left\{ \left\langle \boldsymbol{x},\kappa\boldsymbol{v}\right\rangle ^{2}\right\} ^{p/2}\right\} \\
 & \stackrel{\left(c\right)}{\leq}C^{p}\mathbb{E}_{\boldsymbol{\theta}}\left\{ \left(\kappa^{2}\left\langle \kappa\boldsymbol{\theta}_{i},\boldsymbol{u}\right\rangle ^{2}+\left\Vert \boldsymbol{u}\right\Vert _{2}^{2}\right)^{p/2}\left\Vert \boldsymbol{\Sigma}\boldsymbol{v}\right\Vert _{2}^{p}\right\} \\
 & \stackrel{\left(d\right)}{\leq}C^{p}\mathbb{E}_{\boldsymbol{\theta}}\left\{ \kappa^{2p}\left\Vert \boldsymbol{\theta}_{i}\right\Vert _{2}^{p}+1\right\} \\
 & \stackrel{\left(e\right)}{\leq}C^{p}\left(\kappa^{2p}\int\left(r_{1}^{p}+r_{2}^{p}\right){\rm d}\rho_{r}+1\right)\\
 & \stackrel{\left(f\right)}{\leq}C^{p}\left(\kappa^{2p}p^{p/2}+1\right),
\end{align*}
where we have use the fact that $\left\Vert \sigma\right\Vert _{\infty},\left\Vert \sigma'\right\Vert _{\infty}\leq C$
in steps $\left(a\right)$ and $\left(b\right)$, $\mathbb{E}_{{\cal P}}\left\{ \left\langle \hat{\boldsymbol{x}},\kappa\boldsymbol{u}\right\rangle ^{2}\right\} \leq C\left\Vert \boldsymbol{u}\right\Vert _{2}^{2}$
by Proposition \ref{prop:2nd_setting_grow_bound} in step $\left(c\right)$,
$\left\Vert \boldsymbol{\Sigma}\right\Vert _{{\rm op}}\leq C$ and
$\left\Vert \boldsymbol{u}\right\Vert _{2}=\left\Vert \boldsymbol{v}\right\Vert _{2}=1$
in step $\left(d\right)$, $\boldsymbol{\theta}_{i}\stackrel{{\rm d}}{=}\left(r_{1}\boldsymbol{\omega}_{1},r_{2}\boldsymbol{\omega}_{2}\right)$
and $\left\Vert \boldsymbol{\omega}_{1}\right\Vert _{2}=\left\Vert \boldsymbol{\omega}_{2}\right\Vert _{2}=1$
in step $\left(e\right)$, and $r_{1}$ and $r_{2}$ are $C$-sub-Gaussian
in step $\left(f\right)$. It is easy to see that $M_{1,1,i}^{\boldsymbol{u},\boldsymbol{v}}$
is a sub-Gaussian random variable with $\psi_{2}$-norm $\left\Vert M_{1,1,i}^{\boldsymbol{u},\boldsymbol{v}}\right\Vert _{\psi_{2}}\leq C\kappa^{2}$.
Then by Lemma \ref{lem:subgauss_subexp_properties}, for any $\delta>0$,
with probability at most $C\exp\left(-C\delta^{2}N/\kappa^{4}\right)$,
\[
\left|\left\langle \boldsymbol{u},\boldsymbol{M}_{1,1}\boldsymbol{v}\right\rangle \right|=\left|\frac{1}{N}\sum_{i=1}^{N}M_{1,1,i}^{\boldsymbol{u},\boldsymbol{v}}\right|\geq\delta.
\]
Now we construct an epsilon-net ${\cal N}\subset\mathbb{S}^{d-1}$
such that for any $\boldsymbol{a}\in\mathbb{S}^{d-1}$, there exists
$\boldsymbol{a}'\in{\cal N}$ with $\left\Vert \boldsymbol{a}-\boldsymbol{a}'\right\Vert _{2}\leq1/3$.
There is such an epsilon-net ${\cal N}$ with size $\left|{\cal N}\right|\leq9^{d}$
\cite{vershynin2010introduction}. A standard argument yields
\[
\left\Vert \boldsymbol{M}_{1,1}\right\Vert _{{\rm op}}\leq3\max_{\boldsymbol{u},\boldsymbol{v}\in{\cal N}}\left\langle \boldsymbol{u},\boldsymbol{M}_{1,1}\boldsymbol{v}\right\rangle .
\]
Therefore, by the union bound, we obtain:
\[
\mathbb{P}\left\{ \left\Vert \boldsymbol{M}_{1,1}\right\Vert _{{\rm op}}\geq\delta\right\} \leq\mathbb{P}\left\{ \max_{\boldsymbol{u},\boldsymbol{v}\in{\cal N}}\left\langle \boldsymbol{u},\boldsymbol{M}_{1,1}\boldsymbol{v}\right\rangle \geq\delta/3\right\} \leq C\exp\left(Cd-C\delta^{2}N/\kappa^{4}\right).
\]
This proves the claim.

\paragraph*{Step 2: Bounding $\left\Vert \boldsymbol{M}_{2}\right\Vert _{{\rm op}}$.}

Given fixed $\boldsymbol{u},\boldsymbol{v}\in\mathbb{S}^{d-1}$,
\[
\left\langle \boldsymbol{u},\boldsymbol{M}_{2}\boldsymbol{v}\right\rangle =\frac{1}{N}\sum_{i=1}^{N}M_{2,i}^{\boldsymbol{u},\boldsymbol{v}},\qquad M_{2,i}^{\boldsymbol{u},\boldsymbol{v}}=\kappa\mathbb{E}_{\boldsymbol{z}}\left\{ \left\langle \boldsymbol{\zeta},\kappa\boldsymbol{\theta}_{i}\right\rangle \sigma''\left(\left\langle \boldsymbol{\Sigma}\boldsymbol{\zeta},\boldsymbol{z}\right\rangle \right)\sigma\left(\left\langle \boldsymbol{\theta}_{i},\boldsymbol{\Sigma}\boldsymbol{z}\right\rangle \right)\left\langle \boldsymbol{\Sigma}\boldsymbol{z},\boldsymbol{u}\right\rangle \left\langle \boldsymbol{\Sigma}\boldsymbol{z},\boldsymbol{v}\right\rangle \right\} ,
\]
where $\boldsymbol{z}\sim\mathsf{N}\left(0,\boldsymbol{I}_{d}\right)$.
First we bound $\mathbb{E}\left\{ \left|M_{2,i}^{\boldsymbol{u},\boldsymbol{v}}\right|^{p}\right\} $
for an integer $p\geq1$. We note that for $w=\left\langle \boldsymbol{\Sigma}\boldsymbol{\zeta},\boldsymbol{z}\right\rangle \sim\mathsf{N}\left(0,\left\Vert \boldsymbol{\Sigma}\boldsymbol{\zeta}\right\Vert _{2}^{2}\right)$,
\[
\left(w,\boldsymbol{z}\right)\stackrel{{\rm d}}{=}\left(w,{\rm Proj}_{\boldsymbol{\Sigma}\boldsymbol{\zeta}}^{\perp}\tilde{\boldsymbol{z}}+\frac{w}{\left\Vert \boldsymbol{\Sigma}\boldsymbol{\zeta}\right\Vert _{2}^{2}}\boldsymbol{\Sigma}\boldsymbol{\zeta}\right),
\]
for $\tilde{\boldsymbol{z}}\sim\mathsf{N}\left(0,\boldsymbol{I}_{d}\right)$
independent of $w$. Therefore, letting $\boldsymbol{S}=\boldsymbol{\Sigma}{\rm Proj}_{\boldsymbol{\Sigma}\boldsymbol{\zeta}}^{\perp}$
for brevity, we obtain:
\begin{align*}
M_{2,i}^{\boldsymbol{u},\boldsymbol{v}} & =\kappa\mathbb{E}_{w,\tilde{\boldsymbol{z}}}\Bigg\{\left\langle \boldsymbol{\zeta},\kappa\boldsymbol{\theta}_{i}\right\rangle \sigma''\left(w\right)\sigma\left(\left\langle \boldsymbol{\theta}_{i},\boldsymbol{S}\tilde{\boldsymbol{z}}\right\rangle +\frac{w\left\langle \boldsymbol{\theta}_{i},\boldsymbol{\Sigma}^{2}\boldsymbol{\zeta}\right\rangle }{\left\Vert \boldsymbol{\Sigma}\boldsymbol{\zeta}\right\Vert _{2}^{2}}\right)\\
 & \qquad\times\Bigg[\left\langle \boldsymbol{S}\tilde{\boldsymbol{z}},\boldsymbol{u}\right\rangle \left\langle \boldsymbol{S}\tilde{\boldsymbol{z}},\boldsymbol{v}\right\rangle +\frac{w}{\left\Vert \boldsymbol{\Sigma}\boldsymbol{\zeta}\right\Vert _{2}^{2}}\left\langle \boldsymbol{\Sigma}^{2}\boldsymbol{\zeta},\boldsymbol{u}\right\rangle \left\langle \boldsymbol{S}\tilde{\boldsymbol{z}},\boldsymbol{v}\right\rangle \\
 & \qquad+\frac{w}{\left\Vert \boldsymbol{\Sigma}\boldsymbol{\zeta}\right\Vert _{2}^{2}}\left\langle \boldsymbol{\Sigma}^{2}\boldsymbol{\zeta},\boldsymbol{v}\right\rangle \left\langle \boldsymbol{S}\tilde{\boldsymbol{z}},\boldsymbol{u}\right\rangle +\frac{w^{2}}{\left\Vert \boldsymbol{\Sigma}\boldsymbol{\zeta}\right\Vert _{2}^{4}}\left\langle \boldsymbol{\Sigma}^{2}\boldsymbol{\zeta},\boldsymbol{u}\right\rangle \left\langle \boldsymbol{\Sigma}^{2}\boldsymbol{\zeta},\boldsymbol{v}\right\rangle \Bigg]\Bigg\}.
\end{align*}
Using Lemma \ref{lem:2nd_setting_actGaussian} along with the facts
$\left\Vert \boldsymbol{\Sigma}\boldsymbol{\zeta}\right\Vert _{2}\geq C\left\Vert \boldsymbol{\zeta}\right\Vert _{2}$,
$\left\Vert \boldsymbol{S}\right\Vert _{{\rm op}}\leq\left\Vert \boldsymbol{\Sigma}\right\Vert _{{\rm op}}\leq C$
and $\left\Vert \boldsymbol{u}\right\Vert _{2}=\left\Vert \boldsymbol{v}\right\Vert _{2}=1$,
we deduce that
\begin{align*}
\left|M_{2,i}^{\boldsymbol{u},\boldsymbol{v}}\right| & \leq C\kappa\mathbb{E}_{\tilde{\boldsymbol{z}}}\Bigg\{\left|\left\langle \boldsymbol{\zeta},\kappa\boldsymbol{\theta}_{i}\right\rangle \right|\Bigg[\frac{\left|\left\langle \boldsymbol{S}\tilde{\boldsymbol{z}},\boldsymbol{u}\right\rangle \left\langle \boldsymbol{S}\tilde{\boldsymbol{z}},\boldsymbol{v}\right\rangle \right|}{\left\Vert \boldsymbol{\Sigma}\boldsymbol{\zeta}\right\Vert _{2}}+\frac{\left|\left\langle \boldsymbol{\Sigma}^{2}\boldsymbol{\zeta},\boldsymbol{u}\right\rangle \left\langle \boldsymbol{S}\tilde{\boldsymbol{z}},\boldsymbol{v}\right\rangle \right|}{\left\Vert \boldsymbol{\Sigma}\boldsymbol{\zeta}\right\Vert _{2}^{2}}\\
 & \qquad+\frac{\left|\left\langle \boldsymbol{\Sigma}^{2}\boldsymbol{\zeta},\boldsymbol{v}\right\rangle \left\langle \boldsymbol{S}\tilde{\boldsymbol{z}},\boldsymbol{u}\right\rangle \right|}{\left\Vert \boldsymbol{\Sigma}\boldsymbol{\zeta}\right\Vert _{2}^{2}}+\frac{\left|\left\langle \boldsymbol{\Sigma}^{2}\boldsymbol{\zeta},\boldsymbol{u}\right\rangle \left\langle \boldsymbol{\Sigma}^{2}\boldsymbol{\zeta},\boldsymbol{v}\right\rangle \right|}{\left\Vert \boldsymbol{\Sigma}\boldsymbol{\zeta}\right\Vert _{2}^{3}}\Bigg]\Bigg\}\\
 & \leq C\kappa^{2}\left\Vert \boldsymbol{\theta}_{i}\right\Vert _{2}.
\end{align*}
Therefore, $\mathbb{E}\left\{ \left|M_{2,i}^{\boldsymbol{u},\boldsymbol{v}}\right|^{p}\right\} \leq C\kappa^{2p}\int\left(r_{1}^{p}+r_{2}^{p}\right){\rm d}\rho_{r}\leq C\kappa^{2p}p^{p/2}$.
That is, $M_{2,i}^{\boldsymbol{u},\boldsymbol{v}}$ is $C\kappa^{2}$-sub-Gaussian.

The above bound, however, does not give a satisfactory bound for the
quantity $\left|\mathbb{E}\left\{ \left\langle \boldsymbol{u},\boldsymbol{M}_{2}\boldsymbol{v}\right\rangle \right\} \right|=\left|\mathbb{E}\left\{ M_{2,i}^{\boldsymbol{u},\boldsymbol{v}}\right\} \right|$,
since it incurs a factor $\kappa^{2}$ in the bound. We give a more
careful treatment of this quantity here. By Proposition \ref{prop:2nd_setting_grow_bound}:
\[
\mathbb{E}_{\boldsymbol{\theta}}\left\{ \kappa\boldsymbol{\theta}_{i}\sigma\left(\left\langle \boldsymbol{\theta}_{i},\boldsymbol{S}\tilde{\boldsymbol{z}}\right\rangle +\frac{w\left\langle \boldsymbol{\theta}_{i},\boldsymbol{\Sigma}^{2}\boldsymbol{\zeta}\right\rangle }{\left\Vert \boldsymbol{\Sigma}\boldsymbol{\zeta}\right\Vert _{2}^{2}}\right)\right\} =\left(s_{1}\hat{\boldsymbol{\zeta}}_{\left[1\right]},\;s_{2}\hat{\boldsymbol{\zeta}}_{\left[2\right]}\right)
\]
in which we define
\[
\hat{\boldsymbol{\zeta}}=\frac{1}{\kappa}\left(\boldsymbol{S}\tilde{\boldsymbol{z}}+\frac{w\boldsymbol{\Sigma}^{2}\boldsymbol{\zeta}}{\left\Vert \boldsymbol{\Sigma}\boldsymbol{\zeta}\right\Vert _{2}^{2}}\right),\qquad s_{j}=\int\frac{r_{j}}{\left\Vert \hat{\boldsymbol{\zeta}}_{\left[j\right]}\right\Vert _{2}}q_{j}\left(\left\Vert \hat{\boldsymbol{\zeta}}_{\left[1\right]}\right\Vert _{2}r_{1},\left\Vert \hat{\boldsymbol{\zeta}}_{\left[2\right]}\right\Vert _{2}r_{2}\right)\rho_{r}\left({\rm d}r_{1},{\rm d}r_{2}\right),\qquad j=1,2,
\]
and $q_{1}$ and $q_{2}$ are defined in (\ref{eq:2nd_setting_q1})
and (\ref{eq:2nd_setting_q2}). This yields the formula:
\begin{align*}
\mathbb{E}\left\{ M_{2,i}^{\boldsymbol{u},\boldsymbol{v}}\right\}  & =\mathbb{E}_{w,\tilde{\boldsymbol{z}}}\Bigg\{\left(\sum_{j\in\left\{ 1,2\right\} }s_{j}\left\langle \boldsymbol{\zeta}_{\left[j\right]},\left(\boldsymbol{S}\tilde{\boldsymbol{z}}\right)_{\left[j\right]}\right\rangle +s_{j}w\frac{\left\langle \boldsymbol{\zeta}_{\left[j\right]},\boldsymbol{\Sigma}^{2}\boldsymbol{\zeta}_{\left[j\right]}\right\rangle }{\left\Vert \boldsymbol{\Sigma}\boldsymbol{\zeta}\right\Vert _{2}^{2}}\right)\sigma''\left(w\right)\\
 & \qquad\times\Bigg[\left\langle \boldsymbol{S}\tilde{\boldsymbol{z}},\boldsymbol{u}\right\rangle \left\langle \boldsymbol{S}\tilde{\boldsymbol{z}},\boldsymbol{v}\right\rangle +\frac{w}{\left\Vert \boldsymbol{\Sigma}\boldsymbol{\zeta}\right\Vert _{2}^{2}}\left\langle \boldsymbol{\Sigma}^{2}\boldsymbol{\zeta},\boldsymbol{u}\right\rangle \left\langle \boldsymbol{S}\tilde{\boldsymbol{z}},\boldsymbol{v}\right\rangle \\
 & \qquad+\frac{w}{\left\Vert \boldsymbol{\Sigma}\boldsymbol{\zeta}\right\Vert _{2}^{2}}\left\langle \boldsymbol{\Sigma}^{2}\boldsymbol{\zeta},\boldsymbol{v}\right\rangle \left\langle \boldsymbol{S}\tilde{\boldsymbol{z}},\boldsymbol{u}\right\rangle +\frac{w^{2}}{\left\Vert \boldsymbol{\Sigma}\boldsymbol{\zeta}\right\Vert _{2}^{4}}\left\langle \boldsymbol{\Sigma}^{2}\boldsymbol{\zeta},\boldsymbol{u}\right\rangle \left\langle \boldsymbol{\Sigma}^{2}\boldsymbol{\zeta},\boldsymbol{v}\right\rangle \Bigg]\Bigg\}.
\end{align*}
By Lemma \ref{lem:2nd_setting_qBounds} and the fact $\int\left(r_{1}^{2}+r_{2}^{2}\right){\rm d}\rho_{r}\leq C$,
we have $\left|s_{1}\right|,\left|s_{2}\right|\leq C$. Then applying
Lemma \ref{lem:2nd_setting_actGaussian} along with the facts $\left\Vert \boldsymbol{\Sigma}\boldsymbol{\zeta}\right\Vert _{2}\geq C\left\Vert \boldsymbol{\zeta}\right\Vert _{2}$,
$\left\Vert \boldsymbol{S}\right\Vert _{{\rm op}}\leq\left\Vert \boldsymbol{\Sigma}\right\Vert _{{\rm op}}\leq C$
and $\left\Vert \boldsymbol{u}\right\Vert _{2}=\left\Vert \boldsymbol{v}\right\Vert _{2}=1$,
we obtain:
\begin{align*}
\left|\mathbb{E}\left\{ M_{2,i}^{\boldsymbol{u},\boldsymbol{v}}\right\} \right| & \leq C\mathbb{E}_{\tilde{\boldsymbol{z}}}\Bigg\{\left(\sum_{j\in\left\{ 1,2\right\} }\left|\left\langle \boldsymbol{\zeta}_{\left[j\right]},\left(\boldsymbol{S}\tilde{\boldsymbol{z}}\right)_{\left[j\right]}\right\rangle \right|+\frac{\left|\left\langle \boldsymbol{\zeta}_{\left[j\right]},\boldsymbol{\Sigma}^{2}\boldsymbol{\zeta}_{\left[j\right]}\right\rangle \right|}{\left\Vert \boldsymbol{\Sigma}\boldsymbol{\zeta}\right\Vert _{2}}\right)\\
 & \qquad\times\Bigg[\frac{\left\langle \boldsymbol{S}\tilde{\boldsymbol{z}},\boldsymbol{u}\right\rangle \left\langle \boldsymbol{S}\tilde{\boldsymbol{z}},\boldsymbol{v}\right\rangle }{\left\Vert \boldsymbol{\Sigma}\boldsymbol{\zeta}\right\Vert _{2}}+\frac{\left|\left\langle \boldsymbol{\Sigma}^{2}\boldsymbol{\zeta},\boldsymbol{u}\right\rangle \left\langle \boldsymbol{S}\tilde{\boldsymbol{z}},\boldsymbol{v}\right\rangle \right|}{\left\Vert \boldsymbol{\Sigma}\boldsymbol{\zeta}\right\Vert _{2}^{2}}\\
 & \qquad+\frac{\left|\left\langle \boldsymbol{\Sigma}^{2}\boldsymbol{\zeta},\boldsymbol{v}\right\rangle \left\langle \boldsymbol{S}\tilde{\boldsymbol{z}},\boldsymbol{u}\right\rangle \right|}{\left\Vert \boldsymbol{\Sigma}\boldsymbol{\zeta}\right\Vert _{2}^{2}}+\frac{\left|\left\langle \boldsymbol{\Sigma}^{2}\boldsymbol{\zeta},\boldsymbol{u}\right\rangle \left\langle \boldsymbol{\Sigma}^{2}\boldsymbol{\zeta},\boldsymbol{v}\right\rangle \right|}{\left\Vert \boldsymbol{\Sigma}\boldsymbol{\zeta}\right\Vert _{2}^{3}}\Bigg]\Bigg\}\leq C.
\end{align*}
Let this upper-bounding constant be $C_{1}$.

To complete the present step, notice that $\left(M_{2,i}^{\boldsymbol{u},\boldsymbol{v}}\right)_{i\leq N}$
are i.i.d. Then by Lemma \ref{lem:subgauss_subexp_properties}, for
any $\delta>0$, with probability at most $C\exp\left(-C\delta^{2}N/\kappa^{4}\right)$,
\[
\left|\left\langle \boldsymbol{u},\boldsymbol{M}_{2}\boldsymbol{v}\right\rangle -\mathbb{E}\left\{ \left\langle \boldsymbol{u},\boldsymbol{M}_{2}\boldsymbol{v}\right\rangle \right\} \right|=\left|\frac{1}{N}\sum_{i=1}^{N}M_{2,i}^{\boldsymbol{u},\boldsymbol{v}}-\mathbb{E}\left\{ M_{2,i}^{\boldsymbol{u},\boldsymbol{v}}\right\} \right|\geq\delta,
\]
which also implies
\[
\left|\left\langle \boldsymbol{u},\boldsymbol{M}_{2}\boldsymbol{v}\right\rangle \right|\geq\delta-\left|\mathbb{E}\left\{ \left\langle \boldsymbol{u},\boldsymbol{M}_{2}\boldsymbol{v}\right\rangle \right\} \right|\geq\delta-C_{1},
\]
since $\left|\mathbb{E}\left\{ \left\langle \boldsymbol{u},\boldsymbol{M}_{2}\boldsymbol{v}\right\rangle \right\} \right|\leq C_{1}$.
We opt for $\delta=2C_{1}$. Now we can reuse the same epsilon-net
argument in the analysis of $\boldsymbol{M}_{1,1}$ to obtain:
\[
\mathbb{P}\left\{ \left\Vert \boldsymbol{M}_{2}\right\Vert _{{\rm op}}\geq C_{1}\right\} \leq C\exp\left(Cd-CC_{1}N/\kappa^{4}\right).
\]

\paragraph{Step 3: Putting all together.}

From the bounds on $\left\Vert \boldsymbol{M}_{1}\right\Vert _{{\rm op}}$
and $\left\Vert \boldsymbol{M}_{2}\right\Vert _{{\rm op}}$, we obtain:
\begin{align*}
\mathbb{P}\left\{ \left\Vert \frac{1}{N}\sum_{i=1}^{N}\nabla_{11}^{2}U\left(\boldsymbol{\zeta},\boldsymbol{\theta}_{i}\right)\right\Vert _{{\rm op}}\geq C_{*}\right\}  & \leq C\exp\left(Cd-CN/\kappa^{4}\right),
\end{align*}
for sufficiently large $C_{*}$. This completes the proof.

\end{proof}
\begin{prop}
\label{prop:2nd_setting_U_bound}Consider setting \ref{enu:bdd_act_setting}.
For each integer $i=1,...,N$, we draw independently $\boldsymbol{\omega}_{1,i}\sim{\rm Unif}\left(\mathbb{S}^{d_{1}-1}\right)$,
$\boldsymbol{\omega}_{2,i}\sim{\rm Unif}\left(\mathbb{S}^{d_{2}-1}\right)$,
$r_{1,i}$ and $r_{2,i}$, with $r_{1,i}$ and $r_{2,i}$ being non-negative
$C$-sub-Gaussian random variables. Let $\left(\psi_{t}\right)_{t\in\left[0,T\right]}$
be a collection of (deterministic) functions, which map from $\mathbb{R}_{\geq0}\times\mathbb{R}_{\geq0}$
to $\mathbb{R}_{\geq0}\times\mathbb{R}_{\geq0}$, such that:
\begin{itemize}
\item for any $t\in\left[0,T\right]$, each of the two entries in $\psi_{t}\left(r_{1,i},r_{2,i}\right)$
is marginally $C$-sub-Gaussian,
\item $\left\Vert \partial_{t}\psi_{t}\left(r_{1},r_{2}\right)\right\Vert _{2}\leq C\left(1+r_{1}+r_{2}\right)$
for any $t\in\left[0,T\right]$ and $\psi_{0}\left(r_{1},r_{2}\right)=\left(r_{1},r_{2}\right)$.
\end{itemize}
For each $i\leq N$ and $t\in\left[0,T\right]$, we form $\boldsymbol{\theta}_{i}^{t}=\left(\left(\psi_{t}\left(r_{1,i},r_{2,i}\right)\right)_{1}\boldsymbol{\omega}_{1,i},\;\left(\psi_{t}\left(r_{1,i},r_{2,i}\right)\right)_{2}\boldsymbol{\omega}_{2,i}\right)\in\mathbb{R}^{d}$,
where $\left(\psi_{t}\left(r_{1,i},r_{2,i}\right)\right)_{j}$ denotes
the $j$-th entry of $\psi_{t}\left(r_{1,i},r_{2,i}\right)$, for
$j=1,2$. Then for any $c>0$ and $T>0$, with probability at least
$1-C\exp\left(Cd\log\left(\kappa^{2}\sqrt{N}+e\right)-CN/\kappa^{4}\right)$,
\[
\sup_{t\in\left[0,T\right]}\sup_{\boldsymbol{\zeta}\in{\cal B}_{d}\left(c\sqrt{N}\right)}\left\Vert \frac{1}{N}\sum_{i=1}^{N}\nabla_{11}^{2}U\left(\boldsymbol{\zeta},\boldsymbol{\theta}_{i}^{t}\right)\right\Vert _{{\rm op}}\leq C_{*},
\]
for some sufficiently large constant $C_{*}$. (The constants $C$
and $C_{*}$ do not depend on $d$ or $N$, may depend on $c$ and
$T$ and are finite with finite $c$ and $T$.)
\end{prop}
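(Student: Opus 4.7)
The strategy is a supremization via an $\varepsilon$-net argument in both $\boldsymbol{\zeta}$ and $t$, combined with the pointwise control of Lemma \ref{lem:Unorm_2ndSetting}. At a fixed pair $(\boldsymbol{\zeta},t)$, the family $(\boldsymbol{\theta}_i^t)_{i\leq N}$ is i.i.d.\ of the form $(\bar r_{1,i}^t\boldsymbol{\omega}_{1,i},\bar r_{2,i}^t\boldsymbol{\omega}_{2,i})$ where $\bar r_{j,i}^t\equiv(\psi_t(r_{1,i},r_{2,i}))_j$ is marginally $C$-sub-Gaussian by assumption on $\psi_t$. Hence Lemma \ref{lem:Unorm_2ndSetting} applies and yields
\[
\mathbb{P}\Big\{\big\|\tfrac{1}{N}\textstyle\sum_{i=1}^{N}\nabla_{11}^{2}U(\boldsymbol{\zeta},\boldsymbol{\theta}_{i}^{t})\big\|_{\mathrm{op}}\geq C_*\Big\}\leq C\exp\!\big(Cd-CN/\kappa^{4}\big)
\]
for each individual $(\boldsymbol{\zeta},t)$. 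It then remains to control deviations between nearby net points and apply the union bound.

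The first ingredient is a deterministic Lipschitz estimate. By Proposition \ref{prop:2nd_setting_op_bound}, the mean value theorem gives, for any $\boldsymbol{\zeta},\boldsymbol{\zeta}'\in\mathcal{B}_d(c\sqrt{N})$ and $\boldsymbol{\theta},\boldsymbol{\theta}'\in\mathbb{R}^d$,
\begin{align*}
\big\|\nabla_{11}^{2}U(\boldsymbol{\zeta},\boldsymbol{\theta})-\nabla_{11}^{2}U(\boldsymbol{\zeta}',\boldsymbol{\theta})\big\|_{\mathrm{op}} & \leq C\kappa^{2}\big(1+\|\boldsymbol{\theta}\|_{2}\big)\|\boldsymbol{\zeta}-\boldsymbol{\zeta}'\|_{2},\\
\big\|\nabla_{11}^{2}U(\boldsymbol{\zeta}',\boldsymbol{\theta})-\nabla_{11}^{2}U(\boldsymbol{\zeta}',\boldsymbol{\theta}')\big\|_{\mathrm{op}} & \leq C\kappa^{2}\big(1+\max(\|\boldsymbol{\theta}\|_{2},\|\boldsymbol{\theta}'\|_{2})\big)\|\boldsymbol{\theta}-\boldsymbol{\theta}'\|_{2},
\end{align*}
via the operator norm bounds on $\nabla_{111}^{3}U$ and $\nabla_{121}^{3}U$. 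Applied to $\boldsymbol{\theta}=\boldsymbol{\theta}_i^t$ and using $\|\partial_t\psi_t(r_{1,i},r_{2,i})\|_2\leq C(1+r_{1,i}+r_{2,i})$, one obtains
\[
\Big\|\tfrac{1}{N}\textstyle\sum_{i=1}^{N}\!\big[\nabla_{11}^{2}U(\boldsymbol{\zeta},\boldsymbol{\theta}_{i}^{t})-\nabla_{11}^{2}U(\boldsymbol{\zeta}',\boldsymbol{\theta}_{i}^{s})\big]\Big\|_{\mathrm{op}}\leq C\kappa^{2}\mathcal{A}_N\big(\|\boldsymbol{\zeta}-\boldsymbol{\zeta}'\|_{2}+|t-s|\big),
\]
where $\mathcal{A}_N=\tfrac{1}{N}\sum_{i}(1+r_{1,i}+r_{2,i})^{2}$. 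Since $r_{1,i},r_{2,i}$ are i.i.d.\ $C$-sub-Gaussian, $\mathcal{A}_N\leq C$ with probability $1-\exp(-cN)$ by a standard sub-exponential concentration.

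The second ingredient is the net construction. Take an $\varepsilon$-net $\mathcal{N}_{\boldsymbol{\zeta}}\subset\mathcal{B}_{d}(c\sqrt{N})$ of mesh $\gamma$ with $|\mathcal{N}_{\boldsymbol{\zeta}}|\leq(C c\sqrt{N}/\gamma)^{d}$, and a grid $\mathcal{N}_{t}\subset[0,T]$ of mesh $\gamma$ with $|\mathcal{N}_{t}|\leq T/\gamma+1$. Choosing $\gamma=c'/\kappa^{2}$ for a sufficiently small absolute constant $c'$, the Lipschitz estimate above shows that on the event $\{\mathcal{A}_N\leq C\}$, the supremum over $(\boldsymbol{\zeta},t)\in\mathcal{B}_d(c\sqrt{N})\times[0,T]$ is controlled by the maximum over $\mathcal{N}_{\boldsymbol{\zeta}}\times\mathcal{N}_{t}$ up to an additive $O(1)$ slack. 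The union bound over the $(C\kappa^{2}\sqrt{N})^{d}\cdot C\kappa^{2}T$ net points, combined with the pointwise bound above, then gives the target probability
\[
C\exp\!\big(Cd\log(\kappa^{2}\sqrt{N}+e)-CN/\kappa^{4}\big)+\exp(-cN),
\]
which is absorbed into the stated form after adjusting $C_*$.

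The main obstacle I anticipate is the $\sqrt{N}$-scaling of the radius of $\boldsymbol{\zeta}$, which inflates the cardinality of $\mathcal{N}_{\boldsymbol{\zeta}}$ by a factor $N^{d/2}$ and therefore requires a refined accounting: one must pay a logarithmic factor $d\log(\kappa^{2}\sqrt{N})$ in the exponent rather than the plain $d$ one gets in Lemma \ref{lem:Unorm_2ndSetting}. This is precisely the source of the $\log(\kappa^{2}\sqrt{N}+e)$ term in the statement, and is viable only because the Lemma's exponent is $N/\kappa^{4}$, which dominates $d\log(\kappa^{2}\sqrt{N})$ under mild assumptions on $N$ relative to $\kappa$ and $d$. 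A secondary but routine point is that the Lipschitz bound in the second argument involves an intermediate vector on the segment $[\boldsymbol{\theta}_{i}^{t},\boldsymbol{\theta}_{i}^{s}]$; this is bounded by $C(1+r_{1,i}+r_{2,i})$ uniformly in $t,s\in[0,T]$ using the sub-Gaussian marginals of $\psi_t(r_{1,i},r_{2,i})$, which keeps the final exponent in the form stated.
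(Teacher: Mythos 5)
Your proposed argument follows the same overall strategy as the paper — a double $\varepsilon$-net over $(\boldsymbol{\zeta},t)$, a Lipschitz estimate for $\nabla_{11}^{2}U$, and the pointwise concentration bound from Lemma \ref{lem:Unorm_2ndSetting}, with the same choice of mesh $\gamma\asymp 1/\kappa^{2}$ producing the $d\log(\kappa^{2}\sqrt{N}+e)$ term in the exponent. However, there is a genuine gap in the justification of your Lipschitz estimate in the first argument of $\nabla_{11}^{2}U$. You invoke Proposition \ref{prop:2nd_setting_op_bound} to conclude that
\[
\bigl\|\nabla_{11}^{2}U(\boldsymbol{\zeta},\boldsymbol{\theta})-\nabla_{11}^{2}U(\boldsymbol{\zeta}',\boldsymbol{\theta})\bigr\|_{\mathrm{op}}\leq C\kappa^{2}\bigl(1+\|\boldsymbol{\theta}\|_{2}\bigr)\|\boldsymbol{\zeta}-\boldsymbol{\zeta}'\|_{2},
\]
\textquotedblleft via the operator norm bound on $\nabla_{111}^{3}U$.\textquotedblright\ But Proposition \ref{prop:2nd_setting_op_bound} does \emph{not} bound $\nabla_{111}^{3}U$; it only covers $\nabla_{121}^{3}U$, $\nabla_{122}^{3}U$, $\nabla_{12}^{2}U$ and $\nabla_{11}^{2}U$. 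The bound on $\nabla_{111}^{3}U$ has to be established separately, and this is not routine: $\nabla_{111}^{3}U$ produces one term proportional to $\sigma''$ (with a factor $\boldsymbol{\theta}$) and one term proportional to $\sigma'''$ (with a factor $\langle\boldsymbol{u},\boldsymbol{\theta}\rangle$), and $\sigma'''$ is not assumed bounded in setting \ref{enu:bdd_act_setting}. To control this one must condition on the one-dimensional Gaussian $\langle\boldsymbol{\Sigma}\boldsymbol{u},\boldsymbol{z}\rangle$ and apply Lemma \ref{lem:2nd_setting_actGaussian}, which trades the unboundedness of $\sigma''',\sigma''$ for the bounded antiderivative assumptions $\|\hat\sigma_{2}\|_{\infty},\|\hat\sigma_{3}\|_{\infty}\leq C$. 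The paper carries this out in its Steps 2 and 3 of the proof (decomposing $\nabla_{111}^{3}U$ into $\boldsymbol{M}_{1,i},\ldots,\boldsymbol{M}_{4,i}$ and deriving $\|\boldsymbol{M}_{1,i}\|_{\mathrm{op}},\|\boldsymbol{M}_{4,i}\|_{\mathrm{op}}\leq C\kappa^{2}\|\boldsymbol{\theta}_{i}^{t}\|_{2}$). Without this piece, your Lipschitz estimate in $\boldsymbol{\zeta}$ is unsupported. Once you supply this bound, the rest of your argument (concentration of $\mathcal{A}_{N}$, choice of $\gamma$, union bound over the nets) matches the paper and is correct.
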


\begin{proof}
The proof leverages on Lemma \ref{lem:Unorm_2ndSetting} and comprises
of several steps. Without loss of generality, let us assume $c=T=1$.
That is, we shall study the quantity
\[
Q=\sup_{t\in\left[0,1\right]}\sup_{\boldsymbol{\zeta}\in{\cal B}_{d}\left(\sqrt{N}\right)}\left\Vert \frac{1}{N}\sum_{i=1}^{N}\nabla_{11}^{2}U\left(\boldsymbol{\zeta},\boldsymbol{\theta}_{i}^{t}\right)\right\Vert _{{\rm op}}.
\]

\paragraph*{Step 1: Epsilon-net argument.}

Fix $\gamma\in\left(0,1/3\right)$. Consider an epsilon-net ${\cal N}_{d}\left(\gamma\right)\subset{\cal B}_{d}\left(\sqrt{N}\right)$
in which for any $\boldsymbol{\zeta}\in{\cal B}_{d}\left(\sqrt{N}\right)$,
there exists $\boldsymbol{\zeta}'\in{\cal N}_{d}\left(\gamma\right)$
such that $\left\Vert \boldsymbol{\zeta}-\boldsymbol{\zeta}'\right\Vert _{2}\leq\gamma\sqrt{N}$.
A standard volumetric argument \cite{vershynin2010introduction} shows
that there exists such epsilon-net with size $\left|{\cal N}_{d}\left(\gamma\right)\right|\leq\left(3/\gamma\right)^{d}$.
Likewise let ${\cal N}\left(\gamma\right)=\left\{ k\gamma:\;k\in\mathbb{N}_{\geq0},\;0\leq k\gamma\leq1\right\} $,
and note that $\left|{\cal N}\left(\gamma\right)\right|\leq1+1/\gamma$.
Consider $t\in\left[0,1\right]$ and $t'\in{\cal N}\left(\gamma\right)$
such that $\left|t-t'\right|\leq\gamma$, and $\boldsymbol{\zeta}\in{\cal B}_{d}\left(\sqrt{N}\right)$
and $\boldsymbol{\zeta}'\in{\cal N}_{d}\left(\gamma\right)$ such
that $\left\Vert \boldsymbol{\zeta}-\boldsymbol{\zeta}'\right\Vert _{2}\leq\gamma\sqrt{N}$.
We have:
\begin{align*}
\left\Vert \boldsymbol{\theta}_{i}^{t}-\boldsymbol{\theta}_{i}^{t'}\right\Vert _{2} & \leq\sum_{j\in\left\{ 1,2\right\} }\left|\left(\psi_{t}\left(r_{1,i},r_{2,i}\right)\right)_{j}-\left(\psi_{t'}\left(r_{1,i},r_{2,i}\right)\right)_{j}\right|\leq2\sup_{s\in\left[t,t'\right]}\left\Vert \partial_{s}\psi_{s}\left(r_{1,i},r_{2,i}\right)\right\Vert _{2}\left|t-t'\right|\\
 & \leq C\left(r_{1,i}+r_{2,i}+1\right)\gamma.
\end{align*}
Furthermore, for any $t\in\left[0,T\right]$,
\begin{align*}
\left\Vert \boldsymbol{\theta}_{i}^{t}\right\Vert _{2} & \leq\sum_{j\in\left\{ 1,2\right\} }\left|\left(\psi_{t}\left(r_{1,i},r_{2,i}\right)\right)_{j}\right|=\sum_{j\in\left\{ 1,2\right\} }\left|\left(\psi_{0}\left(r_{1,i},r_{2,i}\right)\right)_{j}+\int_{s=0}^{t}\partial_{s}\left(\psi_{s}\left(r_{1,i},r_{2,i}\right)\right)_{j}{\rm d}s\right|\\
 & \leq r_{1,i}+r_{2,i}+C\int_{s=0}^{t}\left(r_{1,i}+r_{2,i}+1\right){\rm d}s\leq C\left(r_{1,i}+r_{2,i}+1\right).
\end{align*}
We then have from the mean value theorem:
\begin{align*}
 & \left|\left\Vert \frac{1}{N}\sum_{i=1}^{N}\nabla_{11}^{2}U\left(\boldsymbol{\zeta},\boldsymbol{\theta}_{i}^{t}\right)\right\Vert _{{\rm op}}-\left\Vert \frac{1}{N}\sum_{i=1}^{N}\nabla_{11}^{2}U\left(\boldsymbol{\zeta}',\boldsymbol{\theta}_{i}^{t'}\right)\right\Vert _{{\rm op}}\right|\\
 & \quad\leq\left\Vert \frac{1}{N}\sum_{i=1}^{N}\nabla_{11}^{2}U\left(\boldsymbol{\zeta},\boldsymbol{\theta}_{i}^{t}\right)-\nabla_{11}^{2}U\left(\boldsymbol{\zeta}',\boldsymbol{\theta}_{i}^{t}\right)\right\Vert _{{\rm op}}+\left\Vert \frac{1}{N}\sum_{i=1}^{N}\nabla_{11}^{2}U\left(\boldsymbol{\zeta}',\boldsymbol{\theta}_{i}^{t}\right)-\nabla_{11}^{2}U\left(\boldsymbol{\zeta}',\boldsymbol{\theta}_{i}^{t'}\right)\right\Vert _{{\rm op}}\\
 & \quad\stackrel{\left(a\right)}{\leq}\frac{1}{N}\sum_{i=1}^{N}\left\Vert \nabla_{111}^{3}U\left[\boldsymbol{u}_{i},\boldsymbol{\theta}_{i}^{t}\right]\right\Vert _{{\rm op}}\left\Vert \boldsymbol{\zeta}-\boldsymbol{\zeta}'\right\Vert _{2}+\frac{1}{N}\sum_{i=1}^{N}\left\Vert \nabla_{121}^{3}U\left[\boldsymbol{\zeta}',\boldsymbol{v}_{i}\right]\right\Vert _{{\rm op}}\left\Vert \boldsymbol{\theta}_{i}^{t}-\boldsymbol{\theta}_{i}^{t'}\right\Vert _{2}\\
 & \quad\stackrel{\left(b\right)}{\leq}\frac{1}{N}\sum_{i=1}^{N}\left\Vert \nabla_{111}^{3}U\left[\boldsymbol{u}_{i},\boldsymbol{\theta}_{i}^{t}\right]\right\Vert _{{\rm op}}\gamma\sqrt{N}+\frac{1}{N}\sum_{i=1}^{N}C\kappa^{2}\left(1+\left\Vert \boldsymbol{v}_{i}\right\Vert _{2}\right)\left(r_{1,i}+r_{2,i}+1\right)\gamma\\
 & \quad\stackrel{\left(c\right)}{\leq}\frac{1}{N}\sum_{i=1}^{N}\left\Vert \nabla_{111}^{3}U\left[\boldsymbol{u}_{i},\boldsymbol{\theta}_{i}^{t}\right]\right\Vert _{{\rm op}}\gamma\sqrt{N}+\frac{1}{N}\sum_{i=1}^{N}C\kappa^{2}\left(r_{1,i}^{2}+r_{2,i}^{2}+1\right)\gamma,
\end{align*}
where in step $\left(a\right)$, we have $\boldsymbol{u}_{i}\in\left[\boldsymbol{\zeta},\boldsymbol{\zeta}'\right]$
and $\boldsymbol{v}_{i}\in\left[\boldsymbol{\theta}_{i}^{t},\boldsymbol{\theta}_{i}^{t'}\right]$;
in step $\left(b\right)$, we apply Proposition \ref{prop:2nd_setting_op_bound};
in step $\left(c\right)$, we use the fact that $\left\Vert \boldsymbol{v}_{i}\right\Vert _{2}\leq\left\Vert \boldsymbol{\theta}_{i}^{t}\right\Vert _{2}+\left\Vert \boldsymbol{\theta}_{i}^{t'}-\boldsymbol{\theta}_{i}^{t'}\right\Vert _{2}$.
We have:
\[
\nabla_{111}^{3}U\left[\boldsymbol{u}_{i},\boldsymbol{\theta}_{i}^{t}\right]=\boldsymbol{M}_{1,i}+\boldsymbol{M}_{2,i}+\boldsymbol{M}_{3,i}+\boldsymbol{M}_{4,i}\in\left(\mathbb{R}^{d}\right)^{\otimes3},
\]
for which
\begin{align*}
\boldsymbol{M}_{1,i} & =K\kappa^{4}\mathbb{E}_{{\cal P}}\left\{ \sigma''\left(\left\langle \kappa\boldsymbol{u}_{i},\boldsymbol{x}\right\rangle \right)\sigma\left(\left\langle \kappa\boldsymbol{\theta}_{i}^{t},\boldsymbol{x}\right\rangle \right)\boldsymbol{x}\otimes\boldsymbol{\theta}_{i}^{t}\otimes\boldsymbol{x}\right\} ,\\
\boldsymbol{M}_{2,i} & =K\kappa^{4}\mathbb{E}_{{\cal P}}\left\{ \sigma''\left(\left\langle \kappa\boldsymbol{u}_{i},\boldsymbol{x}\right\rangle \right)\sigma\left(\left\langle \kappa\boldsymbol{\theta}_{i}^{t},\boldsymbol{x}\right\rangle \right)\boldsymbol{x}\otimes\boldsymbol{x}\otimes\boldsymbol{\theta}_{i}^{t}\right\} ,\\
\boldsymbol{M}_{3,i} & =K\kappa^{4}\mathbb{E}_{{\cal P}}\left\{ \sigma''\left(\left\langle \kappa\boldsymbol{u}_{i},\boldsymbol{x}\right\rangle \right)\sigma\left(\left\langle \kappa\boldsymbol{\theta}_{i}^{t},\boldsymbol{x}\right\rangle \right)\boldsymbol{\theta}_{i}^{t}\otimes\boldsymbol{x}\otimes\boldsymbol{x}\right\} ,\\
\boldsymbol{M}_{4,i} & =\kappa^{5}\mathbb{E}_{{\cal P}}\left\{ \left\langle \boldsymbol{u}_{i},\boldsymbol{\theta}_{i}^{t}\right\rangle \sigma'''\left(\left\langle \kappa\boldsymbol{u}_{i},\boldsymbol{x}\right\rangle \right)\sigma\left(\left\langle \kappa\boldsymbol{\theta}_{i}^{t},\boldsymbol{x}\right\rangle \right)\boldsymbol{x}\otimes\boldsymbol{x}\otimes\boldsymbol{x}\right\} .
\end{align*}
Note that $\left\Vert \boldsymbol{M}_{1,i}\right\Vert _{{\rm op}}=\left\Vert \boldsymbol{M}_{2,i}\right\Vert _{{\rm op}}=\left\Vert \boldsymbol{M}_{3,i}\right\Vert _{{\rm op}}$.
We then have:
\begin{equation}
\left|Q-Q_{\gamma}\right|\leq\frac{1}{N}\sum_{i=1}^{N}\sup_{\boldsymbol{u}_{i}\in\mathbb{R}^{d}}\left(3\left\Vert \boldsymbol{M}_{1,i}\right\Vert _{{\rm op}}+\left\Vert \boldsymbol{M}_{4,i}\right\Vert _{{\rm op}}\right)\gamma\sqrt{N}+\frac{1}{N}\sum_{i=1}^{N}C\kappa^{2}\left(r_{1,i}^{2}+r_{2,i}^{2}+1\right)\gamma,\label{eq:prop_2ndSetting_supUnorm_gap}
\end{equation}
in which we define:
\[
Q_{\gamma}=\max_{t\in{\cal N}\left(\gamma\right)}\max_{\boldsymbol{\zeta}\in{\cal N}_{d}\left(\gamma\right)}\left\Vert \frac{1}{N}\sum_{i=1}^{N}\nabla_{11}^{2}U\left(\boldsymbol{\zeta},\boldsymbol{\theta}_{i}^{t}\right)\right\Vert _{{\rm op}}.
\]
The next two steps are devoted to bounding $\left\Vert \boldsymbol{M}_{1,i}\right\Vert _{{\rm op}}$
and $\left\Vert \boldsymbol{M}_{4,i}\right\Vert _{{\rm op}}$.

\paragraph*{Step 2: Bounding $\left\Vert \boldsymbol{M}_{1,i}\right\Vert _{{\rm op}}$.}

To bound $\left\Vert \boldsymbol{M}_{1,i}\right\Vert _{{\rm op}}$,
we have for any $\boldsymbol{a},\boldsymbol{b},\boldsymbol{c}\in\mathbb{R}^{d}$:
\begin{align*}
\left\langle \boldsymbol{M}_{1,i},\boldsymbol{a}\otimes\boldsymbol{b}\otimes\boldsymbol{c}\right\rangle  & =\kappa^{4}\mathbb{E}_{{\cal P}}\left\{ \sigma''\left(\left\langle \kappa\boldsymbol{u}_{i},\boldsymbol{x}\right\rangle \right)\sigma\left(\left\langle \kappa\boldsymbol{\theta}_{i}^{t},\boldsymbol{x}\right\rangle \right)\left\langle \boldsymbol{a},\boldsymbol{x}\right\rangle \left\langle \boldsymbol{b},\boldsymbol{\theta}_{i}^{t}\right\rangle \left\langle \boldsymbol{c},\boldsymbol{x}\right\rangle \right\} \\
 & =\kappa^{2}\mathbb{E}_{\boldsymbol{z}}\left\{ \sigma''\left(\left\langle \boldsymbol{\Sigma}\boldsymbol{u}_{i},\boldsymbol{z}\right\rangle \right)\sigma\left(\left\langle \boldsymbol{\Sigma}\boldsymbol{\theta}_{i}^{t},\boldsymbol{z}\right\rangle \right)\left\langle \boldsymbol{\Sigma}\boldsymbol{a},\boldsymbol{z}\right\rangle \left\langle \boldsymbol{b},\boldsymbol{\theta}_{i}^{t}\right\rangle \left\langle \boldsymbol{\Sigma}\boldsymbol{c},\boldsymbol{z}\right\rangle \right\} ,
\end{align*}
where $\boldsymbol{z}\sim\mathsf{N}\left(0,\boldsymbol{I}_{d}\right)$.
Recalling that $\left\Vert \sigma\right\Vert _{\infty},\left\Vert \sigma''\right\Vert _{\infty}\leq C$
and $\left\Vert \boldsymbol{\Sigma}\right\Vert _{2}\leq C$, we thus
have:
\[
\left|\left\langle \boldsymbol{M}_{1,i},\boldsymbol{a}\otimes\boldsymbol{b}\otimes\boldsymbol{c}\right\rangle \right|\leq C\kappa^{2}\left\Vert \boldsymbol{\theta}_{i}^{t}\right\Vert _{2}\left\Vert \boldsymbol{a}\right\Vert _{2}\left\Vert \boldsymbol{b}\right\Vert _{2}\left\Vert \boldsymbol{c}\right\Vert _{2}.
\]
That is, $\left\Vert \boldsymbol{M}_{1,i}\right\Vert _{{\rm op}}\leq C\kappa^{2}\left\Vert \boldsymbol{\theta}_{i}^{t}\right\Vert _{2}$.

\paragraph*{Step 3: Bounding $\left\Vert \boldsymbol{M}_{4,i}\right\Vert _{{\rm op}}$.}

Notice that for $w_{i}=\left\langle \boldsymbol{\Sigma}\boldsymbol{u}_{i},\boldsymbol{z}\right\rangle \sim\mathsf{N}\left(0,\left\Vert \boldsymbol{\Sigma}\boldsymbol{u}_{i}\right\Vert _{2}^{2}\right)$,
\[
\left(w_{i},\boldsymbol{z}\right)\stackrel{{\rm d}}{=}\left(w_{i},{\rm Proj}_{\boldsymbol{\Sigma}\boldsymbol{u}_{i}}^{\perp}\tilde{\boldsymbol{z}}+\frac{w_{i}}{\left\Vert \boldsymbol{\Sigma}\boldsymbol{u}_{i}\right\Vert _{2}^{2}}\boldsymbol{\Sigma}\boldsymbol{u}_{i}\right),
\]
in which $\tilde{\boldsymbol{z}}\sim\mathsf{N}\left(0,\boldsymbol{I}_{d}\right)$
independent of $w_{i}$. We then have:
\begin{align*}
 & \left\langle \boldsymbol{M}_{4,i},\boldsymbol{a}\otimes\boldsymbol{b}\otimes\boldsymbol{c}\right\rangle \\
 & \quad=\kappa^{2}\mathbb{E}_{{\cal P}}\left\{ \left\langle \boldsymbol{u}_{i},\boldsymbol{\theta}_{i}^{t}\right\rangle \sigma'''\left(\left\langle \kappa\boldsymbol{u}_{i},\boldsymbol{x}\right\rangle \right)\sigma\left(\left\langle \kappa\boldsymbol{\theta}_{i}^{t},\boldsymbol{x}\right\rangle \right)\left\langle \kappa\boldsymbol{a},\boldsymbol{x}\right\rangle \left\langle \kappa\boldsymbol{b},\boldsymbol{x}\right\rangle \left\langle \kappa\boldsymbol{c},\boldsymbol{x}\right\rangle \right\} \\
 & \quad=\kappa^{2}\mathbb{E}_{w_{i},\tilde{\boldsymbol{z}}}\Bigg\{\left\langle \boldsymbol{u}_{i},\boldsymbol{\theta}_{i}^{t}\right\rangle \sigma'''\left(w_{i}\right)\sigma\left(\left\langle \boldsymbol{\theta}_{i}^{t},\boldsymbol{S}\tilde{\boldsymbol{z}}+w_{i}\frac{\boldsymbol{\Sigma}^{2}\boldsymbol{u}_{i}}{\left\Vert \boldsymbol{\Sigma}\boldsymbol{u}_{i}\right\Vert _{2}^{2}}\right\rangle \right)\\
 & \qquad\times\Bigg[\left\langle \boldsymbol{S}\tilde{\boldsymbol{z}},\boldsymbol{a}\right\rangle \left\langle \boldsymbol{S}\tilde{\boldsymbol{z}},\boldsymbol{b}\right\rangle \left\langle \boldsymbol{S}\tilde{\boldsymbol{z}},\boldsymbol{c}\right\rangle +w_{i}\sum_{\left(\boldsymbol{v}_{1},\boldsymbol{v}_{2},\boldsymbol{v}_{3}\right)}\left\langle \boldsymbol{S}\tilde{\boldsymbol{z}},\boldsymbol{v}_{1}\right\rangle \left\langle \boldsymbol{S}\tilde{\boldsymbol{z}},\boldsymbol{v}_{2}\right\rangle \frac{\left\langle \boldsymbol{\Sigma}^{2}\boldsymbol{u}_{i},\boldsymbol{v}_{3}\right\rangle }{\left\Vert \boldsymbol{\Sigma}\boldsymbol{u}_{i}\right\Vert _{2}^{2}}\\
 & \qquad+w_{i}^{2}\sum_{\left(\boldsymbol{v}_{1},\boldsymbol{v}_{2},\boldsymbol{v}_{3}\right)}\left\langle \boldsymbol{S}\tilde{\boldsymbol{z}},\boldsymbol{v}_{1}\right\rangle \frac{\left\langle \boldsymbol{\Sigma}^{2}\boldsymbol{u}_{i},\boldsymbol{v}_{2}\right\rangle }{\left\Vert \boldsymbol{\Sigma}\boldsymbol{u}_{i}\right\Vert _{2}^{2}}\frac{\left\langle \boldsymbol{\Sigma}^{2}\boldsymbol{u}_{i},\boldsymbol{v}_{3}\right\rangle }{\left\Vert \boldsymbol{\Sigma}\boldsymbol{u}_{i}\right\Vert _{2}^{2}}+w_{i}^{3}\frac{\left\langle \boldsymbol{\Sigma}^{2}\boldsymbol{u}_{i},\boldsymbol{a}\right\rangle }{\left\Vert \boldsymbol{\Sigma}\boldsymbol{u}_{i}\right\Vert _{2}^{2}}\frac{\left\langle \boldsymbol{\Sigma}^{2}\boldsymbol{u}_{i},\boldsymbol{b}\right\rangle }{\left\Vert \boldsymbol{\Sigma}\boldsymbol{u}_{i}\right\Vert _{2}^{2}}\frac{\left\langle \boldsymbol{\Sigma}^{2}\boldsymbol{u}_{i},\boldsymbol{c}\right\rangle }{\left\Vert \boldsymbol{\Sigma}\boldsymbol{u}_{i}\right\Vert _{2}^{2}}\bigg]\Bigg\},
\end{align*}
where $\boldsymbol{S}_{i}=\boldsymbol{\Sigma}{\rm Proj}_{\boldsymbol{\Sigma}\boldsymbol{u}_{i}}^{\perp}$
for brevity and the summations are over $\boldsymbol{v}_{1},\boldsymbol{v}_{2},\boldsymbol{v}_{3}\in\left\{ \boldsymbol{a},\boldsymbol{b},\boldsymbol{c}\right\} $
with $\boldsymbol{v}_{1}$, $\boldsymbol{v}_{2}$, $\boldsymbol{v}_{3}$
being mutually different. Then by Lemma \ref{lem:2nd_setting_actGaussian},
along with the facts $\left\Vert \sigma\right\Vert _{\infty}\leq C$,
$\left\Vert \boldsymbol{S}\right\Vert _{{\rm op}}\leq\left\Vert \boldsymbol{\Sigma}\right\Vert _{{\rm op}}\leq C$
and $\left\Vert \boldsymbol{\Sigma}\boldsymbol{u}_{i}\right\Vert _{2}\geq C\left\Vert \boldsymbol{u}_{i}\right\Vert _{2}$,
we have:
\begin{align*}
 & \left|\left\langle \boldsymbol{M}_{4,i},\boldsymbol{a}\otimes\boldsymbol{b}\otimes\boldsymbol{c}\right\rangle \right|\\
 & \quad\leq C\kappa^{2}\mathbb{E}_{\tilde{\boldsymbol{z}}}\Bigg\{\frac{\left|\left\langle \boldsymbol{u}_{i},\boldsymbol{\theta}_{i}^{t}\right\rangle \right|}{\left\Vert \boldsymbol{\Sigma}\boldsymbol{u}_{i}\right\Vert _{2}}\Bigg[\left|\left\langle \boldsymbol{S}\tilde{\boldsymbol{z}},\boldsymbol{a}\right\rangle \left\langle \boldsymbol{S}\tilde{\boldsymbol{z}},\boldsymbol{b}\right\rangle \left\langle \boldsymbol{S}\tilde{\boldsymbol{z}},\boldsymbol{c}\right\rangle \right|+\sum_{\left(\boldsymbol{v}_{1},\boldsymbol{v}_{2},\boldsymbol{v}_{3}\right)}\left|\left\langle \boldsymbol{S}\tilde{\boldsymbol{z}},\boldsymbol{v}_{1}\right\rangle \left\langle \boldsymbol{S}\tilde{\boldsymbol{z}},\boldsymbol{v}_{2}\right\rangle \right|\frac{\left|\left\langle \boldsymbol{\Sigma}^{2}\boldsymbol{u}_{i},\boldsymbol{v}_{3}\right\rangle \right|}{\left\Vert \boldsymbol{\Sigma}\boldsymbol{u}_{i}\right\Vert _{2}}\\
 & \qquad+\sum_{\left(\boldsymbol{v}_{1},\boldsymbol{v}_{2},\boldsymbol{v}_{3}\right)}\left|\left\langle \boldsymbol{S}\tilde{\boldsymbol{z}},\boldsymbol{v}_{1}\right\rangle \right|\frac{\left|\left\langle \boldsymbol{\Sigma}^{2}\boldsymbol{u}_{i},\boldsymbol{v}_{2}\right\rangle \right|}{\left\Vert \boldsymbol{\Sigma}\boldsymbol{u}_{i}\right\Vert _{2}}\frac{\left|\left\langle \boldsymbol{\Sigma}^{2}\boldsymbol{u}_{i},\boldsymbol{v}_{3}\right\rangle \right|}{\left\Vert \boldsymbol{\Sigma}\boldsymbol{u}_{i}\right\Vert _{2}}+\frac{\left|\left\langle \boldsymbol{\Sigma}^{2}\boldsymbol{u}_{i},\boldsymbol{a}\right\rangle \right|}{\left\Vert \boldsymbol{\Sigma}\boldsymbol{u}_{i}\right\Vert _{2}}\frac{\left|\left\langle \boldsymbol{\Sigma}^{2}\boldsymbol{u}_{i},\boldsymbol{b}\right\rangle \right|}{\left\Vert \boldsymbol{\Sigma}\boldsymbol{u}_{i}\right\Vert _{2}}\frac{\left|\left\langle \boldsymbol{\Sigma}^{2}\boldsymbol{u}_{i},\boldsymbol{c}\right\rangle \right|}{\left\Vert \boldsymbol{\Sigma}\boldsymbol{u}_{i}\right\Vert _{2}}\bigg]\Bigg\}\\
 & \quad\leq C\kappa^{2}\left\Vert \boldsymbol{\theta}_{i}^{t}\right\Vert _{2}\left\Vert \boldsymbol{a}\right\Vert _{2}\left\Vert \boldsymbol{b}\right\Vert _{2}\left\Vert \boldsymbol{c}\right\Vert _{2}.
\end{align*}
That is, $\left\Vert \boldsymbol{M}_{4,i}\right\Vert _{{\rm op}}\leq C\kappa^{2}\left\Vert \boldsymbol{\theta}_{i}^{t}\right\Vert _{2}$.

\paragraph*{Step 4: Finishing the proof.}

From the bounds on $\left\Vert \boldsymbol{M}_{1,i}\right\Vert _{{\rm op}}$
and $\left\Vert \boldsymbol{M}_{4,i}\right\Vert _{{\rm op}}$ and
Eq. (\ref{eq:prop_2ndSetting_supUnorm_gap}) , we get:
\begin{align*}
\left|Q-Q_{\gamma}\right| & \leq\frac{C}{N}\sum_{i=1}^{N}\kappa^{2}\gamma\left\Vert \boldsymbol{\theta}_{i}^{t}\right\Vert _{2}\sqrt{N}+\frac{1}{N}\sum_{i=1}^{N}C\kappa^{2}\left(r_{1,i}^{2}+r_{2,i}^{2}+1\right)\gamma\\
 & \leq\frac{C}{N}\sum_{i=1}^{N}\kappa^{2}\gamma\left(r_{1,i}+r_{2,i}+1\right)\sqrt{N}+\frac{1}{N}\sum_{i=1}^{N}C\kappa^{2}\left(r_{1,i}^{2}+r_{2,i}^{2}+1\right)\gamma\leq C\kappa^{2}\gamma\left(\sqrt{NA}+A+\sqrt{N}\right),\\
A & =\frac{1}{N}\sum_{i=1}^{N}\left(r_{1,i}^{2}+r_{2,i}^{2}\right).
\end{align*}
Recall that $r_{1,i}^{2}+r_{2,i}^{2}$ is $C$-sub-exponential. Then
by Lemma \ref{lem:subgauss_subexp_properties}, for $\delta\in\left(0,1\right)$,
$\mathbb{P}\left\{ A\geq C_{1}\left(1+\delta\right)\right\} \leq C\exp\left(-CN\delta^{2}\right)$,
where $C_{1}=\int\left(r_{1}^{2}+r_{2}^{2}\right){\rm d}\rho_{r}\leq C$.
Furthermore, since $\left(\psi_{t}\left(r_{1,i},r_{2,i}\right)\right)_{1}$
and $\left(\psi_{t}\left(r_{1,i},r_{2,i}\right)\right)_{2}$ are $C$-sub-Gaussian,
using Lemma \ref{lem:Unorm_2ndSetting} and the union bound, we obtain
for sufficiently large $C_{*}$,
\[
\mathbb{P}\left\{ Q_{\gamma}\geq C_{*}\right\} \leq\left|{\cal N}_{d}\left(\gamma\right)\right|\left|{\cal N}\left(\gamma\right)\right|C\exp\left(Cd-CN/\kappa^{4}\right)\leq\left(\frac{3}{\gamma}\right)^{d+1}C\exp\left(Cd-CN/\kappa^{4}\right).
\]
Let us choose $\gamma=1/\left(4\kappa^{2}\sqrt{N}\right)<1/3$ and
$\delta=0.5$. Then for sufficiently large $C_{*}$,
\begin{align*}
\mathbb{P}\left\{ Q\geq C_{*}\right\}  & \leq C\exp\left(-CN\right)+\left(C\kappa^{2}\sqrt{N}\right)^{d+1}C\exp\left(Cd-CN/\kappa^{4}\right)\\
 & \leq C\exp\left(Cd\log\left(\kappa^{2}\sqrt{N}+e\right)-CN/\kappa^{4}\right).
\end{align*}
This completes the proof.
\end{proof}

\appendix

\section{Technical lemmas}

\subsection{Sub-Gaussian and sub-exponential random variables\label{subsec:Sub-Gaussian-RV}}

We recall the Orlicz norms for a real-valued random variable $X$:
\[
\left\Vert X\right\Vert _{\psi_{2}}=\sup_{p\geq1}\frac{1}{\sqrt{p}}\mathbb{E}\left\{ \left|X\right|^{p}\right\} ^{1/p},\qquad\left\Vert X\right\Vert _{\psi_{1}}=\sup_{p\geq1}\frac{1}{p}\mathbb{E}\left\{ \left|X\right|^{p}\right\} ^{1/p}.
\]
A real-valued random variable $X$ is $K$-sub-Gaussian if $K=\left\Vert X\right\Vert _{\psi_{2}}$
is finite. It is $K$-sub-exponential if $K=\left\Vert X\right\Vert _{\psi_{1}}$
is finite. A random vector $\boldsymbol{X}$ is $K$-sub-Gaussian
if $\left\langle \boldsymbol{v},\boldsymbol{X}\right\rangle $ is
sub-Gaussian for any $\boldsymbol{v}\in\mathbb{S}^{d-1}$, and in
particular, $K=\sup_{\boldsymbol{v}\in\mathbb{S}^{d-1}}\left\Vert \left\langle \boldsymbol{v},\boldsymbol{X}\right\rangle \right\Vert _{\psi_{2}}<\infty$.

We summarize the following well-known facts about sub-Gaussian and
sub-exponential random variables \cite{vershynin2010introduction}:
\begin{lem}
\label{lem:subgauss_subexp_properties}The following properties hold:
\begin{itemize}
\item $X$ is $K$-sub-Gaussian if and only if there exists a constant $K_{0}$
that differs from $K$ by at most an absolute constant factor, such
that $\mathbb{P}\left\{ \left|X\right|>t\right\} \leq\exp\left(1-t^{2}/K_{0}^{2}\right)$
for all $t\geq0$.
\item $X$ is $K$-sub-exponential if and only if there exists a constant
$K_{0}$ that differs from $K$ by at most an absolute constant factor,
such that $\mathbb{P}\left\{ \left|X\right|>t\right\} \leq\exp\left(1-t/K_{0}\right)$
for all $t\geq0$.
\item For two sub-Gaussian random variables $X$ and $Y$, their sum $X+Y$
is sub-Gaussian with $\psi_{2}$-norm $\left\Vert X+Y\right\Vert _{\psi_{2}}\leq\left\Vert X\right\Vert _{\psi_{2}}+\left\Vert Y\right\Vert _{\psi_{2}}$.
Likewise, if they are sub-exponential, their sum is sub-exponential
with norm $\left\Vert X+Y\right\Vert _{\psi_{1}}\leq\left\Vert X\right\Vert _{\psi_{1}}+\left\Vert Y\right\Vert _{\psi_{1}}$.
\item For two sub-Gaussian random variables $X$ and $Y$, their product
$XY$ is sub-exponential with $\psi_{1}$-norm $\left\Vert XY\right\Vert _{\psi_{1}}\leq\left\Vert X\right\Vert _{\psi_{2}}\left\Vert Y\right\Vert _{\psi_{2}}$.
\item If $X$ is sub-exponential with zero mean and $\left\Vert X\right\Vert _{\psi_{1}}\leq K$,
then for any $t$ such that $\left|t\right|\leq c/K$, $\mathbb{E}\left\{ e^{tX}\right\} \leq e^{Ct^{2}K^{2}}$
for some absolute constants $C,c>0$.
\item Let $X_{1},...,X_{n}$ be independent sub-Gaussian random variables
with zero mean, and let $K=\max_{i\in\left[n\right]}\left\Vert X_{i}\right\Vert _{\psi_{2}}$.
Then for any $t\geq0$,
\[
\mathbb{P}\left\{ \left|\sum_{i=1}^{n}X_{i}\right|\geq tn\right\} \leq e\cdot\exp\left(-\frac{cnt^{2}}{K^{2}}\right),
\]
for an absolute constant $c>0$.
\item Let $X_{1},...,X_{n}$ be independent sub-exponential random variables
with zero mean, and let $K=\max_{i\in\left[n\right]}\left\Vert X_{i}\right\Vert _{\psi_{1}}$.
Then for any $t\geq0$,
\[
\mathbb{P}\left\{ \left|\sum_{i=1}^{n}X_{i}\right|\geq tn\right\} \leq2\exp\left(-cn\min\left(\frac{t^{2}}{K^{2}},\frac{t}{K}\right)\right),
\]
for an absolute constant $c>0$.
\end{itemize}
\end{lem}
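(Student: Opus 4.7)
}
All eight bullets are standard consequences of the definitions of the Orlicz norms $\|\cdot\|_{\psi_2}$ and $\|\cdot\|_{\psi_1}$, so the plan is to verify each one in turn using the moment characterizations stated at the top of the lemma. The unifying tool is the equivalence between moment growth, tail decay, and moment generating function (MGF) bounds; once this equivalence is established for the two Orlicz norms, the remaining statements follow by short manipulations.

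First I would handle the two tail equivalences (bullets 1 and 2). For the sub-Gaussian case, if $\|X\|_{\psi_2}\le K$, Markov's inequality applied to $|X|^p$ gives $\mathbb{P}\{|X|>t\}\le (Kp^{1/2}/t)^p$ for every integer $p\ge 1$; optimizing over $p$ by choosing $p\simeq t^2/(eK^2)$ yields a bound of the form $\exp(1-ct^2/K^2)$. Conversely, from a tail bound $\mathbb{P}\{|X|>t\}\le \exp(1-t^2/K_0^2)$ one computes $\mathbb{E}|X|^p=\int_0^\infty p t^{p-1}\mathbb{P}\{|X|>t\}\,\mathrm{d}t\le C^p K_0^p p^{p/2}$ by a gamma-function integral, which gives $\|X\|_{\psi_2}\lesssim K_0$. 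The sub-exponential case is entirely analogous with $p^{1/2}$ replaced by $p$ and $t^2$ replaced by $t$.

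Next I would verify the two triangle inequalities (bullets 3 and 4). Both reduce to Minkowski's inequality applied inside the $\sup_{p\ge 1}$ defining the Orlicz norm: $(\mathbb{E}|X+Y|^p)^{1/p}\le (\mathbb{E}|X|^p)^{1/p}+(\mathbb{E}|Y|^p)^{1/p}$, and then dividing by $\sqrt{p}$ (resp.\ $p$) and taking the supremum in $p$. Bullet 5 (product of two sub-Gaussians is sub-exponential) follows immediately from Cauchy--Schwarz: $\mathbb{E}|XY|^p\le (\mathbb{E}|X|^{2p})^{1/2}(\mathbb{E}|Y|^{2p})^{1/2}\le (\|X\|_{\psi_2}\|Y\|_{\psi_2})^p\,(2p)^{p}$, giving $\|XY\|_{\psi_1}\le 2\|X\|_{\psi_2}\|Y\|_{\psi_2}$, which is the claimed bound up to an absolute constant.

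For bullet 6 (MGF bound for centered sub-exponentials) I would expand $e^{tX}=1+tX+\sum_{p\ge 2}(tX)^p/p!$, use $\mathbb{E}X=0$ to drop the linear term, and estimate $\mathbb{E}|X|^p\le (Kp)^p$; then $\sum_{p\ge 2}(t K p)^p/p!\le \sum_{p\ge 2}(CtK)^p$ via Stirling, and for $|t|\le c/K$ sufficiently small this geometric series is dominated by $Ct^2K^2$, which exponentiates to $e^{Ct^2K^2}$. Finally, bullets 7 and 8 are the standard Hoeffding and Bernstein inequalities, proved by the Chernoff method: write $\mathbb{P}\{\sum X_i\ge tn\}\le e^{-\lambda t n}\prod_i \mathbb{E}e^{\lambda X_i}$, plug in the MGF bounds (a sub-Gaussian analogue of bullet 6 that one proves the same way, valid for all $\lambda\in\mathbb{R}$ with bound $e^{C\lambda^2K^2}$, and bullet 6 itself for the sub-exponential case), and optimize over $\lambda$: the optimal $\lambda$ is $\lambda\simeq t/K^2$, which lies in the allowed range $|\lambda|\le c/K$ precisely when $t\le cK$, giving the $\min(t^2/K^2,t/K)$ piecewise exponent in bullet 8 and the clean $t^2/K^2$ exponent in bullet 7; a union bound with the analogous lower-tail estimate gives the absolute-value statements.

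There is no real obstacle here; every step is a short calculation once one commits to the moment-based definition of the Orlicz norms. The only bookkeeping subtlety is tracking the absolute constants between the three equivalent characterizations (moments, tails, MGF), which is why the lemma states the equivalences up to absolute constant factors rather than as equalities; I would simply absorb all such factors into the unspecified constants $C$, $c$, $K_0$ throughout.
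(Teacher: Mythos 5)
Your proposal is correct, but it is worth noting that the paper does not prove this lemma at all: it is stated as a summary of standard facts with a citation to Vershynin's notes, so your sketch supplies the textbook arguments that the paper implicitly delegates to that reference. Your route (moment characterization of the $\psi_2$/$\psi_1$ norms, Markov optimized over $p$ for the tail equivalences, Minkowski for the triangle inequalities, Cauchy--Schwarz for the product, series expansion for the MGF, Chernoff for the two concentration bounds) is exactly the standard one underlying that citation, and every step goes through. Two small bookkeeping remarks. First, with the moment-sup definition of the norms used in this paper, your Cauchy--Schwarz computation yields $\left\Vert XY\right\Vert _{\psi_{1}}\leq2\left\Vert X\right\Vert _{\psi_{2}}\left\Vert Y\right\Vert _{\psi_{2}}$, not the constant-free bound stated in the bullet; since every use of the lemma in the paper tolerates absolute constants, this discrepancy is harmless, but you are right to flag it rather than claim the exact constant. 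Second, in your proof of the sub-Gaussian Hoeffding bound you assert that the MGF estimate $\mathbb{E}\left\{ e^{\lambda X}\right\} \leq e^{C\lambda^{2}K^{2}}$ for centered sub-Gaussians holds for all $\lambda\in\mathbb{R}$ "by the same series argument"; the naive series expansion only converges for $\left|\lambda\right|\lesssim1/K$, and since the optimizing $\lambda\simeq t/K^{2}$ can exceed $1/K$, you need the standard extra step for large $\lambda$ (e.g. $\lambda x\leq\lambda^{2}K^{2}+x^{2}/\left(4K^{2}\right)$ together with $\mathbb{E}\left\{ e^{cX^{2}/K^{2}}\right\} \leq C$, which follows from the sub-Gaussian tail). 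This is a one-line repair of a genuinely needed step, not a flaw in the overall plan.
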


We also have the following martingale concentration result for sub-exponential
martingale difference:
\begin{lem}
\label{lem:azuma}Let $\left(X^{k}\right)_{k\geq0}$ be a real-valued
martingale w.r.t. the filtration $\left({\cal F}^{k}\right)_{k\geq0}$
with $X^{0}=0$. Suppose that the martingale difference $X^{k}-X^{k-1}$,
conditioned on ${\cal F}^{k-1}$, is $K$-sub-exponential with zero
mean. Then:
\[
\mathbb{P}\left\{ \max_{k\leq n}\left|X^{k}\right|\geq c_{1}K\sqrt{n}\delta\right\} \leq2\exp\left(-\delta^{2}\right),
\]
for $\delta\leq c_{2}\sqrt{n}$, for some $c_{1},c_{2}>0$ absolute
constants.
\end{lem}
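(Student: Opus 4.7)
The plan is to run a standard exponential supermartingale argument, combined with Doob's maximal inequality, adapted to sub-exponential martingale differences. The key input is the fifth bullet of Lemma \ref{lem:subgauss_subexp_properties}: if $Y$ is a mean-zero $K$-sub-exponential random variable, then $\mathbb{E}\{e^{tY}\} \leq e^{Ct^2K^2}$ whenever $|t| \leq c/K$, with absolute constants $C, c > 0$. Applying this conditionally on ${\cal F}^{k-1}$ to the martingale difference $X^k - X^{k-1}$ yields $\mathbb{E}\{e^{t(X^k - X^{k-1})} \mid {\cal F}^{k-1}\} \leq e^{Ct^2K^2}$ for $|t| \leq c/K$.

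First I would use this to show that, for every $t$ with $|t| \leq c/K$, the process $M^k := \exp(tX^k - Ct^2K^2 k)$ is a non-negative supermartingale with $M^0 = 1$, while $(e^{tX^k})_{k \geq 0}$ is a non-negative submartingale by Jensen's inequality (using that $(X^k)$ is a martingale). Next I would apply Doob's maximal inequality to the submartingale $(e^{tX^k})_{k \leq n}$ for $t > 0$: for any $s > 0$,
\[
\mathbb{P}\Bigl\{\max_{k \leq n} X^k \geq s\Bigr\} \leq \mathbb{P}\Bigl\{\max_{k \leq n} e^{tX^k} \geq e^{ts}\Bigr\} \leq e^{-ts}\,\mathbb{E}\{e^{tX^n}\} \leq \exp\bigl(Ct^2K^2 n - ts\bigr),
\]
where the last inequality follows from iterating the conditional exponential moment bound $n$ times using the tower property.

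Then I would optimize by setting $s = c_1 K\sqrt{n}\,\delta$ and $t = \delta/(c'K\sqrt{n})$ for a suitable absolute constant $c' > 0$. Substituting gives the exponent $C\delta^2/c'^2 - c_1 \delta^2/c'$, which is at most $-\delta^2$ once $c_1$ and $c'$ are chosen large enough. The constraint $|t| \leq c/K$ becomes $\delta \leq cc'\sqrt{n}$, which fixes $c_2 := cc'$ and exactly matches the hypothesis $\delta \leq c_2\sqrt{n}$. Finally, applying the same argument to the martingale $(-X^k)_{k \geq 0}$, whose differences are also conditionally mean-zero and $K$-sub-exponential, and taking a union bound over the two events $\{\max_{k \leq n} X^k \geq c_1 K\sqrt{n}\delta\}$ and $\{\max_{k \leq n} (-X^k) \geq c_1 K\sqrt{n}\delta\}$ produces the factor of $2$ in the stated bound for $\max_{k \leq n} |X^k|$.

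There is no real obstacle here beyond careful bookkeeping of the absolute constants: the only subtlety is to ensure that the single choice of $t$ that optimizes the Chernoff exponent also satisfies $|t| \leq c/K$ uniformly over the stated range of $\delta$, and this is precisely what the range restriction $\delta \leq c_2\sqrt{n}$ encodes.
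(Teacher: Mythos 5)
Your proof is correct and follows essentially the same template as the paper's: iterate the conditional exponential-moment bound from Lemma \ref{lem:subgauss_subexp_properties} to control $\mathbb{E}\{e^{tX^n}\}$, then Chernoff and optimize over $t$ subject to $|t|\leq c/K$, which is exactly where the constraint $\delta \leq c_2\sqrt{n}$ originates. The one place you diverge from the paper is the upgrade from a pointwise tail bound to a maximal one: the paper first proves $\mathbb{P}\{|X^n|\geq n\delta\}\leq 2\exp(-n\min(\delta^2/(4CK^2),c\delta/K))$, then passes to the stopped martingale $\bar X^k = X^{k\wedge T}$ with $T=\min\{k:|X^k|\geq n\delta\}$ and observes that the same MGF iteration (and hence the same bound) applies to $\bar X$. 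You instead invoke Doob's submartingale maximal inequality directly on $(e^{tX^k})_{k\leq n}$ for $t>0$, getting $\mathbb{P}\{\max_{k\leq n}X^k\geq s\}\leq e^{-ts}\mathbb{E}\{e^{tX^n}\}$ in one stroke and then union-bounding with the same estimate for $(-X^k)$. The two are interchangeable here: Doob's inequality is itself a stopping-time argument under the hood, both yield identical Chernoff exponents, and both handle the two tails by symmetry. Your route is marginally cleaner in that it avoids the slightly informal equivalence the paper asserts between $\max_{k\leq n}|X^k|\geq n\delta$ and a condition on $\bar X^n$, at the cost of invoking a named theorem rather than a self-contained argument. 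One tiny point of economy: you introduce the supermartingale $M^k = \exp(tX^k - Ct^2K^2 k)$ but never use it --- the Doob step only requires the submartingale property of $(e^{tX^k})$ plus the iterated MGF bound, so that paragraph could be trimmed.
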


\begin{proof}
We have for $t>0$ and $t$ such that $\left|t\right|\leq c/K$,
\[
\mathbb{E}\left\{ e^{t\left(X^{k}-X^{k-1}\right)}\middle|{\cal F}^{k-1}\right\} \leq e^{Ct^{2}K^{2}},
\]
for some absolute constants $C,c>0$ by Lemma \ref{lem:subgauss_subexp_properties}.
This results in the recursive relation:
\[
\mathbb{E}\left\{ e^{tX^{k}}\right\} =\mathbb{E}\left\{ e^{tX^{k-1}}\mathbb{E}\left\{ e^{t\left(X^{k}-X^{k-1}\right)}\middle|{\cal F}^{k-1}\right\} \right\} \leq\mathbb{E}\left\{ e^{tX^{k-1}}\right\} e^{Ct^{2}K^{2}},
\]
which implies
\[
\mathbb{E}\left\{ e^{tX^{n}}\right\} \leq e^{Ct^{2}K^{2}n}.
\]
A standard argument yields a tail bound on $\mathbb{P}\left\{ \left|X_{n}\right|\geq n\delta\right\} $.
In particular, by Markov's inequality, for $\delta>0$,
\[
\mathbb{P}\left\{ X^{n}\geq n\delta\right\} \leq\inf_{t\in\left[0,c/K\right]}e^{-n\delta t}\mathbb{E}\left\{ e^{tX^{n}}\right\} \leq\inf_{t\in\left[0,c/K\right]}e^{Ct^{2}K^{2}n-n\delta t}\leq\exp\left(-n\min\left(\frac{\delta^{2}}{4CK^{2}},\frac{c\delta}{K}\right)\right).
\]
The same argument yields the same bound for $\mathbb{P}\left\{ -X_{n}\geq n\delta\right\} $.
Then:
\[
\mathbb{P}\left\{ \left|X^{n}\right|\geq n\delta\right\} \leq2\exp\left(-n\min\left(\frac{\delta^{2}}{4CK^{2}},\frac{c\delta}{K}\right)\right).
\]
Define the stopping time $T=\min\left\{ k:\;\left|X^{k}\right|\geq n\delta\right\} $
and the martingale $\bar{X}^{k}=X^{k\wedge T}$. Since $\max_{k\leq n}\left|X^{k}\right|\geq n\delta$
if and only if $\bar{X}^{n}\geq n\delta$, the same bound applies
to $\max_{k\leq n}\left|X^{k}\right|$. Finally, defining $z=\sqrt{n\delta^{2}/\left(4CK^{2}\right)}$,
for $z\leq\sqrt{4nc^{2}C}$, we have:
\[
\mathbb{P}\left\{ \max_{k\leq n}\left|X^{k}\right|\geq\sqrt{4CK^{2}n}z\right\} \leq2\exp\left(-z^{2}\right).
\]
This completes the proof.
\end{proof}
The following lemma provides an estimate on the expected norm of sub-exponential
random vector:
\begin{lem}
\label{lem:subexp_vector_norm_E}Let $\boldsymbol{X}$ be a sub-exponential
random vector in $\mathbb{R}^{d}$ with $\left\Vert \boldsymbol{X}\right\Vert _{\psi_{1}}\leq K$
and $\mathbb{E}\left\{ \boldsymbol{X}\right\} =\boldsymbol{0}$. Then
for some sufficiently large constant $C$ that does not depend on
$d$ or $K$,
\[
\mathbb{E}\left\{ \left\Vert \boldsymbol{X}\right\Vert _{2}^{2}\right\} \leq C\left(d^{2}K^{2}+1\right).
\]
\end{lem}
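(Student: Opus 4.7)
The plan is to reduce the vector inequality to a coordinate-wise statement about real-valued sub-exponential random variables, exploiting the paper's convention that $\|\boldsymbol{X}\|_{\psi_{1}}$ for a random vector means $\sup_{\boldsymbol{v}\in\mathbb{S}^{d-1}}\|\langle\boldsymbol{v},\boldsymbol{X}\rangle\|_{\psi_{1}}$ (the same convention used for sub-Gaussian vectors in Appendix \ref{subsec:Sub-Gaussian-RV} and, implicitly, in Assumption \ref{enu:Assump_F} and Proposition \ref{prop:1st_setting_F_bound}). With this definition in hand, the lemma follows almost immediately from the defining identity $\|X\|_{\psi_{1}}=\sup_{p\geq1}p^{-1}\mathbb{E}\{|X|^{p}\}^{1/p}$ applied to each of the $d$ coordinates of $\boldsymbol{X}$.

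Concretely, the first step is to take $\boldsymbol{v}=\boldsymbol{e}_{i}$ for $i\in[d]$, which gives $\|X_{i}\|_{\psi_{1}}\leq\|\boldsymbol{X}\|_{\psi_{1}}\leq K$ for each coordinate $X_{i}=\langle\boldsymbol{e}_{i},\boldsymbol{X}\rangle$. Specializing the Orlicz-norm definition at $p=2$ yields $\mathbb{E}\{X_{i}^{2}\}^{1/2}\leq 2\|X_{i}\|_{\psi_{1}}\leq 2K$, so $\mathbb{E}\{X_{i}^{2}\}\leq 4K^{2}$. Summing over coordinates gives
\[
\mathbb{E}\{\|\boldsymbol{X}\|_{2}^{2}\}=\sum_{i=1}^{d}\mathbb{E}\{X_{i}^{2}\}\leq 4dK^{2}\leq 4\bigl(d^{2}K^{2}+1\bigr),
\]
establishing the claim (in fact with the slightly stronger dimension dependence $O(dK^{2})$, which trivially implies the stated $O(d^{2}K^{2}+1)$). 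The mean-zero hypothesis plays no role in this argument and is presumably included only because the lemma is invoked in settings (e.g.\ Lemma \ref{lem:propChaos-E4-norm}) where $\boldsymbol{X}$ arises as a stochastic-gradient innovation and so is automatically centered.

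An alternative, slightly cruder route which matches the stated $d^{2}K^{2}$ scaling more literally is an $\varepsilon$-net argument: fix a $\tfrac{1}{2}$-net $\mathcal{N}\subset\mathbb{S}^{d-1}$ with $|\mathcal{N}|\leq 5^{d}$, use the standard bound $\|\boldsymbol{X}\|_{2}\leq 2\max_{\boldsymbol{v}\in\mathcal{N}}|\langle\boldsymbol{v},\boldsymbol{X}\rangle|$, invoke the sub-exponential tail bound from Lemma \ref{lem:subgauss_subexp_properties} together with a union bound to obtain $\mathbb{P}\{\max_{\boldsymbol{v}\in\mathcal{N}}|\langle\boldsymbol{v},\boldsymbol{X}\rangle|>t\}\leq\exp(C_{1}d-C_{2}t/K)$, and integrate this tail (splitting at $t\asymp dK$) to recover $\mathbb{E}\{\|\boldsymbol{X}\|_{2}^{2}\}\lesssim d^{2}K^{2}+1$.

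There is no substantive obstacle: both routes are routine, and the only ``content'' of the lemma is the interpretation of the vector $\psi_{1}$-norm together with a standard passage from $L^{p}$-moment control to a squared-norm bound. I would present the coordinate-wise version since it is a one-line argument and already gives a stronger conclusion than what is stated.
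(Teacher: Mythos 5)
Your proposal is correct, and your preferred (coordinate-wise) route is genuinely different from the paper's. The paper proves the lemma via a $\tfrac{1}{2}$-net argument: it bounds $\|\boldsymbol{X}\|_{2}\leq 2\sup_{\boldsymbol{u}\in\mathcal{N}}\langle\boldsymbol{u},\boldsymbol{X}\rangle$ over a net $\mathcal{N}$ of size $\leq 6^{d}$, derives a sub-exponential tail via the MGF bound from Lemma \ref{lem:subgauss_subexp_properties} (which is where the mean-zero hypothesis enters), union-bounds over $\mathcal{N}$, and integrates the tail with a cutoff at $\delta\asymp dK$ to land on $d^{2}K^{2}$. Your second route is essentially this same argument. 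Your first, coordinate-wise route is simpler and strictly stronger: applying the $\psi_{1}$-norm definition at $p=2$ to each $X_{i}=\langle\boldsymbol{e}_{i},\boldsymbol{X}\rangle$ gives $\mathbb{E}\{X_{i}^{2}\}\leq 4K^{2}$ directly, hence $\mathbb{E}\{\|\boldsymbol{X}\|_{2}^{2}\}\leq 4dK^{2}$, improving the stated dimension dependence from $d^{2}K^{2}$ to $dK^{2}$ and sidestepping both the net construction and the mean-zero hypothesis. The trade-off is purely cosmetic: the paper's net argument produces the $d^{2}$ factor that literally appears in the lemma statement (and propagates into the $\epsilon D^{2}\kappa_{5}^{2}\delta$ term of Theorem \ref{thm:propChaos} via Lemma \ref{lem:propChaos-E4-norm}), whereas your argument would actually tighten that downstream constant to $\epsilon D\kappa_{5}^{2}\delta$. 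You correctly diagnose that the zero-mean assumption is dispensable for the bound itself and is present in the statement because of how the lemma is invoked.
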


\begin{proof}
To compute $\mathbb{E}\left\{ \left\Vert \boldsymbol{X}\right\Vert _{2}^{2}\right\} $,
we first provide a tail bound on $\mathbb{P}\left\{ \left\Vert \boldsymbol{X}\right\Vert _{2}\geq\delta\right\} $.
Consider an epsilon-net ${\cal N}\subset\mathbb{S}^{d-1}$ such that
for any $\boldsymbol{u}\in\mathbb{S}^{d-1}$, there exists $\boldsymbol{u}'\in{\cal N}$
with $\left\Vert \boldsymbol{u}-\boldsymbol{u}'\right\Vert _{2}\leq1/2$.
There exists such an epsilon-net \cite{vershynin2010introduction}
with size $\left|{\cal N}\right|\leq6^{d}$. For $\boldsymbol{u}\in\mathbb{S}^{d-1}$,
let $\hat{\boldsymbol{u}}\left(\boldsymbol{u}\right)\in{\cal N}$
be such that $\left\Vert \boldsymbol{u}-\hat{\boldsymbol{u}}\left(\boldsymbol{u}\right)\right\Vert _{2}\leq1/2$.
Then:
\begin{align*}
\left\Vert \boldsymbol{X}\right\Vert _{2} & =\sup_{\boldsymbol{u}\in\mathbb{S}^{d-1}}\left\langle \boldsymbol{u},\boldsymbol{X}\right\rangle =\sup_{\boldsymbol{u}\in\mathbb{S}^{d-1}}\left(\left\langle \boldsymbol{u}-\hat{\boldsymbol{u}}\left(\boldsymbol{u}\right),\boldsymbol{X}\right\rangle +\left\langle \hat{\boldsymbol{u}}\left(\boldsymbol{u}\right),\boldsymbol{X}\right\rangle \right)\\
 & \leq\frac{1}{2}\sup_{\boldsymbol{u}\in\mathbb{S}^{d-1}}\left\langle \boldsymbol{u},\boldsymbol{X}\right\rangle +\sup_{\boldsymbol{u}\in{\cal N}}\left\langle \boldsymbol{u},\boldsymbol{X}\right\rangle =\frac{1}{2}\left\Vert \boldsymbol{X}\right\Vert _{2}+\sup_{\boldsymbol{u}\in{\cal N}}\left\langle \boldsymbol{u},\boldsymbol{X}\right\rangle ,
\end{align*}
and hence $\left\Vert \boldsymbol{X}\right\Vert _{2}\leq2\sup_{\boldsymbol{u}\in{\cal N}}\left\langle \boldsymbol{u},\boldsymbol{X}\right\rangle $.
Now fix a vector $\boldsymbol{u}\in{\cal N}$. Since $\left\langle \boldsymbol{u},\boldsymbol{X}\right\rangle $
has zero mean and $\left\Vert \left\langle \boldsymbol{u},\boldsymbol{X}\right\rangle \right\Vert _{\psi_{1}}\leq K$,
by Lemma \ref{lem:subgauss_subexp_properties}, for any $t$ such
that $\left|t\right|\leq c_{1}/K$, $\mathbb{E}\left\{ e^{t\left\langle \boldsymbol{u},\boldsymbol{X}\right\rangle }\right\} \leq e^{c_{2}t^{2}K^{2}}$
for some absolute constants $c_{1},c_{2}>0$. By Markov's inequality,
for $\delta\geq0$,
\[
\mathbb{P}\left\{ \left\langle \boldsymbol{u},\boldsymbol{X}\right\rangle \geq\delta\right\} \leq\inf_{t\in\left[0,c_{1}/K\right]}e^{-\delta t}\mathbb{E}\left\{ e^{t\left\langle \boldsymbol{u},\boldsymbol{X}\right\rangle }\right\} \leq\inf_{t\in\left[0,c_{1}/K\right]}e^{c_{2}t^{2}K^{2}-\delta t}\leq\exp\left(-\min\left(\frac{\delta^{2}}{4c_{2}K^{2}},\frac{c_{1}\delta}{K}\right)\right).
\]
The same argument yields the same bound for $\mathbb{P}\left\{ -\left\langle \boldsymbol{u},\boldsymbol{X}\right\rangle \geq\delta\right\} $.
Then:
\[
\mathbb{P}\left\{ \left|\left\langle \boldsymbol{u},\boldsymbol{X}\right\rangle \right|\geq\delta\right\} \leq2\exp\left(-\min\left(\frac{\delta^{2}}{4c_{2}K^{2}},\frac{c_{1}\delta}{K}\right)\right).
\]
By the union bound,
\[
\mathbb{P}\left\{ \left\Vert \boldsymbol{X}\right\Vert _{2}\geq\delta\right\} \leq2\exp\left(d\log6-\min\left(\frac{\delta^{2}}{16c_{2}K^{2}},\frac{c_{1}\delta}{2K}\right)\right).
\]
Now to compute $\mathbb{E}\left\{ \left\Vert \boldsymbol{X}\right\Vert _{2}^{2}\right\} $,
observe that firstly $\mathbb{P}\left\{ \left\Vert \boldsymbol{X}\right\Vert _{2}\geq\delta\right\} \leq1$
trivially, and secondly, if $\delta\geq8\left(c_{3}dK\log6\right)/c_{1}$
for $c_{3}\geq\max\left\{ 1,c_{1}^{2}c_{2}\log6\right\} \geq\left(c_{1}^{2}c_{2}\log6\right)/d$,
then we have 
\[
\min\left(\frac{\delta^{2}}{16c_{2}K^{2}},\frac{c_{1}\delta}{2K}\right)=\frac{c_{1}\delta}{2K},\qquad d\log6-\frac{c_{1}\delta}{2K}\leq-\frac{3c_{1}\delta}{8K}.
\]
Therefore we have:
\begin{align*}
\mathbb{E}\left\{ \left\Vert \boldsymbol{X}\right\Vert _{2}^{2}\right\}  & =\int_{0}^{\infty}\mathbb{P}\left\{ \left\Vert \boldsymbol{X}\right\Vert _{2}^{2}\geq t\right\} {\rm d}t=2\int_{0}^{\infty}\mathbb{P}\left\{ \left\Vert \boldsymbol{X}\right\Vert _{2}\geq\delta\right\} \delta{\rm d}\delta\\
 & \leq2\int_{0}^{8\left(c_{3}dK\log6\right)/c_{1}}\delta{\rm d}\delta+4\int_{8\left(c_{3}dK\log6\right)/c_{1}}^{\infty}\exp\left(-\frac{3c_{1}\delta}{8K}\right)\delta{\rm d}\delta\\
 & =\frac{64c_{3}^{2}d^{2}K^{2}\log^{2}6}{c_{1}^{2}}+\frac{256K^{2}}{9c_{1}^{2}}\left(3c_{3}d\log6+1\right)e^{-3c_{3}d\log6}\\
 & \leq C\left(d^{2}K^{2}+1\right),
\end{align*}
for some sufficiently large $C$ that depends only on $c_{1}$ and
$c_{3}$.
\end{proof}

\subsection{Moment controls}

We have the following control on the moments of the norm of the average
of (almost) independent random vectors:
\begin{lem}
\label{lem:bound_moment_sym}Consider a random variable $X$ and a
sequence of random vectors $\left(\boldsymbol{a}_{j}^{X}\right)_{j\leq N}$.
Assume $\left(\boldsymbol{a}_{j}^{X}\right)_{j\leq N}$ are independent
conditionally on $X$, $\mathbb{E}\left\{ \boldsymbol{a}_{j}^{X}\middle|X\right\} =\boldsymbol{0}$,
and $\mathbb{E}\left\{ \left\Vert \boldsymbol{a}_{j}^{X}\right\Vert _{2}^{2p}\right\} \leq K$
for all $j\in\left[N\right]$, for some positive integer $p$ and
constant $K$. Then:
\[
\mathbb{E}\left\{ \left\Vert \frac{1}{N}\sum_{j=1}^{N}\boldsymbol{a}_{i}^{X}\right\Vert _{2}^{2p}\right\} \leq4^{p}\left(2p\right)!\frac{K}{N^{p}}\leq16^{p}p^{2p}\frac{K}{N^{p}}.
\]
In fact, the same statement holds for $\left(\boldsymbol{a}_{j}^{X}\right)_{j\leq N}$
defined on a Hilbert space, equipped with an inner product $\left\langle \cdot,\cdot\right\rangle $
and an induced norm $\left\Vert \cdot\right\Vert _{2}$.
\end{lem}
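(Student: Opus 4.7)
The plan is to expand the $2p$-th power of the squared norm, exploit conditional independence to isolate the surviving terms, bound each surviving term by a Hölder inequality, and count the surviving terms combinatorially.

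First, write
\[
\left\| \frac{1}{N}\sum_{j=1}^N \boldsymbol{a}_j^X \right\|_2^{2p} = \frac{1}{N^{2p}} \sum_{(j_1,k_1,\ldots,j_p,k_p) \in [N]^{2p}} \prod_{\ell=1}^p \langle \boldsymbol{a}_{j_\ell}^X, \boldsymbol{a}_{k_\ell}^X \rangle.
\]
Conditioning on $X$, the family $(\boldsymbol{a}_j^X)_{j \leq N}$ is independent with zero mean, so any term whose multi-index $(j_1,k_1,\ldots,j_p,k_p)$ contains an index $i \in [N]$ appearing exactly once factors across that index and vanishes in conditional expectation. Hence only multi-indices in which every appearing index occurs at least twice survive.

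Next, bound each surviving term. Suppose the distinct indices appearing in the multi-index are $\iota_1,\ldots,\iota_q$ with multiplicities $m_1,\ldots,m_q$, each $m_i \geq 2$ and $\sum_i m_i = 2p$ (so in particular $q \leq p$). Applying Cauchy--Schwarz to each inner product and collecting factors yields
\[
\left| \prod_{\ell=1}^p \langle \boldsymbol{a}_{j_\ell}^X, \boldsymbol{a}_{k_\ell}^X \rangle \right| \leq \prod_{i=1}^q \|\boldsymbol{a}_{\iota_i}^X\|_2^{m_i},
\]
and the generalized Hölder inequality with conjugate exponents $2p/m_i$ (summing to $1$), combined with the hypothesis $\mathbb{E}\|\boldsymbol{a}_j^X\|_2^{2p} \leq K$, gives
\[
\mathbb{E}\!\left[\prod_{i=1}^q \|\boldsymbol{a}_{\iota_i}^X\|_2^{m_i}\right] \leq \prod_{i=1}^q \mathbb{E}\!\left[\|\boldsymbol{a}_{\iota_i}^X\|_2^{2p}\right]^{m_i/(2p)} \leq K.
\]

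Finally, count the surviving multi-indices. Each corresponds to a set partition of the $2p$ slots into blocks of size at least $2$ (at most $p$ blocks), together with an injection from the blocks into $[N]$. The number of set partitions of $[2p]$ is at most $(2p)!$, and the injective index assignment contributes at most $N^p$. A slight refinement in the counting and the Cauchy--Schwarz step (as used in Step 2.1 of the proof of Theorem~\ref{thm:ReLU_setting}) yields the factor $4^p$, so
\[
\mathbb{E}\!\left\| \frac{1}{N}\sum_{j=1}^N \boldsymbol{a}_j^X \right\|_2^{2p} \leq \frac{4^p (2p)! K}{N^p} \leq \frac{16^p p^{2p} K}{N^p},
\]
where the last step uses $(2p)! \leq 4^p p^{2p}$. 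The argument is essentially combinatorial, and the main (mild) obstacle is bookkeeping the combinatorial constant to get exactly $4^p (2p)!$ rather than a looser bound. Nothing relies on finite-dimensionality: only the inner-product structure and Cauchy--Schwarz enter, so the same proof carries over to any Hilbert space.
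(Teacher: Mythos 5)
Your proof is correct, and it is a genuinely different argument from the paper's. The paper applies a symmetrization inequality (\cite[Lemma 6.3]{ledoux2013probability}) with Rademacher multipliers, which is exactly where the $4^p$ factor enters: after symmetrizing, terms with any odd multiplicity vanish because $\mathbb{E}\{\varepsilon_j^{m}\}=0$ for odd $m$, and the same counting argument as yours then yields $(2p)!\,N^{p}$. Your approach skips symmetrization entirely: you condition on $X$, observe that any multi-index containing an index appearing exactly once has vanishing conditional expectation by the zero-mean hypothesis and conditional independence, and then bound the surviving terms via Cauchy--Schwarz and H\"older, with the count bounded by $B_{2p} N^{p}\le (2p)!\,N^{p}$ ($B_{2p}$ the Bell number). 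This gives the \emph{tighter} bound $(2p)!\,K/N^{p}$, from which the stated $4^p(2p)!\,K/N^{p}$ follows trivially. The one misstep is cosmetic: your closing remark that ``a slight refinement in the counting and the Cauchy--Schwarz step \ldots yields the factor $4^p$'' is backwards --- you do not need to produce that factor, since your bound is already stronger; the $4^p$ is slack inherited from the paper's symmetrization step, not something missing from your argument. The Hilbert-space remark is correct for both proofs, since only the inner-product structure is used.
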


\begin{proof}
We use a symmetrization argument. Define $\left(\varepsilon_{j}\right)_{j\leq N}$
being i.i.d. Bernoulli $\pm1$ random variables, independent of everything
else. Since $\mathbb{E}\left\{ \boldsymbol{a}_{j}^{X}\middle|X\right\} =\boldsymbol{0}$
and $\left(\boldsymbol{a}_{j}^{X}\right)_{j\leq N}$ are independent
conditionally on $X$, we have the following symmetrization fact \cite[Lemma 6.3]{ledoux2013probability}:
\begin{equation}
\mathbb{E}\left\{ \left\Vert \sum_{j=1}^{N}\boldsymbol{a}_{j}^{X}\right\Vert _{2}^{2p}\right\} \leq4^{p}\mathbb{E}\left\{ \left\Vert \sum_{j=1}^{N}\boldsymbol{b}_{j}^{X}\right\Vert _{2}^{2p}\right\} ,\label{eq:lem_bound_moment_sym_1}
\end{equation}
in which $\boldsymbol{b}_{j}^{X}=\varepsilon_{j}\boldsymbol{a}_{j}^{X}$.
We note that $\left\Vert \sum_{j=1}^{N}\boldsymbol{b}_{j}^{X}\right\Vert _{2}^{2p}$
is a sum of $N^{2p}$ terms of the form $\prod_{h=1}^{p}\left\langle \boldsymbol{b}_{h},\boldsymbol{b}_{2h}\right\rangle $,
where $\boldsymbol{b}_{h}\in\left\{ \boldsymbol{b}_{j}^{X}\right\} _{j\leq N}$
for $h=1,...,2p$. Consider a term $H$ that has $q_{j}$ appearances
of $\boldsymbol{b}_{j}^{X}$ for $j\in J_{H}\subseteq\left[N\right]$,
where $\sum_{j\in J_{H}}q_{j}=2p$. We have by Holder's inequality,
\begin{align*}
\left|\mathbb{E}\left\{ H\right\} \right| & \leq\mathbb{E}\left\{ \prod_{j\in J_{H}}\left\Vert \boldsymbol{b}_{j}^{X}\right\Vert _{2}^{q_{j}}\right\} \leq\prod_{j\in J_{H}}\mathbb{E}\left\{ \left\Vert \boldsymbol{b}_{j}^{X}\right\Vert _{2}^{2p}\right\} ^{q_{j}/\left(2p\right)}\\
 & =\prod_{j\in J_{H}}\mathbb{E}\left\{ \left\Vert \boldsymbol{a}_{j}^{X}\right\Vert _{2}^{2p}\right\} ^{q_{j}/\left(2p\right)}\leq\prod_{j\in J_{H}}K^{q_{j}/\left(2p\right)}=K.
\end{align*}
Notice that the above upper bound is the same for all terms. Furthermore
if there is $j\in J_{H}$ such that $q_{j}$ is odd, then $\mathbb{E}\left\{ H\middle|X\right\} =0$,
thanks to the randomness of $\varepsilon_{j}$. Hence we only need
to upper bound the number of terms $H$ such that there is no $j\in J_{H}$
with odd $q_{j}$. Let us call this number $N_{*}$. To bound $N_{*}$,
we consider the following construction of each desired term. As the
first step, we select $\boldsymbol{b}_{h}$ from the set $\left\{ \boldsymbol{b}_{j}^{X}\right\} _{j\leq N}$
for $h=1,...,p$, and we set $\boldsymbol{b}_{2h}=\boldsymbol{b}_{h}$.
Then in the second step, we construct the desired term as $\prod_{h=1}^{p}\left\langle \boldsymbol{b}_{\Pi\left(h\right)},\boldsymbol{b}_{\Pi\left(2h\right)}\right\rangle $,
where $\Pi:\;\left[2p\right]\to\left[2p\right]$ is any permutation.
This procedure guarantees to construct all desired terms, with some
being repeated. Note that the number of possibilities for the first
step is $N^{p}$, and in the second step, the number of permutations
is $\left(2p\right)!$. Hence we obtain $N_{*}\leq\left(2p\right)!N^{p}$.
Therefore, by Eq. (\ref{eq:lem_bound_moment_sym_1}),
\[
\mathbb{E}\left\{ \left\Vert \sum_{j=1}^{N}\boldsymbol{a}_{j}^{X}\right\Vert _{2}^{2p}\right\} \leq4^{p}\left(2p\right)!KN^{p},
\]
which completes the proof.
\end{proof}
The above result presents a simple approach to concentration for powers
of sub-Gaussian random variables:
\begin{lem}
\label{lem:concen_subgauss_sum_q_power}Let $\left(X_{i}\right)_{i\geq0}$
be independent real-valued $K$-sub-Gaussian random variables. Then
for any $q\geq1$,
\[
\mathbb{P}\left\{ \left|\frac{1}{N}\sum_{i=1}^{N}\left|X_{i}\right|^{q}-\mathbb{E}\left\{ \left|X_{i}\right|^{q}\right\} \right|\geq\delta\right\} \leq C\exp\left(-\frac{C^{1/\left(2+q\right)}N^{1/\left(2+q\right)}\delta^{2/\left(2+q\right)}}{K^{2q/\left(2+q\right)}}\right),
\]
where the constant $C$ does not depend on $q$ or $K$.
\end{lem}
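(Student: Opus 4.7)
\textbf{Proof plan for Lemma \ref{lem:concen_subgauss_sum_q_power}.}
The strategy is the classical moment-method for concentration: bound high moments of the centered sum via Lemma \ref{lem:bound_moment_sym}, apply Markov's inequality, and optimize over the moment order. First I set $Y_i=|X_i|^q-\mathbb{E}\{|X_i|^q\}$, which are i.i.d.\ zero-mean real-valued random variables, and reduce to controlling $\mathbb{P}\{|N^{-1}\sum Y_i|\geq\delta\}$. Using the sub-Gaussian assumption on $X_i$ via Lemma \ref{lem:subgauss_subexp_properties}, I have $\mathbb{E}\{|X_i|^{2qp}\}\leq K^{2qp}(2qp)^{qp}$ for every positive integer $p$, and also $|\mathbb{E}\{|X_i|^q\}|\leq K^q q^{q/2}$ by Jensen. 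Combining via the bound $|Y_i|^{2p}\leq 2^{2p}(|X_i|^{2qp}+\mathbb{E}\{|X_i|^q\}^{2p})$ gives $\mathbb{E}\{|Y_i|^{2p}\}\leq C_1^{p}K^{2qp}q^{qp}p^{qp}$ for an absolute constant $C_1$.

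Next, since $(Y_i)_{i\leq N}$ are i.i.d.\ with zero mean, the one-dimensional version of Lemma \ref{lem:bound_moment_sym} (conditioning on a trivial $X$) yields
\begin{equation*}
\mathbb{E}\!\left\{\left|\tfrac{1}{N}\sum_{i=1}^{N}Y_i\right|^{2p}\right\}\;\leq\;\frac{16^{p}p^{2p}}{N^{p}}\cdot C_1^{p}K^{2qp}q^{qp}p^{qp}\;=\;\left(\frac{C_2\,K^{2q}q^{q}p^{q+2}}{N}\right)^{p},
\end{equation*}
for some absolute constant $C_2$. Markov's inequality then gives, for every positive integer $p$,
\begin{equation*}
\mathbb{P}\!\left\{\left|\tfrac{1}{N}\sum_{i=1}^{N}Y_i\right|\geq\delta\right\}\;\leq\;\left(\frac{C_2\,K^{2q}q^{q}p^{q+2}}{N\delta^{2}}\right)^{p}.
\end{equation*}

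The final step is optimization in $p$. Writing $u=N\delta^{2}/(C_2 K^{2q}q^{q})$, the right-hand side equals $(p^{q+2}/u)^{p}$, whose (real-variable) minimum over $p\geq 1$ is attained at $p_{\ast}=u^{1/(q+2)}/e$ with value $\exp(-(q+2)u^{1/(q+2)}/e)$. Substituting back recovers
\begin{equation*}
\exp\!\left(-\,\frac{q+2}{e}\cdot\frac{N^{1/(q+2)}\delta^{2/(q+2)}}{C_2^{1/(q+2)}\,q^{q/(q+2)}\,K^{2q/(q+2)}}\right),
\end{equation*}
which, after noting that $(q+2)/(e\,q^{q/(q+2)})$ is uniformly bounded below in $q\geq 1$ and absorbing this (and the $C_2^{1/(q+2)}$ factor) into a single constant, yields the stated exponential rate $C^{1/(q+2)}N^{1/(q+2)}\delta^{2/(q+2)}/K^{2q/(q+2)}$ after adjusting the constant $C$.

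\textbf{Main obstacle.} The computation is essentially a routine Bernstein-type optimization, so no step is individually difficult; the only subtlety is handling the regime where the optimal $p_{\ast}$ is smaller than $1$ (equivalently, $\delta$ is so small that the exponent is already $O(1)$). In that regime one simply takes $p=1$ and absorbs the resulting $O(1)$ loss into the leading $C$ prefactor in front of the exponential, which is why the statement carries both a multiplicative $C$ and the $C^{1/(q+2)}$ in the exponent. The other bookkeeping item is to confirm that the integer-rounding of $p_{\ast}$ only costs an additional universal constant, which follows because the function $p\mapsto(p^{q+2}/u)^{p}$ is log-convex in $p$.
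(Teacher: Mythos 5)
Your argument has the same skeleton as the paper's: center the summands, bound the $2p$-th moments of $|Y_i|$ via the sub-Gaussian moment estimate, feed the result into Lemma~\ref{lem:bound_moment_sym}, and extract a stretched-exponential tail. The paper phrases the last step as ``$|S|^{2/(2+q)}$ is $\psi_1$-sub-exponential, apply Lemma~\ref{lem:subgauss_subexp_properties}'' whereas you run Markov followed by an optimization over $p$; those are the same computation in different clothing, so there is no genuine difference in route up to this point.

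The gap is the final ``absorb into a single constant'' step, and your careful bookkeeping actually exposes a defect in the lemma as stated. You correctly arrive at the exponent coefficient $(q+2)/\bigl(e\,C_2^{1/(q+2)}\,q^{q/(q+2)}\bigr)$. To write this as $C^{1/(q+2)}$ for a fixed $q$-independent $C>0$ one needs a uniform positive lower bound on $\bigl((q+2)/(e\,q^{q/(q+2)})\bigr)^{q+2}/C_2 = (q+2)^{q+2}/(e^{q+2}q^q C_2)$, but this tends to $0$ as $q\to\infty$ (asymptotically it is $q^2 e^{-q}/C_2$). Your remark that $(q+2)/(e\,q^{q/(q+2)})\geq 1/e$ is true but does not help: $1/e=(e^{-(q+2)})^{1/(q+2)}$, so a bound of the form $C^{1/(q+2)}\leq 1/e$ forces $C\leq e^{-(q+2)}\to 0$. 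The paper's own proof avoids facing this only by silently writing $\mathbb{E}\{|X_i|^{2pq}\}\leq K^{2pq}p^{pq}$, dropping the $(2q)^{pq}$ factor that the definition $\left\Vert X\right\Vert_{\psi_2}=\sup_{p}p^{-1/2}\mathbb{E}\{|X|^p\}^{1/p}$ actually produces — exactly the $q^q$-type factor you retained. And the advertised $q$-uniformity is in fact false: for standard Gaussian $X_i$ (so $K$ is an absolute constant), $N=1$ and $\delta=\tfrac12\mathbb{E}\{|X_1|^q\}$, the left-hand probability is at least $\mathbb{P}\{|X_1|\leq(\tfrac12\mathbb{E}\{|X_1|^q\})^{1/q}\}\to 1$ as $q\to\infty$, while the claimed right-hand side behaves like $C\exp(-q/(eK^2))\to 0$. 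None of this hurts the paper, which only invokes the lemma with $q\in\{2,6\}$; for bounded $q$ your derivation (and the paper's) is correct and gives the $\exp(-cN^{1/(2+q)})$ tail that is actually used. But the last sentence of your proof, like the lemma's ``$C$ does not depend on $q$'', should be weakened to $C=C(q)$.
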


\begin{proof}
Let $Y_{i}=\left|X_{i}\right|^{q}-\mathbb{E}\left\{ \left|X_{i}\right|^{q}\right\} $
and $S=\left(1/N\right)\cdot\sum_{i=1}^{N}Y_{i}$. We have for any
positive integer $p$,
\[
\mathbb{E}\left\{ \left|Y_{i}\right|^{2p}\right\} \leq4^{p}\mathbb{E}\left\{ \left|X_{i}\right|^{2pq}\right\} \leq4^{p}K^{2pq}p^{pq}.
\]
By Lemma \ref{lem:bound_moment_sym}, $\mathbb{E}\left\{ \left|S\right|^{2p}\right\} \leq C^{p}K^{2pq}p^{\left(2+q\right)p}/N^{p}$,
which implies that $\left|S\right|^{2/\left(2+q\right)}$ is sub-exponential
with $\left\Vert \left|S\right|^{2/\left(2+q\right)}\right\Vert _{\psi_{1}}\leq C^{1/\left(2+q\right)}K^{2q/\left(2+q\right)}N^{-1/\left(2+q\right)}$.
Therefore, by Lemma \ref{lem:subgauss_subexp_properties},
\[
\mathbb{P}\left\{ \left|S\right|\geq\delta\right\} \leq C\exp\left(-\frac{C^{1/\left(2+q\right)}N^{1/\left(2+q\right)}\delta^{2/\left(2+q\right)}}{K^{2q/\left(2+q\right)}}\right).
\]
\end{proof}

\section{Simulation details\label{sec:Simulation-details}}

\subsection{Simplifications for the setting with bounded activation (Setting
\ref{enu:bdd_act_setting})\label{subsec:Simplifications-for-Setting-bdd-act}}

We make further simplifications of the ODEs (\ref{eq:2nd_setting_ODE_r}).
In particular, we consider large dimension $d\gg1$, while keeping
$\alpha$ a fixed constant. Let $\alpha_{1}=\alpha$ and $\alpha_{2}=1-\alpha$.
In this case, for $Z_{1}$ and $Z_{2}$ being respectively $\chi$-random
variables of degrees of freedom $d_{1}$ and $d_{2}$, we have $Z_{1}\approx\sqrt{d_{1}}$
and $Z_{2}\approx\sqrt{d_{2}}$. Consequently at initialization, $\rho_{r}^{0}\approx\delta_{\left(r_{0}\sqrt{\alpha_{1}},r_{0}\sqrt{\alpha_{2}}\right)}$,
which implies that $\rho_{r}^{t}\approx\delta_{\check{r}_{1,t},\check{r}_{2,t}}$
concentrating at a point mass at all time $t\geq0$. Hence instead
of solving for the exact distribution of $r_{1,t}$ and $r_{2,t}$,
we can make approximations by keeping track of two scalars $\check{r}_{1,t}$
and $\check{r}_{2,t}$. Their evolutions are given by the following:
\begin{align*}
\frac{{\rm d}}{{\rm d}t}\check{r}_{j,t} & =-\check{\Delta}_{j}\left(\check{r}_{1,t},\check{r}_{2,t}\right)\left[\check{q}_{j}\left(\Sigma_{1}\sqrt{\alpha_{1}}\check{r}_{1,t},\Sigma_{2}\sqrt{\alpha_{2}}\check{r}_{2,t}\right)+\Sigma_{j}\sqrt{\alpha_{j}}\check{r}_{j,t}\partial_{j}\check{q}_{j}\left(\Sigma_{1}\sqrt{\alpha_{1}}\check{r}_{1,t},\Sigma_{2}\sqrt{\alpha_{2}}\check{r}_{2,t}\right)\right]\\
 & \qquad-\check{\Delta}_{\neg j}\left(\check{r}_{1,t},\check{r}_{2,t}\right)\Sigma_{j}\sqrt{\alpha_{j}}\check{r}_{\neg j,t}\partial_{j}\check{q}_{\neg j}\left(\Sigma_{1}\sqrt{\alpha_{1}}\check{r}_{1,t},\Sigma_{2}\sqrt{\alpha_{2}}\check{r}_{2,t}\right)-2\lambda\check{r}_{j,t},\qquad j=1,2,
\end{align*}
in which we define:
\begin{align*}
\check{q}_{1}\left(a,b\right) & =\frac{a}{\alpha_{1}}\mathbb{E}_{g}\left\{ \sigma'\left(\sqrt{\frac{a^{2}}{\alpha_{1}}+\frac{b^{2}}{\alpha_{2}}}g\right)\right\} ,\\
\check{q}_{2}\left(a,b\right) & =\frac{b}{\alpha_{2}}\mathbb{E}_{g}\left\{ \sigma'\left(\sqrt{\frac{a^{2}}{\alpha_{1}}+\frac{b^{2}}{\alpha_{2}}}g\right)\right\} ,\\
\check{\Delta}_{j}\left(r_{1},r_{2}\right) & =r_{j}\check{q}_{j}\left(\Sigma_{1}\sqrt{\alpha_{1}}r_{1},\Sigma_{2}\sqrt{\alpha_{2}}r_{2}\right)-\Sigma_{j}\sqrt{\alpha_{j}},\qquad j=1,2,
\end{align*}
and we initialize $\check{r}_{j,0}=r_{0}\sqrt{\alpha_{j}}$. This
is a system of two deterministic ODEs and can be solved numerically.
We also obtain an approximation of ${\cal R}\left(\rho_{N}^{t/\epsilon}\right)$:
\[
{\cal R}\left(\rho_{N}^{t/\epsilon}\right)\approx\frac{1}{2}\sum_{j\in\left\{ 1,2\right\} }\left(\check{\Delta}_{j}\left(\check{r}_{1,t},\check{r}_{2,t}\right)\right)^{2}.
\]

To approximate the reconstruction error with respect to a different
distribution ${\cal Q}$ in Fig. \ref{fig:tanh_twoblks_loss_outsample},
one can do the same simplification and obtain:
\[
\mathbb{E}_{\boldsymbol{x}\sim{\cal Q}}\left\{ \frac{1}{2}\left\Vert \hat{\boldsymbol{x}}_{N}\left(\boldsymbol{x};\Theta^{t/\epsilon}\right)-\boldsymbol{x}\right\Vert _{2}^{2}\right\} \approx\frac{1}{2}\sum_{j\in\left\{ 1,2\right\} }\left(\check{\Delta}_{j}^{{\cal Q}}\left(\check{r}_{1,t},\check{r}_{2,t}\right)\right)^{2},
\]
in which
\begin{align*}
\check{\Delta}_{j}^{{\cal Q}}\left(r_{1},r_{2}\right) & =r_{j}\check{q}_{j}\left(\Sigma_{1,{\cal Q}}\sqrt{\alpha_{1}}r_{1},\Sigma_{2,{\cal Q}}\sqrt{\alpha_{2}}r_{2}\right)-\Sigma_{j,{\cal Q}}\sqrt{\alpha_{j}},\qquad j=1,2.
\end{align*}

\subsection{Further simulation details}

We describe several additional details that were omitted from the
captions of Fig. \ref{fig:ReLU_twoblks_noreg_loss_theta}-\ref{fig:MNIST_noreg_loss_theta}:
\begin{itemize}
\item In the settings of Fig. \ref{fig:ReLU_twoblks_noreg_loss_theta}-\ref{fig:tanh_twoblks_loss_theta},
the data covariance $\boldsymbol{\Sigma}^{2}$ has two subspaces of
dimensions $d_{1}$ and $d_{2}=d-d_{1}$, each corresponding to $\boldsymbol{\theta}_{i,1:d_{1}}^{k}\in\mathbb{R}^{d_{1}}$
(the first $d_{1}$ coordinates of $\boldsymbol{\theta}_{i}^{k}$)
and $\boldsymbol{\theta}_{i,\left(d_{1}+1\right):d}^{k}\in\mathbb{R}^{d_{2}}$
(the last $d_{2}$ coordinates of $\boldsymbol{\theta}_{i}^{k}$).
We compute the normalized squared norms of the first subspace's weights
$\left(\boldsymbol{\theta}_{i,1:d_{1}}^{k}\right)_{i\leq N}$ and
the second subspace's weights $\left(\boldsymbol{\theta}_{i,\left(d_{1}+1\right):d}^{k}\right)_{i\leq N}$
as respectively
\[
\frac{d}{d_{1}N}\sum_{i=1}^{N}\left\Vert \boldsymbol{\theta}_{i,1:d_{1}}^{k}\right\Vert _{2}^{2},\qquad\frac{d}{d_{2}N}\sum_{i=1}^{N}\left\Vert \boldsymbol{\theta}_{i,\left(d_{1}+1\right):d}^{k}\right\Vert _{2}^{2}.
\]
\item In Fig. \ref{fig:tanh_twoblks_loss_theta}, we assume the simplifications
in Appendix \ref{subsec:Simplifications-for-Setting-bdd-act} to solve
numerically the MF limiting dynamics.
\item For efficiency, we adopt the following practices in all simulations.
Firstly, we use mini-batch SGD with a batch size of 100. While this
is strictly not covered by our theory, we note that the use of a larger
batch size has the advantage of accommodating larger learning rate
$\epsilon$, while leaving the MF limiting dynamics unaltered. Secondly,
for simulations on the real data set, at each SGD iteration, we select
the mini-batch from the training set without replacement; once the
training set is scanned through, we randomly re-shuffle the training
set.
\item For Gaussian data, to estimate the statistics (such as the reconstruction
error), we perform Monte-Carlo averaging over $10^{4}$ random samples.
\item In Fig. \ref{fig:ReLU_twoblks_subsampled} and \ref{fig:MNIST_reg_subsampled},
each point on the plot is an average over 20 independent repeats of
the two-staged process for derived autoencoders.
\item In Fig. \ref{fig:MNIST_reg_loss_theta}, \ref{fig:MNIST_reg_subsampled}
and \ref{fig:MNIST_noreg_loss_theta}, on the MNIST data set, we train
on a training set of size $6\times10^{4}$ and compute all the plotted
statistics on the test set of size $10^{4}$. Each MNIST image has
size $d=28\times28=784$. To preprocess the data, we compute:
\[
\hat{\boldsymbol{\mu}}=\frac{1}{6\times10^{4}}\sum_{i\text{ in training set}}\bar{\boldsymbol{x}}_{i},\qquad\hat{\boldsymbol{S}}=\frac{1}{6\times10^{4}}\sum_{i\text{ in training set}}\left(\bar{\boldsymbol{x}}_{i}-\hat{\boldsymbol{\mu}}\right)\left(\bar{\boldsymbol{x}}_{i}-\hat{\boldsymbol{\mu}}\right)^{\top},
\]
where $\bar{\boldsymbol{x}}_{i}$ is the original MNIST image with
the pixel range $\left[0,1\right]$. Let $\hat{\boldsymbol{S}}=\boldsymbol{U}\bar{\boldsymbol{C}}\boldsymbol{U}^{\top}$
be its singular value decomposition. Its spectrum is plotted in Fig.
\ref{fig:MNIST_spectrum}. We transform each image $\bar{\boldsymbol{x}}$
into a data point $\boldsymbol{x}=\boldsymbol{U}^{\top}\left(\bar{\boldsymbol{x}}-\hat{\boldsymbol{\mu}}\right)/\sqrt{d}$,
which is to be inputted into the autoencoder. Note that this preprocessing
step is reasonable; all we have done are mean removal, which is a
common data preprocessing practice, and rotation by $\boldsymbol{U}$,
which does not affect the geometry of the data. We compute the MF
limiting dynamics by using the formulas given in Theorems \ref{res:ReLU_setting_simplified}
and \ref{res:ReLU_setting_2stage_simplified}. In particular, we let
$\boldsymbol{R}=\boldsymbol{I}_{d}$ and ${\rm diag}\left(\Sigma_{1}^{2},...,\Sigma_{d}^{2}\right)=\bar{\boldsymbol{C}}$.
For numerical stability, if $\Sigma_{i}^{2}<10^{-5}$, we replace
it with $10^{-5}$. For the non-digit test samples, we draw two from
the EMNIST data set \cite{cohen2017emnist} and two from the Fashion
MNIST data set \cite{xiao2017fashion}, and computer-generate the
other two patterned images. We preprocess these non-digit data in
a similar fashion.
\item In all simulations, we adopt a constant learning rate schedule $\xi\left(t\right)=1$,
which accords with the statements of Theorems \ref{res:ReLU_setting_simplified},
\ref{res:ReLU_setting_2stage_simplified} and \ref{res:Bdd_act_setting_simplified}.
\end{itemize}
\begin{figure}
\begin{centering}
\includegraphics[width=0.5\columnwidth]{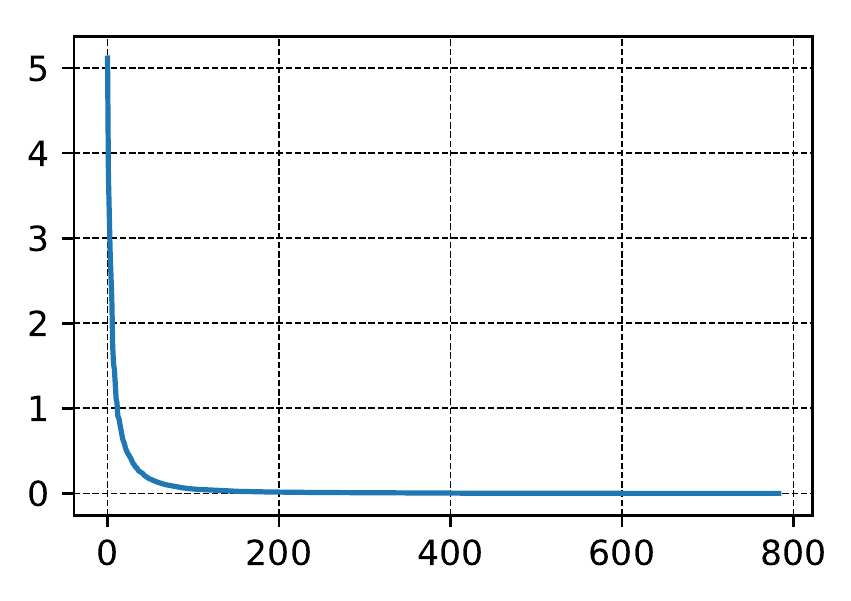}
\par\end{centering}
\caption{Spectrum of the estimated data covariance matrix of the MNIST data
set.}

\label{fig:MNIST_spectrum}
\end{figure}

In Fig. \ref{fig:MNIST_reconstr}, we visualize reconstructions of
several MNIST test images by the trained autoencoder from Fig. \ref{fig:MNIST_reg_loss_theta},
as well as its derived autoencoders constructed by the two-staged
process. This shows that the trained autoencoder is able to avoid
the common failure of producing only some average of the training
set \cite{li2018on}, although the reconstructed images are blurry
due to the regularization. The derived autoencoders, which sample
$M<N$ neurons sufficiently large from the trained autoencoder, also
incur little loss to the reconstruction quality.

\begin{figure}
\begin{centering}
\includegraphics[width=1\columnwidth]{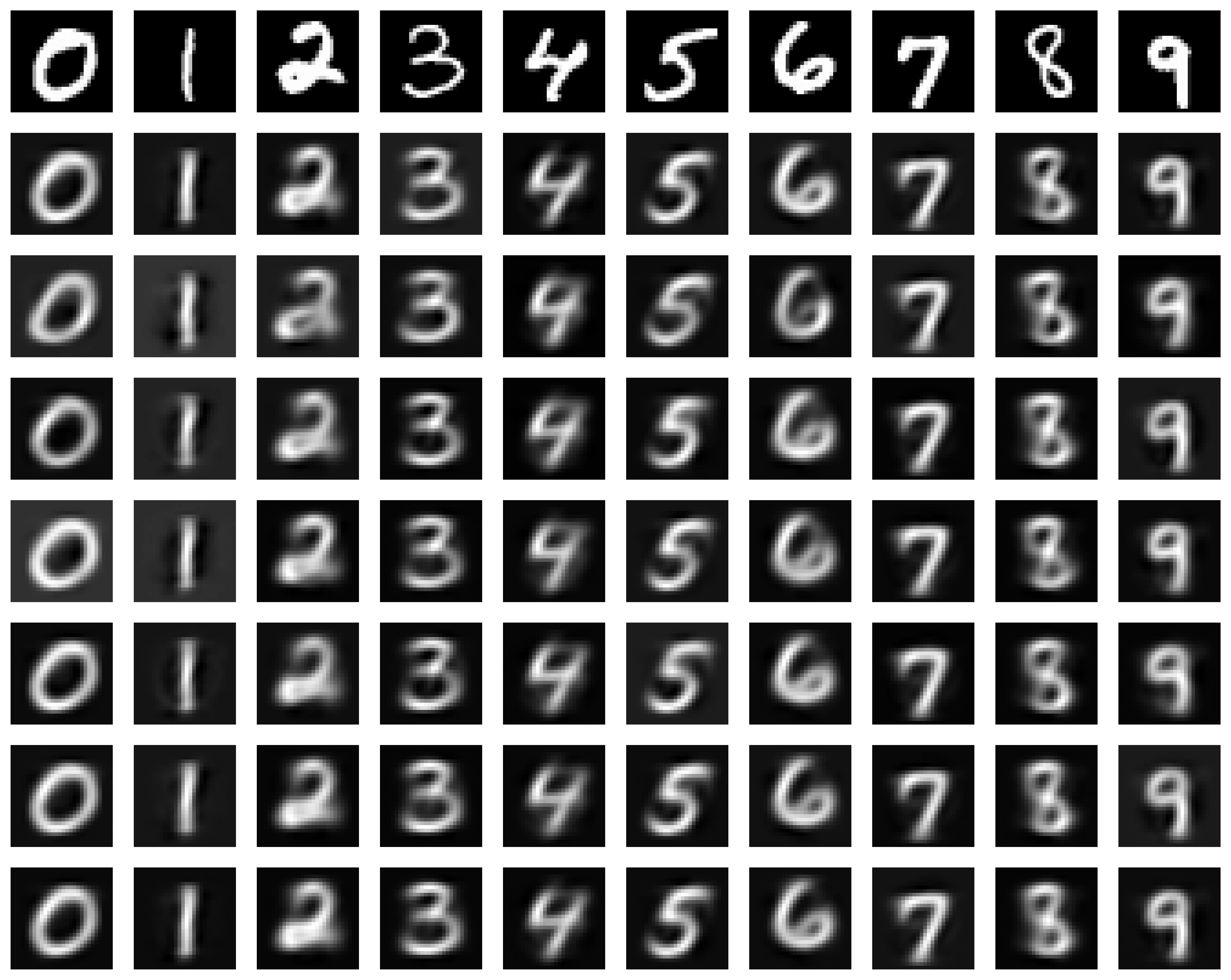}
\par\end{centering}
\caption{Reconstructed MNIST images by the trained autoencoder from Fig. \ref{fig:MNIST_reg_loss_theta}
(at iteration $10^{5}$), as well as the autoencoders derived from
the two-staged process with $M$ sampled neurons. From top: the original
MNIST test images, the reconstructions of the trained autoencoder,
the reconstructions of the derived autoencoders with $M=200$, 400,
600, 784, 2000, 10000.}

\label{fig:MNIST_reconstr}
\end{figure}

\bibliographystyle{amsalpha}
\bibliography{AE_meanfield_arXiv}

\end{document}